\def\myauthor{Titouan Vayer}
\def\mytitle{A contribution to Optimal Transport for structured data and incomparable spaces}
\definecolor{ForestGreen}{rgb}{0.0, 0.27, 0.13}
\definecolor{backcolour}{rgb}{0.95,0.95,0.92}
\lstdefinestyle{mystyle}{
    backgroundcolor=\color{backcolour},
    basicstyle=\ttfamily\footnotesize,
    breakatwhitespace=false,
    breaklines=true,
    captionpos=b,
    keepspaces=true,
    numbersep=5pt,
    showspaces=false,
    showstringspaces=false,
    showtabs=false,
    tabsize=2,
    keywordstyle=\color{red!70!black},
    commentstyle=\color{ForestGreen!80!black}
}
\newcommand{\todoi}[1]{\todo[inline]{#1}}
\newcommand{\captionfonts}{\small}
\long\def\@makecaption#1#2{  \vskip\abovecaptionskip
  \sbox\@tempboxa{{\captionfonts #1: #2}}  \ifdim \wd\@tempboxa >\hsize
    {\captionfonts #1: #2\par}
  \else
    \hbox to\hsize{\hfil\box\@tempboxa\hfil}  \fi
  \vskip\belowcaptionskip}
 \renewcommand{\nomgroup}[1]{\ifthenelse{\equal{#1}{N}}{\item[\Large\sffamily\textbf{Notations}]}{\ifthenelse{\equal{#1}{X}}{\item[\Large\sffamily\textbf{Acronyms}]}{}}}
\renewcommand*{\backref}[1]{}
\renewcommand*{\backrefalt}[4]{\ifcase #1 (Not cited)\or
(Cited on page~#2.)\else
(Cited on pages~#2.)\fi}
\definecolor{linkcol}{rgb}{0,0,0.4}
\definecolor{citecol}{rgb}{0.5,0,0}
\def\abs{\operatorname{abs}}
\def\argmax{\operatornamewithlimits{arg\,max}}
\def\argmin{\operatornamewithlimits{arg\,min}}
\def\diag{\operatorname{Diag}}
\let\headruleORIG\headrule
\renewcommand{\headrule}{\color{black} \headruleORIG}
\def\cleardoublepage{\clearpage\if@twoside \ifodd\c@page\else  \hbox{}  \thispagestyle{empty}  \newpage  \if@twocolumn\hbox{}\newpage\fi\fi\fi}
\newtheorem{definition}{Definition}[section]
\renewcommand{\epsilon}{\varepsilon}
\newcommand{\indentStd}{\noindent\hspace{\lenA}}
\newcommand{\bleu}[1]{{\color{blue}{#1}}}
\newcommand{\pasfini}[1]{\bleu{}\todoi{Pas fini! je te dirai quand
    c'est fait}}
\let\minitocORIG\minitoc
\renewcommand{\minitoc}{\minitocORIG \vspace{1.5em}}
\def\alphab{\boldsymbol\alpha}
\def\betab{\boldsymbol\beta}
\def\epsilonb{\boldsymbol\epsilon}
\def\Sigmab{\boldsymbol\Sigma}
\def\thetab{\boldsymbol\theta}
\def\G{\pi}
\def\GG{\boldsymbol\G}
\def\Gs{\pi^s}
\def\GGs{\boldsymbol\pi^s}
\def\Gv{\pi^v}
\def\GGv{\boldsymbol\pi^v}
\def\a{{\bf a}}
\def\b{{\bf b}}
\def\h{{\bf h}}
\def\g{{\bf g}}
\def\Gbf{{\bf G}}
\def\e{{\bf e}}
\def\w{{\bf w}}
\def\v{{\bf v}}
\def\x{{\bf x}}
\def\y{{\bf y}}
\def\V{{\bf V}}
\def\Dbf{{\bf D}}
\def\Bbf{{\bf B}}
\def\Mbf{{\bf M}}
\def\X{{\bf X}}
\def\Y{{\bf Y}}
\def\L{{\bf L}}
\def\R{{\mathbb{R}}}
\def\Abf{{\mathbf{A}}}
\def\Bbf{{\mathbf{B}}}
\def\U{{\mathbf{U}}}
\def\vec{{\text{vec}}}
\def\tr{{\text{tr}}}
\def\one{{\mathbf{1}}}
\newcommand{\MovieL}{\textsc{MovieLens}}
\newcommand{\COOT}{\text{COOT}}
\newcommand{\bz}{\mathbf{z}}
\newcommand{\bw}{\mathbf{w}}
\newcommand{\xbf}{\mathbf{x}}
\newcommand{\ebf}{\mathbf{e}}
\newcommand{\ybf}{\mathbf{y}}
\newcommand{\zbf}{\mathbf{z}}
\newcommand{\sbf}{\mathbf{s}}
\newcommand{\simplex}{\Sigma}
\newcommand{\ie}{\textit{i.e.}}
\newcommand{\couplingset}{\Pi}
\newcommand{\Pm}{\mathcal{P}}
\newcommand{\E}{\mathbb{E}}
\newcommand{\C}{\mathbf{C}}
\newcommand{\Sp}{\mathbb{S}}
\newcommand{\gw}{GW}
\newcommand{\wass}{W}
\newcommand{\sgw}{SGW}
\newcommand{\risgw}{RISGW}
\newcommand{\gm}{GM}
\newcommand{\D}{\Delta}
\newcommand{\Sn}{S_{n}}
\newcommand{\insided}{c}
\newcommand{\supp}{\text{supp}}
\newcommand{\Stief}{\mathbb{V}_{p}(\R^{q})}
\newcommand{\lebsm}{\mathcal{L}}
\newtheorem{prop}{Proposition}[section]
\newtheorem*{prop*}{Proposition}
\newtheorem*{theo*}{Theorem}
\newtheorem*{lemma*}{Lemma}
\newmdtheoremenv[innertopmargin=0pt]{theo}{Theorem}[section]
\newtheorem{corr}{Corollary}[section]
\newtheorem{lemma}{Lemma}[section]
\newtheorem{prob}{Problem}
\newtheorem{Example}{Example}[section]
\newtheorem{Remark}{Remark}[section]
\newmdtheoremenv[backgroundcolor=black!10,rightline=false,leftline=false,bottomline=false,innertopmargin=2pt]{memo}{Memo}[section]
\newcounter{Memo}[section]
\newcommand{\fgwdistance}{FGW}
\newcommand{\Pcal}{\mathcal{P}}
\newcommand{\lossfgw}{E_{p,q,\alpha}}
\newcommand{\kron}{\otimes_{K}}
\newcommand{\gwloss}{J}
\newcommand{\dom}{\text{dom}}
\newcommand{\F}{\mathcal{F}}
\newcommand{\Pbf}{\mathbf{P}}
\newcommand{\pbf}{\mathbf{P}}
\newcommand{\Obf}{\mathbf{O}}
\newcommand{\Qbf}{\mathbf{Q}}
\newcommand{\ubf}{\mathbf{u}}
\newcommand{\vbf}{\mathbf{v}}
\newcommand{\qbf}{\mathbf{q}}
\newcommand{\rg}{\text{rank}}
\newcommand{\integ}[1]{{[\![#1]\!]}}
\def\P{{\mathcal{P}}}
\def\Xcal{{\mathcal{X}}}
\def\Ycal{{\mathcal{Y}}}
\def\Zcal{{\mathcal{Z}}}
\def\Tcal{{\mathcal{T}}}
\def\ot{{\Tcal}}
\def\mongeot{{M}}
\newcommand{\froeb}[2]{{\langle {#1},{#2} \rangle_{\F}}}
\newcommand{\scalar}[3]{{\langle {#1},{#2} \rangle_{{#3}}}}
\newcommand{\dr}{\mathrm{d}}
\newcommand{\red}[1]{{\color{red} #1}}
\newcommand{\blue}[1]{{\color{blue} #1}}
\newcommand{\orange}[1]{{\color{orange} #1}}
\newcommand{\purple}[1]{{\color{purple} #1}}
\newcommand\summaryname{Summary of the contributions}
\newenvironment{Abstract}    {\small\begin{center}    \bfseries{\summaryname} \end{center}}
\newcommand\tv[1]{#1}
\begin{document}
\renewcommand{\bibname}{{\sffamily Bibliography}}
\newcommand{\lc}[1]{\textcolor{magenta}{(#1)}}
\newcommand{\rt}[1]{\textcolor{green}{(#1)}}
\begin{titlepage}

\includepdf[pages={1},pagecommand={\pagestyle{fancy}}]{./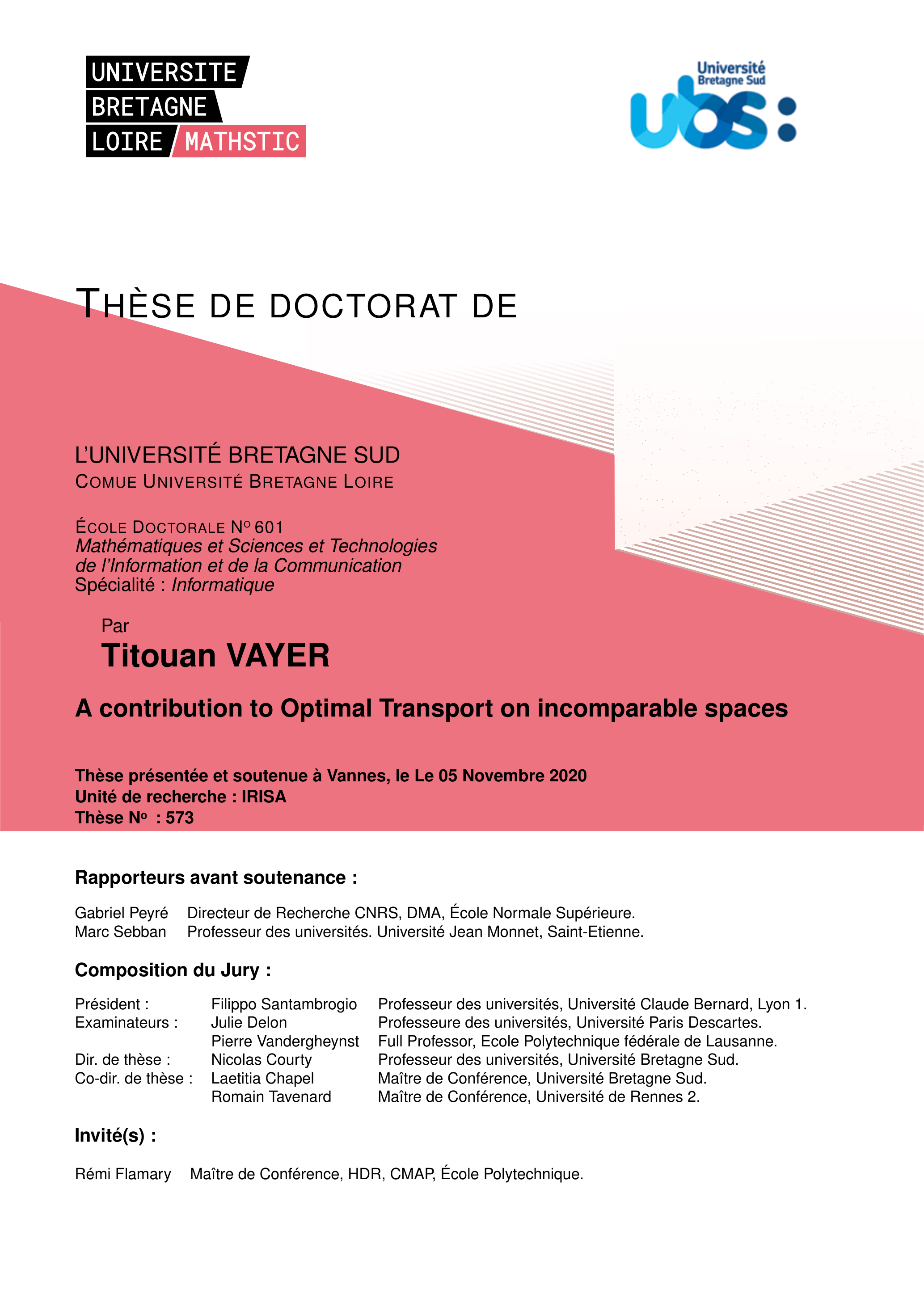}

\end{titlepage}
\sloppy

\titlepage

\chapter*{Acknowledgements}
\minitoc
\label{cha:ack}

Puisqu'il est ainsi permis de plumer plus librement, de mettre rigueur de côté, pour des remerciements :
\\ \\
Merci à mes encadrant.e.s: Laetita, Nicolas, Romain; les officiel.le.s. Vous m'avez permis de jouer aux mathématiques et à l'ordinateur avec une rare liberté, je pense. C'était excitant, gratifiant, inspirant et je garderai de votre confiance un souvenir certain. 
Merci Rémi, l'officieux mais tout autant présent. La seule chose égalant de fréquence tes enthousiasmes sont tes conjectures, souvent justes, je crois. 
Je m'excuse par retard pour mes nombreuses exothermies avortées, qui ont parsemé ces trois années. Au-delà de tout ce qui a permis ce manuscrit, merci aux moments d'ailleurs et de pas loin, aux discussions, aux rires. Cela me manquera. Merci aussi à Nicolas Klutchnikoff, tu m'as sauvé la mise dans des situations plutôt importantes; nos discussions de 100 ans avec Romain devant un tableau à Villejean resteront. Merci Ievgen, j'espère que l'on pourra continuer à jouer au Transport Optimal en dessous de la Loire. Merci à Obelix, ce fut un plaisir de passer ces trois années à vos côtés. 
\\ \\
Je remercie également les rapporteurs, Gabriel Peyré et Marc Sebban, ainsi
que les membres du jury, Julie Delon, Filippo Santambrogio, Pierre Vandergheynst, chercheur.e.s dont j’admire les travaux. Merci d'avoir pris le temps de vous intéresser aux miens.
\\ \\
Thank you Caglayan, we followed each other through space and time, and I am so happy that you were on this trip. I am confident that we will share a şerefe in Turkey one day. Until then I believe we can still have some Yec'hed mat in Rennes. 
\\ \\
Merci Mathilde, Maëlle, Thibault, Hadrien. Simple, sans vous il n'y aurait pas plus de trois lignes mal réglées sur ces feuilles. Vous m'avez rendu la vie légère, ce cocon rennais tourbillonnera sans doute longtemps là-haut, tissé de joyeux souvenirs. 
\\ \\
Merci Étienne, qui n'a pas de frère que le sang. Merci les Marie, centres du triangle triangle triangle. Des breton.ne.s se perdent: rennais.es, briochain.e.s, nantais.es (hé oui), vous êtes mille, ce sera toujours un grand plaisir de faire des saltos et de rire comme des baleineaux en votre compagnie. Merci la Lilla : j'aime à penser que votre rencontre marque le début de tout le tintouin qui risque de suivre. Merci Ludo et Max, sans vous je serais très probablement dans l'Empire. 
\\ \\
Merci à mes parents, à ma famille, sans doute n'ai-je pas réussi à rendre tout ça vraiment clair, mais avec le temps, qui sait... Vos enthousiasmes me touchent. Enfin merci Myriam, être à tes côtés m'apporte énormément, j'y ai sans aucun doute appris plus que dans toutes mes pérégrinations mathématiques.

\chapter*{R\'esum\'e (Français)}
\minitoc
\label{cha:resume_fr}

\section*{Introduction}

``Ainsi, l'on voit dans les Sciences, tantôt des théories brillantes, mais longtemps inutiles, devenir tout à coup le fondement des applications les plus importantes, et tantôt des applications très simples en apparence, faire naître l'idée de théories abstraites dont on n'avait pas encore le besoin, diriger vers les théories des travaux des Géomètres, et leur ouvrir une carrière nouvelle.'' C'est ainsi que Nicolas De Condorcet \cite{condorcet} au 18ème siècle a introduit les travaux de Gaspard Monge \cite{monge_81} qui sont au coeur de la théorie du transport optimal. Comment déplacer des masses d'un endroit à un autre de sorte à minimiser l'effort global de déplacement ? Condorcet avait raison : cette ``idée simple'' a maturé au fil des ans en une élégante théorie au croisement des mathématiques et de l'optimisation et est aujourd'hui au centre de nombreuses applications en machine learning.

D'une manière générale, l'intérêt du transport optimal réside à la fois dans sa capacité à fournir des relations, des correspondances, entre des ensembles de points et dans le fait qu'il induise une notion géométrique de distance entre des distributions de probabilité (voir Figure \ref{fig:ot_illus_intro}). Ces deux propriétés se sont révélées très utiles pour un large éventail de tâches qui sont, pour n'en citer que quelques-unes, le recalage d'images \cite{Haker:2001}, la recherche d'image par contenu \cite{rubner98}, l'adaptation de domaine \cite{courty2017optimal}, le traitement du signal \cite{kolouri_2017}, l'apprentissage non supervisé \cite{arjovsky17a,pmlr-v84-genevay18a}, l'apprentissage supervisé et semi-supervisé \cite{Frogner_2015,solomon14}, le traitement automatique du langage \cite{word_emb_doc_dist_2015}, l'équité en machine learning \cite{pmlr-v97-gordaliza19a}, ou encore en biologie \cite{biology_ot} ou en astrophysique \cite{Frisch}. 

\begin{figure}[t]
  \begin{center}
  \includegraphics[align=t,width=0.44\linewidth]{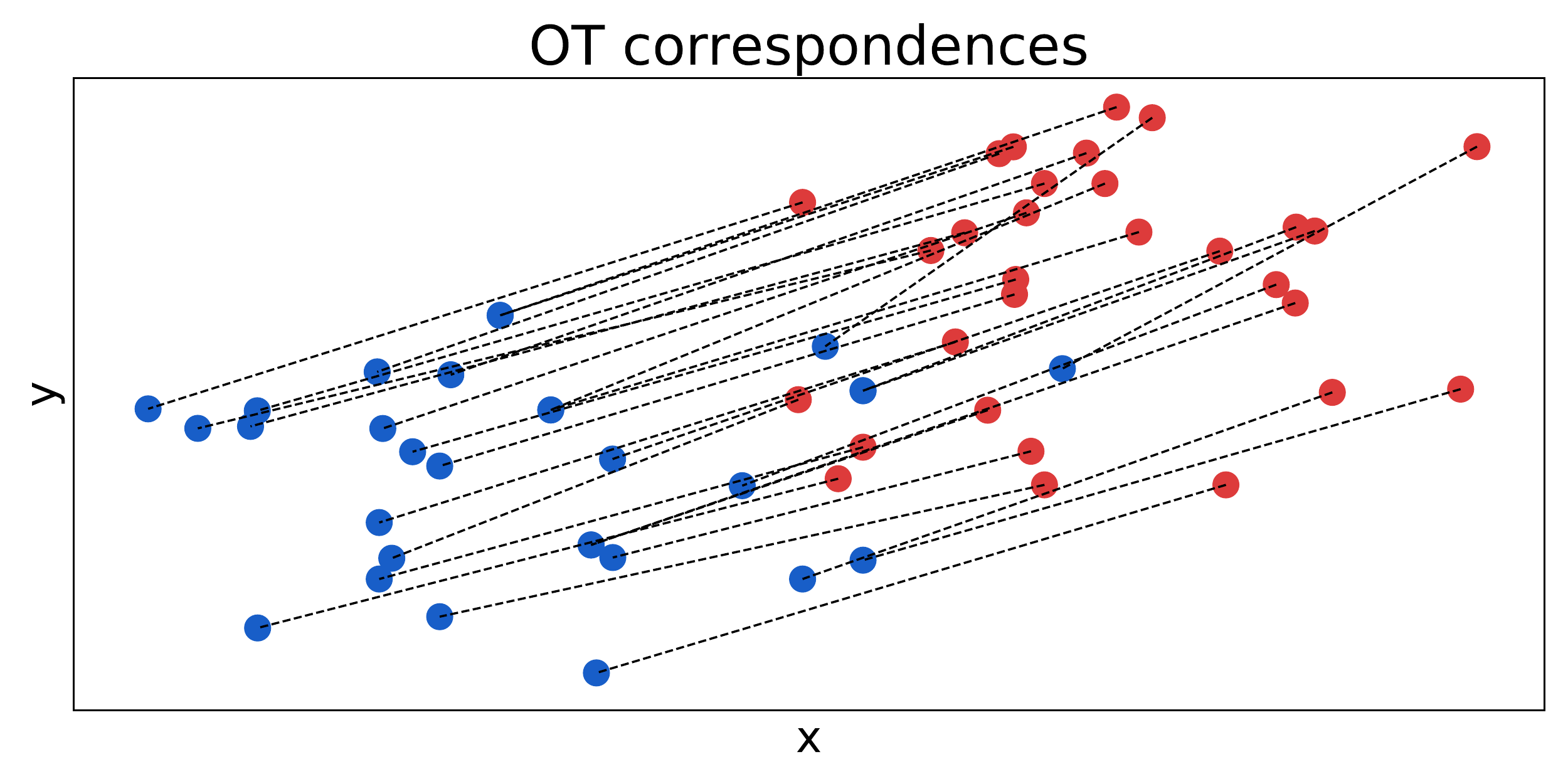}
  \includegraphics[align=t,width=0.55\linewidth]{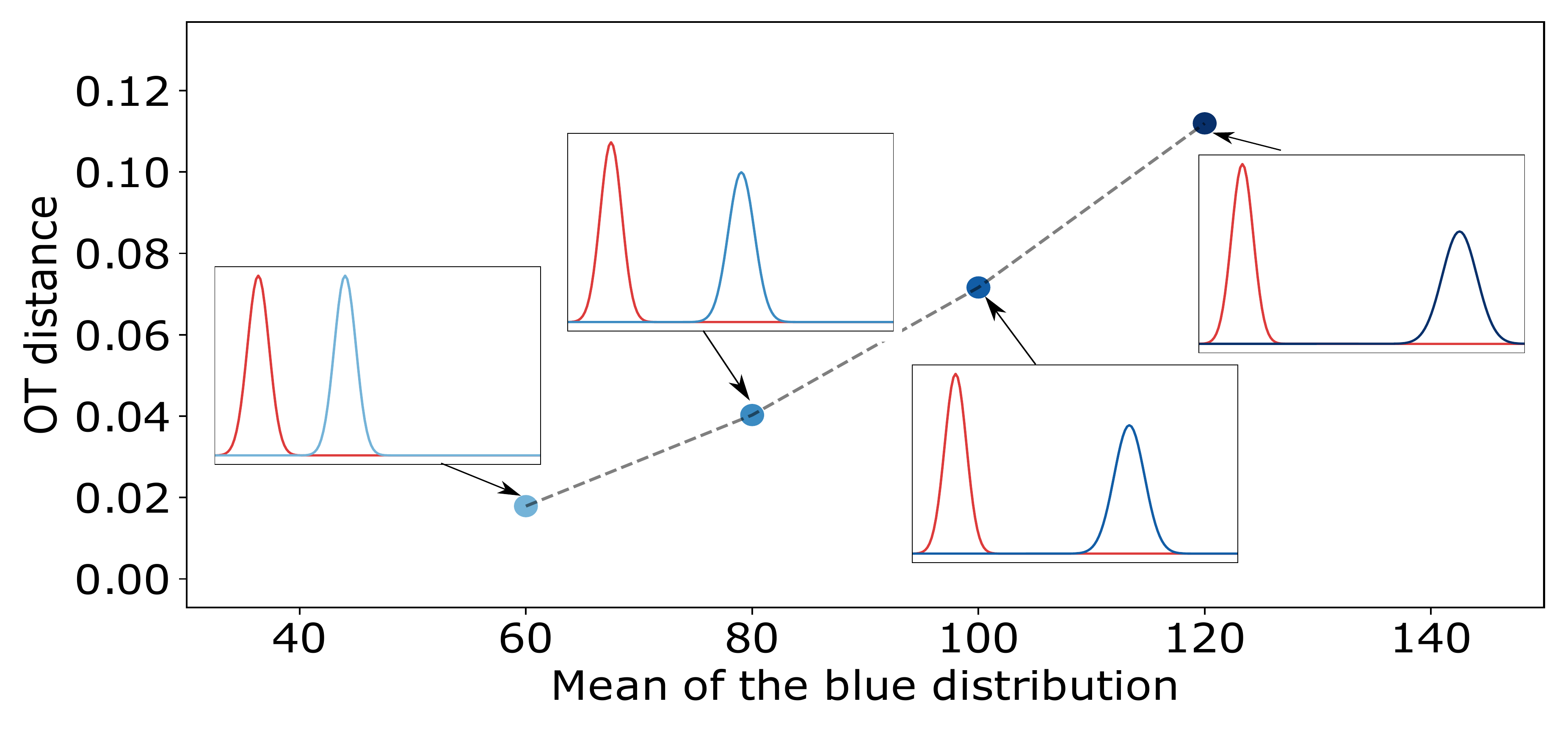}
  \end{center}
  \caption{Le Transport Optimal fournit des outils pour trouver des correspondances, relations, entre des ensembles de points et donne une notion géométrique de distance entre des distributions de probabilité. {\bf (à gauche)} Deux ensembles de points en 2D. Les correspondances trouvées par l'OT sont illustrées en pointillées. {\bf (à droite)} Les distances définies par l'OT entre deux distributions de probabilité bleue et rouge. Plus la moyenne de la distribution bleue est éloignée de la rouge, plus la distance est grande.}
  \label{fig:ot_illus_intro}
\end{figure}

Malgré ses nombreuses propriétés, le problème de transport optimal reste difficile à résoudre en pratique et il est connu pour souffrir de problèmes de scalabilité qui empêchent son utilisation sur des données volumineuses, omniprésentes en machine learning. L'émergence du transport optimal a été grandement favorisée par de récentes avancées dans le domaine de l'optimisation \cite{cuturi2013sinkhorn,altschuler2017near,genevay_stochastic}.  

De plus, dans sa formulation originelle, le transport optimal reste assez limité aux applications où il existe un ``moyen direct'' pour comparer les points, appelés \textit{samples}, provenant des distributions. Il est donc souvent limité aux cas où les samples font partie du même espace métrique, qui est la plupart du temps un espace euclidien. Cette limitation empêche notamment l'utilisation du transport optimal pour une variété de tâches dans lesquelles il existe une \emph{information de structure} supplémentaire sur les données, qui ne peut généralement pas être décrite par des espaces euclidiens. On peut citer par exemple le cas où les samples sont décrits par des graphes, des arbres ou des séries temporelles. Cette limitation empêche également son utilisation lorsque les samples se trouvent dans des espaces métriques différents, potentiellement non liés, ou lorsqu'une notion de distance entre les samples ne peut pas être facilement définie. Tous ces cas font partie de ce qu'on appellera par la suite le cas de figure \emph{incomparable}. Une solution intéressante se trouve dans l'élégante théorie de la distance de Gromov-Wasserstein \cite{memoli_gw} qui ne nécessite pas la comparaison des échantillons \emph{entre} les distributions. Cependant, cette distance est connue pour être encore plus difficile à résoudre que le transport optimal classique. L'objectif de cette thèse est d'aider à surmonter ces différents obstacles en:

\begin{enumerate}[label=(\roman*)]
\item Définissant de nouveaux problèmes de transport optimal sur des espaces incomparables et en particulier pour des données structurées.
\item Réduisant l'écart entre la compréhension théorique du transport optimal classique et celle du problème de Gromov-Wasserstein.
\end{enumerate}

\paragraph{Données structurées et espaces incomparables en machine learning}

Avant d'entrer dans le détail des contributions de cette thèse il est important de rendre explicite la notion de \emph{structure} et \emph{d'espaces incomparables}. Une première approche est de considérer une \emph{information structurelle} comme étant l'élément d'information qui encode les relations spécifiques qui existent entre les composants d'un objet. Cette définition peut être mise en relation avec le concept de \emph{relationnal reasoning} \cite{relationnalreasoning} dans lequel des \emph{entités} (ou des éléments ayant des attributs tels que l'intensité d'un signal) coexistent avec certaines \emph{relations} ou propriétés entre eux. 

Des tels cas de données structurées apparaissent naturellement lorsque la structure est \emph{explicite}. Par exemple, dans le contexte des graphes, les arêtes sont représentatives de la structure, de sorte que chaque attribut du graphe (généralement un vecteur de $\mathbb{R}^{d}$) peut être lié à d'autres par les arêtes entre les noeuds. Ces objets apparaissent notamment pour modéliser des composés chimiques ou des molécules~\cite{DBLP:journals/corr/KriegeGW16}, la connectivité du cerveau~\cite{ktena2017distance} ou encore les réseaux sociaux~\cite{Yanardag15}. Cette famille générique de données structurées comprend également les arbres~\cite{day1985optimal} ou encore les séries temporelles dont les valeurs sont corrélées dans le temps, de sorte leur comparaison nécessite de prendre en compte dans la modélisation la structure temporelle, la direction du temps.

Les informations de structure des données en machine learning peuvent apparaître de manière plus subtile voire même \emph{implicitement}. Elles peuvent se matérialiser lorsqu'on construit une \emph{prior} ou un \emph{biais} structurel sur la représentation des données. Par exemple, dans le contexte de l'apprentissage profond, de nombreuses architectures exploitent \emph{l'équivariance} à une transformation pour améliorer la généralisation des modèles. Les réseaux de neurones convolutifs (CNN) sont un excellent exemple de cette prior structurelle: un CNN est équivariant aux translation, \ie\ si nous translatons l'entrée du réseau de neurone, la sortie des convolutions sera également translatée. Ce biais structurel est connu pour révéler une certaine hiérarchie spatiale sur les pixels utile dans de nombreuses applications \cite{Wang2018NonlocalNN,chen_iterative_2018}. D'autres travaux ont étudié la conception de couches avec des équivariances à d'autres transformations telles que les permutations, rotations ou réflexions \cite{kondor2018generalization,pmlr-v48-cohenc16,NIPS2014_5424}. 

Contrairement aux méthodes ``end-to-end'' telles que les réseaux de neurones, d'autres approches plus ``hand-engineering'', basées sur la segmentation des images, permettent de dévoiler une certaine structure utile sur les images \cite{bachgraphkernel,Jianbo_cuts}. Les structures implicites sont également au cœur de nombreux outils de traitement du langage naturel (NLP) utilisés pour trouver de bonnes représentations vectorielles des mots \cite{mikolov_2013,word2vec,2014-glove}, pour la reconnaissance vocale \cite{hinton_2012} ou plus généralement pour l'apprentissage de séquences \cite{seq_to_seq}. Dans ces cas, les biais structurels passent soit par l'utilisation de variables latentes ou celle de probabilités conditionnelles. Lorsqu'ils sont disponibles, les labels ou les classes induisent également une structure implicite sur les features des données. Par exemple, en adaptation de domaines, on peut souhaiter que les samples du domaine source ayant le même label soient appariés de manière cohérente dans la même région de l'espace cible, et ainsi éviter qu'ils ne soient divisés en des emplacements trop éloignés \cite{courty2017optimal,AlvarezMelis2018Structured}. La tendance récente dans la communauté de machine learning des réseaux de neurones pour les graphes (GNN) \cite{Gnn_survey} est l'un des nombreux exemples soulignant l'importance des données structurées de nos jours. 

Alors que les exemples précédents considèrent les données structurées comme des \emph{entrées} du processus d'apprentissage, la prédominance des données structurées en machine learning se manifeste également dans de nombreux travaux où les données structurées sont des \emph{sorties}. On peut citer par exemple le domaine de la prédiction structurée dans lequel on veut apprendre à produire des objets structurés tels que des séquences, des arbres ou des assignements \cite{structure_svm,crf_2001,10.3115/1118693.1118694,JMLR:v21:19-021,pmlr-v80-mensch18a,korba_2018}. 

En bref, la notion de \emph{structure} en machine learning est omniprésente et apparaît aussi souvent qu'il y a une information supplémentaire sur les objets qui va au-delà de leurs représentations caractéristiques, de leurs features. Comme le montrent de nombreux contextes de machine learning tels que les modèles graphiques \cite{Pearl:1986:FPS:9075.9076,Pearl:2009:CMR:1642718}, l'apprentissage par renforcement \cite{Dzeroski2001} ou les modèles bayésiens non-paramétriques  \cite{hjort10}, considérer les objets comme une composition complexe d'entités avec certaines interactions est particulièrement utile, afin d'apprendre à partir de petites quantités de données. 

La notion précédente de données structurées peut se voir comme étant un cas particulier de données définies sur des espaces incomparables. Dans cette situation, chaque sample possède une ``caractéristique'' qui lui est propre et qui peut ne pas être partagée avec les autres samples. Par exemple, lorsque l'on considère un ensemble de données composé de plusieurs graphes, la structure d'un graphe n'est généralement pas partagée avec les autres graphes. Cette notion, volontairement large, englobe également le cas où les données proviennent de sources hétérogènes. Un exemple particulier de ce problème est l'adaptation de domaines hétérogènes \cite{yeh2014heterogeneous,pmlr-v33-zhou14,10.5555/2283516.2283652} qui vise à exploiter les connaissances provenant de domaines sources hétérogènes pour améliorer les performances d'apprentissage dans un domaine cible avec, potentiellement, différents espaces pour les features entre les domaines source et cible. Le cas des datasets MNIST/USPS \cite{lecun-mnisthandwrittendigit-2010,usps_dataset} illustre assez bien cette situation : sur la base de la connaissance d'images de chiffre de taille $28 \times 28$ (\ie\ des vecteurs de $\R^{784}$) de MNIST, comment construire \textit{e.g.} un classifieur qui fonctionne sur des images de chiffres $16 \times 16$  (\ie\ des vecteurs de $\R^{256}$) de USPS ? Il va sans dire que ce problème se pose souvent dans tous les domaines du machine learning : il est courant que les données soient recueillies à partir de sources diverses et hétérogènes et les méthodes qui s'appuient sur cette diversité présentent souvent un grand intérêt.

\begin{figure}[t]
   \begin{center}
        \includegraphics[width=0.9\linewidth]{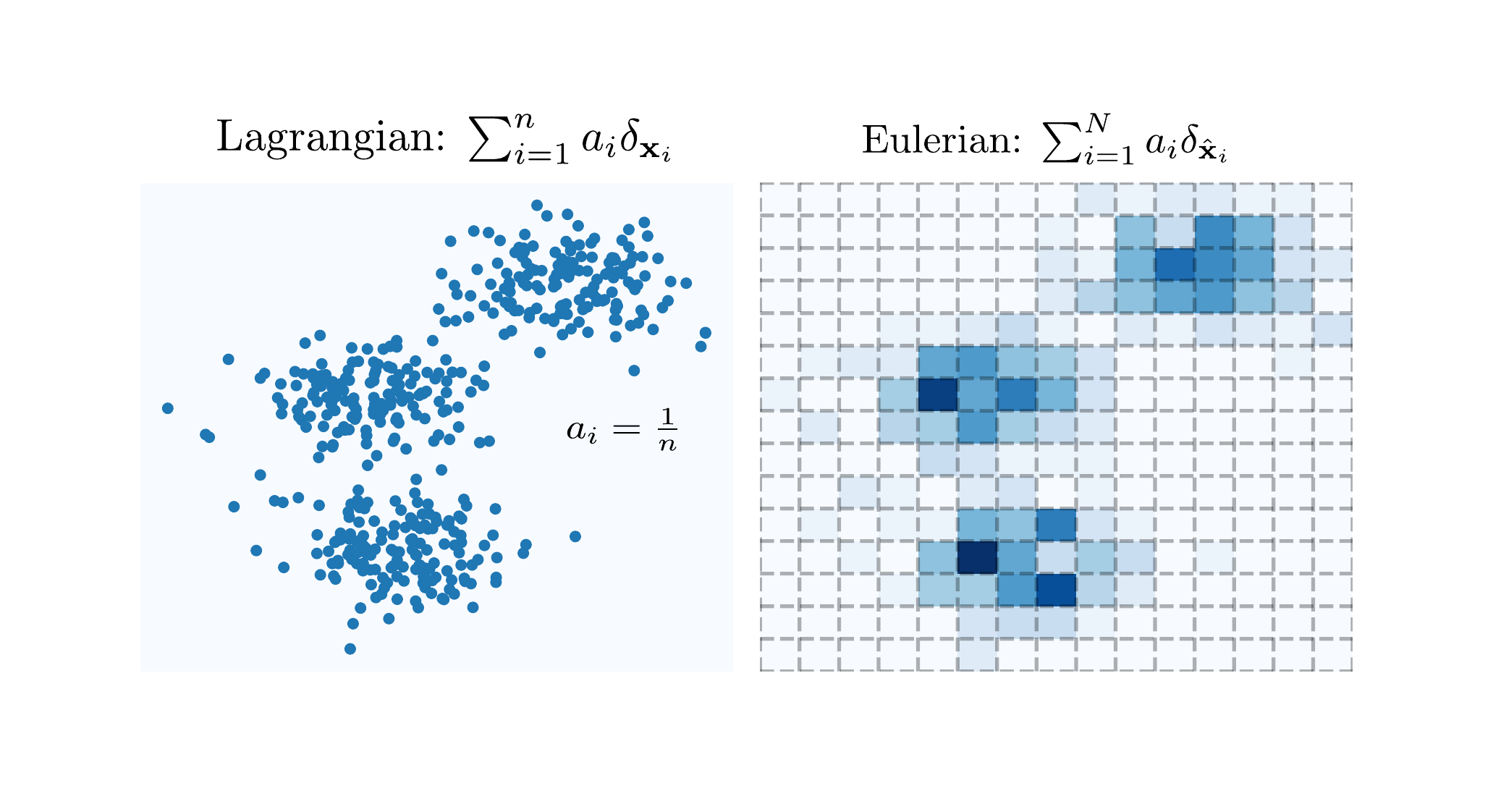}
            \caption{\label{fig:euler_res} On associe à un dataset $(\xbf_i)_{i \in \integ{n}}$ une distribution de probabilité le décrivant entièrement. Dans cet exemple $\xbf_i \in \R^{2}$. {\bf (à gauche)} Formulation Lagragienne (ou nuage de points): $\sum_{i=1}^{n} a_i \delta_{\xbf_i}$ est une mesure de probabilité discrète où $(a_i)_{i\in \integ{n}}$ est un vecteur de probabilité, \ie\ $a_i \geq 0$ et $\sum_{i=1}^{n}a_i=1$ et $\delta$ est la mesure de dirac \ie\ $\delta_{\xbf_i}(\xbf)=1$ si $\xbf=\xbf_i$ et $0$ sinon. {\bf (à droite)} Formulation Euleriennne (histogrammes): on associe une mesure de probabilité $\sum_{i=1}^{N} a_i \delta_{\hat{\xbf}_i}$ où $(\hat{\xbf}_i)_{i \in \integ{N}}$ est une grille régulière qui discrétise l'espace $\R^{2}$ et $(a_i)_{i \in \integ{N}}$ est un vecteur de probabilité. }
     \end{center}
\end{figure}

\paragraph{Le transport optimal pour le machine learning}  La question centrale qui se pose souvent en machine learning est la suivante : comment représenter les données et comment les comparer ? Le cadre des distributions de probabilités apporte certaines réponses à cette question en associant une mesure de probabilité à une collection de samples qui forment un dataset $(\xbf_i)_{i \in \integ{n}}$. La représentation \emph{Lagragienne} du dataset résulte en une mesure de probabilité discrète $\sum_{i=1}^{n} a_i \delta_{\xbf_i}$ dans laquelle on associe à chaque point $\xbf_i$ un dirac $\delta_{\xbf_i}(\xbf)=1$ si $\xbf=\xbf_i$ sinon $0$ ainsi qu'un poids $a_i \geq 0$ tel que $(a_i)_{i \in \integ{n}}$ est un vecteur de probabilité qui satisfait $\sum_{i=1}^{n} a_i=1$. Lorsqu'aucune information concernant l'importance relative des samples  dans le dataset n'est disponible les poids peuvent être choisis uniformes de sorte que $a_i=\frac{1}{n}$. De même une représentation \textit{Eulerienne} peut être construite \textit{via} la distribution de probabilité $\sum_{i=1}^{N} a_i \delta_{\hat{\xbf}_i}$ dans laquelle $(\hat{\xbf}_i)_{i \in \integ{N}}$ est une grille régulière qui discrétise l'espace. Cette formulation est équivalente à construire un histogramme sur nos données (voir Figure \ref{fig:euler_res}). Ces points de vue sur les données préconisent de trouver un moyen approprié de comparer leur représentation sous forme de distributions de probabilité et, à ce titre, la question de trouver des outils pour les comparer est au cœur de nombreux algorithmes de machine learning. 

Bien qu'il existe diverses divergences telles que $\phi$-divergences \cite{phidiv} ou Maximum Mean Discrepancies (MMD) \cite{greton_2007}, la richesse du transport optimal réside dans sa capacité à intégrer la géométrie de l'espace sous-jacent dans sa formulation et à prêter attention aux relations, aux correspondances des échantillons au sein de leurs représentations respectives. Pour souligner en bref l'avantage de représenter les données par des histogrammes/distributions de probabilités couplé à l'utilisation du transport optimal, nous pouvons citer par exemple \cite{rubner_earth_2000} pour la recherche d'images par contenu ou \cite{word_emb_doc_dist_2015} pour le traitement du langage naturel. À ce stade, des questions naturelles se posent : cette représentation théorique des données est-elle utilisable lorsque leur nature est intrinsèquement structurée, ou lorsque les samples se trouvent dans des espaces incomparables ? Dans ce cas, comment pouvons-nous représenter les données sous forme de distributions de probabilités ? Dans quelle mesure cette représentation est-elle valable ? Le transport optimal est-il toujours applicable et, si ce n'est pas le cas, comment comparer ces distributions de probabilités ? L'objectif de cette thèse, entre autres, est de fournir des réponses à ces questions.

\section*{Contributions}  

Cette thèse couvre la majeure partie des travaux de l'auteur et se concentre sur un seul axe de recherche qui est \emph{Le transport optimal sur des espaces incomparables}. Des travaux supplémentaires \cite{vayer2020time} sur les séries temporelles sur des espaces incomparables, qui ne sont pas basés sur le transport optimal, ne sont pas inclus dans cette thèse. Le lecteur intéressé peut cependant trouver les détails dans la bibliographie. 

\subsection*{Chapitre \ref{cha:ot_general}} 

Ce chapitre présente les résultats fondamentaux de la théorie classique du transport optimal et résume/illustre ses différentes formulations ainsi que quelques solveurs numériques connus. La philosophie de ce chapitre est de fournir un aperçu de haut niveau du transport optimal, tant en théorie qu'en pratique. Ce chapitre se conclut sur la théorie de Gromov-Wasserstein qui est au cœur de la thèse. Un lecteur familier avec les concepts de base du transport optimal peut passer cette partie bien qu'elle contienne des concepts et des notations essentiels qui seront abordés tout au long de la thèse.

\subsection*{Chapitre \ref{cha:fgw}} 

Ce chapitre est consacré au transport optimal pour les données structurées, et notamment dans le contexte des graphes. Il est basé sur les travaux des articles \cite{vay_struc} et \cite{Vayer_2020} et fournit des réponses à la question de la définition d'un cadre mathématique pour le transport optimal dans le cas de données structurées. Nous fournissons un cadre général, basé sur la notion de Fused Gromov-Wasserstein qui définit une distance de transport optimal entre des objets structurés tels que des graphes labelés non dirigés.

En résumé, nous considérons les graphes labelés non orientés comme des tuples de la forme $\mathcal{G}=(\mathcal{V},\mathcal{E},\ell_f,\ell_s)$ où $(\mathcal{V},\mathcal{E})$ est l'ensemble des noeuds et des arêtes du graphe.
$\ell_f : \mathcal{V} \rightarrow \Omega_f$ est une fonction qui associe à chaque noeud $v_{i} \in \mathcal{V}$ un feature $a_{i}\stackrel{\text{def}}{=}\ell_f(v_{i})$ dans un espace métrique $(\Omega_f,d)$. Nous appelons \emph{information de feature} l'ensemble de tous les features $(a_{i})_i$ du graphe. De même, $\ell_s : \mathcal{V} \rightarrow \Omega_s$ associe à un nœud $v_i$ du graphe un point $x_{i}\stackrel{\text{def}}{=}\ell_s(v_{i})$ appartenant à un espace métrique $(\Omega_s,C)$ spécifique à chaque graphe. $C : \Omega_s \times \Omega_s \rightarrow \mathbb{R}$ est une application symétrique qui vise à mesurer la similarité entre les nœuds du graphe. Cependant, contrairement à l'espace des features, $\Omega_s$ est implicite et, en pratique, il suffit de connaître la mesure de similarité $C$. Avec un léger abus de notation, $C$ est utilisé pour désigner à la fois la mesure de similarité de la structure et la matrice qui encode cette similarité entre les paires de nœuds du graphe $(\C(i,k) = C(x_i, x_k))_{i,k}$. Selon le contexte, $\C$ peut soit encoder les informations de voisinage des nœuds, les informations des arrêtes du graphe, ou plus généralement, il peut modéliser une distance entre les nœuds telle que la distance shortest-path. Nous désignons par \emph{information de structure} l'ensemble de tous les points de structure $(x_{i})_i$ du graphe.

\begin{figure}[t]
    \centering
        \includegraphics[width=0.9\columnwidth]{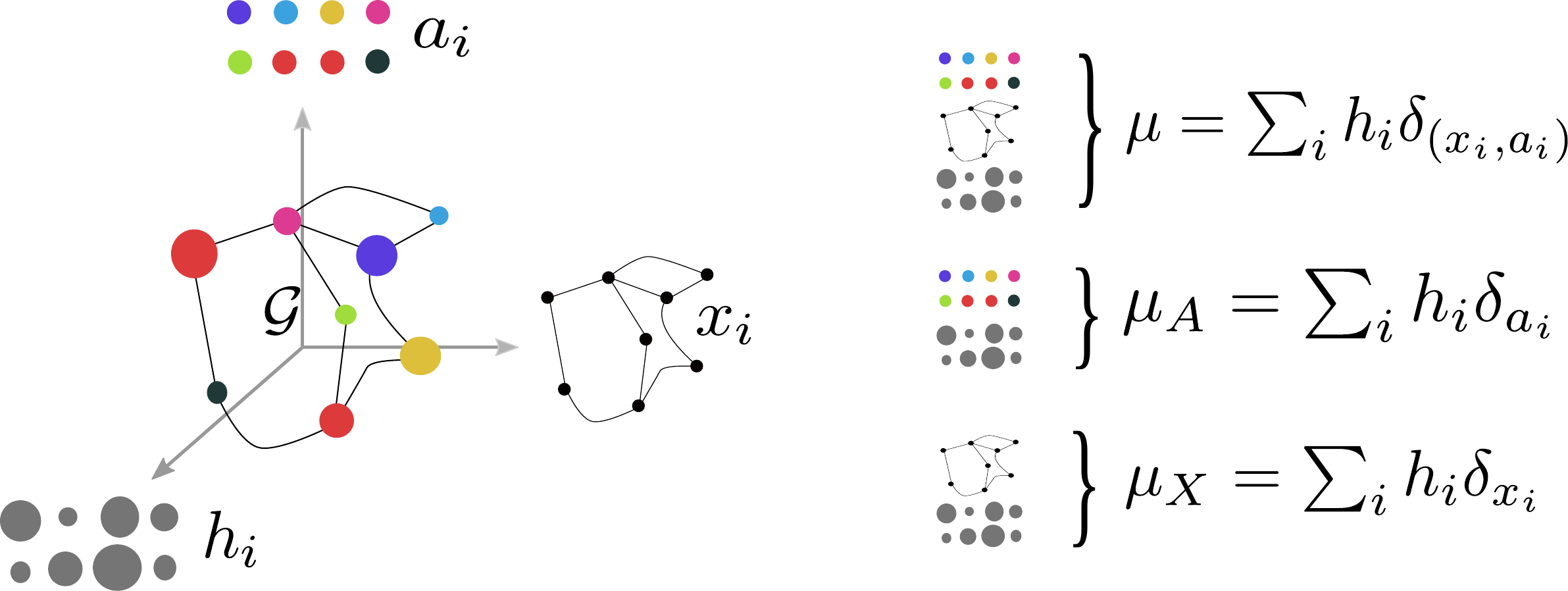}
    \caption{{\bf (à gauche)} Graphe labelé avec $(a_{i})_{i}$ l'information de feature, $(x_{i})_{i}$ l'information de structure et $(h_{i})_{i}$ un vecteur de poids qui mesure l'importance relative des noeuds. {\bf (à droite)} Les données structurées sont entièrement décrites par une mesure de probabilité $\mu$ sur l'espace produit des features et de la structure, avec respectivement des marginales $\mu_{X}$ et $\mu_{A}$ sur la structure et les features. \label{graphex_res}}
\end{figure}

Nous proposons d'enrichir le graphe décrit précédemment avec un vecteur de poids qui a pour but d'encoder l'importance relative des noeuds du graphe. Pour ce faire, si nous supposons que le graphe a $n$ noeuds, nous associons aux noeuds des poids $(h_{i})_{i} \in \Sigma_{n}$. 
Par cette procédure, nous obtenons la notion de \emph{données structurées} comme un tuple $\mathcal{S}=(\mathcal{G},h_{\mathcal{G}})$ où $\mathcal{G}$ est un graphe labelé et $h_{\mathcal{G}}$ est une fonction qui associe un poids à chaque noeud. Cette définition permet au graphe d'être représenté par une mesure de probabilité $\mu= \sum_{i=1}^{n} h_{i} \delta_{(x_{i},a_{i})}$ sur l'espace produit feature/structure qui décrit l'ensemble de la donnée structurée (voir Figure \ref{graphex_res}). 

Considérons à présent deux données structurées $\mu= \sum_{i=1}^{n} h_{i} \delta_{(x_{i},a_{i})}$ et $\nu= \sum_{i=1}^{m} g_{j} \delta_{(y_{j},b_{j})}$, où  $\h \in \Sigma_{n}$ et $\g \in \Sigma_{m}$ sont des histogrammes. Nous notons $\Mbf_{\Abf\Bbf}=(d(a_{i},b_{j}))_{i,j}$ la matrice $n \times m$  de distance entre les features et $\C_{1},\C_{2}$, les matrices de structure des graphes. 

Nous définissons une nouvelle distance de transport optimal appelée la distance de Fused Gromov-Wasserstein. Elle est donnée pour un paramètre $\alpha \in [0,1]$ par :
\begin{equation}
\label{discretefgw_res}
\fgwdistance(\C_1,\C_2,\h,\g)= \underset{\GG \in \couplingset(\h,\g)}{\min}\sum_{i,j,k,l} (1-\alpha) d(a_{i},b_{j})^{q}+\alpha |C_{1}(i,k)-C_{2}(j,l)|^{q} \pi_{i,j}\pi_{k,l}  \\
\end{equation}
Nous prouvons que cette fonction définit bien une distance entre les graphes labelés (Theorem \ref{metrictheodiscrete}) et qu'elle permet aussi de donner naissance à une notion de barycentre, de moyenne, de graphes basée sur la moyenne de Fréchet (Section \ref{sec:bary}). Nous établissons un algorithme pour calculer ces différents objets (Section \ref{sec:solved_fgw}) qui nous permet en particulier de résoudre aussi le problème de Gromov-Wasserstein classique, et nous prouvons qu'elle s'avère utile dans de nombreux scénarii de machine learning sur les graphes comme la classification, la simplification de graphes ou encore le clustering de graphes (Section \ref{sec:expeFGW}). Nous concluons cette partie en élargissant la définition précédente au cas de données structurées continues (Section \ref{sec:continuousfgw}) où nous montrons que FGW possède des propriétés de distance similaires et définit de plus une géodésique sur l'espace des données structurées.

\subsection*{Chapitre \ref{cha:gw_euclidean}} 
Ce Chapitre vise à combler l'écart entre la théorie classique du transport optimal et la théorie de Gromov-Wasserstein. Il s'ouvre sur le cas particulier des distributions 1D dont l'étude est basée sur les travaux de l'article \cite{vay_sliced_gromov_2019}. Nous établissons la première closed-form de Gromov-Wasserstein dans le cas des mesures discrètes sur la droite réelle et du coût euclidien au carré. Nous prouvons que celle-ci peut être calculée rapidement avec une complexité en $O(n\log(n))$. En particulier nous montrons qu'un couplage optimal pour Gromov-Wasserstein est trouvé en considérant soit le couplage diagonal ou le couplage anti-diagonal lorsque les points sont triés. Nous proposons, en utilisant cette closed-form, une nouvelle divergence appelée Sliced Gromov-Wasserstein, à la manière de Sliced Wasserstein. Nous établissons ses propriétés (Theorem \ref{propertiessgw}) et l'utilisons dans des scénarii tels que la comparaison de meshes 3D et les réseaux de neurones génératifs.

Une deuxième partie plus prospective se concentre sur la théorie de Gromov-Wasserstein pour les espaces euclidiens en la liant à la théorie classique du transport optimal. Nous abordons notamment la question de la régularité des plans de transport de Gromov-Wasserstein. Nous donnons des conditions nécessaires sous lesquelles nous pouvons prouver que le couplage optimal de Gromov-Wasserstein est supporté par une fonction déterministe, de la même manière que le couplage optimal pour le cas de Wasserstein avec un coût quadratique entre des mesures régulières (théorème de Brenier \cite{brenier_1991}). Pour cela nous considérons les cas où les mesures de distance ou similarité dans chaque espace $c_{\Xcal},c_{\Ycal}$ sont définies par les produits scalaires ou par des coûts quadratiques. Nous montrons que résoudre $\gw$ équivaut à résoudre conjointement un problème de transport linéaire et un problème d'alignement. Ainsi, la régularité des plans optimaux de $\gw$ peut être étudiée au travers de formulations équivalentes, plus simples à analyser. Cela nous permet également de construire des solutions algorithmiques pour $\gw$ sur des espaces euclidiens. En résumé :

\begin{enumerate}[label=(\roman*)]
\item Dans la Section \ref{sec:inner_product_case} nous considérons le cas où $c_{\Xcal},c_{\Ycal}$ sont définis par des produits scalaires dans chaque espace. A condition que la mesure de probabilité source soit régulière par rapport à la mesure de Lebesgue, nous donnons une condition suffisante pour l'existence d'un plan de transport optimal \emph{déterministe}, \ie\ supporté par une fonction déterministe $T$. Nous montrons que cette fonction est de la forme $\nabla u \circ \Pbf$ où $u$ est une fonction convexe et $\Pbf$ est une application linéaire qui peut être considérée comme une transformation globale ``recalant'' les mesures de probabilité dans le même espace (Theorem \ref{maintheorem}). Nous utilisons cette formulation pour montrer que la distance $\gw$ entre les mesures de probabilité 1D admet une solution closed-form. Plus précisément, nous montrons que le couplage optimal est déterminé par les fonctions de distribution cumulative et anticumulative de la distribution source (Theorem \ref{theo:1D1D1D}). 
\item Dans la Section \ref{sec:sq_eclidean} nous considérons $c_{\Xcal},c_{\Ycal}$ définis comme étant le carré des distances euclidiennes dans chaque espace. Nous montrons que le problème équivaut à une maximisation d'une fonction convexe sur $\couplingset(\mu,\nu)$. Nous utilisons la dualité Fenchel-Legendre dans l'espace des mesures pour en déduire un problème équivalent à celui de Gromov-Wasserstein (Theorem \ref{maintheo2}). Nous l'analysons plus en profondeur et montrons que la régularité des plans de transport optimaux est plus compliquée à établir que dans le cas précédent.
\item Dans la Section \ref{sec:numerical} nous utilisons les formulations précédentes pour obtenir des solutions numériques efficaces pour le problème $\gw$ basées sur des BCD. Nous montrons que ces algorithmes se comparent favorablement par rapport aux solveurs standards tels que le gradient conditionnel ou avec la régularisation entropique.
\item Nous concluons par la Section \ref{sec:gm} en considérant le problème \emph{Gromov-Monge} dans les espaces euclidiens, qui est l'équivalent du problème de Monge du transport linéaire dans le contexte de Gromov-Wasserstein. Nous discutons du cas particulier de Gromov-Monge entre les mesures gaussiennes et nous montrons que ce problème admet une closed-form quand on se limite aux push-forward linéaires (Theorem \ref{maintheo3}). Nous donnons des interprétations géométriques de ce résultat et nous comparons le push-forward optimal avec la celui de la théorie du transport optimal classique dans le cas des mesures gaussiennes.
\end{enumerate}

\subsection*{Chapitre \ref{cha:coot}} 

\begin{figure*}[t]
    \centering
    \includegraphics[width=1\linewidth]{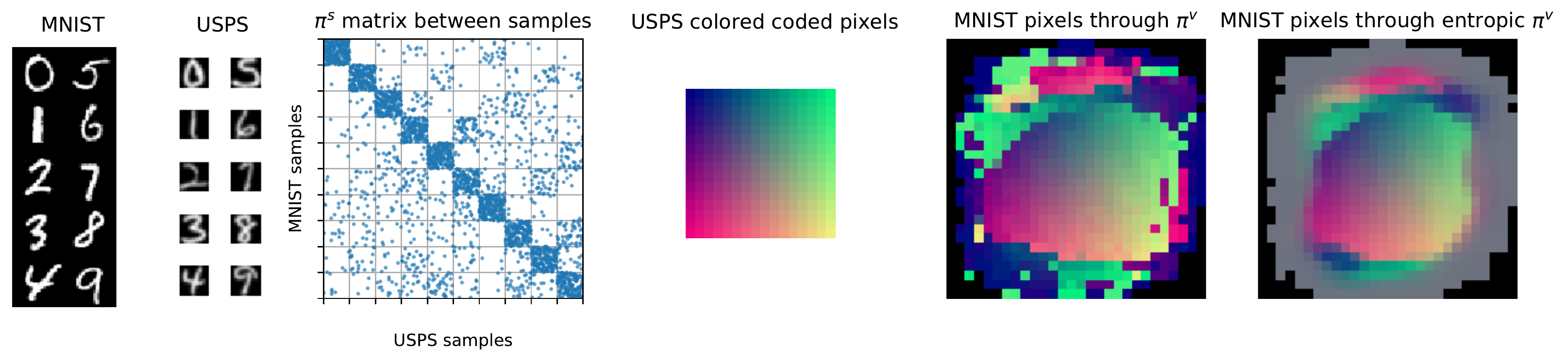}
    \caption{Illustration de \COOT \ entre les datasets MNIST et USPS. \textbf{(gauche)} samples des datasets MNIST et USPS ; \textbf{(centre gauche)} Matrice de transport $\Gs$ entre les samples, triés par classe ; \textbf{(centre)} Image USPS avec les pixels colorés en respectant leur position 2D ; \textbf{(centre droite)} couleurs transportées sur image MNIST en utilisant $\Gv$, les pixels noirs correspondent aux pixels MNIST non informatifs (toujours à 0); \textbf{(droite)} couleurs transportées sur image MNIST en utilisant $\Gv$ avec régularisation entropique.}
    \label{fig:mnist_usps_res}
\end{figure*}

Ce chapitre présente un nouveau cadre théorique pour comparer des mesures de probabilité sur des espaces incomparables, à savoir le problème de co-transport optimal. Contrairement à l'approche de Gromov-Wasserstein, cette approche permet d'optimiser simultanément deux couplages entre les samples et les features des données. Ce chapitre fournit une analyse théorique approfondie de ce cadre et, du point de vue des applications, il aborde le problème de l'adaptation des domaines hétérogènes et du co-clustering. Ce chapitre est basé sur l'article \cite{redko2020cooptimal}. 

Plus précisément nous considérons deux datasets quelconques $\X=[\x_1,\dots,\x_n]^T\in\mathbb{R}^{n\times d}$ et $\X'=[\xbf'_1,\dots,\xbf'_{n'}]^T\in\mathbb{R}^{n'\times d'}$, avec en général $n\neq n'$ et $d\neq d'$. Les lignes sont appelées \emph{samples} et les colonnes \emph{features}. Nous associons aux samples $(\xbf_i)_{i \in \integ{n}}$ et $(\xbf'_i)_{i \in \integ{n'}}$ des poids $\w=[w_1,\dots,w_n]^\top \in\Sigma_n$ et $\w'=[w_1',\dots,w_{n'}']^\top \in\Sigma_{n'}$. De la même manière on associe aux features des poids $\v\in\Sigma_d$ et $\v'\in\Sigma_{d'}$.
Le problème de Co-transport optimal est défini par:
\begin{equation}
 \label{eq:co-optimal-transport_res}
COOT(\X,\X',\w,\w',\v,\v')=\min_{\begin{smallmatrix}\GGs \in\Pi(\w,\w') \\ \GGv\in\Pi(\v,\v')\end{smallmatrix}} \quad \sum_{i,j,k,l} L(X_{i,k},X'_{j,l})\GGs_{i,j}\GGv_{k,l} 
\end{equation}

Pour illustrer cette définition, nous résolvons le problème d'optimisation entre deux datasets classiques~: MNIST et USPS. Ils contiennent des images de différentes résolution (les images de USPS sont de taille $16 \times16$ et MNIST $28 \times28$) qui appartiennent aux mêmes classes (chiffres entre 0 et 9). En outre, les chiffres sont également différemment centrés comme l'illustrent les exemples de la partie gauche de la Figure \ref{fig:mnist_usps_res}. Cela signifie que sans prétraitement, les images ne se trouvent pas dans le même espace topologique et donc ne peuvent pas être comparées directement à l'aide des distances conventionnelles. Les images représentent les \emph{samples} tandis que chaque pixel agit comme un \emph{feature} conduisant à $256$ et $784$ features pour USPS et MNIST

Le résultat de la résolution du problème est reporté sur la Figure \ref{fig:mnist_usps_res}. Dans la partie centrale gauche, nous fournissons le couplage optimal $\GGs$ entre les samples, {\textit{i.e.} les différentes images}, triés par classe. Le couplage $\GGv$, à son tour, décrit les relations entre les features, \textit{i.e.} les pixels, dans les deux domaines.  Pour le visualiser, nous codons en couleur les pixels de l'image USPS source et utilisons $\GGv$ pour transporter les couleurs sur une image MNIST cible de sorte que ses pixels soient définis comme des combinaisons convexes de couleurs de la première avec des coefficients donnés par $\GGv$.  Les résultats correspondants sont présentés dans la partie droite de la Figure~\ref{fig:mnist_usps_res}.

Nous montrons que cette nouvelle formulation inclut Gromov-Wasserstein comme cas particulier (Proposition \ref{prop:cot_equal_gw}), et qu'elle présente l'avantage de travailler directement sur les données brutes sans avoir à calculer, stocker et choisir les mesures de similarité. De plus, \COOT \ fournit deux couplages interprétables entre les features et les samples. Nous montrons que \COOT\ définit une notion de distance entre les datasets (Proposition \ref{sec:metric_properties}) et nous établissons une procédure d'optimisation basée sur la résolution de transports linéaires (Section \ref{sec:optimsec}). Sur le plan pratique, nous apportons la preuve de l'utilité de \COOT\ pour le machine learning notamment en adaptation de domaines hétérogènes (Section \ref{sec:hda}) et pour le co-clustering (Section \ref{sec:co_clustering}).

\dominitoc

\pagenumbering{roman}

\tableofcontents \addcontentsline{toc}{chapter}{Table of contents} \mtcaddchapter

\newpage
\begin{tabular}{p{3cm}p{11cm}p{1cm}}
\textbf{Linear algebra} & \\
$\mathbf{a}, \mathbf{A}$ & all vectors in $\R^{d}$ and matrices are written in bold. The coordinates will be written $A_{i,j}$ or $A(i,j)$ for matrices and $a_i$ for vectors without bold.\\
$\integ{n}$ & the subset $\{1,\cdots,n\}$ of $\mathbb{N}$ \\
$\mathbf{I_n},\mathbf{J_n}$ & the $n\times n$ identity matrix and anti-identity matrix \\
$\|.\|, \langle,\rangle$ & a norm (depends on the context) and an inner product \\
$\ell_{p}$ & Denotes the standard $\|.\|_{p}$ norm \\
$\kron$ & is the Kronecker product of matrices, \ie\ for two matrices $\Abf \in \R^{n\times m},\Bbf^{p\times q}$, $\Abf \kron \Bbf \in \R^{np \times mq}$ defined by $(A_{i,j}\Bbf)_{i,j}$ \\
$\otimes$ & the the tensor-matrix multiplication, \emph{i.e.} for a tensor $\L=(L_{i,j,k,l})$ and a matrix $\Abf$, $\L \otimes \Abf$ is the matrix $\left(\sum_{k,l} L_{i,j,k,l}A_{k,l}\right)_{i,j}$. \\
$\tr,\text{det}$ & is the trace operator for matrices \ie\ $\tr(\Pbf)=\sum_{i,j}P_{i,j}$ and the determinant operator\\
$\|.\|_{\F}$ & is the Frobenius norm for matrices \ie\ $\|\Pbf\|_{\F}=\sqrt{\tr(\Pbf^{T} \Pbf)}$ \\
$\froeb{.}{.}$ & is the inner product for matrices \ie\ $\froeb{\Pbf}{\Qbf}=\tr(\Qbf^{T} \Pbf)$ \\
$\rg(\mathbf{A}),\ker(\mathbf{A})$ & The rank and the kernel of a matrix $\mathbf{A}$ \\
$DS$ & is the set of doubly-stochastic matrices, \ie\ $DS=\{\X \in \R^{n \times n}| \X \one_n=\one_n, \X^{T} \one_n=\one_n, \X \geq 0 \}$ \\
$\Pi_n$ & the set of permutation matrices, \ie\ $\X=(x_{ij})_{(i,j) \in \integ{n}^{2}} \in \Pi_n$ if $x_{ij} \in \{0,1\}$ and $\sum_{i=1}^{n}x_{ij}=\sum_{j=1}^{n}x_{ij}=1$. \\
$\mathcal{O}(p)$ & the subset of $\R^{p\times p}$ of all orthogonal matrices. \\
$\Stief$ & is the Stiefel manifold, \ie\ the set of all orthonormal $p$-frames in $\R^{q}$ or equivalently $\Stief=\{\Bbf \in \R^{q\times p} | \Bbf^{T}\Bbf=\mathbf{I_{p}} \}$. \\
$\Sn$ & the set of all permutations of $\integ{n}$. \\
$\oslash $ & Element-wise division operator for two vectors, \ie\ $\mathbf{u} \oslash \mathbf{v}=(\frac{u_{i}}{v_{i}})_i$ \\
$\simplex_n $ & the set of probability vectors in $\R_{+}^{n}$ (or histograms with $n$ bins), \ie\ $\simplex_n=\{\a \in \R_{+}^{n}\ | \sum_{i=1}^{n} a_i=1\}$ \\
 &  \\
\textbf{Measure theory} & \\
$\P(\Xcal)$ & the set of probability measures on a space $\Xcal$ \\
$\mathcal{M}(\Xcal)$ & the set of Borel finite signed measures on a space $\Xcal$ \\
$\lebsm$ & the Lebesgue measure on $\R$ or $\R^{n}$ depending on the context \\
$\supp(\mu)$ & the support of $\mu$ (see definition \ref{def:support})\\
$\delta_x$ & the dirac measure on $x$, \ie\  $\delta_x(y)=0$ if $y\neq x$ else $1$\\
$\#$ & the push forward operator. \\
$\mu \otimes \nu$ & the product measure of two probability measures $\mu,\nu$, \ie, $\mu \otimes \nu(A\times B)=\mu(A)\nu(B) $ \\
$(id \times T) \# \mu$ & Is the measure $\mu \otimes (T \# \mu) $ \\
$\couplingset(\mu,\nu)$ & the set of couplings of two probability measures $\mu,\nu$ (see definition \ref{def:coupling}) \\
$\mongeot_{c}(\mu,\nu)$ &The Monge cost between two probability measures $\mu,\nu$ (see definition \ref{MongeOt})\\
$\ot_{c}(\mu,\nu)$ & The Kantorovitch cost between two probability measures $\mu,\nu$ (see definition \ref{linear_ot})\\
$\wass_{p}(\mu,\nu)$ & The $p$-Wasserstein distance between two probability measures $\mu,\nu$ \\
$\gw_{p}(\mu,\nu)$ & The $p$-Gromov-Wasserstein distance between two probability measures $\mu,\nu$ \\
$\mathcal{F}_{\mu}$& the Fourier transform of the probability measure $\mu$ \ie\ $\mathcal{F}_{\mu}(s)=\int e^{-2i \pi \langle s,x \rangle } \dr\mu(x)$ \\
$\mathcal{N}(\mathbf{m}, \Sigmab)$ & multivariate Gaussian (normal) distribution with mean $\mathbf{m}$ and covariance $\Sigmab$ \\
\end{tabular}

\begin{tabular}{p{3cm}p{11cm}p{1cm}}
& \\
\textbf{Functions} & \\
$DT$ & the Jacobian of a function $T$ \\
$C(\Xcal),C(\Xcal,\Ycal)$ & the set of continuous functions form $\Xcal$ to $\R$ (\textit{resp.} from $\Xcal$ to $\Ycal$)  \\
$C_{b}(\Xcal),C_{b}(\Xcal,\Ycal)$ & the set of continuous and bounded  functions form $\Xcal$ to $\R$ (\textit{resp.} from $\Xcal$ to $\Ycal$) \\
$C^{p}(\Xcal)$ & the set of function of class $C^{p}$. \\ 
$Lip_{k}(\Xcal)$ & the set of Lipschitz functions on $\Xcal$ with Lipschitz constant $k$ \\
$L^{p}(\mu)$ & The set of $p$-integrable functions with respect to a measure $\mu$, \ie\ $f\in L^{p}(\mu)$ if $\int |f|^{p}\dr \mu<+\infty$\\
$\nabla$ & the gradient operator \\
$d_{\Xcal} \oplus d_{\Ycal}$ & The ``sum'' distance on the cartesian product $(\Xcal,d_{\Xcal}) \times (\Ycal,d_{\Ycal})$. More precisley it is defined by $d_{\Xcal} \oplus d_{\Ycal}((x,x'),(y,y'))=d_{\Xcal}(x,x')+d_{\Ycal}(y,y')$.  \\
$\mathbb{S}^{d-1}$ & The $d$-dimensional hypersphere, \ie\ $\mathbb{S}^{d-1}=\left\{\thetab \in \mathbb{R}^{d} \ | \ \|\thetab\|_2=1\right\}$ \\
& \\
\textbf{Acronyms} & \\
OT & Is the acronym for Optimal Transport \\
W,SW,GW, SGW& Stands respectively for Wasserstein, Sliced Wasserstein, Gromov-Wasserstein and Sliced Gromov-Wasserstein\\
QAP,QP,BAP,BP & Stands respectively for Quadratic Assignment Problem, Quadratic Program, Bilinear Assignment Problem, Bilinear Program \\
\end{tabular}

\mainmatter

\chapter{Introduction}
\minitoc
\label{cha:introduction}

\section{Moving probability measures: a least effort problem}

``Thus, we see in science, sometimes brilliant but for a long time useless theories, suddenly becoming the basis of the most important applications, and sometimes seemingly very simple applications, giving rise to the idea of abstract theories that are not yet needed, directing towards the theories of the Surveyors' work, and opening up a new career for them'' \footnote{Original quote: ``Ainsi, l'on voit dans les Sciences, tantôt des théories brillantes, mais longtemps inutiles, devenir tout à coup le fondement des applications les plus importantes, et tantôt des applications très simples en apparence, faire naître l'idée de théories abstraites dont on n'avait pas encore le besoin, diriger vers les théories des travaux des Géomètres, et leur ouvrir une carrière nouvelle.''}. That is how Nicolas De Condorcet \cite{condorcet} in the 18th century introduced the work of Gaspard Monge \cite{monge_81} which is at the core of the Optimal Transport (OT) theory. How to move some masses from one location to another so as to minimize the overall effort? Condorcet was right: this ``simple idea'' has evolved over the years to become a theory at the crossroads of mathematics/optimization and is today at the center of many machine learning applications.

Broadly speaking the interest of Optimal Transport lies in both its ability to provide correspondences between sets of points and its ability to induce a geometric notion of distance between probability distributions (see Figure \ref{fig:ot_illus}) Both have proved to be very useful for a wide range of tasks that are, to name a few, image registration \cite{Haker:2001}, image retrieval \cite{rubner98}, domain adaptation \cite{courty2017optimal}, signal processing \cite{kolouri_2017}, unsupervised learning \cite{arjovsky17a,pmlr-v84-genevay18a}, supervised and semi-supervised learning \cite{Frogner_2015,solomon14}, natural language processing \cite{word_emb_doc_dist_2015}, fairness \cite{pmlr-v97-gordaliza19a}, in biology \cite{biology_ot} or in astrophysics \cite{Frisch}. 

\begin{figure}[t]
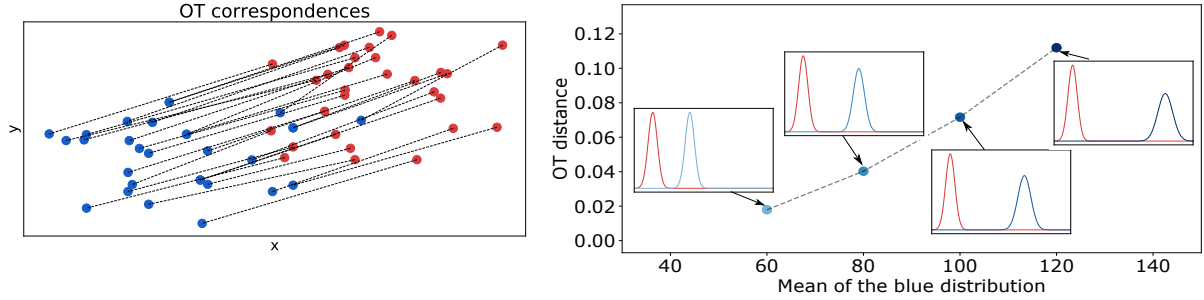

  \begin{center}
  \includegraphics[align=t,width=0.44\linewidth]{corres.pdf}
  \includegraphics[align=t,width=0.55\linewidth]{illus_ot.pdf}
  \end{center}
  \caption{Optimal Transport provides tools for finding correspondences between set of point and gives a geometric notion of distance between probability distributions. {\bf (left)} Two sets of points in 2D. The correspondences found by OT is depicted in dashed line. {\bf (right)} OT distance between two probability distributions in red and blue. The further away the mean of the red probability distribution is from the blue one the higher is the OT distance. }
  \label{fig:ot_illus}
\end{figure}

Despite its many properties, the optimal transport problem remains difficult to solve in practice and is known to suffer from scalability problems that prevent its use on large, ubiquitous data in machine learning. The emergence of optimal transport in the machine learning community has been greatly favoured by recent achievements on the optimization side \cite{cuturi2013sinkhorn,altschuler2017near,genevay_stochastic} which tend to circumvent the heavy computational complexity of solving OT problems. 

Moreover optimal transport, in its early formulation, is quite restricted to applications where there exists a \textit{direct way} of comparing the samples of the data. Its applicability is thus often limited to the case where the samples are part of a common ground metric space, that is most of the time Euclidean. This limitation prevents its use for a variety of machine learning tasks where there is an additional \emph{structural information} on the data which can not usually be described in the Euclidean setting \textit{e.g.} when the samples are described by graphs, trees or time series. It also prevents the use of optimal transport when the samples lie in different, seemingly not related, metric spaces or when a meaningful notion of distance between the samples can not be easily defined. All of these instances can be framed into the \emph{incomparable} setting, that is when the samples lie on \emph{incomparable spaces}. An interesting remedy in this situation can be found in the theory of the Gromov-Wasserstein distance \cite{memoli_gw} which does not require the comparison of the samples \emph{across} the distributions. However, it is well-known to be arduous to solve and to suffer from tedious scalability issues. The purpose of this thesis is to help overcoming these obstacles by:

\begin{enumerate}[label=(\roman*)]
\item Defining new Optimal Transport frameworks for incomparable spaces and especially for structured data.
\item Reducing the theoretical gap between classical Optimal Transport and the Gromov-Wasserstein theory in Euclidean spaces and in particular derive scalable and applicable formulations.
\end{enumerate}

\section{Structured Data and Incomparable Spaces in Machine Learning} 

Before going into the details of this thesis, it is important to make explicit what is behind the notion of \emph{structure} through the manuscript. As a starter we can see the \emph{structural information} as the piece of information which encodes the specific relationships that exist among the components of an object. This definition can be related with the concept of \emph{relational reasoning} \cite{relationnalreasoning} where some \emph{entities} (or elements with attributes such as the intensity of a signal) coexist with some \emph{relations} or properties between them. 

Natural instances of such structured data arise when the structure is \emph{explicit}. For example, in a graph context, edges are representative of this notion so that each attribute of the graph (typically  $\mathbb{R}^{d}$ vectors) may be linked to some others through the edges between the nodes. Notable examples are found in chemical compounds or molecules modeling~\cite{DBLP:journals/corr/KriegeGW16}, brain connectivity~\cite{ktena2017distance}, or social networks~\cite{Yanardag15}. This generic family of structured data also encompasses trees~\cite{day1985optimal} or time series where the signals' values are correlated through time so that comparing times series requires one to take the direction of time into account.

Structure information of data in machine learning can also be more subtle or even \emph{implicit}. For example it can appear where one build structural priors, or \emph{inductive bias} \cite{relationnalreasoning}, on the objects representation. For instance in the context of deep learning, many successful architectures exploit the equivariance to a symmetry transformation to improve generalization. The convolutional neural network (CNN) is a prime example of this ``built-in'' inductive bias which satisfies translation equivariance, \ie\ if we translate the input, the output
of the convolutions will also be translated. This inductive bias is known to reveal useful spatial hierarchy, or structure, on the pixels \cite{Wang2018NonlocalNN,chen_iterative_2018} and other works have studied designing layers with equivariances to other transformations such as permutation, rotation, reflection \cite{kondor2018generalization,pmlr-v48-cohenc16,NIPS2014_5424}. Unlike end-to-end methods other more ``hand-engineering'' approaches based \textit{e.g.} on images segmentation can leverage some structure in images that can further be usefully exploited \cite{bachgraphkernel,Jianbo_cuts}. Implicit structure is also at the core of many natural language processing (NLP) tools used to find good word representations \cite{mikolov_2013,word2vec,2014-glove}. It is also the main ingredient of speech recognition \cite{hinton_2012} or sequence learning \cite{seq_to_seq}. In these cases structural assumptions on the sequences of words are made whether by the mean of latent variables or \textit{via} conditional probabilities. When available, labels or classes also induce an implicit structure on the feature space of the data. For instance in Domain Adaptation one may desire the source samples with the same label to be matched consistently within the same region of the target space preventing them from being split into disjointed far locations \cite{courty2017optimal,AlvarezMelis2018Structured}. The recent trend of graph neural networks (GNN) \cite{Gnn_survey} in the machine learning community is one of the many examples emphasizing that structured data remains an important and challenging setting nowadays. While previous instances consider structured data as inputs of the learning process the prevalence of this notion in machine learning also arises in another line of works where it is an output. This setting is \textit{e.g.} considered in the structured prediction approach where one wants to learn to produce structured object such as sequences, trees or assignments \cite{structure_svm,crf_2001,10.3115/1118693.1118694,JMLR:v21:19-021,pmlr-v80-mensch18a,korba_2018}. 

In short, the notion of \emph{structure} in machine learning is omnipresent and appears as often as there is an additional information about the objects that goes beyond their feature representations. As shown in many contexts in machine learning such as graphical models \cite{Pearl:1986:FPS:9075.9076,Pearl:2009:CMR:1642718}, relational reinforcement learning \cite{Dzeroski2001} or Bayesian nonparametrics \cite{hjort10}, considering objects as a complex composition of entities together with some interactions is crucial in order to learn from small amounts of data. 

The previous notion of structure data can be seen as a special case of data defined on incomparable spaces. Informally in this situation, each sample has its own ``characteristic'' that may not be shared with the other samples. For instance when considering a dataset of multiple graphs (each graph being a data point) the structure of one graph is usually not shared among the other graphs. This notion, purposely broad, encompasses also the case where data come from heterogeneous sources. As a particular instance of this problem Heterogeneous Domain Adaptation \cite{yeh2014heterogeneous,pmlr-v33-zhou14,10.5555/2283516.2283652} aims to exploit knowledge from heterogeneous source domains to improve the learning performance in a target domain with, potentially, different feature spaces between the source and target domains. The MNIST/USPS \cite{lecun-mnisthandwrittendigit-2010,usps_dataset} case is a prime example of this situation: based on the knowledge of $28\times 28$ digit images (\ie\ $\R^{784}$ vectors) from MNIST how to build \textit{e.g.} a classifier that works well on $16 \times 16$ digit images (\ie\ $\R^{256}$ vectors) of USPS? Needless to say this problem often arises in all the fields of machine learning: it is common that the data are gathered from heterogeneous sources in practice and methods that build upon this diversity are often of high interest.   
\begin{figure}[t]
   \begin{center}
        \includegraphics[width=0.9\linewidth]{eulerian.pdf}
            \caption{\label{fig:euler} We associate to a dataset $(\xbf_i)_{i \in \integ{n}}$ a probability measure describing the dataset. In this example $\xbf_i \in \R^{2}$. {\bf (left)} Lagragian formulation (points clouds): $\sum_{i=1}^{n} a_i \delta_{\xbf_i}$ is a discrete probability measure where $(a_i)_{i\in \integ{n}}$ is a probability vector, \ie\ $a_i \geq 0$ and $\sum_{i=1}^{n}a_i=1$ and $\delta$ is the dirac measure $\delta_{\xbf_i}(\xbf)=1$ if $\xbf=\xbf_i$ else $0$. {\bf (right)} Eulerian formulation (histograms): $ \sum_{i=1}^{N} a_i \delta_{\hat{\xbf}_i}$ where $(\hat{\xbf}_i)_{i \in \integ{N}}$ is a regular grid on $\R^{2}$ and $(a_i)_{i \in \integ{N}}$ is a probability vector. }
     \end{center}
\end{figure}
 
\section{Motivating Optimal Transport} 

The central question that often arises in machine learning is: how to represent data and how to compare them? The framework of probability distributions provides an answer to this query by associating a probability measure to a collection of samples that forms a dataset $(\xbf_i)_{i \in \integ{n}}$. The \emph{Lagragian} representation of the dataset results in a discrete probability measure $\sum_{i=1}^{n} a_i \delta_{\xbf_i}$ in which one associates to each point $\xbf_i$ a dirac $\delta_{\xbf_i}(\xbf)=1$ if $\xbf=\xbf_i$ otherwise $0$ as well as a weight $a_i \geq 0$ such that $(a_i)_{i \in \integ{n}}$ is a probability vector which satisfies $\sum_{i=1}^{n} a_i=1$. When no information about the relative importance of the samples in the dataset is available the weights can be chosen as uniform so that $a_i=\frac{1}{n}$. Similarly a \textit{Eulerian} representation can be constructed \textit{via} the probability distribution $\sum_{i=1}^{N} a_i \delta_{\hat{\xbf}_i}$ in which $(\hat{\xbf}_i)_{i \in \integ{N}}$ is a regular grid on the space. This formulation produces an histogram of our data (see Figure \ref{fig:euler_res}). This point of view on the data advocates finding an appropriate way of comparing their representation as probability distributions and, as such, the question of finding adequate measures of ``how far'' are two probability distributions is at the core of many machine learning algorithms. Although various divergences exist such as $\phi$-divergences \cite{phidiv} or Maximum Mean Discrepancies (MMD) \cite{greton_2007}, the richness of optimal transport lie in its ability to incorporate the geometry of the underlying space in its formulation and to pay attention to the relations, the correspondences of the samples within their respective representations. To highlight in short the benefit of representing data through probability distributions coupled with OT we can cite \cite{rubner_earth_2000} for image retrieval or \cite{word_emb_doc_dist_2015} for natural language processing. At this point natural questions arise: is this framework applicable when the nature of the data is inherently structured or when the different data points lie in incomparable spaces? In this case how can we represent data as probability distributions? To what extent this representation is valuable? Is the Optimal transport framework still applicable, and, if not, how do we compare these probability distributions? The purpose of this thesis, \textit{inter alia}, is to give some answers to these questions.

\section{Outline of the Thesis}  

This thesis covers mostly all the author's work conducted and focus on a single line of research that is \emph{Optimal Transport on Incomparable Spaces}. Additional line of works \cite{vayer2020time} on time series on incomparable spaces, which is not based on optimal transport, is not included in this thesis but the interested reader can find the details in the bibliography. The rest of the thesis is hinged so that all chapters can be read separately in any order except for Chapter \ref{cha:ot_general} that provides all the mathematical background and tools used in the other chapters.  

\paragraph{Chapter \ref{cha:ot_general}} sets up the mathematical and numerical background of optimal transport. It presents the fundamental results of classical optimal transport theory and summarizes/illustrates its different formulations as well as some well-known solvers. The philosophy of this chapter is to provide a high-level overview of OT both in theory and practice. This chapter concludes with the Gromov-Wasserstein theory which is at the core of thesis. A reader familiar with the basic concepts of optimal transport may skip this part although it contains crucial concepts and notations that will be discussed throughout the thesis.

\paragraph{Chapter \ref{cha:fgw}} is dedicated to optimal transport for structured data, and especially in the context of graphs. It is based on the works of the articles \cite{vay_struc} and \cite{Vayer_2020} and gives some answers to the question of defining a mathematical framework for optimal transport in the case of structured data. A general framework for this setting is given, based on the Fused Gromov-Wasserstein distance that defines a OT distance between structured data such as undirected graphs, and applied on real world graph data applications. 

\paragraph{Chapter \ref{cha:gw_euclidean}} aims at bridging the gap between the classical optimal transport theory and the Gromov-Wasserstein theory. The chapter opens with the special case of 1D distributions which results in the Sliced Gromov-Wasserstein formulation based on the works in \cite{vay_sliced_gromov_2019}. A second more prospective part focuses on the Gromov-Wasserstein theory for Euclidean spaces and connects with the classical optimal transport theory by questioning the regularity of Gromov-Wasserstein optimal transport plans. 

\paragraph{Chapter \ref{cha:coot}} presents a new framework for comparing probability measures on incomparable spaces, namely the CO-Optimal transportation problem. Contrary to the Gromov-Wasserstein approach this approach simultaneously optimizes two transport maps between both samples and features of the data. This chapter provides a thorough theoretical analysis of this framework and, from an application side, this work tackles the problem of Heterogeneous Domain Adaptation and co-clustering/data summarization. This chapter is based on the article \cite{redko2020cooptimal}.

\chapter{Generality about optimal transport}
\minitoc
\label{cha:ot_general}

Optimal transport is a long-standing mathematical problem whose theory has matured over the years. A good gateway for this theory can be found in \cite{San15a}. A more mathematical oriented overview can be found in \cite{Villani} while the most complete document about numerical aspects of OT can be found in \cite{cot_peyre_cutu}. The objective of this chapter to present in short the main results of the ``classical'' OT theory both mathematically and numerically. We will discuss in the last part of this chapter the theory related to the Gromov-Wasserstein transportation problem for which we refer the reader to \cite{memoli_gw,Sturm2012} for its foundation.

\section{Linear Optimal Transport theory}

\paragraph{The Monge problem} The OT problem has been historically introduced by Gaspard Monge \cite{monge_81} and can be described as the following ``least effort problem'': given two probability distributions $\mu$ and $\nu$ how do we \emph{transfer} all the probability mass of $\mu$ onto $\nu$ so that the overall \emph{effort} of transferring this mass is minimized? Originally the idea was to move dirt (déblais) to one place to another (remblais) in the most efficient way.

For properly defining this problem we need to define the notions of \emph{transfer} and \emph{effort}: the former can be expressed through the notion of \emph{push-forward} and the latter through the notion of \emph{cost}. More precisely and given two Polish spaces $\Xcal,\Ycal$\footnote{A Polish space $\Xcal$ is a separable completely metrizable topological space.} and two probability measures $\mu \in \P(\Xcal),\nu \in \P(\Ycal)$ a cost is a function $c:\Xcal \times \Ycal \rightarrow \R_{+} \cup \{+\infty\}$ which values $c(x,y)$ aim at measuring how far is $x \in \Xcal$ from $y \in \Ycal$ and quantifies somehow the ``effort'' of moving $x$ forward to $y$. The \emph{push-forward}\footnote{$T$ is often called a map or Monge map in the OT literature} of a probability measure $\mu$ through a function $T: \Xcal \rightarrow \Ycal$ is defined as the probability measure $T\#\mu \in \P(\Ycal)$ which satisfies equivalently one of the two following conditions:
\begin{enumerate}[label=(\roman*)]
\label{def:pushfor}
\item  $T\#\mu(A)=\mu(T^{-1}(A))=\mu(\{x \in \Xcal \ | \ T(x)\in A \})$ for every measurable set $A$  
\item  $\int_{\Ycal} \phi(y) \dr(T\#\mu)(y)=\int_{\Xcal} \phi \circ T(x)\dr \mu(x)$ for every measurable function $\phi$.
\end{enumerate}

These conditions simply state that we transform, or \emph{push}, the probability measure $\mu$ thanks to $T$ so as to create another probability measure on $\Ycal$. When we consider a discrete probability distribution $\mu=\sum_{i=1}^{n} a_i \delta_{x_i}$ the push-forward measure $T\#\mu$ is simply defined by $T\#\mu=\sum_{i=1}^{n} a_i \delta_{T(x_i)}$.

As described at the beginning, one wants to move the \emph{source} distribution $\mu$ forward to the \emph{target} distribution $\nu$. This translates mathematically as finding a map $T$ which satisfies $T\#\mu=\nu$. When we consider the Euclidean setting and when the probability measures have densities $f,g$ with respect to the Lebesgue measure by the change of variable formula the push-forward condition writes:
\begin{equation}
\label{monge_ampere0}
g(T(\xbf))\det(DT(\xbf))=f(\xbf)
\end{equation}
where $DT$ stands for the Jacobian of $T$. Among all these possible push-forwards, OT aims at finding the map $T$ which minimizes the total cost of having moved $\mu$ forward to $\nu$ that is $\int_{\Xcal} c(x,T(x)) \dr \mu(x)$. Overall the problem of Monge \eqref{MongeOt} can be formulated as the following non-convex optimization problem:

\begin{equation}
\label{MongeOt}
\tag{MP}
\mongeot_{c}(\mu,\nu)=\inf_{T\#\mu=\nu} \int_{\Xcal} c(x,T(x)) \dr \mu(x).
\end{equation}

In general finding such optimal map $T$ of the \eqref{MongeOt} problem is quite difficult to solve since the solution may not be unique and may not even exists (see Figure \ref{fig:monge}). Even in the regular setting where $\mu,\nu$ have densities, equation \eqref{monge_ampere0} is highly non-linear in $T$ which is one of the major difficulty preventing from an easy analysis of the Monge Problem. As such the Monge problem remained an open question for many years and results about the existence and unicity of the optimal Monge map were limited to special cases until the works of Brenier \cite{brenier_1991} which implications will be detailed after. 

\begin{figure}[t]
   \begin{center}
        \includegraphics[width=0.55\linewidth]{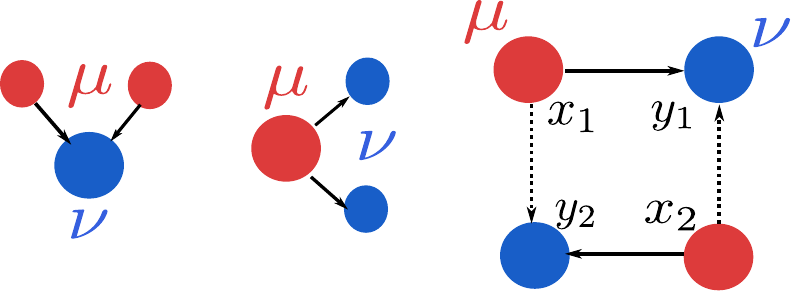}
    \caption{Push-forward between two discrete probability measures $\mu$ and $\nu$ in three \textit{scenarii}.  \textbf{(left)} $\mu$ is supported on $x_1$,$x_2$ with corresponding weights $\frac{1}{2},\frac{1}{2}$ and $\nu$ on $y_1$ with weight $1$. The only possible push-forward $T$ is $T(x_1)=y_1$ and $T(x_2)=y_1$. \textbf{(center)} In this situation there is no push-forward of $\mu$ onto $\nu$ because no function can satisfies $T(x_1)=y_1$ and $T(x_1)=y_2$ when $y_1 \neq y_2$. The problem of Monge admits no solution in this case.  \textbf{(right)} All points are equidistant form each other that is $c(x_i,y_j)=1$ for $i,j\in \integ{2}^{2}$. In this case the solution of problem \ref{MongeOt} is not unique it may associate $x_1$ whether with $y_1$ or $y_2$ with the same overall cost (same with $x_2$). \label{fig:monge} }
     \end{center}
\end{figure}

\paragraph{Kantorovitch formulation} Major breakthroughs in the OT theory were made possible thanks to Kantorovitch \cite{Kantorovich42} who proposes a relaxation of the \eqref{MongeOt} problem. The key idea is to consider a \emph{probabilistic mapping} instead of a deterministic map $T$ to push the source measure forward to the target one. In the Kantorovitch formulation it is allowed to \emph{split} the mass of the probability measures into pieces and transport them towards several targets points. This translates mathematically by replacing the push-forward of a measure by a probabilistic \emph{coupling}:

\begin{definition}[Couplings]
\label{def:coupling}
Let $\mu \in \P(\Xcal),\nu \in \P(\Ycal)$. A coupling $\pi$ of $\mu$ and $\nu$ is a probability distribution on $\Xcal \times \Ycal$ such that both marginals of $\pi$ are respectively $\mu$ and $\nu$. More precisely $\pi$ is part of the following set:
\begin{equation}
\couplingset(\mu,\nu)=\{\pi \in \P(\Xcal \times \Ycal) \ | A\subset \Xcal \ , B\subset \Ycal \ \text{measurable} \ \pi(A \times \Ycal)=\mu(A)\ ; \pi(\Xcal \times B)=\nu(B)\}
\end{equation}
\end{definition}

\begin{Remark}
Unlike the set of push-forwards of $\mu$ forward to $\nu$ the set of couplings of two probability measures is always non-empty as the product measure $\mu \otimes \nu$ is in $\couplingset(\mu,\nu)$.
\end{Remark}

An important illustration of the former definition is when $\mu$ and $\nu$ are discrete probability measures. This situation will be omnipresent thorough the manuscript we propose to detail the notations in the following example: 

\begin{figure*}[!b]
\begin{memo}[Lower semi-continuity]
\label{memo:lsc}
On a metric space $(\Xcal,d)$ a function $f: \Xcal \rightarrow \R\cup\{+\infty\}$ is said to be lower semi-continuous (l.s.c.) if for every sequence $x_n\rightarrow x$ we have $f(x)\leq \lim \inf f(x_n)$. Such functions have the following properties:
\begin{itemize}
\item If $(f_{k})_{k}$ is a sequence of l.s.c. functions on $\Xcal$ then $f=\sup_{k} f_{k}$ is l.s.c.
\item If $f$ is l.s.c. and bounded from below then there exists a sequence of continuous and bounded functions $(f_{k})_{k\in \mathbb{N}}$ converging increasingly to f. We can also suppose that each $f_{k}$ is $k$-Lipsichtz.
\end{itemize}
\end{memo}
\end{figure*}

\begin{Example}[The case of discrete probability measures]
 Let $\mu=\sum_{i=1}^{n} a_{i} \delta_{x_{i}}, \nu=\sum_{i=1}^{m} b_{j} \delta_{y_{j}}$ be discrete probability measures where $x_i \in \Xcal$, $y_j \in \Ycal$ and $\a=(a_i)_{i \in \integ{n}}\in \simplex_n, \b=(b_j)_{j \in \integ{m}} \in \simplex_m$ are probability vectors which belong to the following probability simplex:
\begin{equation}
\simplex_n\stackrel{def}{=}\{\a \in \R_{+}^{n}\ | \sum_{i=1}^{n} a_i=1\}.
\end{equation}
We will use interchangeably the term histogram or probability vector for an element $\a \in \simplex_n$. In this case a coupling $\GG$ is a matrix of the following set:
\begin{equation}
\begin{split}
\couplingset(\a,\b)&=\{\GG \in \R_{+}^{n\times m} \ | \GG \one_m=\a \ ; \GG^{T} \one_n=\b \}\\
&=\{\GG \in \R_{+}^{n\times m} \ | \forall (i,j) \in \integ{n}\times \integ{m},\ \sum_{j=1}^{m} \pi_{ij}=a_i \ ; \sum_{i=1}^{n}\pi_{ij}=b_j \}
\end{split}
\end{equation}
\end{Example}

Using the coupling instead of a deterministic map allows defining the OT problem for a very large class of probability measures under very mild assumptions. More precisely let $\Xcal,\Ycal$ be Polish spaces and $\mu \in \P(\Xcal),\nu \in \P(\Ycal)$. Given a cost $c:\Xcal \times \Ycal \rightarrow \R_{+} \cup \{+\infty\}$ lower semi-continuous (see Memo \ref{memo:lsc}), then Kantorovitch problem aims at finding:

\begin{equation}
\label{linear_ot}
\tag{KP}
\ot_{c}(\mu,\nu)=\inf_{\pi \in \couplingset(\mu,\nu)} \int_{\Xcal\times \Ycal} c(x,y) \dr\pi(x,y).
\end{equation}

The resulting cost, potentially infinite without further assumptions, corresponds to the minimal cost of moving $\mu$ forward to $\nu$ by splitting their masses and transporting forward the pieces according to the transport plan $\pi$. The good news about this formulation is that the infimum is always well defined providing that the cost is positive and lower semi-continuous (actually it suffices that $c$ is bounded from below see \cite[Theorem 1.7]{San15a}). The problem \ref{linear_ot} is defined regardless of the nature of the probability distributions: they can be both discrete or continuous (see Figure \ref{coulings}). The problem appears to be \emph{linear} in $\pi$ so that we will denote \eqref{linear_ot} as the \emph{linear transportation problem}.

Relying on the Kantorovitch formulation \eqref{linear_ot} appears to be very useful in order to find a solution of the Monge problem \eqref{MongeOt}. Indeed a push forward $T\#\mu=\nu$ induces a coupling $\pi=(id\times T)\#\mu$\footnote{$(id\times T)\#\mu$ is the measure $\mu\otimes T\#\mu$ or equivalently $\dr (id\times T)\#\mu(x,y)=\dr \mu(x) \dr (T\#\mu(y))$} then it is easy to verify that $\ot_{c}(\mu,\nu)\leq\mongeot_{c}(\mu,\nu)$. To find the converse inequality it suffices to find an optimal solution $\pi^{*}$ of \eqref{linear_ot} which is of the form $\pi^{*}=(id\times T^{*})\#\mu$ where $T^{*}\#\mu=\nu$. In this case we would have proven that both problems are equal and that $T^{*}$ is optimal for \eqref{MongeOt}. In other words if there is an optimal coupling supported on a \emph{deterministic} function then both \eqref{linear_ot} and \eqref{MongeOt} are equivalent. We will see in next sections cases where we can ensure that necessarily the optimal coupling is of this form.

\begin{figure}[t]
   \begin{center}
        \includegraphics[width=0.7\linewidth]{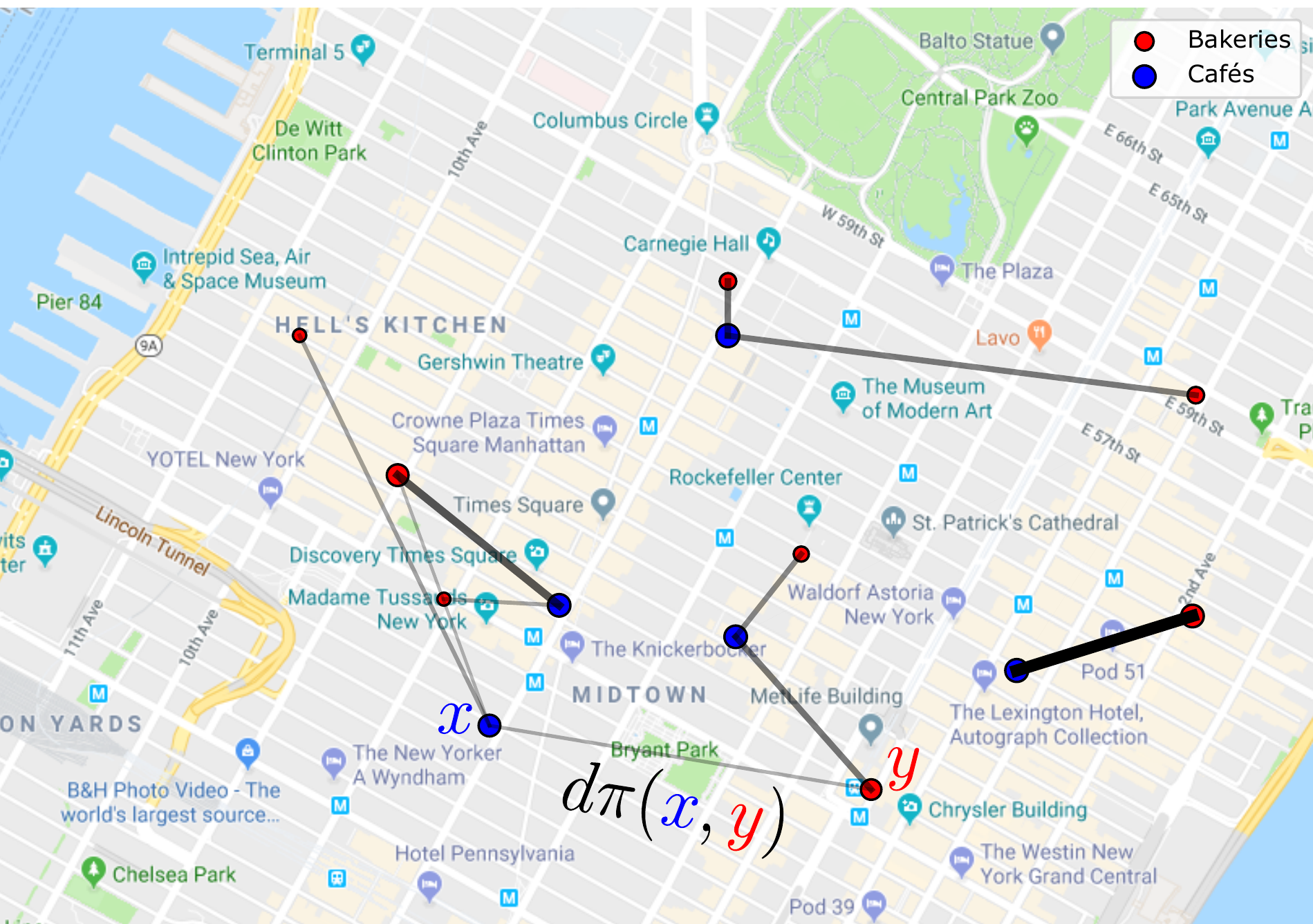}
    \caption{The bakery/cafés problem in Manhattan based on example \ref{ex:manhattan}. The bakeries (\textit{resp} cafés) are in blue (\textit{resp} red) and the dots' size denote the amount of bread disposable (\textit{resp} needed). Lines represent the amount of bread transferred $\dr\pi(x,y)$: the wider the larger. The optimal transport plan is depicted in the Figure and represents the cheapest transport plan so that all breads are transferred from bakeries to cafés. \label{fig:bakery}}
     \end{center}
\end{figure}

\begin{Example} 
\label{ex:manhattan}
The famous bakery analogy of Villani's book \cite{topics_ot} provides a simple illustration of the linear OT problem. Suppose that someone is in charge of the distribution of bread from bakeries to cafés in Manhattan. The bakeries are located at some points $y$ and the cafés at some points $x$ distant from each other by $c(x,y)$. At 8 a.m sharp all the bread from the bakeries has to be transferred to the cafés in order for the citizens of Manhattan to have a good day. The company in charge of the distribution wants to route the breads from bakeries to cafés the cheapest way possible. This problem can be recast into a linear OT problem. Considering two distributions $\mu$="all available breads in bakeries" and $\nu$="all the demands in bread of cafés", the company seeks for a transport plan $\pi$ such that $\dr\pi(x,y)$ is the amount of breads transferred from bakery $y$ to café $x$. The best transport plan minimizes the overall cost of moving all the breads from bakeries to cafés which is $\int c(x,y) \dr\pi(x,y)$ (see Figure \ref{fig:bakery})
\end{Example}

\begin{figure}[t]
   \begin{center}
        \includegraphics[width=0.7\linewidth]{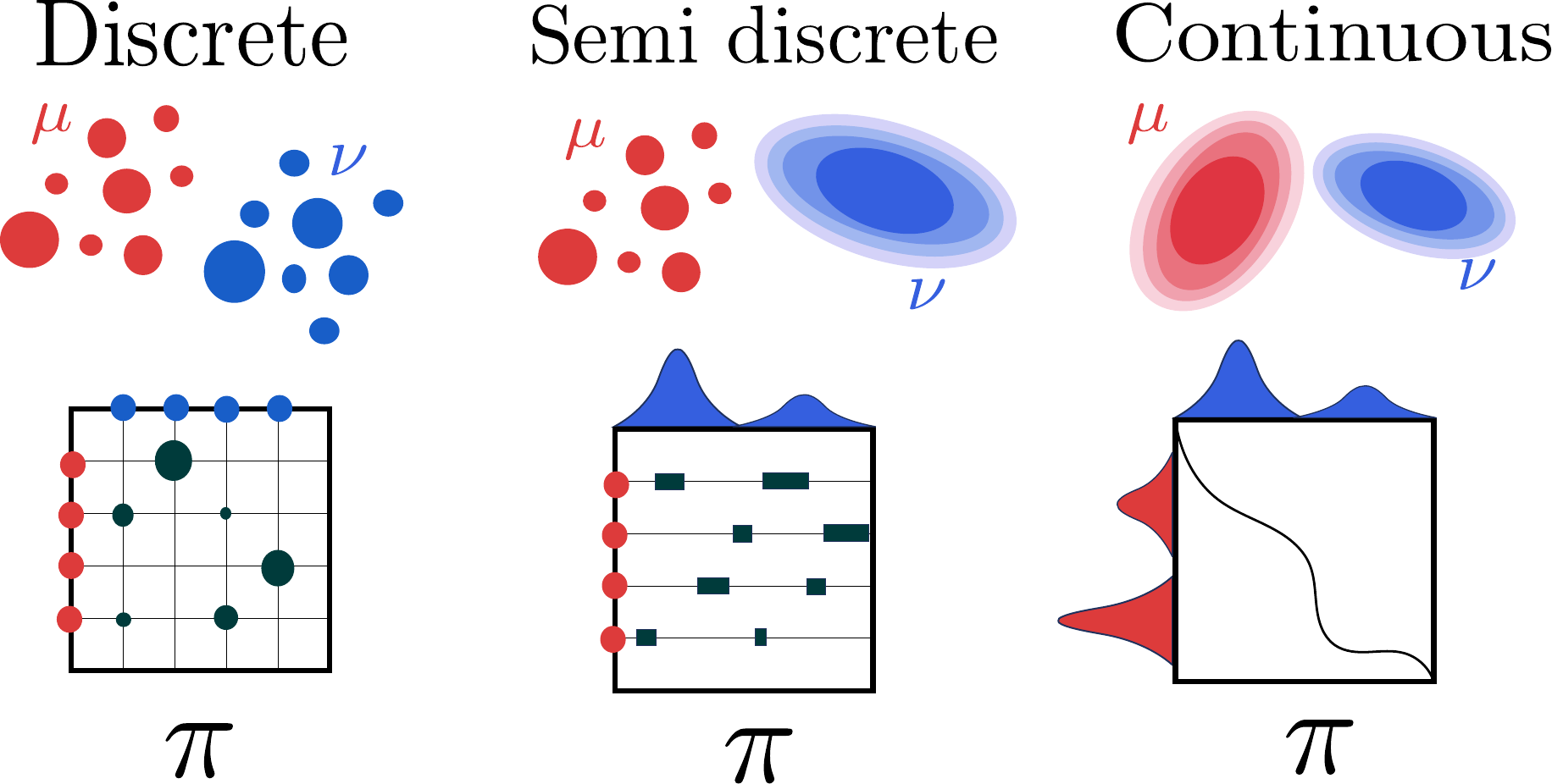}
    \caption{Different OT problems between {\bf (left)} two discrete probability distributions, the coupling $\pi$ is a matrix {\bf (center)} one continuous and one discrete probability distribution, this is called the \emph{semi-discrete} OT problem {\bf (right)} two continuous distributions. \label{coulings} \footnotesize{Inspired from \cite{cot_peyre_cutu}}}
     \end{center}
\end{figure}

\paragraph{Wasserstein distance} The most notable scenario in many OT applications is when $\Xcal=\Ycal=\Omega$ where $(\Omega,d)$ is a Polish space, \textit{e.g.} an Euclidean space. In this case there is a natural way of defining the cost $c$ since the space $\Omega$ is already endowed with a notion of distance between the points. In this situation we can define the so-called \emph{p-Wasserstein distance} for $p\in [1,+\infty[$ as $\wass_{p}(\mu,\nu)=(\ot_{d^{p}}(\mu,\nu))^\frac{1}{p}$ or precisely:
\begin{equation}
\wass_{p}(\mu,\nu)=\left(\inf_{\pi \in \couplingset(\mu,\nu)} \int_{\Omega\times \Omega} d^{p}(x,y) \dr\pi(x,y)\right)^{\frac{1}{p}}
\end{equation}

The name is not misleading: this function satisfies all the axioms of distance on the space of probability distributions with bounded $p$-moments as stated in the next theorem (see \cite[Definition 6.4]{Villani}):

\begin{theo}[The Wasserstein distance is a distance]
Let $(\Omega,d)$ be a Polish space, $p\in [1,+\infty[$ and: 
\begin{equation}
\P_{p}(\Omega)\stackrel{def}{=}\{\mu \in \P(\Omega) \ | \int_{\Omega} d(x_0,x)^{p} \dr \mu(x)<+\infty\}
\end{equation}
with $x_0\in \Omega$ arbitrary.
Let $\mu,\nu \in \P_{p}(\Omega)$. Then $\wass_{p}(\mu,\nu)<+\infty$. Moreover:
\begin{enumerate}[label=(\roman*)]
\item $\wass_{p}(\mu,\nu)=\wass_{p}(\nu,\mu)$ (symmetry)
\item $\wass_{p}(\mu,\nu)=0 \iff \mu=\nu$ (identity of indiscernibles)
\item Let $\zeta \in \P_{p}(\Omega)^{2}$ then $\wass_{p}(\mu,\nu) \leq \wass_{p}(\mu,\zeta)+ \wass_{p}(\zeta,\nu)$ (triangle inequality)
\end{enumerate}
\end{theo}

\begin{Example} One direct example of Wasserstein distance is between two diracs supported on $x$,$y$. In this case $\wass_p(\delta_x,\delta_y)=d(x,y)$ which is quite the intuitive behavior, that is the more the dirac are located far from each others the larger their Wasserstein distance is. 
\end{Example}

The distance property renders $\wass_p$ a powerful tool for comparing probability measures. Another valuable feature of the former distance is that it gives a characterization of the weak convergence of probability measure. Informally a sequence of probability measures gets as close as possible to a probability measure $\mu$ if the Wasserstein distance tends to zero. The convergence is based on the following definition:

\begin{definition}[Weak-convergence]
Let $(\mu_n)_{n\in \mathbb{N}}$ be a sequence of probability measures on $\Xcal$ a Polish space. We say that $(\mu_n)_{n\in \mathbb{N}}$ converges weakly to $\mu$ in $\Xcal$ if for all continuous and bounded functions $f: \Xcal \rightarrow \R$:
\begin{equation}
\int_{\Xcal} f \dr \mu_n \rightarrow \int_{\Xcal} f \dr \mu
\end{equation}
\end{definition}  

The Wasserstein distance \emph{metrizes the weak convergence} of probability measures, in other words $(\mu_n)_{n\in \mathbb{N}}$ converges weakly to $\mu$ if and only if $\wass_{p}(\mu_n,\mu)\rightarrow 0$ \cite[Theorem 6.9]{topics_ot}. Note that other distances can be proposed to metrize the space of probability measures \textit{e.g.} the L\'evy–Prokhorov distance, but the richness of $\wass$ lies in its ability to incorporate a lot of the geometry of the underlying space through the distance $d$. Consequently Wasserstein spaces $(\P_p(\Omega),\wass_p)$ are very large and many metric spaces can be embed into Wasserstein spaces with low distortion \cite{Bourgain1986TheMI,Andoni,frogner2019learning}.

\subsection{The main theorem of the linear Optimal Transport theory}

A fundamental result of the linear OT theory is the cyclical monotonicity property of its optimal transport plans. Basically it illustrates that an optimal transport plan can not be improved \emph{locally} and more importantly that is also sufficient for being a global optimal transport plan. Consequently it characterizes the set of optimal couplings using the notion of $c$-concave functions which appears to be very useful for defining the notions of duality and for solving the Monge problem \eqref{MongeOt} by relying on the Kantorovitch relaxation \eqref{linear_ot}. As such the cyclical monotonicity is maybe the main ingredient of linear OT. This section aims at presenting in short both this result and its consequences. 

\begin{definition}[Cyclically Monotone Set]
Let $c: \Xcal \times \Ycal \rightarrow ]-\infty, +\infty]$ be a real valued function on arbitrary sets $\Xcal,\Ycal$. A set $\Gamma \subset \Xcal \times \Ycal$ is said to be cyclically monotone if for $(x_{i},y_{i})_{i=1}^{N} \in \Gamma^{N}$ and $\sigma$ a permutation of $[1,..,N]$
\begin{equation}
\sum_{i=1}^{N} c(x_{i},y_{i}) \leq \sum_{i=1}^{N} c(x_{i},y_{\sigma(i)})
\end{equation}
We will call $c$-CM such sets.
\end{definition}

$c$-CM sets is an important notion in OT theory since it characterizes optimal transport plans for well behaved costs. We consider the following definition: 
\begin{definition}[Support]
\label{def:support}
Let $(\Xcal,d)$ be a Polish space and $\mu \in \P(\Xcal)$. The support of $\mu$ is defined as the smallest closed set $F$ such that $\mu(F)=1$ or equivalently:
\begin{equation}
\supp(\mu)=\{x \in \Xcal \ | \forall \epsilon>0, \mu(B(x,\epsilon))>0\}
\end{equation}
\end{definition}  
Informally the support of a distribution is where the distribution ``lives'', \ie\ where it is not zero. In the discrete case $\GG$ is a matrix and the support is found in the indices $(i,j)$ such that $\pi_{ij}>0$. The following theorem states that the support of an optimal coupling is actually a $c$-CM set:

\begin{theo}[Theorem 1.38 in \cite{San15a}]
\label{theo:piisccm}
Let $c: \Xcal \times \Ycal \rightarrow [0,+\infty[$ continuous and $\mu \in \P(\Xcal),\nu \in \P(\Ycal)$ with $\ot_{c}(\mu,\nu)<+\infty$. If a coupling $\pi \in \couplingset(\mu,\nu)$ is optimal for \eqref{linear_ot} then $\supp(\pi)$ is a $c$-CM set.
\end{theo}

Interestingly enough the $c$-CM sets are characterized by specific functions based on the notion of $c$-transforms. This property gives another way of computing optimal transport plans based on deterministic functions.

\begin{definition}[$c$-transforms]
Let $\Xcal,\Ycal$ be Polish spaces and $\psi: \Xcal \rightarrow \overline{\R}$ be a function. We define its $c$-\emph{transform} as the function $\psi^{c}: \Ycal \rightarrow \overline{\R}$:
\begin{equation}
\psi^{c}(y)=\inf_{x \in \Xcal} c(x,y)-\psi(x)
\end{equation}
and the $\bar{c}$-\emph{transform} of a function $\phi: \Ycal \rightarrow \overline{\R}$ as the function $\phi^{\bar{c}}: \Xcal \rightarrow \overline{\R}$:
\begin{equation}
\phi^{\bar{c}}(x)=\inf_{y \in \Ycal} c(x,y)-\phi(y)
\end{equation}
Fonctions that can be written as $\psi^{c}$ or $\phi^{\bar{c}}$ are called respectively $c$-\emph{concave} or $\bar{c}$-\emph{concave} functions.
\end{definition}

\begin{Remark}
The $c$-transform is a a generalization of the Legendre transform that is well-known in convex analysis \cite{rockafellar-1970a}. More precisely for function $u : \R^{d} \rightarrow \R$ its Legendre transform is defined as $u^{*}(\ybf)=\underset{\xbf \in \R^{d}}{\sup} \langle \ybf, \xbf \rangle - u(\xbf)$ (see Memo \ref{memo:memo_convex}). The $c$-transform corresponds to this notion by considering $c(\xbf,\ybf)=\langle \xbf,\ybf \rangle$ (up to the change of sign). Another special case deserves attention that is when $c(\xbf,\ybf)=\frac{1}{2}\|\xbf-\ybf\|^{2}$. Consider $\psi: \R^{d} \rightarrow \R \cup \{+\infty\}$, then the function $\psi$ is $c$-concave if and only if the function $u: \xbf\rightarrow \frac{1}{2} \|\xbf\|^{2}-\psi(\xbf)$ is convex and lower semi-continuous and the Legendre transform of $u$ is the function $\xbf \rightarrow \frac{1}{2} \|\xbf\|^{2}-\psi^{c}(\xbf)$  (see \cite[Proposition 1.21]{San15a}). 
\end{Remark}

\begin{figure*}[!b]
\begin{memo}[Convex analysis]
\label{memo:memo_convex}
For any function $u : \R^{d} \rightarrow \R$ its convex conjugate or Legendre transform is defined by $u^{*}(\ybf)=\underset{\xbf \in \R^{d}}{\sup} \langle \ybf, \xbf \rangle - u(\xbf)$. The subdiffenrential of $u$, denoted as $\partial u$, is defined for $\xbf \in \R^{d}$ as $\partial u(\xbf)=\{\ybf\in \R^{d}  \ | \ \forall \zbf \in \R^{d} \ u(\xbf)-u(\zbf) \geq \langle \ybf,\zbf-\xbf\rangle\}$ which reduces to $\partial u(\xbf)=\{\nabla u(\xbf)\}$ when $u$ is differentiable at $\xbf$. When $u$ is convex differentiable the convex conjugate has the following important properties (see \cite[Theorem 23.5]{rockafellar-1970a}): 
\begin{enumerate}[label=\roman*]
\item $u(\xbf)+u^{*}(\ybf)\geq \langle \xbf,\ybf \rangle $ (Fenchel-Young inequality)
\item $u(\xbf)+u^{*}(\ybf) = \langle \xbf,\ybf \rangle \iff \ybf \in \partial u(\xbf)$
\item In particular $u(\xbf)+u^{*}(\nabla u (\xbf))=\langle \xbf,\nabla u (\xbf)\rangle$
\end{enumerate}
\end{memo}
\end{figure*}

When $\Xcal=\Ycal$ and $c$ is symetric both notions are equivalent and in this case we will drop this distinction. One important property about $c$-transform is that it satisfies:
\begin{equation}
\label{eq:c_transf}
\forall x,y \in \Xcal \times \Ycal, \psi(x)+\psi^{c}(y) \leq c(x,y)
\end{equation}
The case of equality of \eqref{eq:c_transf} is attained on special subsets of $\Xcal \times \Ycal$ that are precisely the $c$-CM sets as stated in the next theorem: 

\begin{theo}[Theorem 1.37 in \cite{San15a}]
\label{theo:ccm}
If $\Gamma \neq \emptyset$ is a $c$-CM set in $\Xcal \times \Ycal$ and $c:\Xcal \times \Ycal \rightarrow \R$, then there exists a $c$-concave function $\psi: \Xcal \rightarrow \R \cup\{-\infty\}$ such that:
\begin{equation}
\Gamma \subset\left\{(x, y) \in X \times Y: \psi(x)+\psi^{c}(y)=c(x, y)\right\}
\end{equation}
\end{theo}

To summarize, the support of optimal couplings are necessarily $c$-CM sets and these $c$-CM sets are also characterized by functions using the notion of $c$-transform. The fundamental theorem of optimal transport states that all these results are in fact equivalent: 

\begin{theo}[Fundamental theorem of linear OT]
\label{fundamental_theorem}
Let $\Xcal,\Ycal$ be Polish spaces, $\mu \in \P(\Xcal),\nu \in \P(\Ycal)$ and $c:\Xcal \times \Ycal \rightarrow [0,+\infty[$ lower-semi continuous such that $\ot_{c}(\mu,\nu)<\infty$. Let $\pi\in \couplingset(\mu,\nu)$ then the following conditions are equivalent:
\begin{enumerate}[label=(\roman*)]
\item $\pi$ is optimal for \eqref{linear_ot}
\item The support of $\pi$ is $c$-cyclically monotone
\item There exists a measurable $c$-concave function $\psi$ such that $\psi(x)+\psi^{c}(y)=c(x,y)$ $\pi$ \textit{a.e}.
\end{enumerate}

\end{theo}

\begin{proof}
We will give a sketch of proof, for completeness the reader can refer to Theorem 5.10 in \cite{Villani}. Theorems \ref{theo:piisccm} and \ref{theo:ccm} already proved that $(i) \implies (ii) \implies (iii)$ in the case where $c$ is continuous. To pass from continuity to lower semi-continuity we can consider a sequence $(c_k)_k$ of costs that converges increasingly to $c$ and observe that \cite[Lemma 1.41]{San15a}:
\begin{equation*}
\lim_{k \rightarrow + \infty} \inf_{\pi \in \couplingset(\mu,\nu)} \int c_k(x,y) \dr \pi(x,y)=\inf_{\pi \in \couplingset(\mu,\nu)} \int c(x,y) \dr \pi(x,y)
\end{equation*}
and, using some subtleties, this can prove $(i)\implies (ii) \implies (iii)$ when $c$ is lower semi-continuous. For the converse we can easily prove that $(iii) \implies (i)$. By hypothesis $\int c(x,y)\dr \pi(x,y)= \int \psi(x)\dr \mu(x)+ \int \psi^{c}(y)\dr \nu(y)$. However for any other coupling $\pi'$ we have $\int c(x,y)\dr \pi'(x,y)\geq \int \psi(x)\dr \mu(x)+ \int \psi^{c}(y)\dr \nu(y)$ by relation \eqref{eq:c_transf} and so $\int \int c(x,y)\dr \pi'(x,y) \geq \int c(x,y)\dr \pi(x,y)$ so that $\pi$ is optimal. Technical details are hidden here for proving the measurability and integrability of $\psi,\psi^{c}$. 
\end{proof}

\paragraph{The main theorem of OT for duality} A first implication of the fundamental theorem is related to a duality principle which is a widely used property in linear programming (see Section \ref{sec:numerical_tour} for more details). This property can be extended in full generality in the context of OT as stated in the next theorem:

\begin{theo}[Duality theorem]
Let $\Xcal,\Ycal$ be Polish spaces, $\mu \in \P(\Xcal), \nu \in \P(\Ycal)$ and $c:\Xcal \times \Ycal \rightarrow [0,+\infty]$ be lower semi-continuous (\textit{l.s.c.}) such that $\ot_{c}(\mu,\nu)<+\infty$ then \emph{strong duality holds}. More precisely the \emph{dual problem}:
\begin{equation}
\tag{DKP}
\label{dual_problem_general_ot}
\sup_{\begin{smallmatrix}\phi,\psi \in C_b(\Xcal)\times C_b(\Ycal)\ \\ \forall (x,y)\in \Xcal \times \Ycal, \ \phi(x)+\psi(y)\leq c(x,y)\end{smallmatrix}} \int_{\Xcal} \phi(x)\dr\mu(x)+\int_{\Ycal} \psi(y)\dr\nu(y)
\end{equation}
leads to the same optimum as the \eqref{linear_ot} problem. Equivalently:
\begin{equation}
\label{strong_duality}
\ot_{c}(\mu,\nu)=\sup_{\phi,\psi \in \Phi_{c}} \int_{\Xcal} \phi(x)\dr\mu(x)+\int_{\Ycal} \psi(y)\dr\nu(y)
\end{equation}
where $\Phi_{c}$ is the set of continuous bounded functions which verifies:
\begin{equation}
\label{eq:dualphipsi}
\forall x,y \in \Xcal \times \Ycal, \phi(x)+\psi(y)\leq c(x,y)
\end{equation}
This result holds when $\Phi_{c}$ is replaced by $\Phi_{c}(\mu,\nu)$ the set of integrable functions which satisfies \eqref{eq:dualphipsi}.
\end{theo}

\begin{proof}
For completeness we will give here an idea of the proof, the interested reader can refer to Theorem 5.10 in \cite{Villani} for more details. If $\phi,\psi \in \Phi_{c}(\mu,\nu)$ and $\pi \in \couplingset(\mu,\nu)$ then by hypothesis:
\begin{equation}
\int \phi(x) \dr\mu(x)+\int \psi(y) \dr\nu(y)=\int \phi(x)+\psi(y) \dr \pi(x,y) \leq \int c(x, y) \dr \pi(x,y)
\end{equation}
Which implies that $\sup_{\phi,\psi \in \Phi_{c}(\mu,\nu)} \int_{\Xcal} \phi(x)\dr\mu(x)+\int_{\Ycal} \psi(y)\dr\nu(y)\leq \ot_{c}(\mu,\nu)$. To show the converse inequality we will use the cyclical monotonicity properties of optimal transport plans. Let $\pi^{*}$ be an optimal coupling for the \eqref{linear_ot} problem. Using Theorem \ref{fundamental_theorem} we know that there exists a $c$-concave function $\psi$ such that $\psi(x)+\psi^{c}(y)= c(x,y)$ for all $x,y \in \Xcal \times \Ycal$. In this way:
\begin{equation}
\begin{split}
\int c(x, y) \dr \pi^{*}(x,y)&= \int \psi(x) \dr\mu(x)+ \int \psi^{c}(y) \dr\nu(y) \\
&\leq \sup_{\phi,\psi \in \Phi_{c}} \int_{\Xcal} \phi(x)\dr\mu(x)+\int_{\Ycal} \psi(y)\dr\nu(y)
\end{split}
\end{equation}
Last inequality stems from the property \eqref{eq:c_transf} of $c$-transform since $(\psi,\psi^{c})\in \Phi_{c}$. If $c$ is continuous and bounded then so are $\psi,\psi^{c}$ so last inequality is valid. If $c$ is only \textit{l.s.c.} then we can show that there is a sequence $(c_k)_k$ bounded and $n$-Lipschitz such that $c=\sup_k c_k$. A limit argument suffices to conclude for this case.
\end{proof}

The functions $\phi,\psi$ are usually called \emph{Kantorovitch potentials} and play an important role in OT problems. Given two admissible potentials $\phi,\psi$ \ie\ that satisfy $\phi(x)+\psi(y)\leq c(x,y)$ we can always cook up a pair of ``better'' potentials using the $c$-transform. Indeed, due to \eqref{eq:c_transf}, one can check that the pairs $(\phi,\phi^{c})$, $(\psi,\psi^{\bar{c}})$ are also admissible potentials and improve the objective function. It turns out that after one iteration of this procedure we can not improve the potentials anymore. Based on this remark we can also write the duality as the maximization over one \emph{single} potential which is the \emph{semi-dual} formulation:
\begin{equation}
\label{eq:semi_dual_formulation}
\sup_{\phi \ c-\text{concave}} \int_{\Xcal} \phi(x)\dr\mu(x)+\int_{\Ycal} \phi^{c}(y)\dr\nu(y)
\end{equation}

\begin{Example}
When $c=d$ is a distance on some space $\Omega$ then there is a tight connection between $c$-transform and 1-Lipschitz functions. Indeed suppose that $\phi$ is a 1-Lipschitz function, then for $x,y\in \Omega^{2}$, $\phi(y)\leq \phi(x)+d(x,y)$ so that $\phi(y)=\inf_{x\in \Omega} \phi(x)+d(x,y)=(-\phi)^{d}(y)$ which proves that the $d$-transform of $-\phi$ is $\phi$. The converse is also true so that the semi-dual formulation can be written:
\begin{equation}
\sup_{\phi \in Lip_1(\Omega)} \int_{\Omega} \phi(x)\dr\mu(x)-\int_{\Omega} \phi(y)\dr\nu(y)
\end{equation}
This formulation is very useful in practice in the context of generative modeling \cite{arjovsky17a} (see Section \ref{sec:other_formu}).
\end{Example}

\paragraph{The main theorem of OT for the Monge problem} From a theoretical perspective one fundamental question that arises is the \emph{regularity} of these potentials and, with some assumptions, we can use them to solve the Monge problem \eqref{MongeOt} based on the Kantorovitch relaxation \eqref{linear_ot}. We have the following result: 

\begin{prop}[Proposition 1.15 in \cite{San15a}]
\label{kanto_potential}
Let $\Xcal=\Ycal=\Omega \subset \R^{d}$ and $c\in C^{1}$. In the following $\phi$ denotes a Kantorovitch potential. If $(\xbf_{0},\ybf_{0})\in \supp(\pi^{*})$ then: 
\begin{equation}
\label{nabla_eq}
\nabla \phi(\xbf_{0})=\nabla_{\xbf} c(\xbf_{0},\ybf_{0})
\end{equation}
provided that $\phi$ is differentiable at $\xbf_0$.

\end{prop}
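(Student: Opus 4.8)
The plan is to exploit the complementary-slackness structure coming from the duality theorem together with a first-order optimality argument. Recall that for an optimal coupling $\pi^{*}$ there is an associated pair of Kantorovitch potentials $(\phi,\psi)$ which are admissible, meaning
\begin{equation}
\label{eq:feas}
\phi(x)+\psi(y)\leq c(x,y)\quad \text{for all } (x,y)\in \Omega\times\Omega,
\end{equation}
and which satisfy equality $\pi^{*}$-almost everywhere by Theorem \ref{fundamental_theorem}. Since $(\xbf_{0},\ybf_{0})\in\supp(\pi^{*})$ lies in the closed set where this equality is attained, I would first record the two facts
\begin{equation}
\label{eq:eq}
\phi(\xbf_{0})+\psi(\ybf_{0})= c(\xbf_{0},\ybf_{0}),
\end{equation}
while \eqref{eq:feas} continues to hold everywhere with the same $\ybf_{0}$ fixed.

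The core of the argument is to freeze the second variable at $\ybf_{0}$ and introduce the auxiliary function $g(x)=c(x,\ybf_{0})-\phi(x)$ on $\Omega$. Combining \eqref{eq:feas} and \eqref{eq:eq}, I would observe that $g(x)\geq \psi(\ybf_{0})$ for every $x\in\Omega$, with equality at $x=\xbf_{0}$; in other words $g$ attains its global minimum over $\Omega$ at the point $\xbf_{0}$. This is the key reduction: the pointwise inequality \eqref{eq:feas} turns into a variational characterization of $\xbf_{0}$ as a minimizer.

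It then remains to apply the first-order optimality condition. By hypothesis $c\in C^{1}$, so $x\mapsto c(x,\ybf_{0})$ is differentiable with gradient $\nabla_{\xbf}c(\xbf_{0},\ybf_{0})$, and $\phi$ is differentiable at $\xbf_{0}$ by assumption; hence $g$ is differentiable at $\xbf_{0}$ with $\nabla g(\xbf_{0})=\nabla_{\xbf}c(\xbf_{0},\ybf_{0})-\nabla\phi(\xbf_{0})$. Since $\xbf_{0}$ is an interior minimizer of $g$, the gradient must vanish there, giving $\nabla\phi(\xbf_{0})=\nabla_{\xbf}c(\xbf_{0},\ybf_{0})$, which is exactly \eqref{nabla_eq}.

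The main obstacle, and the point where care is genuinely needed, is the interiority of $\xbf_{0}$: the vanishing-gradient conclusion is only valid if $\xbf_{0}$ lies in the interior of $\Omega$, so I would either assume $\Omega$ is open (as is standard in this setting) or restrict the statement to interior support points, and note that on the boundary one only obtains a one-sided variational inequality rather than an equality. A secondary subtlety is that differentiability of $\phi$ at $\xbf_{0}$ is assumed rather than derived; in the full theory one shows that $c$-concave potentials are locally Lipschitz and hence differentiable Lebesgue-almost everywhere, so that \eqref{nabla_eq} holds $\mu$-a.e. when $\mu$ is absolutely continuous, but here this regularity is taken as a hypothesis and the proof sketched above is purely pointwise.
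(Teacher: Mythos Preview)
Your proof is correct and is precisely the standard argument (indeed, the one given in Santambrogio's Proposition 1.15, which the paper simply cites without reproducing a proof). The paper does not supply its own proof of this statement, so there is nothing to compare against beyond noting that your reduction to a first-order optimality condition for $g(x)=c(x,\ybf_0)-\phi(x)$ is exactly the intended route. Your remarks on the interiority of $\xbf_0$ and on the a.e.\ differentiability of $c$-concave potentials are also the right caveats to flag.
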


This proposition suggests the following strategy in order to find an optimal coupling: (1) Ensure that $\phi$ is differentiable $\mu$ \textit{a.e}. This can be guaranteed when $\mu$ is absolutely continuous with respect to the Lebesgue measure and when $\phi$ is regular enough, such as Lipschitz. (2) Deduce from previous proposition that $\pi$ is characterized by a deterministic function that is the map associating $y_0$ to each $x_0$. The idea here is to ``inverse'' $\nabla c$ in \eqref{nabla_eq} and deduce from $(x_{0},y_{0})$ that $y_{0}$ is uniquely determined from $x_{0}$. This can be done using some regularity assumptions on $c$ and the spaces $\Xcal,\Ycal$. When conditions (1) and (2) are satisfied we can deduce that the  optimal coupling is unique since it was constructed using $\phi$ and $c$ only.

The step (2) can be verified \textit{e.g.} when we have the following condition:

\begin{definition}[Twist condition]
\label{twist_condition}
For $\Omega \subset \R^{d}$ we say that $c: \Omega \times \Omega \rightarrow \R$ satisfies the Twist condition whenever $c$ is differentiable \textit{w.r.t.} $\xbf$ at every point, and the map $\ybf \rightarrow \nabla_{\xbf} c\left(\xbf_{0}, \ybf\right)$ is injective for every $\xbf_{0}$. 
\end{definition}

When working on Euclidean domains and when the cost $c\in C^{2}$ this condition corresponds to $det \left(\frac{\partial^{2} c}{\partial \ybf_{i} \partial \xbf_{j}}\right) \neq 0$. The squared Euclidean cost is an important example of costs which satisfies the Twist condition and leads to the celebrated Brenier theorem \cite{brenier_1991}:

\begin{theo}[Brenier]
Let $\Omega=\R^{p}$, $c(\xbf,\ybf)=\frac{1}{2}\|\xbf-\ybf\|_{2}^{2}$ and $\mu \in \P(\R^{p})$ absolutely continuous with respect to the Lebesgue measure and $\nu \in \P(\R^{p})$ with $\int \|\xbf\|_{2}^{2}\dr \mu(\xbf)<+\infty$,$\int \|\ybf\|_{2}^{2}\dr \nu(\ybf)<+\infty$.

The optimal transport plan $\pi^{*}$ of \eqref{linear_ot} is unique and supported on the gradient of a convex function. More precisely it can be written as $\pi^{*}=(id\times T)\#\mu$ where $T=\nabla \phi$ and $\phi: \R^{d} \rightarrow \R\cup \{+\infty\}$ is convex and finite almost everywhere. 

Moreover $T$ is the unique solution of \eqref{MongeOt}. If $T'$ is another optimal solution then $T=T'$ $\mu$ \textit{a.e}. 
\end{theo}

This result can be generalized to costs $c(\xbf,\ybf)=h(\ybf-\xbf)$ with $h$ strictly convex and in this case $T$ can be written as $T(\xbf)=\xbf-(\nabla h)^{-1}\nabla \phi(\xbf)$ where $\phi$ is a $c$-concave function (see \textit{e.g.} \cite{gangbo1996}). 

The regularity of potential functions $\phi,\psi$ and its consequences for Optimal Transport problems is a long-standing line of research. Other more general hypothesis on the cost function $c$ than the Twist condition can be built thanks to Ma, Trudinger and Wang who found a key assumption on the cost $c$ that requires fourth-order condition on the cost functions \cite{mtw_2005}. The resulting MTW conditions turned out to be sufficient to prove the regularity of the Kantorovitch potentials. We refer the reader to \cite{figalli_2010} for a survey on this topic.

\subsection{Special cases: 1D transportation and transport between Gaussians}
\label{sec:sepcial_cases_ot}
Two important special cases will be considered in this manuscript, namely the cases where $\mu$ and $\nu$ are probability distributions on $\R$ or when they are Gaussians distributions. Theses cases are well-known in linear OT for having \emph{closed-form solutions} which are given in the next results (respectively \cite[Theorem 2.9]{San15a} and \cite[Remark 2.30]{cot_peyre_cutu}).

\begin{theo}[Closed-form expression on the real-line]
Asssume that $\Omega=\R$, $\mu,\nu \in \P(\R)$. Let $F_{\mu}$ be the cumulative distribution function:
\begin{equation}
\forall t \in \R, \ F_{\mu}(t)=\mu(]-\infty,t])
\end{equation}
and $F_{\mu}^{-1}$ its pseudo inverse, namely:
\begin{equation}
\forall x \in [0,1], \ F_{\mu}^{-1}(x)=\inf\{t\in \R \ | F_{\mu}(t)\geq x\} 
\end{equation}

If $c(x,y)=h(y-x)$ where $h$ is stricly convex then \eqref{linear_ot} has a unique solution given by $\pi^{*}=(F_{\mu}^{-1} \times F_{\nu}^{-1})\#\lebsm_{[0,1]}$ where $\lebsm_{[0,1]}$ is the Lebesgue measure restricted to $[0,1]$. 

Moreover if $\mu$ is atomless $\pi^{*}$ is supported on $T_{mon}(x)=F_{\nu}^{-1}(F_{\mu}(x))$, \ie\ $\pi^{*}=(id\times T_{mon})\#\mu$. If $h$ is only convex then $\pi^{*}$ is still optimal but uniqueness can not be guaranteed. 

\end{theo}

This theorem states that it suffices to sort the support of the distributions in order to recover the optimal coupling (see Figure \ref{sorting_fi}). In the special case where $\mu=\frac{1}{n} \sum_{i=1}^{n} \delta_{x_i}, \nu=\frac{1}{n} \sum_{i=1}^{n} \delta_{y_i}$ this corresponds to sort $x_1\leq \cdots \leq x_n$, $y_1\leq \cdots \leq y_n$ and to associate $x_1$ with $y_1$, $x_2$ with $y_2$ and so on. In the case $\mu=\sum_{i=1}^{n} a_i \delta_{x_i}, \nu= \sum_{i=1}^{m} b_j \delta_{y_j}$ the previous theorem states that, after sorting the points, the optimal mapping is obtained by putting as much mass as possible from $x_1$ to $y_1$ and to add the remaining mass to $y_2$. This procedure is repeated until there is no more mass left. This corresponds to a \emph{monotone rearrangement} $\pi_{ij},\pi_{i'j'}>0$ then $x_i\leq x_{i'}$ implies that $y_j\leq y_{j'}$. Overall the Wasserstein distance in 1D can be solved using simple sorts. This result is the main ingredient of the sliced-Wasserstein distance (see Section \ref{sec:other_formu}). 

\begin{figure}[t]
   \begin{center}
        \includegraphics[width=0.9\linewidth]{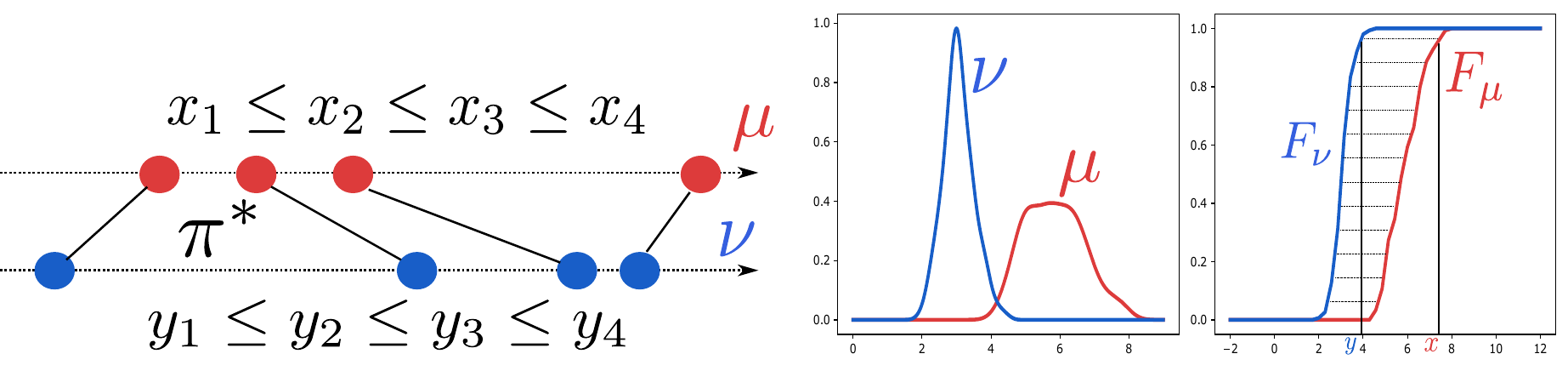}
    \caption{OT for 1D probability measures can be computed using simple sorts. {\bf (left)} Optimal coupling between two discrete probability measures with uniform weights. When the points are sorted it associates first point of the source with the first point of the target and so on. {\bf (right)} Generic case: the optimal coupling associates horizontally the points \textit{w.r.t.} the cumulative distributions of the probability measures. In this example $x$ is associated with $y$. \label{sorting_fi}}
     \end{center}
\end{figure}

Another special case arises when the probability measures are Gaussian. This is a well known result in the literature of OT geometry \cite{givens1984,McCann_1997,takatsu2011} which is recalled in the following theorem:

\begin{theo}[Closed form expression for Gaussians]
\label{prop:closed_gaussian}
Let $\mu=\mathcal{N}(\mathbf{m_\mu},\Sigmab_\mu),\nu=\mathcal{N}(\mathbf{m_\nu},\Sigmab_\nu)$ and suppose that $c(\xbf,\ybf)=h(\ybf-\xbf)$ with $h$ striclty convex. 

Let $T: \xbf\rightarrow \mathbf{m_{\nu}}+\mathbf{A}(\xbf-\mathbf{m_\nu})$ where:
\begin{equation}
\mathbf{A}=\Sigmab_{\mu}^{-1/2}(\Sigmab_{\mu}^{1/2}\Sigmab_{\nu}\Sigmab_{\mu}^{1/2})^{\frac{1}{2}}\Sigmab_{\mu}^{-1/2}
\end{equation}
then $T$ is the unique optimal solution of \eqref{MongeOt} and $\pi^{*}=(id\times T)\#\mu$ is the unique optimal solution of \eqref{linear_ot}. 

In particular when $c(\xbf,\ybf)=\|\xbf-\ybf\|_{2}$ is the Euclidean distance on $\R^{d}$ the $2$-Wasserstein distance is given by:

\begin{equation}
\wass_2^{2}(\mu,\nu)=\|\mathbf{m_\mu}-\mathbf{m_\nu}\|_{2}^{2}+\mathcal{B}(\Sigmab_\mu,\Sigmab_\nu)^{2}
\end{equation}
where $\mathcal{B}(\Sigmab_\mu,\Sigmab_\nu)=\tr\left(\Sigmab_\mu+\Sigmab_\nu-2(\Sigmab_{\mu}^{1/2}\Sigmab_{\nu}\Sigmab_{\mu}^{1/2})^{\frac{1}{2}}\right)$ is the Bures metric \cite{bure}.

\end{theo}

Interestingly enough the problem of computing OT between Gaussian measures draws connections with the general case. Indeed for $\mu,\nu \in \P_{2}(\R^{d})$ and $c(\xbf,\ybf)=\|\xbf-\ybf\|_{2}$ the optimal map $T$ defined in Theorem \ref{prop:closed_gaussian} is actually the optimal Monge map of \eqref{MongeOt} when restricted to the class of linear Monge map \cite[Proposition 1]{flamary2019concentration}. Figure \ref{fig:diplacement} illustrates the behavior of this map for two discrete probability measures on $\R^{2}$.

\begin{figure}[t]
   \begin{center}
        \includegraphics[width=0.5\linewidth]{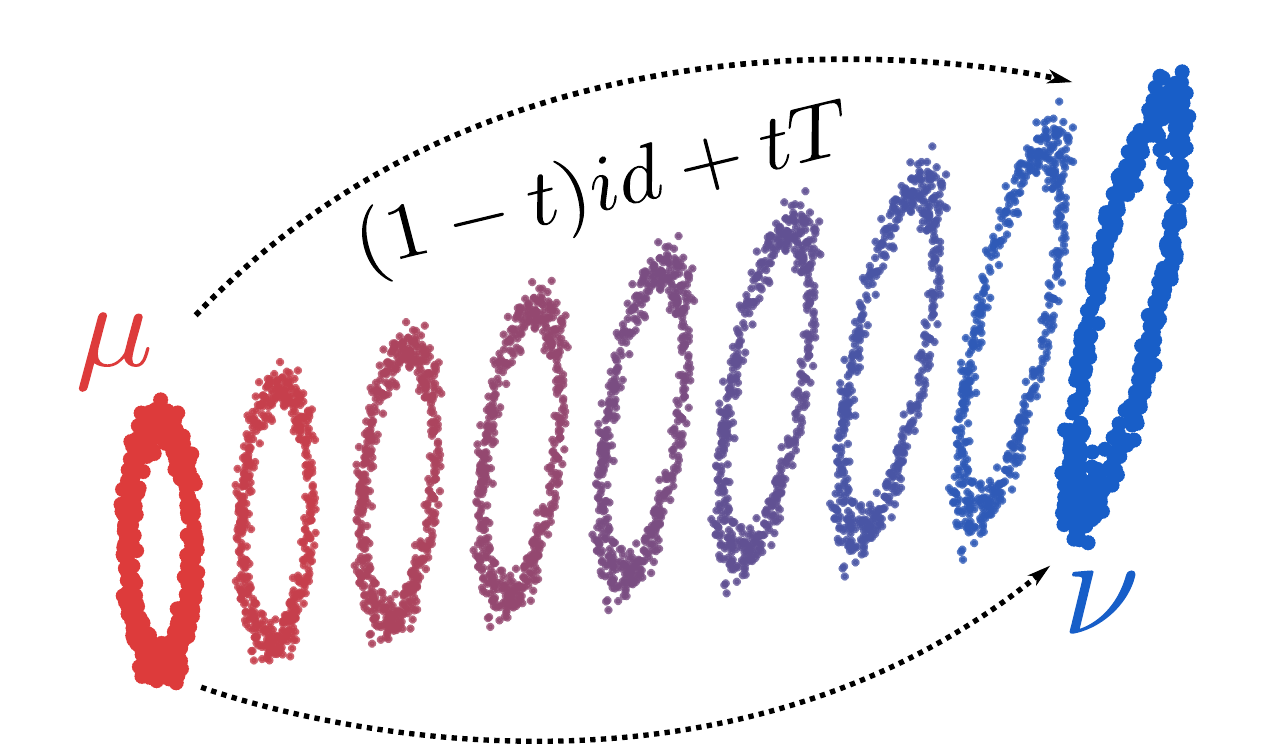}
    \caption{ \label{fig:diplacement} Linear displacement interpolation between two discrete probability measure $\mu,\nu$ on $\R^{2}$ using the linear map $T$ defined in \ref{prop:closed_gaussian}. The figure shows the displacement interpolant which is a probability measure is defined by $((1-t)id+T)\#\mu$ for $t \in [0,1]$ \cite{McCann_1997} (see Section \ref{sec:wass_bary} for more details)}
     \end{center}
\end{figure}

Note that a generalization of the previous result exists for elliptical distributions which are somehow generalizations of Gaussian densities. In this case the $\wass_2$ admits also a closed-form (see \cite{muzellec}).

\subsection{Some statistical aspects of OT \label{sec:sample_complexity} } In most of machine learning applications we do not have access to the true distributions $\mu,\nu$ but only to samples from these distributions. As such a natural question arises: can we infer from this samples good estimates of OT objects such as couplings or OT distances? One particular question is how well can we estimate the Wasserstein distance by relying only on samples of the distribution? If we consider a probability distribution $\mu \in \P(\R^{d})$ and an empirical distribution $\mu_n=\frac{1}{n}\sum_{i=1}^{n} \delta_{x_i}$ where $x_i \sim \mu$ are \textit{iid} samples does $\mu_n$ is a good proxy for $\mu$? Unfortunately the sample complexity of the estimation of the Wasserstein distance is exponential in the dimension of the ambient space. More precisely $\mathbb{E}[\wass_p(\mu_n,\mu)]=O(n^{-\frac{1}{d}})$ so that the Wasserstein distance suffers from the curse of dimensionality \cite{dudley1969,weedbach2017}. It was shown in \cite{weedbach2017} that this result can be refined to $O(n^{-\frac{1}{p}})$ where $p$ is the intrinsic dimension of the data but, generally, this is a major bottleneck for the use of OT in high-dimensional machine learning problems. Previous analysis can be extended to the infinite dimensional setting as analysed in \cite{lei2020convergence}.  The problem of estimating the optimal coupling by relying on small batches of $\mu$ when it is discrete was further analyzed in \cite{pmlr-v108-fatras20a}.  

To circumvent this limitation some robust projection formulations have been proposed \cite{weed_2019_spike,lin2020projection} as well as strategies such as gaussian-smoothing \cite{pmlr-v108-goldfeld20a} or based on wavelet estimator \cite{pmlr-v99-weed19a}. The entropic regularization presented in the next section in also one of the tool that facilitates the estimation of $\wass_p$ for high-dimensional settings.   For more details about statistical aspects of OT we refer the reader to \cite{phdthesis_weed}.

\subsection{A quick numerical tour: solving Optimal Transport}
\label{sec:numerical_tour}

In this section we consider the problem of computing OT between discrete probability measures $\mu=\sum_{i=1}^{n} a_{i} \delta_{x_{i}}, \nu=\sum_{i=1}^{m} b_{j} \delta_{y_{j}}$. The problem can be solved in many ways and we aim here at giving a brief summary these possibilities. We denote by $\C=(c_{ij})_{i \in \integ{n},j\in \integ{m}}$ the matrix of all pair-to-pair costs between the samples $x_i,y_j$, \ie\ $c_{ij}=c(x_i,y_j)$ for all $i \in \integ{n},j\in \integ{m}$. In the discrete case the underlying problem reads:

\begin{equation}
\label{discret_ot}
\min_{\GG \in \couplingset(\a,\b)} \froeb{\C}{\GG}=\min_{\GG \in \couplingset(\a,\b)}  \sum_{ij} c_{ij}\pi_{ij}.
\end{equation}

As described previously the problem is \emph{linear} in $\GG$, in this way the discrete case corresponds to a linear program (LP) \cite{10.5555/248375}. Before discussing potential algorithms for solving equation \eqref{discret_ot} we detail one important special case.

\begin{figure}[t]
   \begin{center}
        \includegraphics[width=0.6\linewidth]{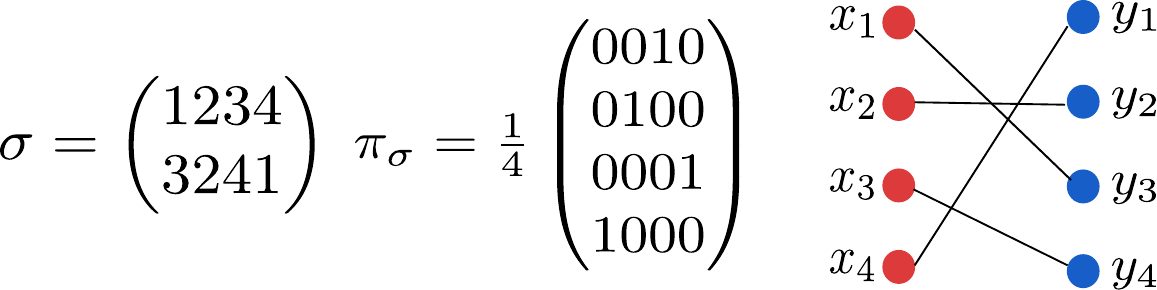}
    \caption{\label{assign}An assignment of $x_1,x_2,x_3,x_4$ to $y_1,y_2,y_3,y_4$ can be described whether by a permutation $\sigma$ or using assignment matrix $\GG_{\sigma}$. In this example both correspond to the permutation $\sigma$: $1 \rightarrow 3$,$2 \rightarrow 2$,$3 \rightarrow 4$, $4 \rightarrow 1$ }
     \end{center}
\end{figure}

\paragraph{Assignment problems} Suppose that $m=n$. In this case we can look for an \emph{assignment} of the points, that is a one-to-one correspondence between the points $x_i,y_i$. This translates mathematically by looking at the permutation $\sigma \in \Sn$ of the points or at the permutation matrix $\GG_{\sigma}=\begin{cases}
    \frac{1}{n},& \text{if } j=\sigma(i)\\
    0,              & \text{otherwise}
\end{cases}$ such that the overall cost is minimized (see Figure \ref{assign}). In this situation one aims at solving:

\begin{equation}
\label{linear_assignment_problem}
\min_{\sigma \in \Sn} \frac{1}{n} \sum_{i=1}^{n} c_{i,\sigma(i)}=\min_{\sigma \in \Sn} \froeb{\C}{\GG_{\sigma}}
\end{equation}

This problem is well known in the literature as the \emph{linear assignment problem} (see \textit{e.g.} \cite{Burkard1999}). It is worth pointing that, in this case, it exactly corresponds to the Monge problem \eqref{MongeOt} in the discrete case. Interestingly enough when the weights of the OT problem \eqref{discret_ot} are set as uniform \ie\ $\a=\b=\frac{1}{n}\one_{n}$ both problems \eqref{discret_ot} and \eqref{linear_assignment_problem} are equivalent. More precisely by combining the fundamental theorem of linear programming \cite{bertsimas-LPbook}, which states that the minimum of a linear program is reached at an extremal point of the polyhedron, and Birkhoff's theorem \cite{birkhoff:1946}, which states that the extremal points of $\couplingset(\frac{1}{n}\one_{n},\frac{1}{n}\one_{n})$ is the set of permutation matrices, we can conclude that the optimal map of \eqref{discret_ot} is reached at $\GG_{\sigma}$ which is optimal for \eqref{linear_assignment_problem}.

\paragraph{Algorithmic solutions} To solve the OT problem \eqref{discret_ot} in general one can rely on classical algorithms for solving (LP) \cite{10.5555/248375}. We make here a brief overview of possible numerical solutions and we refer the reader to Section 3 in \cite{cot_peyre_cutu} for more details.

As seen in Theorem \ref{strong_duality} the OT problem can be solved using duality which reads in the discrete case:

\begin{equation}
\label{eq:duality_discrete}
\max_{\begin{smallmatrix}\alphab\in \R^{n}, \betab \in \R^{m} \\ \forall (i,j)\in \integ{n}\times \integ{m}, \alpha_i+\beta_j\leq c_{ij}\end{smallmatrix}} \alphab^{T}\a+\betab^{T}\b
\end{equation}

where $\alphab,\betab$ denotes the Kantorovitch potentials. Thanks to the fundamental Theorem \ref{fundamental_theorem} an optimal solution $\GG^{*}$ of the primal problem is found when $\alpha_i^{*}+\beta_j^{*}=c_{ij}$ for $\pi^{*}_{ij}>0$ where $\alphab^{*},\betab^{*}$ are solutions of the dual problem. Using this remark we can solve \eqref{discret_ot} by relying on the Network Simplex algorithm which philosophy is to find feasible solutions $(\alphab,\betab)$ such that $\alpha_i+\beta_j=c_{ij}$ whenever $\pi_{ij}>0$ (in this case we say that $\GG$ and $(\alphab,\betab)$ are complementary \textit{w.r.t.} $\mathbf{C}$). The complexity of this algorithm is $O(n^{3} \log(n))$ when $m=n$. The special case of uniform weights for assignment problems can be solved using the Auction algorithm which has a cubic complexity $O(n^3)$. 

\paragraph{Special cases: Monge property} The case where $\mathbf{C}$ has special structure deserves attention. In particular when $\mathbf{C}$ satisfies the following \emph{Monge property} \cite{birkard_perspecitve}: 
\begin{equation}
\label{mongeproperty}
\forall(i,j) \in \integ{n}\times \integ{m}, \ c_{i,j}+c_{i+1,j+1} \leq c_{i+1,j} + c_{i,j+1}
\end{equation}
which can be tested in $O(mn)$ operations. This property has some interesting historical background. It is actually based on the original observation of Monge who states that if quantity must be transported from locations $x_1$,$y_1$ to locations $x_2$,$y_2$ then the route from $x_1$ and the route from $y_1$ must not intersect: better not to cross the paths!\footnote{The original quote by Monge is \cite{monge_81}: "Lorsque le transport du deblai se fait de manière que la somme des produits
des molécules par l’espace parcouru est un minimum, les routes de deux points
quelconques A \& B, ne doivent plus se couper entre leurs extrémités, car la
somme Ab + Ba, des routes qui se coupent, est toujours plus grande que la
somme Aa + Bb, de celles qui ne se coupent pas."} In this case the simple North-West corner rule (see Algorithm \ref{alg:nw}) produces an optimal solution in $O(n+m)$.

\begin{algorithm}[t]
    \caption{\label{alg:nw} North-West corner rule}
    \begin{algorithmic}[1]
    \State $\mathbf{a}, \mathbf{b}, i,j=1$
        \While{$i<=n$,\ $j<=m$}
        \State $\pi_{ij}=\min\{a_i,b_j\}$ // \text{ Send as many units as possible form $i$ to $j$}  
        \State $a_i=a_i-\pi_{ij}$ // \text{Adjust the supply}
        \State $b_j=b_j-\pi_{ij}$ // \text{Adjust the demand}
        \State{\text{If} $a_i=0$, $i=i+1$, \text{if} $b_j=0$, $j=j+1$}
        \EndWhile 
            \end{algorithmic}
\end{algorithm}

\paragraph{Special cases: 1D probability distributions} As seen in Section \ref{sec:sepcial_cases_ot} the case of 1D probability distributions can be solved efficiently using simple sorts when $\C$ is \textit{e.g.} a squared Euclidean distance matrix. The complexity of computing the Wasserstein distance is $O(n\log(n))$ when $n=m$ and weights are uniform and in general it suffices to compute the two cumulative distribution functions which is $O(n\log(n)+m\log(m))$.

\paragraph{Special cases: Gaussian distributions} When $\mu$ and $\nu$ are Gaussian distributions (and with a Euclidean cost) the OT problem is also quite easy to solve. In the discrete case, when relying on samples from $\mu,\nu$, and using the empirical version of the means and covariances, finding the optimal solution has a $O((n+m)d^{2}+d^{3})$ complexity.

Although previous special cases exist solving the OT problem in general remains costly. The next section presents a regularization scheme that tends to lower this computational complexity and was one of the major breakthrough in OT past years.

\subsubsection{Entropic regularization}

 \begin{figure}[t]
   \begin{center}
        \includegraphics[width=1\linewidth]{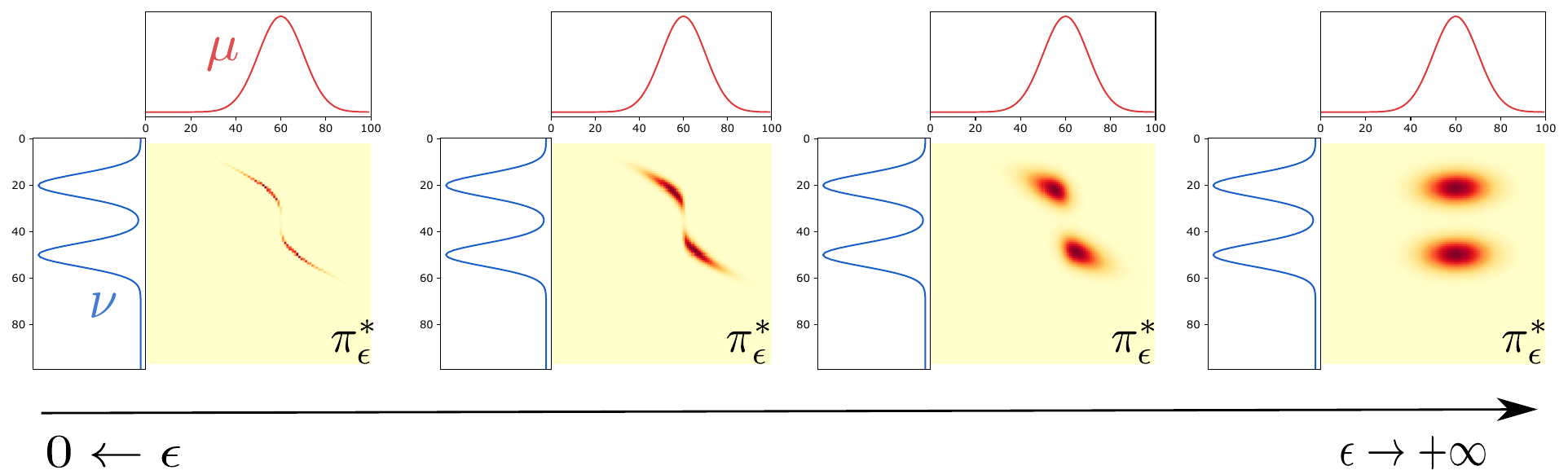}
    \caption{\label{fig:entropic_reg} Effect of the entropic regularization parameter $\epsilon$ on the optimal coupling $\GG^{*}_{\epsilon}$ between two 1D probability distributions. As $\epsilon$ increases the coupling tends to blur and converges to the marginals' product coupling.}
     \end{center}
\end{figure}

The idea of penalizing the entropy of the joint coupling can be traced back to Schrödinger \cite{schro} and its use for linear OT to Wilson \cite{entropy_first}, yet it was made popular quite recently in the OT community \cite{cuturi2013sinkhorn}. The entropic regularization has multiple virtues in practice: 1) it turns the optimal transport problem into a strongly-convex minimization problem which solution is unique 2) solving an entropic regularized OT only involves simple iterations of matrix-vectors products which can be plugged easily  into modern differentiable frameworks 3) it can be accelerated on GPU and can solve in parallel several OT problems 4) it has many desirable properties for high-dimensional problems statistically speaking.

The entropy term for a coupling $\GG$ reads as:
\begin{equation}
\label{entropy_term}
H(\GG)=-\sum_{ij} (\log(\pi_{ij})-1)\pi_{ij}
\end{equation}
 which corresponding entropic regularized OT problem:

 \begin{equation}
\tag{$\epsilon$-KP}
\label{entropic_transport}
\ot_{c}^{\epsilon}(\mu,\nu)=\min_{\GG \in \couplingset(\a,\b)} \froeb{\C}{\GG}-\epsilon H(\GG)
 \end{equation}

 Interestingly enough the optimal cost of \eqref{linear_ot} can be obtained as $\epsilon \rightarrow 0$, \ie\ $\lim_{\epsilon \rightarrow 0} \ot_{c}^{\epsilon}(\mu,\nu)=\ot_{c}(\mu,\nu)$ \cite[Propositon 4.1]{cot_peyre_cutu}. As a side effect, the entropic term tends to blur the optimal coupling so that more points are associated compared to the sparse optimal solution of the original problem. In other words entropy forces the solution to have a spread support. In the limit setting where $\epsilon \rightarrow +\infty$ all points are coupled together such that $\lim_{\epsilon \rightarrow +\infty} \GG^{*}_{\epsilon}=\mathbf{a} \mathbf{b}^{T}$ where $\GG^{*}_{\epsilon}$ denotes the optimal coupling of \eqref{entropic_transport} (see Figure \ref{fig:entropic_reg}). Note that the entropy regularization can also be defined when the probability measures are not discrete and in this case reads $H(\pi)=\int_{\Xcal \times \Ycal}(\log(\frac{\dr \pi(x,y)}{\dr \mu(x) \dr \nu(y)})-1)\dr \pi(x,y)$.

 \paragraph{Sinkhorn-Knopp and Bregman projections} A simple analytic solution of \eqref{entropic_transport} can be found using the lagragian duality as expressed in the following proposition \cite[Proposition 4.3]{cot_peyre_cutu}:

 \begin{prop}
Problem \eqref{entropic_transport} has a unique solution of the form $\GG^{*}=\diag({\bf u})\mathbf{K}\diag({\bf v})$ with $\mathbf{K}=e^{-\frac{\mathbf{C}}{\epsilon}}$ and $\mathbf{u},\mathbf{v} \in \R^{n}_{+}\times \R^{n}_{+}$ .
 \end{prop}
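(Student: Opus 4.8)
The plan is to treat \eqref{entropic_transport} as a strictly convex program over a compact set and then read off the multiplicative structure of its minimizer from the first-order optimality conditions. First I would note that the feasible set $\couplingset(\a,\b)$ is nonempty (it contains $\a\b^{T}$) and is a bounded polytope, hence compact, and that the objective $\Phi(\GG)=\froeb{\C}{\GG}-\epsilon H(\GG)$ is continuous on it, using the convention $0\log 0 = 0$ to extend $-H$ continuously up to the boundary. Since $\froeb{\C}{\cdot}$ is linear and $-H(\GG)=\sum_{ij}(\pi_{ij}\log \pi_{ij}-\pi_{ij})$ is strictly convex (each $t\mapsto t\log t$ has second derivative $1/t>0$), the objective $\Phi$ is strictly convex. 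Continuity on a compact set yields the existence of a minimizer and strict convexity yields its uniqueness; call it $\GG^{*}$.

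The crucial step, and the one I expect to be the main obstacle, is to show that $\GG^{*}$ lies in the interior of the positive orthant, i.e. $\pi^{*}_{ij}>0$ for all $i,j$ (assuming $\a,\b$ have strictly positive entries). The argument rests on the barrier behaviour of the entropy: the partial derivative $\partial_{\pi_{ij}}\Phi = c_{ij}+\epsilon\log \pi_{ij}$ tends to $-\infty$ as $\pi_{ij}\to 0^{+}$. Consequently, if some entry of $\GG^{*}$ vanished, then moving an infinitesimal amount of mass into that entry along the feasible direction $\a\b^{T}-\GG^{*}$ (which points toward a strictly positive coupling) would make the directional derivative of $\Phi$ equal to $-\infty$, hence would strictly decrease $\Phi$, contradicting optimality. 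This shows that all the nonnegativity constraints are inactive at $\GG^{*}$, which is precisely what lets us discard them in the subsequent analysis.

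With the nonnegativity constraints inactive, I would introduce Lagrange multipliers $\alphab\in\R^{n}$ and $\betab\in\R^{m}$ for the two marginal constraints $\GG\one_m=\a$ and $\GG^{T}\one_n=\b$, and impose the stationarity condition of the Lagrangian at the interior point $\GG^{*}$. Differentiating in $\pi_{ij}$ gives $c_{ij}+\epsilon\log \pi^{*}_{ij}-\alpha_i-\beta_j=0$, i.e. $\pi^{*}_{ij}=\exp(\alpha_i/\epsilon)\,\exp(-c_{ij}/\epsilon)\,\exp(\beta_j/\epsilon)$. Setting $u_i=\exp(\alpha_i/\epsilon)$, $v_j=\exp(\beta_j/\epsilon)$ and $\mathbf{K}=e^{-\C/\epsilon}$ yields exactly $\GG^{*}=\diag(\ubf)\mathbf{K}\diag(\vbf)$, and since $\ubf,\vbf$ are coordinatewise exponentials they automatically have positive entries.

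I would close by remarking that only the \emph{form} of the solution is asserted here; the existence of scalings $\ubf,\vbf$ that actually realize the prescribed marginals is the separate Sinkhorn fixed-point question, while the uniqueness of $\GG^{*}$ already established translates into uniqueness of $(\ubf,\vbf)$ up to the rescaling $(\ubf,\vbf)\mapsto(\lambda\ubf,\lambda^{-1}\vbf)$ for $\lambda>0$.
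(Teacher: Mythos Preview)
Your argument is correct and is the standard Lagrangian derivation: strict convexity of the negative entropy gives uniqueness, the logarithmic barrier forces the optimizer into the interior, and the KKT stationarity condition then yields the diagonal scaling form $\GG^{*}=\diag(\ubf)\,e^{-\C/\epsilon}\,\diag(\vbf)$. The paper itself does not give a proof of this proposition but simply cites it as Proposition~4.3 of \cite{cot_peyre_cutu} and refers to \cite{sinkhorn1967} for the scaling structure; your proof is essentially the one found in that reference, so there is nothing to compare.
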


As written in \cite{sinkhorn1967} there is a unique solution of the form $\GG^{*}=\diag({\bf u})\mathbf{K}\diag({\bf v})$ with marginals $\a,\b$ providing that $\mathbf{K}$ is positive definite. Moreover it can be recovered based on the Sinkhorn-Knopp Matrix scaling algorithm that relies on matrices multiplications by alternatively updating $\mathbf{u}$ and $\mathbf{v}$ in order for $\GG^{*}$ to have the prescribed marginals (see in Algorithm \ref{alg:sk}). When $n=m$ and by setting $\tau=\frac{4\log(n)}{\epsilon}$ the Sinkhorn algorithm produces an optimal solution $\GG^{*}$ such that $\froeb{\mathbf{C}}{\GG^{*}}\leq \ot_{c}(\mu,\nu)+\tau$ after $O(\|C\|_{\infty}^{3} \log(n)\tau^{-3})$ iterations \cite{altschuler2017near}. In particular this implies that a $\tau$-approximate solution of the original unregularized problem can be computed in $O(n^{2}\log(n)\tau^{-3})$ time. 

From a practical point of view the Sinkhorn's algorithm suffers from stability issues when $\epsilon \rightarrow 0$ as the kernel $\mathbf{K}$ vanishes rapidly which results in divisions by $0$ during the algorithms' iterations. To avoid such underflows for small value of $\epsilon$ \cite{Schmitzer_stab_sinkhorn} suggest a log-sum-exp stabilization
trick whose iterations turn to be mathematically equivalent to the original iterations.

    \begin{algorithm}[t]
        \caption{\label{alg:sk} Sinkhorn-Knopp Algorithm for entropic transport}
        \begin{algorithmic}[1]
        \State $\mathbf{a}, \mathbf{b}, \mathbf{C}, \epsilon >0, \mathbf{u}^{(0)},\mathbf{v}^{(0)}=\one, \mathbf{K}=\exp (-\frac{\mathbf{C}}{\epsilon})$
            \For{$i=1,\dots,n_{it}$}
            \State $\mathbf{u}^{(i)}=\mathbf{a} \oslash \mathbf{K}^{\top}\mathbf{v}^{(i-1)}$ // \text{ Update left scaling } 
            \State $\mathbf{v}^{(i)}=\mathbf{b} \oslash \mathbf{K} \mathbf{u}^{(i-1)}$ // \text{ Update right scaling } 
            \EndFor 
            \State \Return $\GG^{*}=\diag({\bf u})\mathbf{K}\diag({\bf v})$
        \end{algorithmic}
    \end{algorithm}

This problem is also a special case of a Kullback-Leiber minimization problem where one wants to find a coupling matrix $\GG^{*}$ the closest possible to a kernel $\mathbf{K}$ in the sense of the Kullback-Leiber geometry. More precisely \eqref{entropic_transport} is equal to:

\begin{equation}
\min_{\GG \in \couplingset(\a,\b)} \mathbf{KL}(\GG | \mathbf{K})
\end{equation}

where $\mathbf{K}=e^{-\frac{\mathbf{C}}{\epsilon}}$ and $\mathbf{KL}(\GG,\mathbf{K})=\sum_{ij}\pi_{ij}\log(\frac{\pi_{ij}}{K_{ij}})-\pi_{ij}+K_{ij}$ is the Kullback-Leiber divergence between $\GG$ and $\mathbf{K}$. Reformulating OT problems as a minimization of a Kullback-Leiber divergence allows the use of the machinery of Bregman projections in order to find a solution and to analyse the convergence \cite{benamou:2015}. This formulation is particularly interesting for solving multi-marginals OT problems \cite{phdthesis_Luca}, regularized OT barycenter \cite{benamou:2015,bigot_2019} and the Gromov-Wasserstein problem \cite{peyre2016gromov} in short. 

\paragraph{Sinkhorn divergences} One drawback of entropic regularized OT is that it induces a bias $\ot_{c}^{\epsilon}(\mu,\mu)\neq 0$ which can be problematic for learning using $\ot_{c}^{\epsilon}$. In \cite{pmlr-v84-genevay18a} authors propose to correct this bias by considering the so-called Sinkhorn divergence:
\begin{equation}
\label{eq:sink:div}
SD_{c,\epsilon}(\mu,\nu)=\ot_{c}^{\epsilon}(\mu,\nu)-\frac{1}{2}\ot_{c}^{\epsilon}(\mu,\mu)-\frac{1}{2}\ot_{c}^{\epsilon}(\nu,\nu).
\end{equation} 
This divergence enjoys many valuable properties. First it defines a symmetric positive definite smooth function on the space of probability measures that is convex in both $\mu,\nu$ and that metrizes the weak convergence of probability measures \cite{pmlr-v89-feydy19a}. Second it interpolates, through $\epsilon$, between Wasserstein distance and Kernel norms
(MMD) allowing  finding a trade-off
between both. Finally it is more suited for high-dimensional problems where the estimation of the Wasserstein distance is known to suffer from the curse of dimensionality (the sample complexity if $O(n^{-\frac{1}{d}})$ as explained in Section \ref{sec:sample_complexity}) whereas the sample complexity of $SD_{c,\epsilon}$ is $O(\epsilon^{-\frac{d}{2}} n^{-\frac{1}{2}})$ \cite{genevay:2019}.

\paragraph{Stochastic Optimal Transport: going large scale.} The regularization of linear OT allows deriving stochastic formulations that are useful in practice to handle large scale datasets. This setting was considered in \cite{genevay_stochastic,seguy2018large} where authors rely on the dual formulation \eqref{strong_duality} or the semi-dual formulation \eqref{eq:semi_dual_formulation} in the regularized case. More precisely for $\mu,\nu\in \P(\R^{d})$ and $\epsilon>0$ the regularized dual (\textit{resp.} semi-dual) boils down to solve the following unconstrained maximization problems:
\begin{equation}
\label{eq:smooth_dual}
\tag{s-D}
\sup_{\phi,\psi \in C(\R^{d})\times C(\R^{d})} \underset{\xbf\sim\mu,\ybf\sim\nu}{\E}[F_{\epsilon}(\phi(\xbf),\psi(\ybf))]
\end{equation}
\begin{equation}
\label{eq:smooth_semi_dual}
\tag{s-SD}
\sup_{\psi \in C(\R^{d})} \underset{\xbf\sim\mu}{\E}[H_{\epsilon}(\xbf,\psi)]
\end{equation}
where $F_{\epsilon}(\phi(\xbf),\psi(\ybf))=\phi(\xbf)+\psi(\ybf)-\epsilon e^{\frac{\phi(\xbf)+\psi(\ybf)-c(\xbf,\ybf)}{\epsilon}}$ and $H_{\epsilon}(\xbf,\psi)=\int_{\Ycal} \psi(\ybf)\dr \nu(\ybf)-\epsilon\log(\int e^{\frac{1}{\epsilon}(\psi(\ybf)-c(\xbf,\ybf))} \dr \nu(\ybf))-\epsilon$ when entropic regularization is used. Since the problem is recast in the form of an unconstrained maximization of an expectation, the idea is to use stochastic gradients tools such as Stochastic Gradient Descent (SGD), or Stochastic Averaged Gradient (SAG) to compute a solution of \eqref{eq:smooth_dual},\eqref{eq:smooth_semi_dual}. When both $\mu=\sum_{i=1}^{n} a_i \delta_{\xbf_i}$,$\nu=\sum_{j=1}^{m} b_j \delta_{\ybf_j}$ are discrete (SGD) or (SAG) are directly applicable to maximize the following finite sums:
\begin{equation}
\label{eq:smooth_dual_discrete}
\tag{s-Ddis}
\max_{\alphab,\betab \in \R^{n}\times \R^{m}} \sum_{i=1,j=1}^{n,m} F_{\epsilon}(\alpha_{i},\beta_{j})a_i b_j
\end{equation}
\begin{equation}
\label{eq:smooth_semi_dual_discrete}
\tag{s-SDdis}
\max_{\betab \in \R^{m}} \sum_{i=1}^{n} H_{\epsilon}(\xbf_i,\betab)a_i
\end{equation}
In \cite{genevay_stochastic} authors propose to use (SAG) to compute \eqref{eq:smooth_semi_dual_discrete} which operates at each iteration by sampling a point $\xbf_k$ from $\mu$ then to compute the gradient of $H_{\epsilon}(\xbf_k,\betab)$ corresponding to that sample while keeping in memory
a copy of past gradients. This approach costs $O(n)$ per iteration due to the computation of the gradient and converges to a solution within $O(k^{-1})$ iterations. In contrast in \cite{seguy2018large} propose to solve \eqref{eq:smooth_dual_discrete} by applying an (SGD) on mini-batches of both $\mu,\nu$ which comes with a $O(p^{2})$ cost per iteration where $p$ is the mini-batch size and also converges in $O(k^{-1})$. In the continuous setting the problem is infinite dimensional so that it can not be solved using (SGD) anymore. In \cite{genevay:2019} authors propose to represent the dual variables as kernel expansions while in \cite{seguy2018large} the dual variables are parametrized by a neural network. Another line of works rely on the unregularized problem and on the special case of $\wass_1$. In this case the duality reads $\sup_{\phi}\underset{\xbf\sim\mu,\ybf\sim\nu}{\E}[\phi(\xbf)-\phi(\ybf)]$ where the maximization is done over all $1$-Lipschitz function $\phi$. In \cite{arjovsky17a} authors tackled this problem in the context of generative modelling. They parametrized $\mu,\nu$ using a neural network and used the same (SGD)+mini-batch procedure resulting on a $O(p)$ cost per iteration. Their approach however relies on a weight clipping of the (NN) weights in-between gradient updates to enforce the Lipschitz constraint which lead to optimization difficulties \cite{NIPS2017_7159}. Note all approaches comes at the price of biasing the optimal coupling due to the mini-batch sampling. This effect was further analyzed in \cite{pmlr-v108-fatras20a}. 

\subsection{Other formulations \label{sec:other_formu}}

Apart from entropic-regularized OT there are a lot of other methods for approximating OT. One of them relies on the closed-form expression of OT for probability distributions over the real line resulting on the so-called \emph{Sliced Wasserstein distance} (SW) \cite{rabin2011wasserstein}. Considering $\mu,\nu \in \P(\R^{d})$ the key idea is to randomly select lines in $\R^{d}$, to project the measures into these lines and to compute the resulting 1D-Wasserstein distance which can be done using simple sorts as seen previously. The sliced-Wasserstein distance is the average of all these 1D-Wasserstein distances over all drawn lines. More precisely:

\begin{definition}[Sliced Wasserstein distance]
Let $\lambda_{d-1}$ be the uniform measure on $\mathbb{S}^{d-1}$ . 

For $\thetab \in \mathbb{S}^{d-1}$ we note $P_{\thetab}$ the projection on $\thetab$, \ie\ $P_{\thetab}(\xbf)=\langle \xbf, \thetab \rangle$. Let $\mu,\nu \in \P(\R^{d})^{2}$. The Sliced Wasserstein distance between $\mu$ and $\nu$ is defined as:
 \begin{equation}
 SW^p_p(\mu,\nu)=\int_{\mathbb{S}^{d-1}} W^p_p(P_{\thetab}\#\mu,P_{\thetab}\#\nu) d\lambda_{d-1}(\thetab)
 \label{eq:pSW}
 \end{equation}
 where the Wasserstein distance is defined with the standard Euclidean distance on $\R^{d}$.

 \end{definition}

$SW$ enjoys several interesting properties. First $SW_2$ induces a similar topology than $\wass_2$: it defines a distance on $\P_{p}(\R^{d})$ \cite{bonotte_phd} that metrizes the weak convergence \cite{Nadjahi_sliced_assympt} and which is equivalent to the Wasserstein distance for measures with compact supports \cite{Nadjahi_2020_prop_sliced, bonotte_phd}. Second it defines a positive definite kernel $e^{-\gamma SW^2_2(\mu,\nu)}$ for $\gamma>0$ over the space of probability distributions that can be easily plugged into an SVM \cite{Kolouri_2016_CVPR}. This contrasts with the Wasserstein distance $\wass_2$ which is not \emph{Hilbertian} and consequently does not define a positive definite kernel (see Section 8.3 in \cite{cot_peyre_cutu}). In terms of sample complexity $SW$ is known to be dimension independant \cite{Nadjahi_2020_prop_sliced} such as $O(n^{-1/2})$ when $p=2$ \cite{lin2020projection,Nadjahi_2020_prop_sliced} and better samples complexities can be found by projecting on subspaces of dimension $k>1$ instead of random lines \cite{lin2020projection, subspace_robust_wass_patty_2019}, yet raising tractability issues as the sorting trick is no longer valid. Moreover, as a side effect of its definition, $SW$ is unable to find the correspondences between the samples of the distributions as it does not provide an optimal transport map $\pi$ which is valuable for certain application such as domain adaptation \cite{courty2017optimal}.

From a practical side estimating $SW$ requires the calculation of an integral over the hypersphere which can be done using a simple Monte-Carlo scheme. Hence for discrete probability measures with $n$ atoms the overall complexity of computing $SW$ is $O(Ln\log(n))$ where $L$ is the number of projection directions on $\mathbb{S}^{d-1}$. The quality of the Monte Carlo estimates is impacted by the number of projections as well as
the variance of the evaluations of the Wasserstein distance has pointed out empirically in \cite{kolouri_generalized_sliced,Deshpande_2019_CVPR} and more formally in \cite{Nadjahi_2020_prop_sliced}. The low computational complexity of $SW$ makes it very attractive for a number of \textit{scenarii} such as in deep learning for generative modeling \cite{cvpr_sliced_gan,Deshpande_2019_CVPR}, for barycenter computation \cite{bonneel:hal-00881872} or topological data analysis \cite{carriere_persi} to name a few. This ``projection'' idea was further developed and improved in several works which proposed to project on $k$-dimensional subspaces \cite{subspace_robust_wass_patty_2019}, to use non-linear projection \cite{kolouri_generalized_sliced} or to generalize $SW$ for the unbalanced setting \cite{bonneel_spot}.

Many other interesting formulations can be derived from the original OT formulation. Since they are not considered in this manuscript we just give a brief overview here. In \cite{ferradans2014regularized} author propose to regularize the linear OT with a quadratic term, resulting on a quadratic regularized OT. For regular grids \cite{solomon_convo} define a Wasserstein distance that can be computed efficiently in $O(n^2\log(n))$ using convolutions. Another line of works consider an unbalanced setting where the source probability measure is partially transferred to the target probability measure resulting on the unbalanced formulation \cite{chizat_unbalanced}. A case of particular interest is when  the target probability measure is discrete and the source continuous, namely the semi-discrete OT. It founds many applications in practice and can be tackled using Laguerre cells \cite{LEVY2018135}. Finally the multi-marginal OT aims at solving an linear OT problem where there are many target/source probability measures and one optimal coupling for transporting them all \cite{phdthesis_Luca}.

\subsection{Wasserstein barycenter \label{sec:wass_bary}}

The Wasserstein distance is also an interesting tool in order to compute a notion of \emph{barycenter} of probability distributions. In an Euclidean setting the traditional barycenter of points $(\xbf_i)_{i \in \integ{m}}$ can be computed by solving $\inf_{\xbf \in \R^{d}} \sum_{i=1}^{n} \lambda_i \|\xbf-\xbf_i\|^{2}_{2}$ where $\lambda_i\geq0$ and $\sum_i \lambda_i=1$. The barycenter vector is then given by $\xbf=\sum_{i=1}^{n} \lambda_i \xbf_i$. This can be generalized to arbitrary metric spaces $(\Xcal,d)$ using the so-called Fréchet (or Karcher) mean \cite{karcher2014riemannian}:
\begin{equation}
\label{eq:general_frechet}
\inf_{x \in \Xcal} \sum_{i=1}^{n} \lambda_i d(x,x_i)^{p}
\end{equation} 
for $p\in \mathbb{N}$. The problem \eqref{eq:general_frechet} motivates the use of Wasserstein barycenter by considering the metric space $(\P_p(\Omega),\wass_p)$. Generally \eqref{eq:general_frechet} is non-convex and difficult to solve for arbitrary metric space, however in the case of the Wasserstein distance the situation is somehow easier since it can be formulated as a convex problem for which existence can be proved and efficient numerical solvers exist. For a set of input probability measures $(\nu_i)_{i \in \integ{n}} \in \P(\Omega)$ the Wasserstein barycenter reads as the following variational problem:
\begin{equation}
\label{eq:wass_bary}
\inf_{\mu \in \P(\Omega)} \sum_{i=1}^{n} \lambda_i \ot_{c}(\mu,\nu_i).
\end{equation} 
The barycentric formulation finds many applications in machine learning such in Bayesian inference \cite{pmlr-v38-srivastava15}, fairness \cite{pmlr-v97-gordaliza19a}, in image processing for texture synthesis and mixing \cite{rabin_texture} or in neuroimaging \cite{gramfort_2015} to name a few.
As proven in \cite{agueh2011barycenters} in the context of $\wass_2^{2}$ for $\Omega=\R^{d}$ this problem is convex and when one of the input measure has a density the barycenter is well-defined and unique. Even though there exist special cases (see Section 9.2 in \cite{cot_peyre_cutu}) in practice finding a solution in the general setting is difficult. In the following we detail one solution for the scenario where the input measures are discrete. More formally let $(\nu_i)_{i=1}^{n}$ be discrete probability measures with weights $\b_i \in \simplex_{n_i}$ and that are supported on $\mathbf{Y}_{i}=(\ybf^{i}_q)_{q\in \integ{n_i}} \in \R^{n_{i}\times d}$ for each $i \in \integ{n}$. Instead of looking at all possible discrete probability measures we can search a $k$ atoms probability measure \ie\ of the form $\hat{\mu}=\sum_{p=1}^{k}a_p \delta_{\xbf_{p}}$ where $\X=(\xbf_{p})_{p\in \integ{k}} \in \R^{k\times d}$ and $\a \in \simplex_k$. Overall the resulting problem is:

\begin{equation}
\label{eq:bary_dis}
\begin{split}
\min_{\a \in \simplex_k, \X \in \R^{k\times d}} \sum_{i=1}^{n} \lambda_i \min_{\GG \in \couplingset(\a,\b_i)} \froeb{\GG}{\mathbf{C}_{\X \Y_i}}= \min_{\begin{smallmatrix} \a \in \simplex_k, \X \in \R^{k\times d} \\ \forall i \in \integ{n}, \GG_i \in \couplingset(\a,\b_i) \end{smallmatrix}} \sum_{i=1}^{n} \lambda_i \froeb{\GG_i}{\mathbf{C}_{\X \Y_i}}
\end{split}
\end{equation}

where $\mathbf{C}_{\X \Y_i} \in \R^{k \times n_i}$ is the matrix defined by all pair to pair costs between the points of the barycenter and $\nu_i$, \ie\ $\mathbf{C}_{XY_i}=(c(\xbf_p,\ybf^{i}_q))_{p,q \in \integ{k}\times \integ{n_i}}$. In \cite{pmlr-v32-cuturi14} author propose to solve \eqref{eq:bary_dis} using Block Coordinate Descent (BCD) that alternates between minimizing \textit{w.r.t.} $\a,\X$ and $\GG_i$ while keeping others fixed:
\begin{enumerate}[label=(\roman*)]
\item The minimization \textit{w.r.t.} all $\GG_i$ with $\a, \X$ fixed involves solving $n$ OT problems which can be done using algorithms described in Section \ref{sec:numerical_tour}.
\item The minimization \textit{w.r.t.} $\X$ with $\a, \GG_i$ fixed can be performed in closed-form in the case $\Omega=\R^{d}$ and $c(\xbf,\ybf)=\|\xbf-\ybf\|^{2}_{2}$ \cite[Equation 8]{pmlr-v32-cuturi14}:
\begin{equation}
\X=\diag\left(\frac{1}{\a}\right) \left(\sum_{i=1}^{n} \lambda_{i}  \GG_{i} \mathbf{Y}_{i}\right) 
\end{equation}
\item The minimization \textit{w.r.t.} the weight $\a$ with $\X, \GG_i$ fixed relies on the optimal dual variables of all OT sub-problems of step (i) and applies a projected subgradient minimization \textit{w.r.t.} $\a$ as described in Algortihm 1 in \cite{pmlr-v32-cuturi14}.
\end{enumerate}
These three steps are repeated until convergence of $\X$ and $\a$. The major bottleneck of this approach is its computational complexity which is driven by the calculation of many OT problems. When the support $\X$ is fixed and by denoting $\mathbf{C}_{\X \Y_i}=\mathbf{C}_{i}$ the problem reduces to:
\begin{equation}
\label{eq:easy_bary}
\min_{\begin{smallmatrix} \a \in \simplex_k \\ \forall i \in \integ{n}, \GG_i \in \couplingset(\a,\b_i) \end{smallmatrix}} \sum_{i=1}^{n} \lambda_i \langle \GG_i,\mathbf{C}_{i} \rangle
\end{equation}
which is an (LP) with $kn^2+n$ variables and $2Nn$ constraints. Note that first order methods such as subgradient descent on the dual have been proposed in \cite{carlier:hal-00987292} to solve \eqref{eq:easy_bary} but in general its scale forbids the use generic solvers even for medium scale problems.
These remarks advocate for the use of entropic regularized OT to obtain fast and smooth approximations of the original barycenter problem as given by:
\begin{equation}
\label{eq:easy_bary_epsi}
\min_{\begin{smallmatrix} \a \in \simplex_k \\ \forall i \in \integ{n}, \GG_i \in \couplingset(\a,\b_i) \end{smallmatrix}} \sum_{i=1}^{n} \lambda_i \froeb{\GG_i}{\mathbf{C}_{i}}-\epsilon H(\GG_i)
\end{equation}
The resulting problem is a smooth convex minimization problem, which can be tackled using gradient descent \cite{pmlr-v32-cuturi14} or with descent method on the semi-dual \cite{Cuturi2018SemidualRO}. Another possibility is to rewrite \eqref{eq:easy_bary_epsi} as a the following weighted KL minimization problem \cite{benamou:2015}:
 \begin{equation}
\min_{\begin{smallmatrix}(\GG_i)_i \\ \forall i \in \integ{n}, \GG_i^{T}\one=\b_i \\ \GG_1 \one=\cdots=\GG_n \one \end{smallmatrix}} \sum_{i=1}^{n} \lambda_i \epsilon KL(\GG_i|\mathbf{K}_{i})
\end{equation}
where $\mathbf{K}_{i}=e^{-\frac{\mathbf{C}_i}{\epsilon}}$. In this formulation the barycenter $\a$ is encoded in the row marginals of all the couplings $\GG_i$ such that $\a=\GG_1 \one=\cdots=\GG_n \one$. It is shown in \cite{benamou:2015} that this problem can also be solved using a generalized Sinkhorn algorithm which involves iterative projections. As such the entropic regularization is quite suited for the barycenter problem and was further analyzed for the general case of continuous probability measures in \cite{bigot:hal-01790015,bigot_2019}. Note that other methods have been proposed which rely \textit{e.g.} on the sliced Wasserstein formulation \cite{bonneel:hal-00881872}, unbalanced formulation \cite{chizat_unbalanced} or on convolutions for geometric domains \cite{solomon_convo}.

\paragraph{The case $n=2$: McCann interpolant} One special case deserves attention that is when $n=2$ and in the case $\Omega=\R^{d}$ equipped with $\|.\|_{2}$. This setting corresponds to the so-called \emph{McCann interpolant} \cite{McCann_1997} where one wants to find:
\begin{equation}
\label{eq:mccan_interp}
\inf_{\mu \in \P(\R^{d})} (1-t) \wass_{2}^{2}(\mu,\nu_1)+t \wass_{2}^{2}(\mu,\nu_2)
\end{equation}
with $t\in [0,1]$ and $\nu_1$ is regular with respect to the Lebesgue measure. Using Brenier theorem we know that there exists a unique push-forward such that $T\#\nu_1=\nu_2$. In this case the barycenter is unique and obtained with $\mu_t=((1-t)id+ tT)\#\nu_1$. In practice when the probability measures $\nu_1,\nu_2$ are discrete with respectively $n$ and $m$ atoms this interpolant can be computed by $\mu_{t}=\sum_{i=1,j=1}^{m,n}\pi_{ij}^{*}\delta_{(1-t)\xbf_{i}+t\ybf_{j}}$ where $\pi^{*}$ is an optimal coupling between $\nu_1,\nu_2$.

\section{The Gromov-Wasserstein problem}

\subsection{Problem statement}

Despite its valuable properties the linear OT problem faces the challenging problem of probability measures whose supports lie in incomparable spaces, that is to say when $\Xcal,\Ycal$ are not part of a common ground metric space. For example when $\mu \in \P(\R^{3}), \nu \in \P(\R^{2})$ the definition of a meaningful cost $c : \R^{3} \times \R^{2} \rightarrow \R_{+}$ is not straightforward. In particular in this setting we can not define a distance between $\xbf,\ybf \in \R^{3} \times \R^{2}$ so that the Wasserstein distance can no longer be defined. Moreover the Wasserstein distance is not invariant to important families of invariants, such translations or rotations or more generally \emph{isometries} which is an important flaw of linear OT for certain applications such as shape matching.

The Gromov-Wasserstein (GW) framework is an elegant remedy for this situation. It is built upon a quadratic Optimal Transport problem, as opposed to a linear one for the linear OT problem, and, informally its  optimal value quantifies the metric distortion when transporting points from one space to another. This section aims at presenting the GW problem, its fundamental metric properties as well as numerical solvers. We refer the reader to \cite{Sturm2012,memoli_gw,chowdhury_gromovwasserstein_2019} for further readings. 

\begin{figure}[t]
   \begin{center}
        \includegraphics[width=0.5\linewidth]{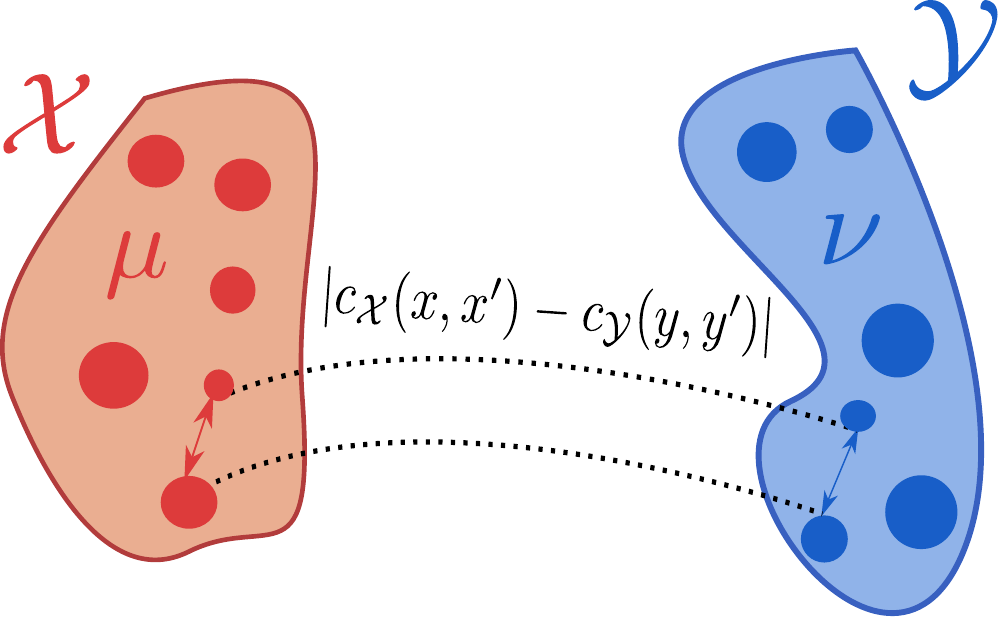}
    \caption{\label{fig:gw_def} The GW problem considers two probability measures $\mu \in \P(\Xcal), \nu \in \P(\Ycal)$ over two spaces that do not necessarily share a common metric. It is built upon the similarities $c_{\Xcal},c_{\Ycal}$ \emph{within} each space and on a measure of the distortion between each pair of points $\big|c_{\Xcal}(x,x')-c_{\Ycal}(y,y') \big|$.}
     \end{center}
\end{figure}

We consider two polish spaces $(\Xcal,d_{\Xcal}),(\Ycal,d_{\Ycal})$. Let $c_{\Xcal}:\Xcal \times \Xcal \rightarrow \R$ and $c_{\Ycal}:\Ycal \times \Ycal \rightarrow \R$ be continuous measurable functions and $\mu \in \P(\Xcal),\nu \in \P(\Ycal)$ be probability measures on $\Xcal,\Ycal$. The Gromov-Wasserstein ($\gw$) problem aims at finding:
\begin{equation}
\label{gw}
\gw_{p}(c_{\Xcal},c_{\Ycal},\mu,\nu)= \underset{\pi \in \couplingset(\mu,\nu)}{\inf} \left(\int_{\Xcal \times \Ycal} \int_{\Xcal \times \Ycal} \big| c_{\Xcal}(x,x')-c_{\Ycal}(y,y') \big|^{p} \dr \pi(x,y)\dr \pi(x',y'))\right)^{\frac{1}{p}}  \\
\end{equation}

for $p\in \mathbb{N}^{*}$ (see Figure \ref{fig:gw_def}). $\gw$ depends on the choice of similarities $c_\Xcal,c_\Ycal$ between points in $\Xcal$ and $\Ycal$. When it is clear form the context we will simply note $\gw_{p}(\mu,\nu)$ instead of $\gw_{p}(c_{\Xcal},c_{\Ycal},\mu,\nu)$. Since $\Xcal,\Ycal$ are already endowed with a natural metric one choice would be to consider $c_{\Xcal}=d_{\Xcal}$ and $c_{\Ycal}=d_{\Ycal}$. This setting brings in light the notion of \emph{metric measure spaces}, as triplets of the form $(\Xcal,d_\Xcal,\mu)$ where $(\Xcal,d_\Xcal)$ is a complete separable metric space and $\mu$ is a Borel probability measure on $\Xcal$. It was studied in depth in \cite{Sturm2012}. Another possibility is to consider triplets $(\Xcal,c_\Xcal,\mu)$ where $c_\Xcal$ is a integrable function, this notion refers to \emph{measure networks} and was studied in \cite{chowdhury_gromovwasserstein_2019}. 

The GW objective is constructed so that if an optimal coupling $\pi$ maps $x$ to $y$ and $x'$ to $y'$ then the couple $(x,x')$ should be ``as similar'' in $\Xcal$ as $(y,y')$ in $\Ycal$. When $c_\Xcal,c_\Ycal$ are distances it implies that $x,x'$ are as close in $\Xcal$ as $y,y'$ in $\Ycal$. In this work we consider a general setting where $c_{\Xcal},c_{\Ycal}$ are continuous and $\Xcal,\Ycal$ are Polish spaces and we will detail the two previous settings. 

As for the linear OT problem the equation \eqref{gw} always admits a solution. To show that we define $L(x,x',y,y')=\big| c_{\Xcal}(x,x')-c_{\Ycal}(y,y') \big|$. If $\Pi(\mu,\nu)$ is compact and the functionnal $\pi \rightarrow \int \int L \dr \pi \dr \pi$ is l.s.c. for the weak-convergence, Weierstrass theorem (see Memo \ref{memo:weir}) proves that the infimum will be attained at some optimal coupling. The first condition is a well-known result in OT theory provided that $\Xcal,\Ycal$ are Polish spaces \cite[Theorem 1.7]{San15a}. For the lower semi-continuity \textit{w.r.t.} the weak-convergence we can show that it suffices that $L$ be itself l.s.c. using the following lemma:

\begin{figure*}[!b]
\begin{memo}[Weierstrass theorem]
\label{memo:weir}
The Weierstrass theorem states that if $f: \Xcal \rightarrow \R\cup{+\infty}$ is l.s.c. and $\Xcal$ is compact then there exists $x^{*}=\inf_{x\in \Xcal} f(x)$ (see box 1.1 in \cite{San15a}). 
\end{memo}
\end{figure*}

\begin{lemma}
\label{lsc_on_measure}
Let $\Omega$ be a Polish space. If $f:\Omega \times \Omega \rightarrow \mathbb{R}_{+} \cup \{+\infty\}$ is lower semi-continuous, then the functional $J:\Pcal(\Omega)\rightarrow \mathbb{R} \cup \{+\infty\}$ with $J(\mu)=\int \int f(w,w') \dr\mu(w) \dr\mu(w')$ is l.s.c. for the weak convergence of measures.
\end{lemma}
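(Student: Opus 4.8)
The plan is to reduce the statement to the two elementary facts about lower semi-continuous functions collected in Memo~\ref{memo:lsc}. Since $f \geq 0$ is l.s.c. on the Polish space $\Omega \times \Omega$, there exists an increasing sequence $(f_k)_k$ of continuous bounded functions, which may moreover be taken $k$-Lipschitz, such that $f_k \uparrow f$ pointwise. For each $k$ set
\[
J_k(\mu) = \int_{\Omega} \int_{\Omega} f_k(w,w') \, \dr\mu(w)\,\dr\mu(w') = \int_{\Omega\times\Omega} f_k \, \dr(\mu \otimes \mu).
\]
By the monotone convergence theorem applied to the finite measure $\mu \otimes \mu$, one has $J(\mu) = \sup_k J_k(\mu)$ for every $\mu$. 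Hence, once I prove that each $J_k$ is continuous (\emph{a fortiori} l.s.c.) for the weak convergence, Memo~\ref{memo:lsc} (a supremum of l.s.c. functions is l.s.c.) immediately yields that $J$ is l.s.c., which is the claim.

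The core of the argument is therefore the weak continuity of the level functionals $J_k$. Because $f_k \in C_b(\Omega\times\Omega)$, it suffices to show that the product map $\mu \mapsto \mu \otimes \mu$ is continuous from $\Pcal(\Omega)$ into $\Pcal(\Omega\times\Omega)$ for the respective weak topologies; that is, that $\mu_n \to \mu$ weakly implies $\mu_n \otimes \mu_n \to \mu \otimes \mu$ weakly. Given this, testing against $f_k$ gives $J_k(\mu_n) \to J_k(\mu)$. To establish the product convergence I would split
\[
\mu_n \otimes \mu_n - \mu \otimes \mu = \big(\mu_n \otimes \mu_n - \mu \otimes \mu_n\big) + \big(\mu \otimes \mu_n - \mu \otimes \mu\big),
\]
and test each term against a fixed $g \in C_b(\Omega\times\Omega)$. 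The second term equals $\int \tilde g \, \dr(\mu_n - \mu)$ with $\tilde g(w') = \int g(w,w')\,\dr\mu(w) \in C_b(\Omega)$, so it vanishes in the limit by the very definition of weak convergence.

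The delicate term is the first one: here the measure $\mu_n$ sits in \emph{both} slots, so the plain definition of weak convergence, which tests only one fixed continuous function at a time, does not apply directly — this is the main obstacle. The point is that it suffices to treat $g$ equi-Lipschitz, which is exactly why the approximants $f_k$ were chosen Lipschitz. Writing the first term as $\int G_n(w') \, \dr\mu_n(w')$ with $G_n(w') = \int g(w,w')\,\dr(\mu_n - \mu)(w)$, one checks that the family $(G_n)$ is uniformly bounded by $2\|g\|_\infty$, equi-Lipschitz in $w'$ (the Lipschitz bound on $g$ transfers through the difference $\mu_n - \mu$), and converges to $0$ pointwise by weak convergence. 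Since a weakly convergent sequence is tight by Prokhorov's theorem on Polish spaces, for $\epsilon > 0$ there is a compact $K$ with $\sup_n \mu_n(\Omega \setminus K) < \epsilon$; on $K$ the equi-Lipschitz family $(G_n)$ converges to $0$ uniformly, so $\big|\int G_n \,\dr\mu_n\big| \leq \sup_{K} |G_n| + 2\|g\|_\infty\,\epsilon$, and letting first $n \to \infty$ and then $\epsilon \to 0$ closes the argument. Combining the two terms gives the product convergence, hence the continuity of each $J_k$ and, through the supremum, the lower semi-continuity of $J$.
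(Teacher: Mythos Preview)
Your proof is correct and follows essentially the same approach as the paper: approximate $f$ from below by an increasing sequence $(f_k)$ of bounded continuous functions, write $J = \sup_k J_k$ by monotone convergence, show each $J_k$ is weakly continuous via the product convergence $\mu_n\otimes\mu_n \to \mu\otimes\mu$, and conclude since a supremum of l.s.c.\ functionals is l.s.c. The only difference is in how the product convergence is justified: the paper simply invokes Theorem~2.8 of \cite{billing} (weak convergence of marginals on a Polish space implies weak convergence of the product), whereas you supply a self-contained argument via the Lipschitz structure of the $f_k$ and Prokhorov tightness---more work, but it avoids the external reference and makes the mechanism transparent.
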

\begin{proof}
Since $f$ is l.s.c. and bounded from below by $0$ we can consider $(f_{k})_{k}$ a sequence of continuous and bounded functions converging increasingly to $f$ (see e.g \cite{San15a}). By the monotone convergence theorem $J_{k}(\mu) \rightarrow J(\mu)\stackrel{def}{=}\sup_{k} J_{k}(\mu)=\sup_{k} \int \int f_{k} d\mu d\mu$. Moreover every $J_{k}$ is continuous for the weak convergence. Using theorem 2.8 \cite{billing} on the Polish space $\Omega\times \Omega$ we know that if $\mu_{n}$ converges weakly to $\mu$ then the product measure $\mu_{n}\otimes \mu_{n}$ converges weakly to $\mu\otimes \mu$. In this way $\lim_{n\rightarrow \infty}J_{k}(\mu_n)=J_{k}(\mu)$ since $f_{k}$ are continuous and bounded. In particular every $J_{k}$ is l.s.c. We can conclude that $J$ is l.s.c. as the supremum of l.s.c. functionals on the metric space of $(\Pcal(\Omega),\delta)$ (see \emph{e.g.} \cite{San15a}). Here we equipped $\Pcal(\Omega)$ with a metric $\delta$ as \textit{e.g.} $\delta(\mu,\nu)=\sum_{k=1}^{\infty} 2^{-k}|\int_\Omega f_k d\mu-\int_\Omega f_k d\nu|$ (see remark 5.11 in \cite{ambrosio2005gradient}).
\end{proof}

Overall since $L$ is l.s.c. due to the continuity of $c_\Xcal,c_\Ycal$ we can apply Lemma \ref{lsc_on_measure} with $\Omega=\Xcal \times \Ycal$ and conclude that $\pi \rightarrow \int \int L \dr \pi \dr \pi$ is l.s.c. for the weak-convergence and by the means of Weierstrass theorem equation \eqref{gw} always admits a minimizer.

\begin{Remark} As a consequence of our formulation the resulting $\gw$ cost main be infinite. A simple condition to remedy this possibility would be $\int \int \big| c_{\Xcal}(x,x')-c_{\Ycal}(y,y') \big|^{p} \dr(\mu\otimes\mu)(x,x')\dr(\nu\otimes\nu)(y,y')<\infty$. Since: 
\begin{equation}
\begin{split}
\big| c_{\Xcal}(x,x')-c_{\Ycal}(y,y') \big|^{p}&\leq \left(|c_{\Xcal}(x,x')|+|c_{\Ycal}(y,y')|\right)^{p} \stackrel{*}{\leq} 2^{p-1} (|c_{\Xcal}(x,x')|^{p} + |c_{\Ycal}(y,y')|^{p})
\end{split}
\end{equation} 
if $c_{\Xcal},c_{\Ycal}$ are $p$-integrable functions, \ie\ $c_{\Xcal} \in L^{p}(\mu \otimes \mu)$ and $c_{\Ycal} \in L^{p}(\nu \otimes \nu)$ then the cost is finite (we used Hölder's inequality in (*) see Memo \ref{memo:holder})
\end{Remark}

\begin{figure*}[!b]
\begin{memo}[Hölder's inequality]
\label{memo:holder}
Let $(\Xcal,\mu)$ be a measurable space and $(f,g)\in L^{p}(\mu) \times L^{q}(\mu)$ with $p,q >0$ verifying $\frac{1}{p}+\frac{1}{q}=1$. The Hölder's inequality states:
\begin{equation}
\int |fg|\dr \mu \leq (\int |f|^{p}\dr \mu)^{\frac{1}{p}} (\int |g|^{q}\dr \mu)^{\frac{1}{q}}
\end{equation} 
As a corollary for $q \geq 1$ we have:
\begin{equation}
\label{holder}
\forall x,y \in \mathbb{R}_{+}, \ (x+y)^{q} \leq 2^{q-1}(x^{q} + y^{q}).
\end{equation}
Indeed, if $q>1$:

$(x+y)^{q} = \big((\frac{1}{2^{q-1}})^{\frac{1}{q}} \frac{x}{(\frac{1}{2^{q-1}})^{\frac{1}{q}}} + (\frac{1}{2^{q-1}})^{\frac{1}{q}} \frac{y}{(\frac{1}{2^{q-1}})^{\frac{1}{q}}}\big)^{q} \leq \big[ (\frac{1}{2^{q-1}})^{\frac{1}{q-1}} +(\frac{1}{2^{q-1}})^{\frac{1}{q-1}}\big]^{q-1} \big(\frac{x^{q}}{\frac{1}{2^{q-1}}} + \frac{y^{q}}{\frac{1}{2^{q-1}}}\big) \\
= \frac{x^{q}}{\frac{1}{2^{q-1}}} + \frac{y^{q}}{\frac{1}{2^{q-1}}}.$

Last inequality is a consequence of Hölder's inequality. The result remains valid for $q=1$.

\end{memo}

\end{figure*}

\begin{figure}[t]
   \begin{center}
        \includegraphics[width=0.5\linewidth]{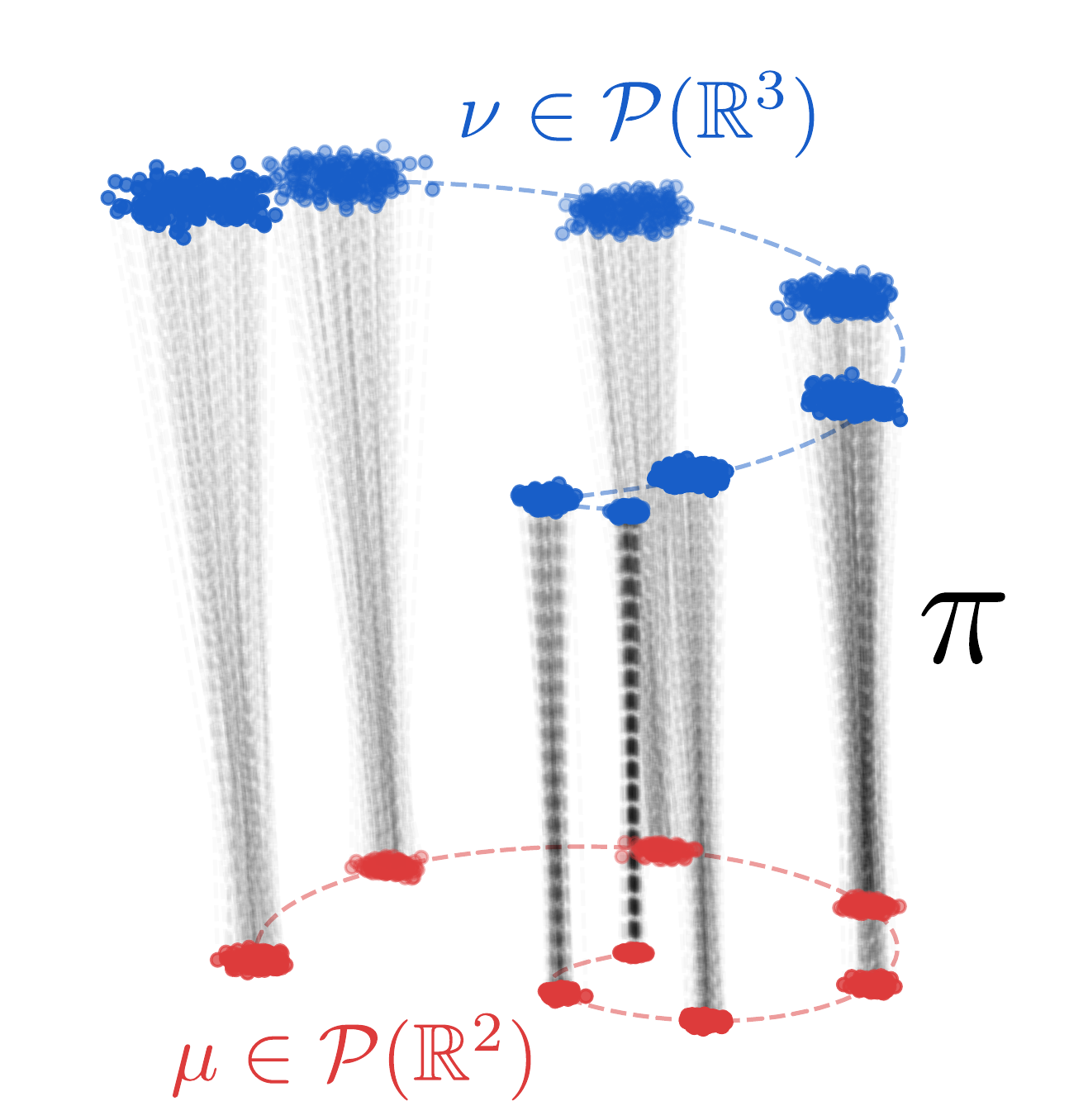}
    \caption{\label{fig:spiral_gw} GW problem between two discrete probability measures $\mu \in \P(\R^{2}),\nu \in \P(\R^{3})$. The optimal coupling $\pi$ is depicted in dashed lines. It associates points so as to minimize the distortion between all pair-to-pair distances \emph{within} the support of each measures. }
     \end{center}
\end{figure}

\paragraph{GW for measures on Euclidean spaces} One special case will be consider in Chapter \ref{cha:gw_euclidean}, that is when $\Xcal=\R^{p}$, $\Ycal=\R^{q}$ and $\mu \in \P(\R^{p}),\nu \in \P(\R^{q})$ with $p$ not necessarily equal to $q$. We can consider the GW problem with the standard Euclidean distances for $c_\Xcal,c_\Ycal$ on respectively $\R^{p},\R^{q}$. This setting illustrates the invariance property of the GW problem \textit{w.r.t.} rotations and translations. More precisely let $\mathbf{O}\in \mathcal{O}(p)$ and $\xbf_0 \in \R^{p}$ associated with $T(\xbf)=\mathbf{O}\xbf + \xbf_0$. Then the GW problem is invariant by $T$ that is $\gw_p^{p}(T\#\mu,\nu)=\gw_p^{p}(\mu,\nu)$ (same applies for $\nu$). To see that we simply used for all $\xbf,\xbf'$ $c_{\Xcal}(T(\xbf),T(\xbf'))=\|\mathbf{O}\xbf+\xbf_0-\mathbf{O}\xbf'-\xbf_0\|_2=\|\mathbf{O}(\xbf-\xbf')\|_2=\|\xbf-\xbf'\|_2$ since $\mathbf{O}\in \mathcal{O}(p)$. This property will be generalized for any metric space by considering the notion of \emph{isometry} and contrasts with the Wasserstein distance which is not invariant neither to translation or rotation of the support of one probability measure.

\begin{Example}
As a first illustration of the GW problem we consider two discrete probability measures $\mu \in \P(\R^{2}),\nu \in \P(\R^{3})$ following respectively a spiral in $\R^{2}$, $\R^{3}$ and composed of a mixture of Gaussian distributions. $c_\Xcal,c_\Ycal$ are defined by the Euclidean distances between the points. We compute an optimal coupling of the GW problem using the FW solver presented in Chapter \ref{cha:fgw} (see Section \ref{sec:solving_gw} for more details). The result is depicted in Figure \ref{fig:spiral_gw}.
\end{Example}

\subsection{Properties of GW \label{sec:prop_of_gw}}

One of the main property of the $\gw$ problem is that it allows for comparing probability measures whose supports dwell in different, potentially non-related, spaces by defining a notion of equivalence of two probability distributions in this case. This is made possible thanks to the concepts of \emph{isometry} and \emph{isomorphism}.

\begin{definition}[Isometry\label{isometrydef}]
Let $(\Xcal,d_{\Xcal})$ and $(\Ycal,d_{\Ycal})$ be two metric spaces. An isometry is a sujective map $\phi : \Xcal \rightarrow \Ycal$ that preserves the distances:
\begin{equation}
\label{isometryproperty2}
\forall x,x' \in X, d_{\Ycal}(\phi(x),\phi(x'))=d_{\Xcal}(x,x').
\end{equation}

\end{definition}

An isometry is necessarily bijective, since for $\phi(x)=\phi(x')$ we have $d_{\Ycal}(\phi(x),\phi(x'))=0=d_{\Xcal}(x,x')$ and hence $x=x'$ (in the same way $\phi^{-1}$ is also a isometry).  When it exists, $\Xcal$ and $\Ycal$ share the same "size" and any statement about $\Xcal$ which can be expressed through its distance is transported to $\Ycal$ by the isometry $\phi$.

\begin{Example}
Let us consider the two following graphs whose discrete metric spaces are obtained as shortest path between the vertices (see corresponding graphs in Figure \ref{isometric_spaces}). $$\small\begin{pmatrix}x_{1}\\x_{2}\\x_{3}\\x_{4}\end{pmatrix},{\underbrace{\begin{pmatrix}0&1&1&1 \\
    1&0&1&2\\
    1&1&0&2\\
    1&2&2&0
    \end{pmatrix}}_{d_\Xcal(x_i,x_j)}} \quad\text{ and }\quad \begin{pmatrix}y_{1}\\y_{2}\\y_{3}\\y_{4}\end{pmatrix},{\small\underbrace{\begin{pmatrix}0&1&1&1 \\
        1&0&2&2\\
        1&2&0&1\\
        1&2&1&0
    \end{pmatrix}}_{d_\Ycal(y_i,y_j)}}.$$  These spaces are isometric since the surjective map $\phi$ such that $\phi(x_{1})=y_{1}$, $\phi(x_{2})=y_{3}$, $\phi(x_{3})=y_{4}$, $\phi(x_{4})=y_{2}$ verifies equation \eqref{isometryproperty2}.
\end{Example}
    \tikzstyle{vertex1}=[circle,fill=black,minimum size=6pt,inner sep=0pt]
\tikzstyle{vertex2}=[circle,fill=black,minimum size=6pt,inner sep=0pt]
\tikzstyle{vertex3}=[circle,fill=black,minimum size=6pt,inner sep=0pt]
\tikzstyle{vertex4}=[circle,fill=black,minimum size=6pt,inner sep=0pt]

    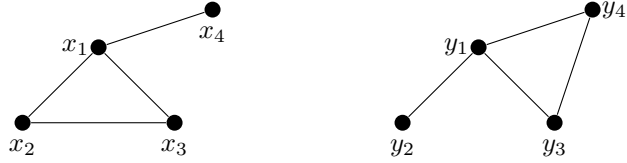
\begin{figure}[t]
    \centering
\tikzstyle{edge} = [draw]
\begin{tikzpicture}[scale=1, auto,swap]
    \foreach \pos/\name in {{(0,0)/a}}
            \node[vertex1] (\name) at \pos {};
    \foreach \pos/\name in {{(2,0)/b}}
            \node[vertex2] (\name) at \pos {};
    \foreach \pos/\name in {{(1,1)/c}}
            \node[vertex3] (\name) at \pos {};
    \foreach \pos/\name in {{(2.5,1.5)/d}}
            \node[vertex4] (\name) at \pos {};
\def\x{5}

    \foreach \pos/\name in {{(0+\x,0)/e}}
            \node[vertex1] (\name) at \pos {};
    \foreach \pos/\name in {{(2+\x,0)/f}}
            \node[vertex2] (\name) at \pos {};
    \foreach \pos/\name in {{(1+\x,1)/g}}
            \node[vertex3] (\name) at \pos {};
    \foreach \pos/\name in {{(2.5+\x,1.5)/h}}
            \node[vertex4] (\name) at \pos {};

    \foreach \source/ \dest in {b/a, c/a, c/b, c/d}
        \path[edge] (\source) -- (\dest);

    \foreach \source/ \dest in {e/g, g/f, g/h, h/f}
        \path[edge] (\source) -- (\dest);

\def\y{0.1}
\foreach \pos/ \name in {{(0,00-\y)/x_2}, {(2,00-\y)/x_3}, {(2.5,1.50-\y)/x_4}}
  \draw \pos node[below, scale = 1]{$\name$};
\draw (1,1) node[left, scale = 1]{$x_1$};

\foreach \pos/ \name in {{(0+\x,0-\y)/y_2}, {(2+\x,00-\y)/y_3}}
  \draw \pos node[below, scale = 1]{$\name$};
\draw (1+\x,1) node[left, scale = 1]{$y_1$};
\draw (2.5+\x,1.5) node[right, scale = 1]{$y_4$};

 \end{tikzpicture}
  \caption{Two isometric metric spaces. Distances between the nodes are given by the shortest path, and the weight of each edge is equal to 1.\label{isometric_spaces}}
     \end{figure}

Another natural and straightforward example is two point clouds rotated from each other. More precisely if we consider $(\xbf_{i})_{i\in \integ{n}}$, $(\ybf_{i})_{i\in \integ{n}}$ where $\xbf_{i},\ybf_{i} \in \R^{p} \times \R^{p}$ equipped with the Euclidean norm $\|.\|_{2}$. Suppose that there exists a orthogonal matrix $\Obf \in \mathcal{O}(p)$ such that $\ybf_i=\Obf \xbf_i$ for all $i \in \integ{n}$ (with a slight abuse of notations we identify the matrix with its linear application). Then for all $(i,j) \in \integ{n}^{2}$ we have: 
\begin{equation}
\|\ybf_i-\ybf_j\|_2=\|\Obf \xbf_i-\Obf \xbf_j\|_2=\|\Obf (\xbf_i-\xbf_j)\|_2=\|\xbf_i-\xbf_j\|_2
\end{equation} 
since $\Obf \in \mathcal{O}(p)$ so that $\Xcal=(\xbf_{i})_{i\in \integ{n}}$ and $\Ycal=(\ybf_{i})_{i\in \integ{n}}$ are isometric.

This notion can be enriched in order to take into account the measures, which results in the notion of \emph{strong isomorphism}:

\begin{definition}[Strong isomorphism]
\label{strong_iso}
Let $(\Xcal,d_{\Xcal}),(\Ycal,d_{\Ycal})$ be Polish spaces and $\mu \in \P(\Xcal),\nu \in \P(\Ycal)$. We say that $(\Xcal,d_{\Xcal},\mu)$ is \emph{ strongly isomorphic} to $(\Ycal,d_{\Ycal},\nu)$ if there exists a bijection $\phi: \text{supp}(\mu) \rightarrow \text{supp}(\nu)$ such that:
\begin{enumerate}[label=\roman*]
\item $\phi$ is an isometry, \textit{i.e.} $d_{\Ycal}(\phi(x),\phi(x'))=d_{\Xcal}(x,x')$ for $x,x' \in \text{supp}(\mu)^{2}$
\item $\phi$ pushes $\mu$ forward to $\nu$,  \textit{i.e.} $\phi \# \mu=\nu$ 
\end{enumerate}
When it is clear from the context we will simply say that $\mu$ is strongly isomorphic to $\nu$ when previous conditions are satisfied.
\end{definition}

\begin{Example}
    Let us consider two mm-spaces $(\Xcal=\{x_1,x_2\},d_{\Xcal}=\{1\},\mu=\{\frac{1}{2},\frac{1}{2}\})$ and $(\Ycal=\{y_1,y_2\},d_{\Ycal}=\{1\},\nu=\{\frac{1}{4},\frac{3}{4}\})$ as depicted in Figure \ref{isometric_not_isomoprhic}. These spaces are isometric but not isomorphic as there exists no measure preserving map which pushes $\mu$ forward to $\nu$

    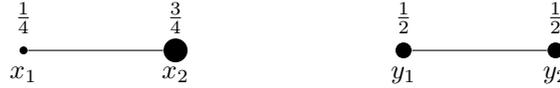
\begin{figure}[t]
    \centering
\tikzstyle{vertex1}=[circle,fill=black,minimum size=6pt,inner sep=0pt]
\tikzstyle{vertex2}=[circle,fill=black,minimum size=3pt,inner sep=0pt]
\tikzstyle{vertex3}=[circle,fill=black,minimum size=9pt,inner sep=0pt]

\tikzstyle{edge} = [draw]
\begin{tikzpicture}[scale=1, auto,swap]
    \foreach \pos/\name in {{(0,0)/a}}
            \node[vertex2] (\name) at \pos {};
    \foreach \pos/\name in {{(2,0)/b}}
            \node[vertex3] (\name) at \pos {};
\def\x{5}

    \foreach \pos/\name in {{(0+\x,0)/e}}
            \node[vertex1] (\name) at \pos {};
    \foreach \pos/\name in {{(2+\x,0)/f}}
            \node[vertex1] (\name) at \pos {};

    \foreach \source/ \dest in {b/a, e/f}
        \path[edge] (\source) -- (\dest);

\def\y{0.1}
\foreach \pos/ \name in {{(0,00-\y)/x_1}, {(2,00-\y)/x_2}, {(0+\x,0-\y)/y_1}, {(2+\x,00-\y)/y_2}}
  \draw \pos node[below, scale = 1]{$\name$};
\def\y{-0.1}
\foreach \pos/ \name in {{(0,00-\y)/\frac{1}{4}}, {(2,00-\y)/\frac{3}{4}}, {(0+\x,0-\y)/\frac{1}{2}}, {(2+\x,00-\y)/\frac{1}{2}}}
  \draw \pos node[above, scale = 1]{$\name$};

 \end{tikzpicture}
    \caption{Two isometric but not isomorphic spaces. \label{isometric_not_isomoprhic}}
    \end{figure}
\end{Example}

Another notion of isomorphism deserves attention especially when $c_\Xcal,c_\Ycal$ are not distances. In this case we will consider the following \emph{weak isomorphism} property:  

\begin{definition}[Weak isomorphism]
\label{weak_iso}
Let $\Xcal,\Ycal$ be Polish spaces and $\mu \in \P(\Xcal),\nu \in \P(\Ycal)$. We say that $(\Xcal,c_{\Xcal},\mu)$ is \emph{weakly isomorphic} to $(\Ycal,c_{\Ycal},\nu)$ if there exists $(\mathcal{Z},c_{\mathcal{Z}},m)$, with $\text{supp}(m)=\mathcal{Z}$ and maps $\phi_{0}: \mathcal{Z} \rightarrow \Xcal$,$\phi_{1}: \mathcal{Z} \rightarrow \Ycal$ such that:
\begin{enumerate}[label=\roman*]
\item $c_{\mathcal{Z}}(z,z')=c_{\Xcal}(\phi_{0}(x),\phi_{0}(x'))=c_{\Ycal}(\phi_{1}(x),\phi_{1}(x'))$ for $z,z' \in \mathcal{Z}^{2}$
\item $\phi_{0} \# m=\mu$ and $\phi_{1} \# m=\nu$ 
\end{enumerate}
When it is clear from the context we will simply say that $\mu$ is weakly isomorphic to $\nu$ when previous conditions are satisfied.
\end{definition}

The weak isomorphism brings to light a kind of ``tripod structure'' in which the isomorphism is defined trough a third space $\mathcal{Z}$. In fact both notions are equivalent when $c_\Xcal,c_\Ycal$ are distances as stated in the next proposition \cite[lemma 1.10]{Sturm2012}:

\begin{prop}
The spaces $(\Xcal,d_{\Xcal},\mu)$ and $(\Ycal,d_{\Ycal},\nu)$ are strongly isomorphic if and only if $(\Xcal,d_{\Xcal},\mu)$ and $(\Ycal,d_{\Ycal},\nu)$ are weakly isomorphic.
\end{prop}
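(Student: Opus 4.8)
The plan is to prove both implications directly from the definitions, exploiting the fact that here $c_\Xcal=d_\Xcal$ and $c_\Ycal=d_\Ycal$ are genuine distances rather than arbitrary costs.

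For the implication strong $\Rightarrow$ weak I would build the ``tripod'' explicitly from the strong isomorphism $\phi\colon \text{supp}(\mu)\to\text{supp}(\nu)$. Set $\mathcal{Z}=\text{supp}(\mu)$, take $c_\mathcal{Z}$ to be $d_\Xcal$ restricted to $\text{supp}(\mu)^2$, $m=\mu$, let $\phi_0$ be the inclusion of $\text{supp}(\mu)$ into $\Xcal$ and $\phi_1=\phi$. Then $\text{supp}(m)=\mathcal{Z}$, and $\mathcal{Z}$ is Polish as a closed subset of a Polish space. Condition (i) holds because $c_\mathcal{Z}(z,z')=d_\Xcal(z,z')=d_\Xcal(\phi_0(z),\phi_0(z'))$ trivially, while $d_\Ycal(\phi_1(z),\phi_1(z'))=d_\Ycal(\phi(z),\phi(z'))=d_\Xcal(z,z')$ by the isometry property of $\phi$. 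Condition (ii) holds since $\phi_0\#m=\mu$ and $\phi_1\#m=\phi\#\mu=\nu$.

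For the converse weak $\Rightarrow$ strong, I start from a tripod $(\mathcal{Z},c_\mathcal{Z},m)$ with maps $\phi_0,\phi_1$. The key observation is that, because $d_\Xcal$ and $d_\Ycal$ are distances, condition (i) yields the equivalence $\phi_0(z)=\phi_0(z') \iff c_\mathcal{Z}(z,z')=0 \iff \phi_1(z)=\phi_1(z')$. Hence $\phi_0$ and $\phi_1$ identify exactly the same points of $\mathcal{Z}$, and I can define $\psi\colon\phi_0(\mathcal{Z})\to\phi_1(\mathcal{Z})$ by $\psi(\phi_0(z))=\phi_1(z)$: it is well defined (forward implication), surjective by construction, and injective (backward implication). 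It is an isometry onto its image, since for $x=\phi_0(z)$ and $x'=\phi_0(z')$ one has $d_\Ycal(\psi(x),\psi(x'))=d_\Ycal(\phi_1(z),\phi_1(z'))=c_\mathcal{Z}(z,z')=d_\Xcal(x,x')$. It then remains to upgrade $\psi$ to a bijection on the supports and to verify the pushforward. Since $c_\mathcal{Z}$ is the metric of $\mathcal{Z}$, relation (i) makes $\phi_0,\phi_1$ $1$-Lipschitz, hence continuous, so that $\text{supp}(\mu)=\overline{\phi_0(\mathcal{Z})}$ and $\text{supp}(\nu)=\overline{\phi_1(\mathcal{Z})}$. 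Being uniformly continuous, $\psi$ extends uniquely to an isometry $\bar\psi\colon \text{supp}(\mu)\to\text{supp}(\nu)$; the extension of $\psi^{-1}$ supplies a two-sided inverse by density and continuity, so $\bar\psi$ is a bijective isometry. Finally $\bar\psi\circ\phi_0=\phi_1$ holds on all of $\mathcal{Z}$ (directly from the definition of $\psi$ on $\phi_0(\mathcal{Z})$), whence
\[
\bar\psi\#\mu=\bar\psi\#(\phi_0\#m)=(\bar\psi\circ\phi_0)\#m=\phi_1\#m=\nu,
\]
so $\bar\psi$ is the desired strong isomorphism.

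I expect the main obstacle to be the converse direction, precisely because $\phi_0,\phi_1$ may fail to be injective, so one cannot simply set $\phi=\phi_1\circ\phi_0^{-1}$; the remedy is the ``same kernel'' equivalence above combined with the density/extension argument to pass from $\phi_0(\mathcal{Z})$ to its closure $\text{supp}(\mu)$. The identity $\bar\psi\circ\phi_0=\phi_1$ is what lets me transport the pushforward condition to the extension cleanly, avoiding any measurability question about the image $\phi_0(\mathcal{Z})$.
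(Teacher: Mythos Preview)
The paper does not supply its own proof of this proposition; it simply cites Lemma~1.10 of Sturm~\cite{Sturm2012}. So there is no in-paper argument to compare against, and the question is whether your free-standing proof is correct. It is.

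The forward direction is routine. For the converse you correctly isolate the crux: because $d_\Xcal$ and $d_\Ycal$ separate points, condition~(i) forces $\phi_0$ and $\phi_1$ to have identical fibers, so $\psi(\phi_0(z))=\phi_1(z)$ is a well-defined bijective isometry between the images. The extension-by-uniform-continuity step and the pushforward computation via $\bar\psi\circ\phi_0=\phi_1$ are both clean.

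One small point worth tightening: you assert that ``$c_{\mathcal Z}$ is the metric of $\mathcal Z$'' to get continuity of $\phi_0,\phi_1$, but the paper's Definition~\ref{weak_iso} does not literally say that the tripod $(\mathcal Z,c_{\mathcal Z},m)$ is an mm-space with $c_{\mathcal Z}$ generating its Polish topology. In the mm-space setting of this proposition that reading is natural (and matches Sturm), but if you want to be robust to the letter of the paper's definition you can drop the continuity claim altogether: the inclusion $\operatorname{supp}(\mu)\subseteq\overline{\phi_0(\mathcal Z)}$ already follows from $\phi_0\#m=\mu$ and measurability alone (any open $U$ with $\mu(U)>0$ has $\phi_0^{-1}(U)\neq\emptyset$). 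You then extend $\psi$ to an isometry $\bar\psi$ on $\overline{\phi_0(\mathcal Z)}$, observe that $\bar\psi\#\mu=\nu$ as you wrote, and use that a homeomorphism sends supports to supports, so $\bar\psi(\operatorname{supp}(\mu))=\operatorname{supp}(\nu)$. The full equality $\operatorname{supp}(\mu)=\overline{\phi_0(\mathcal Z)}$ is never actually needed.
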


However the weak-isomorphism property has its own interest when working with arbitrary similarity measures $c_\Xcal,c_\Ycal$. The following theorem is fundamental for GW and aims to unify the metric properties of $\gw$ given in \cite{Sturm2012,chowdhury_gromovwasserstein_2019}. It proves that GW defines a metric \textit{w.r.t.} the isomorphism notions:
\begin{theo}[Metric properties of $\gw$]
\label{metric_gw}
In the following $(\Xcal,d_{\Xcal}),(\Ycal,d_{\Ycal})$ are Polish spaces and $\mu \in \P(\Xcal),\nu \in \P(\Ycal)$
\begin{enumerate}[label=\roman*]
\item $\gw_{p}$ is symmetric, positive and satisfies the triangle inequality. More precisely for $(\Xcal,c_{\Xcal},\mu)$, $(\Ycal,c_{\Ycal},\nu)$, $(\Zcal,c_{\Zcal},m)$ we have:
\begin{equation}
\gw_{p}(c_{\Xcal},c_{\Ycal},\mu,\nu)\leq \gw_{p}(c_{\Xcal},c_{\Zcal},\mu,m)+\gw_{p}(c_{\Zcal},c_{\Ycal},m,\nu)
\end{equation}
\item $\gw_{p}(d_{\Xcal},d_{\Ycal},\mu,\nu)=0$ if and only if $(\Xcal,d_{\Xcal},\mu)$ and $(\Ycal,d_{\Ycal},\nu)$ are strongly isomorphic.
\item $\gw_{p}(c_{\Xcal},c_{\Ycal},\mu,\nu)=0$ if and only if $(\Xcal,c_{\Xcal},\mu)$ and $(\Ycal,c_{\Ycal},\nu)$ are weakly isomorphic.
\item More generally, for any $q\geq 1$, $\gw_{p}(d_{\Xcal}^{q},d_{\Ycal}^{q},\mu,\nu)=0$ if and only if $(\Xcal,d_{\Xcal},\mu)$ and $(\Ycal,d_{\Ycal},\nu)$ are strongly isomorphic.
\end{enumerate}
\end{theo}

\begin{proof}
For the first point (i) positiveness is straightforward. For the triangle inequality and symmetry see \cite[Theorem 16]{chowdhury_gromovwasserstein_2019}. For (ii) see \cite[Lemma 1.10]{Sturm2012} and for (iii) see \cite[Theorem 18]{chowdhury_gromovwasserstein_2019}. Note in the proof that for (iii) the result is still valid even if $c_\Xcal,c_\Ycal$ are not $p$-integrable. For (iv) see \cite[Lemma 9.2]{Sturm2012}. 
\end{proof}

This theorem can endow the space of all spaces of the form $(\Xcal,c_\Xcal,\mu)$ with a distance defined by GW which, however, requires the finiteness of $\gw$. More precisely:

\begin{definition}
Let $\Xcal$ be a Polish space, $\mu \in \P(\Xcal)$ and $c_{\Xcal}:\Xcal \rightarrow \R$ be measurable. We define the size $size_{p}$ of $\Xcal$, given $c_{\Xcal}$ and $\mu$ by $\text{size}_{p}(\Xcal,c_{\Xcal},\mu)=(\int c_{\Xcal}(x,x')^{p}\dr \mu(x)\dr \mu(x'))^{\frac{1}{p}}$
\\ \\
We define $\mathbb{X}_{p}$ be the space of all metric measure spaces with finite $L^{p}$-size, \textit{i.e} $\mathbb{X}_{p}=\{(\Xcal,d_{\Xcal},\mu)| \text{size}_{p}(\Xcal,d_{\Xcal},\mu)<+\infty\}$ where $(\Xcal,d_{\Xcal})$ is a Polish space and $\mu \in \P(\Xcal)$.
\\ \\
We define also $\mathbb{N}_{p}$ be the space of all network measure spaces \cite{chowdhury_gromovwasserstein_2019} with finite $L^{p}$-size, \textit{i.e.} $\mathbb{N}_{p}=\{(\Xcal,c_{\Xcal},\mu)| \text{size}_{p}(\Xcal,c_{\Xcal},\mu)<+\infty\}$ where $\Xcal$ is a Polish space, $\mu \in \P(\Xcal)$ and $c_{\Xcal}$ a continuous measurable function.

\end{definition}

The function $size_{p}$ quantifies somehow an average diameter of $\Xcal$ given a probability measure and a function $c_\Xcal$. Using this notion and Theorem \ref{metric_gw} we now state the main theorem about the metric properties of $\gw$:

\begin{theo}[$\gw$ is a distance]
$\gw_{p}$ is a distance on $\mathbb{X}_{p}$ quotiented by the strong isomorphisms. $\gw_{p}$ is a distance on $\mathbb{N}_{p}$ quotiented by the weak isomorphisms.
\end{theo}

This theorem has a lot of implications. It endows the space of all metric (network) measure spaces with a topology, a geometric structure, induced by Gromov-Wasserstein and, as such, allows the use of a wide family of geometric tools and a notion of convergence of metric measure spaces. Moreover it indicates that GW is well suited for comparing objects with respect to a large class of invariants that are for instance rotations, translations or permutations. This property is important \textit{e.g.} for shape comparison where the orientation of a shape does not define its nature or for graphs where any permutation of the nodes result in the same graph. It is sometimes valuable to have a notion of distance which is insensitive to these transformations so as to focus properly on what matters rather than to encode the invariance (see Remark \ref{rk:invariants}). Finally if GW vanishes it implies necessarily that the objects are isomorphic which is interesting for detecting such cases.  

Note also that GW is deeply connected to the Gromov-Hausdorff distance \cite{gromov_metric_1999} that aims at measuring how far are $(\Xcal,d_\Xcal)$ and $(\Ycal,d_\Ycal)$ from being isometric and can be used for studying the convergence of metric spaces \cite{burago_course_2001}. However computing this distance results in a highly non-convex optimization problem whose global solution is untractable. As shown in \cite{memoli_gw} the introduction of measures turns out to ``smooth'' the definition of the Gromov-Hausdorff distance and results in the Gromov-Wasserstein distance.

\begin{Remark}[Implicit or explicit encoding of the invariances]
\label{rk:invariants}
Let $\mu,\nu \in \Pcal(\R^{d})\times \Pcal(\R^{d})$ and $c: \R^{d} \times \R^{d} \rightarrow \R$ a cost. In \cite{alavarez:2019} authors propose the following OT problem:
\begin{equation}
InvOT(\mu,\nu)=\min_{\pi \in \Pi(\mu,\nu)} \min_{f \in F} \int c(\xbf,f(\ybf)) \dr \pi(\xbf,\ybf)
\end{equation}
where $F$ is a class of functions from $\R^{d}$ to $\R^{d}$ aiming at encoding a global transformation of the features. For example $F$ can be defined as $\mathcal{O}(d)$ the set of orthogonal transformations or any linear transformation with bounded Schatten norm (see \cite{alavarez:2019} or Chapter \ref{cha:gw_euclidean} for more details). When considering $F=\mathcal{O}(d)$ the resulting $InvOT$ becomes invariant by rotation of the support of the target measure. It could be interesting in a setting where one wants to match two distributions modulo a rotation as \textit{e.g.} in unsupervised word translation where word embedding algorithms are known to produce vectors intrinsically invariant to angle \cite{alavarez:2019,Grave2018UnsupervisedAO}. This approach can be put into perspective with the $GW$ distance as, in the $\gw$ case, one makes the implicit assumption, or prior, that the invariances are the isometric transformations of the data, whereas in the $InvOT$ approach one makes the prior that we know somehow which class of invariances is of interest for the problem and we encode it into the loss. Both approach are relevant and related (see Chapter \ref{cha:gw_euclidean}): if we have another prior than isometric transformations it is maybe more suited to encode it directly \textit{via} $F$ and if we know that isometries are relevant we can directly build upon $\gw$.
\end{Remark}

\paragraph{Geodesics and GW interpolation} The space of all mm-spaces $\mathbb{X}_{p}$ endow with GW has also a nice geodesic structure which is important in order to derive dynamic formulations and gradient flows \cite{ambrosio2005gradient}. Informally in  $\mathbb{X}_{p}$ we can connect any two points (that are mm-spaces) with a curve that somehow represents the shortest path connecting these points. More precisely given two mm-spaces $(\Xcal,d_\Xcal,\mu),(\Ycal,d_\Ycal,\nu)$ the curve $t\in [0,1] \rightarrow (\Xcal \times \Ycal, d_t, \pi^{*})$ where $\pi^{*}$ is an optimal coupling of the GW problem between $\mu$ and $\nu$ and:
\begin{equation}
\forall (x,y), (x',y')\in (\Xcal \times \Ycal)^{2}, \ d_t((x,y), (x',y'))=(1-t)d_\Xcal(x,x')+td_\Ycal(y,y')
\end{equation}
is a geodesic \cite{Sturm2012}. However computing this geodesic is often intractable in practice since it implies the calculation of the cartesian product $\Xcal \times \Ycal$. One can rely instead on the barycenter formulation defined in Section \ref{sec:bary_gw}.

\subsection{Solving GW \label{sec:solving_gw}}
In this section we describe some numerical solutions to the GW problem. In the following $\mu=\sum_{i=1}^{n} a_i \delta_{x_i} \in \P(\Xcal)$, $\nu=\sum_{j=1}^{m} b_j \delta_{y_j} \in \P(\Ycal)$ are discrete probability measures over respectively $(\Xcal,d_{\Xcal})$, $(\Ycal,d_{\Ycal})$. We note also $\C_1,\C_2$ the matrices of pair-to-pair distances inside each space, \ie\ $\forall (i,k) \in \integ{n}^{2}, \ C_{1}(i,k)=d_{\Xcal}(x_i,x_k)$ and $\forall (j,l) \in \integ{m}^{2}, \ C_{2}(j,l)=d_{\Ycal}(y_j,y_l)$. The GW problem aims at solving:

\begin{equation}
\label{discrete_gw}
\gw_p^{p}(\C_1,\C_2,\a,\b)=\underset{\GG \in \couplingset(\a,\b)}{\min}\sum_{i,j,k,l} |C_{1}(i,k)-C_{2}(j,l)|^{p} \pi_{i,j}\pi_{k,l}= \underset{\GG \in \couplingset(\a,\b)}{\min}\froeb{\L(\C_{1},\C_{2})^{p} \otimes \GG}{\GG} \\
\end{equation}
where we define $L(\C_{1},\C_{2})$ as the tensor $\L(\C_{1},\C_{2})=(|C_{1}(i,k)-C_{2}(j,l)|)_{i,j,k,l}$ and $\otimes$ is the the tensor-matrix multiplication, \emph{i.e.} for a tensor $\L=(L_{i,j,k,l})$, $\L \otimes \GG$ is the matrix $\left(\sum_{k,l} L_{i,j,k,l}\pi_{k,l}\right)_{i,j}$. 

The optimization problem \eqref{discrete_gw} is a non-convex Quadratic Program (QP) which is NP-hard in general \cite{Loiola_survey_qap} and notoriously hard to approximate. When $p=2$, \ie\ when $\L=|.|^{2}$ equation \eqref{discrete_gw} can be recast as:

\begin{equation}
\underset{\GG \in \couplingset(\a,\b)}{\min} \tr(\mathbf{c}_{\C_1,\C_2}\GG^{T})-2\tr(\C_1\GG \C_2\GG^{T})
\end{equation}

where $\mathbf{c}_{\C_1,\C_2}=(\C_1)^{2}\a \one_{m}^{T}+\one_{n}\b^{T}(\C_2)^{2}$ (see Proposition 1 in \cite{peyre2016gromov}). In standard QP form this problem reads also:
\begin{equation}
\label{eq:graph_matching_gw}
\underset{\GG \in \couplingset(\a,\b)}{\min} \mathbf{c}^{T}\xbf(\GG) + \frac{1}{2} \xbf(\GG)^{T} \mathbf{Q} \xbf(\GG)
\end{equation}
where $\xbf(\GG)=\vec(\GG),\mathbf{c}=\vec(\mathbf{c}_{\C_1,\C_2})$ and $\mathbf{Q}=-4\C_1 \otimes_{K} \C_2 $ with $\kron$ the Kronecker product of matrices defined for two arbitrary matrices $\Abf \in \R^{n\times m},\Bbf^{p\times q}$ as $\Abf \kron \Bbf \in \R^{np \times mq}$ with $\Abf \kron \Bbf=(A_{i,j}\Bbf)_{i,j}$ (see Memo \ref{memo:kron}). Equation \eqref{eq:graph_matching_gw} is a non-convex QP as the Hessian $\mathbf{Q}$ is not positive semi-definite in general (its eigenvalues are the products of the eigeinvalues of $\C_1,\C_2$). 

\paragraph{Relations with Quadratic Assignment Problem and Graph Matching} The $\gw$ problem is very related to the so called \emph{Quadratic Assignment Problem} (QAP). This problem was first introduced by Koopmans and Beckmann \cite{koopmans} to model a plant location problem and plays today many roles in optimization. Given two matrices $\Abf=(a_{i,j})_{(i,j) \in \integ{n}^{2}}$ and $\Bbf=(b_{i,j})_{(i,j) \in \integ{n}^{2}}$ the standard form for the QAP reads:
\begin{equation}
\label{eq:the_standard_qap}
\min_{\sigma \in S_n} \sum_{i=1,j=1}^{n} a_{\sigma(i)\sigma(j)} b_{ij}
\end{equation}
The QAP can be understood as a \emph{facility location} problem: given $n$ facilities and $n$ locations, one wants to assign each facility to a location with $a_{ij}$ the flow of material moving from facility $i$ to facility $j$ and $b_{ij}$ the distance from facility $i$ to facility $j$. In this context the cost of \emph{simultaneously} locating facility $\sigma(i)$ to location $i$ and facility $\sigma(j)$ to location $j$ is $ a_{\sigma(i)\sigma(j)} b_{ij}$. In this model one wants to find the assignment that minimizes the overall cost of locating each facility. This problem was considered for example in \cite{hospital} for locating hospital departments so as to minimize the total distance traveled by patients but it also covers a large variety of applications such as scheduling \cite{schedule}, parallel and distributed computing \cite{computer_Bokhari} or balancing of turbine runners \cite{hydro1}. For a comprehensive survey on this topic we refer to \cite{cela_all,Loiola_survey_qap}. Unfortunately the QAP is NP-Hard in general and only few special cases are known to be solvable in polynomial time. That is the case for example when matrices $\Abf$ and $\Bbf$ have simple known structures, such as a diagonal structure and Toeplitz or separability properties such as $a_{i,j}=\alpha_{i}\alpha_{j}$ \cite{cela_new_2016,Cela-Schmuck-Wimer-Woeginger:2011,cela-2013}). The question of finding a polynomial time algorithm that solves the QAP when $\Abf$ and $\Bbf$ satisfy the Monge and the anti-Monge properties are, to the best of our knowledge, still open \cite{cela_all}. 

The QAP is intrinsically linked with the \emph{graph matching} problem whose literature is also extensive (see \textit{e.g.} \cite{berg,graph_matching_relax,Zaslavskiy_2009,NIPS2018_7323,caetano}). The graph matching problem refers to optimization problems where the goal is to match edge affinities of two graphs that are represented by symmetric matrices $\Abf=(a_{i,j})_{(i,j) \in \integ{n}^{2}}$ and $\Bbf=(b_{i,j})_{(i,j) \in \integ{n}^{2}}$. A common approach for these types of problem is to attempt to solve:
\begin{equation}
\label{eq:graph1}
\min_{\X \in \Pi_n} \|\Abf \X - \X \Bbf\|^{2}_{\F}
\end{equation} 
where $\Pi_n$ is the set of permutation matrices, \ie\ $\X=(x_{ij})_{(i,j) \in \integ{n}^{2}} \in \Pi_n$ if $x_{ij} \in \{0,1\}$ and $\sum_{i=1}^{n}x_{ij}=\sum_{j=1}^{n}x_{ij}=1$. By noticing that $\|\Abf \X - \X \Bbf\|^{2}_{\F}=\|\Abf \X\|_{\F}^{2}+\|\X \Bbf\|^{2}_{\F}-2\tr(\Abf \X \Bbf \X^{T})$ and that $\|\Abf \X\|_{\F}^{2}=\tr(\X^{T}\Abf^{T}\Abf \X)=\tr(\X \X^{T}\Abf^{T}\Abf)=\tr(\Abf^{T}\Abf)=\|\Abf\|_{\F}^{2}$ the problem \eqref{eq:graph1} is equivalent to:
\begin{equation}
\min_{\X \in \Pi_n} -\tr(\Abf \X \Bbf \X^{T})
\end{equation}
As such the graph matching problem is a QAP and consequently is NP-Hard in general. A way of finding a approximate solution is to consider a relaxation of the constraints by replacing $\Pi_n$ by its convex-hull, namely the set of doubly stochastic matrices $DS=\{\X \in \R^{n \times n}| \X \one_n=\one_n, \X^{T} \one_n=\one_n, \X \geq 0 \}$ (see \cite{dsplusplus,Bernard_tighter,Schellewald}). 

The previous discussion can relate the $\gw$ problem with the graph matching one. Indeed when we consider two discrete probability measures with the same number of atoms and with uniform weights (\ie\ $\a=\b=\one_n/n$) then $\gw$ is equivalent to the relaxation of the graph matching problem with affinity matrices $\C_1,\C_2$. To see this it suffices to notice that, in this case, $\couplingset(\a,\b)$ is the set of doubly stochastic matrices (modulo a factor $n$ which has no impact). We will see in Chapter \ref{cha:gw_euclidean} that the QAP and graph matching point of views can be quite enlightening for deriving properties of the GW distance.

\begin{figure*}[!b]
\begin{memo}[Kronecker product and vec operator]
\label{memo:kron}
Let $\Abf \in \R^{n\times m},\Bbf^{p\times q}$ be two arbitrary matrices. The $\vec$ operator converts the matrix into a column vector. It is defined as $\vec(\Abf) \in \R^{nm\times 1}=(A_{1,1},\dots,A_{n,1},A_{1,2},\dots,A_{n,2},\dots,A_{n,m})^{T}$. The Kornecker product of two matrices result in a block matrix $\Abf \kron \Bbf \in \R^{np \times mq}$ defined by $\Abf \kron \Bbf=(A_{i,j}\Bbf)_{i,j}$.
These two operators satisfy the following properties (see \cite{matrix_cookbook}):
\begin{itemize}
\item $\vec(\Abf\Bbf)=(\mathbf{I}\kron \Abf)\vec(\Bbf)=(\Bbf^{T}\kron \mathbf{I})\vec(\Abf)$
\item $ (\Abf \kron \Bbf)^{T}=(\Abf^{T}\kron \Bbf^{T})$
\item $(\Abf \kron \Bbf)(\Bbf \kron \Abf)=(\Abf\Bbf \kron \Bbf\Abf)$
\item $\tr(\Abf^{T}\Bbf)=\vec(\Abf)^{T}\vec(\Bbf)$
\end{itemize}
\end{memo}
\end{figure*}

\paragraph{Entropic regularization} In \cite{peyre2016gromov,solomon_entropic_2016} authors propose to solve \eqref{discrete_gw} using entropic regularization which results in the following optimization problem:

\begin{equation}
\label{entropic_discrete_gw}
\underset{\GG \in \couplingset(\a,\b)}{\min}\froeb{\L(\C_{1},\C_{2})^{p} \otimes \GG}{\GG}-\epsilon H(\GG). \\
\end{equation}

\begin{algorithm}[t]
    \caption{\label{alg:entropic_reg_gw} Solving Entropic-regularized GW}
    \begin{algorithmic}[1]
    \State $\mathbf{a}, \mathbf{b}, \C_1,\C_2, \epsilon >0$
    \State{ Initalize $\GG=\a\b^{T}, \mathbf{u}^{(0)},\mathbf{v}^{(0)}=\one$}
    \For{$i=1,\dots,n^{1}_{it}$}
    \State{Compute the gradient of the GW loss $\C=2\L(\C_{1},\C_{2})^{p} \otimes \GG$}
      \State{Set $\mathbf{K}=\exp (-\frac{\mathbf{C}}{\epsilon})$} 
      \State \text{ // Do Sinkhorn-Knopp Algorithm \ref{alg:sk}} 
        \For{$i=1,\dots,n^{2}_{it}$} 
        \State $\mathbf{u}^{(i)}=\mathbf{a} \oslash \mathbf{K}^{\top}\mathbf{v}^{(i-1)}$ // \text{ Update left scaling } 
        \State $\mathbf{v}^{(i)}=\mathbf{b} \oslash \mathbf{K} \mathbf{u}^{(i-1)}$ // \text{ Update right scaling } 
        \EndFor 
    \EndFor
    \State \Return $\GG^{*}=\diag({\bf u})\mathbf{K}\diag({\bf v})$
    \end{algorithmic}
\end{algorithm}

This is a non-convex optimization which was tackled using projected gradient descent using the geometry of the $KL$ divergence for both the gradient step and the projection step \cite{peyre2016gromov}. More precisely by denoting $E_{\epsilon}(\GG)$, the loss in \eqref{entropic_discrete_gw} the iterations of this algorithm read:
\begin{equation}
\label{eq:iter_projected}
\GG \leftarrow \text{Proj}_{\couplingset(\a,\b)}^{KL}(\GG * e^{-\tau \nabla E_{\epsilon}(\GG)}) 
\end{equation}
where $\tau>0$ is the step size of the gradient descent and $*$ denotes elementwise (Hadamard) matrix multiplication. The projection operator is defined as the result of the minimization problem:
\begin{equation}
\text{Proj}_{\couplingset(\a,\b)}^{KL}(\mathbf{K})= \argmin_{\GG \in \couplingset(\a,\b)}KL(\GG|\mathbf{K})
\end{equation}
As shown in Section \ref{sec:numerical_tour} the projection can be solved using the efficient Sinkhorn-Knopp Algorithm (see Algorithm \ref{alg:sk}). The gradient $\nabla E_{\epsilon}(\GG)$ can be calculated as $2 \L(\C_{1},\C_{2})^{p}\otimes \GG+\epsilon \log(\GG)$, and, as noted in \cite{peyre2016gromov}, the special case where the step size $\tau$ is defined as $\tau=\frac{1}{\epsilon}$ the iterations \eqref{eq:iter_projected} boil down to solving an entropic regularized linear OT problem with ground cost $2\L(\C_{1},\C_{2})^{p} \otimes \GG$. Overall the procedure to solve GW with entropic regularization is a projected gradient procedure where the projection step can be solved using an entropic linear OT as described in Algorithm \ref{alg:entropic_reg_gw}. Note that, as for the linear OT case, the resulting optimal coupling is not sparse since the entropy term tends to blur the solution. Moreover one usually wants to have a gradient step $\tau$ which is not ``too big'' in order to ensure the convergence of the algorithm. Since the gradient step $\tau$ is inversely proportional to the regularization parameter $\epsilon$ this comes at the price of blurring the resulting optimal solution. In practice, there is a trade-off between regularization and convergence of the algorithm (see \cite{peyre2016gromov} and experiments of Chapter \ref{cha:gw_euclidean}).

When $p=2$ the previous problem reduces to the \emph{softassign quadratic assignement} problem. In the special case where the problem is convex the convergence of the previous scheme was analyzed in \cite{convergence_soft_assign,convergence_soft_assign2} but, with arbitrary matrices $\C_1,\C_2$ there is no known results to the best of our knowledge. 

\paragraph{Computing a lower-bound} Originally \eqref{discrete_gw} was tackled by computing a lower bound (called the \textbf{TLB}) in \cite{memoli_gw}. More precisely, authors propose to solve the following problem:
\begin{equation}
\label{eq:lower_bound_discrete_gw_1}
\underset{\GG^1 \in \couplingset(\a,\b)}{\min}\sum_{k,l} \left(\underset{\GG^2 \in \couplingset(\a,\b)}{\min}\sum_{i,j}|C_{1}(i,k)-C_{2}(j,l)|^{p}\pi^{2}_{i,j}\right) \pi^{1}_{k,l}
\end{equation}
This problem is actually a ``Wasserstein of Wasserstein distances'': it is equivalent to solve an OT problem which ground cost results itself on OT problems between the 1D empirical distributions of the lines of $\C_1,\C_2$. More precisely by considering $\mu_k=\sum_i a_i \delta_{C_{1}(i,k)} \in \P(\R)$ and $\nu_l=\sum_j b_j \delta_{C_{2}(j,l)} \in \P(\R)$ \eqref{eq:lower_bound_discrete_gw_1} is equivalent to:
\begin{equation}
\label{eq:lower_bound_discrete_gw}
\underset{\GG \in \couplingset(\a,\b)}{\min}\sum_{k,l} \wass_{p}^{p}(\mu_k,\nu_l)  \pi_{k,l}
\end{equation}
where the Wasserstein distance is computed using $|.|$ as ground cost. The advantage of this formulation is that $\eqref{eq:lower_bound_discrete_gw}$ only involves linear OT problems that can be solved using tools presented in Section \ref{sec:numerical_tour}. This idea is based on the \emph{local distribution of distances} and was also successfully applied in computer graphics for 3D shape comparison \cite{SGP:SGP05:197-206,memoli_gw}.

\paragraph{Computational complexities} One major bottleneck for computing GW is first the calculation of the big tensor $(|C_{1}(i,k)-C_{2}(j,l)|^{p})_{i,j,k,l}$ which is $O(n^{2}m^{2})$ is general. By noticing that it suffices instead to compute $\L(\C_1,\C_2)^{p}\otimes \GG$ authors in \cite{peyre2016gromov} show that, in the case $p=2$, one can rely on the separability of $\L$ which results in a $O(n^{2}m+m^{2}n)$ complexity. A second bottleneck is the complexity of finding an optimal solution which is driven by the algorithmic method. The convergence of Algorithm \ref{alg:entropic_reg_gw} for the entropic-regularized GW problem is still not well-understood and slow in practice as shown for example in Chapter \ref{cha:gw_euclidean}. We will present in Chapter \ref{cha:fgw} an algorithm based on Frank-Wolfe (FW) to find a sparse local optimal solution with a cubic complexity.  Using the FW properties we will show that this algorithm also converges to a local stationary point with a $O(\frac{1}{\sqrt{t}})$ rate. Regarding the lower bound computation one can rely on the sorting strategies for 1D distributions to compute an inner Wasserstein distance $\wass_p^{p}(\mu_i,\nu_j)$ in $O(n \log(n)+m\log(m))$ hence a $O(n^2m \log(n)+m^2n\log(m)))$ complexity for all pairs. Then finding a solution $\GG$ has the same complexity as computing a linear OT problem and one can rely on entropic regularization to reach a quadratic time complexity. More recently, in \cite{sato2020fast} authors propose to fix the outer optimal transport plan of the lower bound to $\a \b^{T}$ which results in a divergence that can be computed in $O((n^{2}+m^{2})\log(nm))$ using a sweep line strategy.

\paragraph{Illustration of previous solvers} In order to give a simple illustration of the different solvers presented above we consider two unlabeled graphs with the same number of nodes ($n=m=20$) and with $4$ communities. Each node of the graph lies in the implicit metric space defined by the shortest-path distance inside the graph so that $\C_1$ is the shortest path distance matrix between each node (same for $\C_2$). We consider uniform weights, \ie\ $\a=\b=\frac{1}{n}\one_n$ and $p=2$ for GW. We solve the GW problem by relying on (1) The FW algorithm defined in Chapter \ref{cha:fgw}, (2) The entropic regularized GW problem with $\epsilon=8e-4$, (3) The lower bound approach \textbf{TLB}. The behavior of the three approaches is depicted in Figure \ref{fig:gw_example}

\begin{figure}[t]
   \begin{center}
        \includegraphics[width=1\linewidth]{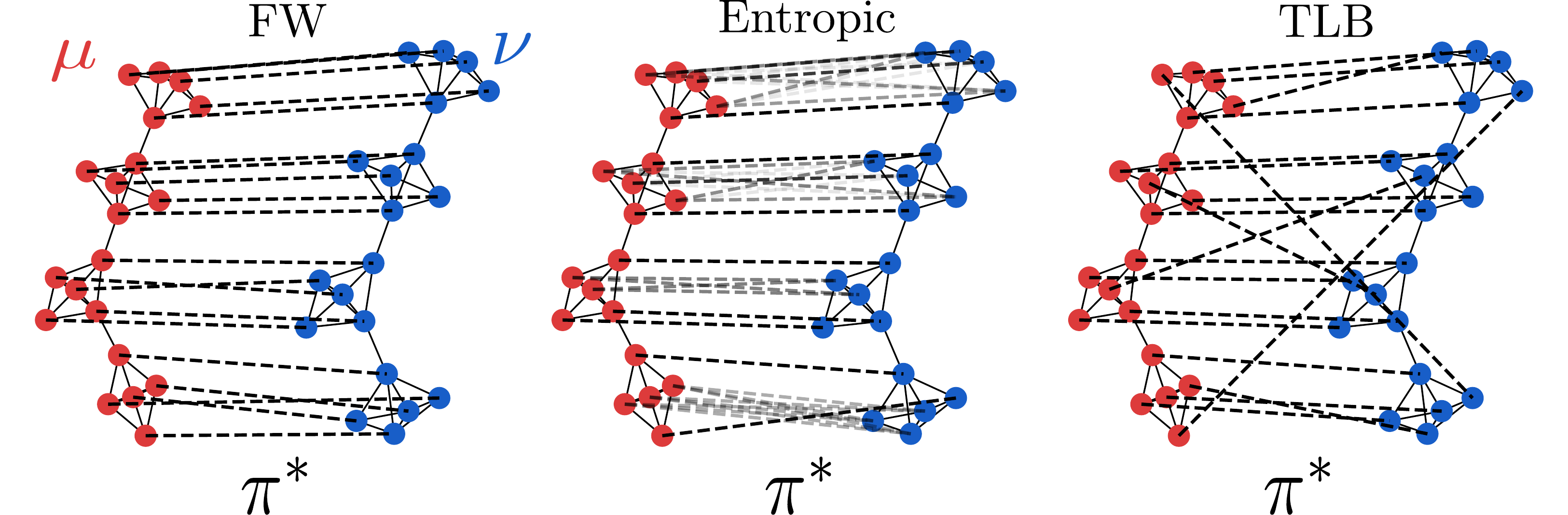}
    \caption{\label{fig:gw_example} Illustration of the optimal coupling for the GW problem between two graphs described by their shortest-path. {\bf (left)} FW solver of Chapter 3 {\bf (middle)} Entropic regularized GW (Algorithm \ref{alg:entropic_reg_gw}) {\bf(right)} Lower bound approach \textbf{TLB}. The optimal coupling $\GG^{*}$ is depicted in dashed lines. The darker the stronger. }
     \end{center}
\end{figure}

\subsection{Gromov-Wasserstein Barycenter \label{sec:bary_gw}}

In the same vein as the Wasserstein barycenter we can build upon the Gromov-Wasserstein distance a notion of barycenter. This setting was first tackled in \cite{peyre2016gromov} where authors consider the problem of computing the barycenter of a family of discrete probability measures over different metric spaces \textit{w.r.t.} the Gromov-Wasserstein geometry. More formally let $(\C_i,\b_i)_{i \in \integ{n}}$ be this family where $\C_i$ is a arbitrary matrix and $\b_i$ is a probability vector. $\C_i$ can be chosen to be a distance matrix or more generally any similarity matrix, such as a kernel matrix encoding a notion of similarity between the points inside each distribution as in \cite{chowdhury_gromovwasserstein_2019}. When the weights of the barycenter are given and fixed to $\a \in \simplex_k$ with $k\in \mathbb{N}^{*}$ the GW barycenter problem aims at finding:

\begin{equation}
\label{eq:bary_gw}
\begin{split}
\min_{\C \in \R^{k\times k}} \sum_{i=1}^{n} \lambda_i \gw_p^{p}(\C,\C_i,\a,\b_i)= \min_{\begin{smallmatrix} \C \in \R^{k\times k} \\ \forall i \in \integ{n}, \GG_i \in \couplingset(\a,\b_i) \end{smallmatrix}} \sum_{i=1}^{n} \lambda_i \froeb{\L(\C,\C_{i})^{p} \otimes \GG}{\GG}
\end{split}
\end{equation}
with $\lambda_i\geq0$ and $\sum_{i=1}^{n} \lambda_i=1$. In \cite{peyre2016gromov} authors consider this problem based on the entropic regularized version of GW. They propose to solve \eqref{eq:bary_gw} by relying on a BCD procedure which alternates between solving $n$ GW problems with $\C$ fixed and by finding $\C$ with all $\GG_i$ fixed. The latter is given in closed-form when $p=2$ by \cite[Equation 14 ]{peyre2016gromov}:
\begin{equation}
\C=\frac{1}{\a\a^{T}} \sum_{i=1}^{n} \lambda_i \GG_i^{T} \C_i \GG_i
\end{equation}
where the division is made element-wise. Solving the $n$ GW problems can be done using Algorithm \ref{alg:entropic_reg_gw} as in \cite{peyre2016gromov} or with the GW or the lower bound approach as presented in the previous section. This approach was further generalized in \cite{chowdhury2019gromovwasserstein} where authors leverage the Riemannian geometry of Gromov-Wasserstein space to treat \textit{e.g.} the case where $\C_i$ are not necessarily symmetric.

\paragraph{Illustration} To illustrate the GW barycenter we consider a simple dataset consisting in 4 2D-shapes from the apple class of the MPEG-7 computer vision database. Each shape is associated with a discrete probability measure with uniform weights. The matrices $\C_i$ are simply the Euclidean distances between the points in each shape. We compute the barycenter using the previous discussion with both the FW solver for solving the GW problems and using the entropic regularized GW ($\epsilon=1e-3$) with Algorithm \ref{alg:entropic_reg_gw}. We compute a MDS on the $\C$ matrix obtained by the barycenter procedure to recover 2D points. Results are depicted in Figure \ref{fig:gw_barycenter_apple}. 

\begin{figure}[t]
   \begin{center}
        \includegraphics[width=0.9\linewidth]{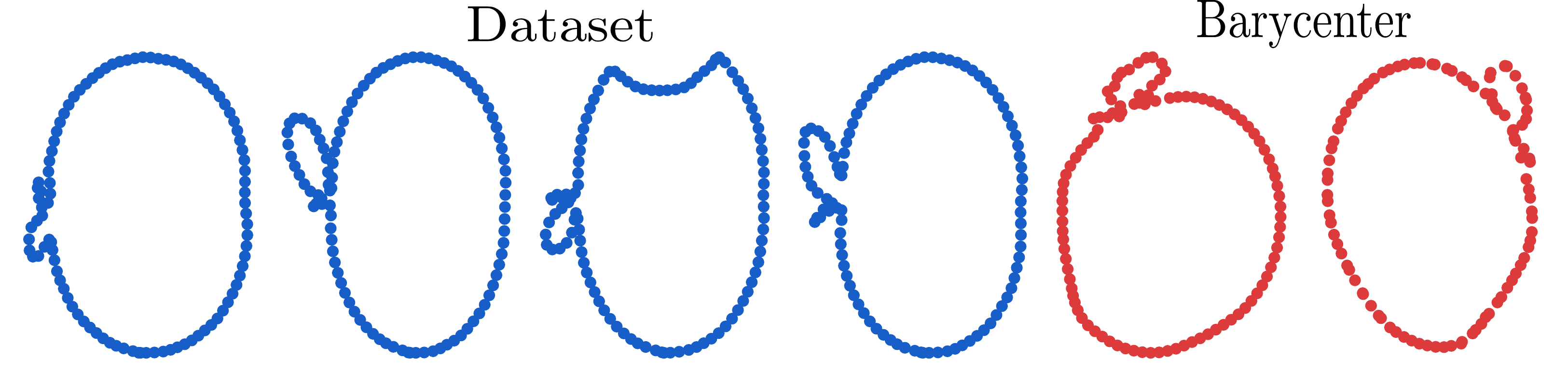}
    \caption{\label{fig:gw_barycenter_apple} GW barycenter of 4 apple shapes in blue. The barycenters are depicted in the right side of the Figure in red. The first barycenter uses the FW algorithm for solving the GW problems and the second is computed using entropic regularization solved with Algorithm \ref{alg:entropic_reg_gw}. Note that the barycenter are arbitrarily rotated due to the MDS procedure.}
     \end{center}
\end{figure}

\paragraph{Applications of GW} The GW problem is well suited for comparing heterogeneous data while being invariant to the isometries of the data. As such it has first received attention for shapes comparison \cite{memoli_gw,solomon_entropic_2016} or in computer vision \cite{schmitzer_modelling_2013} where it is often valuable to compare objects without any assumption on their orientation. GW was further exploited to handle unstructured geometric data such as point clouds or meshes in \cite{gwcnn} where authors use GW to learn regular 2D grids that faithfully represent the 3D meshes while being applicable for standards CNN architectures. More recently GW has been the subject of much attention in the graph community as a graph matching tool \cite{pmlr-v97-xu19b,xu_scalable_2019,Fey2020Deep} or for graphs representation \cite{Kwon_2020} (see Chapter 4 for more details). It also proves its usefulness in cellular biology thanks to its ability for aligning heterogeneous types of single-cell measurements \cite{Demetci_gw_cell}. Closer to the machine learning community GW has been applied in Domain Adaptation (DA) in the complex settings of Unsupervised DA \cite{Xia_2020_CVPR} and Heterogeneous (DA) \cite{ijcai2018-412} (see Chapter 5 for more details), in generative modeling on incomparable spaces \ie\ when the data generated do not share the same Euclidean space as the source data \cite{bunne_gan}, or for cross-lingual correspondences of word embeddings \cite{alvarez-melis_gromov-wasserstein_2018}. 
    
\section{Conclusion}

The optimal transport framework is a powerful tool for comparing probability distributions by relying on both Wasserstein and Gromov-Wasserstein distances. The linear OT theory is a well-studied problem both from the theoretical and numerical side. In contrast a lot of questions remain unanswered for the Gromov-Wasserstein theory and interesting connections with the Wasserstein theory can be made. From a theoretical perspective there is no known result yet about the regularity of the optimal transport plans. As for the practical side the GW problem remains very costly to solve and difficult to approximate. The purpose of the next chapters is, \textit{inter alia}, to work towards these directions and to address the following questions:
\begin{enumerate}[label=\roman*]
\item Is there any favorable cases where Optimal Transport plans of the GW are supported on a Monge map as in the Brenier theorem? (see Chapter \ref{cha:gw_euclidean})
\item Can we find some special cases where GW admits a closed-form solution such as the 1D or the Gaussian cases? If so, can we derive scalable and useful formulations from these cases? (see Chapter \ref{cha:gw_euclidean})
\item Is the GW framework suited for the structured data setting? How does it behave for concrete structured data problems such as graph applications? (see Chapter \ref{cha:fgw})
\item More generally can we derive other formulations than GW that are maybe more useful for data on incomparable spaces? (see Chapter \ref{cha:coot}) 
\end{enumerate}

\chapter{Optimal Transport for structured data}
\epigraph{\itshape Perhaps as you went along you did learn something.}{-- Ernest Hemingway, \textit{The Sun Also Rises}}
\minitoc
\label{cha:fgw}

\newpage

\begin{Abstract}
This chapter is based on the papers \cite{vay_struc,Vayer_2020} and considers the problem of computing distances between structured
objects such as undirected graphs, seen as probability distributions in a
specific metric space. We consider a new transportation distance ({\em
i.e.} that minimizes a total cost of transporting probability masses) that unveils
the geometric nature of the structured objects space. Unlike Wasserstein or
Gromov-Wasserstein metrics that focus solely and respectively on features (by
considering a metric in the feature space) or structure (by seeing structure as
a metric space), our new distance exploits jointly both information, and is
consequently called Fused Gromov-Wasserstein  (FGW). After discussing its
properties and computational aspects, we show results on a graph classification
task, where our method outperforms both graph kernels and
deep graph convolutional networks. Exploiting further on the metric properties
of FGW, interesting geometric objects such as Fr{\'e}chet means or barycenters
of graphs are illustrated and discussed in a clustering context. We provide in a second part the mathematical framework for this distance in the continuous setting, prove its metric, geodesic and interpolation properties and provide a concentration result for the convergence of finite samples.
\end{Abstract}

\section{Introduction}

There is a longstanding line of research on learning from structured data, {\em i.e.} objects that are a combination of a feature and structural information (see for example \cite{Bakir:2007:PSD:1296180,relationnalreasoning}). As immediate instances, graph data are usually ensembles of nodes with attributes (typically  $\mathbb{R}^{d}$ vectors) linked by some specific relation. Notable examples are found in chemical compounds or molecules modeling~\cite{DBLP:journals/corr/KriegeGW16}, brain connectivity~\cite{ktena2017distance}, or social networks~\cite{Yanardag15}. This generic family of objects also encompasses time series \cite{pmlr-v70-cuturi17a}, trees~\cite{day1985optimal} or even images~\cite{bachgraphkernel}.

Being able to leverage on both feature and structural information in a learning task is a tedious task, that requires the association in some ways of those two pieces of information in order to capture the similarity between the 
structured data. Several kernels have been designed to perform this task~\cite{Shervashidze:2011:WGK:1953048.2078187,wlkernel}. As a good representative of those methods, 
the Weisfeiler-Lehman kernel~\cite{wlkernel} captures in each node a notion of vicinity by aggregating, in the sense of the topology of the graph, the surrounding features. Recent advances in graph convolutional networks~\cite{DBLP:journals/corr/BronsteinBLSV16, Kipf2016SemiSupervisedCW,NIPS2016_6081,Gnn_survey} allows learning end-to-end the best combination of features by relying on parametric convolutions on the graph, {\em i.e.} learnable linear combinations of features. In the end, and in order to compare two graphs that might have different number of nodes and connections, those two categories of methods build a new representation for every graph that shares the same {space}, and that is amenable to classification.

Contrasting with those previous methods, we suggest in this chapter to see graphs as probability distributions, embedded in a specific metric space. We propose to define a specific notion of distance between these probability distributions, that can be used in most of the classical machine learning approaches. Beyond its mathematical properties, disposing of a distance between structured data, provided it is meaningful, is desirable in many ways: {\em i)} it can then be plugged into distance-based machine learning algorithms such as $k$-nn or t-SNE {\em ii)} its quality is not dependent on the learning set size, and {\em iii)} it allows considering interesting quantities such as geodesic interpolation or barycenters. 

Yet, defining this distance is not a trivial task. While features can always be compared using a standard metric, such as Euclidean distances, comparing structures requires a notion of similarity which can be found \textit{via} the notion of {\em isometry}, since the graph nodes are not ordered (we define later on which cases two graphs are considered identical). We use the notion of transportation distance to compare two graphs represented as probability distributions. Optimal transport have inspired a number of recent breakthroughs in machine learning (\emph{e.g.} \cite{huang2016,courty2017optimal,arjovsky17a}) because of its capacity to compare empirical distributions, and also the recent advances in solving the underlying problem~\cite{cot_peyre_cutu}. Yet, the natural formulation of OT cannot leverage the structural information of objects since it only relies on a cost function that compares their feature representations. 

However, some modifications over OT formulation have been proposed in order to compare structural information of objects. Following the pioneering work by M\'emoli \cite{memoli_gw}, Peyr\'e {\em et al.} \cite{peyre2016gromov} propose a way of comparing two distance matrices that can be seen as representations of some objects' structures. They use the Gromov-Wasserstein distance (see Chapter \ref{cha:ot_general}) capable of comparing two distributions even if they do not lie in the same ground space and apply it to compute barycenter of molecular shapes. Even though this approach has wide applications, it only encodes the intrinsic structural information in the transportation problem. To the best of our knowledge, the problem of including both structural and feature information in a unified OT formulation remains largely under-addressed.

\paragraph{OT distances that include both features and structures.}
Recent approaches tend to incorporate some structure information as a regularization of the OT problem.
For example in \cite{AlvarezMelis2018Structured} and \cite{courty2017optimal},  authors constrain transport maps to favor some assignments in certain groups. These approaches require a known and simple structure such as class clusters to work but do not generalize well to more general structural information.
In their work \cite{Thorpe2017}, propose an OT distance that combines both a Lagrangian formulation of a signal and its temporal structural information. They define a metric, called Transportation $L^{p}$ distance, that can be seen as a distance over the coupled space of time and feature. They apply it for signal analysis and show that combining both structure and feature tends to better capture the signal information. Yet, for their approach to work, the structure and feature information should lie in the same ambiant space, which is not a valid assumption for more general problems such as similarity between graphs. In \cite{DBLP:conf/aaai/NikolentzosMV17}, authors propose a graph similarity measure for discrete labeled graph with OT. Using the eigenvector decomposition of the adjency matrix, which captures graph connectivities, nodes of a graph are first embedded in a new space, then a ground metric based on the distance in both this embedding and the labels is used to compute a Wasserstein distance serving as a graph similarity measure.

\paragraph{Contributions.}

After defining structured data as discrete probability measures (Section~\ref{sec:back}), we propose a new framework, namely $FGW$, capable of taking into account both structure and feature information into the optimal transport problem. The framework can compare any usual structured machine learning data even if the feature and structure information dwell in spaces of different dimensions, allowing the comparison of undirected labeled graphs. It is based on a distance that embeds a trade-off parameter which allows balancing the importance of the features and the structure. We provide a conditional-gradient algorithm for computing $FGW$ (Section~\ref{sec:fgw}), and we evaluate it (Section~\ref{sec:expeFGW}) on both synthetic and real-world graph datasets on various tasks. We show that $FGW$ is particularly useful for both supervised and unsupervised learning on graphs.  

Among the contributions of this Chapter the numerical solution presented in Section~\ref{sec:fgw} can also be used to compute the Gromov-Wasserstein distance. To the best of our knowledge this is the first optimization scheme for GW that does not require entropic regularization and which results in a sparse optimal solution. 

We also define and illustrate a notion of labeled graph barycenters using $FGW$ (Section~\ref{sec:bary}), based on the Fréchet mean, and apply it for clustering and coarsening of graphs problems. 

In a last part (Section~\ref{sec:continuous_fgw}) we generalize the definition of structured data to compact metric spaces. We present the theoretical foundations of our framework in this general setting and states the mathematical properties of $FGW$. Notably, we show that it is a metric in the space of structured objects with respect to an intuitive equivalence relation between structured objects, we give a concentration result for the convergence of finite samples, and we study its interpolation and geodesic properties. 

\section{Structured data as probability measures}
\label{sec:back}

In this chapter, we focus on comparing structured data which combine a feature \textbf{and} a structure information. In order to give a good intuition about the method we first consider the discrete setting which corresponds to labeled graphs.
More formally, we consider undirected labeled graphs as tuples of the form $\mathcal{G}=(\mathcal{V},\mathcal{E},\ell_f,\ell_s)$ where $(\mathcal{V},\mathcal{E})$ are the set of vertices and edges of the graph.
$\ell_f: \mathcal{V} \rightarrow \Omega_f$ is a labelling function which associates each vertex $v_{i} \in \mathcal{V}$ with a feature $a_{i}\stackrel{\text{def}}{=}\ell_f(v_{i})$ in some feature metric space $(\Omega_f,d)$. 
We will denote by \emph{feature information} the set of all the features $(a_{i})_i$ of the graph.
Similarly, $\ell_s: \mathcal{V} \rightarrow \Omega_s$ maps a vertex $v_i$ from the graph to its structure representation $x_{i}\stackrel{\text{def}}{=}\ell_s(v_{i})$ in some structure space $(\Omega_s,C)$ specific to each graph. $C : \Omega_s \times \Omega_s \rightarrow \mathbb{R}$ is a symmetric application which aims at measuring the similarity between the nodes in the graph. Unlike the feature space however, $\Omega_s$ is implicit and in practice, knowing the similarity measure $C$ will be sufficient. With a slight abuse of notation, $C$ will be used in the following to denote both the structure similarity measure and the matrix that encodes this similarity between pairs of nodes in the graph $(\C(i,k) = C(x_i, x_k))_{i,k}$.
Depending on the context, $\C$ can either encode the neighborhood information of the nodes, the edge information of the graph or more generally it can model a distance between the nodes such as the shortest path distance or the harmonic distance \cite{NIPS2017_6614}. When $C$ is a metric, such as the shortest-path distance, we naturally endow the structure with the metric space $(\Omega_s,C)$.
We will denote by \emph{structure information} the set of all the structure embeddings $(x_{i})_i$ of the graph. 

\begin{figure}[t]
    \centering
        \includegraphics[width=0.9\columnwidth]{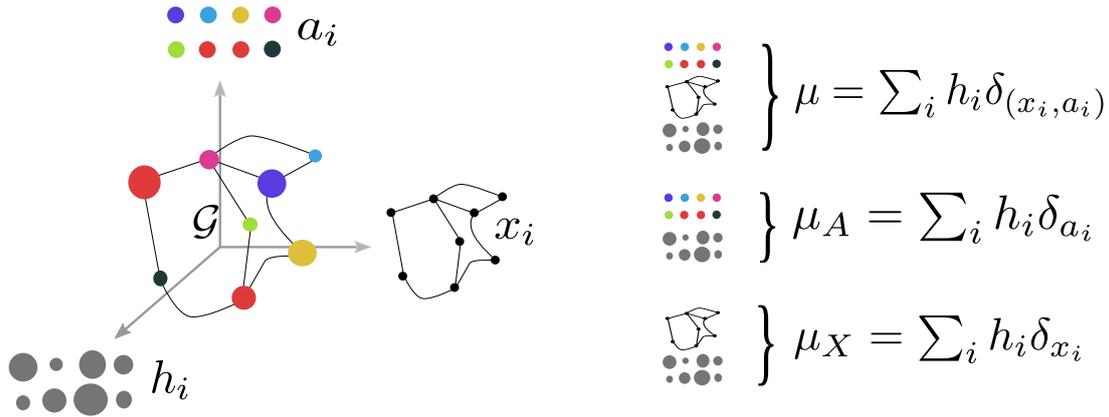}
    \caption{{\bf (left)} Labeled graph with $(a_{i})_{i}$ its feature information, $(x_{i})_{i}$ its structure information and a probability vector $(h_{i})_{i}$ that measures the relative importance of the vertices. {\bf (right)} Associated structured data which is entirely described by a fully supported probability measure $\mu$ over the product space of feature and structure, with marginals $\mu_{X}$ and $\mu_{A}$ on the structure and the features respectively.\label{graphex}}
\end{figure}

We propose to enrich the previously described graph with a probability vector which serves the purpose of signaling the relative importance of the vertices in the graph. To do so, if we assume that the graph has $n$ vertices, we equip those vertices with weights $(h_{i})_{i} \in \Sigma_{n}$. 
Through this procedure, we derive the notion of \emph{structured data} as a tuple $\mathcal{S}=(\mathcal{G},h_{\mathcal{G}})$ where $\mathcal{G}$ is a graph as described previously and $h_{\mathcal{G}}$ is a function that associates a weight to each vertex. This definition allows the graph to be represented by a fully supported probability measure over the product space feature/structure $\mu= \sum_{i=1}^{n} h_{i} \delta_{(x_{i},a_{i})}$ which describes the entire structured data (see Figure \ref{graphex}). When all the weights are  equal (\textit{i.e.} $h_{i}=\frac{1}{n}$), so all vertices have the same relative importance, the structured data holds the exact same information as its graph. However, weights can be used to encode some \textit{a priori} information. For instance on segmented images, one can construct a graph using the spatial neighborhood of the segmented zones, the features can be taken as the average color in the zone, and the weights as the ratio of image pixels in the zone.

\section{Fused Gromov-Wasserstein approach for structured data}
\label{sec:fgw}
We aim at defining a distance between two graphs $\mathcal{G}_1$ and  $\mathcal{G}_2$, described respectively by their probability measure $\mu= \sum_{i=1}^{n} h_{i} \delta_{(x_{i},a_{i})}$ and $\nu= \sum_{i=1}^{m} g_{j} \delta_{(y_{j},b_{j})}$, where  $\h \in \Sigma_{n}$ and $\g \in \Sigma_{m}$ are probability vectors. Without loss of generality we suppose $(x_{i},a_{i})\neq (x_{j},a_{j})$ for $i\neq j$ (same for $y_{j}$ and $b_{j}$). We recall that $\couplingset(\h,\g)$ the set of all admissible couplings between $\h$ and $\g$. To that extent, the matrix $\GG \in \couplingset(\h,\g)$ describes a probabilistic matching of the nodes of the two graphs. We note $\Mbf_{\Abf\Bbf}=(d(a_{i},b_{j}))_{i,j}$ the $n \times m$ matrix standing for the distance between the features.
The structure matrices are denoted $\C_{1}$ and $\C_{2}$, and $\mu_{X}$ and $\mu_{A}$ (resp. $\nu_{Y}$ and $\nu_{B}$) are representative of the marginals of $\mu$ (resp. $\nu$) \textit{w.r.t.} the structure and feature respectively (see Figure \ref{graphex}).
We also define the similarity between the structures by measuring the similarity between all pairwise distances within each graph thanks to the 4-dimensional tensor $\L(\C_{1},\C_{2})$:
\begin{equation*}
    \L(\C_{1},\C_{2})=(L_{i,j,k,l}(\C_{1},\C_{2}))_{i,j,k,l}=(|C_{1}(i,k)-C_{2}(j,l)|)_{i,j,k,l} .
\end{equation*}

\subsection{Fused Gromov-Wasserstein distance}
\begin{figure}
    \centering
    \includegraphics[width=0.7\linewidth]{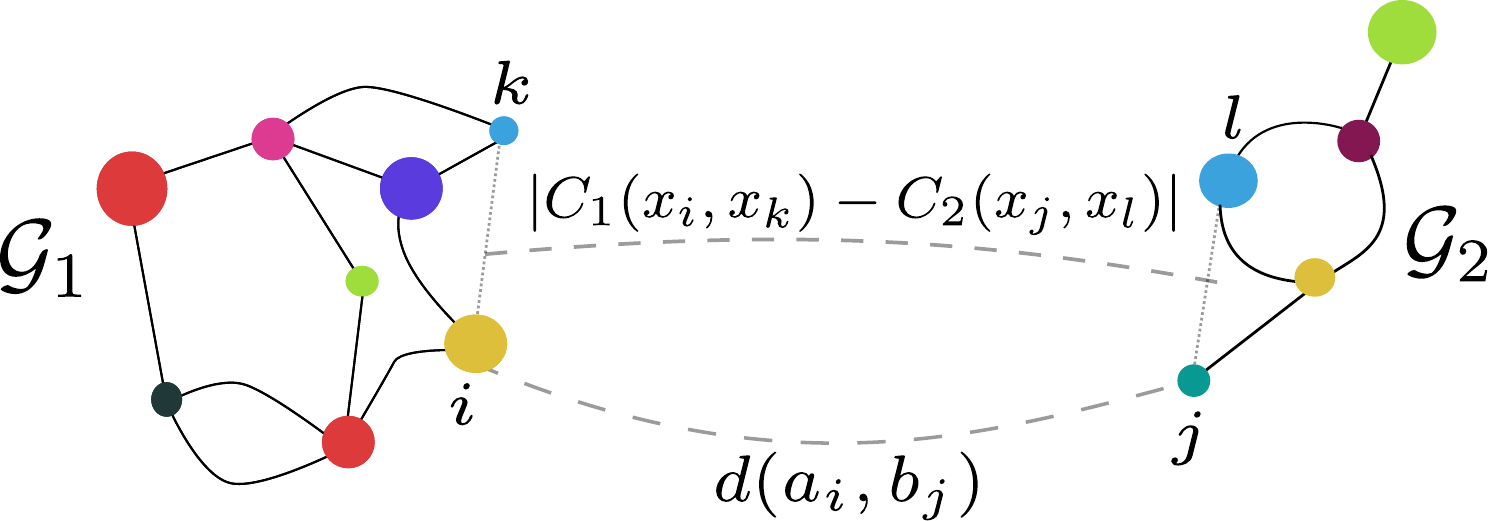}
    \caption{\label{def_fgw}    $FGW$ loss $E_{q}$ for a coupling $\pi$ depends on both a similarity between each feature of each node of each graph $(d(a_{i},b_{j}))_{i,j}$ and between all intra-graph structure similarities $(\left|C_{1}(x_i, x_k)-C_{2}(x_j, x_l)\right|)_{i,j,k,l}$.}
 \end{figure}

We define a novel Optimal Transport discrepancy called the Fused Gromov-Wasserstein distance. It is defined for a trade-off parameter  $\alpha \in [0,1]$ as
\begin{equation}
\label{discretefgw}
\fgwdistance(\Mbf_{\Abf\Bbf},\C_1,\C_2,\h,\g)= \underset{\GG \in \couplingset(\h,\g)}{\min}E_{q}(\Mbf_{\Abf\Bbf},\C_{1},\C_{2},\GG)  \\
\end{equation}
where
\begin{equation}
\begin{split}
E_{q}(\Mbf_{\Abf\Bbf},\C_{1},\C_{2},\GG) &= \froeb{(1-\alpha) \Mbf_{\Abf \Bbf}^{q} + \alpha \L(\C_{1},\C_{2})^{q} \otimes \GG}{\GG} \\
&=\sum_{i,j,k,l} (1-\alpha) d(a_{i},b_{j})^{q}+\alpha |C_{1}(i,k)-C_{2}(j,l)|^{q} \pi_{i,j}\pi_{k,l}
\end{split}
\end{equation}
The $FGW$ distance looks for the coupling $\GG$ between the vertices of the graph
that minimizes the cost $E_{q}$ which is a linear combinaison of a cost
$d(a_{i},b_{j})$ of transporting one feature $a_{i}$ to a feature $b_{j}$ and a
cost $|C_{1}(i,k)-C_{2}(j,l)|$ of transporting pairs of nodes in each structure
(see Figure \ref{def_fgw}). As such, the optimal coupling tends to associate pairs of feature and
structure points with similar distances within each structure pair and with
similar features. $\alpha$ acts as a trade-off parameter between the cost of the structures represented by $\L(\C_{1},\C_{2})$ and the cost on the features $\Mbf_{\Abf\Bbf}$. In this way, the convex combination of both terms leads to the use of both information in one formalism resulting on a single map $\GG$ which ``moves'' the mass from one joint probability measure forawrd to the other.
As an important feature of $FGW$, by relying on a sum of (inter- and intra-)vertex-to-vertex distances, it can handle structured data with continuous attributed or discrete labeled nodes (thanks to the definition of $d$) and can also be computed even if the graphs have different number of nodes. 
 
This new distance is called the $FGW$ distance as it acts as a generalization of the Wasserstein and Gromov-Wasserstein distances. Indeed as $\alpha$ tends to zero, the $FGW$ distance recovers the Wasserstein distance between the features ${W}_{q}(\mu_{A},\nu_{B})^{q}$ and as $\alpha$ tends to one, we recover the Gromov-Wasserstein distance $\gw_{q}(\mu_{X},\nu_{Y})^{q}$ between the structures (see Proposition \ref{interpolationtheorem} of Section \ref{sec:interpo_section}).

More importantly $FGW$ enjoys metric properties on labeled graphs as stated in the following theorem:

\begin{theo}[${FGW}$ defines a metric for $q=1$ and a semi-metric for $q >1$]
\label{metrictheodiscrete}
If $q=1$, and if $\C_{1}$, $\C_{2}$ are distance matrices such as shortest-path matrices then ${FGW}$ defines a metric over the space of structured data quotiented by the measure preserving isometries that are also feature preserving. More precisely, ${FGW}$ satisfies the triangle inequality and vanishes \textit{iff} $n=m$ and there exists a permutation $\sigma \in \Sn$ such that: 
\begin{equation}
\label{weightpreserve}
\forall i \in \integ{n}, \ h_{i}=g_{\sigma(i)}
\end{equation}
\begin{equation}
\label{featurepreserve}
\forall i \in \integ{n}, \ a_{i}=b_{\sigma(i)} 
\end{equation}
\begin{equation}
\label{structurepreserve}
\forall i,k \in  \integ{n}^2,  \ C_{1}(i,k)=C_{2}(\sigma(i),\sigma(k))
\end{equation}
If $q>1$,  the triangle inequality is relaxed by a factor $2^{q-1}$ such that ${FGW}$ defines a semi-metric.
\end{theo}

This results is a direct consequence of Theorem \ref{metrictheo} in Section \ref{Topology} where $FGW$ is defined for general metric spaces. The resulting permutation $\sigma$ preserves the weight of each node (equation \eqref{weightpreserve}), the features (equation \eqref{featurepreserve}) and the pairwise structure relation between the nodes (equation \eqref{structurepreserve}). For example, comparing two graphs with uniform weights on the vertices and with shortest-path structure matrices, the ${FGW}$ distance vanishes \textit{iff} the graphs have the same number of vertices and there exists a one-to-one mapping between the vertices of the graphs which preserves both the shortest-paths and the features. More informally, in this case graphs have vertices with the same labels connected by the same edges, and thus $FGW$ can be used to determine if two graphs are isomorphic \cite{west_introduction_2000}.

The metric ${FGW}$ is fully unsupervised and can be used in a wide set of applications such as $k$-nearest-neighbors, distance-substitution kernels, pseudo-Euclidean embeddings, or representative-set methods. Arguably, such a distance also allows for a fine interpretation of the similarity (through the optimal mapping $\GG$), contrary to end-to-end learning machines such as neural networks.

\subsection{Fused Gromov-Wasserstein barycenter}
\label{sec:bary}
OT barycenters have many desirable properties and applications (see Chapter \ref{cha:ot_general} for more details), yet no formulation can leverage both structural and feature information in the barycenter computation. 
In this section, we consider the ${FGW}$ distance to define a barycenter of a set of structured data as a Fr\'echet mean. 
We look for the structured data $\mu$ that minimizes the sum of (weighted) ${FGW}$ distances within a given set of structured data $(\mu_{k})_{k \in \integ{K}}$ associated with structure matrices $(\C_{k})_{k \in \integ{K}}$, features $(\Bbf_{k})_{k \in \integ{K}}$ and base histograms $(\h_{k})_{k \in \integ{K}}$. 
 For simplicity, we assume that the histogram $\h$ associated to the barycenter is known and fixed; in other words, we set the number of vertices $N$ and the weight associated to each of them.

In this context, for a fixed $N \in \mathbb{N}$ and $(\lambda_{k})_{k}$ such that $\sum_{k=1}^{K} \lambda_{k}=1$ , we aim to find the set of features $\Abf = (a_i)_i$ and the structure matrix $\C$ of the barycenter that minimize the following equation:
\begin{equation}
\label{eq:frechet_mean_fgw}
\underset{\mu}{\text{min}} \sum_{k=1}^{K} \lambda_{k} {FGW}_{q,\alpha}(\mu,\mu_{k}) =\underset{\begin{smallmatrix} \C \in \mathbb{R}^{N \times N},\ \Abf \in \mathbb{R}^{N \times d} \\ \forall k \in \integ{K}, \GG_{k} \in \couplingset(\h,\h_{k}) \end{smallmatrix}}{\text{min}} \sum_{k=1}^{K} \lambda_{k} E_{q}(\Mbf_{\Abf \Bbf_{k}},\C,\C_{k},\GG_{k})
\end{equation}
Note that this problem is jointly convex \textit{w.r.t.} $\C$ and $\Abf$ but not \textit{w.r.t.} $\GG_{k}$. We discuss the proposed algorithm to solve this problem in the next section.  Interestingly enough, one can derive several variants of this problem, where the features or the structure matrices of the barycenter can be fixed. Solving the related simpler optimization problem extends straightforwardly. We give examples of such barycenters both in the experimental section where we solve a graph based $k$-means problem.

\subsection{Optimization and algorithmic solution}
\label{sec:solved_fgw}
In this section we discuss the numerical optimization problem for computing the ${FGW}$ distance between discrete distributions.

\paragraph{Solving the Quadratic Optimization problem.}
Equation \eqref{discretefgw} is clearly a quadratic {problem} \emph{w.r.t.} $\GG$ which is NP-hard in general \cite{Loiola_survey_qap}. However finding a solution in practice can be done quite efficiently. We propose here a method based on the Frank-Wolfe algorithm \cite{pmlr-v28-jaggi13} (\emph{aka} Conditional Gradient).
When considering $q=2$  the
${FGW}$ computation problem can be re-written as finding $\GG^{*}$ such
that:
\begin{equation}
 \GG^{*}=\underset{\GG \in \couplingset(\h,\g)}{\arg\min}\quad\text{vec}(\GG)^{T} \Qbf(\alpha) \text{vec}(\GG)+ \text{vec}(\Dbf(\alpha))^{T}  \text{vec}(\pi)
 \label{l2eq}
 \end{equation}
where $\Qbf=-2 \alpha  \C_{2} \otimes_{K} \C_{1}$ and $\Dbf(\alpha)=(1-\alpha)\Mbf_{\Abf \Bbf}$.
$\otimes_{K}$  denotes the Kronecker product of two matrices, $\text{vec}$ the
column-stacking operator. With such form, the resulting optimal map can be seen
as a quadratic regularized map from initial Wasserstein \cite{ferradans2014regularized,flamary2014optlaplace}. However, unlike these approaches, we have a quadratic
but provably non convex term.
The gradient $\Gbf$ that arises from equation \eqref{discretefgw} can be expressed with the following partial derivative \emph{w.r.t.} $\GG$:
\begin{equation}
  \Gbf(\GG)=(1-\alpha) \Mbf_{\Abf \Bbf}^{q}+2\alpha \L(\C_{1},\C_{2})^{q}\otimes \GG
    \label{eq:gradfgw}
\end{equation}
Note that despite the apparent $O(m^2n^2)$ complexity of computing the
tensor product $\L(\C_{1},\C_{2})^{q}\otimes \GG$ given $\GG$, one can simplify the sum to complexity $O(mn^2+m^2n)$
\cite{peyre2016gromov} operations when $q=2$.

\begin{algorithm}[t]
    \caption{\label{alg:cg}
     Conditional Gradient (CG) for $FGW$}
            \begin{algorithmic}[1]
            \State $\pi^{(0)}\leftarrow \h\g^\top$
            \For {$i=1,\dots,$}
            \State $\Gbf\leftarrow$ Gradient from equation \eqref{eq:gradfgw} \emph{w.r.t.} $\GG^{(i-1)}$
            \State $\tilde\GG^{(i)}\leftarrow $ Solve OT with ground loss $\Gbf$
            \State $\tau^{(i)}\leftarrow$ Line-search for loss \eqref{discretefgw} with $\tau\in(0,1)$ using Alg. \ref{alg:line}
            \State $\GG^{(i)}\leftarrow (1-\tau^{(i)})\GG^{(i-1)}+\tau^{(i)}\tilde\GG^{(i)} $
            \EndFor
        \end{algorithmic}
          \end{algorithm}
Solving a large scale QP with a classical solver can be computationally expensive. In \cite{ferradans2014regularized}, authors propose a solver for a graph regularized optimal transport problem whose resulting optimization problem is also a QP. We can then directly use their conditional gradient scheme to solve our optimization problem as presented in Algorithm \ref{alg:cg}. It only needs at each iteration to compute the gradient in equation \eqref{eq:gradfgw} and to solve a linear OT problem with classical solvers (see Chapter \ref{cha:ot_general} for more details). The line-search part is a constrained minimization of a second degree polynomial function and, as such, admits a closed form expression written in Algorithm \ref{alg:line}. While the problem is non convex, conditional gradient is known to converge to a local stationary point with a $O(\frac{1}{\sqrt{t}})$ rate~\cite{lacoste2016convergence}. More precisely we note $g^{(i)}$ the Frank-Wolfe gap at iteration $i$ defined by: 
\begin{equation}
g^{(i)}=\max_{\GG \in \couplingset(\h,\g)} \froeb{\GG-\GG^{(i)}}{-\Gbf^{i}}
\end{equation}
We note also $|||.|||$ the dual norm for tensors: $|||\L|||=\sup_{\|\Abf\|_{\F}=1} \|\L \otimes \Abf\|_{\F}$ where $\L$ is a 4-dimensional tensor and $\Abf$ a matrix. Then using this dual norm we have that the gradient $\Gbf(.)$ is $|||2 \alpha \L(\C_{1},\C_{2})^{q}|||$-Lipschitz: 
\begin{equation}
\begin{split}
\forall (\GG_1,\GG_2) \in \couplingset(\h,\g)^{2}, \ \|\Gbf(\GG_1)-\Gbf(\GG_2)\|_{\F}&=  \|2\alpha \L(\C_{1},\C_{2})^{q} \otimes (\GG_1-\GG_2)\|_{\F} \\
&\leq 4 \alpha^{2} |||\L(\C_{1},\C_{2})^{q}||| \ \|\GG_1-\GG_2\|_{\F}
\end{split}
\end{equation}
We note also $\text{diam}_{\|.\|_{\F}}(\couplingset(\h,\g))^{2}$ the diameter of $\couplingset(\h,\g)$ (see \cite{pmlr-v28-jaggi13}):
\begin{equation}
\text{diam}_{\|.\|_{\F}}(\couplingset(\h,\g))^{2}=\max_{(\GG_1,\GG_2) \in \couplingset(\h,\g)^{2}}\|\GG_1-\GG_2\|^{2}_{\F}
\end{equation}
Then by using Theorem 1 in \cite{lacoste2016convergence} then minimal gap encountered by the iterates during the algorithm after $t$ iterations satisfies:
\begin{equation}
\min_{0 \leq i \leq t} g^{(i)} \leq \frac{\max\{ 2 h_{0}, C\}}{\sqrt{t+1}}
\end{equation}
where $C=4 \alpha^{2} |||\L(\C_{1},\C_{2})^{q}||| \ \text{diam}_{\|.\|_{\F}}(\couplingset(\h,\g))^{2}$ and $h_{0}$ is is the initial global suboptimality.
        
        \begin{algorithm}[t]
        \caption{\label{alg:line}
         Line-search for CG ($q=2$)}
                    \begin{algorithmic}[1]

        		\State $\mathbf{c}_{\C_{1},\C_{2}}$ from equation (6) in \cite{peyre2016gromov}
                \State $a=-2 \alpha \langle \C_{1} \tilde\GG^{(i)} \C_{2},\tilde\GG^{(i)} \rangle$ 
                \State $b{=}\froeb{(1-\alpha) \Mbf_{\Abf \Bbf}+\alpha \mathbf{c}_{\C_{1},\C_{2}}}{\tilde\GG^{(i)}}$
                $\, -2\alpha \big( \froeb{\C_{1} \tilde\GG^{(i)} \C_{2}}{\GG^{(i-1)}} + \froeb{\C_{1} \GG^{(i-1)} \C_{2}}{\tilde\GG^{(i)}} \big)$
                \If {$a>0$} 
                \State $\tau^{(i)} \leftarrow \text{min}(1,\text{max}(0,\frac{-b}{2a}))$
                \Else{\State $\tau^{(i)} \leftarrow 1$ 
                if $a+b<0$ else $\tau^{(i)} \leftarrow 0$} 
                \EndIf
            \end{algorithmic}
                  \end{algorithm}

\paragraph{Solving the barycenter problem with Block Coordinate Descent (BCD).}
We propose to minimize equation \eqref{eq:frechet_mean_fgw} using a BCD algorithm, \textit{i.e.} iteratively minimizing with respect to the couplings $\GG_{k}$, to the metric $\C$ and the feature vector $\Abf$.
The minimization of this problem \emph{w.r.t.} $(\GG_{k})_{k \in \integ{K}}$ is equivalent to compute $S$ independent Fused Gromov-Wasserstein distances as discussed above. We suppose that the feature space is $\Omega_f=(\mathbb{R}^{d},\|.\|_{2}^{2})$ and we consider $q=2$.
Minimization \emph{w.r.t.} $\C$ in this case has a closed form (see Proposition 4 in
\cite{peyre2016gromov} and Chapter \ref{cha:ot_general}) :
\begin{equation}
\label{eq:cmatrix}
    \C \leftarrow \frac{1}{\h\h^{T}} \sum_{k=1}^{K} \lambda_{k} \GG_{k}^{T} \C_{k} \GG_{k}
\end{equation}
where the division is computed elementwise and $\h$ is the histogram of the barycenter as discussed in section \ref{sec:bary}.
Minimization \emph{w.r.t.} $\Abf$ can be computed with \cite[Equation 8]{pmlr-v32-cuturi14}:
\begin{equation}
\Abf \leftarrow \sum_{k=1}^{K} \lambda_{k} \Bbf_{k} \GG_{k}^{T} \text{diag}(\frac{1}{\h})
\end{equation}

\tikzstyle{vertex}=[circle,fill=black,minimum size=4pt,inner sep=0pt]
\tikzstyle{edge} = [draw]
\tikzstyle{transp} = [draw, dashed]
\tikzstyle{feuille1}=[rectangle,draw,fill=blue,text=blue,minimum size=4pt,inner sep=0pt]
\tikzstyle{feuille2}=[circle,draw,fill=red,text=blue,minimum size=4pt,inner sep=0pt]
\tikzstyle{entour}=[ellipse,draw,text=blue]

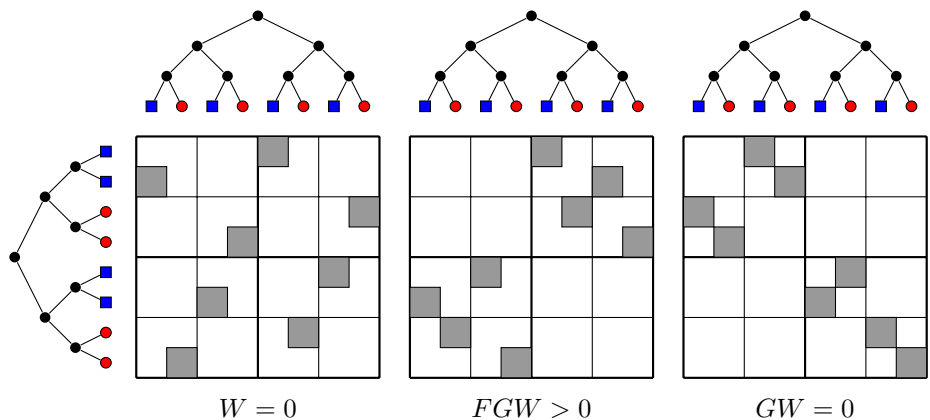
\begin{figure}[t]
\centering
\begin{tikzpicture}[scale=0.4, auto,swap]
    \foreach \pos/\name in {{(0,4)/a}, {(1,2)/c}, {(1,6)/b},
                             {(2,7)/d}, {(2,5)/e}, {(2,3)/f}, {(2,1)/g}}
        		\node[vertex] (\name) at \pos {};
	    \foreach \pos/\name in {
                             {(3,7.5)/h},  {(3,6.5)/i},  {(3,3.5)/l}, {(3,2.5)/m}}
        		\node[feuille1] (\name) at \pos {};
      \foreach \pos/\name in {
                             {(3,5.5)/j},  {(3,4.5)/k},  {(3,0.5)/n}, {(3,1.5)/o}}
       		\node[feuille2] (\name) at \pos {};
                     \foreach \source/ \dest in {b/a, c/a, d/b,e/b, f/c, g/c, d/h, d/i, e/j, e/k, f/l, f/m, g/n, g/o}
        \path[edge] (\source) -- (\dest);

   \draw[black] (4,0) grid[step=2](12,8);
  \draw[black, thick] (4,0) grid[step=4](12,8);

\draw[fill=black!40] (8,7) rectangle(9,8);
\draw[fill=black!40] (4,6) rectangle(5,7);
\draw[fill=black!40] (11,5) rectangle(12,6);
\draw[fill=black!40] (7,4) rectangle(8,5);
\draw[fill=black!40] (10,3) rectangle(11,4);
\draw[fill=black!40] (6,2) rectangle(7,3);
\draw[fill=black!40] (9,1) rectangle(10,2);
\draw[fill=black!40] (5,0) rectangle(6,1);

    \foreach \pos/\name in {{(8,12)/a}, {(6,11)/b}, {(10,11)/c},
                             {(5,10)/d}, {(7,10)/e}, {(9,10)/f}, {(11,10)/g}}
        		\node[vertex] (\name) at \pos {};
	    \foreach \pos/\name in {
                             {(4.5,9)/h},  {(6.5,9)/j},  {(8.5,9)/l}, {(10.5,9)/n}}
        		\node[feuille1] (\name) at \pos {};
      \foreach \pos/\name in {
                             {(5.5,9)/i},  {(7.5,9)/k},  {(9.5,9)/m}, {(11.5,9)/o}}
       		\node[feuille2] (\name) at \pos {};
    \foreach \source/ \dest in {b/a, c/a, d/b,e/b, f/c, g/c, d/h, d/i, e/j, e/k, f/l, f/m, g/n, g/o}
        \path[edge] (\source) -- (\dest);

\draw  (8,-1) node {${W}=0$};

   \draw[black] (12,0) grid[step=2, ,xshift=1cm](20,8);
  \draw[step=4,black, thick, xshift=1cm] (12,0) grid (20,8);
\draw  (17,-1) node {${FGW}>0$};

    \foreach \pos/\name in {{(8+9,12)/a}, {(6+9,11)/b}, {(10+9,11)/c},
                             {(5+9,10)/d}, {(7+9,10)/e}, {(9+9,10)/f}, {(11+9,10)/g}}
        		\node[vertex] (\name) at \pos {};
	    \foreach \pos/\name in {
                             {(4.5+9,9)/h},  {(6.5+9,9)/j},  {(8.5+9,9)/l}, {(10.5+9,9)/n}}
        		\node[feuille1] (\name) at \pos {};
      \foreach \pos/\name in {
                             {(5.5+9,9)/i},  {(7.5+9,9)/k},  {(9.5+9,9)/m}, {(11.5+9,9)/o}}
       		\node[feuille2] (\name) at \pos {};
    \foreach \source/ \dest in {b/a, c/a, d/b,e/b, f/c, g/c, d/h, d/i, e/j, e/k, f/l, f/m, g/n, g/o}
        \path[edge] (\source) -- (\dest);

   \draw[black] (24,0) grid[step=2, ,xshift=-2cm](32,8);
  \draw[step=4,black, thick, xshift=-2cm] (24,0) grid (32,8);

\draw[fill=black!40] (17,7) rectangle(18,8);
\draw[fill=black!40] (19,6) rectangle(20,7);
\draw[fill=black!40] (18,5) rectangle(19,6);
\draw[fill=black!40] (20,4) rectangle(21,5);
\draw[fill=black!40] (15,3) rectangle(16,4);
\draw[fill=black!40] (13,2) rectangle(14,3);
\draw[fill=black!40] (14,1) rectangle(15,2);
\draw[fill=black!40] (16,0) rectangle(17,1);

    \foreach \pos/\name in {{(8+18,12)/a}, {(6+18,11)/b}, {(10+18,11)/c},
                             {(5+18,10)/d}, {(7+18,10)/e}, {(9+18,10)/f}, {(11+18,10)/g}}
        		\node[vertex] (\name) at \pos {};
	    \foreach \pos/\name in {
                             {(4.5+18,9)/h},  {(6.5+18,9)/j},  {(8.5+18,9)/l}, {(10.5+18,9)/n}}
        		\node[feuille1] (\name) at \pos {};
      \foreach \pos/\name in {
                             {(5.5+18,9)/i},  {(7.5+18,9)/k},  {(9.5+18,9)/m}, {(11.5+18,9)/o}}
       		\node[feuille2] (\name) at \pos {};
    \foreach \source/ \dest in {b/a, c/a, d/b,e/b, f/c, g/c, d/h, d/i, e/j, e/k, f/l, f/m, g/n, g/o}
        \path[edge] (\source) -- (\dest);
\draw  (26,-1) node {${GW}=0$};

\draw[fill=black!40] (24,7) rectangle(25,8);
\draw[fill=black!40] (25,6) rectangle(26,7);
\draw[fill=black!40] (22,5) rectangle(23,6);
\draw[fill=black!40] (23,4) rectangle(24,5);
\draw[fill=black!40] (27,3) rectangle(28,4);
\draw[fill=black!40] (26,2) rectangle(27,3);
\draw[fill=black!40] (28,1) rectangle(29,2);
\draw[fill=black!40] (29,0) rectangle(30,1);

\end{tikzpicture}
\vspace{-2mm}

\caption{Example of $FGW$, $GW$ and $W$ on synthetic trees. Dark grey color represents a non null $\pi_{i,j}$ value between two nodes $i$ and $j$. {\bf (left)} the $W$ distance between the features with $\alpha=0$, {\bf (middle)}  $FGW$  {\bf (right)} the $GW$ between the structures $\alpha=1$. \label{mapstoy}}
\end{figure}

\section{Experimental results}
\label{sec:expeFGW}

We illustrate in this section the behavior of our method on synthetic and real datasets. The algorithm presented in the previous section have been implemented in the Python Optimal Transport toolbox~\cite{flamary2017pot}.

\subsection{Illustration of FGW on trees}

  We construct two trees {as illustrated in Figure \ref{mapstoy}, where the 1D node features are shown with colors (in red, features belong to $[0,1]$ and in blue in $[9,10]$).}
 The structure similarity matrices $\C_{1}$ and $\C_{2}$ are the shortest-paths between the nodes. Both trees have the same individual structure and the same features up to a permutation. However when combining both informations the trees are not the same, as they do not have the same labels at the same place. Figure \ref{mapstoy} illustrates the behavior of the ${FGW}$ distance when the trade-off parameter $\alpha$ changes. The left part recovers the Wasserstein distance between the features ($\alpha=0$):  red nodes are coupled to red ones and the blue nodes to the blue ones. For an alpha close to $1$ (right), we recover the Gromov-Wasserstein distance between the structures of the trees: all couples of points are coupled to another couple of points, without taking into account the features. Both approaches fail in discriminating the two trees. Finally, for an intermediate $\alpha$ in $FGW$ (center), the bottom and first level structure are preserved as well as the feature matching (red on red and blue on blue), resulting on a positive distance. Note that $FGW$ preserves also the substructures of the trees through its coupling.

\subsection{Illustration of FGW on 1D distributions}
Figure \ref{fig:illus_emp} illustrates the differences between Wasserstein, Gromov-Wasserstein and Fused Gromov-Wasserstein couplings $\GG^{*}$ on 1D distributions. In this example both the feature and structure are 1-dimensional, that is $(\xbf_i,\ybf_j) \in \R^{2}$ and $(\a_i,\b_j) \in \R^{2}$ (Figure \ref{fig:illus_emp} left). The feature space (vertical axis) denotes two clusters among the elements of both objects illustrated in the OT matrix $\Mbf_{\Abf \Bbf}$, the structure space (horizontal axis) denotes a noisy temporal sequence along the indexes illustrated in the matrices $\C_1$ and $\C_2$ (Figure \ref{fig:illus_emp} center). Wasserstein respects the clustering but forgets the temporal structure, Gromov-Wasserstein respects the structure but do not take the clustering into account. Only FGW retrieves a transport matrix respecting both feature and structure.

\begin{figure*}[t]
    \centering
  \includegraphics[height=5.1cm]{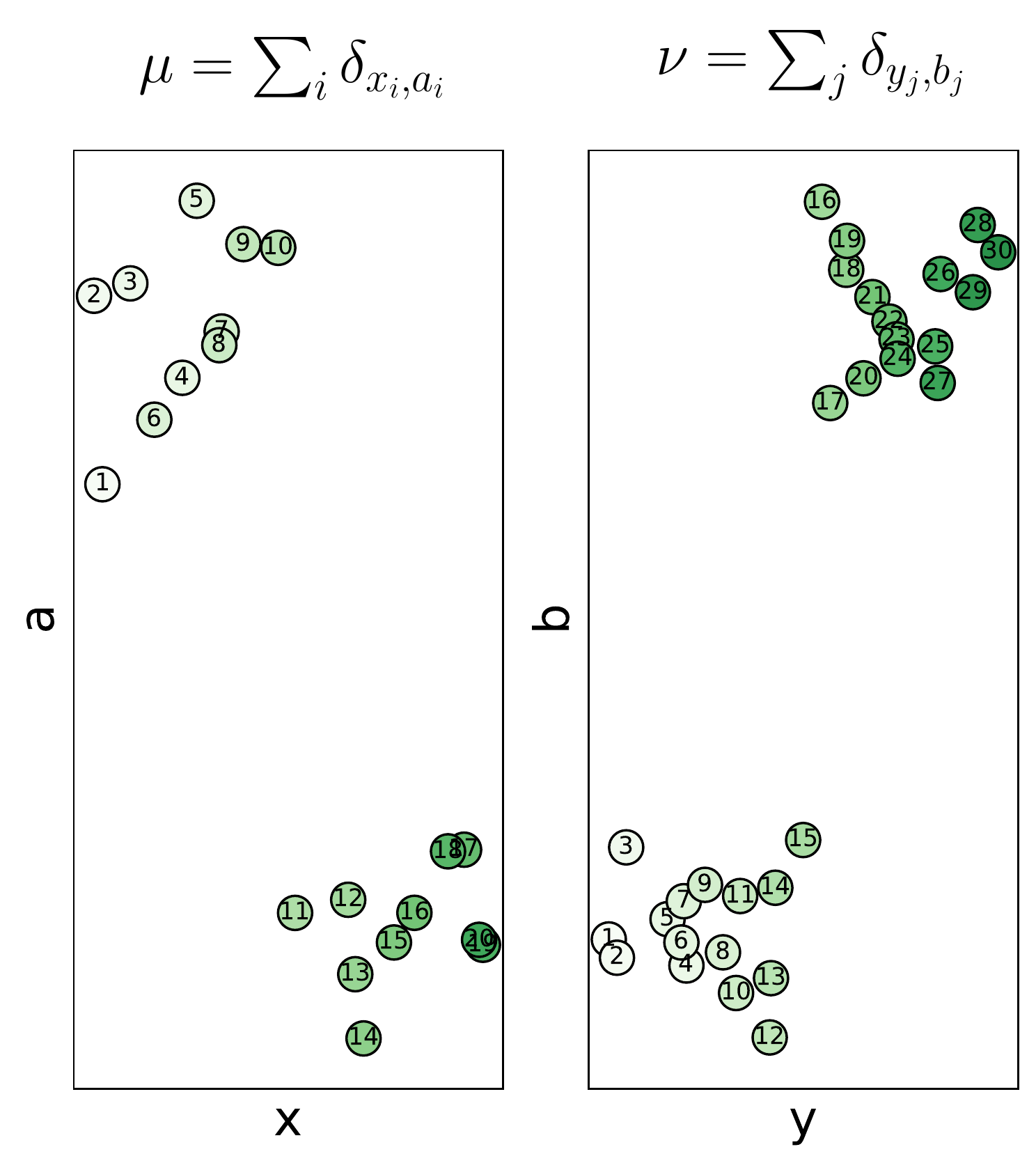}
  \hspace{-2mm}
  \includegraphics[height=4cm]{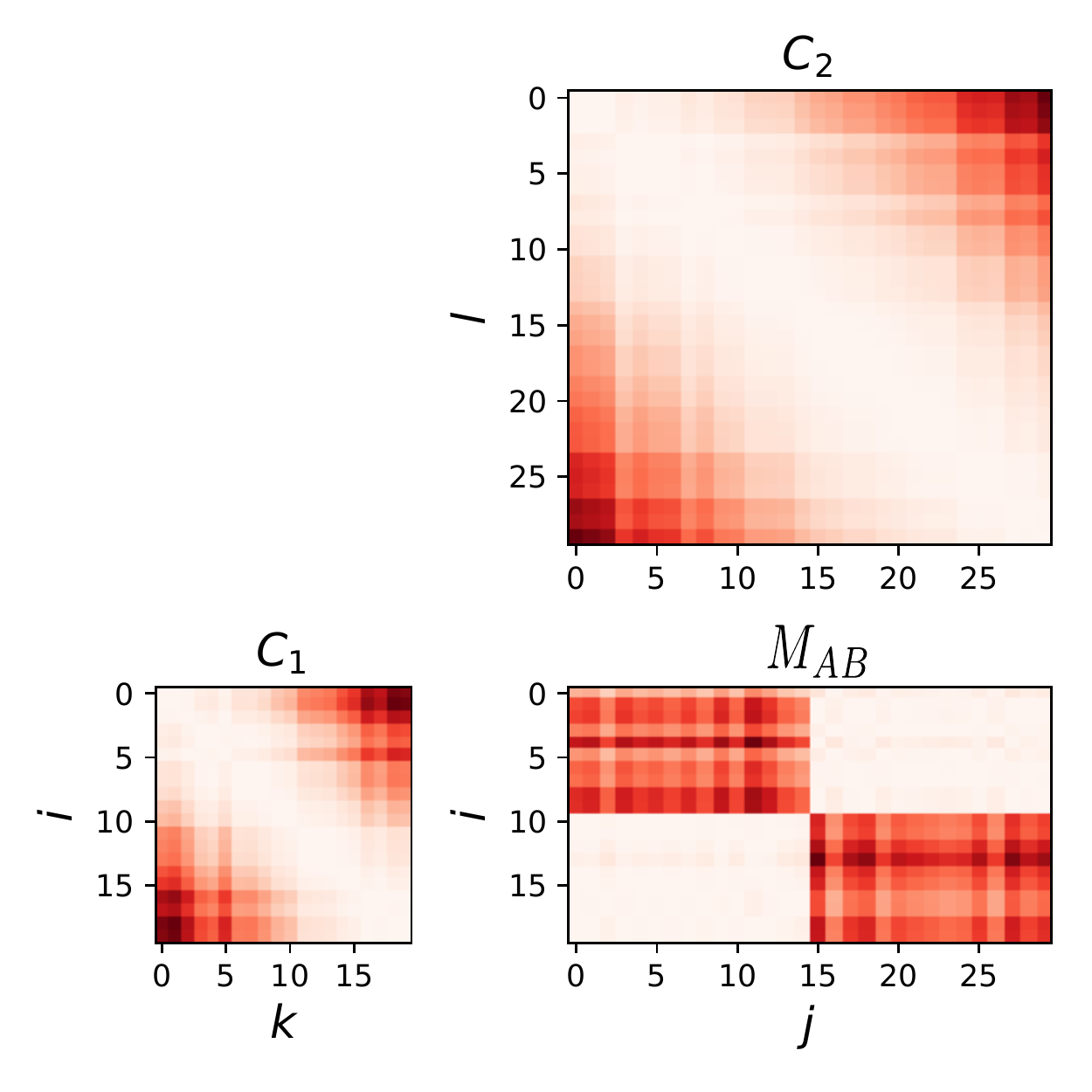}
  \hspace{-2mm}
      \includegraphics[height=5.3cm]{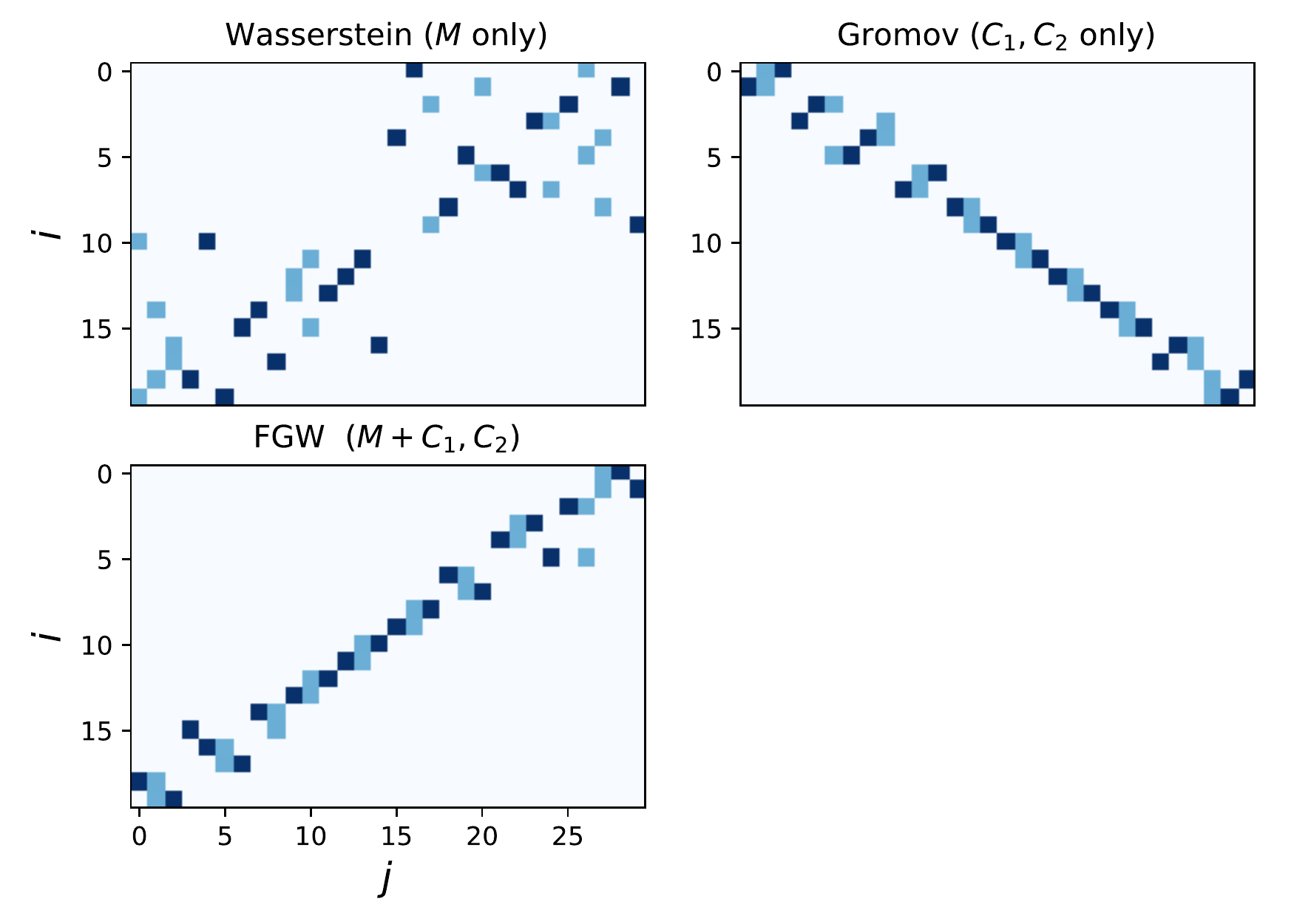}
    \caption{Illustration of the difference between $W$, $GW$ and $FGW$ couplings. {\bf (left)} empirical distributions $\mu$ with 20 samples and $\nu$ with 30 samples which color is proportional to their index. {\bf (middle)} Cost matrices in the feature ($\Mbf_{\Abf \Bbf}$) and structure domains ($\C_1,\C_2$) with similar samples in white. {\bf (right)} Solution for all methods. Dark blue indicates a non zero coefficient of the transportation map. Feature distances are large between points laying on the diagonal of $\Mbf_{\Abf \Bbf}$ such that Wasserstein maps is anti-diagonal but unstructured. Fused Gromov-Wasserstein incorporates both feature and structure maps in a single transport map.}
    \label{fig:illus_emp}
\end{figure*}

\subsection{Illustration of FGW on simple images}

We extract a $28\times28$ image from the MNIST dataset and generate a second one through translation or mirroring of the digit in the original image. We use pixel gray levels as the features, and the structure is defined as the city-block distance on the pixel coordinate grid. We use equal weights for all the pixels in the image. Figure \ref{fig:illus_digit} shows the different couplings obtained when considering either the features only, the structure only or both information. $FGW$ aligns the pixels of the digits, recovering the correct order of the pixels, while both Wassertein and Gromov-Wasserstein distances fail at providing a meaningful transportation map. Note that in the Wasserstein and Gromov-Wasserstein case, the distances are equal to 0, whereas $FGW$ manages to spot that the two images are different. Also note that, in the $FGW$ sense, the original digit and its mirrored version are also equivalent as there exists an isometry between their structure spaces, making $FGW$ invariant to rotations or flips in the structure space in this case.
\begin{figure*}[t]
    \centering
        \includegraphics[height=3cm]{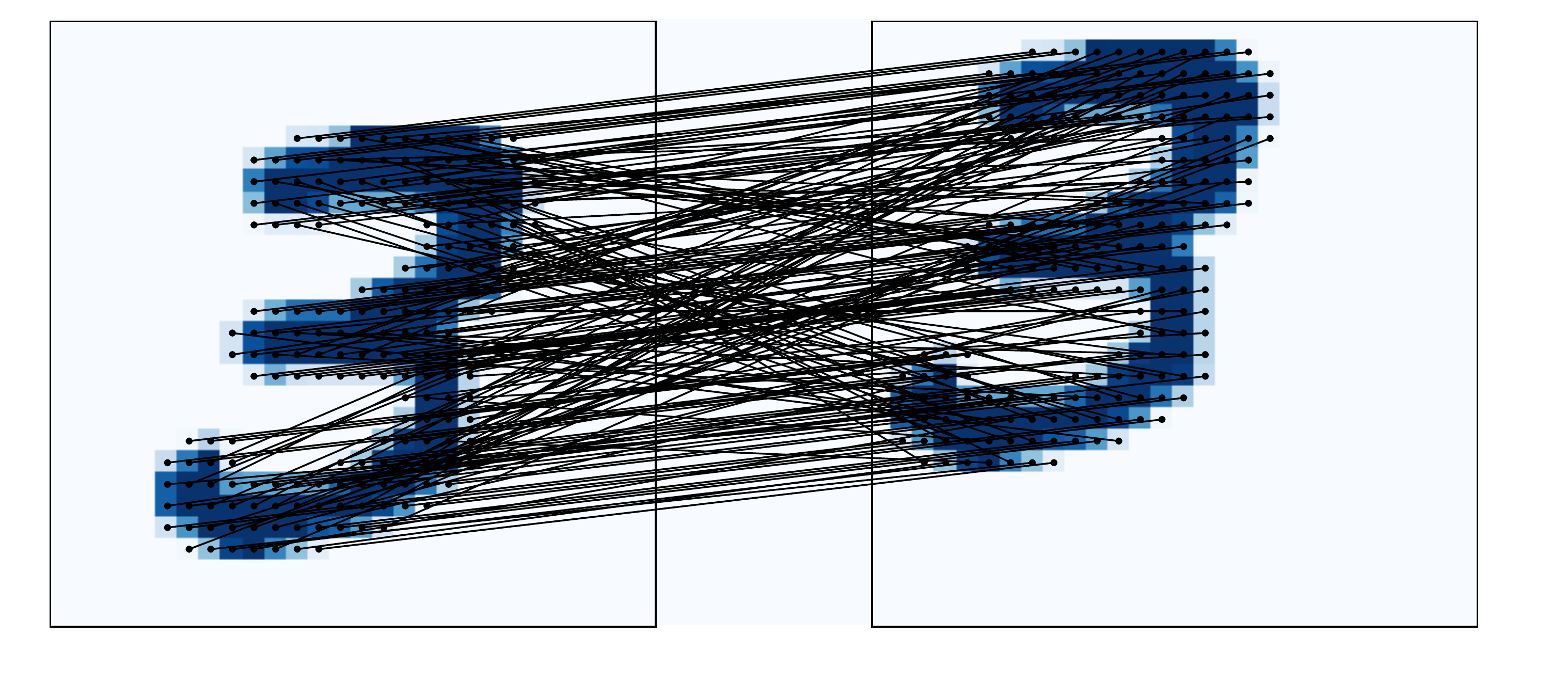}
  \hspace{-2mm}
  \includegraphics[height=3cm]{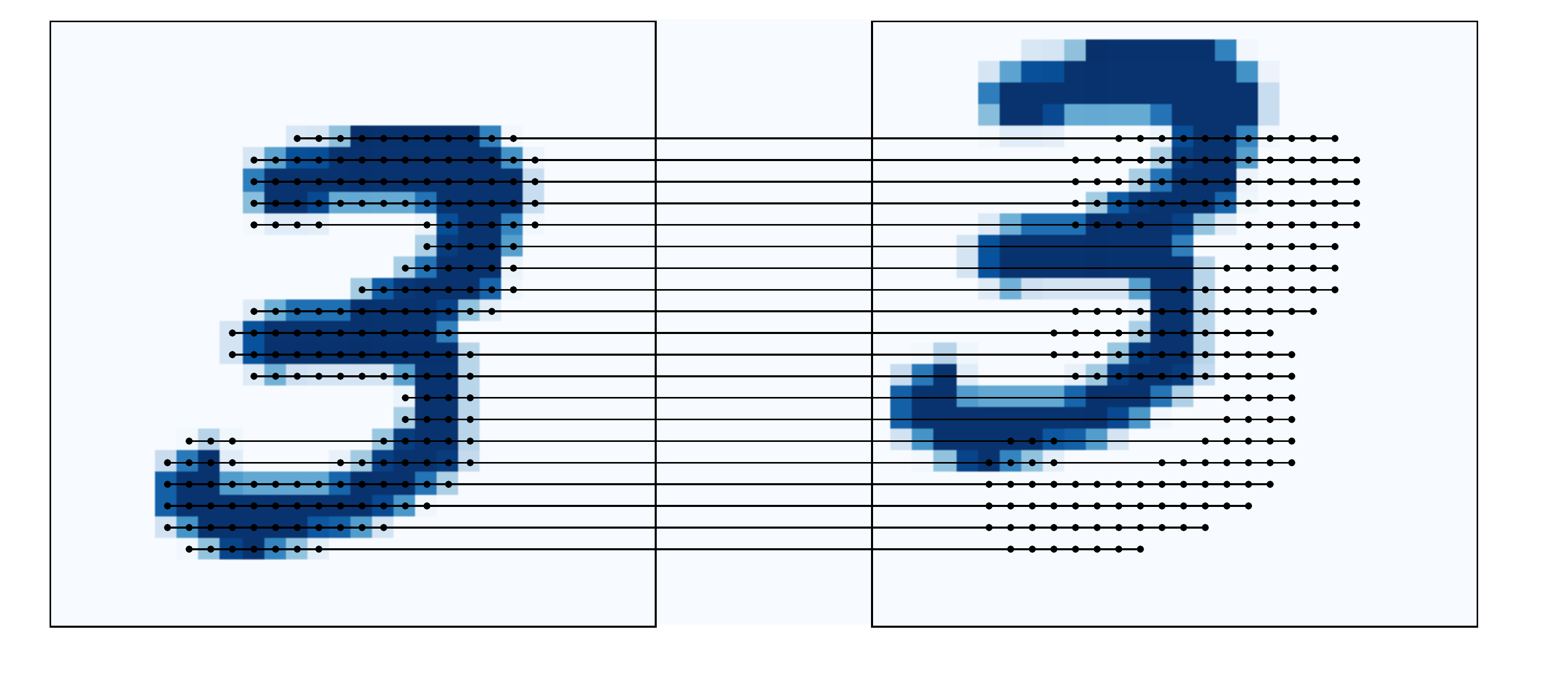}
  \hspace{-2mm}
      \includegraphics[height=3cm]{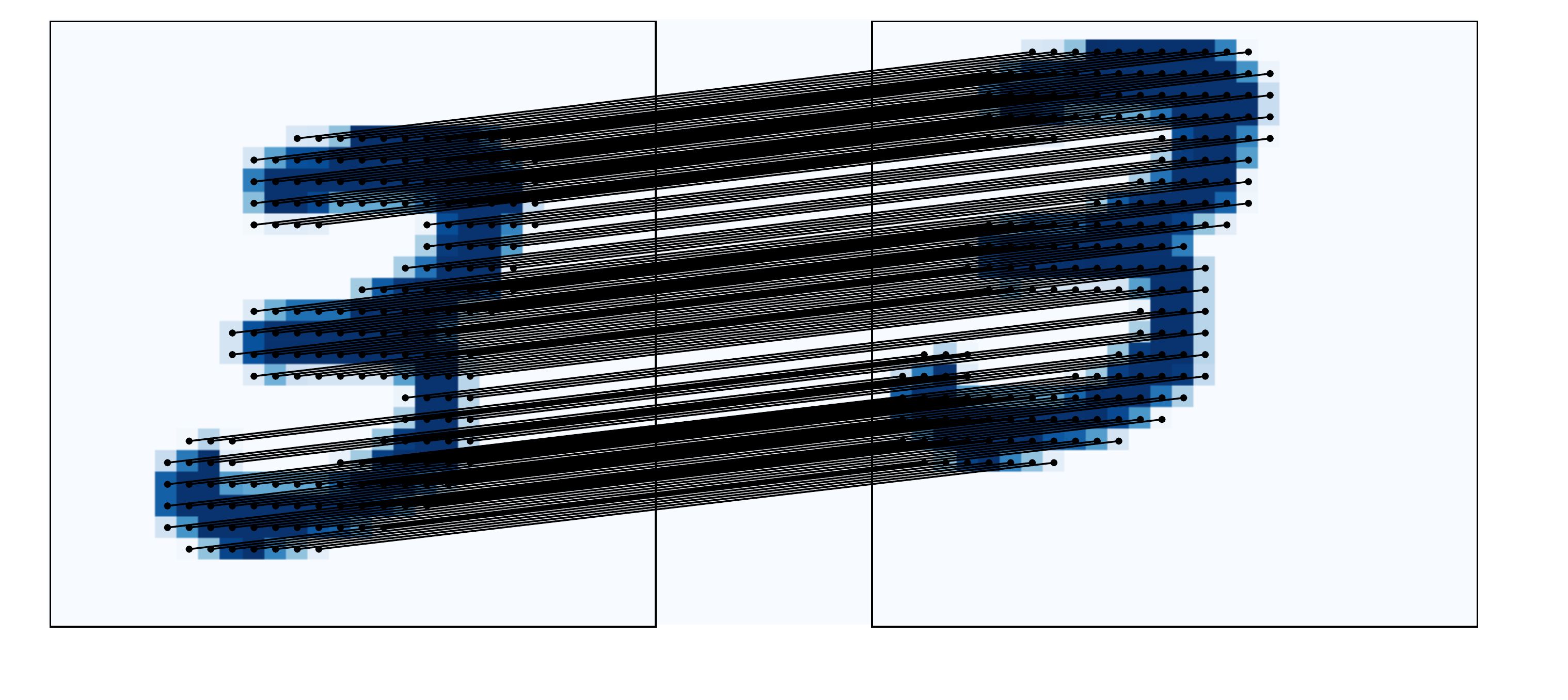}
     \hspace{-2mm}
   \includegraphics[height=3cm]{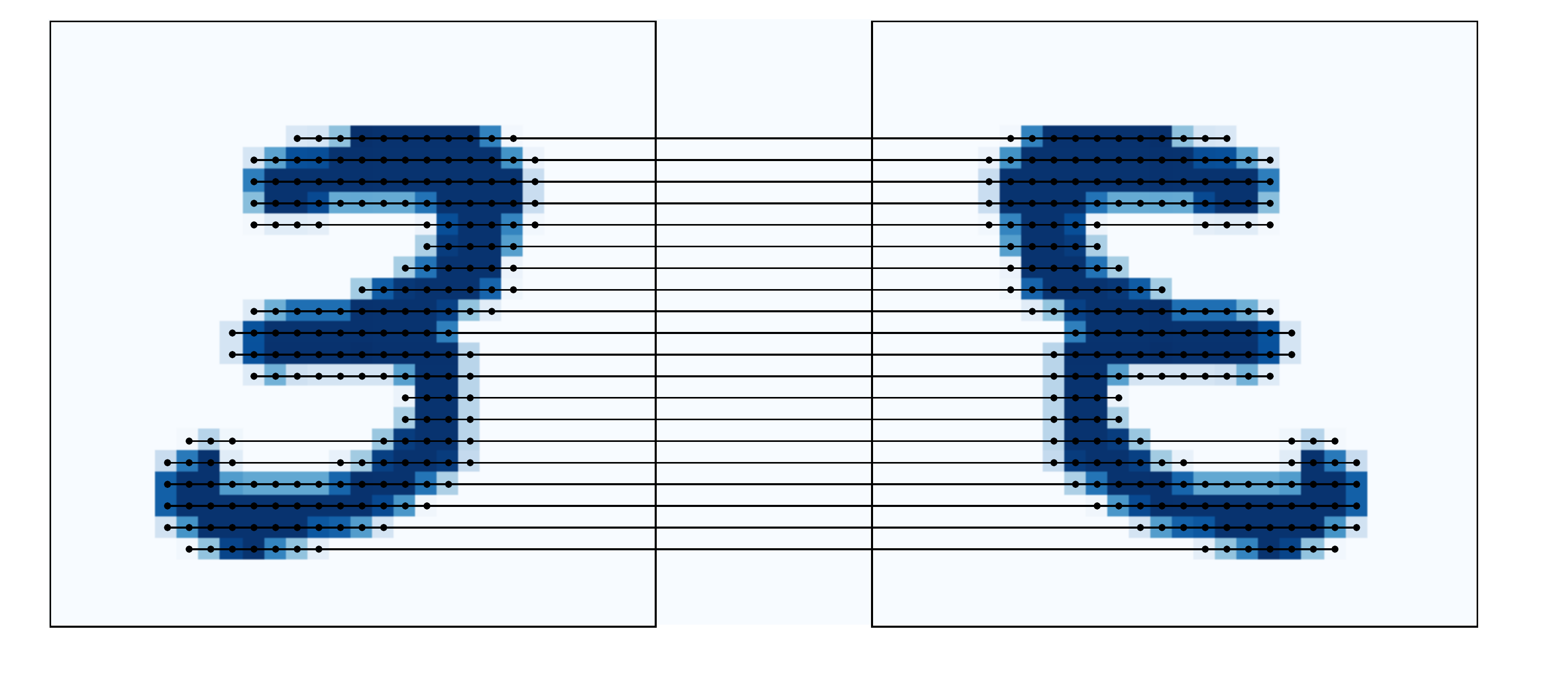}
    \caption{Couplings obtained when considering {\bf (top left)} the features only, where we have $\wass=0$ {\bf (top right)} the structure only, with  $\gw=0$ {\bf (bottom left and right)} both the features and the structure, with $FGW_{0.1,2}$. For readibility issues, only the couplings starting from non white pixels on the left picture are depicted. }
    \label{fig:illus_digit}
\end{figure*}

\subsection{Illustration of FGW on time series data}
One of the main assets of $FGW$ is that it can be used on a wide class of structured data such as graphs and also time series. We consider here 25 monodimensional time series composed of two humps in $[0,1]$ with random uniform height between 0 and 1.  Signals are distributed according to two classes translated from each other with a fixed gap. The $FGW$ distance is computed by considering $\Mbf_{\Abf \Bbf}$ as the Euclidean distance between the features of the signals (here the value of the signal in each point) and $\C_1$ and $\C_2$ as the euclidean distance between timestamps.

 A 2D embedding is computed from a $FGW$ distance matrix between a number of examples in this dataset with multidimensional scaling (MDS) in Figure \ref{mds} (top). One can clearly see that the representation with a reasonable $\alpha$ value in the center is the most discriminant one. This can be better understood by looking as the OT matrices between the classes.
Figure \ref{mds} (bottom) illustrates the behavior of $FGW$ on one pair of examples when going from  Wasserstein to Gromov-Wasserstein. The black line depicts the matching provided by the transport matrix and one can clearly see that while Wasserstein on the left assigns samples completely independently to their temporal position, the Gromov-Wasserstein on the right tends to align perfectly the samples (note that it could have reversed exactly the alignment with the same loss) but discards the values in the signal. Only the true $FGW$ in the center finds a transport matrix that both respects the time sequences and aligns similar values in the signals.

\begin{figure*}[t]
\centering
\includegraphics[width=1\linewidth]{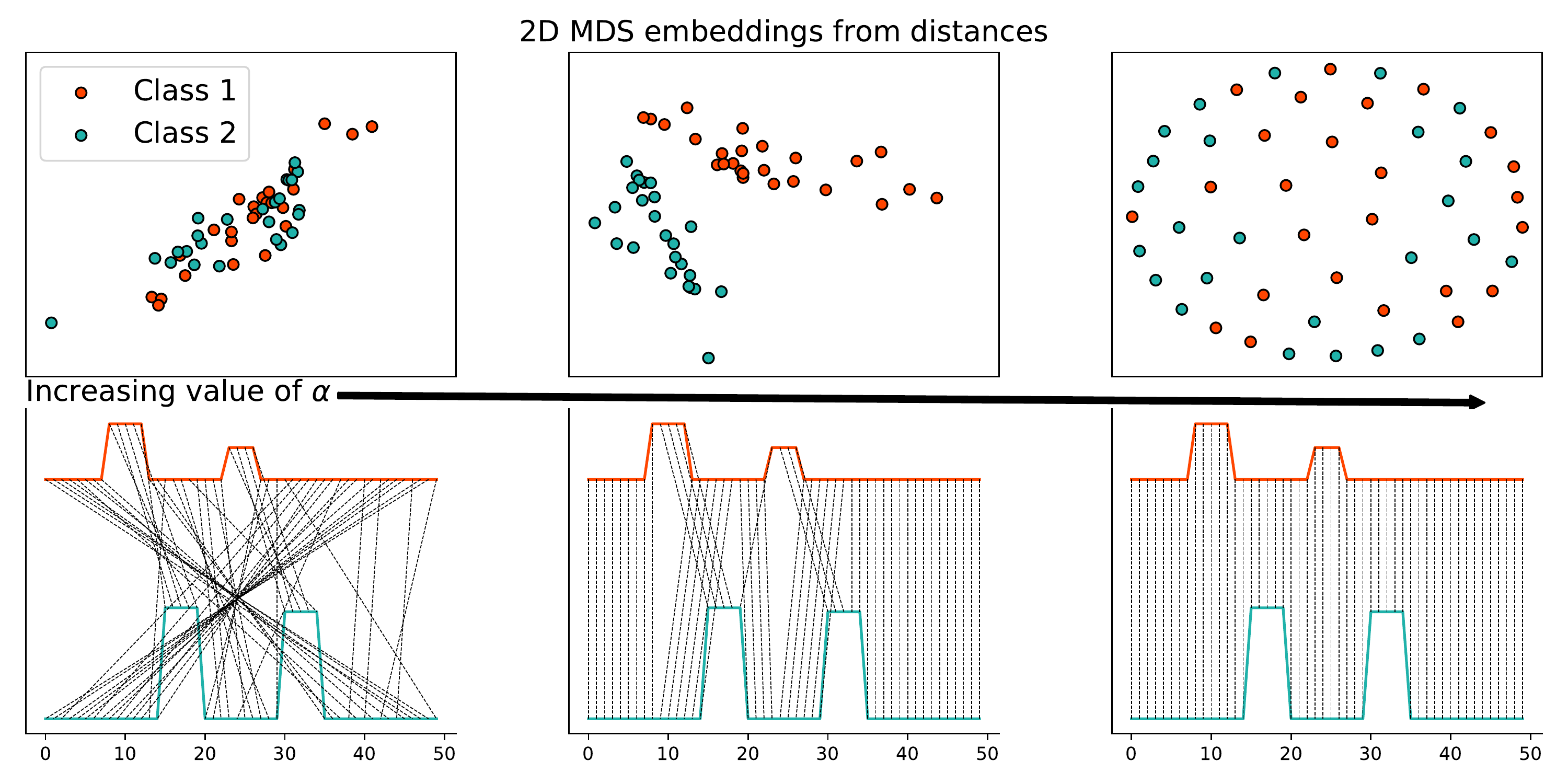}
\caption{Behavior of trade-off parameter $\alpha$ on a toy time series classification problem. $\alpha$ is increasing from left ($\alpha=0$ : Wasserstein distance) to right ($\alpha=1$ : Gromov-Wasserstein distance). {\bf (top row)} 2D-embedding is computed from the set of pairwise distances between samples with MDS {\bf (bottom row)} illustration of couplings between two sample time series from opposite classes.}
\label{mds}
\end{figure*}

\subsection{Graph-structured data classification}

We now use $FGW$ on real-world dataset where we study its behavior on a graph classification task. More precisely we address the question of training a classifier for graph data and evaluate the FGW distance used in a kernel with SVM.

\paragraph{Datasets}  {We consider 12 widely used benchmark datasets divided into 3 groups. BZR, COX2 \cite{cox2bzr},  PROTEINS, ENZYMES  \cite{enzymes}, CUNEIFORM \cite{kriege.cuneiform} and SYNTHETIC \cite{NIPS2013_5155} are vector attributed graphs. MUTAG  \cite{doi:10.1021/jm00106a046}, PTC-MR \cite{DBLP:journals/corr/KriegeGW16} and NCI1 \cite{nci1} contain graphs with discrete attributes derived from small molecules.   IMDB-B, IMDB-M \cite{Yanardag15} contain unlabeled graphs derived from social networks. All datasets are available in \cite{KKMMN2016}. }

\paragraph{Experimental setup} {Regarding the feature distance matrix $\Mbf_{\Abf \Bbf}$
between node features, when dealing with real valued vector attributed graphs, we consider the $\ell_{2}$
 distance between the labels of the vertices. 
In the case of graphs with discrete attributes, we consider two settings: in the first one, we keep the original labels (denoted as \textsc{raw}); we also consider a Weisfeiler-Lehman labeling (denoted as \textsc{wl}) by concatenating the  labels of the neighbors. A vector of size \textsc{h} is created by repeating this procedure \textsc{h} times ~\cite{wlkernel,DBLP:journals/corr/KriegeGW16}. In both cases, we compute the feature distance matrix by using $d(a_{i},b_{j})=\sum_{k=0}^{H} \delta(\tau(a_{i}^{k}),\tau(b_{j}^{k}))$ where $\delta(x,y)=1$ if $x\neq y$ else $\delta(x,y)=0$ and $\tau(a_{i}^{k})$ denotes the concatenated label at iteration $k$ (for $k=0$ original labels are used). Regarding the structure distances $C$, they are computed by considering a
shortest path distance between the vertices. 
  
  For the classification task, we run a SVM using the indefinite kernel matrix $e^{-\gamma {FGW}}$ which
 is seen as a noisy observation of the true positive semidefinite kernel
 \cite{Luss:2007:SVM:2981562.2981682}. We compare classification accuracies with the following
 state-of-the-art graph kernel methods:}  (SPK) denotes the shortest path kernel
 \cite{enzymes}, (RWK) the random walk kernel \cite{Gartner03ongraph}, (WLK) the
 Weisfeler Lehman kernel \cite{wlkernel}, (GK) the graphlet count kernel
 \cite{Shervashidze09efficientgraphlet}. For real valued vector attributes, we
consider the HOPPER kernel (HOPPERK) \cite{NIPS2013_5155} and the propagation kernel
 (PROPAK) \cite{Neumann2016}. We build upon the GraKel library \cite{2018arXiv180602193S} to construct the kernels and C-SVM to perform the classification. We
 also compare $FGW$ with the PATCHY-SAN framework for  CNN on graphs  (PSCN)
 \cite{pmlr-v48-niepert16} building on our own implementation of the method\footnote{\url{https://github.com/tvayer/PSCN}}.

{To compare the methods, most papers about graph classification usually perform a nested cross validation (using 9 folds for training, 1 for testing, and reporting the average accuracy of this experiment repeated 10 times) and report accuracies of the other methods taken from the original papers. However, these comparisons are not fair because of the high variance on most datasets \textit{w.r.t.} the folds chosen for training and testing. This is why, in our experiments, the nested cross validation is performed on the same folds for training and testing for \emph{all} methods. In the result tables \ref{tab:vec}, \ref{tab:disc} and \ref{tab:no} we add a (*) when the best score does not yield to a significative improvement (based on a Wilcoxon signed rank test on the test scores) compared to the second best one. Note that, because of their small sizes, we repeat the experiments 50 times for MUTAG and PTC-MR datasets. For all methods using SVM, we cross validate the parameter $C \in \{10^{-7},10^{-6},...,10^{7}\}$. The range of the WL parameter \textsc{h} is $\{0,1...,10\}$, and we also compute this kernel with \textsc{h} fixed at $2,4$. The decay factor $\lambda$ for RWK $\{10^{-6},10^{-5}...,10^{-2}\}$, for the GK kernel we set the graphlet size $\kappa =3$ and cross validate the precision level $\epsilon$ and the confidence $\delta$ as in the original paper \cite{Shervashidze09efficientgraphlet}. The $t_{\text{max}}$ parameter for PROPAK is chosen within $\{1,3,5,8,10,15,20\}$. For PSCN, we choose the normalized betweenness centrality as labeling procedure and cross validate the batch size in $\{10, 15,...,35\}$ and number of epochs in $\{10, 20,..., 100\}$. Finally for ${FGW}$, $\gamma$ is cross validated within $\{2^{-10},2^{-9},...,2^{10}\}$ and $\alpha$ is cross validated \textit{via} a logspace search in $[0,0.5]$ and symmetrically $[0.5,1]$ (15 values are drawn).

\subsubsection{Results and discussion}

\paragraph{Vector attributed graphs.} The average accuracies reported in Table \ref{tab:vec} show that FGW is a clear state-of-the-art method and
performs best on 4 out of 6 datasets with performances in the error bars of the
best methods on the other two datasets. Results for CUNEIFORM are significantly below those from the original paper \cite{kriege.cuneiform} which can be explained by the fact that the method in this paper uses a graph convolutional approach specially designed for this dataset and that the experimental setting is different. In comparison, the other competitive methods are less consistent as they exhibit some good performances on some datasets only.

\begin{table*}[t]
    \caption{Average classification accuracy on the graph datasets with vector attributes.\label{tab:vec}}
    \vspace{1.5mm}
    \begin{center}
\resizebox{1\linewidth}{!}{
\begin{sc}
    \setlength{\tabcolsep}{4pt}
\begin{tabular}{lllllll}
\toprule
{Vector attributes} &                   BZR &          COX2 &     CUNEIFORM &       ENZYMES &        PROTEIN &      SYNTHETIC \\
\midrule
\midrule
\small{FGW}             &  \textbf{85.12$\pm$4.15}* &  77.23$\pm$4.86 &  \textbf{76.67$\pm$7.04} &  71.00$\pm$6.76 &   \textbf{74.55$\pm$2.74} &  \textbf{100.00$\pm$0.00} \\
\midrule
\small{HOPPERK}               &  84.15$\pm$5.26 &  \textbf{79.57$\pm$3.46} &  32.59$\pm$8.73 &  45.33$\pm$4.00 &   71.96$\pm$3.22 &   90.67$\pm$4.67 \\
\small{PROPAK}                 &   79.51$\pm$5.02 &  77.66$\pm$3.95 &  12.59$\pm$6.67 &  \textbf{71.67$\pm$5.63}* &   61.34$\pm$4.38 &   64.67$\pm$6.70 \\\midrule
\small{PSCN k=10}                                 &  80.00$\pm$4.47 &  71.70$\pm$3.57 &  25.19$\pm$7.73 &  26.67$\pm$4.77 &  67.95$\pm$11.28 &  \textbf{100.00$\pm$0.00} \\
\small{PSCN k=5 }                                 &  82.20$\pm$4.23 &  71.91$\pm$3.40 &  24.81$\pm$7.23 &  27.33$\pm$4.16 &   71.79$\pm$3.39 &  \textbf{100.00$\pm$0.00} \\
\bottomrule
\end{tabular}
\end{sc}}
\end{center}
\end{table*}

\paragraph{Discrete labeled graphs.} We first note in Table \ref{tab:disc} that $FGW$ using WL attributes outperforms all competitive methods, including $FGW$ with raw features. Indeed, the WL attributes allow encoding more finely the neighborood of the vertices by stacking their attributes, whereas FGW with raw features only consider the shortest path distance between vertices, not their sequence of labels. This result calls for using meaningful feature and/or structure matrices in the FGW definition, that can be dataset-dependant, in order to enhance the performances. We also note that $FGW$ with WL attributes outperforms the WL kernel method, highlighting the benefit of an optimal transport-based distance over a kernel-based similarity. Surprisingly results of PSCN are significantly lower than those from the original paper. We believe that it comes from the difference between the folds assignment for training and testing, which suggests that PSCN is difficult to tune.

\begin{table}[t]
    \caption{Average classification accuracy on the graph datasets with discrete attributes.\label{tab:disc}}
    \vspace{1.5mm}
\begin{center}
    \resizebox{0.7\linewidth}{!}{
\begin{sc}
    \setlength{\tabcolsep}{4pt}
\begin{tabular}{llll}
\toprule
{Discrete attr.} &          MUTAG &          NCI1 &           PTC-MR \\
\midrule
\midrule
FGW raw               &  83.26$\pm$10.30 &     72.82$\pm$1.46 &  55.71$\pm$6.74 \\
FGW wl h=2       &   86.42$\pm$7.81 &          85.82$\pm$1.16 &  {63.20$\pm$7.68} \\
FGW wl  h=4       &   \textbf{88.42$\pm$5.67} &          \textbf{86.42$\pm$1.63} & \textbf{65.31$\pm$7.90} \\
\midrule
GK k=3                              &   82.42$\pm$8.40 &    60.78$\pm$2.48 &  56.46$\pm$8.03 \\
RWK             &   79.47$\pm$8.17 &         58.63$\pm$2.44 &  55.09$\pm$7.34 \\
SPK             &   82.95$\pm$8.19 &  74.26$\pm$1.53 &          60.05$\pm$7.39 \\
WLK               &   86.21$\pm$8.48 &          85.77$\pm$1.07 &  62.86$\pm$7.23 \\
WLK h=2                              &   86.21$\pm$8.15 &     81.85$\pm$2.28 &  61.60$\pm$8.14 \\
WLK h=4                              &   83.68$\pm$9.13 &     85.13$\pm$1.61 &
62.17$\pm$7.80 \\\midrule
PSCN k=10                           &  83.47$\pm$10.26 &          70.65$\pm$2.58 &  58.34$\pm$7.71 \\
PSCN k=5                            &  83.05$\pm$10.80 &          69.85$\pm$1.79 &  55.37$\pm$8.28 \\
\bottomrule
\end{tabular}
\quad
\end{sc}}
\end{center}
\end{table}

\paragraph{Non-attributed graphs.} The particular case of the GW distance for graph classification is also illustrated on social
datasets, that contain no labels on the vertices. Accuracies reported in Table \ref{tab:no} show that it greatly
outperforms SPK and GK graph kernel methods.

\begin{table}[t]
    \caption{Average classification accuracy on the graph datasets with no attributes.\label{tab:no}}
    \vspace{1.5mm}
    \begin{center}
    \begin{sc}
    \begin{tabular}{lll}
    \toprule
    {Without attribute} &        IMDB-B &       IMDB-M \\
    \midrule
    \midrule
    GW  &  \textbf{63.80$\pm$3.49} &  \textbf{48.00$\pm$3.22} \\
    \midrule
    GK k=3                 &  56.00$\pm$3.61 &  41.13$\pm$4.68 \\
    SPK &  55.80$\pm$2.93 &  38.93$\pm$5.12 \\
    \bottomrule
    \end{tabular}
    \end{sc}
    \end{center}
\end{table}

\paragraph{Comparison between $FGW$, $W$ and $GW$} During
the validation step, the optimal value of $\alpha$ was consistently selected
inside the $]0,1[$ interval, excluding $0$ and $1$, suggesting that both structure and feature pieces of
information are necessary (details are given in Section \ref{sec:compa}).

\subsection{Graph barycenter and compression}

In this section we use the barycentric formulation of $FGW$ using the Fréchet mean formulation in equation \eqref{eq:frechet_mean_fgw} in two settings. The first one is the computation of barycenter of several toy labeled graphs and the second the compression of one graph into a smaller graph, also known as coarsening \cite{loukas_coarsening}.

\begin{figure}[t]
        \centerline{\includegraphics[width=1.2\textwidth]{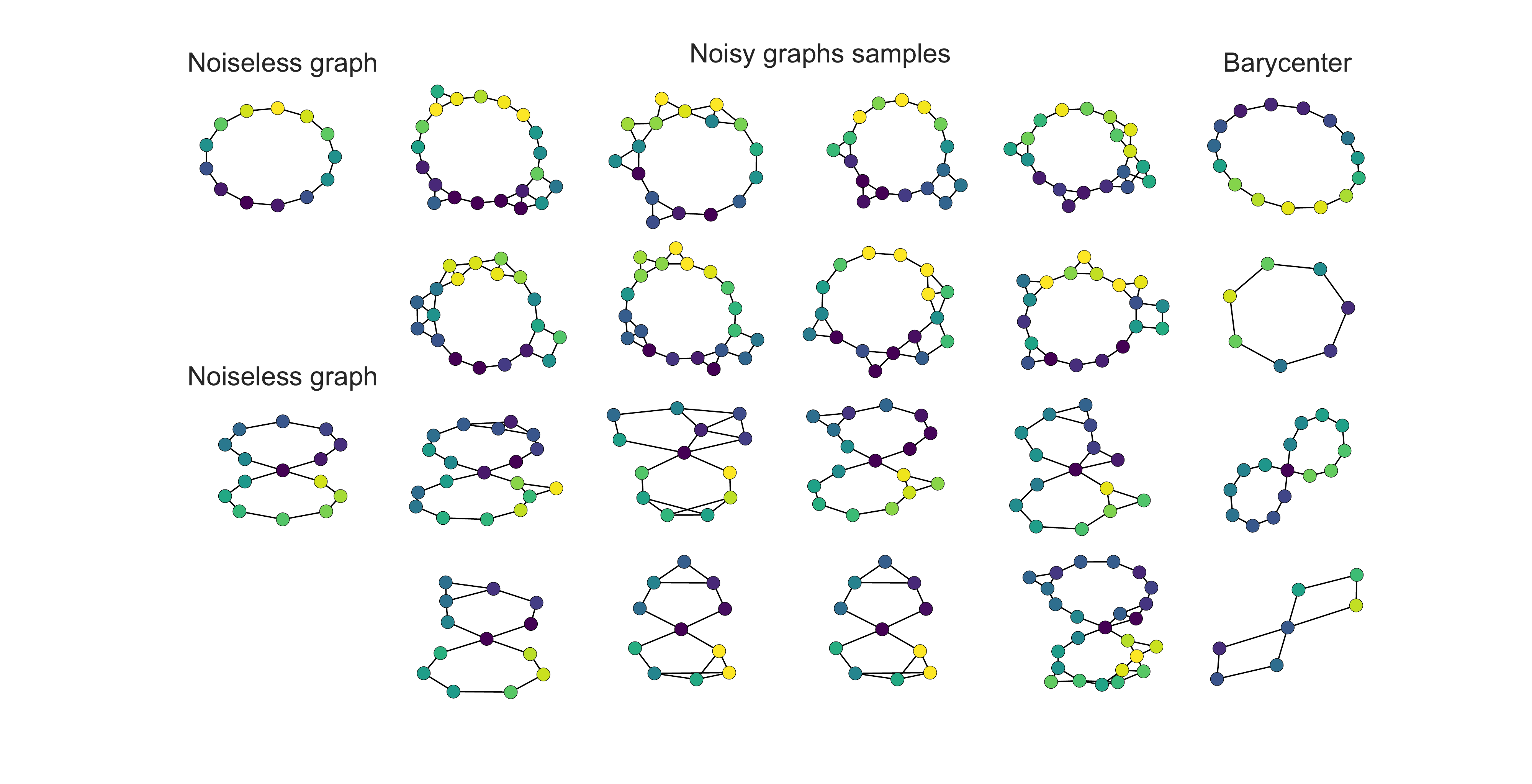}}
    \caption{\label{barygraph}Illustration of $FGW$ graph barycenter. The first column illustrates the original settings with the noiseless graphs, and columns 2 to 5 are the noisy samples that constitute the datasets. Column 6 show the barycenters for each setting, with different number of nodes. Blue nodes indicates a feature value close to $-1$, yellow nodes close to $1$.  
    }
    \end{figure}

\paragraph{Graph barycenter} In this part, we use $FGW$ to compute barycenter of toy graphs. In a first example, we generate graphs following either a circle or $8$ symbol with 1D features following a sine and linear variation respectively.
For each example, the number of nodes is drawn randomly between 10 and 25, Gaussian noise is added to the features and a small noise is applied to the structure (some connections are randomly added). An example graph  with no noise is provided for each class in the first column of Figure \ref{barygraph}. One can see from there that the circle class has a feature varying smoothly (sine) along the graph but the $8$ has a sharp feature change at its center (so that low pass filtering would loose some information). Some examples of the generated graphs are provided in the 2nd-to-7th columns of Figure \ref{barygraph}. We compute the $FGW$ barycenter containing 10 samples using the shortest path distance between the nodes as the structural information and the distance induced by the Euclidean norm $\|.\|_{2}$ for the features. 

Note that the iterations of the barycenter defined in equation \eqref{eq:cmatrix} result in a dense $\C$ matrix and to visualize properly the graph barycenter we need a adjacency matrix. We propose a simple heuristic procedure to recover an adjacency matrix for the graphs' barycenter based on a thresholding of the matrix $\C$. Given a threshold $t$ the matrix $thresh_t(\C)$ is given by $1$ if $C_{ij}<=t$ and $0$ elsewhere. The threshold $t$ is tuned so as to minimize the Frobenius norm between the original $\C$ matrix and the shortest path matrix constructed after thresholding $\C$. More precisely if $SP$ denotes the algorithm which takes as input an adjacency matrix and outputs a shortest-path matrix then the threshold is given:
\begin{equation}
\label{eq:thresh_procedure}
\argmin_{t \in \R_{+}} \|\C-SP(thresh_t(\C))\|_{F}^{2}
\end{equation} 
The idea behind equation \eqref{eq:thresh_procedure} is that $\C$ represents somehow the shortest-path matrix of a graph so that we want the adjacency matrix $thresh_t(\C)$ to give a shortest-path matrix as close as possible to $\C$. Unfortunately equation \eqref{eq:thresh_procedure} is not differentiable with respect to $t$. We use a simple brute force strategy to find a suitable threshold $t$ by looking at $\argmin_{t \in \{t_1,\dots,t_L\}} \|\C-SP(thresh_t(\C))\|_{F}^{2}$ where $t_1,\dots,t_L$ are drawn from $\R_{+}$.

Resulting barycenters are showed in Figure \ref{barygraph} for $n=15$ and $n=7$ nodes. First, one can see that the barycenters are denoised both in the feature space and the structure space. Also note that the sharp change at the center of the $8$ class is conserved in the barycenters which is a nice result compared to other divergences that tend to smooth-out their barycenters ($\ell_2$ for instance). Finally, note that by selecting the number of nodes in the barycenter one can compress the graph or estimate a ``high resolution'' representation from all the samples. To the best of our knowledge, no other method can compute such graph barycenters. Finally, note that $FGW$ is interpretable because the resulting OT matrix provides correspondence between the nodes from the samples and those from the barycenter.

\begin{figure}[t]
    \begin{center}
    \includegraphics[width=.9\linewidth]{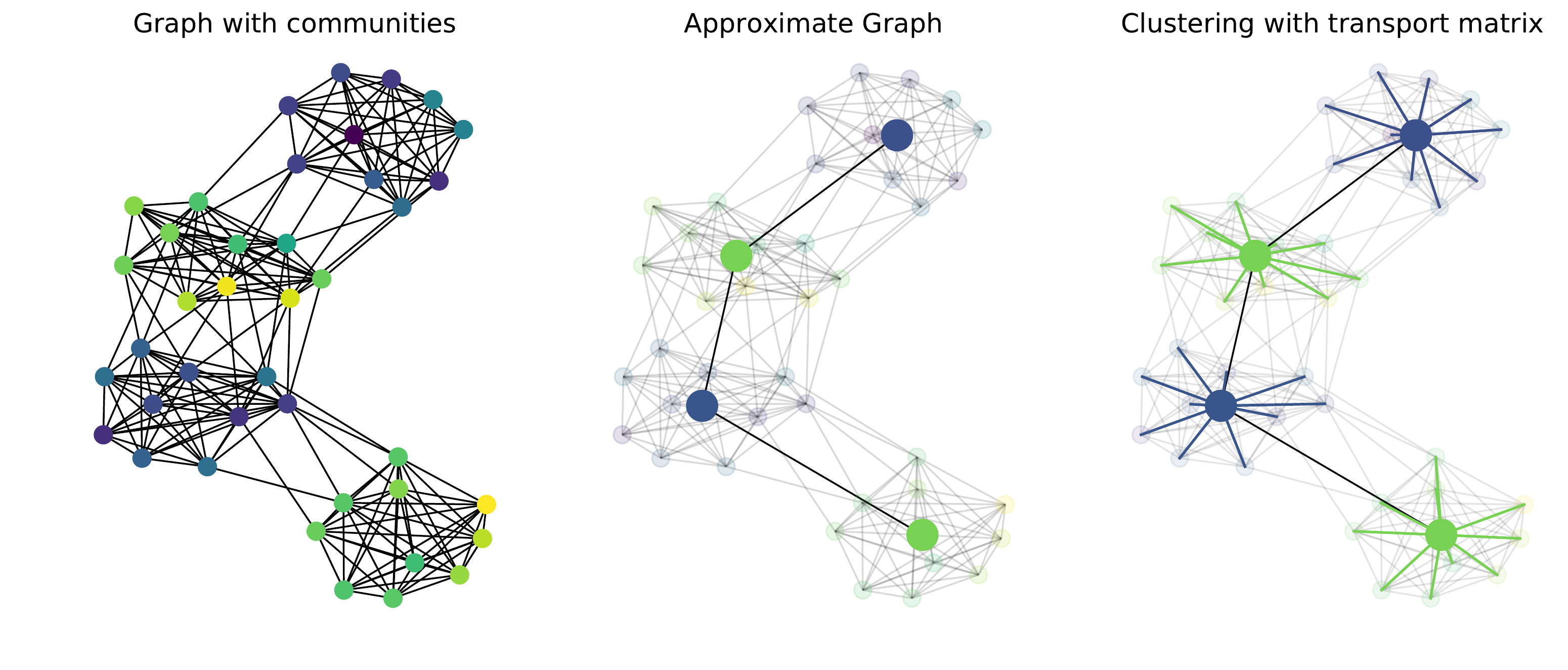}
    \includegraphics[width=.9\linewidth]{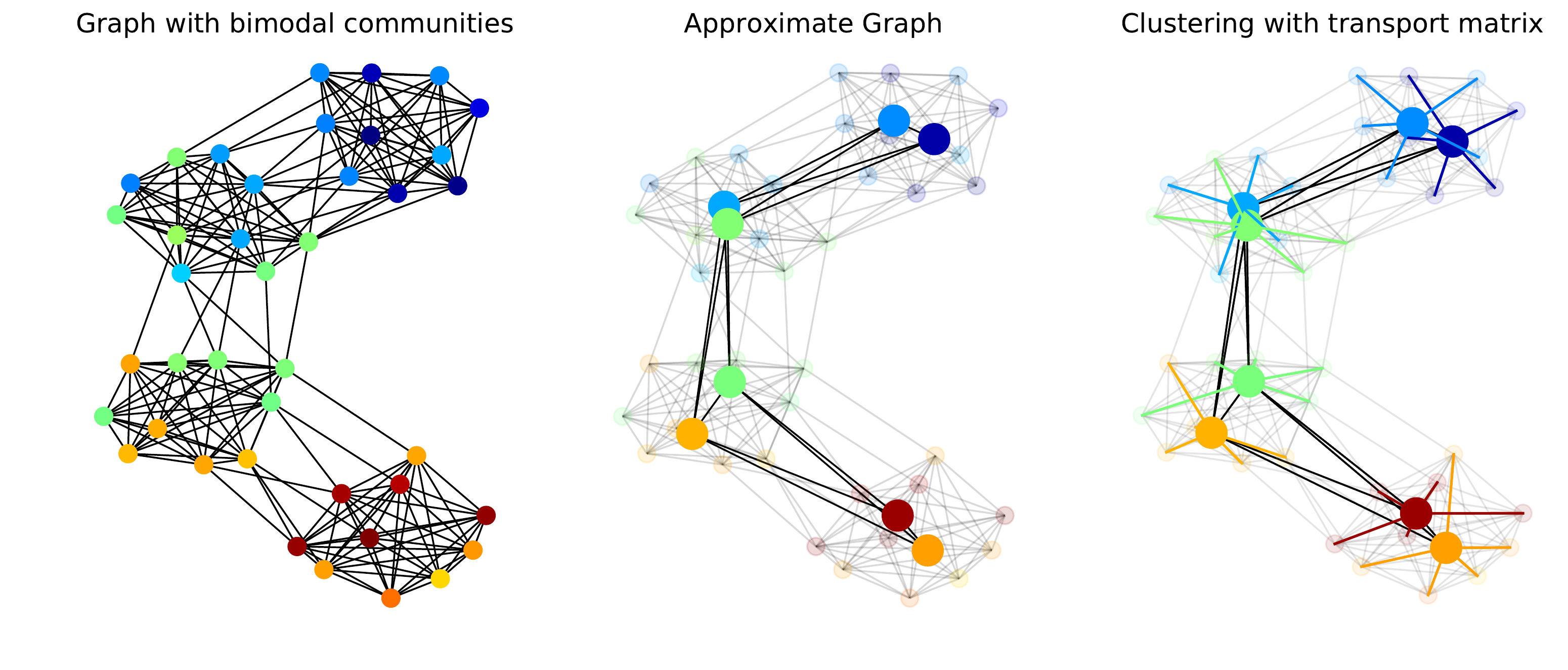}
    \caption{\label{fig:graphclustr} Example of community clustering on graphs using $FGW$. {\bf (top)} Community clustering with 4 communities and uniform features per cluster. {\bf (bottom)}  Community clustering with 4 communities and bimodal features per cluster (and two nodes per cluster in the approximate graph).}
    \end{center}
\end{figure}

\paragraph{Graph compression} In the second experiment, we evaluate the ability of FGW to perform graph approximation and compression on a Stochastic Block Model graph \cite{wang1987stochastic,nowicki2001estimation}. The question is to see if estimating an approximated graph can recover the relation between the blocks and perform simultaneously a community clustering on the original graph (using the coupling matrix $\GG$). We generate two community graphs illustrated in the left column of Figure \ref{fig:graphclustr}. The coarsened graph is obtained by solving the Fréchet mean formulation with $k=1$. More precisely given an original graph $\mu_{0}$ described by its features $\Bbf_{0}$ and its structure $\C_{0}$ we look for a graph of $N$ nodes, with $N$ smaller than the number of nodes of the original graph, which solves:
\begin{equation}
\underset{\mu}{\text{min}} \ FGW_{q,\alpha}(\mu,\mu_{0}) =\underset{\C \in \mathbb{R}^{N \times N},\ \Abf \in \mathbb{R}^{N \times n},\GG}{\text{min}} E_{q}(\Mbf_{\Abf \Bbf_{0}},\C,\C_{0},\GG)
\end{equation}
The results are depicted in Figure \ref{fig:graphclustr}. We can see that the relation between the blocks is sparse and has a ``linear'' structure, the example in the first line has features that follow the blocks (noisy but similar in each block) whereas the example in the second line has two modes per block. The first graph approximation (top line) is done with $N=4$ nodes and we can recover both the blocks in the graph and the average feature on each blocks (colors on the nodes). The second problem is more complex due to the two modes per  block but one can see that when approximating the graph with $N=8$ nodes we recover both the structure between the blocks and the sub-clusters in each block, which illustrates the strength of $FGW$ that encodes both features and structures.

\subsection{Unsupervised learning: graphs clustering}

{In the last experiment, we evaluate the ability of $FGW$ to perform a
clustering of multiple graphs and to retrieve meaningful barycenters of such
clusters. To do so, we generate a dataset of 4 groups of community graphs. Each
graph follows a simple Stochastic Block Model
\cite{wang1987stochastic,nowicki2001estimation} and the groups are defined 
\textit{w.r.t.} the number of communities inside each graph and the distribution
of their labels. The dataset is composed of 40 graphs (10 graphs per group) and
the number of nodes of each graph is drawn randomly from $\{20,30,...,50\}$
as illustrated in Figure \ref{fig:graphclustr2}. We perform a $k$-means clustering using the ${FGW}$ barycenter defined
in equation \eqref{eq:frechet_mean_fgw} as the centroid of the groups and the ${FGW}$
distance for the cluster assignment. We fix the number of nodes of each centroid to 30.
We perform a thresholding on the pairwise similarity matrix $C$ of the centroid at
the end in order to obtain an adjacency matrix for visualization purposes. The threshold value is empirically chosen with the procedure described in the previous section.   
The evolution of the barycenters along the iterations is reported in  Figure~\ref{fig:graphclustr2}. We can see that these centroids recover
community structures and feature distributions that are representative of their
cluster content. On this example, note that the clustering recovers perfectly
the known groups in the dataset.
To the best of our knowledge, there exists no other method able to perform a clustering of
graphs and to retrieve the average graph in each cluster without having to solve a
pre-image problem.

\begin{figure}[t!]
    \begin{center}
        \includegraphics[width=0.46\linewidth]{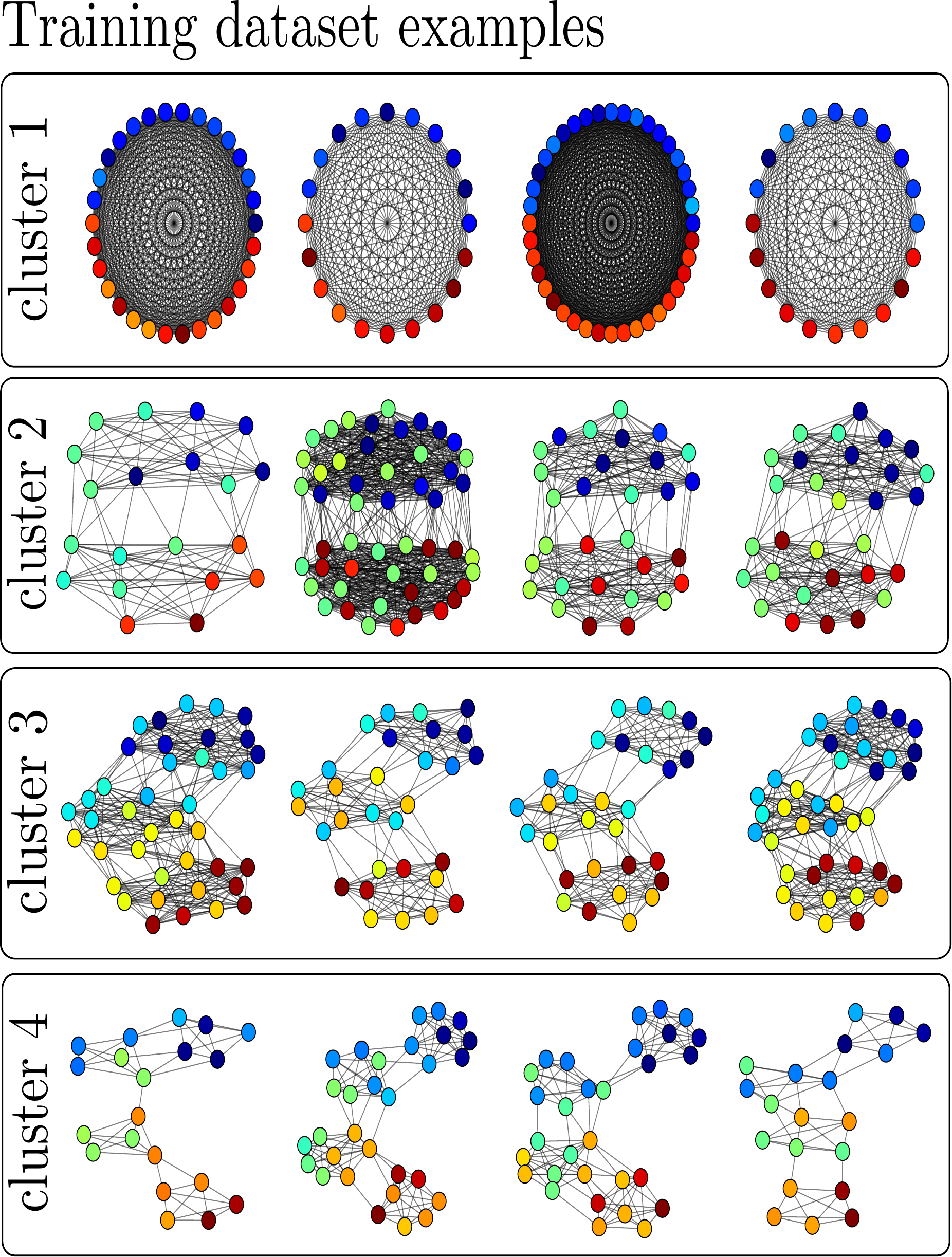}
    	\includegraphics[width=0.53\linewidth]{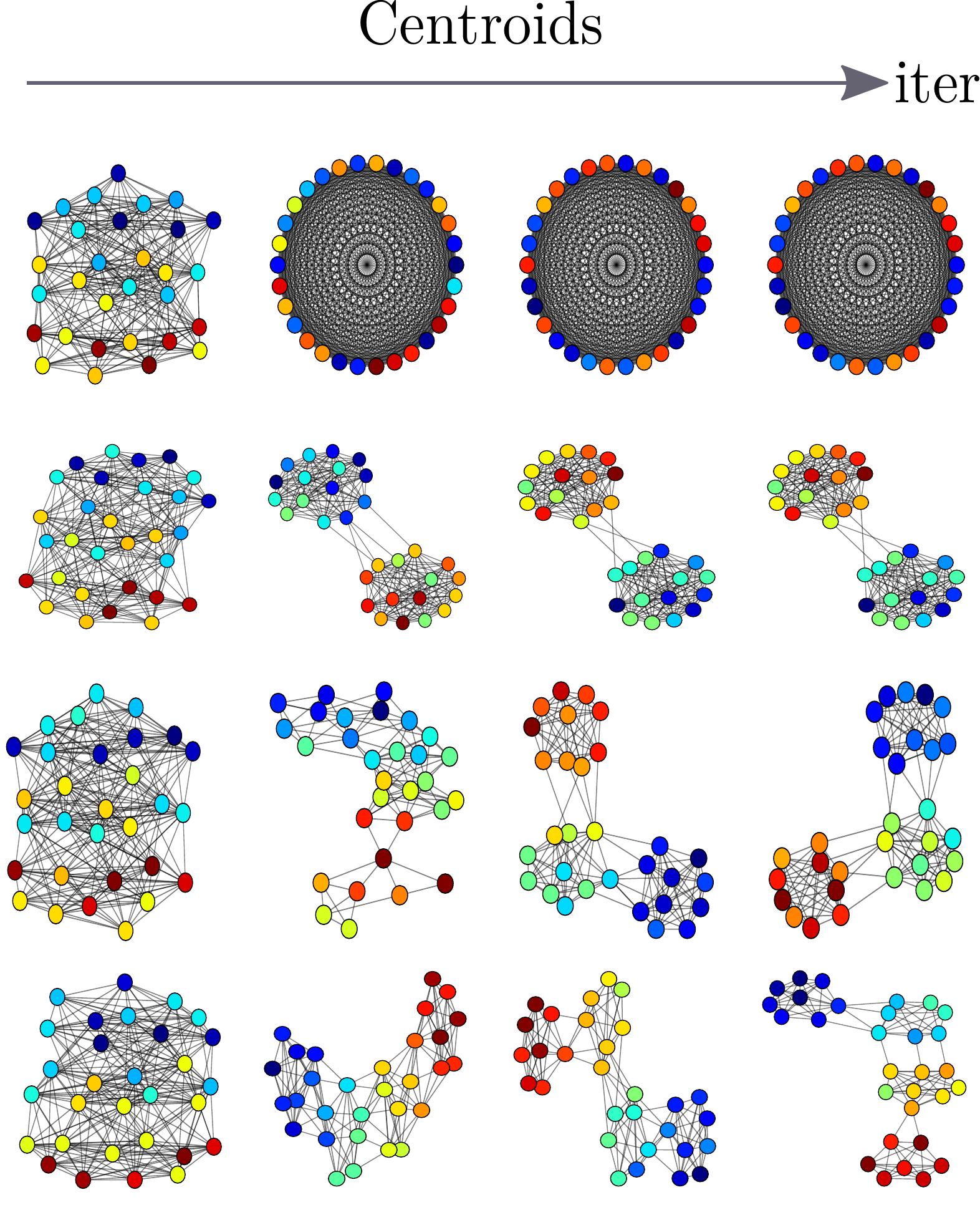}

    \end{center}
\caption{\label{fig:graphclustr2} {\bf (left)} Examples from the clustering dataset, color indicates the labels. {\bf (right)} Evolution of the centroids of each cluster in the $k$-means clustering, from the random initialization until convergence to the final centroid.  }
\end{figure}

\subsection{Other applications of Fused Gromov-Wasserstein}

Since its introduction the Fused Gromov-Wasserstein distance was also used in various contexts where structured data are involved. It has found applications in molecular biology for the analysis of Single-cell RNA (scRNA-seq) sequencing data in \cite{cang_inferring_2020} where authors use $FGW$ as a building block to recover spatial properties of scRNA-sequation In machine learning $FGW$ was used for learning structured autoencoders \cite{xu2020learning}, to study the continuity of Graph Neural Network \cite{bthune2020hierarchical} or on domain adaptation tasks on functional Near-Infrared
Spectroscopy (fNIRS) data \cite{lyu2020domain}. The Fused Gromov-Wasserstein approach was further analysed in \cite{barbe:hal-02795056} where authors propose to improve the $FGW$ distance using a smoothing strategy on the features of the graphs. They propose to incorporate a diffusion kernel on the features which result in a more robust similarity measures of labeled graphs. Authors tackle domain adaptation tasks between labeled graphs where the label information is only available in a target domain. $FGW$ was also used on point clouds in \cite{puy2020flot} for the estimation of scene flows \ie\ of the 3D motion of points at the surface of objects in a scene.

\section{FGW in the continuous setting \label{sec:continuous_fgw}}
\label{sec:continuousfgw}
The previous graph representation for objects with a finite number of points/vertices extends naturally to the continuous setting. The purpose of this section is to generalize $FGW$ to general probability distributions and to state some of its mathematical properties. We consider the following definition of structured objects:

\begin{definition}[\tv{Structured objects}]
\label{def:struct_objects}
A structured object over a metric space $(\Omega,d)$ is a triplet $(\Xcal \times \Omega, d_{\Xcal}, \mu)$, where: $(\Xcal,d_{\Xcal})$ is a metric space and $\mu$ is a \tv{probability measure} over $\Xcal \times \Omega$. $(\Omega,d)$ is denoted as the \emph{feature space}, such that $d:\Omega \times \Omega \rightarrow \mathbb{R}_{+}$ is the distance in the feature space and $(\Xcal,d_{\Xcal})$ the \emph{structure space}, such that $d_{\Xcal}: \Xcal \times \Xcal \rightarrow \mathbb{R}_{+}$ is the distance in the structure space. We will note $\mu_{X}$ and $\mu_{A}$ the structure and feature marginals of $\mu$.
\end{definition}

\begin{definition}[\tv{Space of structured objects}]
We note $\mathbb{X}$ the set of all metric spaces. The space of all structured objects over $(\Omega,d)$ will be written as $\mathbb{S}(\Omega)$ and is defined by all the triplets $(\Xcal \times \Omega, d_{\Xcal}, \mu)$ where $(\Xcal,d_\Xcal) \in \mathbb{X}$ and $\mu \in \Pcal(\Xcal \times \Omega)$.
To avoid finiteness issues we define for $p\in \mathbb{N}^{*}$ the space $\mathbb{S}_{p}(\Omega)\subset \mathbb{S}(\Omega)$, $(\Xcal \times \Omega, d_\Xcal, \mu) \in \mathbb{S}_{p}(\Omega)$ if:
\begin{equation}
\int_{\Omega} d(a,a_0)^{p} \dr \mu_A(a)<+\infty
\end{equation}
(the finiteness of this integral does not depend on the choice of $a_0$)
\begin{equation}
\int_{\Xcal\times \Xcal} d_{\Xcal}(x,x')^{p} \dr \mu_X(x)d \mu_X(x')<+\infty.
\end{equation}
\end{definition}

For the sake of simplicity, and when it is clear from the context, we will sometimes denote only by $\mu$ the whole structured object. In the same way as the discrete case we will note $\mu_{X}$ and $\mu_{A}$ the structure and feature marginals of $\mu$. We recall that those marginals encode very partial information since they focus only on independent feature distributions or only on the structure. \tv{This definition encompasses the discrete setting discussed in above. More precisely let us consider a labeled graph of $n$ nodes with features $A=(a_{i})_{i=1}^{n}$ with $a_i \in \Omega$ and $\Xcal=(x_{i})_{i=1}^{n}$ the structure representation of the nodes. Let $(h_i)_{i=1}^{n}$ be an histogram, then the probability measure $\mu= \sum_{i=1}^{n} h_{i} \delta_{(x_{i},a_{i})}$ defines structured object in the sense of Definition \ref{def:struct_objects} since it lies in $\Pcal(\tv{\Xcal \times \Omega})$.} In this case, an example of $\mu$, $\mu_{X}$ and $\mu_{A}$ is provided in Figure \ref{graphex}. 

\tv{Note that the set of structured objects is quite general and allows also considering discrete probability measures of the form $\mu= \sum_{i,j=1}^{p,q} h_{i,j} \delta_{(x_{i},a_{j})}$ with $p,q$ possibly different than $n$. We propose to focus on a particular type of structured objects, namely the \emph{generalized labeled graphs} as described in the following definition: }

\begin{definition}[\tv{Generalized labeled graph}]
We call \emph{generalized labeled graph} a structured object $(\Xcal \times \Omega, d_{\Xcal}, \mu) \in \mathbb{S}_{p}(\Omega)$ such that $\mu$ can be expressed as $\mu=(id\times \ell_f)\# \mu_X$ where $\ell_f: \Xcal \rightarrow \Omega$ is surjective and pushes $\mu_X$ forward to $\mu_A$, \textit{i.e.} $\ell_f\# \mu_X=\mu_A$. 
\end{definition}
\tv{This definition implies that there exists a function $\ell_f$ which associates a feature $a=\ell_{f}(x)$ to a structure point $x \in \Xcal$ and, since $\ell_f$ is surjective, one structure point can not have two different features. The labeled graph described by $\mu= \sum_{i=1}^{n} h_{i} \delta_{(x_{i},a_{i})}$ is a particular instance of a generalized labeled graph in which $\ell_f$ is defined by $\ell_f(x_i)=a_i$.}

\subsection{Comparing structured objects}

We now aim to define a notion of equivalence between two structured objects $(\tv{\Xcal \times \Omega}, d_{\Xcal}, \mu)$ and $(\tv{\Ycal \times \Omega}, d_{\Ycal},\nu)$. We note in the following $\nu_Y,\nu_B$ the marginals of $\nu$. Intuitively, two structured objects are the same if they share the same feature information, if their structure information are lookalike and if the probability measures are corresponding in some sense. In this section, we present mathematical tools for individual comparison of the elements of structured objects. For completeness we recall here some useful mathematical tools defined in Chapter \ref{cha:ot_general} for comparing the elements of structured objects and we refer the reader to this chapter for more details.  

\begin{definition}[Isometry\label{isometrydef2}]
Let $(\Xcal,d_{\Xcal})$ and $(\Ycal,d_{\Ycal})$ be two metric spaces. An isometry is a sujective map $\phi : \Xcal \rightarrow \Ycal$ that preserves the distances:
\begin{equation}
\label{isometryproperty}
\forall (x,x') \in \Xcal^{2}, d_{\Ycal}(\phi(x),\phi(x'))=d_{\Xcal}(x,x')
\end{equation}

\end{definition}

We refer the reader to section \ref{sec:prop_of_gw} for wider explanations about isometry. The previous map $\phi$ can be used in order to compare the structure information of two structured objects. When the metric spaces are enriched with a probability measure they define a \emph{measurable metric spaces} also called \emph{mm-spaces} (see section \ref{sec:prop_of_gw}). In this case the notion of \emph{strong isomorphism} can be used for comparing mm-spaces:

\begin{definition}[Strong isomorphism]
Two mm-spaces $(\Xcal,d_{\Xcal},\mu_{X}),(\Ycal,d_{\Ycal},\mu_{Y})$ are strongly isomorphic if there exists an isometry $\phi: \supp(\mu_X) \rightarrow \supp(\nu_Y)$ which pushes $\mu_{X}$ forward to $\mu_{Y}$, \ie\ $\phi \# \mu_{X}=\mu_{Y}$. In this case we say that $\phi$ is \emph{measure preserving}.
\end{definition}

All this considered, we can now define a notion of equivalence between structured objects:

\begin{definition}[(II)-Strong isomorphism of structured objects.\label{eqstructobjects}]

\noindent Two structured objects are said to be \emph{(II)-strongly isomorphic} if there exists an isomorphism $I: \supp(\mu_X) \rightarrow \supp(\nu_Y)$ between the structures such that $\phi=(I,id)$ is bijective between $\supp(\mu)$ and $\supp(\nu)$ and measure preserving. More precisely $\phi$ satisfies the following properties: 

\begin{enumerate}[label=\textbf{P.\arabic*}]
\item \label{I} $\phi\#\mu=\nu$.
\item \label{II} The function $\phi$ statisfies: 
\begin{equation*}
\forall (x,a) \in \tv{\supp(\mu)} , \phi(x,a)=(I(x),a).
\end{equation*}
\item \label{III} The function $I: \supp(\mu_X)\rightarrow \supp(\nu_Y)$ is surjective, satisfies $I\#\mu_X=\nu_Y$ and: 
\begin{equation*}
\forall x,x' \in \tv{\supp(\mu_X)}^{2} , d_{\Xcal}(x,x')=d_{\Ycal}(I(x),I(x')).
\end{equation*}

\end{enumerate}

\end{definition}
\begin{Remark} It is easy to check that the (II)-strong isomorphism defines an equivalence relation over $\mathbb{S}_p(\Omega)$. \tv{Moreover the function $\phi$ described in this definition can be seen as a feature, structure and measure preserving function. Indeed from \ref{I} $\phi$ is measure preserving. Moreover $(\Xcal,d_{\Xcal},\mu_X)$ and $(\Ycal,d_{\Ycal},\nu_Y)$ are isomorphic through $I$. Finally using \ref{I} and \ref{II} we have that $\mu_A=\nu_B$ so that the feature information is also preserved. }
\end{Remark}

To illustrate this definition, we consider a simple example of two structured objects in the discrete case:
\begin{Example}
Let two structured objects defined by:
$$\underbrace{\begin{pmatrix}(x_{1},a_{1})\\(x_{2},a_{2})\\(x_{3},a_{3})\\(x_{4},a_{4})\end{pmatrix}}_{x_i,a_i},{\underbrace{\begin{pmatrix}0&1&1&1 \\
    1&0&1&2\\
    1&1&0&2\\
    1&2&2&0
    \end{pmatrix}}_{d_\Xcal(x_i,x_j)}} ,\underbrace{\begin{pmatrix}\nicefrac{1}{4}\\\nicefrac{1}{4}\\\nicefrac{1}{4}\\\nicefrac{1}{4}\end{pmatrix}}_{h_i}\quad \text{and}\quad
    \underbrace{\begin{pmatrix}(y_{1},b_{1})\\(y_{2},b_{2})\\(y_{3},b_{3})\\(y_{4},b_{4})\end{pmatrix}}_{y_{i},b_i},{\small\underbrace{\begin{pmatrix}0&1&1&1 \\
        1&0&2&2\\
        1&2&0&1\\
        1&2&1&0
    \end{pmatrix}}_{d_\Ycal(y_i,y_j)}},\underbrace{\begin{pmatrix}\nicefrac{1}{4}\\\nicefrac{1}{4}\\\nicefrac{1}{4}\\\nicefrac{1}{4}\end{pmatrix}}_{h'_i}
        $$
with for $i$, $a_{i}=b_{i}$ and for $i \neq j$, $a_{i} \neq a_{j}$ (see Figure \ref{equivalent_objects}). The two structured objects have isometric structures and same features individually but they are not (II)-strongly isomorphic. One possible map $\phi=(\phi_{1},\phi_{2}) : \Xcal \times \Omega \rightarrow \Ycal \times \Omega$ such that $\phi_{1}$ leads to an isometry is $\phi(x_{1},a_{1})=(y_{1},b_{1})$, $\phi(x_{2},a_{2})=(y_{3},b_{3})$, $\phi(x_{3},a_{3})=(y_{4},b_{4})$, $\phi(x_{4},a_{4})=(y_{2},b_{2})$. Yet this map does not satisfy $\phi_{2}(x,.)=id$ for any $x$ since $\phi(x_{2},a_{2})=(y_{3},b_{3})$ and $a_{2} \neq b_{3}$. The other possible functions such that $\phi_{1}$ leads to an isometry are simply permutations of this example, yet it is easy to check that none of them verifies \ref{II} (for example with $\phi(x_{2},a_{2})=(y_{4},b_{4})$).

 \begin{figure}[t]
    \centering
\tikzstyle{vertex1}=[circle,fill=red,minimum size=6pt,inner sep=0pt]
\tikzstyle{vertex2}=[circle,fill=green,minimum size=6pt,inner sep=0pt]
\tikzstyle{vertex3}=[circle,fill=blue,minimum size=6pt,inner sep=0pt]
\tikzstyle{vertex4}=[circle,fill=orange,minimum size=6pt,inner sep=0pt]

\tikzstyle{edge} = [draw]
\begin{tikzpicture}[scale=1, auto,swap]
    \foreach \pos/\name in {{(0,0)/a}}
        		\node[vertex1] (\name) at \pos {};
    \foreach \pos/\name in {{(2,0)/b}}
        		\node[vertex2] (\name) at \pos {};
    \foreach \pos/\name in {{(1,1)/c}}
        		\node[vertex3] (\name) at \pos {};
    \foreach \pos/\name in {{(2.5,1.5)/d}}
        		\node[vertex4] (\name) at \pos {};
\def\x{5}

    \foreach \pos/\name in {{(0+\x,0)/e}}
        		\node[vertex1] (\name) at \pos {};
    \foreach \pos/\name in {{(2+\x,0)/f}}
        		\node[vertex2] (\name) at \pos {};
    \foreach \pos/\name in {{(1+\x,1)/g}}
        		\node[vertex3] (\name) at \pos {};
    \foreach \pos/\name in {{(2.5+\x,1.5)/h}}
        		\node[vertex4] (\name) at \pos {};

    \foreach \source/ \dest in {b/a, c/a, c/b, c/d}
        \path[edge] (\source) -- (\dest);

    \foreach \source/ \dest in {e/g, g/f, g/h, h/f}
        \path[edge] (\source) -- (\dest);

\def\y{0.1}
\foreach \pos/ \name  in {{(0,00-\y)/(x_2, \color{red}{a_2}}, {(2,00-\y)/(x_3, \color{green}{a_3}}, {(2.5,1.50-\y)/(x_4, \color{orange}{a_4}}}
  \draw \pos node[below, scale = 1]{$\name$)};
\draw (1,1) node[left, scale = 1]{$(x_1, \color{blue}{a_1}$)};

\foreach \pos/ \name in {{(0+\x,0-\y)/(y_2, \color{red}{b_2}}, {(2+\x,00-\y)/(y_3, \color{green}{b_3}}}
  \draw \pos node[below, scale = 1]{$\name$)};
\draw (1+\x,1) node[left, scale = 1]{$(y_1, \color{blue}{b_1}$)};
\draw (2.5+\x,1.5) node[right, scale = 1]{$(y_4, \color{orange}{b_4}$)};

 \end{tikzpicture}
    \caption{Two structured objects with isometric structures and identical features that are not (II)-strongly isomorphic. The color of the nodes represent the node feature and each edge represents a distance of 1 between the connected nodes. \label{equivalent_objects}}
    \end{figure}
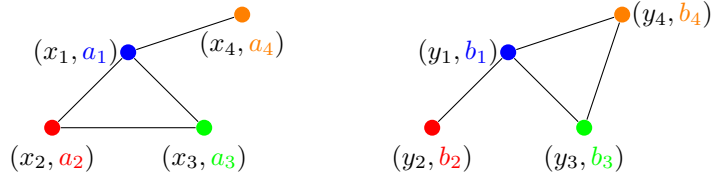
\end{Example}

We generalize the definition \eqref{discretefgw} of Fused Gromov-Wasserstein ($FGW$) to the continuous setting as follows:

\begin{definition}
\label{fgwdef}
 The Fused-Gromov-Wasserstein distance is defined for $\alpha \in [0,1]$ and $p,q \geq 1$ as:
\begin{equation}
\fgwdistance_{\alpha,p,q}(\mu,\nu)=\left( \ \underset{\pi \in \couplingset(\mu,\nu)}{\inf}  \lossfgw(\pi)\ \right)^{\frac{1}{p}}
\label{otfgx}
\end{equation}
where:
\begin{equation*}
\lossfgw(\pi)=\int \int \big((1-\alpha) d(a,b)^{q} +\alpha |d_{\Xcal}(x,x')-d_{\Ycal}(y,y')|^{q} \big)^{p} \dr\pi((x,a),(y,b))\dr\pi((x',a'),(y',b'))
\end{equation*}
We will write in the following $L(x,y,x',y')=|d_{\Xcal}(x,x')-d_{\Ycal}(y,y')|$.
\end{definition}

\begin{figure}[t]
\label{space}
\centering
\includegraphics[width=0.4\textwidth]{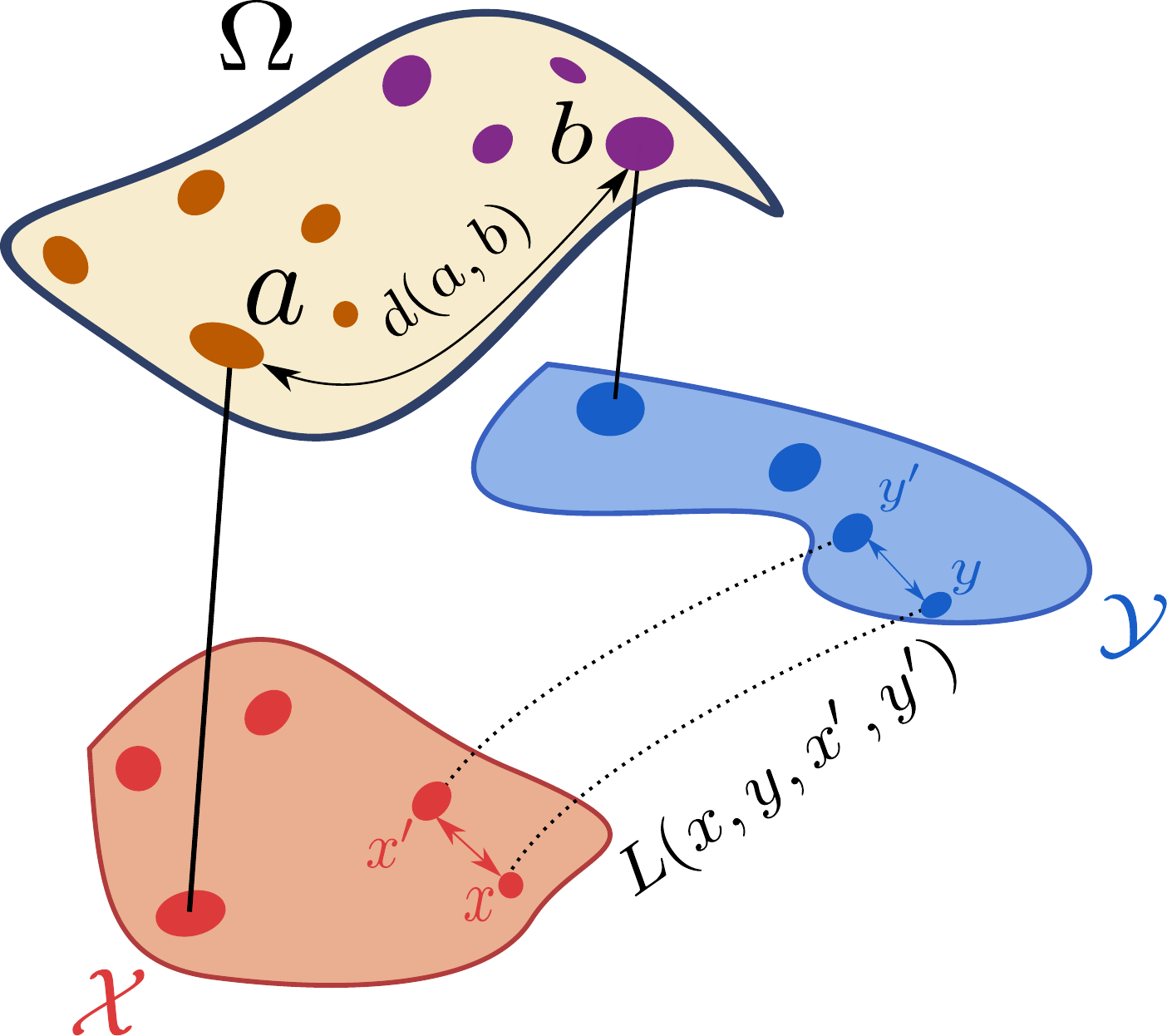}
\caption{Illustration of definition \ref{fgwdef}. The figure shows two structured objects $(\Xcal \times \Omega,d_{\Xcal},\mu)$ and $(\Ycal \times \Omega,d_{\Ycal},\mu)$. The feature space $\Omega$ is the common space for all features. The two metric spaces $(\Xcal,d_{\Xcal})$ and $(\Ycal,d_{\Ycal})$ represent the structures of the two structured objects, the similarity between all pair-to-pair distances of the structure points is measured by $L(x,y,x',y')$. $\mu$ and $\nu$ are the joint measures on the structure space and the feature space. \label{fgwex}}
\end{figure}

Note that this definition is coherent with the definition given in equation \eqref{discretefgw} when $p=1$. For brevity we will simply note $\fgwdistance$ instead for $\fgwdistance_{\alpha,p,q}$ when it is clear from the context. 
Many desirable properties arise from this definition. Among them, one can define a topology over the space of structured objects using the $FGW$ distance to compare structured objects, in the same philosophy as for Wasserstein and Gromov-Wasserstein distances. The definition also implies that $FGW$ acts as a generalization of both Wasserstein and Gromov-Wasserstein distances, with $FGW$ achieving an interpolation between these two distances. More remarkably, $FGW$ distance also realizes geodesic properties over the space of structured objects, allowing the definition of gradient flows. Before reviewing all these properties, we first compare $FGW$ with $GW$ and $W$ (by assuming for now that $FGW$ exists, which will be shown later in Theorem \ref{metrictheo}).

\begin{prop}[Comparison between $FGW$, $GW$ and $W$.]
\label{fgwcomparetogwandw}
With previous notations:
\begin{itemize}
\item The following inequalities hold:
\begin{equation}
\label{wassinequality_main}
\fgwdistance_{\alpha,p,q}(\mu,\nu) \geq (1-\alpha)\wass_{pq}(\mu_{A},\nu_{B})^{q}
\end{equation}
\begin{equation}
\label{gromovinequality_main}
\fgwdistance_{\alpha,p,q}(\mu,\nu) \geq \alpha \gw_{pq}(\mu_{X},\nu_{Y})^{q}
\end{equation}
\item Let us suppose that the structure spaces $(\Xcal,d_{\Xcal})$,$(\Ycal,d_{\Ycal})$ are part of a single ground space $(\mathcal{Z},d_{\mathcal{Z}})$ (\textit{i.e.} $\Xcal,\Ycal \subset \mathcal{Z}$ and $d_{\Xcal}=d_{\Ycal}=d_{\mathcal{Z}}$). We consider the Wasserstein distance between $\mu$ and $\nu$ for the distance on $\mathcal{Z} \times \Omega$ : $\tilde{d}((x,a),(y,b))=(1-\alpha)d(a,b)+\alpha d_{\mathcal{Z}}(x,y)$. Then:
\begin{equation}
\label{samespace_main}
\fgwdistance_{\alpha,p,1}(\mu,\nu)(\mu,\nu) \leq 2\wass_{p}(\mu,\nu).
\end{equation}
\end{itemize}
\end{prop}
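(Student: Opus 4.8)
The plan is to treat the two lower bounds \eqref{wassinequality_main}--\eqref{gromovinequality_main} and the upper bound \eqref{samespace_main} by separate mechanisms: the lower bounds follow from discarding one nonnegative term in the integrand of $\lossfgw$, whereas the upper bound follows from testing $\fgwdistance$ against an optimal Wasserstein coupling. For the lower bounds the common trick is that once we drop a term, the remaining integrand depends on only one pair of variables, so the second copy of $\pi$ can be integrated out (it integrates to one since $\pi$ is a probability measure), and the resulting single integral is then pushed onto the feature marginal (resp. structure marginal) of $\pi$, which is itself a coupling; the definitions of $\wass$ and $\gw$ as infima over $\couplingset$ close the argument.

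Concretely, for \eqref{wassinequality_main} I would first use $\big((1-\alpha)d(a,b)^{q}+\alpha L(x,y,x',y')^{q}\big)^{p}\geq (1-\alpha)^{p}d(a,b)^{pq}$, which holds because the structure term is nonnegative and $t\mapsto t^{p}$ is increasing. Since this lower bound no longer involves the primed variables, marginalizing the second $\dr\pi$ factor leaves $(1-\alpha)^{p}\int d(a,b)^{pq}\,\dr\pi((x,a),(y,b))$. The feature marginal of $\pi$ lies in $\couplingset(\mu_{A},\nu_{B})$, so this integral is at least $\wass_{pq}(\mu_{A},\nu_{B})^{pq}$; taking the infimum over $\pi$ and then the $p$-th root gives \eqref{wassinequality_main}. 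The proof of \eqref{gromovinequality_main} is the mirror image: retain only $\alpha^{p}L(x,y,x',y')^{pq}$, note that it depends solely on the structure coordinates, and use that the structure marginal of $\pi$ belongs to $\couplingset(\mu_{X},\nu_{Y})$ to lower-bound by $\alpha^{p}\gw_{pq}(\mu_{X},\nu_{Y})^{pq}$.

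For the upper bound \eqref{samespace_main} I would take a coupling $\pi^{*}$ achieving $\wass_{p}(\mu,\nu)$ for the ground distance $\tilde{d}$; since $\pi^{*}\in\couplingset(\mu,\nu)$ it is admissible for $\fgwdistance$. With $q=1$ and $d_{\Xcal}=d_{\Ycal}=d_{\mathcal{Z}}$, the reverse triangle inequality yields $|d_{\mathcal{Z}}(x,x')-d_{\mathcal{Z}}(y,y')|\leq d_{\mathcal{Z}}(x,y)+d_{\mathcal{Z}}(x',y')$, so the inner cost is bounded by $\tilde{d}((x,a),(y,b))+\tilde{d}((x',a'),(y',b'))$ after adding the nonnegative slack $(1-\alpha)d(a',b')$. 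Writing $s$ and $t$ for these two $\tilde{d}$ values, the integrand is at most $(s+t)^{p}\leq 2^{p-1}(s^{p}+t^{p})$ by the corollary of Hölder's inequality recalled in Memo \ref{memo:holder}. Marginalizing each of the two terms against the redundant copy of $\pi^{*}$ produces $\int \tilde{d}^{\,p}\,\dr\pi^{*}=\wass_{p}(\mu,\nu)^{p}$, whence $\lossfgw(\pi^{*})\leq 2^{p}\wass_{p}(\mu,\nu)^{p}$, and the $p$-th root gives \eqref{samespace_main}.

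I expect the upper bound to be the delicate step. The lower bounds are bookkeeping once the marginalization is noticed, but \eqref{samespace_main} genuinely needs the common-ground-space hypothesis together with the restriction $q=1$, precisely so that the reverse triangle inequality linearizes the structure cost into a sum of $\tilde{d}$ terms; the constant $2$ is the combined price of the power-mean step $(s+t)^{p}\leq 2^{p-1}(s^{p}+t^{p})$ and the symmetric double integral, each contributing a factor whose product is $2^{p}$.
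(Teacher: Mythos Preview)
Your proposal is correct and follows essentially the same route as the paper: drop one nonnegative term and push to the appropriate marginal for the lower bounds, and test against an admissible coupling together with the triangle inequality on $d_{\mathcal{Z}}$ for the upper bound. The only cosmetic difference is that, after bounding the integrand by $(s+t)^{p}$ with $s=\tilde d((x,a),(y,b))$ and $t=\tilde d((x',a'),(y',b'))$, the paper applies Minkowski's inequality in $L^{p}(\pi\otimes\pi)$ to separate $s$ and $t$, whereas you use the pointwise bound $(s+t)^{p}\le 2^{p-1}(s^{p}+t^{p})$ from Memo~\ref{memo:holder}; both yield the same factor $2$ after symmetrization.
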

Proof of this proposition can be found in Section~\ref{proof:prop1}. In particular, following this proposition, when the $FGW$ distance vanishes then both $GW$ and $W$ distances vanish so that the structure and the feature of the structure object are individually ``the same'' (with respect to their corresponding equivalence relation). However the converse is not necessarily true as shown further in Section \ref{sec:adapt_w_gw}.

In the following we establish some mathematical properties of the $FGW$ distance. The first result relates to the existence of the $FGW$ distance and the topology of the space of structured objects. We prove that the $FGW$ distance is indeed a distance regarding the equivalence relation between structured objects as defined in Defintion \ref{eqstructobjects}, allowing us to derive a topology on $\mathbb{S}(\Omega)$.

\subsection{Topology of the structured object space}
\label{Topology}
The $FGW$ distance has the following properties:

\begin{theo}[\tv{Metric properties}]
\label{metrictheo}

\noindent Let $p,q\geq 1$, $\alpha \in ]0,1[$ and $(\mu,\nu) \in \mathbb{S}_{pq}(\Omega) \times \mathbb{S}_{pq}(\Omega)$. The functional $\pi \rightarrow \lossfgw(\pi)$ always achieves an infimum $\pi^{*}$ in $\Pi(\mu,\nu)$ \emph{s.t.} $\fgwdistance_{\alpha,p,q}(\mu,\nu)=\lossfgw(\pi^{*})<+\infty$. Moreover:
\begin{itemize}
\item[$\bullet$] $\fgwdistance_{\alpha,p,q}$ is symmetric and, for $q=1$, satisfies the triangle inequality. For $q\geq 2$, the triangular inequality is relaxed by a factor $2^{q-1}$.
\item[$\bullet$] For $\alpha \in ]0,1[$, $\fgwdistance_{\alpha,p,q}(\mu,\nu)=0$ if an only if there exists a bijective function $ \phi=(\phi_1,\phi_2): \tv{\supp(\mu)} \rightarrow \tv{\supp(\nu)}$ such that:
\begin{equation}
\label{prese_main}
\phi\#\mu=\nu
\end{equation}
\begin{equation}
\label{id_main}
\forall (x,a) \in \tv{\text{supp}(\mu)} \ , \phi_{2}(x,a)=a
\end{equation}
\begin{equation}
\label{endgame_main}
\forall (x,a),(x',a') \in \tv{\text{supp}(\mu)^{2}}, \ d_{\Xcal}(x,x')=d_{\Ycal}(\phi_{1}(x,a),\phi_{1}(x',a'))
\end{equation}
\item[$\bullet$] If $(\mu,\nu)$ are generalized labeled graphs then $\fgwdistance_{\alpha,p,q}(\mu,\nu)=0$ if and only if $(\Xcal \times \Omega,d_\Xcal,\mu)$ and $(\Ycal \times \Omega,d_\Ycal,\nu)$ are (II)-strongly isomorphic.
\end{itemize}
\end{theo}

Proof of this theorem can be found in Section~\ref{proof:theo1}. \tv{The identity of indiscernibles is the most delicate part to prove and is based on using the Gromov-Wasserstein distance between the spaces $X\times \Omega$ and $\Ycal \times \Omega$.} \tv{The previous theorem states that $FGW$ is a distance over the space of generalized labeled graphs endowed with the strong isomorphism as equivalence relation defined in Definition \ref{eqstructobjects}. More generally for any structured objects the equivalence relation is given by equations \eqref{prese_main}, \eqref{id_main} and \eqref{endgame_main}. Informally, invariants of the $FGW$ are structured objects that have both the same structure and the same features in the same place.} \tv{Despite the fact that $q=1$ leads to a proper metric the case $q=2$ can be computed more efficiently using a separability trick from \cite{peyre2016gromov} as seen in Section \ref{sec:solved_fgw}}. 
\begin{Remark}
Note that the previous theorem actually proves Theorem \ref{metrictheodiscrete}. Indeed if we consider $\mu=\sum_{i=1}^{n} h_i \delta_{(x_i,a_i)}$ and $\nu=\sum_{j=1}^{m} g_j \delta_{(y_j,b_j)}$ describing two labeled graphs as discussed in the previous section. Then $\mu$ and $\nu$ are generalized labeled graph. Using Theorem \ref{metrictheo} $\mu$ and $\nu$ are (II)-strongly isomorphic if and only if there exists a bijection between the supports satisfies \eqref{I}, \eqref{II} and \eqref{III}. Since the supports are discrete this is equivalent to the condition $n=m$ and there exists a permutation $\sigma \in \Sn$ which satisfies the conditions of Theorem \ref{metrictheodiscrete}. The triangle inequality property of Theorem \ref{metrictheodiscrete} derives directly from the triangle inequality of Theorem \ref{metrictheo}.
\end{Remark}
There are some special cases where $W$ and $GW$ can be adapted to structured objects and can be used also to compare them. These cases result in different notions of equivalence as described in the following discussion.

\subsection{Can we adapt W and GW distances for structured objects? \label{sec:adapt_w_gw}} 

Despite the appealing properties of both Wasserstein and Gromov-Wasserstein distances, they fail at comparing structured objects by focusing only on the feature and structure marginals respectively. However, with some hypotheses, one could adapt these distances for structured objects.

\paragraph{Adapting Wasserstein: common structure space} If the structure spaces $(\Xcal,d_\Xcal)$ and $(\Ycal,d_\Ycal)$ are part of a same ground space $(\mathcal{Z},d_{\mathcal{Z}})$ one can build a distance $\hat{d}=d_\mathcal{Z}\oplus d$ between couples $(x,a)$ and $(y,b)$ and apply the Wasserstein distance. In this case, when the Wasserstein distance vanishes then the structured objects are equal in the sense $\mu=\nu$ which implies that $\mu$ and $\nu$ are \textit{de facto} (II)-strongly isomorphic. This approach is very related with the one discussed in \cite{Thorpe2017} where authors define the Transportation $L^{p}$ distance for signal analysis purposes. Their approach can be viewed as a transport between two joint measures: 
\begin{equation}
\mu(\Xcal \times \Omega)=\lebsm(\{(\xbf,f(\xbf)) \ | \ \xbf \in X \subset Z=\mathbb{R}^{d}| \ f(\xbf) \in \Omega \subset \mathbb{R}^{m}\})
\end{equation}
\begin{equation}
\nu(\Ycal\times \Omega)=\lebsm(\{(\ybf,g(\ybf)) \ | \ \ybf \in Y \subset Z=\mathbb{R}^{d}| \ g(\ybf) \in \Omega \subset \mathbb{R}^{m}\})
\end{equation} 
for functions $f,g : Z \rightarrow \mathbb{R}^{m}$ representative of the signal values and $\lebsm$ the Lebesgue measure. The distance for the transport is defined as $\hat{d}((\xbf,f(\xbf)),(\ybf,g(\ybf)))= \frac{1}{\alpha} \|\xbf-\ybf\|_{p}^{p} + \|f(\xbf)-g(\ybf)\|_{p}^{p}$ for $\alpha > 0$ and $\|\cdot\|_{p}$ the $l_{p}$ norm. In this case $f(\xbf)$ and $g(\ybf)$ can be interpreted as encoding the feature information of the signal while $\xbf,\ybf$ encode its structure information. This approach is interesting but cannot be used on structured objects such as graphs that will not share a common structure embedding space.

\paragraph{Adapting Gromov-Wasserstein} The Gromov-Wasserstein distance can also be adapted to structured objects by considering the distances $(1-\beta) d_{\Xcal} \oplus \beta d$ and $(1-\beta) d_{\Ycal} \oplus \beta d$ within each space $\Xcal \times \Omega$ and $\Ycal \times \Omega$ respectively and $\beta \in ]0,1[$. When the resulting $GW$ distance vanishes, structured objects are strongly isomorphic with respect to $(1-\beta) d_{\Xcal} \oplus \beta d$ and $(1-\beta) d_{\Ycal} \oplus \beta d$. However the (II)-strong isomorphism is stronger than this notion since the strong isomorphism allows for ``permuting the labels'' but not the (II)-strong isomorphism. More precisely we have the following lemma:
\begin{lemma}
Let $(\Xcal \times \Omega,d_{\Xcal},\mu),(\Ycal \times \Omega,d_{\Ycal},\nu)$ be two structured objects and $\beta \in ]0,1[$. 

If $(\Xcal \times \Omega,d_{\Xcal},\mu)$ and $(\Ycal \times \Omega,d_{\Ycal},\nu)$ are (II)-strongly isomorphic then $(\Xcal \times \Omega,(1-\beta) d_{\Xcal} \oplus \beta d,\mu)$ and $(\Ycal \times \Omega,(1-\beta) d_{\Ycal} \oplus \beta d,\nu)$ are strongly isomorphic. However the converse is not true in general.
\end{lemma}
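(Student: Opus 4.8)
The plan is to prove the forward implication by checking that the very map witnessing (II)-strong isomorphism is already a strong isomorphism of the combined mm-spaces, and to settle the converse by exhibiting an explicit two-point counterexample.

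For the forward direction, I would start from a (II)-strong isomorphism, that is a map $\phi=(I,id)$ with $\phi\#\mu=\nu$, with $\phi(x,a)=(I(x),a)$ on $\supp(\mu)$, and with $I$ a surjective measure-preserving isometry of the structure spaces. I claim this same $\phi$ is a strong isomorphism of $(\Xcal \times \Omega,(1-\beta)d_{\Xcal}\oplus \beta d,\mu)$ and $(\Ycal \times \Omega,(1-\beta)d_{\Ycal}\oplus \beta d,\nu)$. Since the $\oplus$-metric topology coincides with the product topology, $\phi$ remains a bijection between $\supp(\mu)$ and $\supp(\nu)$ and still satisfies $\phi\#\mu=\nu$, so surjectivity and the pushforward requirement are inherited for free. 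The only point to verify is that $\phi$ preserves the combined distance, which is a one-line computation: for $(x,a),(x',a')\in\supp(\mu)$,
\begin{align*}
\big((1-\beta)d_{\Ycal}\oplus \beta d\big)\big(\phi(x,a),\phi(x',a')\big)
&=(1-\beta)\,d_{\Ycal}(I(x),I(x'))+\beta\,d(a,a')\\
&=(1-\beta)\,d_{\Xcal}(x,x')+\beta\,d(a,a'),
\end{align*}
where the last equality uses that $I$ is a structure isometry. The right-hand side is exactly $\big((1-\beta)d_{\Xcal}\oplus \beta d\big)((x,a),(x',a'))$, so $\phi$ is an isometry and the combined mm-spaces are strongly isomorphic.

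For the converse I would construct two structured objects that are strongly isomorphic for the combined metric yet fail to be (II)-strongly isomorphic. Take $\Omega=\R$ with $d=|\cdot|$ and put uniform weights $\tfrac12,\tfrac12$ on two structure points in each object. Set $\mu=\tfrac12\delta_{(x_1,0)}+\tfrac12\delta_{(x_2,1/\beta)}$ over a structure space with $d_{\Xcal}(x_1,x_2)=1$, and $\nu=\tfrac12\delta_{(y_1,0)}+\tfrac12\delta_{(y_2,1)}$ over a structure space with $d_{\Ycal}(y_1,y_2)=2$. The single nonzero combined distance of $\mu$ is $(1-\beta)\cdot 1+\beta\cdot\tfrac1\beta=2-\beta$, and that of $\nu$ is $(1-\beta)\cdot 2+\beta\cdot 1=2-\beta$, so they coincide for every $\beta\in\,]0,1[$. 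The bijection $(x_1,0)\mapsto(y_1,0)$, $(x_2,1/\beta)\mapsto(y_2,1)$ therefore preserves the combined distance and pushes $\mu$ forward to $\nu$, proving strong isomorphism of the combined spaces. A (II)-strong isomorphism, however, would demand a structure isometry $I$ with $d_{\Ycal}(I(x_1),I(x_2))=d_{\Xcal}(x_1,x_2)$, i.e. $2=1$, which is impossible.

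The argument is elementary; the only delicate step is the design of the counterexample. The hard part conceptually is arranging the combined metric to \emph{hide} the structural discrepancy $d_{\Xcal}\neq d_{\Ycal}$ by compensating it with a matching discrepancy in the feature gaps, so that the sum $(1-\beta)d_{\,\cdot}+\beta d$ agrees. This trade-off is precisely what a combined (strong) isomorphism is allowed to exploit, whereas the (II)-strong isomorphism, which must preserve structure and features \emph{separately}, cannot — this is the conceptual content the lemma is meant to expose.
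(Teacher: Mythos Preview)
Your forward implication is correct and matches the paper's argument essentially line for line: both verify that the same $\phi=(I,id)$ witnessing (II)-strong isomorphism is automatically an isometry for the combined metric, since the feature component is the identity and $I$ preserves structure distances.

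Your counterexample for the converse is valid but differs from the paper's. The paper invokes the four-node tree example introduced earlier (Figure with two isometric graph structures carrying the same feature set but in a permuted arrangement): there the structure spaces \emph{are} isometric and the feature multisets coincide, so a suitable permutation gives a combined-metric isometry, yet no map of the form $(I,id)$ works because the structure isometry forces a label mismatch. Your example instead exploits a different failure mode: you make the structure spaces genuinely non-isometric ($d_{\Xcal}=1$ versus $d_{\Ycal}=2$) and compensate exactly with a feature-gap discrepancy so that the weighted sums agree. Both mechanisms are legitimate, and your two-point construction is more elementary and fully self-contained. One small trade-off: your example is built for a fixed $\beta$ (the feature value $1/\beta$ depends on it), whereas the paper's permuted-label example furnishes a single pair of structured objects that is strongly isomorphic for the combined metric at \emph{every} $\beta\in\,]0,1[$ simultaneously. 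Since the lemma is stated for a given $\beta$, this is only an aesthetic difference, not a defect.

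A minor remark: the sentence about the $\oplus$-metric topology coinciding with the product topology is unnecessary—$\phi$ being a bijection between supports and satisfying $\phi\#\mu=\nu$ are set-theoretic and measure-theoretic facts that do not depend on which (equivalent) metric you place on the product.
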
 

\begin{proof}
To see this, if we consider $\phi$ as defined in Theorem \ref{metrictheo}, then for $(x,a),(x',b) \in \supp(\mu)^{2}$ we have $d_{\Xcal}(x,x')= d_{\Ycal}(I(x),I(x'))$. In this way: 
\begin{equation}
(1-\beta) d_{\Xcal}(x,x')+\beta d(a,b)= (1-\beta) d_{\Ycal}(I(x),I(x'))+\beta d(a,b)
\end{equation} 
which can be rewritten as: 
\begin{equation}
(1-\beta) d\oplus \beta d_{\Xcal}((x,a),(x',b))=(1-\beta) d\oplus \beta d_{\Ycal}(\phi(x,a),\phi(x',b))
\end{equation} 
and so $\phi$ is an isometry with respect to $(1-\beta) d\oplus \beta d_{\Xcal}$ and $(1-\beta) d\oplus \beta d_{\Ycal}$. Since $\phi$ is also measure preserving and surjective $(\Xcal \times \Omega,(1-\beta) d_{\Xcal} \oplus \beta d,\mu)$ and $(\Ycal \times \Omega,(1-\beta) d_{\Ycal} \oplus \beta d ,\nu)$ are strongly isomorphic. 

However the converse is not necessarily true as it is easy to cook up an example with the same structure but with permuted labels so that objects are strongly isomorphic but not (II)-strongly isomorphic. For example in the tree example depicted in Figure \ref{equivalent_objects}, the structures are isometric and the distances between the features within each space are the same between each structured objects so that $(\Xcal \times \Omega,(1-\beta) d_{\Xcal} \oplus \beta d,\mu)$ and $(\Ycal \times \Omega,(1-\beta) d_{\Ycal} \oplus \beta d,\nu)$ are strongly isomorphic, yet not (II)-strongly isomorphic as shown in the example since $FGW >0$. 
\end{proof}

\subsection{Convergence of finite samples}

The metric properties of $FGW$ naturally endow the structured object space with a notion of convergence as described in the next definition:

\begin{definition}[Convergence of structured objects.]

\noindent Let $\big((\Xcal_{n} \times \Omega,d_{\Xcal_{n}}, \mu_{n})\big)_{n \in \mathbb{N}}$ be a sequence of structured objects. It converges to $(\Xcal \times \Omega, d_{\Xcal},\mu)$ in the Fused Gromov-Wasserstein sense if:
\begin{equation}
\lim\limits_{n \rightarrow \infty}\fgwdistance_{\alpha,p,1}(\mu_{n},\mu)=0
\end{equation}

\end{definition}
We consider in this definition only the case $q=1$ as it gives a proper metric (with $q>1$ the triangle inequality is relaxed by a factor $2^{q-1}$). Using Prop. \ref{fgwcomparetogwandw}, it is straightforward to see that if the sequence converges in the $FGW$ sense, both the features and the structure converge respectively in the Wasserstein and Gromov-Wasserstein sense.

An interesting question arises from this definition. If we consider a structured object $(\Xcal \times \Omega,d_{\Xcal}, \mu)$ and if we sample the joint distribution so as to consider $(\{(x_{i},a_{i})\}_{i \in \{1,..,n\} }, d_{\Xcal}, \mu_{n} )_{n \in \mathbb{N}}$ with $\mu_{n}=\frac{1}{n}\sum\limits_{i=1}^{n} \delta_{x_{i},a_{i}}$ where $(x_{i},a_{i}) \in \Xcal \times \Omega$ are sampled from $\mu$. Does this sequence converges to $(\Xcal \times \Omega,d_{\Xcal}, \mu )$ in the $FGW$ sense and how fast is the convergence?

This question can be answered thanks to a notion of ``size'' of a probability measure. For the sake of conciseness we will not present exhaustively the theory but the reader can refer to \cite{weedbach2017} for more details. Given a measure $\mu$ on $\Omega$ we denote as $dim_{p}^{*}(\mu)$ its \emph{upper Wasserstein dimension}. It coincides with the intuitive notion of ``dimension''  when the measure is sufficiently well behaved. For example, for any absolutely continuous measure $\mu$ with respect to the Lebesgue measure on $[0,1]^{d}$, we have $dim_{p}^{*}(\mu)=d$ for any $p \in [1,\frac{d}{2}]$.Using this definition and the results in \cite{weedbach2017}, we can answer the question of convergence of finite sample in the following proposition (proof can be found in Section~\ref{proof:prop2}):

\begin{theo}[Convergence of finite samples and a concentration inequality]
\label{concentration}

With previous notations. Let $p \geq 1$. We have:
\begin{equation}
\lim\limits_{n \rightarrow \infty}\fgwdistance_{\alpha,p,1}(\mu_{n},\mu)=0
\end{equation}
Moreover, suppose that $s > d_{p}^{*}(\mu)$. Then there exists a constant $C$ that does not depend on $n$ such that:
\begin{equation}
\label{epstaudim_main}
\mathbb{E}[\fgwdistance_{\alpha,p,1}(\mu_{n},\mu)] \leq C n^{-\frac{1}{s}}.
\end{equation}
The expectation is taken over the \textit{i.i.d} samples $(x_i,a_i)$. A particular case of this inequality is when $\alpha=1$ so that we can use the result above to derive a concentration result for the Gromov-Wasserstein distance. More precisely, if $\nu_{n}=\frac{1}{n} \sum_{i} \delta_{x_{i}}$ denotes the empirical measure of $\nu \in \Pcal(\Xcal)$ and if $s' > d_{p}^{*}(\nu)$ we have:
\begin{equation}
\label{concentrationgromov_main}
\mathbb{E}[\gw_{p}(\nu_{n},\nu)] \leq C' n^{-\frac{1}{s'}}.
\end{equation}
\end{theo}

\tv{This result is a simple application of the convergence of finite sample properties of the Wasserstein distance, since in this case $\mu_n$ and $\mu$ are part of the same ground space so that equation \eqref{epstaudim_main} derives naturally from equation \eqref{samespace_main} and the properties of Wasserstein}. In contrast to the Wasserstein distance case this inequality is not necessarily sharp and future work will be dedicated to the study of its tightness.

\subsection{Interpolation properties between W and GW distances}
\label{sec:interpo_section}
$FGW$ distance is a generalization of both Wasserstein and Gromov-Wasserstein distances in the sense that it achieves an interpolation between them. More precisely, we have the following theorem:

\begin{prop}[Interpolation properties.]
\label{interpolationtheorem}
As $\alpha$ tends to zero, one recovers the Wasserstein distance between the features information and as $\alpha$ goes to one, one recovers the Gromov-Wasserstein distance between the structure information:
\begin{equation}
\lim\limits_{\alpha \rightarrow 0}\fgwdistance_{\alpha,p,q}(\mu,\nu)=(\wass_{pq}(\mu_{A},\nu_{B}))^{q}
\end{equation}
\begin{equation}
\lim\limits_{\alpha \rightarrow 1}\fgwdistance_{\alpha,p,q}(\mu,\nu)=(\gw_{pq}(\mu_{X},\nu_{Y}))^{q}
\end{equation}
\end{prop}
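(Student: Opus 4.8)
The plan is to treat the two limits symmetrically, in each case identifying first the exact value of $\fgwdistance_{\alpha,p,q}$ at the endpoint $\alpha\in\{0,1\}$ and then establishing the limit by a two-sided squeeze. Recall that by Definition \ref{fgwdef} we have $\fgwdistance_{\alpha,p,q}(\mu,\nu)^{p}=\inf_{\pi\in\couplingset(\mu,\nu)}\lossfgw(\pi)$, and since $t\mapsto t^{1/p}$ is continuous it suffices to control the infimum of $\lossfgw$.

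First I would identify the endpoints. Setting $\alpha=0$, the integrand $\big((1-\alpha)d(a,b)^{q}+\alpha L(x,y,x',y')^{q}\big)^{p}$ reduces to $d(a,b)^{pq}$, which depends only on the pair $((x,a),(y,b))$; because $\pi$ is a probability measure, the second integration against $\dr\pi((x',a'),(y',b'))$ contributes a factor $1$, so $\lossfgw(\pi)=\int d(a,b)^{pq}\,\dr\pi((x,a),(y,b))$. This quantity depends on $\pi$ only through its feature marginal $\gamma_{\pi}\in\couplingset(\mu_A,\nu_B)$. Using a gluing/disintegration argument—writing $\mu=\mu_A(\dr a)\,\mu_a(\dr x)$ and $\nu=\nu_B(\dr b)\,\nu_b(\dr y)$, and lifting any $\gamma\in\couplingset(\mu_A,\nu_B)$ to $\pi=\gamma(\dr a,\dr b)\,\mu_a(\dr x)\,\nu_b(\dr y)\in\couplingset(\mu,\nu)$—I would show the set of feature marginals of $\couplingset(\mu,\nu)$ is exactly $\couplingset(\mu_A,\nu_B)$. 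Hence $\fgwdistance_{0,p,q}(\mu,\nu)^{p}=\ot_{d^{pq}}(\mu_A,\nu_B)=\wass_{pq}(\mu_A,\nu_B)^{pq}$. The same computation with $\alpha=1$, where the integrand becomes $L(x,y,x',y')^{pq}$ and depends only on the structure coordinates, gives $\fgwdistance_{1,p,q}(\mu,\nu)^{p}=\gw_{pq}(\mu_X,\nu_Y)^{pq}$.

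For the upper bound ($\limsup$) at $\alpha\to0$, I would fix a coupling $\pi_{0}\in\couplingset(\mu,\nu)$ optimal for $\alpha=0$ (existence following from lower semicontinuity, Lemma \ref{lsc_on_measure}, together with Weierstrass' theorem, as in the existence part of Theorem \ref{metrictheo}), so that $\fgwdistance_{\alpha,p,q}(\mu,\nu)^{p}\le\lossfgw(\pi_{0})$ for every $\alpha$. The integrand is dominated uniformly in $\alpha$ by $2^{p-1}\big(d(a,b)^{pq}+L(x,y,x',y')^{pq}\big)$, which is $\pi_{0}\otimes\pi_{0}$-integrable thanks to the moment conditions defining $\mathbb{S}_{pq}(\Omega)$ (via $d(a,b)\le d(a,a_{0})+d(a_{0},b)$ and $L\le d_{\Xcal}+d_{\Ycal}$). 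Dominated convergence then gives $\lossfgw(\pi_{0})\to\wass_{pq}(\mu_A,\nu_B)^{pq}$ as $\alpha\to0$, whence $\limsup_{\alpha\to0}\fgwdistance_{\alpha,p,q}(\mu,\nu)^{p}\le\wass_{pq}(\mu_A,\nu_B)^{pq}$. The case $\alpha\to1$ is symmetric using a coupling $\pi_{1}$ optimal for $\alpha=1$.

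The lower bound ($\liminf$) is, I expect, the point that must be stated most carefully, since it has to hold uniformly over all couplings. Here I would use the elementary monotonicity inequality: for $u,v\ge0$ and $p\ge1$, $\big((1-\alpha)u+\alpha v\big)^{p}\ge(1-\alpha)^{p}u^{p}$. Applied pointwise with $u=d(a,b)^{q}$, $v=L^{q}$, and using $\gamma_{\pi}\in\couplingset(\mu_A,\nu_B)$, this yields, uniformly in $\pi$, the bound $\lossfgw(\pi)\ge(1-\alpha)^{p}\int d(a,b)^{pq}\,\dr\gamma_{\pi}\ge(1-\alpha)^{p}\wass_{pq}(\mu_A,\nu_B)^{pq}$. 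Taking the infimum over $\pi$ and letting $\alpha\to0$ gives $\liminf_{\alpha\to0}\fgwdistance_{\alpha,p,q}(\mu,\nu)^{p}\ge\wass_{pq}(\mu_A,\nu_B)^{pq}$, and together with the upper bound this proves the first limit. The symmetric inequality $\big((1-\alpha)u+\alpha v\big)^{p}\ge\alpha^{p}v^{p}$ lower-bounds $\lossfgw(\pi)$ by $\alpha^{p}\gw_{pq}(\mu_X,\nu_Y)^{pq}$ and handles $\alpha\to1$. The genuine obstacle is therefore not any single estimate but the bookkeeping of passing between couplings of $(\mu,\nu)$ and couplings of their marginals (the gluing lemma and the collapse of the double integral) and the uniform domination needed for dominated convergence; once these are secured, the two monotonicity inequalities make the squeeze immediate.
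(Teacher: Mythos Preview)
Your proof is correct and follows the same overall squeeze strategy as the paper: the lower bound is exactly the inequality \eqref{wassinequality} (resp.\ \eqref{gromovinequality}) from Proposition~\ref{fgwcomparetogwandw}, obtained via $((1-\alpha)u+\alpha v)^p\ge(1-\alpha)^p u^p$, and the upper bound is obtained by evaluating $\lossfgw$ at a single coupling whose feature (resp.\ structure) marginal is Wasserstein (resp.\ Gromov--Wasserstein) optimal.

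The only genuine difference is in how the upper bound is pushed through. The paper constructs this coupling explicitly via the gluing lemma (building $\rho$ with $P_{2,3}\#\rho=\pi_{OT}$) and then expands $((1-\alpha)d^q+\alpha L^q)^p$ by the binomial theorem, isolating the term $(1-\alpha)^p d^{pq}$ and bounding the remaining cross terms by $\sum_{k=0}^{p-1}\binom{p}{k}(1-\alpha)^k\alpha^{p-k}H_k$ with finite $H_k$. You instead take $\pi_0$ to be any minimiser of $E_{p,q,0}$ (whose existence you correctly justify via Theorem~\ref{metrictheo}) and pass to the limit by dominated convergence. Both work; your route avoids the combinatorics and would extend verbatim to non-integer $p\ge1$, while the paper's binomial expansion gives a slightly more explicit quantitative sandwich but relies on $p\in\mathbb{N}^*$ (which is the standing assumption in the definition of $\mathbb{S}_p(\Omega)$ anyway). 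Your disintegration argument for surjectivity of the feature-marginal map $\couplingset(\mu,\nu)\to\couplingset(\mu_A,\nu_B)$ is precisely the content of the gluing step the paper invokes.
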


Proof of this proposition can be found in Section~\ref{proof:theo2}. This result shows that $FGW$ can revert to one of the other distances and thus acts as a generalization of Wasserstein and Gromov-Wasserstein distances, as claimed in the discrete case section. 

\subsection{Geodesic properties}

One desirable property in OT is the underlying geodesics defined by the mass transfer between two probability distributions. These properties are useful in order to define dynamic formulation of OT problems. This dynamic point of view is inspired by fluid dynamics and finds its origin in the Wasserstein context with \cite{Benamou2000}. Various applications in machine learning can be derived from this formulation: interpolation along geodesic paths was used in computer graphics for color or illumination interpolations \cite{Bonneel:2011:DIU:2024156.2024192}. More recently, \cite{chizat_global} used Wasserstein gradient flows in an optimization context, deriving global minima results for non-convex particles gradient descent. In \cite{pmlr-v80-zhang18a} authors used Wasserstein gradient flows in the context of reinforcement learning for policy optimization.

The main idea of this dynamic formulation is to describe the optimal transport problem between two measures as a curve in the space of measures minimizing its total length. We first describe some generality about geodesic spaces and recall classical results for dynamic formulation in both Wasserstein and Gromov-Wasserstein contexts. In a second part, we derive new geodesic properties in the $FGW$ context.

\paragraph{Geodesic spaces} Let $(\Xcal,d_\Xcal)$ be a metric space and $x,y$ two points in $\Xcal$. We say that a curve $w:[0,1] \rightarrow \Xcal$ joining the \textit{endpoints} $x$ and $y$ (\textit{i.e.} with $w(0)=x$ and $w(1)=y$) is a \textit{constant speed geodesic} if it satisfies $d_\Xcal(w(t),w(s)) \leq |t-s| d(w(0),w(1))=|t-s| d_\Xcal(x,y)$ for $t,s \in [0,1]$. Moreover, if $(\Xcal,d_\Xcal)$ is a length space (\textit{i.e.} if the distance between two points of $\Xcal$ is equal to the infimum of the lengths of the curves connecting these two points) then the converse is also true and a constant speed geodesic satisfies  $d_\Xcal(w(t),w(s)) =|t-s| d_\Xcal(x,y)$. It is easy to compute distances along such curve as they are directly embedded into $\R$.

In the Wasserstein context, if the ground space is a complete separable, locally compact length space and if the endpoints of the geodesic are given, then there exists a geodesic curve. Moreover, if the transport between the endpoints is unique then there is a unique displacement interpolation between the endpoints (see Corollary 7.22 and 7.23 in \cite{Villani}). For example, if the ground space is $\mathbb{R}^{d}$ and the distance between the points is measured via the $\|.\|_2$ norm, then the geodesics exist and are uniquely determined (note that this can be generalized to costs of the form $c(\xbf,\ybf)=h(\ybf-\xbf)$ where $h$ is strictly convex). In the Gromov-Wasserstein context, there always exists constant speed geodesics as long as the endpoints are given. These geodesics are unique modulo strong isomorphisms (see \cite{Sturm2012}).

\paragraph{The $FGW$ case} In this paragraph, we suppose that $\Omega=\mathbb{R}^{d}$. We are interested in finding a geodesic curve in the space of structured objects \textit{i.e.} a constant speed curve of structured objects joining two structured objects. As for Wasserstein and Gromov-Wasserstein, the structured object space endowed with the Fused Gromov-Wasserstein distance maintains some geodesic properties. The following result proves the existence of such a geodesic and characterizes it:

\begin{theo}[\tv{Constant speed geodesic.}]
\label{cstespeedtheo}
 Let $p\geq 1$ and $(\Xcal \times \Omega ,d_{\Xcal},\mu_{0})$ and $(\Ycal \times \Omega ,d_{\Ycal},\mu_{1})$ in $\mathbb{S}_p(\mathbb{R}^{d})$. Let $\pi^{*}$ be an optimal coupling for the Fused Gromov-Wasserstein distance between $\mu_{0},\mu_{1}$ and $t \in [0,1]$. We equip $\mathbb{R}^{d}$ with  $\ell_{m}$ norm for $m \geq 1$.

We define $\eta_{t} : \Xcal \times \Omega \times \Ycal \times \Omega \rightarrow \Xcal \times \Ycal \times \Omega$ such that:
\begin{equation}
\forall (x,\a),(y,\b) \in \Xcal \times \Omega \times \Ycal \times \Omega, \ \eta_{t}(x,\a,y,\b)=(x,y,(1-t)\a+t\b)
\end{equation}
Then:
\begin{equation}
\label{geodesic}
\left(\Xcal\times \Ycal \times \Omega,(1-t)d_{\Xcal} \oplus t d_{\Ycal},\mu_{t}=\eta_{t} \# \pi^{*} \right)_{t \in [0,1]}
\end{equation}
is a constant speed geodesic connecting $(\Xcal \times \Omega ,d_{\Xcal},\mu_{0})$ and $(\Ycal \times \Omega ,d_{\Ycal},\mu_{1})$ in the metric space $\left(\mathbb{S}_p(\mathbb{R}^{d}), \fgwdistance_{\alpha,p,1}\right)$.

\end{theo}

Proof of the previous theorem can be found in Section~\ref{proof:theo3}. In a sense this result combines the geodesics in the Wasserstein space and in the space of all mm-spaces since it suffices to interpolate the distances in the structure space and the features to construct a geodesic. The main interest is that it defines the minimum path between two structured objects. For example, considering two discrete structured objects represented by the measures $\mu_{0}=\sum_{i=1}^{n} h_{i}  \delta_{(x_{i},\a_{i})}$ and $\mu_{1}=\sum_{j=1}^{m} g_{j}  \delta_{(y_{j},\b_{j})}$, the interpolation path is given for $t\in [0,1]$ by the measure $\mu_{t}=\sum_{i=1}^{n}\sum_{j=1}^{m} \pi^{*}(i,j) \delta_{(x_{i},y_{j},(1-t)\a_{i} +t \b_{j})}$ where $\pi^{*}$ is an optimal coupling for the $FGW$ distance. However this geodesic is difficult to handle in practice since it requires the computation of the cartesian product $\Xcal_{0}\times \Xcal_{1}$. The Fréchet mean defined in Section \ref{sec:bary} seems to be more suited in practice. The proper definition and properties of velocity fields associated to this geodesic is postponed to further works.

\section{Discussion and conclusion}

Countless problems in machine learning involve structured data, usually stressed in light of the graph formalism. We consider here labeled graphs enriched by an histogram, which naturally leads to represent structured data as probability measures in the joint space of their features and structures. Widely known for their ability to meaningfully compare probability measures, transportation distances  are generalized in this chapter so as to be suited in the context of structured data, motivating the so-called Fused Gromov-Wasserstein distance. We theoretically prove that it defines indeed a distance on structured data, and consequently on graphs of arbitrary sizes. $FGW$ provides a natural framework for {analysis} of labeled graphs as we demonstrate on classification, where it reaches and surpasses most of the time the state-of-the-art performances, and in graph-based $k$-means where we develop a novel approach to represent the clusters centroids  using a {barycentric} formulation of $FGW$. We believe that this metric can have a significant impact on challenging graph signal analysis problems.

{While we considered a unique measure of distance between nodes in the graph structure (shortest path), other choices could be made with respect to the problem at hand, or eventually learned in an end-to-end manner. The same applies to the distance between features.} 
{We also envision a potential use of this distance in deep learning applications where a distance between graph is needed (such as graph auto-encoders). Another line of work will also try to lower the computational complexity of the underlying optimization problem to ensure better scalability to very large graphs.}


\chapter{The Gromov-Wasserstein problem in Euclidean spaces}
\epigraph{\itshape``Is it all right if I go out there?''\\
``Sure,'' Thomas Hudson had told him. ``But it s rugged from now on until spring and spring isn't easy.''\\
``I want it to be rugged,'' Roger had said. ``I am going to start new again.'' \\
``How many time it is now you've started new?''\\
``Too many,'' Roger had said. ``And you don't have to rub it in.''}{-- Ernest Hemingway, \textit{Islands in the Stream}}
\minitoc
\label{cha:gw_euclidean}

\newpage

\begin{Abstract}
This chapter is based on the paper \cite{vay_sliced_gromov_2019} and addresses the problem of GW in Euclidean spaces. Recently used in various machine learning contexts, the Gromov-Wasserstein distance allows for comparing distributions which supports do not necessarily lie in the same metric space. However, this optimal transport distance requires solving a complex non convex quadratic program which is most of the time very costly both in time and memory. Contrary to $\gw$, the Wasserstein distance enjoys several properties ({\em e.g.} duality) that permit large scale optimization. Among those, the solution of $W$ on the real line, that only requires sorting discrete samples in 1D, allows defining the Sliced Wasserstein ($SW$) distance. This first part of this chapter presents a new divergence based on $\gw$ akin to $SW$. More precisely the contributions of are the following:
\begin{itemize}
\item We derive the first closed form solution for $\gw$ when dealing with discrete 1D distributions, based on a new result for the related quadratic assignment problem (Theorem \ref{qap} and Theorem \ref{sovable_gw}). 
\item Based on this result we define a novel OT discrepancy which can deal with large scale distributions via a slicing approach and we show how it relates to the $\gw$ distance while being $O(n\log(n))$ to compute.
\item We illustrate the behavior of this so called Sliced Gromov-Wasserstein ($\sgw$) discrepancy in experiments where we demonstrate its ability to tackle similar problems as $\gw$ while being several order of magnitudes faster to compute. 
\end{itemize}
The second part of this chapter is more prospective and tackle the problem of probability distributions which supports lie on Euclidean spaces with, potentially, different dimensions. This part investigate the regularity of $\gw$ optimal transport plan in the cases of inner product similarities and Euclidean distances. The contributions of this part are, in summary: 
\begin{itemize}
\item We show that the $\gw$ problem in Euclidean spaces is equivalent to jointly solve a linear transportation problem and a ``alignment'' problem (Theorem \ref{maintheorem} and Theorem \ref{maintheo2}).
\item We give necessary conditions under which a $\gw$ optimal transport plan is supported on a deterministic function (Theorem \ref{first_sufficient} and Proposition \ref{sufficientcond2}). This allows to derive a closed-form expression for $\gw$ with inner product similarities between 1D probability distributions (not necessarily discrete, see Theorem \ref{theo:1D1D1D}).
\item We study the Gromov-Monge problem in Euclidean spaces, and in particular the \emph{linear} Gromov-Monge problem for which we exhibit a closed-form expression between Gaussian distributions (Theorem \ref{maintheo3}).
\end{itemize}
\end{Abstract}

\section{Sliced Gromov-Wasserstein}
\label{sec:sliced}

\subsection{Introduction}

As described in Chapter \ref{cha:ot_general} the linear optimal transport problem aims at defining ways to compare probability distributions, through \textit{e.g.} the Wasserstein distance.
It has proved to be very useful for a wide range of machine learning tasks including generative modelling (Wasserstein GANs~\cite{arjovsky17a}), domain adaptation~\cite{courty2017optimal} or supervised embeddings for classification purposes~\cite{huang2016}. However one limitation of this approach is that it implicitly assumes \emph{aligned} distributions, \textit{i.e.} that lie in the same metric space or at least between spaces where a meaningful distance \emph{across} domains can be computed. From another perspective, the Gromov-Wasserstein distance benefits from more flexibility when it comes to the more challenging scenario where heterogeneous distributions are involved, \emph{i.e.} distributions which supports do not necessarily lie on the same metric space. It only requires modelling the topological or relational aspects of the distributions \emph{within} each domain in order to compare them. As such, it has recently received a high interest in the machine learning community, solving learning tasks such as heterogenous domain adaptation~\cite{ijcai2018-412}, deep metric alignment~\cite{gwcnn}, graph classification (see Chapter \ref{cha:fgw} for more details) or generative modelling~\cite{bunne_gan}.

OT is known to be a computationally difficult problem: the Wasserstein distance involves a linear program that most of the time prevents its use to settings with more than a few tens of thousands of points. For medium to large scale problems, some methods relying  \textit{e.g.} on entropic regularization or dual formulation (as seen in Chapter \ref{cha:ot_general}) have been investigated in the past years. Among them, one builds upon the mono-dimensional case where computing the Wasserstein distance can be trivially solved in $O(n \log n)$ by sorting points in order and pairing them from left to right. While this 1D case has a limited interest \emph{per se}, it is one of the main ingredients of the \emph{Sliced} Wasserstein distance~\cite{rabin2011wasserstein}: high-dimensional data are linearly projected into sets of mono-dimensional distributions, the sliced Wasserstein distance being the average of the Wasserstein distances between all projected measures. This framework provides an efficient algorithm that can handle millions of points and has similar properties to the Wasserstein distance \cite{bonotte_phd}. As such, it has attracted attention and has been successfully used in various tasks such as barycenter computation~\cite{bonneel:hal-00881872}, classification~\cite{Kolouri_2016_CVPR} or generative modeling \cite{kolouri2018sliced,cvpr_sliced_gan,sliced_wass_flow_liutkus_2019,Wu_2019_CVPR}.

Regarding $\gw$, the optimization problem is a non-convex quadratic program, with a prohibitive computational cost for problems with more than a few thousands of points: the number of terms grows quadratically with the number of samples and one cannot rely on a dual formulation as for Wasserstein. However several approaches have been proposed to tackle its computation. Initially approximated by a linear lower bound, $\gw$ was thereafter estimated through an entropy regularized version that can be efficiently computed by iterating Sinkhorn projections (see Chapter \ref{cha:ot_general}) or using a conditional gradient scheme relying on linear program OT solvers (see Chapter \ref{cha:fgw}). However, all these methods are still too costly for large scale \textit{scenarii}. In this section, we propose a new formulation related to $\gw$ that lowers its computational cost. To that extent, we derive a novel OT discrepancy called Sliced Gromov-Wasserstein ($\sgw$). It is similar in spirit to the Sliced Wasserstein distance as it relies on the exact computation of 1D $\gw$ distances of distributions projected onto random directions. We notably provide the first 1D closed-form solution of the $\gw$ problem by proving a new result about the Quadratic Assignment Problem for matrices that are squared euclidean distances of real numbers. Computation of $\sgw$ for discrete distributions of $n$ points is $O(L \, n\log(n))$, where $L$ is the number of sampled directions. This complexity is the same as the Sliced-Wasserstein distance and is even lower than computing the value of $\gw$ which is $O(n^3)$ for a known coupling (once the optimization problem solved) in the general case~\cite{peyre2016gromov}. Experimental validation shows that $\sgw$ retains various properties of $\gw$ while being much cheaper to compute, allowing its use in difficult large scale settings such as large mesh matching or generative adversarial networks.

\subsection{From 1D GW to Sliced Gromov-Wasserstein}
\label{sec:sgw}

We first provide and prove a solution for an 1D Quadratic Assignement Problem with a quasilinear time complexity of $O(n\log(n))$. This new special case of the QAP is shown to be equivalent to the \emph{hard assignment} version of $\gw$, called the Gromov-Monge ($GM$) problem, with squared Euclidean cost for distributions lying on the real line. We also show that, in this context, solving $\gm$ is equivalent to solving $\gw$. We derive a new discrepancy named Sliced Gromov-Wasserstein ($\sgw$) that relies on these findings for efficient computation.

\paragraph{Solving a Quadratic Assignment Problem in 1D}

In Koopmans-Beckmann form~\cite{koopmans} a QAP takes as input two $n \times n$ matrices $\Abf=(a_{ij})$, $\Bbf=(b_{ij})$. The goal is to find a permutation $\sigma \in \Sn$, the set of all permutations of $\integ{n}$, which minimizes the objective function {\small$ \sum_{i,j=1}^{n} a_{i,j} b_{\sigma(i),\sigma(j)}$}. In full generality this problem is NP-hard (see Section \ref{sec:solving_gw} for more details). The following theorem is a new result about QAP and states that it can be solved in polynomial time when $\Abf$ and $\Bbf$ are squared Euclidean distance matrices of sorted real numbers: 

\begin{theo}[A new special case for the Quadratic Assignment Problem]
\label{qap}

For real numbers $x_{1} < \dots < x_{n}$ and $y_{1} < \dots < y_{n}$, 
\begin{equation}
\underset{\sigma \in \Sn}{\min} \sum_{i,j}  - (x_{i}-x_{j})^{2}(y_{\sigma(i)}-y_{\sigma(j)})^{2}
\end{equation}
is achieved either by the identity permutation $\sigma(i)=i$ ($Id$) or the anti-identity permutation $\sigma(i)=n+1-i$ ($anti-Id$). In other words:
\begin{equation}
\exists \sigma \in \{Id,anti-Id\}, \ \sigma \in \underset{\sigma \in \Sn}{\argmin} \sum_{i,j}  - (x_{i}-x_{j})^{2}(y_{\sigma(i)}-y_{\sigma(j)})^{2} 
\end{equation}
\end{theo}

To the best of our knowledge, this result is new. It states that if one wants to find the best one-to-one correspondence of real numbers such that their pairwise distances are best conserved, it suffices to sort the points and check whether the identity has a better cost than the anti-identity. 
Proof of this theorem can be found in Section \ref{proof:maintheo}. We postulate that this result also holds for $a_{ij}=|x_{i}-x_{j}|^{k}$ and $b_{ij}=-|y_{i}-y_{j}|^{k}$ with any $k\geq 1$ but leave this study for future works.

\paragraph{Gromov-Wasserstein distance on the real line}

When $n=m$ and $a_{i}=b_{j}=\frac{1}{n}$, one can look for the \emph{hard assignment} version of the $\gw$ distance resulting in the Gromov-Monge problem~\cite{memoli_gromov_monge_2018} associated with the following $\gm$ distance:
\begin{equation}
\gm_{2}(\insided_{\Xcal},\insided_{\Ycal},\mu,\nu) = \min_{\sigma \in \Sn} \frac{1}{n^{2}} \sum_{i,j} \big| \insided_{\Xcal}(x_{i},x_{j})-\insided_{\Ycal}(y_{\sigma(i)},y_{\sigma(j)}) \big|^{2} \label{eq:qapgw}
\end{equation}
where  $\sigma \in \Sn$ is a one-to-one mapping $\integ{n}  \rightarrow
\integ{n}$. Interestingly when the permutation $\sigma$ is known, the computation of the
cost is $O(n^2)$ which is far better than $O(n^3)$ for the general $\gw$ case. It is easy to see that this problem is equivalent to minimizing $ \sum_{i,j=1}^{n} a_{i,j} b_{\sigma(i),\sigma(j)}$ with $a_{ij}=\insided_{\Xcal}(x_{i},x_{j})$ and $b_{ij}=-\insided_{\Ycal}(y_{\sigma(i)},y_{\sigma(j)})$. Indeed we have:
\begin{equation*}
\begin{split}
 \sum_{i,j} \big| \insided_{\Xcal}(x_{i},x_{j})-\insided_{\Ycal}(y_{\sigma(i)},y_{\sigma(j)}) \big|^{2} &=\sum_{i,j} \insided_{\Xcal}(x_{i},x_{j})^{2}+\sum_{i,j} \insided_{\Ycal}(y_{\sigma(i)},y_{\sigma(j)})^{2}-2 \sum_{i,j}\insided_{\Xcal}(x_{i},x_{j}) \insided_{\Ycal}(y_{\sigma(i)},y_{\sigma(j)}) \\
 &=\sum_{i,j} \insided_{\Xcal}(x_{i},x_{j})^{2}+\sum_{i,j} \insided_{\Ycal}(y_{i},y_{j})^{2}-2 \sum_{i,j}\insided_{\Xcal}(x_{i},x_{j}) \insided_{\Ycal}(y_{\sigma(i)},y_{\sigma(j)})
 \end{split} 
\end{equation*}
So that only the term $-2 \sum_{i,j}\insided_{\Xcal}(x_{i},x_{j}) \insided_{\Ycal}(y_{\sigma(i)},y_{\sigma(j)})$ depends on $\sigma$. Thus, when squared Euclidean costs are used for distributions lying on the real line, Theorem~\ref{qap} exactly recovers the solution of the $\gm$ problem defined in equation~\eqref{eq:qapgw}. As matter of consequence, Theorem~\ref{qap} provides an efficient way of solving the Gromov-Monge problem.

Moreover, this theorem also allows finding a closed-form for the $\gw$ distance. 
Indeed, some recent advances in graph matching state that, under some conditions on $\Abf$ and $\Bbf$, the assignment problem is equivalent to its \emph{soft-assignment} counterpart~\cite{NIPS2018_7323}. 
This way, using both Theorem~\ref{qap} and~\cite{NIPS2018_7323}, one can find a solvable case for the $\gw$ distance as stated in the following theorem: 
\begin{theo}[Equivalence between $\gw$ and $\gm$ for discrete measures]
\label{sovable_gw}
Let $\mu \in \P(\R^{p})$, $\nu \in \P(\R^{q})$ be discrete probability measures with same number of atoms and uniform weights, \ie\ $\mu=\frac{1}{n}\sum_{i=1}^{n}\delta_{\xbf_i},\nu=\frac{1}{n}\sum_{i=1}^{n}\delta_{\ybf_i}$ with $\xbf_i \in \R^{p},\ybf_i \in \R^{q}$.  For $\xbf \in \R^{p}$ we note $\|\xbf\|_{2,p}=\sqrt{\sum_{i=1}^{p} |x_{i}|^{2}}$ the $\ell_{2}$ norm on $\R^{p}$ (same for $\R^{q}$). Let $c_{\Xcal}(\xbf,\xbf')=\|\xbf-\xbf'\|_{2,p}^{2}$ , $c_{\Ycal}(\ybf,\ybf')=\|\ybf-\ybf'\|_{2,q}^{2}$. Then: 
\begin{equation}
\gw_{2}(c_{\Xcal},c_{\Ycal},\mu,\nu)=\gm_{2}(c_{\Xcal},c_{\Ycal},\mu,\nu)
\end{equation}
Moreover when $p=q=1$, \ie\ $c_{\Xcal}(x,x')=c_{\Ycal}(x,x')=|x-x'|^{2}$, and $x_{1} < \dots < x_{n}$ and $y_{1} < \dots < y_{n}$ the optimal values are achieved by considering either the identity or the anti-identity permutation.
\end{theo}

A detailed proof is provided in Section \ref{proof:eq_GM_GW}. Note also that, while both possible
solutions for problem \eqref{eq:qapgw} can be computed in $O(n\log(n))$, finding
the best one requires the computation of the cost which seems, at first sight, to have a $O(n^2)$ complexity. However, under the hypotheses of squared Euclidean distances, the cost can be computed in $O(n)$. Indeed, in this case, one can develop the sum in equation \eqref{eq:qapgw} to compute it in $O(n)$ operations using binomial expansion (see details in Section \ref{proof:computing_gw}) so that the overall complexity of finding the best assignment and computing the cost is $O(n\log(n))$ which is the same complexity as the Wasserstein for 1D distributions.

\paragraph{Sliced Gromov-Wasserstein discrepancy} 

Theorem \ref{sovable_gw} can be put in perspective with the Wasserstein distance for 1D distributions which is achieved by the identity permutation when points are sorted~\cite{cot_peyre_cutu}. As explained in Chapter \ref{cha:ot_general}, this result was used to approximate the Wasserstein distance between measures of $\R^{q}$ using the so called Sliced Wasserstein (SW) distance~\cite{bonneel:hal-00881872}. The main idea is to project the points of the measures on lines of $\R^{q}$where computing a Wasserstein distance is easy since it only involves a simple
sort and to average these distances. In the same philosophy we build upon Theorem \ref{sovable_gw} to define a ``sliced'' version of the $\gw$ distance. In the following, we consider $\mu \in \Pcal(\R^{p}),\nu \in \Pcal(\R^{q})$ be probability distributions (not necessarily discrete).

Let $\mathbb{S}^{q-1}=\left\{\thetab \in \mathbb{R}^{q} | \|\thetab\|_{2}=1\right\}$ be
the $q$-dimensional hypersphere and $\lambda_{q-1}$ the uniform measure on
$\mathbb{S}^{q-1}$ . For $\thetab$ we note $P_{\thetab}$ the projection on $\thetab$, \textit{i.e.} $P_{\thetab}(\xbf)=\langle \xbf,
\thetab \rangle$. For a linear map $\D \in \mathbb{R}^{q\times p}$ (identified with slight abuses of notation by its corresponding matrix),
we define the Sliced Gromov-Wasserstein (SGW) discrepancy as follows:

\begin{equation}{}
\label{sgw}
\sgw_{\D}(\mu,\nu)= \underset{\thetab \sim \lambda_{q-1}}{\E}[\gw^{2}_{2}(P_{\thetab}\#\mu_{\D},P_{\thetab}\#\nu)]=  \int_{\mathbb{S}^{q-1}}\gw^{2}_{2}(d^{2},P_{\thetab}\#\mu_{\D},P_{\thetab}\#\nu) \dr\lambda_{q-1}(\thetab)
\end{equation}

 where $\mu_{\D}=\D\#\mu \in \mathcal{P}(\R^{q})$. The function $\D$ acts as a mapping for
 a point in $\R^{p}$ of the measure $\mu$ onto $\R^{q}$. When $p=q$ and when we consider $\D$ as the identity map we simply
 write $\sgw(\mu,\nu)$ instead of $\sgw_{\mathbf{I_{p}}}(\mu,\nu)$. When $p < q$, one straightforward choice is $\D=\D_{pad}$ the "uplifting" operator which pads each point of the measure with zeros: $\D_{pad}(\xbf)=(x_{1},\dots,x_{p},\underbrace{0,\dots,0}_{q-p})$. The procedure is illustrated in Fig \ref{sgw_figure}.

 In general fixing $\D$ implies that some properties of $\gw$, such as the rotational invariance, are lost. Consequently, we also propose a variant of SGW that does not depends on the choice of $\D$ called Rotation Invariant SGW ($\risgw$) and expressed for $p\geq q$ as the following:
 \begin{equation}
  \label{risgw}
  \risgw(\mu,\nu)= \underset{\D \in \Stief}{\min}\sgw_{\D}(\mu,\nu).
  \end{equation}
 We propose to minimize $\sgw_{\D}$ with respect to $\D$ in the Stiefel
 manifold $\Stief$ \cite{absil2009optimization} which is defined as $\Stief=\{\D \in \R^{q\times p} | \D^{T}\D=\mathbf{I_{p}} \}$. It can be seen as finding an optimal projector of the measure $\mu$ \cite{subspace_robust_wass_patty_2019,Deshpande_2019_CVPR}. This formulation comes at the cost of an
additional optimization step but allows recovering one key property of GW.
 When $p=q$ this encompasses for
 \textit{e.g.} all rotations of the space, making $\risgw$ invariant by rotation.

\begin{figure}
\begin{center}
\begin{tikzpicture}
\usetikzlibrary{decorations}
\usetikzlibrary{decorations.pathreplacing}

\draw[->] (0,0,0)--(1,0,0) ;
\draw[->] (0,0,0)--(0,1,0);
\draw[->] (0,0,0)--(0,0,1);

\fill[orange!70] (0.5,0.5,0) circle (0.1cm);
\fill[orange!70] (0.5,0,0.7) circle (0.1cm);
\fill[orange!70] (-0.5,0.2,0.2) circle (0.1cm);
\fill[orange!70] (-0.1,0.6,0.4) circle (0.1cm);
\draw[gray] (0.5,0.5,0) circle (0.1cm);
\draw[gray] (0.5,0,0.7) circle (0.1cm);
\draw[gray] (-0.5,0.2,0.2) circle (0.1cm);
\draw[gray] (-0.1,0.6,0.4) circle (0.1cm);
\draw (0.5,0.5,0) node[right]{$y_1$};
\draw (0.5,0,0.7) node[below]{$y_2$};
\draw (-0.5,0.2,0.2) node[left]{$y_3$};
\draw (-0.1,0.6,0.4) node[above]{$y_4$};

\draw[->] (0,2)--(2,2) ;
\draw[->] (0,2)--(0,3);
\fill[red!70] (0.4,2.8) circle (0.1cm);
\fill[red!70] (0.8,2.8) circle (0.1cm);
\fill[red!70] (0.5,2.3) circle (0.1cm);
\fill[red!70] (1.7,2.2) circle (0.1cm);
\draw[gray] (0.4,2.8) circle (0.1cm);
\draw[gray] (0.8,2.8) circle (0.1cm);
\draw[gray] (0.5,2.3) circle (0.1cm);
\draw[gray] (1.7,2.2) circle (0.1cm);
\draw (0.8,2.8) node[above]{$x_1$};
\draw (0.4,2.8) node[above]{$x_4$};
\draw (0.5,2.3) node[below]{$x_2$};
\draw (1.7,2.2) node[left]{$x_3$};

\draw[->] (2.5,0.5) to[out=340,in=200] (4.5,0.5);
\draw (3.5,0.3) node[below]{$P_{\thetab}\#\nu$};

\draw[->] (2.5,2.5) to[out=20,in=160] (4.5,2.5);
\draw (3.5,2.7) node[above]{$P_{\thetab}\#(\D\#\mu)$};

\draw (3.5,1.5) node{for $\thetab$ $\in$ $\Sp^{q-1}$};
\foreach \x/\y in {2+4/5.6+4,3+4/4.6+4,3.8+4/4.2+4, 5.8+4/1.5+4} \draw[thick,blue!60](\y,2.5) -- (\x,0.5);

\draw[gray,dashed, ->]  (1+4,0.5) -- (6+4,0.5);
\foreach \y in {2+4,3+4,3.8+4, 5.8+4} \fill[orange!70] (\y,0.5) circle (0.1cm);
\foreach \y in {2+4,3+4,3.8+4, 5.8+4} \draw[gray] (\y,0.5) circle (0.1cm);
\foreach \y/\z in {2+4/1,3+4/2,3.8+4/3, 5.8+4/4} \draw(\y,0.5)node[below]{$y^{\thetab}_\z$};

\draw[gray,dashed, ->]   (1+4,2.5) -- (6+4,2.5);
\foreach \x in {1.5+4,4.2+4,4.6+4, 5.6+4} \fill[red!70] (\x,2.5) circle (0.1cm);
\foreach \x in {1.5+4,4.2+4,4.6+4, 5.6+4} \draw[gray] (\x,2.5) circle (0.1cm);
\foreach \x/\z in {1.5+4/1,4.2+4/2,4.6+4/3, 5.6+4/4} \draw(\x,2.5)node[above]{$x^{\thetab}_\z$};
\end{tikzpicture}
\end{center}
\caption{Example in dimension $p = 2$ and $q= 3$ {\bf (left)} that are projected on the line {\bf (right)}. The solution for this projection is the anti-diagonal coupling. \label{sgw_figure}}
\end{figure}
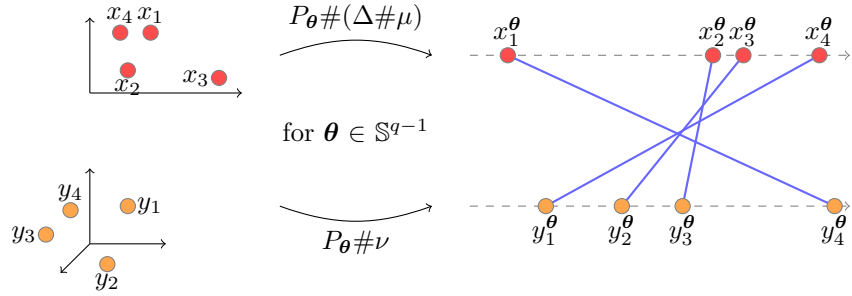

Interestingly enough, $\sgw$ holds various properties of the $\gw$ distance as summarized in the following theorem:
\begin{theo}[Properties of $\sgw$]
\label{propertiessgw}
~
\begin{itemize}
\item For all $\D$, $\sgw_{\D}$ and $\risgw$ are translation invariant. $\risgw$ is also rotational invariant when $p=q$, more precisely if $\Qbf \in \mathcal{O}(p)$ is an orthogonal matrix, $\risgw(\Qbf\#\mu,\nu)=\risgw(\mu,\nu)$ (same for any $\Qbf' \in \mathcal{O}(q)$ applied on $\nu$).
\item $\sgw$ and $\risgw$ are pseudo-distances on $\Pm(\R^{p})$, \textit{i.e.} they are symmetric, satisfy the triangle inequality and $\sgw(\mu,\mu)=\risgw(\mu,\mu)=0$ .
\item Let $\mu,\nu \in \Pm(\R^{p})\times \Pm(\R^{p})$ be probability distribution with \emph{compact supports}. If $\sgw(\mu,\nu)=0$ then $\mu$ and $\nu$ are isomorphic for the distance induced by the $\ell_{1}$ norm on $\R^{p}$, \ie\ $d(\xbf,\xbf')=\sum_{i=1}^{p} |x_{i}-x_{i}'|$ for $(x,x') \in \R^{p} \times \R^{p}$. In particular this implies:
\begin{equation}
\sgw(\mu,\nu)=0 \implies \gw_{2}(d,d,\mu,\nu)=0
\end{equation} 
\end{itemize}
\end{theo}

(with a slight abuse of notation we identify the matrix $\Qbf$ by its linear application). A proof of this theorem can be found in Section \ref{proof:prop_sgw}. This theorem states that if $\sgw$ vanishes then measures must be isomorphic, as it is the case for $\gw$. It states also that $\risgw$ holds most of the properties of $\gw$ in term of invariants. 

\begin{Remark} The $\D$ map can also be used in the context of the Sliced Wasserstein distance so as to define $SW_{\D}(\mu,\nu)$, $RISW(\mu,\nu)$ for $\mu,\nu \in \Pm(\R^{p})\times \Pm(\R^{q})$ with $p\neq q$. Please note that from a purely computational point of view, complexities of these discrepancies are the same as $SGW$ and $RISGW$ when $\mu$ and $\nu$ are discrete measures with the same
number of atoms $n=m$, and uniform weights. Also, unlike $SGW$ and $RISGW$, these discrepancies are not translation invariant. This approach was studied in \cite{robust_sliced} for the case $p=q$ in the context of point cloud registration. More details are given in Section \ref{proof:robust_sw}.
\end{Remark}

\paragraph{Computational aspects} In the following $\mu,\nu$ are \emph{discrete measures} with the \emph{same}
number of atoms $n=m$, and \emph{uniform weights}, \ie\ $\mu=\frac{1}{n}\sum_{i=1}^{n}\delta_{\xbf_i},\nu=\frac{1}{n}\sum_{i=1}^{n}\delta_{\ybf_i}$ with $\xbf_i \in \R^{p},\ybf_i \in \R^{q}$ so that we can apply Theorem \ref{sovable_gw}. Similarly to Sliced Wasserstein, $\sgw$ can be approximated by replacing the integral by a finite sum over randomly drawn directions. In practice we compute $\sgw$ as the average of $GW_2^2$
projected on $L$ directions $\theta$. While the sum in \eqref{sgw} can be implemented with libraries such as Pykeops \cite{charlier2018keops}, Theorem~\ref{sovable_gw} shows that computing~\eqref{sgw} is achieved by an $O(n\log(n))$ sorting of the projected samples and by finding the optimal
permutation which is either the identity or the anti identity. Moreover computing the cost is $O(n)$ for each projection as explained previously. Thus the overall complexity of computing $\sgw$ with $L$ projections is $O(Ln(p+q)+Ln\log(n)+Ln)=O(Ln(p+q+\log(n)))$ when taking into account the cost of projections. The pseudo-code for $\sgw$ is presented in Algorithm \ref{alg:sgw} Note that these computations can be efficiently implemented in parallel on GPUs with modern toolkits such as Pytorch \cite{paszke2017automatic}. 

The complexity of solving $\risgw$ is higher but one can rely on efficient algorithms for optimizing on the Stiefel manifold \cite{absil2009optimization} that have been implemented in several toolboxes \cite{townsend2016pymanopt,meghwanshi2018mctorch}. Note that each iteration in a manifold gradient decent requires the solution of $\sgw$, that can be computed and differentiated efficiently with the frameworks described above. Moreover, the optimization over the Stiefel manifold does not depend on the number of points but only on the dimension $d$ of the problem so that overall complexity is $n_{\text{iter}}(Ln(d+\log(n))+d^{3})$, which is affordable for small $d$. In practice, we observed in the numerical experiments that RISGW converges in few iterations (the order of $10$).

\begin{algorithm}[t]
\caption{Sliced Gromov-Wasserstein for discrete measures \label{alg:sgw}}
\begin{algorithmic}[1]
    \State $p<q$, $\mu= \frac{1}{n} \sum_{i=1}^{n} \delta_{\xbf_{i}} \in \Pm(\R^{p})$ and $\nu=  \frac{1}{n} \sum_{i=1}^{n} \delta_{\ybf_{j}} \in \Pm(\R^{q})$
    \State $ \forall i, \xbf_{i}\leftarrow \D(\xbf_{i}) $, sample uniformly $(\thetab_{l})_{l =1,\dots,L} \in \Sp^{q-1}$ 
    \For {$l=1,\dots,L$}
    \State Sort $(\langle \xbf_{i}, \thetab_{l}\rangle)_{i}$ and $(\langle \ybf_{j}, \thetab_{l}\rangle)_{j}$ in increasing order
    \State Solve \eqref{eq:qapgw} for reals $(\langle \xbf_{i}, \thetab_{l}\rangle)_{i}$ and $(\langle \ybf_{j}, \thetab_{l}\rangle)_{j}$, $\sigma_{\thetab_{l}}$ (Anti-Id or Id is a solution)        
    \EndFor
    \State return {\small$\frac{1}{n^{2}L} \sum\limits_{l=1}^{L} \sum\limits_{i,k=1}^{n} \big(\langle \xbf_{i}{-}\xbf_{k},\thetab_{l} \rangle^{2}{-}\langle \ybf_{\sigma_{\thetab_{l}}(i)}{-}y_{\sigma_{\thetab_{l}}(k)},\thetab_{l} \rangle^{2}\big)^{2}$}
\end{algorithmic}
\end{algorithm}

\subsection{Experimental results}
\label{sec:expe}

{The goal of this section is to validate $\sgw$ and its rotational invariant on both  quantitative (execution time) and qualitative sides. All the experiments were conducted on a standard  computer equipped with a NVIDIA Titan X GPU.}

\begin{figure}[t]
  \centering
  \includegraphics[width=.5\linewidth]{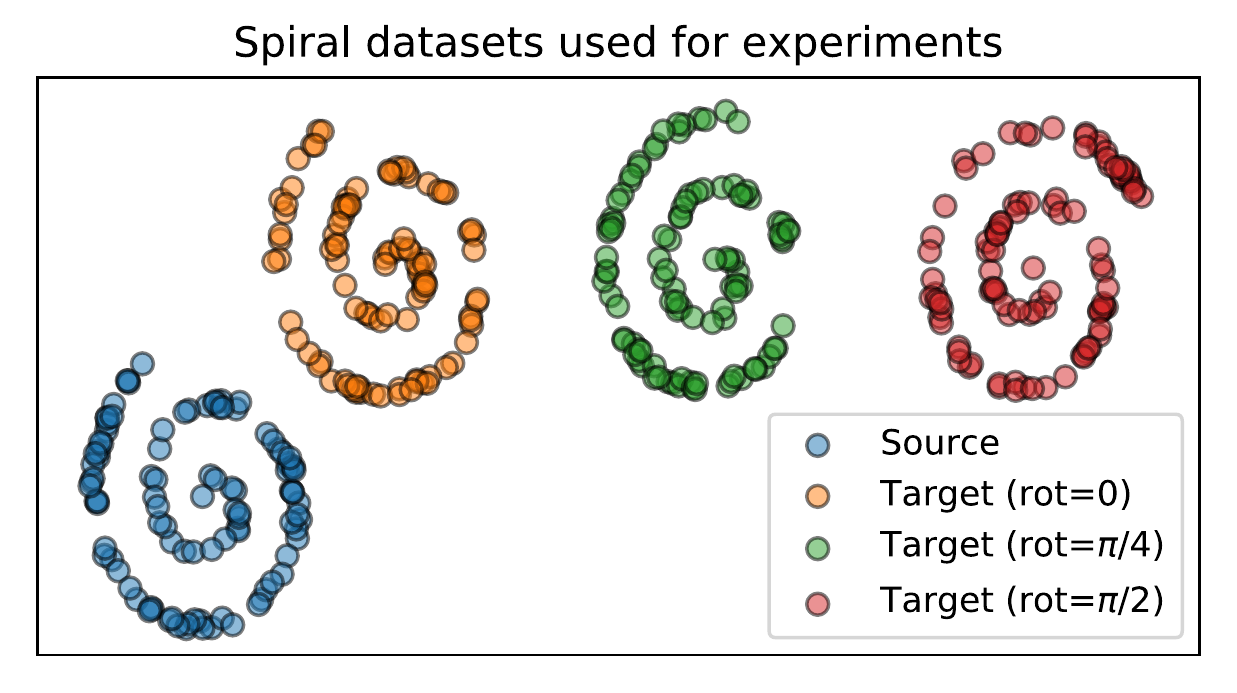}
  \includegraphics[width=.35\linewidth]{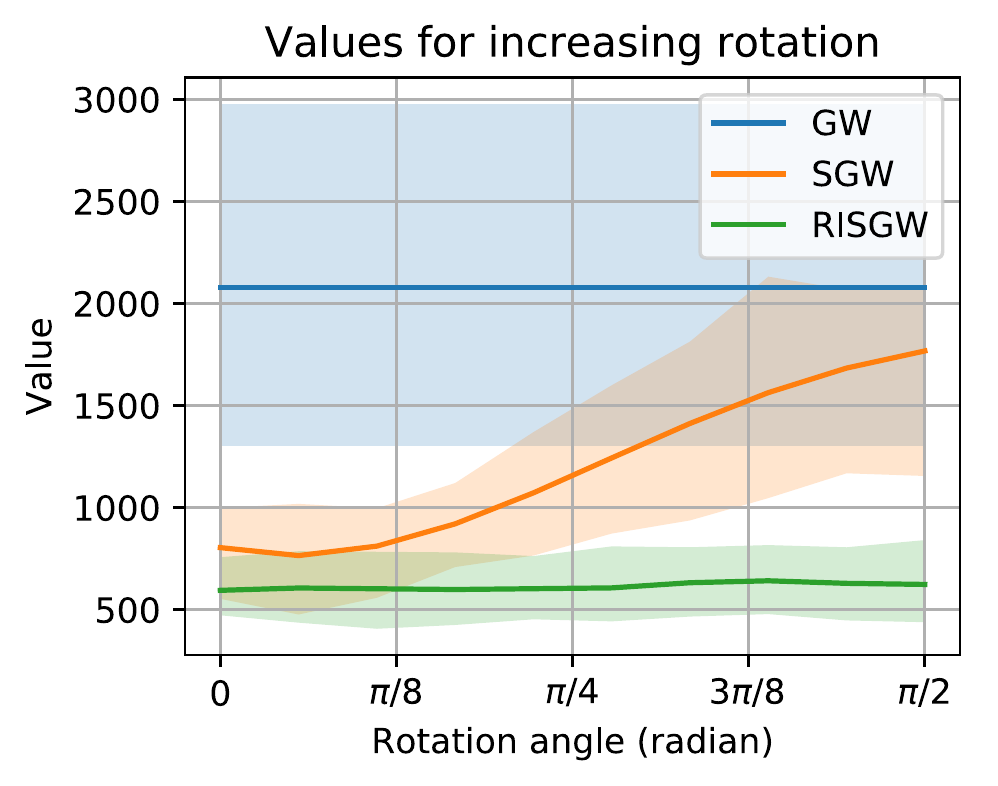}
  \caption{Illustration of $SGW$, $RISGW$ and $GW$ on spiral dataset for varying rotations on discrete 2D spiral dataset. {\bf (left)} Examples of spiral distributions for source and target with different rotations. {\bf (right)} Average value of $SGW$, $GW$ and $RISGW$ with $L=20$ as a function of rotation angle of the target. Colored areas correspond to the 20\% and 80\% percentiles. }
  \label{fig:spiral_example}
\end{figure}

\paragraph{SGW and RISGW on spiral dataset} As a first example, we use the spiral dataset from sklearn toolbox and compute $\gw$, $\sgw$ and $\risgw$ on $n=100$ samples with $L=20$  sampled lines for different rotations of the target distribution. The optimization of $\D$ on the Stiefel manifold is performed using Pymanopt \cite{townsend2016pymanopt} with automatic differentiation with autograd \cite{maclaurin2015autograd}.
Some examples of empirical distributions are available in Figure \ref{fig:spiral_example} (left). The mean value of $\gw$, $\sgw$ and $\risgw$ are reported on Figure \ref{fig:spiral_example} (right) where we can see that $\risgw$ is invariant to rotation as $\gw$ whereas $\sgw$ with $\D=\mathbf{I_p}$ is clearly not.

\paragraph{Runtimes comparison \label{runtimes_comparaison}}
  
We perform a comparison between runtimes of $\sgw$, $\gw$ and its entropic
counterpart \cite{solomon_entropic_2016}. We calculate these
distances between two 2D random measures of $n \in \{1e2,...,1e6\}$ points.
For $\sgw$, the number of projections $L$ is taken from $\{50,200\}$. We use the Python
Optimal Transport (POT) toolbox~\cite{flamary2017pot} to compute $\gw$ distance on CPU. For
entropic-$\gw$ we use the Pytorch GPU implementation from \cite{bunne_gan} that uses the log-stabilized Sinkhorn algorithm
\cite{Schmitzer_stab_sinkhorn} with a 
regularization parameter $\varepsilon=100$.
For $\sgw$, we implemented both a Numpy implementation and a Pytorch
implementation running on GPU.
Figure
\ref{runtimes} illustrates the results. 

\begin{figure}[t]
  \centering
  \includegraphics[width=0.8\linewidth]{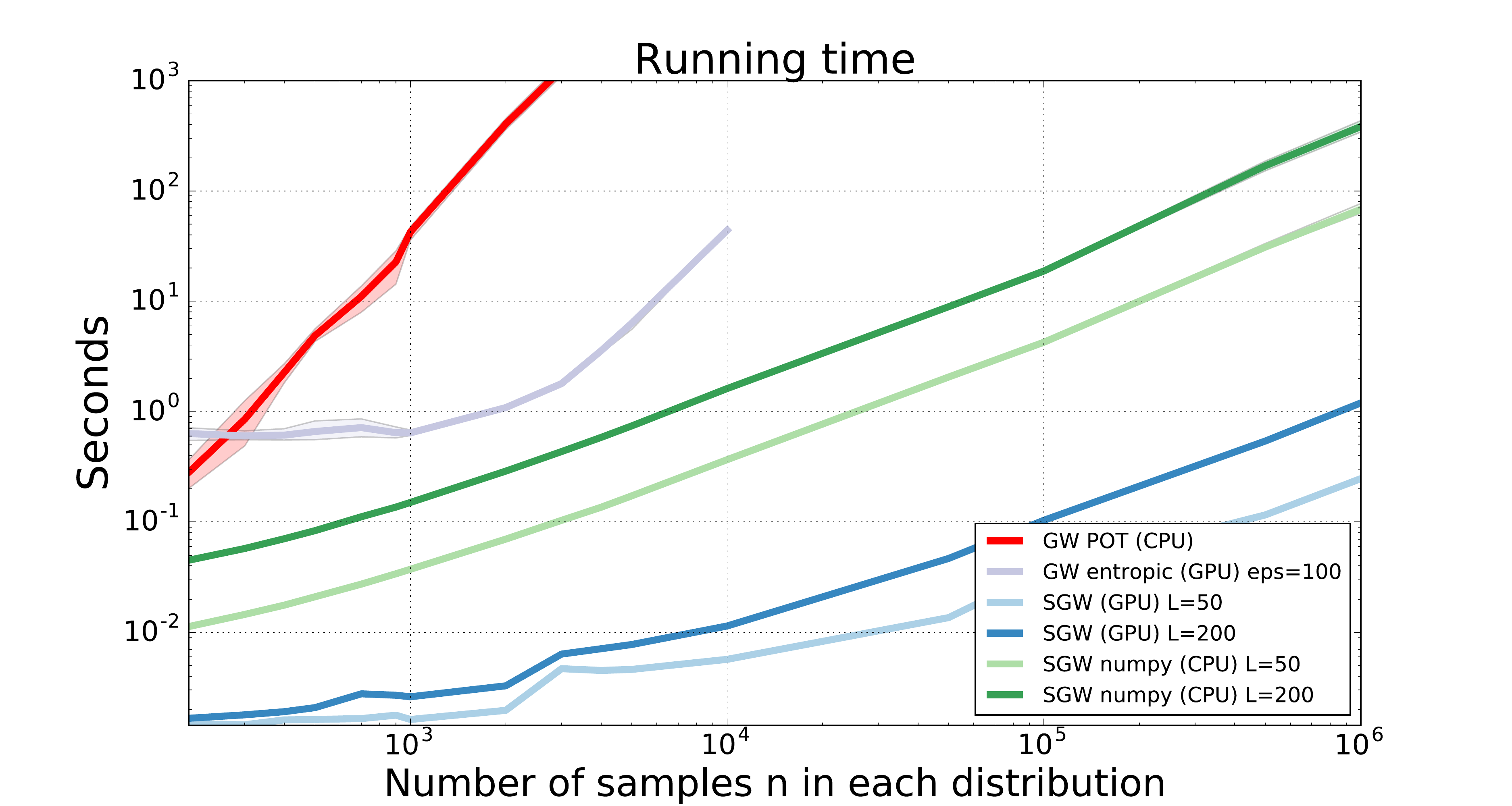}
  \caption{Runtimes comparison between $\sgw$, $\gw$, entropic-$\gw$ between two 2D random distributions with varying number of points from $0$ to $10^6$ in log-log scale. The time includes the calculation of the pair-to-pair distances. \label{runtimes}} 
\end{figure}

$\sgw$ is the only method which scales
\textit{w.r.t.} the number of samples and allows computation for $n>10^4$. 
While entropic-$\gw$
uses GPU, it is still slow because the
gradient step size in the algorithm is inversely proportional to the regularization
parameter \cite{peyre2016gromov} which highly curtails the convergence of the
method.
On CPU, $\sgw$ is two orders of magnitude faster than $\gw$. On GPU, $\sgw$ is five orders of magnitude faster than $\gw$
and four orders of magnitude faster than entropic $\gw$. Still the slope of both $\gw$ implementations are surprisingly good, probably due to their maximum iteration stopping criteria. In this experiment we were able to compute $SGW$ between $10^6$ points in 1s. Finally note that we recover exactly a quasi-linear slope, corresponding to the $O(n\log(n))$ complexity for $\sgw$.

\paragraph{Meshes comparison}

\begin{figure}
\centering
\includegraphics[width=0.6\linewidth]{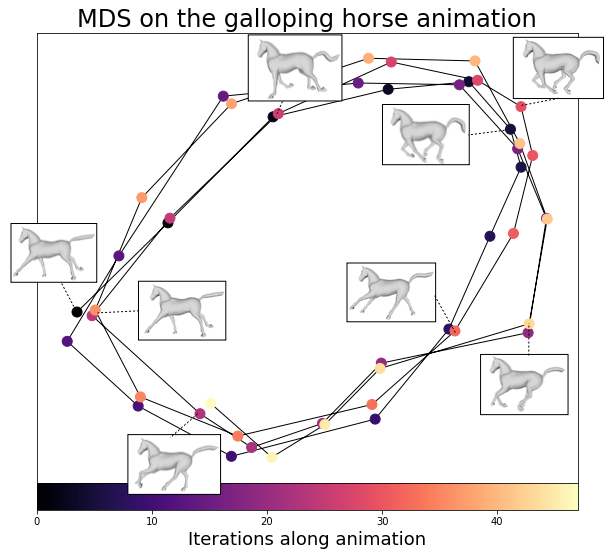}
\caption{Each sample in this Figure corresponds to a mesh and is colored by the corresponding time iteration. One can see that the cyclical nature of the motion is recovered.}\label{fig:gallop}
\end{figure}

In the context of computer graphics, $\gw$ can be used to quantify the correspondances between two meshes. A direct interest is found in shape retrieval, search, exploration or organization of databases. In order to recover experimentally some of the desired properties of the $\gw$ distance, we reproduce an experiment originally conducted in~\cite{rustamov2013map} and presented in~\cite{solomon_entropic_2016} with the use of entropic-$\gw$.

From a given time series of 45 meshes representing a galloping horse, the goal is to conduct a multi-dimensional scaling (MDS) of the pairwise distances, computed with $\sgw$ between the meshes, that allows ploting each mesh as a 2D point. As one can observe in Figure~\ref{fig:gallop}, the cyclical nature of this motion is recovered in this 2D plot, as already illustrated in~\cite{solomon_entropic_2016} with the $\gw$ distance.  Each horse mesh is composed of approximately $9,000$ vertices. 
The average time for computing one distance is around 30 minutes using the POT implementation, which makes the computation of the full pairwise distance matrix impractical (as already mentioned in~\cite{solomon_entropic_2016}). In contrast, our method only requires 25 minutes to compute the full distance matrix, with an average of 1.5s per mesh pair, using our CPU implementation. This clearly highlights the benefits of  our method in this case. 

\paragraph{SGW as a generative adversarial network (GAN)  loss}
In a recent paper~\cite{bunne_gan}, Bunne and colleagues propose a new variant of GAN between incomparable spaces, {\em i.e.} of different dimensions. In contrast with classical divergences such as Wasserstein, they suggest to capture the intrinsic relations between the samples of the target probability distribution by using $\gw$ as a loss for learning. More formally, this translates into the following optimization problem over a desired generator $G$:
\begin{equation}
G^* = \argmin \gw_{2}^{2}(c_{X},c_{G(Z)},\mu,\nu_G),
\end{equation}
where $Z$ is a random noise following a prescribed low-dimensional distribution (typically Gaussian), $G(Z)$ performs the uplifting of $Z$ in the desired dimensional space, and $c_{G(Z)}$ is the corresponding metric. $\mu$  and $\nu_G$ correspond respectively to the target and generated distributions, that we might want to align in the sense of $\gw$. Following the same idea, and the fact that sliced variants of the Wasserstein distance have been successfully used in the context of GAN~\cite{cvpr_sliced_gan}, we propose to use $\sgw$ instead of $\gw$ as a loss for learning $G$.
As a proof of concept, we reproduce the simple toy example of~\cite{bunne_gan}. Those examples consist in generating 2D or 3D distributions from target distributions either in 2D or 3D spaces (Figure~\ref{fig:gan} and Figure~\ref{fig:gan2}). These distributions are formed by $3,000$ samples. We do not use their adversarial metric learning as it might confuse the objectives of this experiment and as it is not required for these low dimensional problems~\cite{bunne_gan}. The generator $G$ is designed as a simple multilayer perceptron with 2 hidden layers of respectively 256 and 128 units with ReLu activation functions, and one final layer with 2 or 3 output neurons (with linear activation) as output, depending on the experiment. The Adam optimizer is used, with a learning rate of $2.10^{-4}$ and $\beta_1=0.5,\beta_2=0.99$. The convergence to a visually acceptable solution takes a few hundred epochs. Contrary to~\cite{bunne_gan}, we directly back-propagate through our loss, without having to explicit a coupling matrix and resorting to the envelope Theorem.  Compared to~\cite{bunne_gan} and the use of entropic-$\gw$ , the time per epoch is more than one order of magnitude faster, as expected from previous experiment.

   \begin{figure}[!t]
   \centering
      \includegraphics[width=0.17\textwidth]{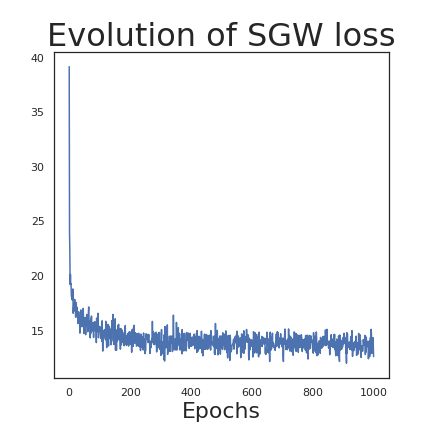}
      \includegraphics[width=0.16\textwidth]{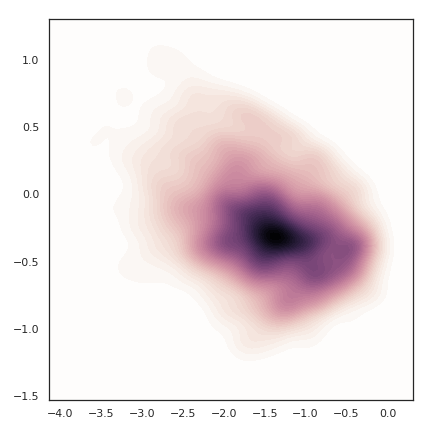}
      \includegraphics[width=0.16\textwidth]{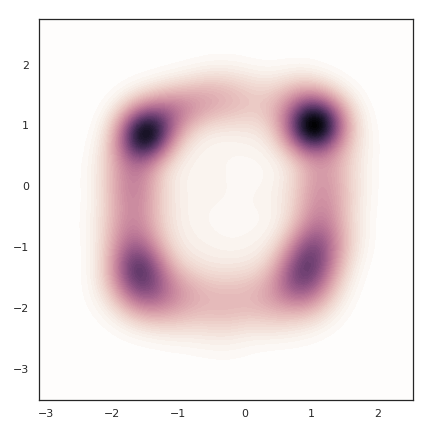}
      \includegraphics[width=0.16\textwidth]{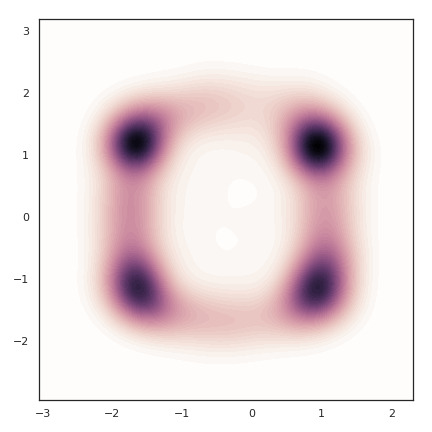}
      \includegraphics[width=0.16\textwidth]{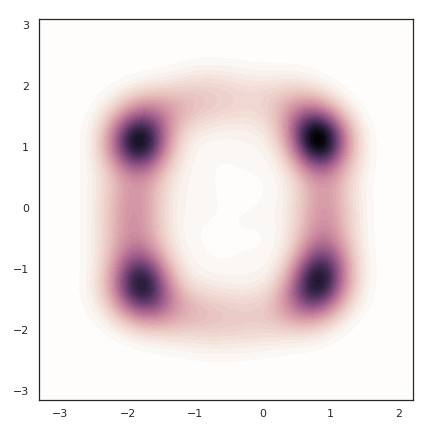}
      \includegraphics[width=0.16\textwidth]{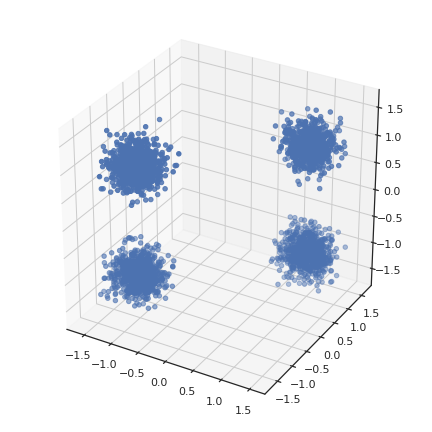}
     \caption{Using $\sgw$ in a GAN loss. First image shows the loss value along epochs. The next 4 images are produced by sampling the generated distribution ($3,000$ samples, plotted as a continuous density map). Last image shows the target 3D distribution.}
     \label{fig:gan}
   \end{figure}

  \begin{figure}[t]
\centering
\resizebox{0.8\textwidth}{!}{
\begin{tabular}{ccc|c}
  \includegraphics[width=0.24\textwidth]{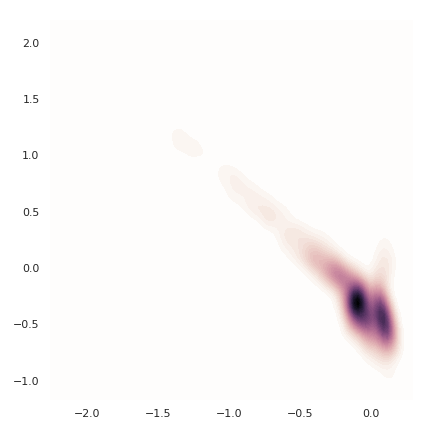}&
  \includegraphics[width=0.24\textwidth]{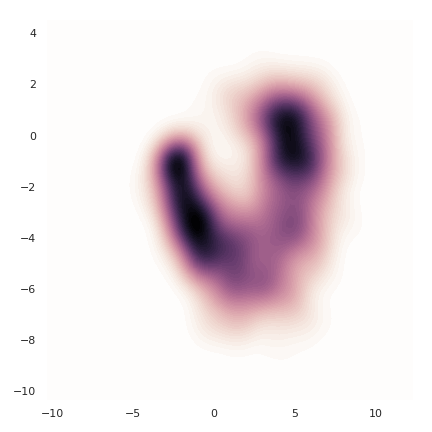}&
  \includegraphics[width=0.24\textwidth]{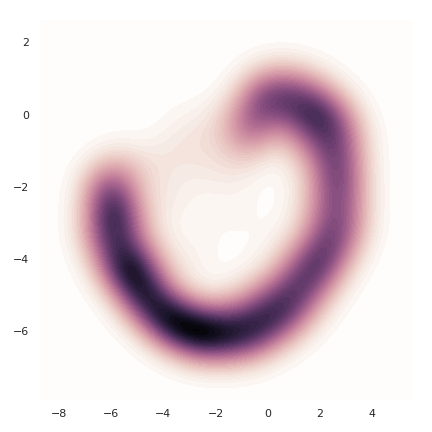}&
  \includegraphics[width=0.24\textwidth]{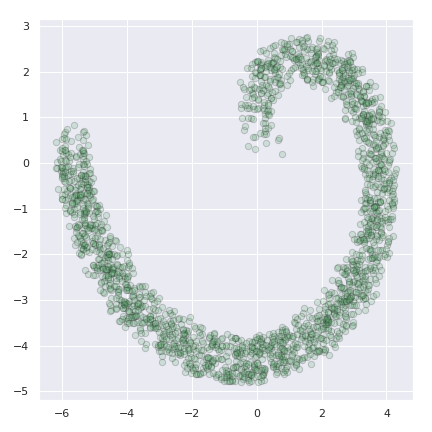}\\
  \includegraphics[width=0.24\textwidth]{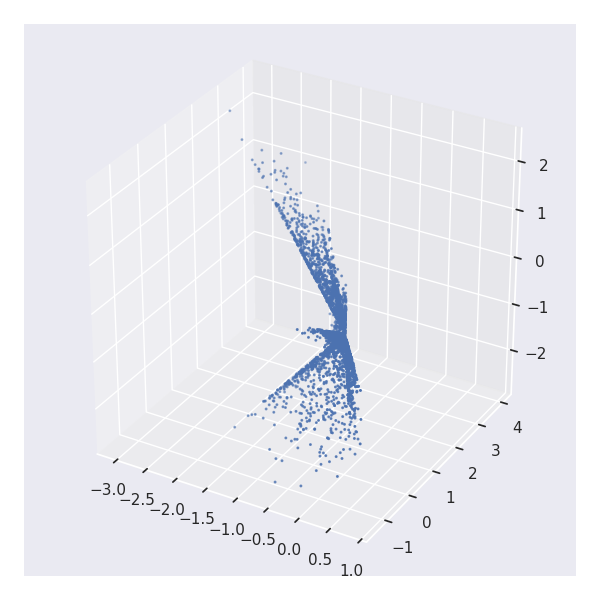}&
  \includegraphics[width=0.24\textwidth]{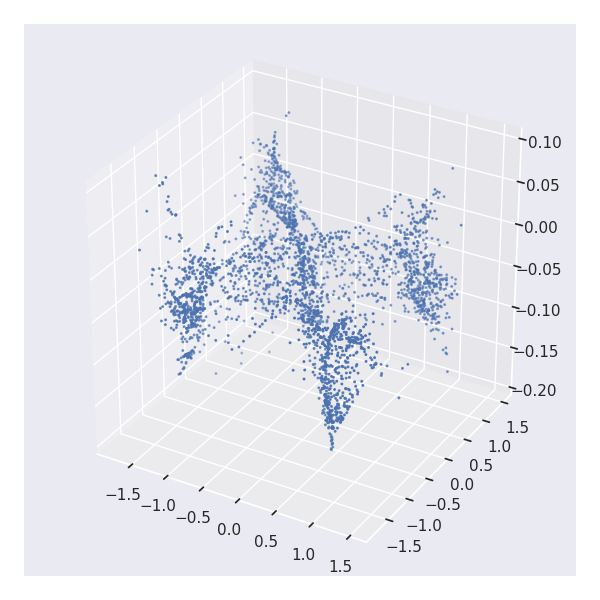}&
  \includegraphics[width=0.24\textwidth]{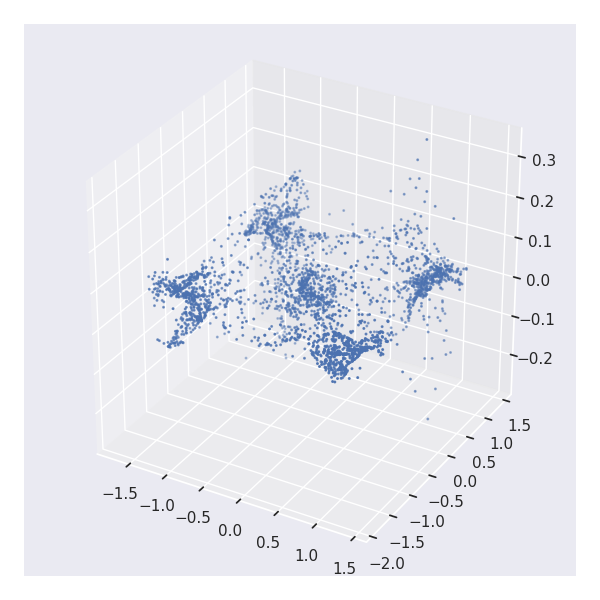}&
  \includegraphics[width=0.24\textwidth]{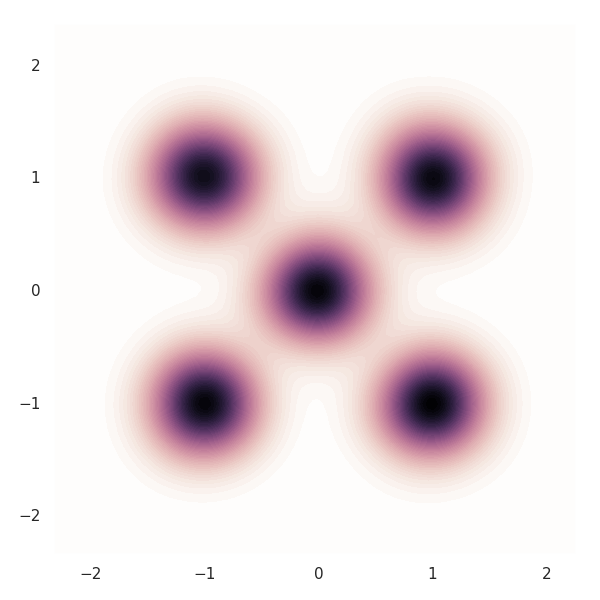}\\
     \end{tabular}}              
 \caption{Using $\sgw$ in a GAN loss. The three rows depicts three different examples. First row is 2D (Generator) to 2D (Target) , Second 3D to 2D. First column is initialization, second one is at $100$ Epochs, third one at $1000$. Last column depicts the target distribution.}
 \label{fig:gan2}
\end{figure}

\subsection{Discussion and conclusion}
In this section we establish a new result about Quadratic Assignment Problem when
matrices are squared euclidean distances on the real line, and use it to state a closed-form
expression for $\gw$ between monodimensional measures. Building upon this result
we define a new similarity measure, called the Sliced Gromov-Wasserstein and a
variant Rotation-invariant $\sgw$ and prove that both conserve various properties of the $\gw$
distance while being cheaper to compute and applicable in a large-scale setting.
Notably $\sgw$ can be computed in 1 second for distributions with 1 million samples each.  This paves the way for novel promising machine learning applications of optimal transport between metric spaces.

Yet, several questions are raised in this work. Notably, our method perfectly
fits the case when the two distributions are given empirically through samples
embedded in 
an Hilbertian space, that allows for projection on the real line.
This is the case in most of the machine learning applications that use the
Gromov-Wasserstein distance.
However, when only distances between samples are available, the projection
operation can not be carried anymore, while the computation of $\gw$ is still
possible. One can argue 
that it is possible to embed either isometrically those distances into a
Hilbertian space, or at least with a low distortion, and then apply the
presented technique. Our future line of work 
considers this option, as well as a possible direct reasoning on the distance
matrix. For example, one should be able to consider geodesic paths (in a graph
for instance) as the equivalent
appropriate geometric object related to the line. This constitutes the direct
follow-up of this work, as well as a better understanding of the accuracy of the
estimated discrepancy with respect to 
the ambiant dimension and the projections number.

\section{Regularity \& formulations of GW problems in Euclidean spaces}

\subsection{Introduction}

In the previous part we built upon the special case of 1D probability discrete measures. We consider in this section general probability measures $\mu \in \P(\R^{p})$ and $\nu \in \P(\R^{q})$ supported on Euclidean spaces $\R^{p}$ and $\R^{q}$ with (possibly) $p \neq q$. The corresponding inner products are denoted by $\scalar{\xbf}{\xbf'}{p}$ (\textit{resp.} $\scalar{\ybf}{\ybf'}{q}$) for vectors in $\R^{p}$ (\textit{resp.} $\R^{q}$) associated with the Euclidean norms which are denoted both by $\|.\|_{2}$ to avoid overloading notations. 

We tackle in this section the problem of the \emph{regularity} of the optimal transport plans of $\gw_2$ in the Euclidean setting. More precisely we consider the following problem:

\begin{prob}
\label{prob_ouvert}
Let $\mu \in \P(\R^{p}),\nu \in \P(\R^{q})$. Can we find an \emph{deterministic} transport map to the $\gw_2$ problem? More precisely does the following statement hold?
\begin{equation*}
\exists T: \R^{p} \rightarrow \R^{q} \text{ such that } T\#\mu=\nu \text{ and } \gamma_{T}=(id \times T) \#\mu \text{ is optimal for the } \gw_2 \text{ problem:}
\end{equation*} 
\begin{equation}
\inf_{\pi \in \couplingset(\mu,\nu)} \int \int |c_\Xcal(\xbf,\xbf')-c_\Ycal(\ybf,\ybf')|^{2} \dr \pi(\xbf,\ybf) \dr \pi(\xbf',\ybf')
\end{equation}
\end{prob}

The Euclidean setting is motivated by the linear transportation theory where the first regularity solution for Optimal Transport was proved by Brenier for probability measures in $\R^{p}$ (see Chapter \ref{cha:ot_general}). In this case we recall that the optimal transport plan $\gamma_{T}$ between two probability measure $(\mu,\nu) \in \P(\R^{p}) \times \P(\R^{p})$, with $\mu$ ``well behaved'' and with cost $c(\xbf,\ybf)=\|\xbf-\ybf\|_2^{2}$, is unique and supported by a map $T$ such that  $\gamma_{T}=(id \times T) \#\mu$. The purpose of this section is to show that the Euclidean setting is also quite suited for the $\gw$ case. The problem of regularity of $\gw$ optimal transport plans was first addressed by Sturm in his seminal work about $\gw$ \cite[Challenges 3.6]{Sturm2012}. More precisely Sturm asks the following question: are there some ``nice'' spaces in which we are able to prove Brenier's like results for $\gw$? We give in this section some partial answers to this query by considering the two cases where $c_{\Xcal},c_{\Ycal}$ are defined by the inner products or by the squared Euclidean distances in each space.

The main results of this section is to derive equivalent formulations of the $\gw$ in these two cases. More precisely we show that solving $\gw$ is equivalent to jointly solve a linear transportation problem and a ``alignment'' problem. As such the regularity of $\gw$ optimal plans can be observed in the light of these ``dual problems''. As another consequence it allows also to derive algorithmic solutions for the $\gw$ problems in Euclidean spaces based on simple Block Coordinate Descent procedures. This section is organized as follow: 

\begin{enumerate}[label=(\roman*)]
\item In Section \ref{sec:inner_product_case} we consider the case where $c_{\Xcal},c_{\Ycal}$ are defined by the inner products in each space. Providing that the source probability measure is regular with respect to the Lebesgue measure we give a sufficient condition for the existence of a \emph{deterministic} optimal transport plan, \ie\ supported on a deterministic function $T$. We show that this function is of the form $\nabla u \circ \Pbf$ where $u$ is a convex function and $\Pbf$ is a linear application which can be seen as a global transformation ``realigning'' the probability measures in the same space. We use this formulation to show that the $\gw$ distance between 1D probability measures admits a closed-form solution. More precisely we show that the optimal coupling is determined by the cumulative and the anti-cumulative distribution functions of the source distribution. We further discuss the difference between the linear OT problem $\wass_2$ and the $\gw$ problem when the target measure is a perturbed version of the source measure. 
\item In Section \ref{sec:sq_eclidean} we consider $c_{\Xcal},c_{\Ycal}$ as the squared Euclidean distances in each space. We show that this setting is equivalent to a maximization of a convex function on $\couplingset(\mu,\nu)$. We use the Fenchel-Legendre duality in the space of measures to derive a problem equivalent to that of Gromov-Wasserstein. We further analyse it and show that the regularity of optimal transport plans is more complicated to state than in the previous case.
\item In Section \ref{sec:numerical} we use the previous formulations to derive efficient numerical solutions for the $\gw$ problem based on Block Coordinate Descent. We show that these procedures compare favourably with respect to standard solvers such as Conditional Gradient (see Chapter \ref{cha:fgw}) or with entropic regularization (see Chapter \ref{cha:ot_general}).
\item We conclude in Section \ref{sec:gm} by considering the \emph{Gromov-Monge} problem in Euclidean spaces, which is the exact counterpart of the Monge problem of linear transportation but in the Gromov-Wasserstein context. We discuss the special case of the Gromov-Monge between Gaussian measures and we show that this problem admits a closed-form solution when restricting to \emph{linear push-forward}. We give geometric interpretations of this result and compare the optimal push-forward with the standard optimal map of linear OT theory in the case of Gaussian measures (see Chapter \ref{cha:ot_general}).
\end{enumerate}

This section is more prospective and somehow opens more doors than it closes. We hope that it will paves the path for further interesting works on this topic and believe that it could help for bridging the gap between the understanding of the linear OT problem and the Gromov-Wasserstein theory. We have chosen to include in the main text some proofs that are reasonably long and that we consider interesting for the overall understanding of the section. The other proofs, which require more space, are postponed to Section \ref{sec:proofs_chap_gw}.

\subsection{The inner product case}
\label{sec:inner_product_case}
To encompass the two cases described in the introduction we consider the following lemma (a proof can be found in Section \ref{sec:proof_reduc}):

\begin{lemma}
\label{calculation_gw}
For a coupling $\pi \in \couplingset(\mu,\nu)$ we note: 
\begin{equation}
\gwloss_2(c_{\Xcal},c_{\Ycal},\pi)\stackrel{def}{=} \int_{\Xcal \times \Xcal} \int_{\Ycal \times \Ycal} |c_{\Xcal}(\xbf,\xbf')-c_{\Ycal}(\ybf,\ybf')|^2 \dr \pi(\xbf,\ybf) \dr \pi(\xbf',\ybf')
\end{equation} 
the $\gw_2$ loss.
Suppose that there exist scalars $a,b,c$ such that $c_{\Xcal}(\xbf,\xbf')=a\|\xbf\|_{2}^{2}+b\|\xbf'\|_{2}^{2}+c\scalar{\xbf}{\xbf'}{p}$ and $c_{\Ycal}(\ybf,\ybf')=a\|\ybf\|_{2}^{2}+b\|\ybf'\|_{2}^{2}+c\scalar{\ybf}{\ybf'}{q}$. Then:
\begin{equation}
\gwloss_2(c_{\Xcal},c_{\Ycal},\pi) = C_{\mu,\nu} -2 Z(\pi)
\end{equation}
where $C_{\mu,\nu}=\int c_{\Xcal}^{2} \dr \mu \dr \mu + \int c_{\Ycal}^{2} \dr \nu \dr \nu-4ab \int \|\xbf\|_{2}^{2} \|\ybf\|_{2}^{2} d\mu(\xbf) d\nu(\ybf)$ and:
\begin{equation}
\begin{split}
Z(\pi) &= (a^{2}+b^{2}) \int \|\xbf\|_{2}^{2} \|\ybf\|_{2}^{2} \dr\pi(\xbf,\ybf)+ c^{2} \|\int \ybf \xbf^{T} \dr\pi(\xbf,\ybf)\|_{\F}^{2} \\
&+(a+b)c\int \big[\|\xbf\|_{2}^{2} \scalar{\E_{Y\sim\nu}[Y]}{\ybf}{q} + \|\ybf\|_{2}^{2}\scalar{\E_{X\sim \mu}[X]}{\xbf}{p} \dr\pi(\xbf,\ybf)\big] \\
\end{split}
\end{equation}

\end{lemma}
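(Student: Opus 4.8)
The plan is to expand the square in the definition of $\gwloss_2$ and to separate the contributions that depend on the coupling $\pi$ from those that depend only on the marginals. Writing $|c_{\Xcal}(\xbf,\xbf')-c_{\Ycal}(\ybf,\ybf')|^2 = c_{\Xcal}(\xbf,\xbf')^2 + c_{\Ycal}(\ybf,\ybf')^2 - 2\,c_{\Xcal}(\xbf,\xbf')c_{\Ycal}(\ybf,\ybf')$, I first observe that the two squared terms carry no dependence on $\pi$: since $c_{\Xcal}$ is a function of $(\xbf,\xbf')$ alone, integrating the free variables $(\ybf,\ybf')$ against $\pi$ (which has total mass one and $\xbf$-marginal $\mu$) collapses $\int\int c_{\Xcal}^2\,\dr\pi\,\dr\pi$ to $\int c_{\Xcal}^2\,\dr\mu\,\dr\mu$, and symmetrically $\int\int c_{\Ycal}^2\,\dr\pi\,\dr\pi = \int c_{\Ycal}^2\,\dr\nu\,\dr\nu$. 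Hence all coupling dependence is contained in the cross term $-2\int\int c_{\Xcal}c_{\Ycal}\,\dr\pi\,\dr\pi$, and it remains only to expand this single integral. Throughout, Fubini's theorem together with the finite-moment hypotheses (which ensure $c_{\Xcal}\in L^2(\mu\otimes\mu)$ and $c_{\Ycal}\in L^2(\nu\otimes\nu)$) justify the interchanges of integration.

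Next I would expand the product $c_{\Xcal}(\xbf,\xbf')\,c_{\Ycal}(\ybf,\ybf')$ using the assumed bilinear-plus-norm form, producing nine monomials, and integrate each against $\dr\pi(\xbf,\ybf)\,\dr\pi(\xbf',\ybf')$. These fall into three groups. The two ``crossed-norm'' monomials $ab\|\xbf\|_2^2\|\ybf'\|_2^2$ and $ab\|\xbf'\|_2^2\|\ybf\|_2^2$ factor completely through the marginals, each contributing $ab\int\|\xbf\|_2^2\|\ybf\|_2^2\,\dr\mu(\xbf)\,\dr\nu(\ybf)$, so together they yield a $\pi$-independent quantity $2ab\int\|\xbf\|_2^2\|\ybf\|_2^2\,\dr\mu\,\dr\nu$ that I will absorb into $C_{\mu,\nu}$. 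The two ``aligned-norm'' monomials $a^2\|\xbf\|_2^2\|\ybf\|_2^2$ and $b^2\|\xbf'\|_2^2\|\ybf'\|_2^2$ each reduce — after integrating out the free pair, using total mass one and the symmetry of the double integral under swapping the two copies of $\pi$ — to $\int\|\xbf\|_2^2\|\ybf\|_2^2\,\dr\pi$, producing the $(a^2+b^2)$ term of $Z(\pi)$.

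The remaining five monomials come from the inner-product factors. I would treat the genuinely quadratic-in-$\pi$ term $c^2\langle\xbf,\xbf'\rangle_p\langle\ybf,\ybf'\rangle_q$ by writing it in coordinates as $c^2\sum_{i,j} x_i x_i' y_j y_j'$ and using the separability of the product measure $\pi\otimes\pi$, so that $\int x_i x_i' y_j y_j'\,\dr\pi\,\dr\pi = \big(\int x_i y_j\,\dr\pi\big)^2$; recognizing $\int x_i y_j\,\dr\pi$ as the $(j,i)$ entry of the matrix $\int\ybf\xbf^T\,\dr\pi$ identifies the sum as $\|\int\ybf\xbf^T\,\dr\pi\|_{\F}^2$, the $c^2$ term of $Z(\pi)$. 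For the four mixed ``norm$\times$inner-product'' monomials (the $ac$ and $bc$ terms), integrating out the pair appearing only inside the inner product replaces the free vector by its mean, namely $\int\ybf'\,\dr\pi(\xbf',\ybf') = \E_{Y\sim\nu}[Y]$ or $\int\xbf'\,\dr\pi(\xbf',\ybf') = \E_{X\sim\mu}[X]$; the $ac$ and $bc$ monomials then merge pairwise into the $(a+b)c$ term of $Z(\pi)$. Collecting the three groups gives $\int\int c_{\Xcal}c_{\Ycal}\,\dr\pi\,\dr\pi = Z(\pi) + 2ab\int\|\xbf\|_2^2\|\ybf\|_2^2\,\dr\mu\,\dr\nu$, and substituting back yields $\gwloss_2(c_{\Xcal},c_{\Ycal},\pi) = C_{\mu,\nu} - 2Z(\pi)$ with $C_{\mu,\nu}$ exactly as stated. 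The computation is elementary; the only points requiring care are keeping track of which monomials survive as genuinely $\pi$-dependent and correctly reducing the quadratic tensor term to a Frobenius norm, so I expect the main (mild) obstacle to be organizing these nine integrals cleanly rather than any conceptual difficulty.
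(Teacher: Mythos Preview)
Your proposal is correct and follows essentially the same approach as the paper: expand the square, note that the two pure-square terms depend only on the marginals, then expand the nine monomials of $c_{\Xcal}c_{\Ycal}$ and group them into the $(a^2+b^2)$, $c^2$, $(a+b)c$, and $2ab$ pieces. The only cosmetic difference is that the paper derives the identity $\int\langle\xbf,\xbf'\rangle_p\langle\ybf,\ybf'\rangle_q\,\dr\pi\,\dr\pi = \|\int \ybf\xbf^T\,\dr\pi\|_{\F}^2$ via trace manipulations ($\ybf^T\Pbf\xbf = \tr(\ybf^T\Pbf\xbf) = \tr(\Pbf\xbf\ybf^T)$ and cyclicity), whereas you do it coordinate-wise; both are equally valid and equally short.
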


In this section we study the $\gw$ problem with $c_{\Xcal}=\scalar{\xbf}{\xbf'}{p}$ and $c_{\Ycal}=\scalar{\ybf}{\ybf'}{q}$. This corresponds to $a,b=0$ and $c=1$ case of Lemma \ref{calculation_gw}. With a small abuse of notation we will denote by $\Pbf \in \R^{q \times p}$ both the linear application $\Pbf : \R^{p} \rightarrow \R^{q}$ and its associated matrix. Moreover we will often make no distinction between a vector $\xbf \in \R^{p}$ and the matrix associated to $\xbf \in \R^{p\times 1}$ such that $\xbf^{T} \in \R^{1 \times p}$.  The next theorem gives a equivalent formulation of $\gw_2$ in this context:

\begin{theo}[Equivalence of GW for the inner product case]
\label{maintheorem}

Let $\mu \in \P(\R^{p}),\nu \in \P(\R^{q})$ with $\int \|\xbf\|_{2}^{4}\dr\mu(x)<+\infty,\int \|\ybf\|_{2}^{4}\dr\nu(\ybf)<+\infty$. Suppose without loss of generality that $p\geq q$ and let: 
\begin{equation}
F_{p,q}\stackrel{def}{=}\{\Pbf \in \R^{q\times p}  | \ \|\Pbf\|_{\F} = \sqrt{p} \} 
\end{equation} 
Then problems:
\begin{equation}
\tag{innerGW}
\label{GWprob}
\underset{\pi \in \Pi(\mu,\nu)}{\inf} \int \int \big( \scalar{\xbf}{\xbf'}{p} - \scalar{\ybf}{\ybf'}{q} \big)^{2} \dr\pi(\xbf,\ybf)\dr\pi(\xbf',\ybf')
\end{equation}
\begin{equation}
\tag{MaxOT}
\label{InvOT}
\underset{\pi \in \Pi(\mu,\nu)}{\sup} \underset{\Pbf \in F_{p,q}}{\sup} \int \scalar{\Pbf \xbf}{\ybf}{q} \ \dr\pi(\xbf,\ybf)
\end{equation}
are equivalent. In other words, $\pi^{*} \in \couplingset(\mu,\nu)$ is an optimal solution of \eqref{GWprob} if and only if $\pi^{*}$ is an optimal solution of \eqref{InvOT}.
\end{theo}

\begin{Remark}
The condition $\int \|\xbf\|_{2}^{4}\dr\mu(\xbf)<+\infty,\int \|\ybf\|_{2}^{4}\dr\nu(\ybf)<+\infty$ suffices to prove that both \eqref{GWprob} and \eqref{InvOT} are finite and that \eqref{InvOT} admits an optimal solution $\pi^{*} \in \Pi(\mu,\nu)$ (we postponed this study to Lemma \ref{lemma:existence_compacity} in Section \ref{sec:finitesness}).
\end{Remark}

 This theorem gives another interesting formulation of the Gromov-Wasserstein problem. It proves that $\gw$ is equivalent to a linear OT problem combined with an ``alignment'' of the measures $\mu$ and $\nu$ on the same space using a linear application $\Pbf$. The set $F_{p,q}$ can be regarded as the set of matrices with fixed Schatten $\ell_2$ norms, that is $\Pbf \in F_{p,q}$ if $\|\sigma(\Pbf)\|_2= \sqrt{p}$ where $\sigma(\Pbf)$ is a vector containing the singular values of $\Pbf$. When $p=q$ any orthogonal matrix $\Obf \in \mathcal{O}(p)$ is in $F_{p,q}$ since $\|\Obf\|_{\F}=\sqrt{\tr(\Obf^{T}\Obf)}=\sqrt{\tr(\mathbf{I_p})}=\sqrt{p}$. More generally when $p>q$ any matrix in the Stiefel manifold $\Delta \in \Stief$ is an element of $F_{p,q}$ since $\Delta^{T} \Delta= \mathbf{I_p}$. Interestingly enough, the problem \eqref{InvOT} can be related to the work of Alvarez and coauthors \cite{alavarez:2019} where they proposed a linear optimal transport problem which takes into account a latent global transformation of the measures. More precisely they consider two probability measures $(\mu,\nu) \in \P(\R^{p}) \times \R^{p}$ (\ie\ $p=q$) and propose two minimize the following problem:
 \begin{equation}
 \label{eq:alvarez}
InvOT(\mu,\nu)=\min_{\pi \in \couplingset(\mu,\nu)} \min_{\|\Pbf\|_{\F}=\sqrt{p}} \int \|\Pbf \xbf- \ybf \|_2^{2} \dr \pi(\xbf,\ybf)
 \end{equation}
 If we note $\Sigmab_\mu=\int \xbf \xbf^{T} \dr \mu(\xbf)$ and we suppose that $\Sigmab_\mu=\mathbf{I_p}$ (which is called the $\mu$-whitened property in \cite{alavarez:2019}) then problem \eqref{eq:alvarez} is equivalent to \eqref{InvOT}. To see this is suffices to develop $\int \|\Pbf \xbf- \ybf \|_2^{2} \dr \pi(\xbf,\ybf)$ as:
 \begin{equation}
 \begin{split}
\int \|\Pbf \xbf- \ybf \|_2^{2} \dr \pi(\xbf,\ybf) &= \int \|\Pbf \xbf\|_2^{2} \dr \mu(\xbf) + \int \|\ybf\|_2^{2} \dr \nu(\ybf) -2 \int \scalar{\Pbf \xbf}{\ybf}{p} \dr \pi(\xbf,\ybf) \\
\end{split}
 \end{equation}
Then we can check that $\int \|\Pbf \xbf\|_2^{2} \dr \mu(\xbf)$ does not depend on $\Pbf$ since:
\begin{equation}
\begin{split}
\int \|\Pbf \xbf\|_2^{2} \dr \mu(\xbf)&=\int \xbf^{T} \Pbf^{T} \Pbf \xbf   \dr \mu(\xbf)\stackrel{*}{=}\int \tr(\xbf^{T} \Pbf^{T} \Pbf \xbf)   \dr \mu(\xbf)\stackrel{**}{=}\int \tr(\Pbf^{T} \Pbf \xbf\xbf^{T} )   \dr \mu(\xbf) \\
&\stackrel{***}{=} \tr(\Pbf^{T} \Pbf \int \xbf  \xbf^{T} \dr \mu(\xbf))=\tr(\Pbf^{T} \Pbf)=\|\Pbf\|_{\F}^{2} = p
\end{split}
\end{equation}
where in (*) we used $\xbf^{T} \Pbf^{T} \Pbf \xbf \in \R$, in (**) that the trace is invariant by cyclical permutation and in (***) the linearity of the trace. Finally we used that $\tr(\Pbf^{T} \Pbf)=\|\Pbf\|^{2}_{\F}=p$ by hypothesis. However note that in general both problems may differ since the $\mu$-whitened property does not hold in general.

Theorem \ref{maintheorem} is based on the following generalization of the Frobenius norm duality to the continuous setting: 

\begin{lemma}
\label{froebnormduality}
For any $\mu \in \P(\R^{p}),\nu \in \P(\R^{q})$ and $\pi \in \Pi(\mu,\nu)$. Then:
\begin{equation}
\underset{\ \|\Pbf\|_{\F}= \sqrt{p}}{\sup} \int \scalar{\Pbf \xbf}{\ybf}{q} \dr\pi(\xbf,\ybf)=\sqrt{p} \|\int \ybf\xbf^{T}\dr\pi(\xbf,\ybf)\|_{\F}
\end{equation}
This supremum is achieved for $\Pbf^{*}=\frac{\sqrt{p}}{\|\int \ybf\xbf^{T}\dr\pi(\xbf,\ybf)\|_{\F}}\int \ybf\xbf^{T}\dr\pi(\xbf,\ybf)$
\end{lemma}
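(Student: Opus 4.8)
The plan is to reduce this to a finite-dimensional Frobenius-norm duality after identifying the quantity $\int \ybf\xbf^{T}\dr\pi(\xbf,\ybf)$ as a fixed $q \times p$ matrix. First I would set $\mathbf{M} \stackrel{def}{=} \int \ybf\xbf^{T}\dr\pi(\xbf,\ybf)$, which is a well-defined matrix in $\R^{q \times p}$: each entry $M_{ij} = \int y_i x_j \dr\pi(\xbf,\ybf)$ is finite because, by the Cauchy--Schwarz inequality applied to $\pi$ and the moment hypotheses $\int \|\xbf\|_2^{4}\dr\mu<+\infty$, $\int \|\ybf\|_2^{4}\dr\nu<+\infty$ (hence the second moments are finite), we have $|M_{ij}| \leq \left(\int \|\ybf\|_2^{2}\dr\nu(\ybf)\right)^{1/2}\left(\int \|\xbf\|_2^{2}\dr\mu(\xbf)\right)^{1/2}<+\infty$.

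The key algebraic step is to rewrite the objective using the trace/Frobenius inner product. For any fixed $\Pbf$, since $\scalar{\Pbf \xbf}{\ybf}{q} = \ybf^{T}\Pbf\xbf = \tr(\Pbf\xbf\ybf^{T})$ is linear in the rank-one matrix $\xbf\ybf^{T}$, I would interchange the (finite) integral and the trace to obtain
\begin{equation}
\int \scalar{\Pbf \xbf}{\ybf}{q} \dr\pi(\xbf,\ybf) = \tr\!\left(\Pbf \int \xbf\ybf^{T}\dr\pi(\xbf,\ybf)\right) = \tr(\Pbf \mathbf{M}^{T}) = \froeb{\Pbf}{\mathbf{M}}.
\end{equation}
The interchange is justified because each matrix entry is an absolutely convergent integral by the bound above. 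Thus the supremum over $\|\Pbf\|_{\F}=\sqrt{p}$ of the original objective equals $\sup_{\|\Pbf\|_{\F}=\sqrt{p}} \froeb{\Pbf}{\mathbf{M}}$, a purely finite-dimensional problem.

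Finally I would apply Cauchy--Schwarz for the Frobenius inner product: $\froeb{\Pbf}{\mathbf{M}} \leq \|\Pbf\|_{\F}\|\mathbf{M}\|_{\F} = \sqrt{p}\,\|\mathbf{M}\|_{\F}$, with equality precisely when $\Pbf$ is a positive multiple of $\mathbf{M}$. Imposing the norm constraint forces $\Pbf^{*} = \frac{\sqrt{p}}{\|\mathbf{M}\|_{\F}}\mathbf{M}$ (assuming $\mathbf{M}\neq 0$; if $\mathbf{M}=0$ the supremum is $0$ and any admissible $\Pbf$ attains it), which yields both the claimed value $\sqrt{p}\,\|\mathbf{M}\|_{\F}$ and the claimed maximizer. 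I do not anticipate a serious obstacle here; the only point requiring care is the justification of the integral--trace interchange, which is entirely controlled by the finite-second-moment consequence of the stated hypotheses.
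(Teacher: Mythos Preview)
Your proposal is correct and follows essentially the same approach as the paper: rewrite the integrand via $\ybf^{T}\Pbf\xbf = \tr(\Pbf\xbf\ybf^{T})$, pull the trace outside the integral to obtain $\froeb{\Pbf}{\mathbf{M}}$ with $\mathbf{M}=\int \ybf\xbf^{T}\dr\pi$, then apply Cauchy--Schwarz for the Frobenius inner product and exhibit the maximizer. You add a bit of extra care (finiteness of $\mathbf{M}$, the degenerate case $\mathbf{M}=0$) that the paper leaves implicit, but the argument is otherwise identical.
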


\begin{proof}
We have:
\begin{equation*}
\begin{split}
&\int \langle \Pbf \xbf, \ybf \rangle_{q} \dr\pi(\xbf,\ybf)=\int  \ybf^{T} \Pbf \xbf \dr\pi(\xbf,\ybf) \stackrel{*}{=} \int \tr(\ybf^{T} \Pbf \xbf) \dr\pi(\xbf,\ybf) \stackrel{**}{=} \int \tr(\Pbf \xbf\ybf^{T}) \dr\pi(\xbf,\ybf) \\
&\stackrel{***}{=}  \tr(\Pbf \int \xbf\ybf^{T} \dr\pi(\xbf,\ybf)) = \froeb{\int \ybf\xbf^{T} \dr\pi(\xbf,\ybf)}{\Pbf}
\end{split}
\end{equation*}
where in (*) we used that $\ybf^{T} \Pbf \xbf \in \R$, in (**) we used the cyclical permutation invariance of the trace and in (***) its linearity. Hence $\underset{\ \|\Pbf\|_{\F}= \sqrt{p}}{\sup} \int \scalar{\Pbf \xbf}{\ybf}{q} \ \dr\pi(\xbf,\ybf)= \underset{\ \|\Pbf\|_{\F}= \sqrt{p}}{\sup} \froeb{\Pbf}{\int \ybf\xbf^{T} \dr\pi(\xbf,\ybf)}$. We note $\mathbf{V_{\pi}}=\int \ybf\xbf^{T} \dr\pi(\xbf,\ybf)$. We want to solve:
\begin{equation}
\underset{\ \|\Pbf\|_{\F}= \sqrt{p}}{\sup} \froeb{\Pbf}{\mathbf{V_{\pi}}}
\end{equation}
Let $\Pbf$ such that $\|\Pbf\|_{\F}= \sqrt{p}$. Then by Cauchy-Schwartz (see Memo \ref{memo:ics}) $\froeb{\Pbf}{\mathbf{V_{\pi}}} \leq \|\Pbf\|_{\F}\|\mathbf{V_{\pi}}\|_{\F}=\sqrt{p}\|\mathbf{V_{\pi}}\|_{\F}$. Hence, $\underset{\ \|\Pbf\|_{F}= \sqrt{p}}{\sup} \froeb{\Pbf}{\mathbf{V_{\pi}}} \leq \sqrt{p}\|\mathbf{V_{\pi}}\|_{\F}$. 

Conversely, take $\Pbf^{*}=\sqrt{p}\frac{\mathbf{V_{\pi}}}{\|\mathbf{V_{\pi}}\|_{\F}}$. Then $\|\Pbf^{*}\|_{\F}=\sqrt{p}$ so $\underset{\ \|\Pbf\|_{\F}= \sqrt{p}}{\sup} \froeb{\Pbf}{\mathbf{V_{\pi}}} \geq \froeb{\Pbf^{*}}{\mathbf{V_{\pi}}}= \froeb{\sqrt{p}\frac{\mathbf{V_{\pi}}}{\|\mathbf{V_{\pi}}\|_{F}}}{\mathbf{V_{\pi}}}=\sqrt{p} \|\mathbf{V_{\pi}}\|_{\F}$ which concludes the proof.

\end{proof}

\begin{figure*}[!b]
\begin{memo}[Cauchy-Schwartz inequality]
\label{memo:ics}
Let $\Omega$ be a vector space associated with an inner product $\langle,\rangle$ which defines a norm $\|.\|$ through $\|\xbf\|=\sqrt{\langle\xbf,\xbf\rangle}$ for $\xbf \in \Omega$. The Cauchy-Schwartz inequality reads:
\begin{equation}
\forall \xbf,\ybf \in \Omega^{2}, \ |\langle \xbf, \ybf\rangle|\leq \|\xbf\| \|\ybf\|
\end{equation}
\end{memo}
\end{figure*}

Combining Lemma \ref{calculation_gw} and Lemma \ref{froebnormduality} actually proves Theorem \ref{maintheorem}. Indeed using Lemma \ref{calculation_gw} we see that \eqref{GWprob} is equivalent to maximizing $Z(\pi)= \|\int \ybf\xbf^{T} \dr\pi(\xbf,\ybf)\|_{\F}^{2}$ over the couplings $\pi \in \couplingset(\mu,\nu)$ since the other terms are constant. In this way it is equivalent to maximize $\|\int \ybf\xbf^{T} \dr\pi(\xbf,\ybf)\|_{\F}$ which is equivalent by Lemma \ref{froebnormduality} to maximize $\underset{\Pbf \in F_{p,q}}{\sup} \int \scalar{\Pbf \xbf}{\ybf}{q} \ \dr\pi(\xbf,\ybf)$ \textit{w.r.t.} $\pi$.

\paragraph{Regularity of \eqref{GWprob} OT plans} Theorem \ref{maintheorem} proves that it is equivalent to study the problem \eqref{InvOT} for studying the regularity of $\gw$ optimal transport plans. Interestingly enough the problem \eqref{InvOT} echoes the linear transportation problem $\sup_{\pi \in \couplingset(\mu,\nu)} \int \scalar{\xbf}{\ybf}{p} \dr \pi(\xbf,\ybf)$ when $\mu,\nu \in \P(\R^{p}) \times \P(\R^{p})$ which is widely studied in the literature an can be tackled using tools from convexity analysis such as the Legendre transform. The following result due to McCann is particularly useful in this case:

\begin{theo}[\cite{mccann1995}]
\label{mccantheo}
Let $\mu \in \P(\R^{p}),\nu \in \P(\R^{p})$. Suppose that $\mu$ is absolutely continuous with respect to the Lebesgue measure, then there exists a convex function $u: \R^{p} \rightarrow \R$ whose gradient $\nabla u$ pushes $\mu$ forward to $\nu$, \ie $\nabla u \# \mu =\nu$. Moreover $\nabla u$ is unique $\mu$ a.e. 
\end{theo}
By noticing that, for all $\xbf,\ybf \in \R^{p}\times \R^{p}$, $\scalar{\xbf}{\ybf}{p} \leq u^{*}(\xbf)+u(\ybf)$ where $u^{*}$ is the Legendre transform of the convex function $u$ the result of McCann proves that the map $\nabla u$ defines an optimal coupling $\gamma=(id\times \nabla u)\#\mu$ for the problem $\sup_{\pi \in \couplingset(\mu,\nu)} \int \scalar{\xbf}{\ybf}{p} \dr \pi(\xbf,\ybf)$ between $(\mu,\nu)\in \P(\R^{p}) \times \P(\R^{p})$. Indeed for any coupling $\pi \in \couplingset(\mu,\nu)$: 
\begin{equation}
\label{eq:eq_behind_mccan}
\begin{split}
\int \scalar{\xbf}{\ybf}{p}\dr \pi(\xbf,\ybf) &\leq \int u^{*}(\xbf)+u(\ybf) \dr \pi(\xbf,\ybf) \\
&= \int u^{*}(\xbf) \dr \mu(\xbf) +\int u(\ybf) \dr \nu(\ybf)
\end{split}
\end{equation}
Using that $\nabla u$ pushes $\mu$ forward to $\nu$ implies:
\begin{equation}
\begin{split}
\int \scalar{\xbf}{\ybf}{p}\dr \pi(\xbf,\ybf) &\leq \int u^{*}(\xbf) \dr \mu(x) +\int u(\nabla u (\xbf)) \dr \mu(\xbf)\\
&=\int \scalar{\xbf}{\nabla u (\xbf)}{p}\dr \mu(\xbf)
\end{split}
\end{equation}
since for any convex function $u^{*}(\xbf)+u(\xbf)=\scalar{\xbf}{\nabla u (\xbf)}{}$. Overall $\int \scalar{\xbf}{\ybf}{p}\dr \pi(\xbf,\ybf) \leq \int \scalar{\xbf}{\ybf}{p}\dr \gamma(\xbf,\ybf)$ for any coupling $\pi$ which proves that $\gamma$ is optimal. The idea is to use the same reasoning to find an optimal solution of \eqref{InvOT}. In order to invoke McCann's theorem we will need the regularity of the probability measure $\Pbf\#\mu$ for $\Pbf \in F_{p,q}$:

\begin{prop}
\label{prop:regulariffsurj}
Let $\mu \in \P(\R^{p})$ regular with respect to the Lebesgue measure in $\R^{p}$ and a linear map $l: \R^{p} \rightarrow \R^{q}$ associated with a matrix $\mathbf{L} \in \R^{q \times p}$
\begin{itemize}
\item If $p<q$ (we go from lower to higher dimension), then $l \# \mu \in \R^{q}$ is not regular with respect to the Lebesgue measure on $\R^{q}$.
\item If $p\geq q$ (we go from higher to lower dimension), then $l \# \mu \in \R^{q}$ is regular with respect to the Lebesgue measure on $\R^{q}$ if and only if the linear map $l$ is surjective, that is $\rg(\mathbf{L})=q$.
\end{itemize} 
\end{prop}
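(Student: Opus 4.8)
The plan is to use the fact that, in the paper's terminology, ``regular with respect to the Lebesgue measure'' means absolutely continuous, and to separate the two obstructions to absolute continuity: a deficit in the dimension of the image, and, once the dimensions allow, a deficit in the rank. I would open by reformulating the claim as $l\#\mu \ll \lebsm$ on $\R^{q}$, which is the object to be proved or disproved in each case.

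First I would dispatch the non-surjective situations, namely the first bullet and the ``only if'' direction of the second. In both, the image $l(\R^{p})$ is a linear subspace of $\R^{q}$ of dimension $\rg(\mathbf{L})\le\min(p,q)$, and it fails to be all of $\R^{q}$ precisely when $p<q$ (automatic) or when $p\ge q$ and $\rg(\mathbf{L})<q$. In either case $l(\R^{p})$ is contained in a proper linear subspace $V\subsetneq\R^{q}$, which has dimension strictly below $q$ and hence satisfies $\lebsm(V)=0$. Since $l\#\mu$ is supported on $l(\R^{p})\subseteq V$, we have $l\#\mu(V)=1$ while $\lebsm(V)=0$, so $l\#\mu$ charges a Lebesgue-null set and cannot be absolutely continuous. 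This settles the first bullet and one implication of the second at once.

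For the converse of the second bullet — the surjective case $p\ge q$, $\rg(\mathbf{L})=q$ — I would establish $l\#\mu\ll\lebsm$ by showing that $l$ sends null sets back to null sets. It suffices to prove that for every Borel $A\subseteq\R^{q}$ with $\lebsm(A)=0$ one has $\lebsm\big(l^{-1}(A)\big)=0$ in $\R^{p}$: then $l^{-1}(A)$ is Borel by continuity of $l$, and $\mu\ll\lebsm$ forces $l\#\mu(A)=\mu\big(l^{-1}(A)\big)=0$. To get the null-preimage property I would exploit surjectivity to select an invertible $q\times q$ submatrix of $\mathbf{L}$; after reordering the coordinates of $\R^{p}$ we may write $\mathbf{L}=[\mathbf{L}_{1}\,|\,\mathbf{L}_{2}]$ with $\mathbf{L}_{1}\in\R^{q\times q}$ invertible, and split $\xbf=(\xbf_{1},\xbf_{2})$ accordingly so that $l(\xbf)=\mathbf{L}_{1}\xbf_{1}+\mathbf{L}_{2}\xbf_{2}$. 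For each fixed $\xbf_{2}\in\R^{p-q}$ the corresponding slice of $l^{-1}(A)$ is $\mathbf{L}_{1}^{-1}(A-\mathbf{L}_{2}\xbf_{2})$, the image of the null set $A$ under an invertible affine map, hence Lebesgue-null in $\R^{q}$; integrating these slices over $\xbf_{2}$ with Tonelli yields $\lebsm\big(l^{-1}(A)\big)=0$.

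The routine ingredients are the invariance of null sets under invertible affine maps and the Tonelli bookkeeping. The only genuine care is measurability — making sure the slices are measurable and the iterated integral is licit — which is why I would phrase the whole surjective argument through Borel null sets $A$ (so that $l^{-1}(A)$ is Borel) and invoke absolute continuity of $\mu$ only at the very end. I expect this surjective direction, and specifically the clean reduction to the invertible block $\mathbf{L}_{1}$, to be the main obstacle; the dimension- and rank-deficit cases become immediate once the problem is recast as absolute continuity.
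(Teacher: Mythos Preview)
Your argument is correct. The non-surjective directions (first bullet and the ``only if'' of the second) match the paper's reasoning exactly: both show that $l\#\mu$ puts full mass on the image $l(\R^{p})$, which is a proper subspace of $\R^{q}$ and hence Lebesgue-null.

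For the surjective direction you take a genuinely different route from the paper. The paper invokes the coarea formula: since $\rg(\mathbf{L})=q$ the matrix $\mathbf{L}\mathbf{L}^{T}$ is invertible, one sets $J=\sqrt{\det(\mathbf{L}\mathbf{L}^{T})}$, and then writes down the density of $l\#\mu$ explicitly as $h(\ybf)=\int_{l^{-1}(\ybf)} g(\xbf)/J\, dV_{l^{-1}(\ybf)}(\xbf)$, where $g$ is the density of $\mu$. Your approach is more elementary: you avoid the coarea formula entirely by using the invertible block $\mathbf{L}_{1}$ and Tonelli to show that preimages of Borel null sets are null, which is all one needs for absolute continuity. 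The paper's route buys an explicit density formula (though the thesis never uses it downstream); yours buys a self-contained proof requiring only Fubini--Tonelli and the behaviour of Lebesgue measure under invertible affine maps. Your care with measurability---restricting to Borel null $A$ so that $l^{-1}(A)$ is Borel---is well placed and sufficient, since every Lebesgue-null set is contained in a Borel null set.
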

\begin{proof}
 We have $l\# \mu(l(\R^{p}))=\mu(l^{-1}(l(\R^{p}))=\mu(\R^{p})=1$ so that $l\# \mu$ gives measure $1$ to the image of $l$. However for the first point the image of $l$ is a strict linear subspace of $\R^{q}$ and therefore has Lebesgue measure zero. Using Radon-Nykodym theorem this implies that $l\# \mu$ can not have a density with respect to the Lebesgue on $\R^{q}$. 
For the second point, suppose that $l$ is surjective. Then $\rg(\mathbf{L})=q$ which implies that $\mathbf{L}\mathbf{L}^{T}$ is invertible so that $\det(\mathbf{L}\mathbf{L}^{T})\neq 0$. We define $J=\sqrt{\det(\mathbf{L}\mathbf{L}^{T})}$. Let $g$ be the density of $\mu$ with respect to the Lebesgue measure in $\R^{p}$. Then by the coarea formula the density of $l\# \mu$ with respect to the Lebesgue measure on $\R^{q}$ is:
\begin{equation}
h(y)=\int_{l^{-1}(y)} \frac{g(\xbf)}{J} dV_{l^{-1}(\ybf)}(\xbf)
\end{equation}
where denotes $dV_{l^{-1}(\ybf)}(\xbf)$ the volume element. Conversely if $l$ is not surjective then $\rg(\mathbf{L})<q$. Then the image of $l$ is a strict linear subspace of $\R^{q}$ with Lebesgue measure zero and therefore $l\# \mu$ can not have a density with respect to the Lebesgue on $\R^{q}$. 
\end{proof}
In the light of previous results we can give the following sufficient condition so that \eqref{GWprob} problem admits an optimal transport plan supported on a deterministic function:
\begin{theo}
\label{first_sufficient}
Let $\mu \in \P(\R^{p}),\nu \in \P(\R^{q})$ with $\int \|\xbf\|_{2}^{4}\dr\mu(\xbf)<+\infty,\int \|\ybf\|_{2}^{4}\dr\nu(\ybf)<+\infty$. Suppose that $p\geq q$ and that $\mu$ is regular with respect to the Lebesgue measure in $\R^{p}$. Suppose that there exists $(\pi^{*},\Pbf^{*})$ an optimal solution of \eqref{InvOT} with $\Pbf^{*}$ surjective. 

Then there exists $u: \R^{q} \rightarrow \R$ convex such that $\nabla u \circ \Pbf^{*}$ pushes $\mu$ forward to $\nu$. Moreover the coupling $\gamma=(id \times \nabla u \circ \Pbf^{*})\#\mu$ is optimal for \eqref{GWprob}. In particular problem \ref{prob_ouvert} holds.
\end{theo}
\begin{proof}
Let $(\pi^{*},\Pbf^{*})$ be maximizers of \eqref{InvOT} with $\Pbf^{*}$ surjective. Using Proposition \ref{prop:regulariffsurj} we know that $\Pbf^{*}\#\mu$ is regular with respect to the Lebesgue measure on $\R^{q}$. Using Theorem \ref{mccantheo} there exists $u: \R^{q} \rightarrow \R$ convex such that $\nabla u \# \Pbf^{*}\#\mu=\nu$ or equivalently $\nabla u \circ \Pbf^{*}$ pushes $\mu$ forward to $\nu$. Moreover we have:
\begin{equation*}
\begin{split}
\int \scalar{\Pbf^{*} \xbf}{ \ybf}{d} \dr \pi^{*}(\xbf, \ybf) & \stackrel{1}{\leq} \int \left(u(\Pbf^{*}x)+u^{*}(\ybf)\right) \dr \pi^{*}(\xbf, \ybf) = \int u(\Pbf^{*} \xbf) \dr \mu(\xbf) + \int u^{*}(\ybf) \dr \nu(\ybf) \\
&\stackrel{2}{=} \int u(\Pbf^{*} \xbf) \dr \mu(\xbf) + \int u^{*}(\ybf) \dr (\nabla u \circ \Pbf^{*}\#\mu)(\ybf) \\
&= \int u(\Pbf^{*} \xbf) \dr \mu(x) + \int u^{*}(\nabla u(\Pbf^{*}\xbf)) \dr \mu(\xbf) \\
&\stackrel{3}{=} \int \scalar{\Pbf^{*} \xbf}{ \nabla u (\Pbf^{*} \xbf)}{q} \dr \mu(\xbf) \\
\end{split}
\end{equation*}
where in (1) we used that $u$ is convex which implies $u(\Pbf \xbf)+u^{*}(\ybf)\geq \scalar{\Pbf \xbf}{\ybf}{q}$ by Fenchel-Young inequality, in (2) we used $\nabla u \circ \Pbf^{*}$ pushes $\mu$ forward to $\nu$, in (3) we used that for any $x$ and convex function $u^{*}(\xbf)+u(\xbf)=\scalar{\xbf}{\nabla u (\xbf)}{}$.

If we define $T=\nabla u \circ \pbf^{*}$ and $\gamma=(id \times T)\#\mu \in \Pi(\mu,\nu)$ we can deduce from (3) that: 
\begin{equation*}
\underset{\pi \in \Pi(\mu,\nu)}{\sup}\underset{\Pbf \in F_{p,q}}{\sup} \int \scalar{\Pbf \xbf}{\ybf}{q} \dr \pi(\xbf, \ybf) \leq \int \scalar{\Pbf^{*} \xbf}{ y}{q} \dr \gamma(\xbf, \ybf)
\end{equation*}
By suboptimality the converse inequality is also true (since $\gamma=(id \times T)\#\mu \in \Pi(\mu,\nu)$). In this way we have:
\begin{equation*}
\underset{\pi \in \Pi(\mu,\nu)}{\sup}\underset{\Pbf \in F_{p,q}}{\sup} \int \scalar{\Pbf \xbf}{\ybf}{q} \dr \pi(\xbf, \ybf = \int \scalar{\Pbf^{*} \xbf}{\ybf}{q} \dr \gamma(\xbf,\ybf)
\end{equation*}
Overall the couple $(\gamma,\Pbf^{*})$ is optimal for the problem \eqref{InvOT} and $\gamma$ is optimal for \eqref{GWprob} using Theorem \ref{maintheorem} which concludes the proof.

\end{proof}

The last result indicates that Monge map of the form $\nabla u \circ \Pbf$ where $\Pbf$ is a linear application may be of interest to study optimal solutions of \eqref{GWprob}. When considering $p=q$ and $\mu,\nu \in \P(\R^{p})\times \P(\R^{p})$ looking at such maps can generate the optimal Monge map $\nabla u$ of linear optimal transport problem $\sup_{\pi \in \couplingset(\mu,\nu)} \int \scalar{\xbf}{\ybf}{p} \dr \pi(\xbf,\ybf)$ when $\Pbf=\mathbf{I_p}$. Note that a couple $(u,\Pbf)$ which satisfies both $\nabla u \circ \Pbf$ pushes $\mu$ onto $\nu$ and $\gamma=(id \times \nabla u \circ \Pbf)$ is optimal for \eqref{GWprob} is not guaranteed to be unique. This difference should be put in perspective with the theory of linear transportation with ground cost $c(\xbf,\ybf)=\|\xbf-\ybf\|_{2}^{2}$. Indeed in this context if one finds a mapping $T=\nabla u$ with any convex function $u$ then Brenier's theorem states that this mapping is optimal since there is a unique Monge map satisfying this property. This unicity result is particularly interesting \textit{e.g.} in order to prove that the linear transport between Gaussian measure admits a closed-form expression \cite{takatsu2011}. However in our context the map $\nabla u \circ \Pbf$ may fail to be unique as shown in the following example:
\begin{Example}
\label{example_non_unicity}
Consider $p=q$, a source measure $\mu \in \P(\R^{p})$ and a target measure $\nu \in \P(\R^{p})$ whose support is invariant by rotation, \ie\ such that $\Obf \# \nu=\nu$ for any $\Obf \in \mathcal{O}(p)$. We can consider \textit{e.g.} any isotropic Gaussian measure $\mathcal{N}(\mathbf{m_{\nu}},\sigma^{2} \mathbf{I_p})$. Since $\Obf$ preserves the angles then the problem \eqref{GWprob} is invariant by $\Obf$ which implies that any optimal map defined in Theorem \ref{first_sufficient} will fail to be unique. More precisely let suppose that $\nabla u \circ \Pbf$ pushes $\mu$ forward to $\nu$ and that $\gamma=(id\times \nabla u \circ \Pbf)$ is optimal for \eqref{GWprob}. Then for any $\Obf \in \mathcal{O}(p)$, $\Obf^{T} \circ \nabla u \circ \Pbf $ also pushes $\mu$ forward to $\nu$ since $\nabla u \circ \Pbf \# \mu=\nu=\Obf \# \nu$ by rotational invariance of the support of $\nu$. This implies that $\Obf^{T} \circ \nabla u \circ \Pbf \# \mu=\nu$. Moreover the map $\gamma'=(id\times  \Obf^{T} \circ \nabla u \circ \Pbf )$ is also optimal with the same cost as $\gamma$. Indeed:
\begin{equation*}
\begin{split}
\int \int \big(\scalar{\xbf}{\xbf'}{p} - \scalar{\ybf}{\ybf'}{p}\big)^{2} \dr\gamma'(\xbf,\ybf)\dr\gamma'(\xbf',\ybf')&=\int \int \big(\scalar{\xbf}{\xbf'}{p} - \scalar{\Obf^{T}\nabla u (\Pbf \xbf)}{\Obf^{T}\nabla u (\Pbf \xbf')}{p}\big)^{2}\dr\mu(\xbf)\dr\mu(\xbf')\\
&\stackrel{*}{=}\int \int \big(\scalar{\xbf}{\xbf'}{p} - \scalar{\nabla u (\Pbf \xbf)}{\nabla u (\Pbf \xbf')}{p}\big)^{2}\dr\mu(\xbf)\dr\mu(\xbf')\\
&=\int \int \big(\scalar{\xbf}{\xbf'}{p} - \scalar{ \ybf}{\ybf'}{p}\big)^{2} \dr\gamma(\xbf,\ybf)\dr\gamma(\xbf',\ybf') \\
\end{split}
\end{equation*}
where in (*) we used that $\Obf \in \mathcal{O}(p)$.
\end{Example}

The condition of Theorem \ref{first_sufficient} seems reasonable and not too restrictive: it suffices that one optimal solution of \eqref{InvOT} where $\Pbf^{*}$ is surjective exists in order to prove that an optimal coupling of \ref{GWprob} is supported by a deterministic map. We believe that some simple assumption can be made on $\mu,\nu$ in order to meet this condition. In particular this is satisfied when $\mu,\nu$ are 1D distributions as detailed below.

\paragraph{Application of Theorem \ref{first_sufficient} for 1D probability distributions} The sufficient condition given in Theorem \ref{first_sufficient} can be used to derived a closed-form expression for the problem \eqref{GWprob} between 1D probability distributions. Let us consider $(\mu,\nu) \in \P(\R) \times \P(\R)$ and $F_{\mu}$ and $F_{\nu}$ be the cumulative distribution functions of $\mu$ and $\nu$ and $F_{\mu}^{-1},F_{\nu}^{-1}$ its pseudo inverses (see Chapter \ref{cha:ot_general}). We suppose that $\mu$ is regular with respect to the Lebesgue measure in $\R$. 

In this case the linear application $\Pbf$ in \eqref{InvOT} reduces to a scalar $p \in \R$: 
\begin{equation}
\label{eq:second_maxot_R}
\sup_{\pi \in \couplingset(\mu,\nu)} \sup_{|p| = 1} \int (px).y \ \dr \pi(x,y)
\end{equation}
In this way an optimal any optimal solution $(\pi^{*},p^{*})$ of \eqref{eq:second_maxot_R} satisfies $p^{*}\in\{-1,1\}$ so that $p$ defines a surjective linear application. Then by applying Theorem \ref{first_sufficient} there exists $u: \R \rightarrow \R$, convex such that $u'\circ p^{*}$ pushes $\mu$ forward to $\nu$ and that $\gamma=(id\times u'\circ p^{*})\#\mu$ is optimal for \eqref{GWprob}. In other words there exists $f=u'$ non-decreasing such that $\gamma=(id\times f\circ p^{*})\#\mu$ is optimal for \eqref{GWprob}. However we known from linear transport theory that there is a unique non-decreasing map $T_{asc}:\R \rightarrow \R$ such that $T_{asc}\# \mu=\nu$ and it is given by $T_{asc}(x)=F_{\nu}^{-1}(F_{\mu}(x))$ (see theorem 2.5 in \cite{San15a}). This proves that if $p^{*}=1$ then $f\circ p^{*}=T_{asc}$ so that $\gamma=(id\times T_{asc})\#\mu$ is optimal for \eqref{GWprob}. If $p^{*}=-1$ then we have to consider the ``anti'' cumulative distribution function. Indeed in this case $f \circ p^{*}$ is non-increasing and pushes $\mu$ forward to $\nu$ which is equivalent to say that $f$ is non-decreasing and pushes $\tilde{\mu}$ forward to $\nu$ where ``$\dr \tilde{\mu}(x)=\dr \mu(-x)$''. This discussion leads to the following result:

\begin{theo}[Closed Form expression for \eqref{GWprob} between 1D distributions]
\label{theo:1D1D1D}
Let $(\mu,\nu) \in \P(\R) \times \P(\R)$ with $\mu$ regular with respect to the Lebesgue measure. Let $F^{\nearrow}_{\mu}(x)=\mu(]-\infty,x])$ be the cumulative distribution function and $F_{\mu}^{\searrow}(x)=\mu(]-x,+\infty])$ be the anti-cumulative distribution function. Let $T_{asc}:\R \rightarrow \R$ defined by $T_{asc}(x)=F_{\nu}^{-1}(F^{\nearrow}_{\mu}(x))$ and $T_{desc}:\R \rightarrow \R$ defined by $T_{desc}(x)=F_{\nu}^{-1}(F_{\mu}^{\searrow}(x))$. 

Then an optimal solution of \eqref{GWprob} is achieved by the map $\gamma=(id\times T_{asc})\#\mu$ or by the map $\gamma=(id\times T_{desc})\#\mu$.
\end{theo}

Theorem \ref{theo:1D1D1D} proves that it suffices to compute the CDF or the anti-CDF of the distribution to recover an optimal coupling. It can be put in light of the results of Section \ref{sec:sliced} where we proved that for discrete probability measures with uniform weights and same number of atoms an optimal coupling for the GW problem with squared Euclidean distances can be found in the diagonal or the anti-diagonal coupling when samples are sorted. As such the previous theorem is stronger since it can be applied for general 1D probability distribution when considering inner product similarities. Can we find other examples of optimal couplings for \eqref{GWprob}, for example when the dimension is larger than $1$? The next discussion gives another example which answers this question.

\paragraph{Construction of an optimal couple $(u,\Pbf)$} As seen in the previous discussion the couples $(u,\Pbf)$ where $u$ is convex so that $\nabla u \circ \Pbf$ pushes $\mu$ forward to $\nu$ may lead to an optimal map \textit{w.r.t.} the Gromov-Wasserstein distance with inner product similarities. In this part we wish to give an example of such couple $(u,\Pbf)$ which leads to an optimal map. Moreover this discussion will also highlight another difference between the linear OT problem and the Gromov-Wasserstein problem. It is known that when $\mu \in \P(\R^{p})$ and when $u :\R^{p} \rightarrow \R$ is convex and differentiable $\mu$ \textit{a.e} then the optimal transport plan for the linear OT problem $\inf_{\pi \in \couplingset(\mu,\nu)}\int \|\xbf-\ybf\|_{2}^{2} \dr \pi(\xbf,\ybf)$ between $\mu$ and $\nu\stackrel{def}{=}\nabla u \#\mu$ is given by $\gamma=(id\times \nabla u)\#\mu$. In other words if one perturbs the source measure $\mu$ with a transformation which is the gradient of a convex function then the cheapest way (in terms of $\wass_2$) of moving the source measure forward to the target is by the means of this transformation. To see this we can do the same reasoning as in \eqref{eq:eq_behind_mccan} by noticing that $\inf_{\pi \in \couplingset(\mu,\nu)}\int \|\xbf-\ybf\|_{2}^{2} \dr \pi(\xbf,\ybf)$ is equivalent to $\sup_{\pi \in \couplingset(\mu,\nu)}\int \scalar{\xbf}{\ybf}{p} \dr \pi(\xbf,\ybf)$ (see also \cite[Theorem 1.48]{San15a}). In our case however the situation is a little bit different as detailed in the next proposition:
\begin{prop}
\label{solving_maxot}
Let $\mu \in \P(\R^{p})$ with $\int \|\xbf\|_2^{4}\dr \mu(\xbf)<+\infty$. Let $u :\R^{p} \rightarrow \R$ be a convex function. We consider:
\begin{equation}
\label{maxProb}
\tag{$E_u$} 
\sup_{\Qbf \in F_{p,q}} \int u(\Qbf \xbf)\dr \mu(\xbf)
\end{equation}
Let $\Pbf \in F_{p,q}$ be a solution to \eqref{maxProb} with $\int \| \nabla u (\Pbf \xbf)\|_2^{4} \dr \mu(\xbf)<+\infty$ then $\gamma=\left(id\times \nabla u \circ \pbf \right)\#\mu$ is an optimal solution of \eqref{GWprob} between $\mu$ and $\nu\stackrel{def}{=}\nabla u \circ \Pbf \# \mu$.
\end{prop}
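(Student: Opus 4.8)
The plan is to invoke the equivalence of Theorem \ref{maintheorem} and thereby reduce the claim to showing that the pair $(\gamma,\Pbf)$ is a maximizer of \eqref{InvOT} between $\mu$ and $\nu = \nabla u \circ \Pbf \# \mu$. First I would check admissibility: $\gamma = (id \times \nabla u \circ \Pbf)\#\mu$ has first marginal $\mu$ and second marginal $\nabla u \circ \Pbf \#\mu = \nu$, so $\gamma \in \couplingset(\mu,\nu)$; moreover the hypotheses $\int \|\xbf\|_2^4 \dr\mu < +\infty$ and $\int \|\nabla u(\Pbf\xbf)\|_2^4 \dr\mu = \int \|\ybf\|_2^4\dr\nu < +\infty$ are exactly what is needed for Theorem \ref{maintheorem} to apply and for the integrals below to be finite.

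The core of the argument is a chain of inequalities mimicking the proof of Theorem \ref{first_sufficient}. For an arbitrary $\pi \in \couplingset(\mu,\nu)$ and $\Qbf \in F_{p,q}$, the Fenchel--Young inequality (Memo \ref{memo:memo_convex}) gives $\scalar{\Qbf\xbf}{\ybf}{q} \leq u(\Qbf\xbf) + u^{*}(\ybf)$ pointwise, hence
\begin{equation*}
\int \scalar{\Qbf\xbf}{\ybf}{q}\dr\pi(\xbf,\ybf) \leq \int u(\Qbf\xbf)\dr\mu(\xbf) + \int u^{*}(\ybf)\dr\nu(\ybf).
\end{equation*}
Since $\Pbf$ maximizes \eqref{maxProb}, the first term is bounded by $\int u(\Pbf\xbf)\dr\mu(\xbf)$, uniformly in $\Qbf$, while the second term depends on neither $\Qbf$ nor $\pi$. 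Taking the supremum over $\pi$ and $\Qbf$ therefore yields
\begin{equation*}
\sup_{\pi\in\couplingset(\mu,\nu)}\ \sup_{\Qbf\in F_{p,q}} \int \scalar{\Qbf\xbf}{\ybf}{q}\dr\pi(\xbf,\ybf) \leq \int u(\Pbf\xbf)\dr\mu(\xbf) + \int u^{*}(\ybf)\dr\nu(\ybf).
\end{equation*}

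To conclude I would show that the right-hand side equals the value of the objective at $(\gamma,\Pbf)$. Rewriting the second integral with $\nu = \nabla u \circ \Pbf\#\mu$ gives $\int u^{*}(\ybf)\dr\nu(\ybf) = \int u^{*}(\nabla u(\Pbf\xbf))\dr\mu(\xbf)$, and the equality case of Fenchel--Young for a differentiable convex function, $u(\zbf) + u^{*}(\nabla u(\zbf)) = \scalar{\zbf}{\nabla u(\zbf)}{}$ applied at $\zbf = \Pbf\xbf$, collapses the sum to $\int \scalar{\Pbf\xbf}{\nabla u(\Pbf\xbf)}{q}\dr\mu(\xbf)$. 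By definition of the pushforward this is $\int \scalar{\Pbf\xbf}{\ybf}{q}\dr\gamma(\xbf,\ybf)$, which is a feasible value of \eqref{InvOT} (attained at $\pi=\gamma$, $\Qbf=\Pbf$). Thus the displayed upper bound is attained, so $(\gamma,\Pbf)$ is optimal for \eqref{InvOT}, and Theorem \ref{maintheorem} then gives that $\gamma$ is optimal for \eqref{GWprob}.

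The main technical point --- rather than a deep obstacle --- is the use of the Fenchel--Young equality, which requires $u$ to be differentiable at $\Pbf\xbf$ for $\mu$-almost every $\xbf$. A finite convex function on $\R^{q}$ is differentiable Lebesgue-almost everywhere, and the hypothesis that $\int\|\nabla u(\Pbf\xbf)\|_2^4\dr\mu$ is finite already presupposes that $\nabla u(\Pbf\xbf)$ is well defined $\mu$-a.e., so the formulation is consistent; otherwise one may phrase the equality through the subdifferential via $\nabla u(\zbf)\in\partial u(\zbf)$. I would also take care that both the reduction to \eqref{InvOT} through Theorem \ref{maintheorem} and the well-posedness of the suprema rely on the fourth-moment bounds, which is precisely why the two finiteness assumptions are imposed.
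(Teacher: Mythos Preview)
Your proof is correct and follows essentially the same approach as the paper: Fenchel--Young to upper-bound the \eqref{InvOT} objective, the optimality of $\Pbf$ for \eqref{maxProb} to replace an arbitrary $\Qbf$ by $\Pbf$, the push-forward $\nu = \nabla u\circ\Pbf\#\mu$ to rewrite the $u^*$-integral, the equality case $u(\zbf)+u^*(\nabla u(\zbf))=\scalar{\zbf}{\nabla u(\zbf)}{}$ to collapse the bound, and Theorem~\ref{maintheorem} to conclude. The paper presents the chain starting from a maximizer $(\pi^*,\Pbf^*)$ of \eqref{InvOT} rather than from an arbitrary $(\pi,\Qbf)$, but this is a cosmetic reordering of the same inequalities.
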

\begin{proof}
Let $(\pi^{*},\Pbf^{*})$ be maximizers of \eqref{InvOT}. We have:
\begin{equation*}
\begin{split}
\int \scalar{\Pbf^{*} \xbf}{\ybf}{q} \dr \pi^{*}(\xbf, \ybf) & \stackrel{1}{\leq} \int \left(u(\Pbf^{*} \xbf)+u^{*}(\ybf)\right) \dr \pi^{*}(\xbf, \ybf) = \int u(\Pbf^{*} \xbf) \dr \mu(\xbf) + \int u^{*}(\ybf) \dr \nu(\ybf) \\
&\stackrel{2}{=} \int u(\Pbf^{*} \xbf) \dr \mu(\xbf) + \int u^{*}(\ybf) \dr (\nabla u \circ \Pbf\#\mu)(\ybf) \\
&= \int u(\Pbf^{*} \xbf) \dr \mu(\xbf) + \int u^{*}(\nabla u(\Pbf \xbf)) \dr \mu(\xbf) \\
& \leq \sup_{\Qbf \in F_{p,q}} \int u(\Qbf \xbf)\dr \mu(\xbf) +\int u^{*}(\nabla u(\Pbf \xbf)) \dr \mu(\xbf) \\
&\stackrel{3}{=}\int u(\Pbf \xbf) \dr \mu(\xbf) + \int u^{*}(\nabla u(\Pbf \xbf)) \dr \mu(\xbf) \\
&\stackrel{4}{=} \int \scalar{\Pbf \xbf}{ \nabla u (\Pbf \xbf)}{q} \dr \mu(\xbf) \\
\end{split}
\end{equation*}

where in (1) we used that $u$ is convex, in (2) we used $(\nabla u \circ \pbf) \# \mu=\nu$, in(3) we used that $\Pbf$ maximizes $\sup_{\Qbf \in F_{p,q}} \int u(\Qbf \xbf)\dr \mu(\xbf)$ and in (4) we used that for any $x$ and convex function $u^{*}(\xbf)+u(\xbf)=\scalar{\xbf}{\nabla u (\xbf)}{}$.

We can deduce from (4) that: 
\begin{equation*}
\underset{\pi \in \Pi(\mu,\nu)}{\sup}\underset{\Pbf \in F_{p,q}}{\sup} \int \scalar{\Pbf \xbf}{\ybf}{q} \dr \pi(\xbf, \ybf) \leq \int \scalar{\Pbf \xbf}{\ybf}{q} \dr \gamma(\xbf, \ybf)
\end{equation*}

By suboptimality the converse inequality is also true so that $(\gamma,\Pbf)$ is an optimal solution of \eqref{InvOT} and consequently $\gamma$ is an optimal solution of \eqref{GWprob} using Theorem \ref{maintheorem}.

\end{proof}

This results states that if one perturbs a source measure $\mu \in \P(\R^{p})$ with a map $\nabla u \circ \Pbf$ with the condition that $\Pbf:\R^{p}\rightarrow \R^{q}$ achieves $\sup_{\Qbf \in F_{p,q}} \int u(\Qbf \xbf)\dr \mu(\xbf)$ then the cheapest way \textit{w.r.t.} \eqref{GWprob} of moving $\mu$ forward to $\nabla u \circ \Pbf \#\mu$ is by the means of the transformation $\nabla u \circ \Pbf$. To illustrate this proposition we can look at simple convex transformations $u(\xbf)=\frac{1}{2} \xbf^{T} \U \xbf$ where $\U$ is symmetric positive semi-definite (which means that the transformation $\nabla u$ is linear). In this case the following proposition exhibits a couple $(u,\Pbf)$ which leads to an optimal map for \eqref{GWprob}:
\begin{prop}[An optimal couple $(u,\Pbf)$]
Let $\mu \in \P(\R^{p})$. We note $\Sigmab_{\mu}=\int \xbf \xbf^{T} \dr \mu(\xbf)$. Let $u:\R^{p} \rightarrow \R$ convex defined by $u(\xbf)=\frac{1}{2} \xbf^{T} \U \xbf$ where $\U$ is symmetric positive semi-definite. Let $\mathbf{v},\mathbf{w}$ be the eigenvectors corresponding to the largest eigenvalue of respectively $\Sigmab_{\mu}$ and $\U$ and $\Pbf=\sqrt{p}\ \mathbf{v} \mathbf{w}^{T} \in F_{p,p}$ 

The coupling $\gamma=(id\times \nabla u \circ \Pbf)$ is optimal for \eqref{GWprob} between $\mu$ and $\nu\stackrel{def}{=}\nabla u \circ \Pbf\#\mu$
\end{prop}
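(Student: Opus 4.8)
The plan is to apply Proposition~\ref{solving_maxot} directly. By that proposition, it suffices to exhibit a convex function $u$ and a matrix $\Pbf \in F_{p,p}$ such that $\Pbf$ solves the maximization problem \eqref{maxProb}, namely $\Pbf \in \argmax_{\Qbf \in F_{p,p}} \int u(\Qbf \xbf) \dr \mu(\xbf)$; the integrability conditions ($\int \|\xbf\|_2^4 \dr\mu < +\infty$ and $\int \|\nabla u(\Pbf \xbf)\|_2^4 \dr\mu < +\infty$) should be assumed or verified as hypotheses. With $u(\xbf)=\tfrac{1}{2}\xbf^T \U \xbf$ convex (since $\U \succeq 0$), the map $\nabla u(\xbf) = \U \xbf$ is linear, so once the maximizer is identified the optimality of $\gamma=(id \times \nabla u \circ \Pbf)\#\mu$ for \eqref{GWprob} follows immediately.

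The core computation I would carry out is to make the objective in \eqref{maxProb} explicit for this quadratic $u$. Writing it out,
\begin{equation*}
\int u(\Qbf \xbf)\dr \mu(\xbf) = \frac{1}{2}\int (\Qbf \xbf)^T \U (\Qbf \xbf) \dr \mu(\xbf) = \frac{1}{2}\int \tr\big(\U \Qbf \xbf \xbf^T \Qbf^T\big)\dr\mu(\xbf) = \frac{1}{2}\tr\big(\U \Qbf \Sigmab_\mu \Qbf^T\big),
\end{equation*}
using the cyclic invariance and linearity of the trace together with $\Sigmab_\mu = \int \xbf\xbf^T \dr\mu$. Thus the problem reduces to maximizing $\tr(\U \Qbf \Sigmab_\mu \Qbf^T)$ over $\Qbf \in F_{p,p} = \{\Qbf : \|\Qbf\|_\F = \sqrt{p}\}$, a finite-dimensional quadratic optimization over a sphere in matrix space. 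The candidate maximizer is $\Pbf = \sqrt{p}\, \mathbf{v}\mathbf{w}^T$, where $\mathbf{v}$ is the top eigenvector of $\Sigmab_\mu$ and $\mathbf{w}$ the top eigenvector of $\U$; note $\|\Pbf\|_\F = \sqrt{p}\,\|\mathbf{v}\|_2\|\mathbf{w}\|_2 = \sqrt{p}$, so indeed $\Pbf \in F_{p,p}$.

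The main obstacle, and the step requiring genuine care, is proving that this rank-one $\Pbf$ actually attains the maximum of $\tr(\U \Qbf \Sigmab_\mu \Qbf^T)$ over the Frobenius sphere. The plan here is spectral: write the eigendecompositions $\Sigmab_\mu = \sum_i \lambda_i \mathbf{v}_i\mathbf{v}_i^T$ and $\U = \sum_j \theta_j \mathbf{w}_j\mathbf{w}_j^T$ with eigenvalues sorted in decreasing order, and expand $\tr(\U\Qbf\Sigmab_\mu\Qbf^T) = \sum_{i,j}\lambda_i \theta_j \langle \mathbf{w}_j, \Qbf \mathbf{v}_i\rangle^2$. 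Setting $c_{ij} = \langle \mathbf{w}_j, \Qbf\mathbf{v}_i\rangle$, the constraint $\|\Qbf\|_\F^2 = p$ becomes $\sum_{i,j} c_{ij}^2 = p$ (since $\{\mathbf{v}_i\},\{\mathbf{w}_j\}$ are orthonormal bases), and the objective is $\sum_{i,j}\lambda_i\theta_j c_{ij}^2$. Because all $\lambda_i, \theta_j \geq 0$, this linear functional of the nonnegative weights $c_{ij}^2$ subject to $\sum c_{ij}^2 = p$ is maximized by placing all the mass on the largest coefficient $\lambda_1\theta_1$, i.e.\ $c_{11}^2 = p$ and all others zero, giving the optimal value $p\,\lambda_1\theta_1$. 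The choice $\Pbf = \sqrt{p}\,\mathbf{v}_1\mathbf{w}_1^T$ realizes exactly $c_{11} = \sqrt{p}$, $c_{ij}=0$ otherwise, hence attains this bound. Therefore $\Pbf$ solves \eqref{maxProb}, and Proposition~\ref{solving_maxot} yields the optimality of $\gamma = (id \times \nabla u \circ \Pbf)\#\mu$ for \eqref{GWprob} between $\mu$ and $\nu = \nabla u \circ \Pbf \#\mu$, completing the proof.
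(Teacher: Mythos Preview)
Your strategy matches the paper's: reduce $\sup_{\Qbf\in F_{p,p}}\int u(\Qbf\xbf)\dr\mu(\xbf)$ to maximizing $\tr(\U\Qbf\Sigmab_\mu\Qbf^T)$ over the Frobenius sphere, solve this spectrally, then invoke Proposition~\ref{solving_maxot}. The paper vectorizes and casts the problem as a Rayleigh quotient for the Kronecker product $\U\otimes_K\Sigmab_\mu$, whose top eigenvalue is the product of the top eigenvalues of $\U$ and $\Sigmab_\mu$; your eigenbasis expansion $\tr(\U\Qbf\Sigmab_\mu\Qbf^T)=\sum_{i,j}\lambda_i\theta_j c_{ij}^2$ subject to $\sum_{i,j}c_{ij}^2=p$ is the same computation carried out more directly, and is arguably cleaner since it avoids the Kronecker machinery.

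There is one slip in the last verification step. With $\Pbf=\sqrt{p}\,\mathbf{v}_1\mathbf{w}_1^T$ you do \emph{not} obtain $c_{11}=\sqrt{p}$ and the rest zero: computing gives $c_{ij}=\langle\mathbf{w}_j,\Pbf\mathbf{v}_i\rangle=\sqrt{p}\,\langle\mathbf{w}_1,\mathbf{v}_i\rangle\langle\mathbf{w}_j,\mathbf{v}_1\rangle$, which mixes the two eigenbases. The rank-one matrix that actually concentrates all mass on $c_{11}$ is $\sqrt{p}\,\mathbf{w}_1\mathbf{v}_1^T$, since then $\Pbf\mathbf{v}_i=\sqrt{p}\,\delta_{i1}\mathbf{w}_1$. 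This transposition issue is already present in the statement as written (and the paper's own proof does not spell out the unvectorization step carefully enough to catch it); once $\mathbf{v}$ and $\mathbf{w}$ are swapped in the definition of $\Pbf$, your argument goes through verbatim.
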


\begin{proof}
In this case the problem $\sup_{\Qbf \in F_{p,p}} \int u(\Qbf \xbf)\dr \mu(\xbf)$ reduces to $\sup_{\|\Qbf\|_{\F}^{2}=p}= \int \xbf^{T} \Qbf^{T} \U \Qbf \xbf \dr \mu(\xbf)=\sup_{\|\Qbf\|_{\F}^{2}=p} \tr(\Qbf^{T} \U \Qbf \Sigmab_{\mu})$ where $\Sigmab_{\mu}=\int \xbf \xbf^{T} \dr \mu(\xbf)$. By vectorizing the matrix $\Qbf$ it is equivalent to:
\begin{equation}
\max_{\begin{smallmatrix}\qbf\in \R^{p^{2}} \\ \|\qbf\|^{2}_{2}=p \end{smallmatrix}} \qbf^{T} \Mbf_{u,\mu} \qbf
\end{equation}

where $\Mbf_{u}=\U \otimes_{K} \Sigmab_{\mu}$ with $\otimes_{K}$ the Kronecker product of matrices. $\Mbf_{u,\mu}$ is symmetric positive semi-definite. We can rewrite this problem with $\tilde{\qbf}=\sqrt{p} \ \qbf$:
\begin{equation}
\max_{\tilde{\qbf}^{T}\tilde{\qbf}=1} \tilde{\qbf}^{T} \Mbf_{u,\mu} \tilde{\qbf}
\end{equation}
Which is a maximization of a Rayleigh quotient problem. It is well known that a solution of this problem is found at any eigenvector associated to the largest eigenvalue of $\Mbf_{u,\mu}$ (see \textit{e.g.} \cite{qap_anstreicher_1998}). However the eigenvalues of $\Mbf_{u,\mu}$ are given by all the products of the eigenvalues of $\Sigmab_{\mu}$ and $\U$ (see \textit{e.g.} \cite{horn_johnson_1991}). Since they are all positive the largest eigenvalue of $\Mbf_{u,\mu}$ is found at the largest eigenvalue of $\Sigmab_{\mu}$ and $\U$ with corresponding eigenvector $\mathbf{v},\mathbf{w}$ and the optimal $\tilde{\qbf}$ is $\mathbf{v},\mathbf{w}^{T}$. This implies that the optimal $\Qbf$ is $\Qbf=\sqrt{p}\ \mathbf{v}\mathbf{w}^{T}$. We can apply Proposition \ref{solving_maxot} to conclude.
\end{proof}

\subsection{The squared Euclidean case}
\label{sec:sq_eclidean}
The $\gw$ problem is usually considered with distances, as it provides a metric with respect to strong isomorphisms. The goal of this section is to study the case $c_{\Xcal}(\xbf,\xbf')=\|\xbf-\xbf'\|_{2}^{2},c_{\Ycal}(\ybf,\ybf')=\|\ybf-\ybf'\|_{2}^{2}$ with $\mu \in \P(\R^{p}),\nu \in \P(\R^{q})$.
As for the inner product case we can prove that this problem is equivalent to another linear OT problem parametrized by a linear application. More precisely:
\begin{theo}
\label{maintheo2}
Let $\Xcal$ and $\Ycal$ be compact subset of respectively $\R^{p}$ and $\R^{q}$. Let $\mu \in \P(\Xcal),\nu \in \P(\Ycal)$. Assume without loss of generality that $\E_{X\sim \mu}[X]=0$ and $\E_{Y\sim\nu}[Y]=0$. Then problems:
\begin{equation}
\tag{sqGW}
\label{eq:sqGW}
\inf_{\pi \in \couplingset(\mu,\nu)} \int (\|\xbf-\xbf'\|_{2}^{2}-\|\ybf-\ybf'\|_{2}^{2})^{2} \dr \pi(\xbf,\ybf) \dr \pi(\xbf',\ybf')
\end{equation}
and
\begin{equation}
\label{dual_prod_sqeq}
\tag{dual-sqGW}
\sup_{\pi \in \Pi(\mu,\nu)} \sup_{\Pbf \in \R^{q\times p}} \int (\scalar{\Pbf \xbf}{\ybf}{q}+\|\xbf\|_{2}^{2} \|\ybf\|_{2}^{2})\dr \pi(\xbf,\ybf) -\frac{1}{8}\|\Pbf\|^{2}_{\F}
\end{equation}
are equivalent.
\end{theo}
To prove the previous theorem we will rely on the calculus of Lemma \ref{calculation_gw} and the observation that the cost $J_2(c_\Xcal,c_\Ycal,\pi)$ is invariant by translation of the support of the measures so that they can be centered without loss of generaly. This implies that the term $\int \big[\|\xbf\|_{2}^{2} \scalar{\E_{Y\sim\nu}[Y]}{\ybf}{q} + \|\ybf\|_{2}^{2}\scalar{\E_{X\sim \mu}[X]}{\xbf}{p} \dr\pi(\xbf,\ybf)$ in Lemma \ref{calculation_gw} vanishes. More presicely we have the following result: 
\begin{lemma}
\label{lemma:reduce}
Let $\Xcal$ and $\Ycal$ be compact subset of respectively $\R^{p}$ and $\R^{q}$. Let $\mu \in \P(\Xcal),\nu \in \P(\Ycal)$. We can assume without loss of generality that $\E_{X\sim \mu}[X]=0$ and $\E_{Y\sim\nu}[Y]=0$. In this case \eqref{eq:sqGW} is equivalent to:
\begin{equation}
\label{dual_problem}
\sup_{\pi \in \Pi(\mu,\nu)} \int \|\xbf\|_{2}^{2} \|\ybf\|_{2}^{2} \dr\pi(\xbf,\ybf)+2 \|\int \ybf\xbf^{T}\dr\pi(\xbf,\ybf)\|^{2}_{\F}
\end{equation}
\end{lemma}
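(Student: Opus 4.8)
The plan is to combine the translation invariance of the squared-Euclidean Gromov-Wasserstein cost with the explicit expansion furnished by Lemma \ref{calculation_gw}.

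First I would justify the centering reduction. For any $\xbf_0 \in \R^{p}$ and $\ybf_0 \in \R^{q}$, set $\tau_{\xbf_0}(\xbf)=\xbf-\xbf_0$ and note that $\|\tau_{\xbf_0}(\xbf)-\tau_{\xbf_0}(\xbf')\|_{2}^{2}=\|\xbf-\xbf'\|_{2}^{2}$, and symmetrically on $\Ycal$. Consequently the pushforward by $\tau_{\xbf_0}\times\tau_{\ybf_0}$ is a bijection from $\couplingset(\mu,\nu)$ onto $\couplingset(\tau_{\xbf_0}\#\mu,\tau_{\ybf_0}\#\nu)$ that leaves the integrand $(\|\xbf-\xbf'\|_{2}^{2}-\|\ybf-\ybf'\|_{2}^{2})^{2}$ unchanged, hence preserves the value of $\gwloss_2$ and maps optimal couplings to optimal couplings. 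Choosing $\xbf_0=\E_{X\sim\mu}[X]$ and $\ybf_0=\E_{Y\sim\nu}[Y]$ (both finite by compactness of the supports) replaces $\mu,\nu$ by centered measures without altering \eqref{eq:sqGW}; this is exactly the ``without loss of generality'' in the statement.

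Next I apply Lemma \ref{calculation_gw}. Since $\|\xbf-\xbf'\|_{2}^{2}=\|\xbf\|_{2}^{2}+\|\xbf'\|_{2}^{2}-2\scalar{\xbf}{\xbf'}{p}$, and symmetrically on $\Ycal$, the squared Euclidean costs fall within the scope of the lemma with $a=b=1$ and $c=-2$; compactness of $\Xcal,\Ycal$ guarantees the required fourth-moment integrability. Substituting $a^{2}+b^{2}=2$, $c^{2}=4$ and $(a+b)c=-4$, the lemma gives $\gwloss_2(c_\Xcal,c_\Ycal,\pi)=C_{\mu,\nu}-2Z(\pi)$ with
\[
Z(\pi) = 2\int \|\xbf\|_{2}^{2}\|\ybf\|_{2}^{2}\,\dr\pi(\xbf,\ybf) + 4\Big\|\int \ybf\xbf^{T}\,\dr\pi(\xbf,\ybf)\Big\|_{\F}^{2} - 4\int\big[\|\xbf\|_{2}^{2}\scalar{\E_{Y\sim\nu}[Y]}{\ybf}{q} + \|\ybf\|_{2}^{2}\scalar{\E_{X\sim\mu}[X]}{\xbf}{p}\big]\dr\pi(\xbf,\ybf).
\]

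Finally I invoke the centering. Because $\E_{X\sim\mu}[X]=0$ and $\E_{Y\sim\nu}[Y]=0$, the bracketed cross term vanishes identically (each of its two pieces carries a factor that is zero after centering), leaving $Z(\pi)=2\big(\int\|\xbf\|_{2}^{2}\|\ybf\|_{2}^{2}\,\dr\pi + 2\|\int\ybf\xbf^{T}\,\dr\pi\|_{\F}^{2}\big)$. Since $C_{\mu,\nu}$ depends only on the marginals and not on $\pi$, minimizing $\gwloss_2=C_{\mu,\nu}-2Z(\pi)$ over $\couplingset(\mu,\nu)$ is equivalent to maximizing $Z(\pi)$, i.e. to \eqref{dual_problem}, and the two problems share the same optimal couplings. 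I do not expect a genuine obstacle: the only points requiring care are verifying that the translation indeed puts the two coupling sets in bijection (so the infimum is literally unchanged rather than merely comparable) and observing that centering $\mu$ and $\nu$ separately is enough to annihilate the cross term, since its two summands pair $\E_\nu[Y]$ and $\E_\mu[X]$ against the respective marginals.
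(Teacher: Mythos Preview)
Your proposal is correct and mirrors the paper's own argument: both invoke translation invariance of the squared-distance cost to center $\mu$ and $\nu$ (the paper packages this as Lemmas~\ref{couplinglemma} and~\ref{translation_invariance}, but the content is the same bijection of coupling sets you describe), then apply Lemma~\ref{calculation_gw} with $a=b=1$, $c=-2$ so that the cross term carrying $\E_\mu[X]$ and $\E_\nu[Y]$ drops out and only the $\pi$-independent constant plus $Z(\pi)$ remain.
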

A proof of this lemma can be found in Section \ref{sec:proof_reduce}. In the following we will note $F(\pi)=\int \|\xbf\|_{2}^{2} \|\ybf\|_{2}^{2} \dr\pi(\xbf,\ybf)+2 \|\int \ybf\xbf^{T}\dr\pi(\xbf,\ybf)\|^{2}_{\F}$. To prove Theorem \ref{maintheo2} the idea is to observe that this problem is a maximization of a convex function of $\pi$. We can use standard convex analysis tools such that the Fenchel-Moreau duality to derive the Fenchel dual of $F(\pi)$ as detailed below.

\paragraph{Duality in the space of measure} We suppose in the following that $\Xcal,\Ycal$ are general compact spaces. In this case the dual space of $C(\Xcal \times \Ycal)$ is $\mathcal{M}(\Xcal\times \Ycal)$ (see \textit{e.g.} Memo 1.3 in \cite{San15a}). We recall the main definitions of the Legendre-Fenchel tansform:
\begin{definition}[Legendre–Fenchel Transformation]
Let $\Xcal,\Ycal$ be compact Hausdorff spaces. For a function $F: \mathcal{M}(\Xcal \times \Ycal) \rightarrow \R \cup \{+\infty\}$ we define its convex conjugate $F^{*}: C(\Xcal \times \Ycal) \rightarrow \R \cup \{+\infty\}$ and its bi-conjugate $F^{**}: \mathcal{M}(\Xcal \times \Ycal) \rightarrow \R \cup \{+\infty\}$ as:
\begin{equation}
\begin{split}
&F^{*}(h)=\sup_{\pi \in \mathcal{M}(\Xcal\times \Ycal)} \int h(x,y)\dr\pi(x,y) -F(\pi) \\
&F^{**}(\pi)=\sup_{h \in \mathcal{C}(\Xcal\times \Ycal)} \int h(x,y)\dr\pi(x,y) -F^{*}(h)
\end{split}
\end{equation}

\end{definition}
One remarkable property of the convex conjugate $F^{*}$ is that it is always l.s.c and convex because $h \rightarrow \int h \dr \pi -F(\pi)$ is an affine function which is always convex and continuous. In this way $F^{*}$ is the pointwise supremum of continuous linear functions which is convex and l.s.c. We denote by $\dom(F)=\{\pi | F(\pi)<+\infty\}$ the domain of $F$. A function is called \emph{proper} if $\dom(F)\neq \emptyset$. A fundamental result in convex analysis is the Fenchel-Moreau theorem states that the bi-conjugate of a proper and l.s.c convex function equals to the original function (see \textit{e.g.} \cite{fenchel_moreau_theo}). In other words when $F$ is convex and well-behaved one can rely on the bi-conjugate to study the original function. In our context this implies that $F(\pi)=F^{**}(\pi)$ for all $\pi \in \mathcal{M}(\Xcal \times \Ycal)$.  To compute $F^{**}(\pi)$ we will need a notion of derivative in the space of measures as described in the next definition: 
\begin{definition}[Fréchet differentiable]
Let $\Xcal,\Ycal$ be compact Hausdorff spaces. A function $F: \mathcal{M}(\Xcal\times \Ycal)\rightarrow \R$ is is Fréchet differentiable at $\pi$ if there exists $\nabla F(\pi) \in \mathcal{C}(\Xcal\times \Ycal)$ such that for any $\epsilon \in \mathcal{M}(\Xcal\times \Ycal)$ as $t\rightarrow 0$:
\begin{equation}
F(\pi+t \epsilon)=F(\pi)+t \int \nabla F(\pi) d \epsilon+o(t)
\end{equation}
\end{definition}

\paragraph{Application to the Gromov-Wasserstein problem}

Let $\Xcal\subset \R^{p},\Ycal \subset \R^{q}$ be compact spaces and $\mu \in \P(\Xcal),\nu \in \P(\Ycal)$. Since $\Xcal,\Ycal$ are compact and the norms are continuous the GW distance is finite and the function $F$ defined in \eqref{dual_problem} is proper, convex and l.s.c. By application of the Fenchel-Moreau theorem we have:
\begin{equation}
\label{moreaumoreau}
\sup_{\pi \in \Pi(\mu,\nu)}F(\pi)=\sup_{\pi \in \Pi(\mu,\nu)}F^{**}(\pi)=\sup_{\pi \in \Pi(\mu,\nu)} \sup_{h \in \mathcal{C}(\Xcal\times \Ycal)} \int h(\xbf,\ybf)\dr\pi(\xbf,\ybf) -F^{*}(h)
\end{equation}
In the following we denote by $\mathbf{V_{\pi}}=\int \ybf\xbf^{T} \dr \pi(\xbf,\ybf)$. We can prove that we can solve the dual problem by parametrizing $h$ by a linear application $\Pbf \in \R^{q\times p}$ as $h(\xbf,\ybf)=\scalar{\Pbf \xbf}{\ybf}{q}+\|\xbf\|_{2}^{2}\|\ybf\|_{2}^{2}$. Using this form the problem becomes much simpler as we only need to optimize on a finite dimensional space instead of maximizing over all continuous function.
\begin{lemma} 
\label{hcanbewritten}
If $\pi^{*}$ is a solution of the primal problem $\sup_{\pi \in \couplingset(\mu,\nu)} F(\pi)$ then there exists $\Pbf \in \R^{q\times p}$ and $h^{*}\in C(\Xcal \times \Ycal)$ of the form $h^{*}(\xbf,\ybf)=\scalar{\Pbf \xbf}{\ybf}{q}+\|\xbf\|_{2}^{2}\|\ybf\|_{2}^{2}$ such that $(\pi^{*},h^{*})$ is a solution of the dual problem \eqref{dual_problem}. 
Moreover when $h^{*}$ is in such form we have $F^{*}(h^{*})=\frac{1}{8}\|\Pbf\|^{2}_{\F}$.
\end{lemma}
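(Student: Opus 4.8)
The plan is to identify the dual optimizer $h^{*}$ with the Fréchet derivative of $F$ at $\pi^{*}$ and to exploit the Fenchel--Young equality. The function $F$ defined in \eqref{dual_problem} is proper, convex and l.s.c., and it is moreover Fréchet differentiable on $\mathcal{M}(\Xcal\times\Ycal)$, being the sum of the bounded linear functional $\pi \mapsto \int \|\xbf\|_{2}^{2}\|\ybf\|_{2}^{2}\dr\pi$ and the composition of the bounded linear map $\pi \mapsto \mathbf{V_{\pi}}$ with the smooth quadratic $\mathbf{V} \mapsto 2\|\mathbf{V}\|_{\F}^{2}$. For such a function the subdifferential at $\pi^{*}$ reduces to $\{\nabla F(\pi^{*})\}$, and the Fenchel--Young equality $F(\pi^{*}) + F^{*}(\nabla F(\pi^{*})) = \int \nabla F(\pi^{*})\dr\pi^{*}$ holds. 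Comparing with \eqref{moreaumoreau}, this is exactly the statement that the pair $(\pi^{*}, h^{*})$ with $h^{*} = \nabla F(\pi^{*})$ solves the dual problem. It therefore suffices to compute $\nabla F(\pi^{*})$ and to check its form.

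First I would compute the Fréchet derivative. The linear part of $F$ contributes the function $(\xbf,\ybf)\mapsto \|\xbf\|_{2}^{2}\|\ybf\|_{2}^{2}$. For the quadratic part, writing $\mathbf{V}_{\pi+t\epsilon} = \mathbf{V_{\pi}} + t\,\mathbf{V_{\epsilon}}$ for $\epsilon \in \mathcal{M}(\Xcal\times\Ycal)$ and expanding,
\begin{equation*}
2\|\mathbf{V}_{\pi+t\epsilon}\|_{\F}^{2} = 2\|\mathbf{V_{\pi}}\|_{\F}^{2} + 4t\,\froeb{\mathbf{V_{\pi}}}{\mathbf{V_{\epsilon}}} + o(t),
\end{equation*}
and, by the same trace identities as in Lemma \ref{froebnormduality}, $\froeb{\mathbf{V_{\pi}}}{\mathbf{V_{\epsilon}}} = \int \scalar{\mathbf{V_{\pi}}\xbf}{\ybf}{q}\dr\epsilon$. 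Hence $\nabla F(\pi)(\xbf,\ybf) = \|\xbf\|_{2}^{2}\|\ybf\|_{2}^{2} + 4\scalar{\mathbf{V_{\pi}}\xbf}{\ybf}{q}$. Setting $\Pbf \stackrel{def}{=} 4\mathbf{V}_{\pi^{*}}$, the optimizer $h^{*} = \nabla F(\pi^{*})$ is precisely of the announced form $h^{*}(\xbf,\ybf) = \scalar{\Pbf\xbf}{\ybf}{q} + \|\xbf\|_{2}^{2}\|\ybf\|_{2}^{2}$.

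It then remains to evaluate $F^{*}(h^{*})$ for an $h^{*}$ of this form. Inserting $h^{*}$ into the definition of the conjugate, the terms $\|\xbf\|_{2}^{2}\|\ybf\|_{2}^{2}$ cancel against the linear part of $F$, and using $\int \scalar{\Pbf\xbf}{\ybf}{q}\dr\pi = \froeb{\Pbf}{\mathbf{V_{\pi}}}$ this leaves
\begin{equation*}
F^{*}(h^{*}) = \sup_{\pi \in \mathcal{M}(\Xcal\times\Ycal)} \froeb{\Pbf}{\mathbf{V_{\pi}}} - 2\|\mathbf{V_{\pi}}\|_{\F}^{2}.
\end{equation*}
The map $\mathbf{V}\mapsto \froeb{\Pbf}{\mathbf{V}} - 2\|\mathbf{V}\|_{\F}^{2}$ is a strictly concave quadratic on $\R^{q\times p}$, maximized at $\mathbf{V} = \tfrac{1}{4}\Pbf$ with value $\tfrac{1}{8}\|\Pbf\|_{\F}^{2}$. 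Since for the $\Pbf$ constructed above $\tfrac{1}{4}\Pbf = \mathbf{V}_{\pi^{*}}$ lies in the image subspace $\{\mathbf{V_{\pi}} : \pi \in \mathcal{M}(\Xcal\times\Ycal)\}$, this value is attained, yielding $F^{*}(h^{*}) = \tfrac{1}{8}\|\Pbf\|_{\F}^{2}$.

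The main obstacle will be the careful handling of the infinite-dimensional duality: rigorously justifying Fréchet differentiability and the subgradient characterization in the pairing of $C(\Xcal\times\Ycal)$ with $\mathcal{M}(\Xcal\times\Ycal)$, and, for the ``moreover'' statement applied to an arbitrary $\Pbf$, guaranteeing that $\tfrac{1}{4}\Pbf$ is realizable as some $\mathbf{V_{\pi}}$. The latter amounts to the rank-one matrices $\{\ybf\xbf^{T} : \xbf\in\Xcal,\ \ybf\in\Ycal\}$ spanning $\R^{q\times p}$, which holds as soon as $\Xcal$ and $\Ycal$ span $\R^{p}$ and $\R^{q}$ respectively --- an assumption that is harmless after reducing to the effective dimensions. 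Everything else reduces to routine trace manipulation.
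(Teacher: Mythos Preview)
Your proposal is correct and follows essentially the same approach as the paper: identify $h^{*}=\nabla F(\pi^{*})$ via Fenchel--Young, compute the Fr\'echet derivative to obtain the form $\scalar{\Pbf\xbf}{\ybf}{q}+\|\xbf\|_{2}^{2}\|\ybf\|_{2}^{2}$ with $\Pbf=4\mathbf{V}_{\pi^{*}}$, and then evaluate $F^{*}$. The only minor methodological difference is in the last step: where you reduce $F^{*}(h^{*})$ to the finite-dimensional concave quadratic $\mathbf{V}\mapsto \froeb{\Pbf}{\mathbf{V}}-2\|\mathbf{V}\|_{\F}^{2}$ and argue attainability of $\tfrac{1}{4}\Pbf$, the paper writes the SVD $\Pbf=\sum_i \lambda_i \vbf_i\ubf_i^{T}$ and explicitly produces the optimizing signed measure $\gamma_{\Pbf}=\tfrac{1}{4}\sum_i\lambda_i\delta_{(\ubf_i,\vbf_i)}$, which satisfies $\mathbf{V}_{\gamma_{\Pbf}}=\tfrac{1}{4}\Pbf$ and hence $\nabla G(\gamma_{\Pbf})=0$. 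This SVD construction is precisely the concrete realization you flag as the ``main obstacle'' for arbitrary $\Pbf$; it dispenses with the spanning discussion, though it tacitly assumes the singular vectors lie in $\Xcal,\Ycal$ --- a point both treatments leave implicit.
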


A proof can be found in Section \ref{sec:lemma_calculus}. The previous Lemma \ref{hcanbewritten} can be used to prove Theorem \ref{maintheo2}. Indeed with previous notations $h$ can be written in the form $h(\xbf,\ybf)=\scalar{\Pbf \xbf}{\ybf}{q}+\|\xbf\|_{2}^{2} \|\ybf\|_{2}^{2}$. Plugging the calculus of the conjugate $F^{*}(h)=\frac{1}{8}\|\Pbf\|^{2}_{\F}$ into \eqref{moreaumoreau} gives the desired result.

\paragraph{Regularity of \eqref{eq:sqGW} optimal plans} The Fenchel dual problem is more difficult to analyse than the dual problem of the inner product case. Indeed the cost $c: (\xbf,\ybf) \rightarrow \scalar{\Pbf \xbf}{\ybf}{q}+\|\xbf\|_{2}^{2} \|\ybf\|_{2}^{2}$ hardly relates to a ``standard'' linear OT problem. Especially it does not statifies the Twist condition (see Chapter \ref{cha:ot_general}) in general and an approach with convex tools is trickier due to the term $\|\xbf\|_{2}^{2} \|\ybf\|_{2}^{2}$. We will show in the following that if $\nabla u \circ \Pbf$ is ``not too far'' from an isometry then it defines an optimal coupling. We note $\Gamma_{0}(\R,\R)$ the set of derivable convex functions from $\R$ to $\R$ and we define the following set:
\begin{definition} 
We define $\mathcal{H}(\mu,\nu)$ the set of push-forward between $\mu$ and $\nu$ defined by: 
\begin{equation}
\label{quasi_isom_set}
\mathcal{H}(\mu,\nu)=\{T\#\mu=\nu \ | \exists f\in \Gamma_{0}(\R,\R) \ , \|T(\xbf)\|^{2}_{2}=f'(\|\xbf\|^{2}_{2}) \  \mu \ \text{a.e} \}
\end{equation}
\end{definition}
When $p=q$ this set encompasses all the linear push forward of the form $T=c \ \Obf$ where $c>0,\Obf \in \mathcal{O}(p)$. It turns out that this condition is also sufficient when considering only linear push-forward (see Lemma \ref{lemma:push_lin}).

We have the following sufficient condition if we look at Monge map of the form $\nabla u \circ \Pbf$ with $u$ convex and and $\Pbf \in \R^{q\times p}$: 

\begin{prop}
\label{sufficientcond2}
Let $\Xcal$ and $\Ycal$ be compact subset of respectively $\R^{p}$ and $\R^{q}$ with $p\geq q$. Let $\mu \in \P(\Xcal),\nu \in \P(\Ycal)$. Assume that $\mu$ is regular \textit{w.r.t.} the Lebesgue measure on $\R^{p}$ and that $\E_{X\sim \mu}[X]=0$ and $\E_{Y\sim\nu}[Y]=0$ without loss of generality.

Let $(\pi^{*},\Pbf^{*})$ be optimal solution of \eqref{dual_prod_sqeq}. If $\Pbf^{*}$ is surjective there exists $u: \R^{q} \rightarrow \R$ convex such that $\nabla u \circ \Pbf^{*}$ pushes $\mu$ forward to $\nu$.
Moreover if $\nabla u \circ \Pbf^{*} \in \mathcal{H}(\mu,\nu)$ then the coupling $\gamma=(id\times \nabla u \circ \Pbf^{*})\# \mu$ is optimal for \eqref{eq:sqGW}.
\end{prop}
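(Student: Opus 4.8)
The plan is to transpose the strategy of Theorem \ref{first_sufficient} to the quadratic cost, proceeding in two stages: first the existence of the convex potential $u$, then the optimality of $\gamma$, the genuinely new ingredient being the control of the extra term $\|\xbf\|_2^2\|\ybf\|_2^2$ that appears in the dual \eqref{dual_prod_sqeq} and which is precisely what forces the hypothesis $\nabla u \circ \Pbf^* \in \mathcal{H}(\mu,\nu)$. For the first stage I would argue exactly as in Theorem \ref{first_sufficient}: since $p \geq q$, $\mu$ is regular with respect to the Lebesgue measure on $\R^p$, and $\Pbf^*$ is surjective, Proposition \ref{prop:regulariffsurj} gives that $\Pbf^* \# \mu$ is regular with respect to the Lebesgue measure on $\R^q$; McCann's theorem (Theorem \ref{mccantheo}) then produces a convex $u : \R^q \to \R$ with $\nabla u \# (\Pbf^*\#\mu) = \nu$, that is, $\nabla u \circ \Pbf^*$ pushes $\mu$ forward to $\nu$.

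For the optimality, by Theorem \ref{maintheo2} it suffices to show that $(\gamma,\Pbf^*)$ maximizes the dual objective $G(\pi,\Pbf) = \int (\scalar{\Pbf \xbf}{\ybf}{q} + \|\xbf\|_2^2\|\ybf\|_2^2)\,\dr\pi - \tfrac18\|\Pbf\|_{\F}^2$. Since $(\pi^*,\Pbf^*)$ is already a maximizer, and since the penalty $-\tfrac18\|\Pbf^*\|_{\F}^2$ is common to $G(\pi^*,\Pbf^*)$ and $G(\gamma,\Pbf^*)$, it is enough to compare the two linear terms and to prove $G(\pi^*,\Pbf^*) \leq G(\gamma,\Pbf^*)$. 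The inner-product term I would handle by reproducing verbatim the Fenchel--Young computation of Theorem \ref{first_sufficient}: from $\scalar{\Pbf^*\xbf}{\ybf}{q} \leq u(\Pbf^*\xbf) + u^*(\ybf)$, the fact that $\nabla u \circ \Pbf^*$ pushes $\mu$ to $\nu$, and the equality case $u(\Pbf^*\xbf)+u^*(\nabla u(\Pbf^*\xbf)) = \scalar{\Pbf^*\xbf}{\nabla u(\Pbf^*\xbf)}{}$, one obtains $\int \scalar{\Pbf^*\xbf}{\ybf}{q}\,\dr\pi^* \leq \int \scalar{\Pbf^*\xbf}{\ybf}{q}\,\dr\gamma$.

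The crux is the term $\int \|\xbf\|_2^2\|\ybf\|_2^2\,\dr\pi$, and this is exactly where $T := \nabla u \circ \Pbf^* \in \mathcal{H}(\mu,\nu)$ enters: by definition there is a differentiable $f \in \Gamma_{0}(\R,\R)$ with $\|T(\xbf)\|_2^2 = f'(\|\xbf\|_2^2)$ $\mu$-a.e. I would then apply the scalar Fenchel--Young inequality $st \leq f(s) + f^*(t)$ with $s = \|\xbf\|_2^2$, $t = \|\ybf\|_2^2$ to get, for every $\pi \in \couplingset(\mu,\nu)$, the $\pi$-independent bound $\int \|\xbf\|_2^2\|\ybf\|_2^2\,\dr\pi \leq \int f(\|\xbf\|_2^2)\,\dr\mu + \int f^*(\|\ybf\|_2^2)\,\dr\nu$, using only the marginals. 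Evaluating at $\gamma = (id \times T)\#\mu$, the scalar equality case $s\,f'(s) = f(s) + f^*(f'(s))$ together with $T\#\mu=\nu$ shows that $\gamma$ attains this bound, whence $\int \|\xbf\|_2^2\|\ybf\|_2^2\,\dr\pi^* \leq \int \|\xbf\|_2^2\|\ybf\|_2^2\,\dr\gamma$. The decisive observation is that $\gamma$ saturates both Fenchel--Young inequalities at once — the one for $u$ because $\ybf = \nabla u(\Pbf^*\xbf)$, the one for $f$ because $\|\ybf\|_2^2 = f'(\|\xbf\|_2^2)$ — and it is precisely membership in $\mathcal{H}(\mu,\nu)$ that makes these two equality conditions simultaneously compatible for the single map $T$. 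Adding the two estimates gives $G(\pi^*,\Pbf^*) \leq G(\gamma,\Pbf^*)$, which forces equality by optimality; hence $\gamma$ is optimal for \eqref{dual_prod_sqeq} and, by Theorem \ref{maintheo2}, for \eqref{eq:sqGW}.

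I expect the main obstacle to be the rigorous treatment of $\int \|\xbf\|_2^2\|\ybf\|_2^2\,\dr\pi$ rather than the inner-product part: I would need to justify the integrability of $f(\|\xbf\|_2^2)$ and $f^*(\|\ybf\|_2^2)$ (which the compactness of $\Xcal,\Ycal$ delivers, since the relevant arguments $\|\xbf\|_2^2, \|\ybf\|_2^2$ range over bounded intervals) and to check that the one-dimensional Legendre transform $f^*$ is well behaved there. Conceptually, however, this term reduces to a one-dimensional monotone rearrangement of the squared norms encoded by $\mathcal{H}(\mu,\nu)$, and everything else follows the template already established for the inner-product case.
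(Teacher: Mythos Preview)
Your proposal is correct and follows essentially the same route as the paper's proof: Proposition \ref{prop:regulariffsurj} plus McCann's theorem for the existence of $u$, then a two-term Fenchel--Young argument applied separately to $\scalar{\Pbf^*\xbf}{\ybf}{q}$ (via $u$) and to $\|\xbf\|_2^2\|\ybf\|_2^2$ (via the scalar convex $f$ furnished by $\mathcal{H}(\mu,\nu)$), with $\gamma$ saturating both inequalities simultaneously. Your explicit remark that the membership in $\mathcal{H}(\mu,\nu)$ is precisely what makes the two equality cases compatible for a single map is the key insight, and your handling of integrability via compactness is exactly what the paper implicitly relies on.
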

\begin{proof}
Let $(\Pbf^{*},\pi^{*})$ be optimal solution of \eqref{dual_prod_sqeq} with $\Pbf^{*}$ surjective. We have seen previously that $\Pbf^{*} \# \mu$ is regular with respect to the Lebesgue measure on $\R^{q}$ using Proposition \ref{prop:regulariffsurj} such that there exists $u: \R^{q} \rightarrow \R$ convex such that $\nabla u \circ \Pbf^{*}$ pushes $\mu$ forward to $\nu$. Moreover:

\begin{equation*}
\begin{split}
\int \scalar{\Pbf^{*}\xbf}{y}{q} + \|\xbf\|^{2}_{2}\|\ybf\|^{2}_{2}\dr \pi^{*}(\xbf, \ybf) & \stackrel{1}{\leq} \int \left(u(\Pbf^{*}\xbf)+u^{*}(\ybf)\right) \dr \pi^{*}(\xbf, \ybf) + \int \|\xbf\|^{2}_{2}\|\ybf\|^{2}_{2}\dr \pi^{*}(\xbf, \ybf)\\
&= \int u(\Pbf^{*}\xbf) \dr \mu(\xbf) + \int u^{*}(\ybf) \dr \nu(\ybf) + \int \|\xbf\|^{2}_{2}\|\ybf\|^{2}_{2}\dr \pi^{*}(\xbf, \ybf)\\
&\stackrel{2}{=} \int u(\Pbf^{*} \xbf) \dr \mu(\xbf) + \int u^{*}(\ybf) \dr (\nabla u \circ \Pbf^{*}\#\mu)(\ybf) + \int \|\xbf\|^{2}_{2}\|\ybf\|^{2}_{2}\dr \pi^{*}(\xbf, \ybf)\\
&= \int u(\Pbf^{*} \xbf) \dr \mu(\xbf) + \int u^{*}(\nabla u(\Pbf^{*}\xbf)) \dr \mu(\xbf) + \int \|\xbf\|^{2}_{2}\|\ybf\|^{2}_{2}\dr \pi^{*}(\xbf, \ybf)\\
&\stackrel{3}{=} \int \scalar{\Pbf^{*} \xbf}{ \nabla u(\Pbf^{*} \xbf)}{q} \dr \mu(x)+ \int \|\xbf\|^{2}_{2}\|\ybf\|^{2}_{2}\dr \pi^{*}(\xbf, \ybf) \\
\end{split}
\end{equation*}

In (1) we used convexity of $u$, in (2) we used that $\nabla u \circ \Pbf^{*}$ is a push-forward and in (3) we used $u^{*}(\nabla u(\xbf))+u(\xbf)=\scalar{\xbf}{\nabla u(\xbf)}{}$. Now suppose that $\nabla u \circ \Pbf^{*} \in \mathcal{H}(\mu,\nu)$. Then there exists $f\in \Gamma_{0}(\R,\R) \ , \|T(\xbf)\|^{2}_{2}=f'(\|\xbf\|^{2}_{2}) \  \mu \ \text{a.e}$. 

Moreover we have $\int \|\xbf\|^{2}_{2}\|\ybf\|^{2}_{2}\dr \pi^{*}(\xbf, \ybf) \leq \int f(\|\xbf\|^{2}_{2})+ f^{*}(\|\ybf\|^{2}_{2})\dr \pi^{*}(\xbf, \ybf)$ by Young's inequality (see Memo \ref{memo:young}). In this way:
\begin{equation*}
\begin{split}
\int \scalar{\Pbf^{*}\xbf}{\ybf}{q} + \|\xbf\|^{2}_{2}\|\ybf\|^{2}_{2}\dr \pi^{*}(\xbf, \ybf) &\leq \int \scalar{\Pbf^{*} \xbf}{ \nabla u(\Pbf^{*} \xbf)}{q} \dr \mu(\xbf)+ \int f^{*}(\|\xbf\|^{2}_{2})\dr \mu(\xbf)+ \int f(\|\ybf\|^{2}_{2})\dr \nu(\ybf) \\
&\stackrel{5}{=} \int \scalar{\Pbf^{*} \xbf}{ \nabla u(\Pbf^{*} \xbf)}{q} \dr \mu(\xbf)+ \int f^{*}(\|\nabla u (\Pbf^{*} \xbf)\|^{2}_{2})+ f(\|\xbf\|^{2}_{2})\dr \mu(\xbf) \\
&\stackrel{6}{=} \int \scalar{\Pbf^{*} \xbf}{ \nabla u (\Pbf^{*} \xbf)}{q} \dr \mu(\xbf)+ \int \|\nabla u (\Pbf^{*} \xbf)\|^{2}_{2}\|\ybf\|^{2}_{2}\dr \mu(x) \\
&\stackrel{7}{=} \int \scalar{\Pbf^{*} \xbf}{\ybf}{q} + \|\xbf\|^{2}_{2}\|\ybf\|^{2}_{2}\dr \gamma(\xbf, \ybf)
\end{split}
\end{equation*}
In (5) we used that $\nabla u \circ \Pbf^{*}$ is a push-forward. In (6) we used that $f$ satisfies $\|\nabla u (\Pbf^{*} \xbf)\|^{2}_{2}  \in \partial f(\|\xbf\|_{2}^{2})=f'(\|\xbf\|_{2}^{2})$ by definition of $\mathcal{H}(\mu,\nu)$ which implies $f^{*}(\|\nabla u (\Pbf^{*} \xbf)\|^{2}_{2})+ f(\|\xbf\|^{2}_{2})=\|\nabla u (\Pbf^{*} \xbf)\|^{2}_{2}\|\xbf\|^{2}_{2}$ (see Memo \ref{memo:memo_convex}). In (7) $\gamma$ is defined as $\gamma=(id\times \nabla u \circ \Pbf^{*})\# \mu$. Overall $(\Pbf^{*},\gamma)$ is optimal for \eqref{dual_prod_sqeq}. Using Theorem \ref{maintheo2} this proves that $\gamma$ is optimal for \eqref{eq:sqGW} which concludes the proof.

\end{proof}  

\begin{figure*}[!b]
\begin{memo}[Young's Inequality]
\label{memo:young}
Let $a,b \in \R_{+} \times \R_{+}$ and $p,q$ real numbers greater than $1$ with $\frac{1}{p}+\frac{1}{q}=1$. Then:
\begin{equation}
ab\leq \frac{a}{p}+\frac{b}{q}
\end{equation}
More generally if $f$ is a convex function and $f^{*}$ is its Legendre transform then:
\begin{equation}
ab\leq f(a)+f^{*}(b)
\end{equation}
which is a consequence of the celebrated Fenchel-Young inequality (see Memo \ref{memo:memo_convex})
\end{memo}
\end{figure*}  

The condition $\nabla u \circ \Pbf \in \mathcal{H}(\mu,\nu)$ is quite strong compared to the conditions of Theorem \ref{first_sufficient}. For example in the case where $\nabla u$ is linear only orthogonal transformations are admissible (modulo a scaling). We believe that this result can be improved and we leave this study for further works. In general, and without any furter assumption on $\mu$ and $\nu$, we postulate that there might be degenerate cases in which $\mu$ is regular but there is no deterministic optimal couplings.

\subsection{Optimization and numerical experiments}
\label{sec:numerical}

In this section we provide numerical solutions for Gromov-Wasserstein problems in Euclidean spaces. We will rely on the equivalent formulation defined in Theorem \ref{maintheorem} and Theorem \ref{maintheo2}. We consider two discrete probability measures $\mu= \frac{1}{n} \sum_{i=1}^{n} a_{i} \delta_{\xbf_{i}} \in \P(\R^{p})$ and $\nu= \sum_{i=1}^{m} b_{j} \delta_{\ybf_{j}} \in \P(\R^{q})$ with $\a \in \simplex_n,\b \in \simplex_m$. 

\paragraph{Numerical solution for \eqref{GWprob}} We note $\X=(\xbf_{i})_{i=1}^{n} \in \R^{n\times p}$,$\Y=(\ybf_{j})_{j=1}^{m} \in \R^{m\times q}$. As seen in Theorem \ref{maintheorem} computing \eqref{GWprob} can be achieved by solving $\sup_{\GG \in \couplingset(\a,\b)} \sup_{\|\Pbf\|_{\F}=\sqrt{p}} \froeb{\X \Pbf^{T} \Y^{T}}{\GG}$ or equivalently:
\begin{equation}
\label{eq:discreteinner}
\min_{\GG \in \couplingset(\a,\b)} \min_{\|\Pbf\|_{\F}=\sqrt{p}} \froeb{-\X \Pbf^{T} \Y^{T}}{\GG}
\end{equation}
The resulting problem is convex \textit{w.r.t.} $\Pbf$ and $\pi$ (but not jointly convex). We propose to solve equation \eqref{eq:discreteinner} using Block Coordinate Descent (BCD) which alternates between minimizing \textit{w.r.t.} $\Pbf$ and $\GG$. Interestingly enough the minimization of $\GG$ with $\Pbf$ fixed is a linear transportation problem with ground cost $-\X \Pbf^{T} \Y^{T}$ which can be computed using standard solvers (see Chapter \ref{cha:ot_general}). 

The minimization \textit{w.r.t.} $\Pbf$ with $\GG$ fixed reads $\sup_{\|\Pbf\|_{\F}=\sqrt{p}} \froeb{\X \Pbf^{T} \Y^{T}}{\GG}$ and has a closed-form solution based on Lemma \ref{froebnormduality}. More precisely it reads: 
\begin{equation}
\Pbf\leftarrow\frac{\sqrt{d}}{\|\Y^{T} \GG^{T} \X \|_{\F}} \Y^{T} \GG^{T} \X
\end{equation}

This procedure is presented in Algorithm \ref{alg:inner_gw}. The complexity is driven by the linear OT problem which is $O(n^{3}\log(n))$ when $n=m$. The complexity for computing $\Pbf$ at each iteration is $O(mn(q+p)+pq)$ with $O(mn(q+p))$ for the computing $\Y^{T} \GG^{T} \X$ and $O(pq)$ for $\|\Y^{T} \GG^{T} \X \|_{\F}$. The overall complexity when $n=m$ is then $O(n^{3}\log(n)+n^{2}(q+p)+pq)$.

\begin{algorithm}[t]
\caption{Gromov-Wasserstein with inner products \label{alg:inner_gw}}
\begin{algorithmic}[1]
    \State Require $\mu= \frac{1}{n} \sum_{i=1}^{n} a_{i} \delta_{\xbf_{i}} \in \P(\R^{p})$ and $\nu= \sum_{i=1}^{m} b_{j} \delta_{\ybf_{j}} \in \P(\R^{q})$
    \State Set $\X=(\xbf_{i})_{i=1}^{n} \in \R^{n\times p}$,$\Y=(\ybf_{j})_{j=1}^{m} \in \R^{m\times q}$
    \State Initialize $\GG=\a\b^{T}$. 
    \While {not converged}
    \State $\Pbf\leftarrow \frac{\sqrt{d}}{\|\Y^{T} \GG^{T} \X \|_{\F}} \Y^{T} \GG^{T} \X  $ // (maximize \eqref{InvOT} \textit{w.r.t.} $\Pbf$)       
    \State $\GG \leftarrow \argmin_{\GG\in \couplingset(\a,\b)} \froeb{-\X \Pbf^{T} \Y^{T}}{\GG}$ // (maximize \eqref{InvOT} \textit{w.r.t.} $\pi$ $\sim$ linear OT)
    \EndWhile
    \State return $(\pi,\Pbf)$
\end{algorithmic}
\end{algorithm}

\paragraph{Numerical solution for \eqref{eq:sqGW}} We note $\xbf=\diag(\X \X^{T}) \in \R^{n}$, $\ybf=\diag(\Y \Y^{T}) \in \R^{m}$. As seen in Theorem \ref{maintheo2} computing \eqref{eq:sqGW} can be achieved by first substracting the mean of the measures and then by solving:
\begin{equation}
\label{eq:discretesqec}
\min_{\GG \in \couplingset(\a,\b)} \min_{\Pbf \in \R^{q\times p}} \froeb{-(\X \Pbf^{T} \Y^{T} + \xbf \ybf^{T})}{\GG}+\frac{1}{8} \|\Pbf\|_{\F}^{2}
\end{equation}
This problem also is convex \textit{w.r.t.} $\GG$ and $\Pbf$ (but not jointly convex) and we can rely on a BCD procedure to find a local minimum. The minimization \textit{w.r.t.} to $\GG$ with $\Pbf$ fixed is a linear OT problem with ground cost $-(\X \Pbf^{T} \Y^{T} + \xbf \ybf^{T})$ and the minimization \textit{w.r.t.} $\Pbf$ with $\GG$ fixed reads:
\begin{equation}
 \min_{\Pbf \in \R^{q\times p}} \froeb{-\X \Pbf^{T} \Y^{T}}{\GG}+\frac{1}{8} \|\Pbf\|_{\F}^{2} \stackrel{def}{=} \min_{\Pbf \in \R^{q\times p}} G(\Pbf)
 \end{equation}
which is a unconstrained QP which can be solved in closed-form. Indeed the gradient reads $\nabla G(\Pbf)=-\Y^{T} \GG^{T} \X+\frac{1}{4} \Pbf$ and by first order condition a solution can be found at $\nabla G(\Pbf)=0$. In this way the iteration for $\Pbf$ are:
\begin{equation}
\Pbf\leftarrow 4 \Y^{T} \GG^{T} \X
\end{equation}
This procedure is presented in Algorithm \ref{alg:sq_gw}. Computing all the terms $\|\xbf^{(i)}\|_2^{2}\|\ybf^{(j)}\|_2^{2}$ where $\xbf^{(i)}$ is the $i$-th row of $\X$ has a naive complexity of $O(np+mq+mn)$ and one needs also to compute $4 \Y^{T} \GG^{T} \X$ at a $O(mn(q+p))$ price. The overall complexity is then $O(n^{3}\log(n)+(p+q)n^{2}+n^{2}+n(p+q))=O(n^{3}\log(n)+(p+q)n^{2})$ when $n=m$.

\begin{algorithm}[t!]
\caption{Gromov-Wasserstein with squared Euclidean distances \label{alg:sq_gw}}
\begin{algorithmic}[1]
    \State Require $\mu= \frac{1}{n} \sum_{i=1}^{n} a_{i} \delta_{\xbf_{i}} \in \P(\R^{p})$ and $\nu= \sum_{i=1}^{m} b_{j} \delta_{\ybf_{j}} \in \P(\R^{q})$
    \State Subtract the mean: $\xbf_i\leftarrow \xbf_i-\frac{1}{n}\sum_k \xbf_k$, $\ybf_j\leftarrow \ybf_j-\frac{1}{m}\sum_k \ybf_k$.
    \State Set $\X=(\xbf_{i})_{i=1}^{n} \in \R^{n\times p}$,$\Y=(\ybf_{j})_{j=1}^{m} \in \R^{m\times q}$, $\xbf=\diag(\X \X^{T}) \in \R^{n}$, $\ybf=\diag(\Y \Y^{T}) \in \R^{m}$
    \State Initialize $\GG=\a\b^{T}$. 
    \While {not converged}
    \State $\Pbf\leftarrow 4 \Y^{T} \GG^{T} \X  $ // (maximize \eqref{dual_prod_sqeq} \textit{w.r.t.} $\Pbf$)       
    \State $\GG \leftarrow \argmin_{\GG\in \couplingset(\a,\b)} \froeb{-(\X \Pbf^{T} \Y^{T} + \xbf \ybf^{T})}{\GG}$ // (maximize \eqref{dual_prod_sqeq} \textit{w.r.t.} $\pi$ $\sim$ linear OT)
    \EndWhile
    \State return $(\pi,\Pbf)$
\end{algorithmic}
\end{algorithm}

\begin{figure}[t!]
\begin{center}
\includegraphics[width=1.\linewidth]{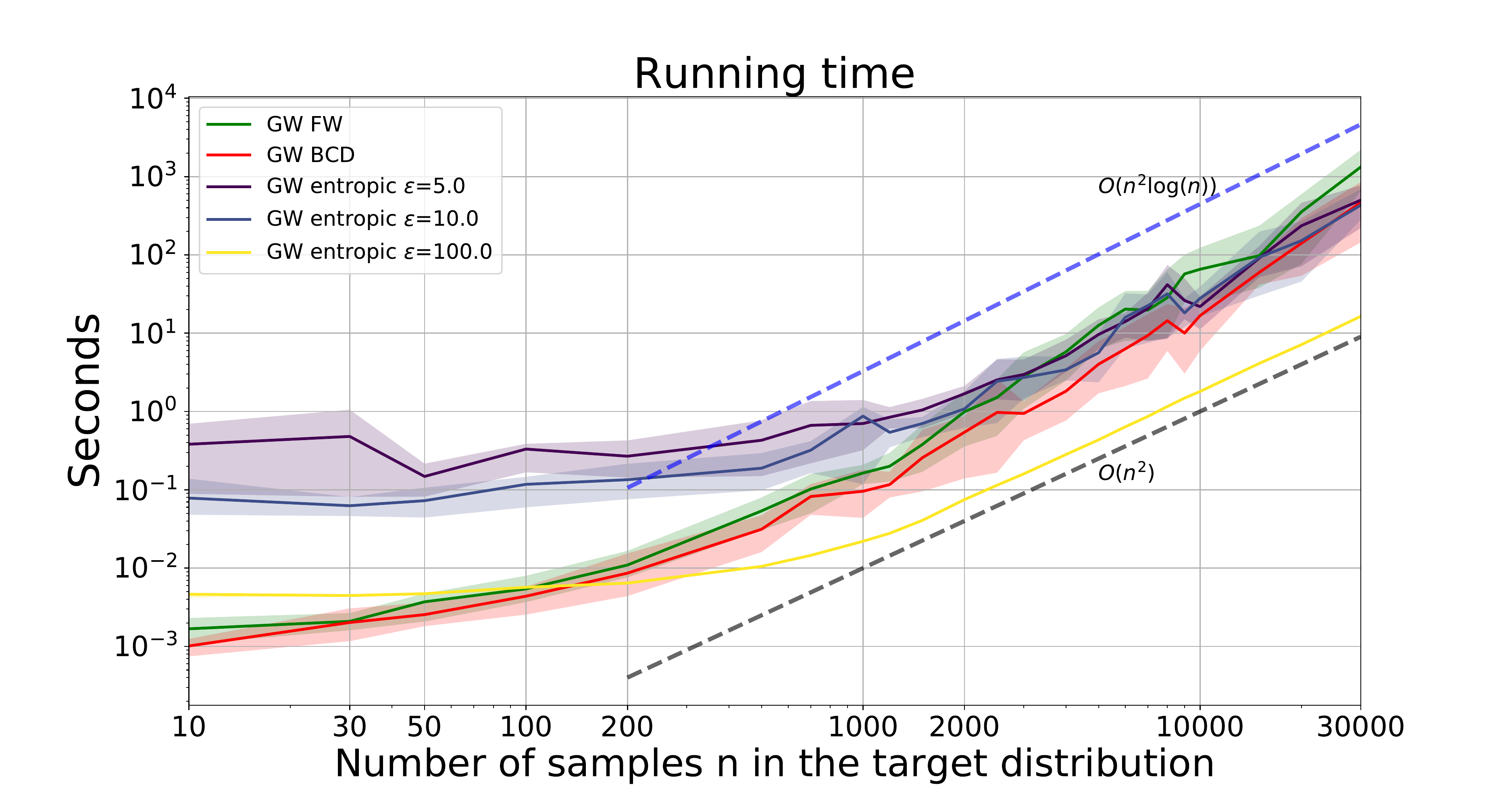}
\end{center}    
\caption{\label{fig:runtime_sq} Runtimes comparison of $\gw$ using the BCD approach Algorithm \ref{alg:sq_gw}. (GW BCD), the Frank-Wolfe approach (GW FW) and entropic-$\gw$ between two random distributions whose source points vary from $10$ to $30 000$ in log-log scale. The time does not include the calculation of the pair-to-pair distances but only the time of the different loops. The variance is computed among the different $10$ realizations. }
\end{figure}

\paragraph{Runtimes comparison} We perform a comparison between runtimes of $\gw$ using different algorithmic solutions. We consider the Gromov-Wasserstein problem between $10$ realizations of one source 2D random probability measures of $n \in \{10,...,30 000\}$ points and one target 3D random measures with $m=100$ points. We compute the Gromov-Wasserstein distance using squared Euclidean distances as $c_\Xcal,c_\Ycal$. We compare the timings of the Frank-Wolfe algorithm (see Chapter \ref{cha:fgw}), the entropic regularized approach with $\epsilon \in \{5,10,100\}$ (see Chapter \ref{cha:ot_general}), and the BCD approach Algorithm \ref{alg:sq_gw} using the same initialization of $\GG=\a\b^{T}$ for all methods. The result is depicted in Figure \ref{fig:runtime_sq}. Please note that the timings are calculated without taking into account the time needed for computing the matrices $\C_1,\C_2$ or $\X,\Y$ but are only based on the loops of the different algorithms. The entropic-$\gw$ could not be computed with $\epsilon\leq 5$ due to overflows in the Sinkhorn algorithm. Another difficulty of computing entropic-$\gw$ is the fact that the regularization parameter $\epsilon$ is inversely proportional to the gradient step \cite{peyre2016gromov}. Thus only high value of $\epsilon$ are computable in reasonable time and for $\epsilon < 5$ we observe that the convergence is very slow even on this simple example. This often implies that only very blurred solutions can be computed which explains the low variance of entropic-$\gw$ with $\epsilon=100$. We can see in Figure \ref{fig:runtime_sq} that the BCD approach is a little bit faster than the FW approach, suggesting that the BCD may converge faster to a local minimum than FW, which is, of course, data dependant. Overall both methods are equally rapid on this example.

\paragraph{Costs comparison} In this experiment we compared the ability of the BCD approach to find a better solution than the FW approach. We consider the Gromov-Wasserstein problem with both inner product similarities and squared Euclidean distances. We compute $200$ distances using both algorithms where each distance is calculated by: (1) We draw $n \in \{500,2000\}$ samples from two 10 dimensional normal distributions (2) We associate to these points random weights $(\a,\b) \in \Sigma_n \times \Sigma_n$ (3) We initialize the algorithms with the same random coupling matrix $\GG$. The initialization is computed by sampling a random matrix with positive entries and by scaling it with the Sinkhorn algorithm in order to have the prescribed marginals $\a,\b$. Results are depicted in Figure \ref{fig:costs}. We plot the $\gw$ cost obtained by BCD approach \textit{vs} the cost obtained by the FW approach after convergence of each algorithm. As seen in Figure \ref{fig:costs} there is no strong differences between FW approach and the BCD approach when squared Euclidean distances are considered (left part of the figure): the $FW$ algorithm finds a better solution in $51\%$ of the cases when $n=2000$ and $56\%$ for $n=500$. Surprisingly both algorithms seem to lead to the same solution when inner product similarities are used (right part of the figure). Indeed in $99\%$ of the cases the costs are identical up to $1e-7$.

\begin{figure}[t!]
\begin{center}
\includegraphics[width=0.9\linewidth]{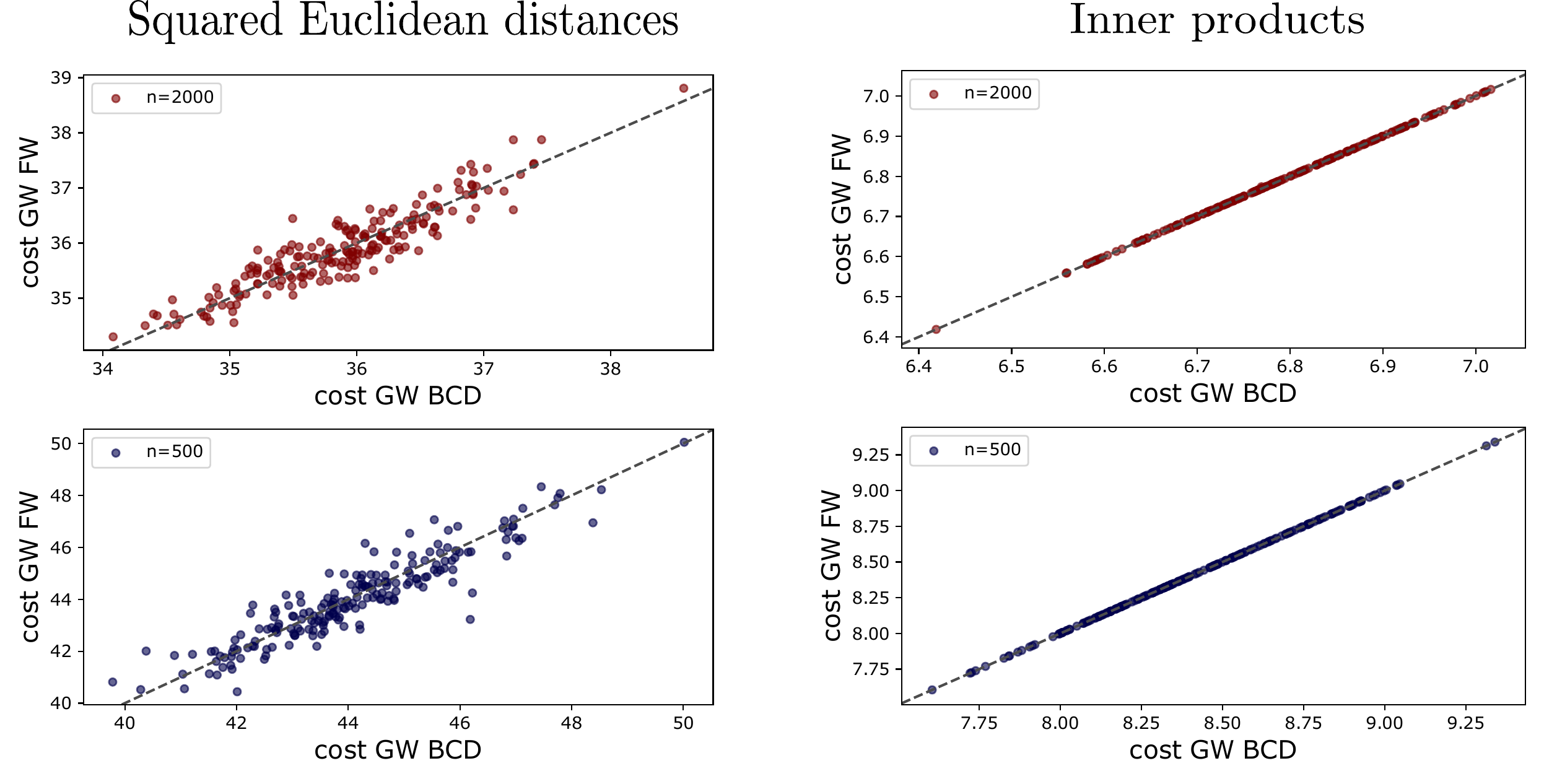}

\end{center}    
\caption{\label{fig:costs} Comparison of the FW solver and the BCD solver for computing the Gromov-Wasserstein distance. {\bf (left)} $c_\Xcal,c_\Ycal$ are squared Euclidean distances. {\bf (right)} $c_\Xcal,c_\Ycal$ are inner products. The $x$-axis is the $\gw$ cost obtained by the BCD approach and the $y$-axis the $\gw$ cost obtained with the FW algorithm for respectively {\bf (top)} $n=2000$ samples {\bf (bottom)} $n=500$ samples in each distribution. The dashed line represents the diagonal $y=x$.}
\end{figure}

\paragraph{Conclusion on the experiments} As seen in these experiments, there are no significant differences between the FW approach and the BCD procedure. It is not surprising for the runtimes comparison since both methods have a theoretical cubic complexity. The cost comparison experiment may also suggest that both BCD procedure and FW are equivalent in the case of inner product similarities, that is the iterations of the BCD are the same than the iterations of the FW. Further studies could be conducted to examine the high-dimensional setting where we postulate that the BCD approach may lead to better results that the FW procedure, \ie\ may converge faster a produce a better solution.

\subsection{The Gromov-Monge problem in Euclidean spaces}
\label{sec:gm}
As seen in Section \ref{sec:sgw} the Gromov-Wasserstein problem is also very related the Gromov-Monge problem \cite{memoli_gromov_monge_2018} which is defined in Euclidean spaces for $\mu \in \P(\R^{p}),\nu \in \P(\R^{q})$ as:
\begin{definition}[Gromov-Monge]
The Gromov-Monge problem aims at finding: 
\begin{equation}
\label{gm_eq}
\tag{GM}
\gm^{2}_{2}(\mu,\nu)= \underset{T\#\mu=\nu}{\inf} J(T) = \underset{T\#\mu=\nu}{\inf} \int \left(\|\xbf-\xbf'\|_{2}^{2}-\|T(\xbf)-T(\xbf')\|_{2}^{2}\right)^{2} \dr \mu(\xbf) \dr \mu(\xbf')
\end{equation}
With the understanding that $\gm_{2}(\mu,\nu)=+\infty$ when $\{T:\R^{p}\rightarrow \R^{q} | T\#\mu=\nu\}=\emptyset$
\end{definition}

 The problem \eqref{gm_eq} is the exact counterpart for the Gromov-Wasserstein distance of the Monge problem for the Wasserstein distance. If we note $\gw_2(\mu,\nu)$ the Gromov-Wasserstein distance with squared Euclidean costs then we have $\gw_2(\mu,\nu)\leq \gm_{2}(\mu,\nu)$ but in general both problems are not equivalent (see \cite{memoli_gromov_monge_2018}). Note that when problem \ref{prob_ouvert} holds, \ie\ when the Gromov-Wasserstein problem admits an optimal transport plan supported on a deterministic map then both \eqref{gm_eq} and \eqref{eq:sqGW} are equivalent. Moreover we have seen previously in Theorem \ref{sovable_gw} that when $\mu$ and $\nu$ are discrete probability measures with the same number of atoms and uniform weights then \eqref{gm_eq} is also equivalent to the Gromov-Wasserstein problem \eqref{eq:sqGW} so that that $\gw_2(\mu,\nu)= \gm_{2}(\mu,\nu)$. 

 We propose to study further the Gromov-Monge problem in this section. Especially we consider the special case of Gaussian measures $\mu=\mathcal{N}(\mathbf{m_{\nu}},\Sigmab_{\nu}),\nu=\mathcal{N}(\mathbf{m_{\mu}},\Sigmab_{\mu})$. It is motivated by the linear OT theory where, when $p=q$, there is a close form solution for $\wass_2$. In this case the optimal Monge map is \emph{linear} and given by \cite{takatsu2011} $T: \xbf\rightarrow \mathbf{m_{\nu}}+\Abf(\xbf-\mathbf{m_\nu})$ where:
\begin{equation}
\Abf=\Sigmab_{\mu}^{-1/2}(\Sigmab_{\mu}^{1/2}\Sigmab_{\nu}\Sigmab_{\mu}^{1/2})^{\frac{1}{2}}\Sigmab_{\mu}^{-1/2}
\end{equation}
Can we derive the same type of result for the Gromov-Monge geometry? We will prove that when restricted to \emph{linear} push-forward and in the special case $p=q$ the problem admits also a close form expression. In this way we consider the following \emph{linear Gromov-Monge} problem:
\begin{equation}
\label{lgm_eq}
\tag{LGM}
L\gm^{2}_{2}(\mu,\nu)= \underset{\begin{smallmatrix} T\#\mu=\nu \\ \text{T is linear} \end{smallmatrix}}{\inf} J(T) = \underset{\begin{smallmatrix} T\#\mu=\nu \\ \text{T is linear} \end{smallmatrix}}{\inf} \int \left(\|\xbf-\xbf'\|_{2}^{2}-\|T(\xbf)-T(\xbf')\|_{2}^{2}\right)^{2} \dr \mu(\xbf) \dr \mu(\xbf')
\end{equation}
We recall that $\Stief$ is the Stiefel manifod defined by $\Stief=\{\Bbf \in \R^{q\times p} | \Bbf^{T}\Bbf=\mathbf{I_{p}} \}$. The main result of this section is the following theorem:
\begin{theo}
\label{maintheo3}
Let $\mu=\mathcal{N}(0,\Sigmab_{\nu}) \in \P(\R^{p}),\nu=\mathcal{N}(0,\Sigmab_{\mu})\in \P(\R^{q})$ centered without loss of generality. Let $\Sigmab_{\mu}=\V_{\mu} \Dbf_{\mu} \V_{\mu}^\top,\Sigmab_{\nu}=\V_{\nu} \Dbf_{\nu} \V_{\nu}^\top$ be the diagonalizations of the covariance matrices such that eigenvalues of $\Dbf_{\mu}$ and $\Dbf_{\nu}$ are ordered nondecreasing.

When $p\neq q$ we have:
\begin{equation}
\label{eq:eqiv2}
\begin{split}
L\gm^{2}_{2}(\mu,\nu)= 4(\tr(\Sigmab_\mu)-\tr(\Sigmab_\nu))^{2}+8(\tr(\Sigmab_\mu \Sigmab_\mu)+\tr(\Sigmab_\nu \Sigmab_\nu)) +16\min_{\Bbf \in \Stief}-\tr(\Dbf_\mu\Bbf^\top  \Dbf_\nu  \Bbf)
\end{split}
\end{equation}

When $p=q$, an optimal linear Monge map is given by $T(\xbf)=\Abf \xbf$ where:
\begin{equation}
\Abf=\V_\nu  \Dbf^{1/2}_\nu \Dbf^{-1/2}_\mu \V_\mu^\top=\Sigmab_{\nu}^{1/2}\V_{\nu}\V_{\mu}^{\top}\Sigmab_{\mu}^{-1/2}
\end{equation}
so that:
\begin{equation}
\label{optimal_cost}
L\gm^{2}_{2}(\mu,\nu)= 4(\tr(\Sigmab_\mu)-\tr(\Sigmab_\nu))^{2}+8(\tr(\Sigmab_\mu \Sigmab_\mu)+\tr(\Sigmab_\nu \Sigmab_\nu))-16\tr(\Dbf_{\mu}\Dbf_{\nu})
\end{equation}
\end{theo}
\begin{proof}[Sketch of proof]
We can show that when considering only linear push-forward the problem \eqref{gm_eq} can be recast as a Orthogonality constrained Quadratic Program (QPOC) which, when $p=q$, admits a close form. This is made possible thanks to the Gaussian assumption which allows to compute the $4$-th order moments of the distributions using Isserlis theorem \cite{isserlis} which prove that $4$-th order moments can be computed using the $2$-nd order ones. We give the full proof in Section \ref{sec:prof_maintheo3}.
\end{proof}

\paragraph{Geometric interpretations of $L\gm$}  Interestingly enough the optimal linear map can be related, \textit{inter alia}, to the optimal linear map of the classical Monge problem. In the following we consider $p=q$ so that we have using Theorem \ref{maintheo3}: 
\begin{equation}
L\gm^{2}_2(\mu,\nu)=4(\tr(\Sigmab_\mu)-\tr(\Sigmab_\nu))^{2}+8(\tr(\Sigmab_\mu \Sigmab_\mu)+\tr(\Sigmab_\nu \Sigmab_\nu))-16\tr(\Dbf_{\mu}\Dbf_{\nu})
\end{equation}
wich corresponds to $\Abf=\V_\nu  \Dbf^{1/2}_\nu \Dbf^{-1/2}_\mu \V_\mu^\top=\Sigmab_{\nu}^{1/2}\V_{\nu}\V_{\mu}^{\top}\Sigmab_{\mu}^{-1/2}$. 

\begin{itemize}

\item[$\bullet$] \textbf{Case $\Sigmab_\mu= \ \Sigmab_\nu$.} When the covariances are equals, \ie\ $\Sigmab_\mu=\Sigmab_\nu$ we can conclude that $L\gm_2(\mu,\nu)=0+8(\tr(\Sigmab_\mu \Sigmab_\mu)+\tr(\Sigmab_\mu \Sigmab_\mu))-16\tr(\Dbf_{\mu}\Dbf_{\mu})=16\tr(\Sigmab_\mu \Sigmab_\mu)-16\tr(\Sigmab_\mu \Sigmab_\mu)=0$ which corresponds to $\Abf=\mathbf{I}$. This implies that an optimal way for transferring the masses so that, on average, the pair-to-pair distances are preserved is by the means of the identity mapping. 
\item[$\bullet$]\textbf{Case $\Sigmab_\mu= k \ \Sigmab_\nu$.} If there is a scaling factor between the covariances $\Sigmab_\mu=k\Sigmab_\nu$ with $k> 0$ then the optimal map scales uniformly the first measure to preserve the distances so that $\Abf=\frac{1}{\sqrt{k}}\mathbf{I}$. With a little calculus one can check that $L\gm_2(\mu,\nu)=4(k-1)^{2}\left(\tr(\Sigmab_{\nu})^{2}+2\tr(\Sigmab_{\nu}\Sigmab_{\nu})\right)$ so that $L\gm_2(\mu,\nu)$ is minimal, equal to zero, when $k=1$ which corresponds to the previous case. When $k$ increases above $1$ the distance increases quadratically and when $k\in]0,1]$ the distances decreases as $k$ goes to $1$. 
\item[$\bullet$] \textbf{Rotation.} Another interesting case is when we rotate the samples of the distribution and we compute the linear Gromov-Monge between the original distribution and its rotated counterpart. This case corresponds to $\Sigmab_{\nu}=\Obf \Sigmab_{\mu} \Obf^{T}$ where $\Obf \in \mathcal{O}(p)$ and we can check easily that $L\gm_{2}(\mu,\nu)=0$ with an optimal map given the rotation. This behavior is intuitive since the Gromov-Monge problem with linear map is invariant by rotations.
\item[$\bullet$] \textbf{Commuting covariances.} Finally when the covariances commute \ie\ $\Sigmab_{\mu} \Sigmab_{\nu}=\Sigmab_{\nu}\Sigmab_{\mu}$ we can relate with the linear transportation. In this situation both are simultaneously diagonalizable and eigenspaces coincide. We recall that the optimal linear map for the Wasserstein distance with $d(\xbf,\ybf)=\|\xbf-\ybf\|_2^{2}$ is $\Abf_{\wass}=\Sigmab_{\mu}^{-1/2}(\Sigmab_{\mu}^{1/2}\Sigmab_{\nu}\Sigmab_{\mu}^{1/2})^{\frac{1}{2}}\Sigmab_{\mu}^{-1/2}$ which reduces to $\Abf_{\wass}=\Sigmab_{\nu}^{1/2}\Sigmab_{\mu}^{-1/2}$ when covariances are commuting. Moreover since matrices share the same eigenspaces then we can take $\V_{\mu}=\V_{\nu}$ for the linear Gromov-Monge. In this way $\Abf$ reduces to $\Abf=\Sigmab_{\nu}^{1/2}\Sigmab_{\mu}^{-1/2}=\Abf_{\wass}$. This proves that when the covariances commute the optimal map of Wasserstein is an optimal map for the linear Gromov-Monge.
\end{itemize}

An illustration of the map $\Abf$ is given in Figure.\ref{fig:gm_fig1} where we compute $LGM$ between two 2D empirical distributions, that we consider Gaussian in first approximation. The optimal map $\Abf$ gives a different behaviour than the Wasserstein map $\Abf_{\wass}$ which seems to better grasp the transformation of the samples. It is somehow natural since the Gromov-Monge cost is less rigid than the Wasserstein one and only forces the samples to be isometrically distributed on average. However note that the target samples are arbitrary rotated for the case of Gromov-Monge since $LGM$ is invariant by rotation of the samples.

\begin{figure}[t!]
\begin{center}
\includegraphics[width=0.95\linewidth]{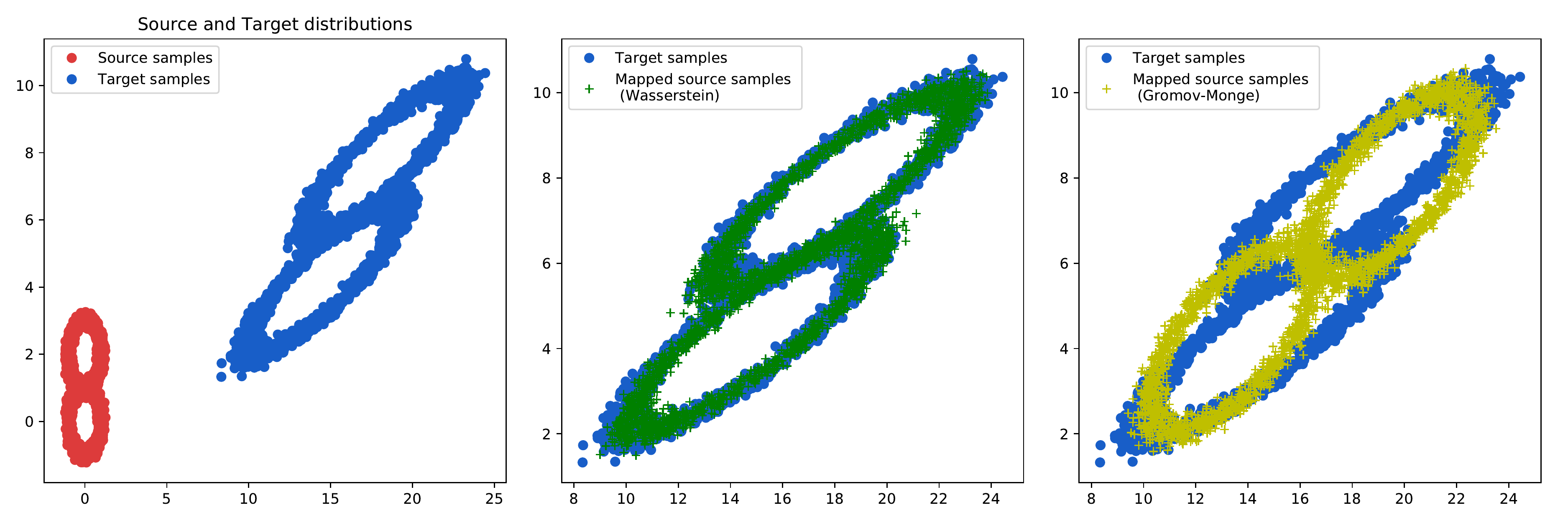}

\end{center}    
\caption{\label{fig:gm_fig1} Example of linear Gromov-Monge mapping estimation between empirical distributions. {\bf (left)} 2D
source and target distributions. {\bf (middle)} resulting linear mapping of Wasserstein $\Abf_{\wass}$. {\bf (right)} resulting linear mapping using the linear Gromov-Monge mapping $\Abf$. Note that in this case the mapped samples are arbitrary rotated. }
\end{figure}

\subsubsection{Solving the problem when $p\neq q$}

As described in Theorem \ref{maintheo3} the situation is more delicate when $p \neq q$, \ie\ when the Gaussian measures are not part of the same ground Euclidean space. In this case there is no close form anymore and one needs to solve the following Quadratic optimization with Orthogonality Constraints (QPOC) problem:
\begin{equation}
\tag{QPOC}
\label{eq:qp_oc}
\min_{\Bbf \in \Stief} -\tr(\Bbf^\top  \Dbf_\nu  \Bbf \Dbf_\mu)\stackrel{def}{=} \min_{\Bbf \in \Stief} F(\Bbf)
\end{equation}
This problem is a particular special case of optimizing a smooth function over the Stiefel manifold, which is non-convex in general but for which various methods have been proposed over the years (see \textit{e.g} \cite{Jiang_2014,Wen_2010,Abrudan_2008,absil2009optimization}). A standard approach for solving these types of problems is to leverage the Riemannian structure of $\Stief$ to use a gradient descent method on the manifold. Informally, the idea is to start with a arbitrary point $\Bbf_{0} \in \Stief$ then iteratively move in a search direction $D(\Bbf_{0})$ defined by a tangent vector while staying on $\Stief$ until a critical point is found. In general this procedure require to compute the so-called \emph{exponential map} which is quite difficult. A cheaper alternative can be found into a \emph{retraction} map which approximates the exponential map (see \textit{e.g.} \cite{pmlr-v48-liue16} and references therein). We propose to solve \eqref{eq:qp_oc} using the Geoopt library \cite{geoopt2020kochurov}. We illustrate the resulting optimal map in Figure \ref{fig:gm_fig2} where we compute the $LGM$ problem between two empirical distributions. The source distribution is 3 dimensional while the target distribution is 2 dimensional. We observe that the optimal Gromov-Monge map manages to capture the overall transformation of the source distribution. 

\begin{figure}[t!]
\begin{center}
\includegraphics[width=0.95\linewidth]{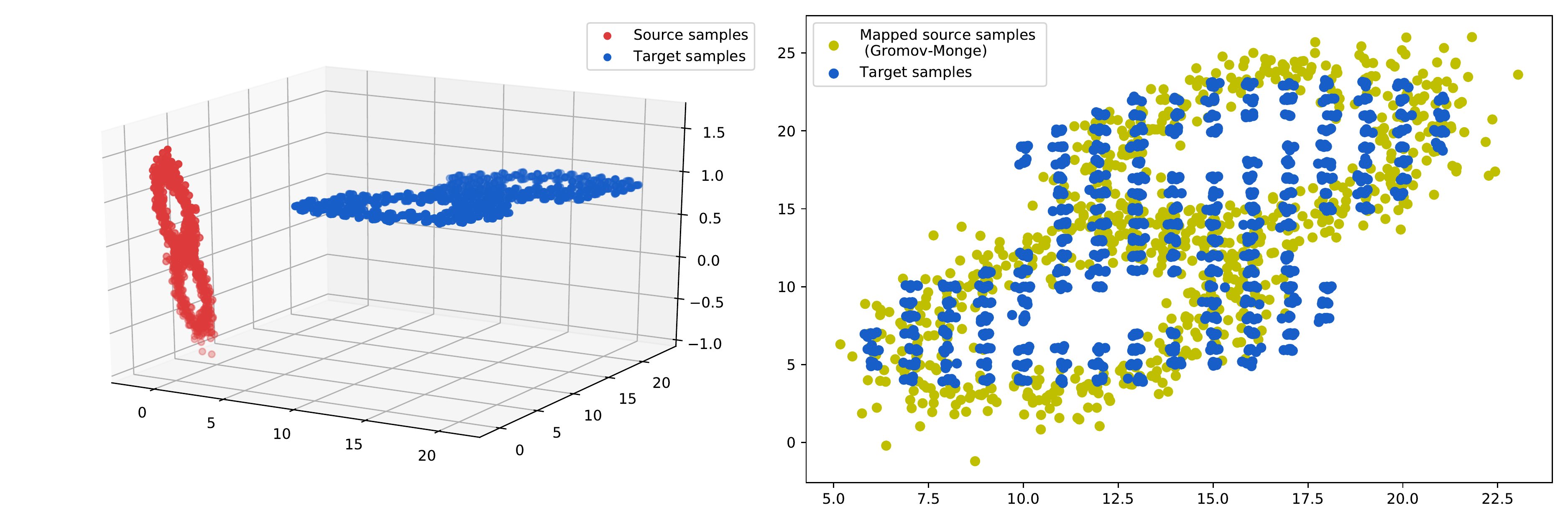}

\end{center}    
\caption{\label{fig:gm_fig2} Example of linear Gromov-Monge mapping estimation between two empirical distributions. {\bf (left)} 3D
source and 2D target distributions. {\bf (right)} resulting linear mapping using the linear Gromov-Monge mapping $\Abf$.}
\end{figure}

\section{Conclusion: perspectives and open questions} 
\label{sec:persperctives_gweucl}
In our opinion the previous results raise interesting questions and pave the path for a deeper understanding of the Gromov-Wasserstein and Gromov-Monge geometry in Euclidean spaces. The works on the regularity of optimal couplings are, to the best of our knowledge, the first work on this subject and we believe that these results could be improved in order to bridge the theoretical gap between Wasserstein and Gromov-Wasserstein.

\paragraph{Gromov-Wasserstein in Euclidean spaces} In the inner product case we have seen that among the set of solutions of \eqref{InvOT} the couples with $\Pbf^{*}$ surjective are of particular interest. For example is it possible under some mild conditions on $\mu,\nu$ to prove that there always exists an optimal couple $(\pi^{*},\Pbf^{*})$ of \eqref{InvOT} such that $\Pbf^{*}$ is surjective? If this result holds then it would imply that, if the source measure is regular, the Gromov-Wasserstein problem with inner products is equivalent to its Monge counterpart:
\begin{equation}
\inf_{T\#\mu=\nu} \int (\scalar{\xbf}{\xbf'}{p}-\scalar{T(\xbf)}{T(\xbf')}{q})^{2} \dr \mu(\xbf) \dr \mu(\xbf')
\end{equation}
The equivalence between Gromov-Monge and Gromov-Wasserstein problems would be a nice generalization of the Brenier's theorem in the case of the Gromov-Wasserstein geometry. Related to this problem the unicity of the couples $(u,\Pbf)$ is also an interesting further work. More precisely can we find suitable conditions on $\mu,\nu$ so that there is a unique $\nabla u \circ \Pbf$ which pushes $\mu$ forward to $\nu$ optimally? When the support of the measures are not ``too symmetric'' it seems reasonable that all the solutions are somehow unique modulo a class of isometries such as rotations. As described in this section the Brenier's theorem states that there is a unique optimal push forward in the case of Wasserstein geometry. This property allows to derive close form expression for the Gaussian case and we believe that similar reasoning could be made if the unicity (up to some rotations) holds for \eqref{InvOT}. Finally can we extend the previous results about the inner case for kernel similarities? More precisely when $c_\Xcal(x,x')=k_\Xcal(x,x')$ defines a kernel (same for $c_\Ycal$) we know from the kernel trick that there exists an inner product space $\mathcal{V}_{\Xcal}$ and a feature map $\phi: \Xcal \rightarrow \mathcal{V}_{\Xcal}$ such that $c_\Xcal(x,x')=\scalar{\phi(x)}{\phi(x')}{\mathcal{V}_{\Xcal}}$. In this case can we find similar results regarding the regularity of the optimal transport plans of the Gromov-Wasserstein distance?  

The squared Euclidean distance case seems to be more delicate to handle as it does not echo to classical OT costs. We believe that the dual problem could be used to find a close form expression for 1D discrete probability measures with possibly different number of atoms and non-uniform weights, as done for the inner product case. This would be an interesting stronger result than the case considered in Section \ref{sec:sgw}. We postulate that, in this case, an optimal coupling is also given by a monotone rearrangement which is increasing or decreasing. For general probability measures assessing the regularity of optimal transport plans seems more complicated. The sufficient condition of Proposition \ref{sufficientcond2} may be improved with a necessary condition in order to characterize optimal couplings under some hypothesis on $\mu,\nu$. Another interesting approach for the squared Euclidean distance case would be to consider optimal couplings when the target measure is a small perturbation of the source measure. More precisely we believe the following holds:

\begin{prob}
Let $\mu \in \P(\R^{p})$. Let $\nu=(id+\epsilon u)\#\mu \in \P(\R^{p})$ with $u: \R^{p} \rightarrow \R^{p}$ be a small perturbation of $\mu$. Then the map $\gamma=(id \times T)\#\mu$ where $T=id+\epsilon u$ is optimal for:
\begin{equation}
\inf_{\pi \in \couplingset(\mu,\nu)} \int \left(\|\xbf-\xbf'\|_{2}^{2}-\|\ybf-\ybf'\|_{2}^{2}\right)^{2} \dr \pi(\xbf,\ybf) \dr \pi(\xbf',\ybf')
\end{equation}
\end{prob}

}

\chapter{CO-Optimal Transport}
\epigraph{``Tu as tort de lire les journaux; ça te congestionne.''}{-- André Gide, \textit{Les Faux-Monnayeurs}}
\minitoc
\label{cha:coot}
\newpage
\begin{Abstract}
This chapter is based on the paper \cite{redko2020cooptimal} and addresses the problem of optimal transport on incomparable spaces. The original formulation of the optimal transport problem relies on the existence of a cost function between the samples of the two distributions, which makes it impractical for comparing data distributions supported on different topological spaces. To circumvent this limitation, we propose a novel OT problem, named \COOT\ for CO-Optimal Transport, that aims to simultaneously optimize two transport maps between both samples and features. This is different from other approaches that either discard the individual features by focusing on pairwise distances (e.g. Gromov-Wasserstein) or need to model explicitly the relations between the features. \COOT\ leads to interpretable correspondences between both samples and feature representations and holds metric properties. We provide a thorough theoretical analysis of our framework and establish rich connections with the Gromov-Wasserstein distance. We demonstrate its versatility with two machine learning applications in heterogeneous domain adaptation and co-clustering/data summarization, where \COOT\ leads to performance improvements over the competing state-of-the-art methods. 
\end{Abstract}

\section{Introduction}

The problem of comparing two sets of samples arises in many fields in machine learning, such as manifold alignment \cite{Cui:2014}, image registration \cite{Haker:2001},
unsupervised word and sentence translation \cite{rapp-1995-identifying} among others. When correspondences between the sets are known \textit{a priori}, one can align them with a global transformation of the features, \textit{e.g}, with the widely used \emph{Procrustes
analysis} \cite{oro2736,Goodall:1991}. For unknown correspondences, other popular alternatives to this method include correspondence free manifold alignment procedure \cite{Wang2009ManifoldAW}, soft assignment coupled with a Procrustes matching \cite{Rangarajan:1997} or Iterative closest point and its variants for 3D shapes \cite{Besl_ICP:1992,yang2020teaser}. 
{When one models} the considered sets of samples as empirical probability distributions{,} the optimal transport framework {provides a solution} to find, without supervision, a soft-correspondence map between them given by an \emph{optimal coupling}. 
{OT-based approaches} have been used with success in numerous applications such as embeddings'
alignments \cite{alavarez:2019,Grave2018UnsupervisedAO} and Domain Adaptation (DA)
\cite{courty2017optimal} to name a few. {However, one important limit of using OT} for such tasks is that the two sets are assumed to lie in the same space so that the cost between samples across them can be computed. This
major drawback {does not allow OT to} handle correspondences' estimation across heterogeneous
spaces, preventing its application in problems such as, for instance, heterogeneous DA (HDA). To circumvent this restriction, one
may rely on the Gromov-Wasserstein distance \cite{memoli_gw}{: a} non-convex quadratic OT problem {that} finds the correspondences between two sets of samples based on their pairwise intra-domain similarity (or distance) matrices. Such an approach was successfully applied {to} sets of samples {that} do not lie in the same Euclidean space, \textit{e.g} for shapes \cite{solomon_entropic_2016}, word embeddings
\cite{alvarez-melis_gromov-wasserstein_2018} {and HDA \cite{ijcai2018-412} mentioned previously}. One important limit of GW is that it finds the samples' correspondences but discards the relations between the features by considering pairwise similarities only. Another line of works \cite{alavarez:2019,Grave2018UnsupervisedAO} considers the problem of matching sets of points with respect to a global transformations of the features, usually modeled by a linear transformation such as a rotation. These approches differ from the work proposed here where we consider instead a probabilistic coupling of the features as described below.

In this Chapter, we propose a novel OT approach called CO-Optimal transport (\COOT) that simultaneously infers the correspondences between the samples \emph{and} the features of two arbitrary sets. Our new formulation includes GW as a special case, and has an extra-advantage of working with raw data directly without needing to compute, store and choose computationally demanding similarity measures required for the latter. Moreover, \COOT\ provides a meaningful mapping between both instances and features across the two datasets thus having the virtue of being interpretable. We thoroughly analyze the proposed problem, derive an optimization procedure for it and highlight several insightful links to other approaches. On the practical side, we provide evidence of its versatility in machine learning by putting forward two applications in HDA and co-clustering where our approach achieves state-of-the-art results. 

The rest of this chapter is organized as follows. We introduce the \COOT\ problem
in Section \ref{sec:coot_problem}, states its mathematical properties in Section \ref{sec:prop_of_coot} and give an optimization routine for solving it efficiently in Section \ref{sec:optimsec}. In Section \ref{sec:relation_with_other_ot_distances}, we show how \COOT\ is related to other OT-based distances and recover efficient solvers for some of them in particular cases. Finally, in Section \ref{sec:hda} and Section \ref{sec:co_clustering}, we present an experimental study providing highly competitive results in {HDA} and co-clustering compared to several baselines. 

\section{CO-Optimal transport optimization problem}
\label{sec:coot_problem}
We consider two datasets represented by matrices $\X=[\x_1,\dots,\x_n]^T\in\mathbb{R}^{n\times d}$ and $\X'=[\xbf'_1,\dots,\xbf'_{n'}]^T\in\mathbb{R}^{n'\times d'}${, where in general we assume that $n\neq n'$ and $d\neq d'$.} In what follows, the rows of the datasets are denoted as \emph{samples} and its columns as \emph{features}. {We endow the samples $(\xbf_i)_{i \in [\![n]\!]}$ and $(\xbf'_i)_{i \in [\![n']\!]}$ with weights $\w=[w_1,\dots,w_n]^\top \in\Sigma_n$ and $\w'=[w_1',\dots,w_{n'}']^\top \in\Sigma_{n'}$ that both lie in the simplex so as to define empirical distributions supported on $(\xbf_i)_{i \in [\![n]\!]}$ and $(\xbf'_i)_{i \in [\![n']\!]}$. In addition to these distributions, we similarly associate weights given by vectors $\v\in\Sigma_d$ and $\v'\in\Sigma_{d'}$ with features. Note that
when no additional information is available about the data, all the weights' vectors
can be set as uniform.} 

We define the CO-Optimal Transport problem as follows:
\begin{equation}
\tag{COOT}
 \label{eq:co-optimal-transport}
\min_{\begin{smallmatrix}\GGs \in\Pi(\w,\w') \\ \GGv\in\Pi(\v,\v')\end{smallmatrix}} \quad \sum_{i,j,k,l} L(X_{i,k},X'_{j,l})\GGs_{i,j}\GGv_{k,l} =\min_{\begin{smallmatrix}\GGs \in\Pi(\w,\w') \\ \GGv\in\Pi(\v,\v')\end{smallmatrix}} \froeb{\L(\X,\X') \otimes \GGs}{\GGv}\\
\end{equation}
where $L:\mathbb{R}\times \mathbb{R} \rightarrow \mathbb{R}_+$ is a divergence measure between 1D variables, $\L(\X,\X')$ is the $d\times d'\times n\times n'$ tensor of all pairwise divergences between the elements of $\X$ and $\X'$, and $\couplingset(\cdot,\cdot)$ is the set of linear transport constraints. 

Note that problem \eqref{eq:co-optimal-transport} seeks for a simultaneous transport $\GGs$ between samples and a transport
$\GGv$ between features across distributions. In the following, we write $\COOT(\X,\X',\w,\w',\v,\v')$ (or $\COOT(\X,\X')$ when it is
clear from the context) to denote the objective value of the optimization problem \eqref{eq:co-optimal-transport}. 

\paragraph{Entropic regularization} Equation \eqref{eq:co-optimal-transport} can be also extended to the entropic regularized case favoured in the OT community for remedying the heavy computation burden of OT and reducing its sample complexity \cite{cuturi2013sinkhorn,altschuler2017near,genevay:2019}. This leads to the following problem:
\begin{equation}
 \label{eq:co-optimal-transport-reg}
      \min_{\begin{smallmatrix}\GGs \in\Pi(\w,\w') \\ \GGv\in\Pi(\v,\v')\end{smallmatrix}} \froeb{\L(\X,\X') \otimes \GGs}{\GGv} 
        + \Omega(\GGs,\GGv)
   \end{equation}
where for $\epsilon_{1},\epsilon_{2}>0$, the regularization term writes as $\Omega(\GGs,\GGv)=  \epsilon_{1} H(\GGs|\w
\w'^{T})+\epsilon_{2}H(\GGv|\v\v'^{T})$ with
$H(\GGs|\w\w'^{T})=\sum_{i,j} \log(\frac{\pi^s_{i,j}}{w_{i}w'_{j}})\pi^s_{i,j}$
being the relative entropy. Note that similarly to OT \cite{cuturi2013sinkhorn} and GW
\cite{peyre2016gromov}, adding the regularization term can lead to a more robust
estimation of the transport matrices but prevents them from being sparse.

\begin{figure*}[!t]
    \centering
    \includegraphics[width=1\linewidth]{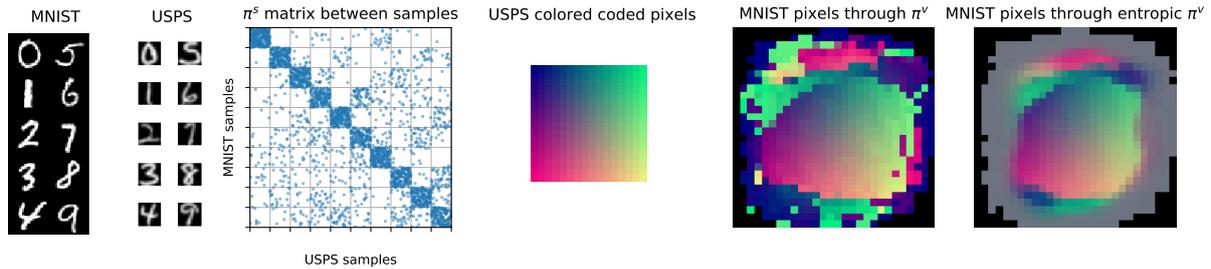}
    \caption{Illustration of \COOT\ between MNIST and USPS datasets. \textbf{(left)} samples from MNIST and USPS data sets; \textbf{(center left)} Transport matrix $\GGs$ between samples sorted by class; \textbf{(center)} USPS image with pixel{s} colored \emph{w.r.t..} their 2D position; \textbf{(center right)} transported colors on MNIST image using $\GGv$, black pixels correspond to non-informative MNIST pixels always at 0; \textbf{(right)} transported colors on MNIST image using $\GGv$ with entropic regularization.}
    \label{fig:mnist_usps}
\end{figure*}

\subsection{Illustration of \COOT \label{sec:illus_of_coot}} 

In order to illustrate {our proposed} \COOT\ method and to explain the intuition behind it, we solve the optimization problem \eqref{eq:co-optimal-transport} {using the algorithm described in Section \ref{sec:optimsec}}
between two classical digit recognition datasets: MNIST and USPS. We choose these particular datasets for our illustration as they contain images of different
resolutions (USPS is 16$\times$16 and MNIST is 28$\times$28) that belong to the
same classes (digits between 0 and 9). Additionally, the digits are also
slightly differently centered as illustrated on the examples in the left part of
Figure \ref{fig:mnist_usps}. {Altogether,} this means that without specific
pre-processing, the images do not lie in the same {topological} space and thus
cannot be compared directly using conventional distances. We randomly select 300
images per class in each dataset, normalize magnitudes of pixels to $[0,1]$ and consider digit images as \emph{samples} while each pixel acts as a \emph{feature} leading to 256 and 784 features for USPS and MNIST respectively. We use
uniform weights for $\w,\w'$ and normalize average values of each pixel for $\v,\v'$ in order to discard non-informative ones that are always equal to 0.

The result of solving problem \eqref{eq:co-optimal-transport} is reported in
Figure \ref{fig:mnist_usps}. In the center-left part, we provide the coupling
$\GGs$ between the samples, {\textit{i.e} the different images}, sorted by class and observe that 67\% of mappings occur between the samples from the same class as indicated by block diagonal structure of the coupling matrix.
The coupling $\GGv$, in its turn, describes the relations between {the features, \textit{i.e} the pixels,} in both domains. 
To visualize it, we color-code the pixels of the source USPS image and use $\GGv$ to transport the colors on a target MNIST image so that its pixels are defined as  convex combinations of colors from the former with coefficients given by $\GGv$. 
The corresponding results are shown in the right part of
Figure~\ref{fig:mnist_usps} for both the original \COOT\ and its entropic
regularized counterpart. From these two images, we can observe that colored
pixels appear only in the central areas and exhibit a strong spatial coherency despite the fact that 
the geometric structure of the image is totally unknown to the optimization problem, as each pixel is treated as an independent feature. \COOT\ has recovered a meaningful spatial transformation between the two datasets in a completely unsupervised way, different from trivial rescaling of images that one may expect when aligning USPS digits occupying the full image space and MNIST digits lying in the middle of it. 

For further evidence Figures \ref{fig:usps_to_mnist_piv} and \ref{fig:mnist_to_usps_piv} illustrate different images of both datasets obtained by transporting pixels from USPS (\textit{resp.} MNIST) to MNIST (\textit{resp.} USPS) using the optimal coupling $\GGv$. Notably the case USPS $\rightarrow$ MNIST show that transporting the pixel through $\GGv$ leads to a better spatial coherency than a simple rescaling of the image.

\begin{figure*}[t]
    \centering
    \includegraphics[width=1\linewidth]{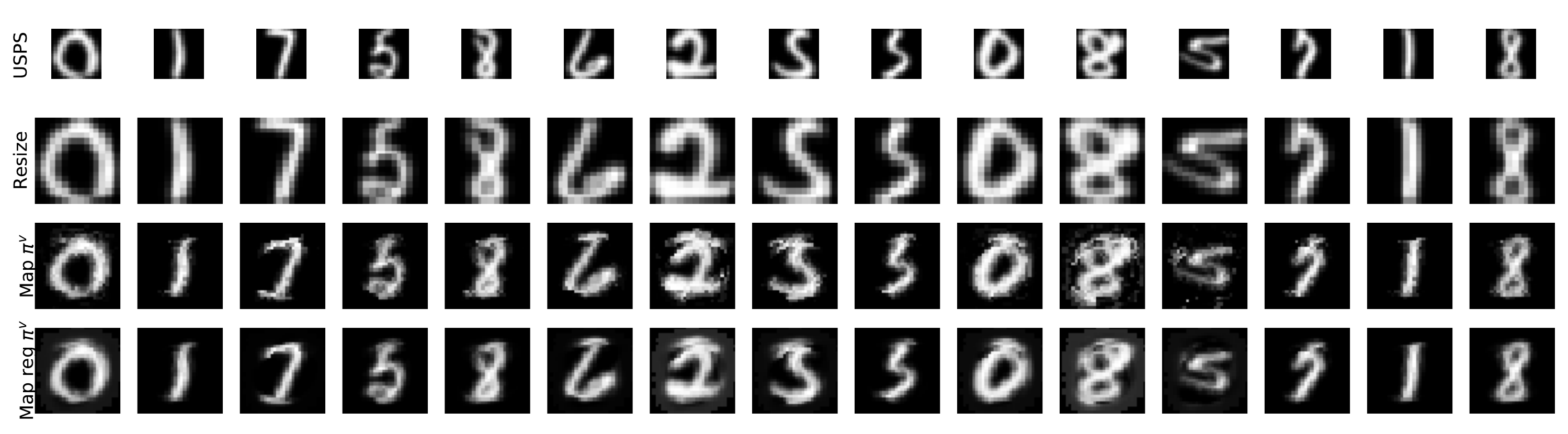}
    \caption{Linear mapping from USPS to MNIST using $\GGv$.  \textbf{(First row)} Original USPS samples,
    \textbf{(Second row)} Samples resized to target resolution, \textbf{(Third row)} Samples mapped using $\GGv$, \textbf{(Fourth row)} Samples mapped using $\GGv$ with entropic regularization.}
    \label{fig:usps_to_mnist_piv}
\end{figure*}
\begin{figure*}[t]
    \centering
    \includegraphics[width=1\linewidth]{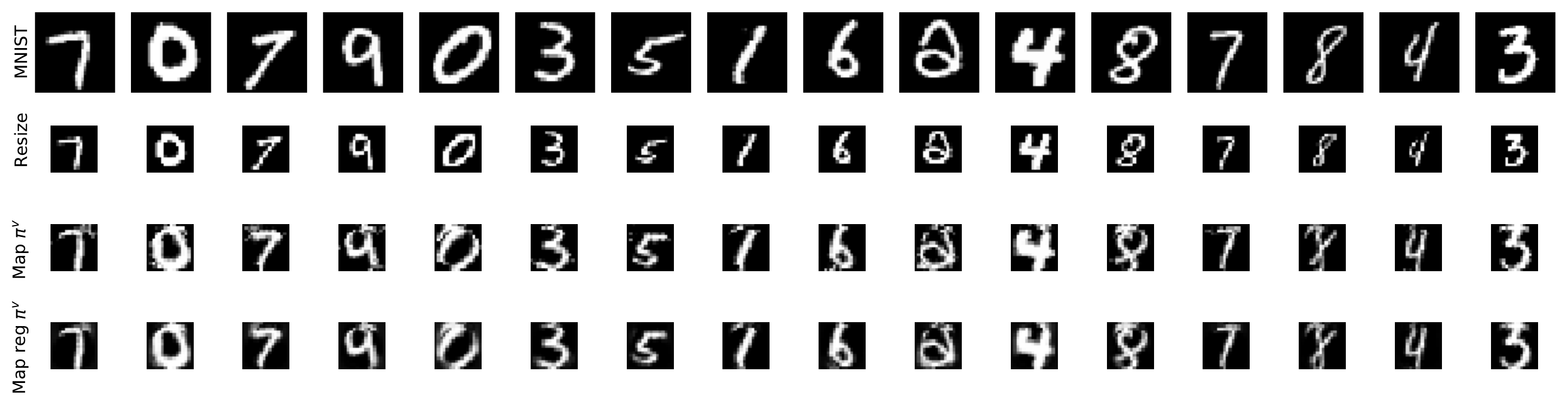}
    \caption{Linear mapping from MNIST to USPS using $\GGv$.  \textbf{(First row)} Original MNIST samples,
    \textbf{(Second row)} Samples resized to target resolution, \textbf{(Third row)} Samples mapped using $\GGv$, \textbf{(Fourth row)} Samples mapped using $\GGv$ with entropic regularization.}
    \label{fig:mnist_to_usps_piv}
\end{figure*}

\section{{Properties of \COOT}}
\label{sec:prop_of_coot}

\paragraph{\COOT\ as a billinear program} \COOT\ is a special case of a Quadratic Program (QP) with linear constraints called a Bilinear Program (BP). More precisely, it is an indefinite BP
problem \cite{gallo:1977}. It was proved (\textit{e.g} in in \cite{pardalos:1987,horst1996global}) that there exists an optimal solution lying on extremal points of the polytopes $\Pi(\w,\w')$ and $\Pi(\v,\v')$. When $n=n',d=d'$ and weights $\w=\w'=\frac{\bm{1}_n}{n}, \v=\v'=\frac{\bm{1}_d}{d}$ are uniform, Birkhoff’s theorem \cite{birkhoff:1946} states that the set of extremal points of $\Pi(\frac{\one_{n}}{n},\frac{\one_{n}}{n})$ and $\Pi(\frac{\one_{d}}{d},\frac{\one_{d}}{d})$ are the set of permutation matrices so that there exists an optimal solution $(\GGs_{*},\GGv_{*})$ which transport maps are supported on two permutations $\sigma_{*}^{s},\sigma_{*}^{v} \in \mathbb{S}_{n} \times \mathbb{S}_{d}$. 

The BP problem is also related to the Bilinear Assignment Problem (BAP) where $\GGs$ and $\GGv$ are searched in the set of permutation matrices. The latter was shown to be NP-hard if $d=O(\sqrt[\leftroot{-2}\uproot{2}r]{n})$ for fixed $r$ and solvable in polynomial time if $d=O(\sqrt{\log(n)})$
\cite{Custic:2016}. In this case, we look for the best permutations of the rows and columns of our datasets that lead to the smallest cost. \COOT\ provides a tight convex relaxation of the BAP by 1) relaxing the constraint set of permutations into the convex set of doubly stochastic matrices and 2) ensuring that two problems are equivalent, \ie, one can always find a pair of permutations that minimizes \eqref{eq:co-optimal-transport}, as explained in the paragraph above.

Finding a meaningful similarity measure between datasets is useful in many machine learning tasks as pointed out, \textit{e.g} in \cite{alvarezmelis2020geometric}. Interestingly enough, \COOT\ induces a {notion of} distance between datasets $\X$ and $\X'$. More precisely it vanishes $\textit{iff}$ they are the same up to a permutation of rows and columns as established below:
\begin{prop}[\COOT\ is a distance]
\label{sec:metric_properties}
Suppose $L=|\cdot|^{p}, p \geq 1$, $n=n',d=d'$ and that the weights $\w,\w',\v,\v'$ are uniform. Then $\COOT(\X,\X')=0$ \textit{iff} there exists a permutation of the samples $\sigma_{1} \in S_{n}$ and of the features $\sigma_{2} \in S_d$, \textit{s.t}, $\forall i,k \ \X_{i,k}=\X'_{\sigma_{1}(i),\sigma_{2}(k)}$. Moreover, it is symmetric and satisfies the triangular inequality as long as $L$ satisfies the triangle inequality, \ie, $\COOT(\X,\X'')\leq \COOT(\X,\X')+\COOT(\X',\X'').$
\end{prop}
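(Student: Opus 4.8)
The plan is to establish Proposition \ref{sec:metric_properties} by proving three separate claims: (1) the identity of indiscernibles ($\COOT(\X,\X')=0$ iff $\X,\X'$ agree up to a permutation of rows and columns), (2) symmetry, and (3) the triangle inequality. Throughout I assume $L=|\cdot|^p$ with $p\geq 1$, that $n=n'$, $d=d'$, and that all four weight vectors are uniform, so that by the Birkhoff argument already discussed there exists an optimal pair $(\GGs_*,\GGv_*)$ supported on permutation matrices $\tfrac{1}{n}\Pbf_{\sigma_1}$ and $\tfrac{1}{d}\Pbf_{\sigma_2}$.

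\textbf{Identity of indiscernibles.} First I would prove the easy direction: if there exist permutations $\sigma_1\in S_n$, $\sigma_2\in S_d$ with $X_{i,k}=X'_{\sigma_1(i),\sigma_2(k)}$ for all $i,k$, then plugging the couplings $\GGs=\tfrac{1}{n}\Pbf_{\sigma_1}$ and $\GGv=\tfrac{1}{d}\Pbf_{\sigma_2}$ (which are admissible since the weights are uniform) into \eqref{eq:co-optimal-transport} makes every surviving term $L(X_{i,k},X'_{\sigma_1(i),\sigma_2(k)})=0$, so the objective is $0$ and hence $\COOT(\X,\X')=0$ by nonnegativity of $L$. For the converse, suppose $\COOT(\X,\X')=0$. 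Take an optimal pair supported on permutations $\tfrac{1}{n}\Pbf_{\sigma_1},\tfrac{1}{d}\Pbf_{\sigma_2}$; the objective then reduces to $\tfrac{1}{nd}\sum_{i,k} L(X_{i,k},X'_{\sigma_1(i),\sigma_2(k)})=0$, and since $L=|\cdot|^p$ vanishes only on the diagonal, each term forces $X_{i,k}=X'_{\sigma_1(i),\sigma_2(k)}$, which is exactly the claimed relation.

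\textbf{Symmetry} is immediate: $L=|\cdot|^p$ is symmetric in its two arguments, and swapping the roles of $(\X,\w,\v)$ and $(\X',\w',\v')$ bijectively exchanges the feasible sets $\Pi(\w,\w')\leftrightarrow\Pi(\w',\w)$ and $\Pi(\v,\v')\leftrightarrow\Pi(\v',\v)$ via $\GG\mapsto\GG^\top$, leaving the objective value unchanged. Hence $\COOT(\X,\X')=\COOT(\X',\X)$.

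\textbf{Triangle inequality} is the main obstacle and will require the classical gluing construction for couplings. Given a third dataset $\X''$ (with matching dimensions and uniform weights), let $(\GGs,\GGv)$ be optimal for $\COOT(\X,\X')$ and $(\PGs,\PGv)$ be optimal for $\COOT(\X',\X'')$. The idea is to build admissible couplings between $\X$ and $\X''$ by gluing through the common middle marginal: define the composed sample coupling $\GGs_{\mathrm{glue}}$ and feature coupling $\GGv_{\mathrm{glue}}$ using the standard disintegration of $\GGs,\PGs$ (resp. $\GGv,\PGv$) along their shared $\X'$-marginal, so that the resulting matrices lie in $\Pi(\w,\w'')$ and $\Pi(\v,\v'')$. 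The key estimate is then that the $\COOT$ objective, viewed as an expectation of $L(X_{i,k},X''_{j',l'})$ over the glued coupling, can be bounded using the triangle inequality for $L$: writing $L(X_{i,k},X''_{j',l'})\leq L(X_{i,k},X'_{j,l})+L(X'_{j,l},X''_{j',l'})$ and integrating against the glued measure, the two resulting terms reproduce (after marginalizing out the middle index) exactly the objective values of $(\GGs,\GGv)$ and $(\PGs,\PGv)$. The delicate point to verify is that the four-fold tensor structure — the objective couples samples \emph{and} features simultaneously as a product $\GGs_{i,j}\GGv_{k,l}$ — is compatible with gluing done independently on samples and on features; this requires checking that the product of glued couplings dominates or correctly reproduces the cross terms, and is where the Minkowski-type inequality for the $p$-th power norm (using $p\geq 1$) must be invoked, analogous to the triangle-inequality proof for Gromov-Wasserstein in \cite{memoli_gw}. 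I would conclude by taking the infimum over admissible couplings on the left-hand side and using the optimality of the chosen couplings on the right, yielding $\COOT(\X,\X'')\leq\COOT(\X,\X')+\COOT(\X',\X'')$.
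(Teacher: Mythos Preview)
Your approach is essentially the same as the paper's: symmetry by definition, identity of indiscernibles via Birkhoff, and the triangle inequality via a gluing construction on both couplings. The identity-of-indiscernibles argument matches the paper exactly.

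One clarification on the triangle inequality: the ``delicate point'' you flag is not actually delicate, and you do not need any Minkowski-type inequality. Unlike Gromov--Wasserstein, where the objective is quadratic in a single coupling $\pi$ (so gluing produces cross terms that require Minkowski), the \COOT\ objective is \emph{bilinear} in two independent couplings $\GGs$ and $\GGv$. This means that when you form the glued couplings $S_1=\GGs\,\mathrm{diag}(1/\w')\,\PGs$ and $S_2=\GGv\,\mathrm{diag}(1/\v')\,\PGv$ and expand
\[
\sum_{i,j,k,l,e,o} L(X_{i,k},X'_{e,o})\,\frac{\pi^s_{i,e}\,\hat\pi^s_{e,j}}{w'_e}\,\frac{\pi^v_{k,o}\,\hat\pi^v_{o,l}}{v'_o},
\]
the indices $j$ and $l$ do not appear in $L(X_{i,k},X'_{e,o})$, so summing them out uses only the marginal conditions $\sum_j \hat\pi^s_{e,j}=w'_e$ and $\sum_l \hat\pi^v_{o,l}=v'_o$, and you recover exactly $\langle \L(\X,\X')\otimes\GGs,\GGv\rangle$. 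The second term is handled symmetrically. Only the pointwise triangle inequality for $L$ is used. Also note that the paper carries out this gluing argument for arbitrary weights and dimensions (not just the uniform, equal-dimension case), which your proposal restricts to unnecessarily.
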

Note that in the general case when $n\neq n', d\neq d'$, positivity and triangle inequality still hold but $\COOT(\X,\X')>0$. The proof can be found in Section \ref{sec:properties}. Interestingly, our result generalizes the metric property proved in \cite{faliszewski19} for the election isomorphism problem with this latter result being {valid} only for the BAP case (for a discussion on the connection between \COOT\ and the work of \cite{faliszewski19}, see Section \ref{sec:election_iso}). Finally, we note that this metric property means that \COOT\ can be used as a divergence in a large
number of potential applications as, for instance, in generative learning \cite{bunne_gan}.

\section{Optimization algorithm and complexity \label{sec:optimsec}}

        \begin{algorithm}[t]
        \caption{\label{alg:bcd}
		   BCD for \COOT}
        \begin{algorithmic}[1]
                \State $\pi^{s}_{(0)}\leftarrow \w\w'^{T},\pi^{v}_{(0)}\leftarrow \v\v'^{T}, k \leftarrow 0$
          \While {$k < $ maxIt {\bf and} $err >$ 0} 
          \State $\GGs_{(k)} \leftarrow \argmin_{\GGs \in \couplingset(\w,\w')} \froeb{\L(\X,\X')\otimes \GGv_{(k-1)}}{\GGs}$ // \text{ linear OT problem on the samples}
          \State $\GGv_{(k)} \leftarrow \argmin_{\GGv \in \couplingset(\v,\v')} \froeb{\L(\X,\X')\otimes \GGs_{(k-1)}}{\GGv}$ // \text{ linear OT problem on the features}
                   \State $err \leftarrow ||\GGv_{(k-1)} - \GGv_{(k)}||_F$
          \State $k\leftarrow k+1$  
          \EndWhile        
        \end{algorithmic}
        \end{algorithm}

Even though solving  \COOT\ exactly is NP-hard, in practice computing a solution can be done rather efficiently. To this end, we propose to use
Block Coordinate Descent (BCD) that consists in iteratively solving the problem
for $\GGs$ or $\GGv$ with the other kept fixed. 
Interestingly, this boils down to solving at each step a linear OT problem that requires $O(n^3\log(n))$ operations with a network simplex algorithm as detailed in the pseudo-code given in Algorithm \ref{alg:bcd}. This approach, also known as the ``mountain climbing procedure''
\cite{konno:1976} in the BP literature, was proved to decrease the loss at each iteration and so to
converge within a finite number of iterations \cite{horst1996global}. We also note that at
each iteration one needs to compute the equivalent cost matrix $L(\X,\X')\otimes \GG^{(\cdot)}$ which
has a complexity of $O(ndn'd')$. However, one can reduce it using Proposition 1 from
\cite{peyre2016gromov} for the case when $L$ is the squared Euclidean distance $|\cdot|^{2}$ or the Kullback-Leibler divergence. In this case, the overall computational complexity becomes $O(\min\{(n+n')dd'+n'^{2}n;(d+d')nn'+d'^{2}d\})$. We refer the interested reader to Section \ref{sec:complexity} for further details. 

Finally, we can use the same BCD procedure for the entropic regularized version of \COOT\
\eqref{eq:co-optimal-transport-reg} where each iteration
an entropic regularized OT problem can be solved efficiently
using Sinkhorn's algorithm \cite{cuturi2013sinkhorn} with several possible improvements
\cite{altschuler2017near,Altschuler:2019,screekhorn:2019}. Note that this procedure can be easily adapted in the same way to include unbalanced OT problems \cite{chizat_unbalanced} as well.

\section{Relation with other OT distances}
\label{sec:relation_with_other_ot_distances}

\subsection{Gromov-Wasserstein}
The \COOT\ problem is defined for arbitrary matrices $\X\in \R^{n\times d}, \X'\in
\R^{n'\times d'}$ and so can be readily used to compare pairwise similarity matrices between the samples $\mathbf{C}=\left(c(\xbf_i,\xbf_j)_{i,j}\right)\in \R^{n\times n}, \mathbf{C}'=\left(c'(\xbf_k',\xbf_l')\right)_{k,l} \in \R^{n'\times n'}$ for some $c,c'$. To avoid redundancy, we use the term ``similarity'' for both similarity and distance functions in what follows. This situation arises in applications dealing with relational data, \textit{e.g}, in a graph context (see Chapter \ref{cha:fgw}) or \textit{e.g.} in deep metric alignment \cite{gwcnn}. These problems have been
successfully tackled using the Gromov-Wasserstein distance (see Chapter \ref{cha:ot_general}). We recall that given $\mathbf{C}\in \R^{n\times n}$ and $\mathbf{C}'\in \R^{n'\times n'}$, the GW distance is defined by: 
\begin{equation}
\label{eq:gromov}
\gw(\mathbf{C},\mathbf{C}',\w,\w')=\min_{\GGs \in\Pi(\w,\w')} \froeb{\L(\mathbf{C},\mathbf{C}') \otimes \GGs}{\GGs}
\end{equation}
As suggested by the similar objective functions and constraints, GW and \COOT\ are linked in multiple ways. Below, we explicit the link between GW and \COOT\ using a reduction of a concave QP to an associated BP problem established in \cite{Konno1976} and show that they are equivalent when working with squared Euclidean distance matrices $\mathbf{C}\in \R^{n\times n}, \mathbf{C}' \in \R^{n'\times n'}$ or with inner product similarities (see Chapter \ref{cha:gw_euclidean}). More precisely, this latter equivalence follows from \cite{Konno1976} where it was shown that a concave QP can be solved by a reduction to an associated BP problem:

\begin{theo}[Adapted from \cite{Konno1976}]
\label{equivalence_theo}
If $\mathbf{Q}$ is a negative definite matrix then problems: 
\begin{equation}
\label{qap_konno}
\begin{array}{cl}{\min _{\xbf} f(\xbf)} & {=\mathbf{c}^{T} x+\frac{1}{2} \xbf^{T} \mathbf{Q} \xbf} \\ {\text {s.t.}} & {\mathbf{A} \xbf = \mathbf{b}},\;  {\xbf} {\geq 0}\end{array}
\end{equation}
\begin{equation}
\label{bilinearqap}
\begin{array}{cl}{\min _{\xbf, \ybf} g(\xbf, \ybf)} & {=\frac{1}{2}\mathbf{c}^{T} \xbf+\frac{1}{2} \mathbf{c}^{T}\ybf+\frac{1}{2} \xbf^{T} \mathbf{Q} \ybf} \\ {\text {s.t.}} & {\mathbf{A} \xbf = \mathbf{b}, \mathbf{A} \ybf =\mathbf{b}},\;   {\xbf, \ybf}  {\geq 0}\end{array}
\end{equation}
are equivalent. More precisely, if $\xbf^{*}$ is an optimal solution for \eqref{qap_konno}, then $(\xbf^{*},\xbf^{*})$ is a solution for \eqref{bilinearqap} and if $(\xbf^{*},\ybf^{*})$ is optimal for \eqref{bilinearqap}, then both $\xbf^{*}$ and $\ybf^{*}$ are optimal for \eqref{qap_konno}.
\end{theo}
Using this principle one can link GW with the \COOT\ problem when working on intra domain similarity matrices $\mathbf{C}\in \R^{n\times n}, \mathbf{C}' \in \R^{n'\times n'}$ thanks to the next proposition:

\begin{prop}
\label{concavity_gw_theo2}
Let $L=|\cdot|^{2}$ and suppose that $\mathbf{C} \in \R^{n\times n},\mathbf{C}' \in \R^{n'\times n'}$ are squared Euclidean distance matrices such that $\mathbf{C}=\xbf \mathbf{1}_{n}^{T}+\mathbf{1}_{n}\xbf^{T}-2\X\X^{T}, \mathbf{C}'=\xbf' \mathbf{1}_{n'}^{T}+\mathbf{1}_{n'}\xbf'^{T}-2\X'\X'^{T}$ with $\xbf=\text{diag}(\X\X^T),\xbf'=\text{diag}(\X'\X'^T)$. Then, the GW problem can be written as {a concave quadratic program (QP) which Hessian reads} $\mathbf{Q}=-4*\X\X^T \otimes_{K} \X'\X'^T$.

If $\mathbf{C} \in \R^{n\times n},\mathbf{C}' \in \R^{n'\times n'}$ are inner products similarities, \ie\ such that $\mathbf{C}=\X\X^{T},\mathbf{C}'=\X'\X'^{T}$ then the GW is also a concave quadratic program (QP) which Hessian reads $\mathbf{Q}=-2*\X\X^T \otimes_{K} \X'\X'^T$.
\end{prop}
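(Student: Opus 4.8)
The plan is to put the Gromov--Wasserstein objective into the standard quadratic-program form $\min_{\GGs}\ \mathbf{c}^{\top}\vec(\GGs)+\tfrac12\vec(\GGs)^{\top}\mathbf{Q}\vec(\GGs)$ and simply read off its Hessian $\mathbf{Q}$, exactly as in the QP reformulation \eqref{eq:graph_matching_gw} of Chapter~\ref{cha:ot_general}. For $L=|\cdot|^{2}$ one first expands the loss as
\begin{equation}
\froeb{\L(\C,\C')^{2}\otimes\GGs}{\GGs}=\mathrm{cst}+\ell(\GGs)-2\tr(\C\GGs\C'\GGs^{\top}),
\end{equation}
where, once the marginal constraints are taken into account, the first term is constant, $\ell(\GGs)$ is linear, and the last term is genuinely quadratic. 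Using the symmetry of $\C,\C'$ together with the vec/Kronecker identity $\tr(\C\GGs\C'\GGs^{\top})=\vec(\GGs)^{\top}(\C\kron\C')\vec(\GGs)$ (see Memo~\ref{memo:kron}, up to the ordering convention of the Kronecker product), the quadratic part reads $\vec(\GGs)^{\top}\mathbf{Q}\vec(\GGs)$ with $\mathbf{Q}$ proportional to $-\C\kron\C'$.

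For the inner-product case the substitution is immediate: with $\C=\X\X^{\top}$ and $\C'=\X'\X'^{\top}$ one obtains directly a Hessian proportional to $-\X\X^{\top}\kron\X'\X'^{\top}$, the scalar being fixed by the expansion of $|\cdot|^{2}$ so as to match the announced $-2\,\X\X^{\top}\kron\X'\X'^{\top}$.

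The squared-Euclidean case is the \textbf{main obstacle}, and it is where the bookkeeping must be done carefully. Writing $\C=\xbf\one_{n}^{\top}+\one_{n}\xbf^{\top}-2\X\X^{\top}$ (with $\xbf=\mathrm{diag}(\X\X^{\top})$) and likewise for $\C'$, I would expand $\tr(\C\GGs\C'\GGs^{\top})$ into nine terms by bilinearity. The key observation is that every term carrying at least one rank-one factor $\xbf\one_{n}^{\top}$ or $\one_{n}\xbf^{\top}$ collapses: the contraction $\one_{n}^{\top}\GGs=\w'^{\top}$ or $\GGs\one_{n'}=\w$ imposed by $\GGs\in\couplingset(\w,\w')$ removes one power of $\GGs$, turning such a term into a constant or into a linear function of $\GGs$ that is absorbed into $\mathbf{c}$. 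Consequently, only the product of the two $-2\X\X^{\top},-2\X'\X'^{\top}$ blocks survives in the quadratic part, leaving a Hessian proportional to $-\X\X^{\top}\kron\X'\X'^{\top}$, i.e. of the form announced in the statement. The delicate point is tracking each of the nine contributions and confirming that all the rank-one ones are genuinely linear or constant, so that the Hessian depends on $\X\X^{\top}$ and $\X'\X'^{\top}$ only.

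Finally, to justify the word \emph{concave} it remains to verify that $\mathbf{Q}=-c\,\X\X^{\top}\kron\X'\X'^{\top}$, $c>0$, is negative semi-definite. Since $\X\X^{\top}$ and $\X'\X'^{\top}$ are Gram matrices they are symmetric positive semi-definite, hence have nonnegative eigenvalues; the spectrum of a Kronecker product being the set of pairwise products of the two spectra (Memo~\ref{memo:kron} and \cite{horn_johnson_1991}), $\X\X^{\top}\kron\X'\X'^{\top}$ is positive semi-definite and $\mathbf{Q}\preceq 0$. Therefore the GW problem is a concave QP with the stated Hessian, which is precisely the structural hypothesis required to apply Konno's reduction (Theorem~\ref{equivalence_theo}) in the subsequent equivalence with \COOT.
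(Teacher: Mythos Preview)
Your proposal is correct and follows essentially the same route as the paper. The paper's proof (via Lemma~\ref{concavity_gw_theo_sup}) carries out precisely the explicit expansion you outline: it writes $\langle\L(\C,\C')\otimes\GGs,\GGs\rangle=\langle c_{\C,\C'},\GGs\rangle-2\tr(\GGs^{\top}\C\GGs\C')$, substitutes the squared-Euclidean decomposition of $\C,\C'$, and then works through the trace term by term, using the marginal identities $\one_{n}^{\top}\GGs=\w'^{\top}$, $\GGs\one_{n'}=\w$ to collapse every contribution involving a rank-one factor into a constant or a linear form in $\GGs$, so that only $4\tr(\GGs^{\top}\X\X^{\top}\GGs\X'\X'^{\top})$ survives quadratically; the concavity argument via the Kronecker spectrum is identical to yours.
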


When working with arbitrary similarity matrices, \COOT\ provides a lower-bound for GW and using Proposition \ref{concavity_gw_theo2} we can prove that both problems become equivalent for the cases of squared Euclidean distances and inner product similarities.
\begin{prop}
\label{prop:cot_equal_gw}
Let $\mathbf{C} \in \R^{n\times n},\mathbf{C}' \in \R^{n'\times n'}$ be any symmetric matrices, then: 
$$\COOT(\mathbf{C},\mathbf{C}',\w,\w',\w,\w')\leq GW(\mathbf{C},\mathbf{C}',\w,\w').$$
The converse is also true {under the hypothesis of Proposition \ref{concavity_gw_theo2}}. In this case, if $(\GGs_{*},\GGv_{*})$ is an optimal solution of
\eqref{eq:co-optimal-transport}, then both $\GGs_{*},\GGv_{*}$ are solutions of
\eqref{eq:gromov}. Conversely, if $\GGs_{*}$ is an optimal solution of
\eqref{eq:gromov}, then $(\GGs_{*},\GGs_{*})$ is an optimal solution for
\eqref{eq:co-optimal-transport} .
\end{prop}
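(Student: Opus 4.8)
The plan is to establish the two inequalities separately and then use Proposition~\ref{concavity_gw_theo2} together with Theorem~\ref{equivalence_theo} (Konno's reduction) to upgrade the one-sided bound into an equivalence under the stated hypotheses.

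First I would prove the general inequality $\COOT(\mathbf{C},\mathbf{C}',\w,\w',\w,\w')\leq \gw(\mathbf{C},\mathbf{C}',\w,\w')$. This direction is a straightforward relaxation argument: the $\gw$ problem \eqref{eq:gromov} minimizes $\froeb{\L(\mathbf{C},\mathbf{C}')\otimes\GGs}{\GGs}$ over a \emph{single} coupling $\GGs\in\couplingset(\w,\w')$, whereas \COOT\ minimizes $\froeb{\L(\mathbf{C},\mathbf{C}')\otimes\GGs}{\GGv}$ over \emph{two} couplings in the same polytope. Given any $\gw$-optimal $\GGs_*$, the pair $(\GGs_*,\GGs_*)$ is feasible for \COOT\ (both marginals coincide since $\v=\v'=\w,\w'$ here) and achieves exactly the $\gw$ objective value. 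Hence the \COOT\ minimum, taken over a strictly larger feasible set containing this diagonal pair, can only be smaller or equal. This gives the inequality with essentially no computation.

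Next I would prove the converse under the hypotheses of Proposition~\ref{concavity_gw_theo2}. The key observation is that in both the squared-Euclidean-distance case and the inner-product case, Proposition~\ref{concavity_gw_theo2} shows $\gw$ is a \emph{concave} QP in $\xbf(\GGs)=\vec(\GGs)$, with negative (semi-)definite Hessian $\mathbf{Q}=-4\,\X\X^T\kron\X'\X'^T$ or $\mathbf{Q}=-2\,\X\X^T\kron\X'\X'^T$. I would then apply Theorem~\ref{equivalence_theo}: the concave QP \eqref{qap_konno} with this $\mathbf{Q}$ and the constraint set $\mathbf{A}\xbf=\mathbf{b}$ encoding $\couplingset(\w,\w')$ is equivalent to its associated bilinear program \eqref{bilinearqap} in variables $\xbf=\vec(\GGs),\ybf=\vec(\GGv)$. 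The bilinear program \eqref{bilinearqap} is precisely the \COOT\ problem \eqref{eq:co-optimal-transport} once one identifies the bilinear cross-term $\tfrac12\xbf^T\mathbf{Q}\ybf$ with $\froeb{\L(\mathbf{C},\mathbf{C}')\otimes\GGs}{\GGv}$ and checks that the linear terms $\tfrac12\mathbf{c}^T\xbf+\tfrac12\mathbf{c}^T\ybf$ reproduce the constant/linear contributions from the expansion of $\L=|\cdot|^2$ used in Proposition~\ref{concavity_gw_theo2}. The correspondence statement in the proposition — that an optimal $(\GGs_*,\GGv_*)$ for \COOT\ yields two $\gw$-optimizers, and that an optimal $\GGs_*$ for $\gw$ yields the \COOT-optimizer $(\GGs_*,\GGs_*)$ — then follows immediately from the matching ``if $\xbf^*$ optimal then $(\xbf^*,\xbf^*)$ optimal / if $(\xbf^*,\ybf^*)$ optimal then both optimal'' clauses of Theorem~\ref{equivalence_theo}.

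\textbf{The main obstacle} I anticipate is the bookkeeping in matching \COOT\ exactly to the normalized bilinear form \eqref{bilinearqap}: one must carefully verify that the $L=|\cdot|^2$ expansion of $\froeb{\L(\mathbf{C},\mathbf{C}')\otimes\GGs}{\GGv}$ decomposes into the constant term, the marginal-dependent linear terms $\mathbf{c}^T\xbf$, and the genuinely bilinear term $\tfrac12\xbf^T\mathbf{Q}\ybf$, with the same vector $\mathbf{c}$ and Hessian $\mathbf{Q}$ as those appearing in the concave QP reformulation of $\gw$ from Proposition~\ref{concavity_gw_theo2}. In particular I would track the factor of $\tfrac12$ in Konno's normalization and confirm that the linear coefficients coming from the $\xbf\mathbf{1}^T+\mathbf{1}\xbf^T$ terms of the distance matrices are constant over the transport polytope (they depend only on the fixed marginals $\w,\w',\v,\v'$), so that they do not affect the argmin and the reduction applies verbatim. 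Once this identification is pinned down, both the value equality and the optimizer correspondence are direct consequences of the two cited theorems.
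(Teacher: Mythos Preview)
Your proposal is correct and follows essentially the same approach as the paper: the inequality is obtained by noting that $(\GGs_*,\GGs_*)$ is feasible for \COOT, and the converse is obtained by rewriting \COOT\ in the bilinear form of Theorem~\ref{equivalence_theo} with the same $\mathbf{M},\mathbf{Q}$ arising from the calculus of Proposition~\ref{concavity_gw_theo2}, then invoking Konno's QP--BP equivalence. The bookkeeping you flag as the main obstacle is exactly what the paper sketches when it says ``by following the same calculus'' one verifies $\COOT=-2\vec(\mathbf{M})^T\vec(\GGs)-2\vec(\mathbf{M})^T\vec(\GGv)-8\vec(\GGs)^T\mathbf{Q}\vec(\GGv)+Cte$.
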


\paragraph{Equivalence of algorithms} Under the hypothesis of Proposition \ref{concavity_gw_theo2} we know that there exists an optimal solution for the
\COOT\ problem of the form $(\GG_{*},\GG_{*})$, where $\GG_{*}$ is an optimal
solution of the GW problem. This gives a conceptually very simple fixed-point
procedure where one only iterates over one coupling in order to compute a
optimal solution of GW as described in Algorithm \ref{alg:bcd2}.
Interestingly enough, the iterations of the fixed point method are exactly equivalent to
the Frank Wolfe procedure described in Chapter \ref{cha:fgw}, since, in the concave
setting, the line search step can be fixed to $1$ \cite{NIPS2018_7323} (see
Section \ref{sec:equivalence} for more details). Also note that the steps of
Algorithm \ref{alg:bcd2} are iterations of Difference of Convex Algorithm (DCA) \cite{tao2005dc,yuille2003concave} where
the concave function is approximated at each iteration by its linear
majorization. When applying the same procedure for entropic
regularized \COOT, the resulting DCA also recovers exactly the projected gradients
iterations proposed in \cite{peyre2016gromov} for solving the entropic regularized version of GW.

\begin{algorithm}[t]
    \caption{\label{alg:bcd2}
     DC Algorithm for solving \COOT\ and GW with squared Euclidean matrices or inner product similarities}
        \begin{algorithmic}[1]
                  \State $\GG_{(0)}\leftarrow \w\w'^{T}, k \leftarrow 0$
          \While {$k < $ maxIt {\bf and} $err >$ 0} 
          \State $\GG_{(k)} \leftarrow \argmin_{\GG \in \couplingset(\w,\w')} \froeb{\L(\C,\C')\otimes \GG_{(k-1)}}{\GG}$ // \text{ linear OT problem}
          \State $err \leftarrow ||\GG_{(k-1)} - \GG_{(k)}||_F$
          \State $k\leftarrow k+1$  
          \EndWhile \\
          \Return {$\GG_{(k)}$ for GW and $(\GG_{(k)},\GG_{(k)})$ for \COOT\ }
        \end{algorithmic}
\end{algorithm}

We would like to stress out that \COOT\ is much more than a generalization of GW and that is for multiple reasons. First, it can be used on raw data without requiring to choose or compute the similarity matrices, that can be costly, for instance, when dealing with shortest path distances in graphs, and to store them ($O(n^2+n'^2)$ overhead). Second, it can take into account additional information given by feature weights $\v,\v'$ and provides an interpretable mapping between them across two heterogeneous datasets. Finally, contrary to GW, \COOT\ is not invariant neither to feature rotations nor to the change of signs leading to a more informative samples' coupling when compared to GW in some applications. One such example is given in the previous MNIST-USPS transfer task (Figure~\ref{fig:mnist_usps}) for which the coupling matrix obtained via GW (given in Figure~\ref{fig:cooot_vs_gw}) exhibits important flaws in respecting class memberships when aligning samples.

\begin{figure*}[t]
  \centering
  \includegraphics[width=0.9\linewidth]{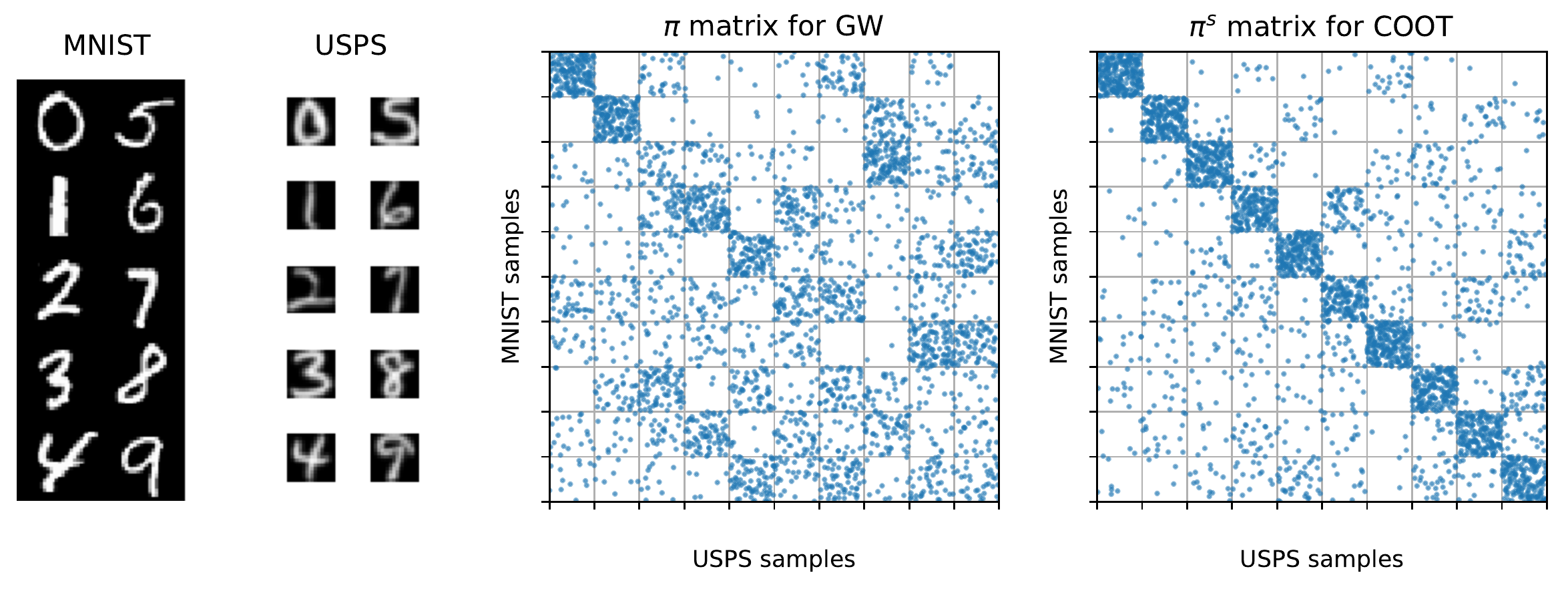}
  \caption{Comparison between the coupling matrices obtained via GW and COOT on MNIST-USPS.}
  \label{fig:cooot_vs_gw}
\end{figure*}

\subsection{Relation with Invariant OT and hierarchical approaches} 

In \cite{alavarez:2019}, authors consider a scenario where the OT problem is used to align measures supported on sets of points for which meaningful pairwise distances are hard or impossible to calculate. This may happen, for instance, due to some latent transformation that have been applied to the features. The main underlying idea of their approach is to find an assignment of the points and to calculate a transformation to match the features. More precisely, for two datasets $\X$, $\X'$, with same feature space $\R^{d}$, the corresponding objective function is:
\begin{equation}
    InvOT(\X,\X') = \min_{\GG \in \Pi(\w,\w')} \min_{\Pbf \in F_{d}} \ \froeb{\mathbf{M}_\Pbf}{\GG} 
    \label{eq:OTalvarez}
\end{equation}
where $M_\Pbf(i,j) = \|\x_i-\Pbf \x'_j\|^{2}_2$ and $F_d$ is a space of matrices $F_d = \{\mathbf{P} \in \R^{d\times d} | \ ||\Pbf||_\F=\sqrt{d}\}$.
As noted by the authors in Lemma 4.3, equation \eqref{eq:OTalvarez} can be related to the GW problem when $\mathbf{C}, \mathbf{C}'$ are calculated using inner product similarities and when $\X'$ is $\w'$-whitened, \textit{i.e} $\X'^{T}\text{diag}(\w')\X'=\mathbf{I_{d'}}$. In this case, author show that GW and InvOT are equivalent, namely a solution of GW is a solution of $InvOT$ and conversely. Since the GW problem with cosine similarities is actually concave we have proven \COOT\ and GW are also equivalent in this case which proves the following proposition:

\begin{prop}
\label{prop:cot_equal_gw_equal_invariantot}
Using previous notations, $L = |\cdot|^{2}$, $d=d'$, and inner product similarities $\mathbf{C}=\X\X^T,\mathbf{C}'=\X'\X'^T$. Suppose that $\X'$ is $\w'$-whitened, \textit{i.e} $\X'^{T}\text{diag}(\w')\X'=\mathbf{I_{d'}}$. Then, $InvOT(\X, \X')$, $\COOT(\mathbf{C},\mathbf{C}')$ and $GW(\mathbf{C},\mathbf{C}')$ are equivalent, namely any optimal coupling of one of this problem is a solution to others problems.
\end{prop}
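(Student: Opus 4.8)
The plan is to establish the three-way equivalence by chaining together two pairwise equivalences already available in the excerpt, using the sample coupling $\GGs \in \Pi(\w,\w')$ as the common object shared across all three problems. Concretely, I would prove $GW \equiv \COOT$ on one side and $GW \equiv InvOT$ on the other, and then conclude by transitivity that $\COOT \equiv InvOT$ as well. The whole argument rests on the observation that all three formulations are considered here in the same regime, namely with \emph{inner product similarities} $\mathbf{C}=\X\X^T$, $\mathbf{C}'=\X'\X'^T$ and $L=|\cdot|^2$, which is exactly the setting in which both linkages are known to hold.

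First I would handle the $GW$--$\COOT$ equivalence. By Proposition \ref{concavity_gw_theo2}, with inner product similarities the $GW$ problem \eqref{eq:gromov} is a \emph{concave} quadratic program, its Hessian being $\mathbf{Q}=-2\,\X\X^T \otimes_K \X'\X'^T$, which is negative semi-definite. This concavity is precisely the hypothesis under which the converse direction of Proposition \ref{prop:cot_equal_gw} applies, upgrading the generic lower bound $\COOT(\mathbf{C},\mathbf{C}')\leq GW(\mathbf{C},\mathbf{C}')$ into a full equivalence. From that proposition I would then extract the exact correspondence of minimizers: if $\GGs_{*}$ solves $GW$ then $(\GGs_{*},\GGs_{*})$ solves $\COOT$, and if $(\GGs_{*},\GGv_{*})$ solves $\COOT$ then each of $\GGs_{*},\GGv_{*}$ solves $GW$.

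Next I would invoke Lemma 4.3 of \cite{alavarez:2019} for the $GW$--$InvOT$ equivalence. That result states that under inner product similarities together with the $\w'$-whitening condition $\X'^{T}\text{diag}(\w')\X'=\mathbf{I_{d'}}$ --- which is assumed in the statement and which is the only extra hypothesis needed beyond the inner product regime --- the objective \eqref{eq:OTalvarez} and the $GW$ problem \eqref{eq:gromov} share the same optimal sample couplings, a minimizer of one yielding a minimizer of the other. Here the whitening assumption is what makes the $\Pbf$-dependent term $\sum_j w'_j \|\Pbf \x'_j\|^2$ constant, so that the inner minimization over $\Pbf$ collapses, via the Frobenius-norm duality of Lemma \ref{froebnormduality}, onto the inner-product $GW$ objective, the optimal transformation $\Pbf$ being recovered directly from the coupling. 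Combining this with the previous step, any optimal $\GGs$ for $GW$ is simultaneously optimal for $\COOT$ (as $(\GGs,\GGs)$) and for $InvOT$, and conversely.

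The argument is essentially bookkeeping, so the main obstacle is not analytic but one of \emph{consistency of the notion of equivalence}: the three problems carry different decision variables (a single coupling for $GW$, a coupling paired with a transformation $\Pbf$ for $InvOT$, and a pair of couplings for $\COOT$), so I must be careful that transitivity is applied at the level of the shared sample coupling, and that the dimension constraint $d=d'$ required by $InvOT$ is compatible with the hypotheses of the other two linkages. The last compatibility check is to confirm that the whitening assumption does not interfere with the concavity argument --- it does not, since Proposition \ref{concavity_gw_theo2} places no assumption on $\X'$ --- which closes the chain and yields the claimed equivalence of $InvOT(\X,\X')$, $\COOT(\mathbf{C},\mathbf{C}')$ and $GW(\mathbf{C},\mathbf{C}')$.
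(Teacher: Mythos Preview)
Your proposal is correct and follows essentially the same approach as the paper: the text immediately preceding the proposition already spells out the chain you describe, first noting that Lemma~4.3 of \cite{alavarez:2019} gives $GW\equiv InvOT$ under inner product similarities plus the $\w'$-whitening assumption, and then observing that concavity of $GW$ in this regime (Proposition~\ref{concavity_gw_theo2}) triggers the equivalence $GW\equiv\COOT$ from Proposition~\ref{prop:cot_equal_gw}. The paper also mentions, as an alternative route for the $GW\equiv InvOT$ leg, going through the \eqref{InvOT} formulation of Theorem~\ref{maintheorem} and expanding $\|\x_i-\Pbf\x'_j\|_2^2$ under whitening --- which is exactly the Frobenius-duality mechanism you sketch --- but this is offered as a second proof, not a replacement for the one you gave.
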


Another way of proving this result is to consider the Theorem \ref{maintheorem} of Chapter \ref{cha:gw_euclidean} where we proved that the Gromov-Wasserstein distance, when considering inner product similarities, is equivalent to the problem:
\begin{equation}
\tag{MaxOT}
\label{maxot2}
\min_{\GG \in \Pi(\w,\w')} \min_{\Pbf \in F_{d}} \ \sum_{i=1,j=1}^{n,n'} \langle \x_i,\Pbf \x'_j \rangle \pi_{ij}
\end{equation} 
When $\X'$ is $\w'$-whitened we can check easily by developping the terms in $M_\Pbf(i,j) = \|\x_i-\Pbf \x'_j\|^{2}_2$ that $InvOT$ is equivalent to \eqref{maxot2} (see Chapter \ref{cha:gw_euclidean}). 
InvOT was further used as a building block for aligning clustered datasets in \cite{Hierarchical_ot_2019} where the authors applied it as a divergence measure between the clusters, thus leading to an approach different from ours. Finally, in \cite{YurochkinCCMS19} the authors proposed a hierarchical OT distance as an OT problem with costs defined based on precomputed Wasserstein distances but with no global features' mapping, contrary to \COOT\ that optimises two couplings of the features and the samples simultaneously. 

\subsection{Election isomorphism}

{The election isomorphism problem mentioned earlier has recently been introduced in
\cite{faliszewski19} to compare two elections given by preference orders for candidates of voters. The authors express their problem quite
similarly to \COOT\ and also seek for correspondences between voters and
candidates across two elections where the preferences of each voter are known
(which is unrealistic in modern democracies). They focus on the setting
where both elections have exactly the same number of voters $n=n'$ and candidates
$d=d'$ and search for an optimal permutation \textit{via} a
Linear Integer Program. It is interesting to see that their problem consisting in aligning voters using the
Spearman distance is actually equivalent to solving $\COOT$ with $L=|\cdot|$ on voters
preferences. To that extent, \COOT\ is a more
general approach as it is applicable for general loss functions $L$, contrary to
the Spearman distance used in \cite{faliszewski19}, and generalizes to the cases
where $n\neq n'$ and $m\neq m'$. A more detailed comparison is given in
Section \ref{sec:election_iso}.

\section{Experiments}

In the next section, we {highlight} two possible applications of $\COOT$ in a machine learning context: HDA and co-clustering. {We consider these two particular tasks because 1) OT-based methods are considered as a strong baseline in DA; 2) COOT is a natural match for co-clustering as it allows for soft assignments of data samples and features to co-clusters.

\subsection{Heterogeneous domain adaptation}
\label{sec:hda}
In a classification context, the problem of domain adaptation (DA) arises whenever  one has to perform classification on 
a set of data $\X_t = \{\x_i^t\}_{i=1}^{N_t}$ (usually called the target domain) but has only few or no labelled data associated. Given a source
domain $\X_s = \{\x_i^s\}_{i=1}^{N_s}$ with associated labels $\Y_s =
\{\y_i^s\}_{i=1}^{N_s}$, one would like to leverage on this knowledge to train
a classifier
in the target domain. Unfortunately, direct use of the source information usually leads to poor results because of the discrepancy 
between source and target distributions. Among others, several works, e.g.~\cite{courty2017optimal}, use OT to perform this 
adaptation. However, in the case where the data do not belong to the same metric space ($\X_s\in \mathbb{R}^{N_s \times d}$ and $\X_t \in \mathbb{R}^{N_t \times d'}$ with $d\neq d'$), the problem is getting harder as 
domains probability distributions can not  be anymore compared or aligned in a straightforward way. This instance of the DA
problem, known as {\em Heterogeneous Domain Adaptation} (HDA), has
received less attention in the literature, partly due to the lack of appropriate divergence measures that can be used in such context.
State-of-the-art HDA methods include Canonical
Correlation Analysis~\cite{yeh2014heterogeneous} {and its kernelized version}
and a more recent { approach based on the} Gromov-Wasserstein
discrepancy~\cite{ijcai2018-412}. Usually, one considers a {\em semi-supervised}
variant of the problem,
where one has access to a small number $n_t$ of labelled samples per class in the target domain, because the {\em unsupervised} problem ($n_t=0$) 
is much more difficult. We investigate here the use of  \COOT\ for  both {\em semi-supervised} HDA, where one has access to a small number $n_t$ of labelled samples per class in the target domain and {\em unsupervised} HDA with $n_t=0$.

\paragraph{Solving HDA with \COOT} In order to solve the HDA problem, we compute $\COOT(\X_s,\X_t)$ between the two domains and use the $\GGs$
matrix providing a transport/correspondence between samples {(as illustrated in Figure \ref{fig:mnist_usps})} to estimate the
labels in the target domain via label propagation \cite{redko2018optimal}.
Assuming uniform sample weights and one-hot encoded labels, a class prediction $\hat{\Y}_t$ in the target domain samples can be obtained by {computing} $\hat{\Y}_t = \GGs \Y_s$. {When labelled target samples are available, we further prevent source samples to be mapped to target samples from a different class by adding a high cost in the cost matrix for every such source sample as suggested in \cite[Section 4.2]{courty2017optimal}. }

\paragraph{Datasets} We choose to test our method on the classical Caltech-Office dataset~\cite{saenko10}, which is dedicated to object recognition in images from several domains. Those domains exhibit  variability in term of presence/absence of background, lightning conditions, image quality, that as such induce distribution shifts between the domains. Among the available domains,  we select the following three:  {\em Amazon} (A), the {\em Caltech-256} image collection (C) and {\em Webcam} (W). Ten overlapping classes between the domains are used and two different deep feature representations of image in each domain are obtained using two different neural networks, namely: the Decaf ~\cite{donahue14} and GoogleNet~\cite{szegedy2015} neural network architectures. In both cases, we extract the image representations as  the activations of the last fully-connected layer, yielding respectively sparse 4096 and 1024 dimensional vectors. The heterogeneity comes from these two very different representations.

\begin{table}[t]
	\begin{center}

	\resizebox{0.9\columnwidth}{!}{
		\begin{tabular}{ccccccc}
		\toprule
			{Domains} & {No-adaptation baseline} & {CCA} & {KCCA} & {EGW} & {SGW} & {\COOT}\\
			\midrule
			C$\rightarrow$W & $69.12$$\pm 4.82$ & $11.47$$\pm 3.78$ & $66.76$$\pm 4.40$ & $11.35$$\pm 1.93$ & $\underline{ 78.88}$$\pm 3.90$ & $\bf 83.47$$\pm 2.60$\\
			W$\rightarrow$C & $83.00$$\pm 3.95$ & $19.59$$\pm 7.71$ & $76.76$$\pm 4.70$ & $11.00$$\pm 1.05$ & $\underline{ 92.41}$$\pm 2.18$ & $\bf 93.65$$\pm 1.80$\\
			W$\rightarrow$W & $82.18$$\pm 3.63$ & $14.76$$\pm 3.15$ & $78.94$$\pm 3.94$ & $10.18$$\pm 1.64$ & $\underline{ 93.12}$$\pm 3.14$ & $\bf 93.94$$\pm 1.84$\\
			W$\rightarrow$A & $84.29$$\pm 3.35$ & $17.00$$\pm 12.41$ & $78.94$$\pm 6.13$ & $7.24$$\pm 2.78$ & $\underline{ 93.41}$$\pm 2.18$ & $\bf 94.71$$\pm 1.49$\\
			A$\rightarrow$C & $\underline{ 83.71}$$\pm 1.82$ & $15.29$$\pm 3.88$ & $76.35$$\pm 4.07$ & $9.82$$\pm 1.37$ & $80.53$$\pm 6.80$ & $\bf 89.53$$\pm 2.34$\\
			A$\rightarrow$W & $81.88$$\pm 3.69$ & $12.59$$\pm 2.92$ & $81.41$$\pm 3.93$ & $12.65$$\pm 1.21$ & $\underline{ 87.18}$$\pm 5.23$ & $\bf 92.06$$\pm 1.73$\\
			A$\rightarrow$A & $\underline{ 84.18}$$\pm 3.45$ & $13.88$$\pm 2.88$ & $80.65$$\pm 3.03$ & $14.29$$\pm 4.23$ & $82.76$$\pm 6.63$ & $\bf 92.12$$\pm 1.79$\\
			C$\rightarrow$C & $67.47$$\pm 3.72$ & $13.59$$\pm 4.33$ & $60.76$$\pm 4.38$ & $11.71$$\pm 1.91$ & $\underline{ 77.59}$$\pm 4.90$ & $\bf 83.35$$\pm 2.31$\\
			C$\rightarrow$A & $66.18$$\pm 4.47$ & $13.71$$\pm 6.15$ & $63.35$$\pm 4.32$ & $11.82$$\pm 2.58$ & $\underline{ 75.94}$$\pm 5.58$ & $\bf 82.41$$\pm 2.79$\\\midrule
			\bf Mean & $78.00$$\pm 7.43$ & $14.65$$\pm 2.29$ & $73.77$$\pm 7.47$ & $11.12$$\pm 1.86$ & $\underline{ 84.65}$$\pm 6.62$ & $\bf 89.47$$\pm 4.74$\\
			\bf p-value & $<$.001 & $<$.001 & $<$.001 & $<$.001 & $<$.001 & -\\
		\bottomrule
		\end{tabular}
		}
	\end{center}
	\caption{{\bf Semi-supervised HDA} for $n_t=3$ from Decaf to GoogleNet task.}

	\label{tab:table_HDA_D_to_G_ns=3}
\end{table}

\paragraph{Competing methods and experimental settings} 
We evaluate COOT on {\em Amazon} (A), {\em Caltech-256} (C) and {\em Webcam} (W) domains from Caltech-Office dataset~\cite{saenko10} with 10 overlapping classes between the domains and two different deep feature representations obtained for images from each domain using the Decaf ~\cite{donahue14} and GoogleNet~\cite{szegedy2015} neural network architectures. In both cases, we extract the image representations as the activations of the last fully-connected layer, yielding respectively sparse 4096 and 1024 dimensional vectors. The heterogeneity comes from these two very
different representations. We consider 4 baselines: CCA, its kernalized version KCCA~\cite{yeh2014heterogeneous} with a Gaussian kernel which
width parameter is set to the inverse of the dimension of the input vector, EGW representing the entropic version of GW and
SGW~\cite{ijcai2018-412} that incorporates labelled target data into two regularization terms. For EGW and SGW, the entropic regularization term was set to $0.1$, and the two other
regularization hyperparameters for the semi-supervised case to $\lambda=10^{-5}$ and $\gamma=10^{-2}$
as done in \cite{ijcai2018-412}. We use \COOT\ with entropic regularization
on the feature mapping, with parameter $\epsilon_2=1$ in all experiments.  For
all OT methods, we use label propagation to obtain target labels as the maximum entry of $\hat{\Y}_t$ in each row. For all non-OT methods, classification
was conducted with a k-nn classifier with $k=3$. 
We run the experiment in a semi-supervised setting with $n_t=3$, \ie, $3$ samples per class were labelled in the
target domain. The baseline score is the result of classification by only
considering labelled samples in the target domain as the training set.  For each
pair of domains, we selected $20$ samples per class to form the learning sets.
We run this random selection process 10 times and consider the mean accuracy
of the different runs as a performance measure.

\paragraph{Results}
We first provide in Table~\ref{tab:table_HDA_D_to_G_ns=3} the results for the semi-supervised case where we perform adaptation from
Decaf to GoogleNet features. Note that we report the results in the opposite direction in Section \ref{sec:hda_supp} for $n_t \in \{0,1,3,5\}$. From it, we see that \COOT\ surpasses all the other state-of-the-art methods in terms of mean accuracy. This result is confirmed by a $p$-value lower than $0.001$ on a pairwise method comparison with \COOT\ in a Wilcoxon signed rank test. SGW provides the second best result, while CCA and EGW have a less than average performance. Finally, KCCA performs better than the two latter methods, but still fails most of the time to surpass the {no-adaptation baseline score given by a classifier learned on the available labelled target data}. Results for the unsupervised case can be found in
Table~\ref{tab:table_HDA_D_to_G_ns=0}. This setting is rarely considered in the
literature as unsupervised HDA is regarded as a very difficult problem. In this table, we do not provide scores for the no-adaptation baseline and SGW, as they require labelled data. 

\begin{table}
	\begin{center}
	\resizebox{.7\columnwidth}{!}{
		\begin{tabular}{ccccc}
		\toprule
			{Domains} & {CCA} & {KCCA} & {EGW} & {\COOT}\\
			\midrule
			C$\rightarrow$W & $14.20$$\pm 8.60$ & $\underline{ 21.30}$$\pm 15.64$ & $10.55$$\pm 1.97$ & $\bf 25.50$$\pm 11.76$\\
			W$\rightarrow$C & $13.35$$\pm 3.70$ & $\underline{ 18.60}$$\pm 9.44$ & $10.60$$\pm 0.94$ & $\bf 35.40$$\pm 14.61$\\
			W$\rightarrow$W & $10.95$$\pm 2.36$ & $\underline{ 13.25}$$\pm 6.34$ & $10.25$$\pm 2.26$ & $\bf 37.10$$\pm 14.57$\\
			W$\rightarrow$A & $14.25$$\pm 8.14$ & $\underline{ 23.00}$$\pm 22.95$ & $9.50$$\pm 2.47$ & $\bf 34.25$$\pm 13.03$\\
			A$\rightarrow$C & $11.40$$\pm 3.23$ & $\underline{ 11.50}$$\pm 9.23$ & $11.35$$\pm 1.38$ & $\bf 17.40$$\pm 8.86$\\
			A$\rightarrow$W & $19.65$$\pm 17.85$ & $\underline{ 28.35}$$\pm 26.13$ & $11.60$$\pm 1.30$ & $\bf 30.95$$\pm 18.19$\\
			A$\rightarrow$A & $11.75$$\pm 1.82$ & $\underline{ 14.20}$$\pm 4.78$ & $13.10$$\pm 2.35$ & $\bf 42.85$$\pm 17.65$\\
			C$\rightarrow$C & $12.00$$\pm 4.69$ & $\underline{ 14.95}$$\pm 6.79$ & $12.90$$\pm 1.46$ & $\bf 42.85$$\pm 18.44$\\
			C$\rightarrow$A & $15.35$$\pm 6.30$ & $\underline{ 23.35}$$\pm 17.61$ & $12.95$$\pm 2.63$ & $\bf 33.25$$\pm 15.93$\\\midrule
			\bf Mean & $13.66$$\pm 2.55$ & $\underline{ 18.72}$$\pm 5.33$ & $11.42$$\pm 1.24$ & $\bf 33.28$$\pm 7.61$\\
			\bf p-value & $<$.001 & $<$.001 & $<$.001 & -\\
		\bottomrule
		\end{tabular}
		}
	\end{center}
		\caption{{\bf Unsupervised HDA} for $n_t=0$ from Decaf to GoogleNet task.}
		\label{tab:table_HDA_D_to_G_ns=0}
\end{table}
As one can expect, most of the methods fail in obtaining good
classification accuracies in this setting, despite having access to discriminant 
feature representations. Yet, \COOT\ succeeds in providing a meaningful mapping
in some cases. {The overall superior performance of \COOT\ highlights its strengths and underlines the limits of other HDA methods. First, \COOT\ does not depend on approximating empirical quantities from the data, contrary to CCA and KCCA that rely on the estimation of the cross-covariance matrix that is known to be flawed for high-dimensional data with few samples \cite{SongSRH16}. Second, \COOT\ takes into account the features of the raw data that are more informative than the pairwise distances used in EGW. Finally, \COOT\ avoids the sign invariance issue discussed previously that hinders GW's capability to recover classes without supervision as illustrated for the MNIST-USPS problem before.}

\subsection{Co-clustering and data summarization}
\label{sec:co_clustering}
While clustering methods present an important discovery tool for data analysis, one of their main limations is to completely discard the potential relationships that may exist between the features that describe the data samples. For instance, in recommendation systems, where each user is described in terms of his or her preferences for some product, clustering algorithms may benefit from the knowledge about the correlation between different products revealing their probability of being recommended to the same users. This idea is the cornerstone of \textit{co-clustering} \cite{hartigan-direct-clustering-data-1972} where the goal is to perform clustering of both samples and features simultaneously. More precisely given a data matrix $\X \in \mathbb{R}^{n\times d}$ and  the number of samples (rows) and features (columns) clusters denoted by $g\leq n$ and $m\leq d$, respectively, we seek to find $\X_c \in \mathbb{R}^{g\times m}$ that summarizes $\X$ in the best way possible. 

\paragraph{\COOT-clustering} 
We look for $\X_c$ which is as close as possible to the original $\X$ \textit{w.r.t.} \COOT\ by solving: 
\begin{equation}
\min_{\X_{c}} \COOT(\X,\X_{c}) =\min_{\GGs,\GGv, \X_{c}} \froeb{\L(\X,\X_{c}) \otimes \GGs}{\GGv}
\end{equation} 
with potentially entropic regularization.
More precisely, we set $\w,\w',\v,\v'$ as uniform, initialize $\X_{c}$ with random values and apply the BCD algorithm over ($\GGs,\GGv,\X_{c}$) by alternating between the following steps: 
\begin{enumerate}
	\item Obtain $\GGs$ and $\GGv$ by solving $\COOT(\X,\X_{c})$  
	\item Set $\X_{c}$ to  $gm\GG^{s\top} \X \GGv$. 
\end{enumerate}
{This second step of the procedure is a least-square
estimation when $L=|\cdot|^2$ and corresponds to minimizing the \COOT\ objective
\emph{w.r.t..} $\X_c$. In practice, we observed
that few iterations of this procedure are enough to ensure the convergence. Once solved, we use the soft assignments provided by coupling matrices $\GGs \in \mathbb{R}^{n\times g},\GGv \in \mathbb{R}^{d\times m}$ to partition data points and features to clusters by taking the index of the maximum element in each row of $\GGs$ and $\GGv$, respectively.

\paragraph{Simulated data}
We follow \cite{LaclauRMBB17} where four scenarios with different number of co-clusters, degrees of separation and sizes were considered (for details, see the supplementary materials). {We choose to evaluate \COOT\ on simulated data as it provides us with the ground-truth for feature clusters that are often unavailable for real-world data sets.} As in \cite{LaclauRMBB17}, we use the same co-clustering baselines including
ITCC \cite{Dhillon:2003:IC:956750.956764}, Double K-Means (DKM)~\cite{rocci_08},
Orthogonal Nonnegative Matrix Tri-Factorizations (ONTMF) \cite{ding_06}, the
Gaussian Latent Block Models (GLBM) \cite{NADI08CI} and Residual Bayesian
Co-Clustering (RBC) \cite{shan_10} as well as the K-means and NMF run on both
modes of the data matrix, as clustering baseline. The performance of all methods
is measured using the co-clustering error (CCE) defined as
follows \cite{Patrikainen06}:
\begin{equation}
\text{CCE}((\bz,\bw),(\hat{\bz},\hat{\bw}))=e(\bz,\hat{\bz})+
e(\bw,\hat{\bw})-e(\bz,\hat{\bz})\times e(\bw,\hat{\bw})
\end{equation}
where $\hat{\bz}$ and $\hat{\bw}$ are the partitions of samples and features
estimated by the algorithm; $\bz$ and $\bw$ are the true partitions and
$e(\bz,\hat{\bz})$ (resp. $e(\bw,\hat{\bw})$) denotes the error rate, i.e., the
proportion of misclassified instances (resp. features). 

\begin{table}[t]
\centering
\resizebox{0.55\textwidth}{!}{
\begin{tabular}{lccccl}
\hline
Data set&$n\times d$&$g\times m$&Overlapping&Proportions\\
\hline
D1&$600\times300$&$3\times 3$&[+]&Equal\\
D2&$600\times300$&$3\times 3$&[+]&Unequal\\
D3&$300\times200$&$2\times 4$&[++]&Equal\\
D4&$300\times300$&$5\times4$&[++]&Unequal\\
\hline
\end{tabular}
}
\caption{\label{tab:data_description}Size ($n\times d$), number of co-clusters ($g\times m$), degree of overlapping ([+] for well-separated and [++] for ill-separated co-clusters) and the proportions of co-clusters for simulated data sets.}
\end{table} 

For all configurations, we generate 100 data sets and present the mean and standard deviation of the CCE
over all sets for all baselines in Table \ref{tab:result}. Table \ref{tab:data_description} below summarizes the characteristics of the simulated data sets used in our experiment.

Based on these results, we see that our algorithm outperforms all the other baselines on D1, D2
and D4 data sets, while being behind (CCOT-GW) proposed by \cite{LaclauRMBB17} on
D3. This result is rather strong as our method relies on the original data
matrix, while (CCOT-GW) relies on its kernel
representation and thus benefits from the non-linear information captured by
it. Finally, we note that while both competing methods rely on OT, they remain very different as (CCOT-GW) approach is based on detecting the positions and the number of jumps in the scaling vectors of GW entropic regularized solution, while our method relies on coupling matrices to obtain the partitions.

\begin{table*}[t]

\begin{center}
\resizebox{1\linewidth}{!}{
\begin{tabular}{lcccccccccc}
\hline
\multirow{2}{*}{Data set} &\multicolumn{10}{c}{Algorithms}\\
\cline{2-11}
  & K-means&NMF&DKM&Tri-NMF&GLBM&ITCC&RBC&CCOT&CCOT-GW & COOT\\
\hline
D1&$.018\pm{.003}$&$.042\pm{.037}$&$.025\pm{.048}$&$.082\pm{.063}$&$.021\pm{.011}$&$.021\pm{.001}$&$.017\pm{.045}$&$.018\pm{.013}$&$.004\pm{.002}$ & $\mathbf{0}$ \\
D2&$.072\pm{.044}$&$.083\pm{.063}$&$.038\pm{.000}$&$.052\pm{.065}$&$.032\pm{.041}$&$.047\pm{.042}$&$.039\pm{.052}$&$.023\pm{.036}$&$.011\pm{.056}$ & $\mathbf{.009\pm{0.04}}$\\
D3&--&--&$.310\pm{.000}$&--&$.262\pm{.022}$&$.241\pm{.031}$&--&$.031\pm{.027}$&$\mathbf{.008\pm{.001}}$ & $.04\pm{.05}$\\
D4&$.126\pm{.038}$&--&$.145\pm{.082}$&--&$.115\pm{.047}$&$.121\pm{.075}$&$.102\pm{.071}$&$.093\pm{.032}$&$.079\pm{.031}$ & $\mathbf{0.068\pm{0.04}}$\\
\hline
\end{tabular}
}
\caption{\label{tab:result} Mean ($\pm$ standard-deviation) of the co-clustering error (CCE) obtained for all configurations. ``-" indicates that the algorithm cannot find a partition with the requested number of co-clusters. All the baselines results (first 9 columns) are from \cite{LaclauRMBB17}.
}
\end{center}
\end{table*}

\paragraph{Olivetti Face dataset} 
As a first application of \COOT\ for the
co-clustering problem on real data, we propose to run the algorithm on the well
known Olivetti faces dataset \cite{samaria1994parameterisation}. 

\begin{figure}
  \centering
  \vspace{-4mm}
  \includegraphics[width=\linewidth]{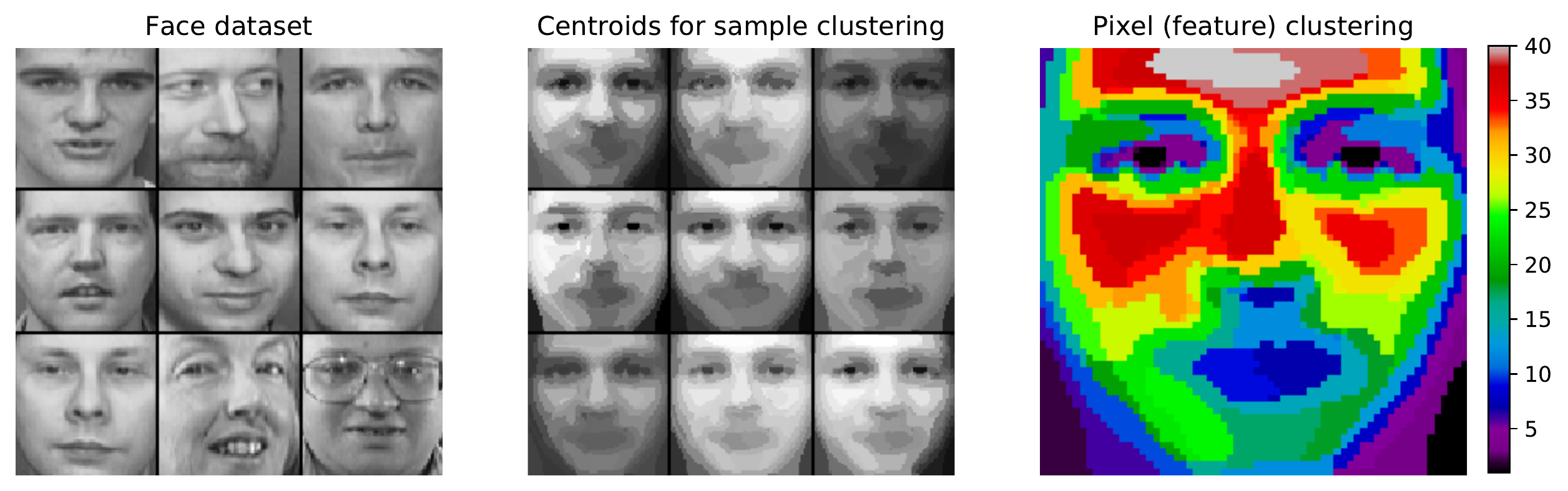}
          \caption{Co-clustering with \COOT\ on the Olivetti faces dataset. \textbf{(left)} Example images from the dataset, \textbf{(center)} centroids estimated by \COOT\, \textbf{(right)} clustering of the pixels estimated by \COOT\ where each color represents a cluster.  }
  \label{fig:coclust_faces}
\end{figure}
We take 400
images normalized between 0 and 1 and run our algorithm with $g=9$ image
clusters and $m=40$ feature (pixel) clusters. {As before, we consider the empirical distributions supported on images and features, respectively.}  The resulting reconstructed image's
clusters are given in Figure
\ref{fig:coclust_faces} and the pixel clusters are illustrated in its rightmost part. We can see that despite the high variability in the data set, we still
manage to recover detailed centroids, {whereas} L2-based clustering {such as standard NMF or k-means based on $\ell_2$ norm cost function are known to
provide blurry estimates in this case. Finally, as in the MNIST-USPS example, \COOT\ recovers spatially
localized pixel clusters with no prior information about the pixel relations.

\paragraph{MovieLens} We now evaluate our approach on the benchmark \MovieL-100K\footnote{https://grouplens.org/datasets/movielens/100k/} data set that provides 100,000 user-movie ratings, on a scale of one to five, collected from 943 users on 1682 movies. The main goal of our algorithm here is to summarize the initial data matrix so that $\X_c$ reveals the blocks (co-clusters) of movies and users that share similar tastes. We set the number of user and film clusters to $g=10$ and $m=20$, respectively as in \cite{Banerjee:2007:GME:1314498.1314563}. 

\begin{table}[t!]
\centering
\resizebox{0.7\linewidth}{!}{
\begin{tabular}{cc}
\hline
M1 &M20\\
\hline
Shawshank Redemption (1994)& Police Story 4: Project S (Chao ji ji hua) (1993)\\
Schindler's List (1993) & Eye of Vichy, The (Oeil de Vichy, L') (1993) \\
Casablanca (1942) & Promise, The (Versprechen, Das) (1994)\\
Rear Window (1954) & To Cross the Rubicon (1991)\\
Usual Suspects, The (1995) & Daens (1992)\\
\hline
\end{tabular}
}
\caption{\label{tab:top5} Top 5 of movies in clusters M1 and M20. Average rating of the top 5 rated movies in M1 is 4.42, while for the M20 it is 1.}
\end{table}
The obtained results provide the first movie cluster consisting of
films with high ratings (3.92 on average), while the last movie cluster includes
movies with very low ratings (1.92 on average). Among those, we show the 5
best/worst rated movies in those two clusters in Table \ref{tab:top5}. Overall, our algorithm manages to find a coherent co-clustering structure in \MovieL-100K and obtains results similar to those provided in \cite{LaclauRMBB17,Banerjee:2007:GME:1314498.1314563}.

\section{Conclusion}

In this chapter, we presented a novel variant of the optimal transport problem which aims at comparing 
distributions supported in different spaces. To this end, two optimal transport maps, one acting on the sample space, 
and the other on the feature space, are optimized to connect the two heterogenous distributions. We show that this 
novel problem has connections with bilinear assignment and provide algorithms to solve it. We demonstrate its
usefulness and versatility on two difficult machine learning problems: heterogeneous domain adaptation and 
co-clustering/data summarization, where promising results were obtained. 

Numerous follow-up of this work are expected. Beyond the potential applications of the method in various contexts, such 
as {\em e.g.} statistical matching, data analysis or even losses in deep learning settings, one immediate and intriguing question 
lies into the generalization of this framework in the continuous setting, and  the potential connections to duality theory. This might lead 
to stochastic optimization schemes  enabling large scale solvers for this problem.


\chapter{Proofs of claims and additional results}
\minitoc
\label{cha:proofs}

\section{Proofs and additional results of Chapter \ref{cha:fgw}}
\label{sec:proofs_chap_fgw}

This section contains all the proofs of the claims and additional results of the Chapter \ref{cha:fgw}. In the following we will denote by $H_{q}$ the Wasserstein loss and by $J_{q}$ the Gromov-Wasserstein loss. More precisely, with notation of Chapter \ref{cha:fgw}:
\begin{equation}
H_{q}(\pi)=\int d(a,b)^{q} \dr \pi(a,b)
\end{equation}

\begin{equation}
J_{q}(d_\Xcal,d_\Ycal,\pi)=\int \int L(x,y,x',y)^{q} \dr \pi(x,y) \dr \pi(x',y')=\int \int |d_{\Xcal}(x,x')-d_{\Ycal}(y,y')|^{q} \dr \pi(x,y) \dr \pi(x',y')
\end{equation}

\begin{equation}
\lossfgw(\pi)=\int \int \big((1-\alpha) d(a,b)^{q} +\alpha |d_{\Xcal}(x,x')-d_{\Ycal}(y,y')|^{q} \big)^{p} \dr\pi((x,a),(y,b))\dr\pi((x',a'),(y',b'))
\end{equation}
We note $P_{i}\#\pi$ the projection on the marginal $i$ of $\pi$.

\subsection{Additional results -- Comparison with W and GW \label{sec:compa}}
\paragraph{Cross validation results}

During the nested cross validation, we divided the dataset into 10 and use 9 folds for training, where $\alpha$ is chosen within $[0,1]$ \textit{via} a 10-CV cross-validation, 1 fold for testing, with the best value of $\alpha$ (with the best average accuracy on the 10-CV) previously selected.  The experiment is repeated 10 times for each dataset except for MUTAG and PTC where it is repeated 50 times. Table \ref{discretetable} and \ref{continuoustable} report the average number of time $\alpha$ was chose within $]0,...1[$ without $0$ and $1$ corresponding to the Wasserstein and Gromov-Wasserstein distances respectively. Results suggests that both structure and feature pieces of
information are necessary as $\alpha$ is consistently selected inside $]0,...1[$ except for PTC and COX2.

\begin{table}[h]
    \caption{Percentage of $\alpha$ chosen in $]0,...,1[$ compared to $\{0,1\}$ for discrete labeled graphs \label{discretetable}}
    \vspace{1.5mm}
\begin{center}
    \resizebox{0.5\columnwidth}{!}{
\begin{sc}
    \setlength{\tabcolsep}{4pt}
\begin{tabular}{llll}
\toprule
{Discrete attr.} &          MUTAG &          NCI1 &           PTC \\
\midrule
\midrule
FGW raw sp               &  100\% &     100\% &  98\% \\
FGW wl h=2 sp       &   100\% &          100\% & 88\% \\
FGW wl  h=4 sp       &   100\% &          100\% &  88\% \\
\midrule
\end{tabular}
\quad
\end{sc}}
\end{center}
\end{table}

\begin{table}[h]
    \caption{Percentage of $\alpha$ chosen in $]0,...,1[$ compared to $\{0,1\}$ for vector attributed graphs \label{continuoustable}}
    \vspace{1.5mm}
    \begin{center}
    \resizebox{0.7\linewidth}{!}{
\begin{sc}
    \setlength{\tabcolsep}{4pt}
\begin{tabular}{lllllll}
\toprule
{Vector attributes} &                   BZR &          COX2 &     CUNEIFORM &       ENZYMES &        PROTEIN &      SYNTHETIC \\
\midrule
\midrule
\small{FGW sp}             &  100 \% &  90\% &  100\% &  100\% &  100\% &  100\% \\
\midrule
\end{tabular}
\end{sc}}
\end{center}

\end{table}

\paragraph{Nested CV results} We report in tables \ref{tab:wgw:vec} and \ref{tab:wgw:disc} the average classification accuracies of the nested classification procedure by taking $W$ and $GW$ instead of $FGW$ (\textit{i.e} by taking $\alpha=0,1$). Best result for each dataset is in bold. A (*) is added when best score does not yield to a significative improvement compared to the second best score. The significance is based on a Wilcoxon signed rank test between the best method and the second one.

Results illustrates that $FGW$ encompasses the two cases of $W$ and $GW$, as scores of $FGW$ are usually greater or equal on every dataset than scores of both $W$ and $GW$ and when it is not the case the difference is not statistically significant. 

\begin{table}[H]
    \caption{Average classification accuracy on the graph datasets with discrete attributes.\label{tab:wgw:disc}}
    \vspace{1.5mm}
\begin{center}
    \resizebox{0.6\linewidth}{!}{
\begin{sc}
    \setlength{\tabcolsep}{4pt}
\begin{tabular}{llll}
\toprule
{Discrete attr.} &          MUTAG &          NCI1 &           PTC-MR \\
\midrule
\midrule
FGW raw sp               &  83.26$\pm$10.30 &     72.82$\pm$1.46 &  55.71$\pm$6.74 \\
FGW wl h=2 sp       &   86.42$\pm$7.81 &          85.82$\pm$1.16 &  {63.20$\pm$7.68} \\
FGW wl  h=4 sp       &   \textbf{88.42$\pm$5.67} &          \textbf{86.42$\pm$1.63}* & 65.31$\pm$7.90 \\
\midrule
W raw sp       &  79.36$\pm$3.49  &   70.5$\pm$4.63        &  54.79$\pm$5.76 \\
W wl h=2 sp       &  87.78$\pm$8.64  &     85.83$\pm$1.75      & 63.90$\pm$7.66  \\
W wl  h=4 sp       & 87.15$\pm$8.23   &     86.42$\pm$1.64     & \textbf{66.28$\pm$6.95}* \\
\midrule
GW sp &82.73$\pm$9.59 & 73.40$\pm2.80$ &  54.45$\pm$ 6.89\\
\bottomrule
\end{tabular}
\quad
\end{sc}}
\end{center}
\end{table}

\begin{table}[H]
    \caption{Average classification accuracy on the graph datasets with vector attributes.\label{tab:wgw:vec}}
     \vspace{1.5mm}
\begin{center}
    \begin{center}
    \resizebox{.9\textwidth}{!}{
\begin{sc}
    \setlength{\tabcolsep}{4pt}
\begin{tabular}{lllllll}
\toprule
{Vector attributes} &                   BZR &          COX2 &     CUNEIFORM &       ENZYMES &        PROTEIN &      SYNTHETIC \\
\midrule
\midrule
\small{FGW sp}             &  85.12$\pm$4.15 &  \textbf{77.23$\pm$4.86}* &  \textbf{76.67$\pm$7.04} &  71.00$\pm$6.76 &   74.55$\pm$2.74 &  \textbf{100.00$\pm$0.00} \\
\midrule
\small{W}               &  \textbf{85.36$\pm$4.87}* & 77.23$\pm$3.16  &  61.48$\pm$10.23 &\textbf{71.16$\pm$6.32}*  &  \textbf{75.98$\pm$ 1.97}*  &  34.07$\pm$11.33 \\
\midrule
\small{GW sp}                 &  82.92$\pm$6.72  & 77.65$\pm$5.88  & 50.66$\pm$8.91  & 23.66$\pm$3.63  &  71.96$\pm$ 2.40  &  41.66$\pm$4.28  \\
\bottomrule
\end{tabular}
\end{sc}}
\end{center}
\end{center}
\end{table}

\paragraph{Timings} In this paragraph we provide some timings for the discrete attributed datasets. Table \ref{tab:wgw:timings} displays the average timing for computing $FGW$ between two pair of graphs. 

\begin{table}[H]
    \caption{Average timings for the computation of $FGW$ between two pairs of graph \label{tab:wgw:timings}}
        \vspace{1.5mm}
\begin{center}
    \resizebox{0.5\linewidth}{!}{
\begin{sc}
    \setlength{\tabcolsep}{4pt}
\begin{tabular}{llll}
\toprule
{Discrete attr.} &          MUTAG &          NCI1 &           PTC-MR \\
\midrule
\midrule
FGW               &  2.5 ms  &  7.3 ms  & 3.7 ms \\
\bottomrule
\end{tabular}
\quad
\end{sc}}
\end{center}
\end{table}

\subsection{Proof of Proposition \ref{fgwcomparetogwandw} -- Comparison between FGW, GW and W}
\label{proof:prop1}

We recall the proposition:
\begin{prop*}{Comparison between $FGW$, $GW$ and $W$.}
\begin{itemize}
\item The following inequalities hold:
\begin{equation}
\label{wassinequality}
\fgwdistance_{\alpha,p,q}(\mu,\nu) \geq (1-\alpha)\wass_{pq}(\mu_{A},\nu_{B})^{q}
\end{equation}
\begin{equation}
\label{gromovinequality}
\fgwdistance_{\alpha,p,q}(\mu,\nu) \geq \alpha \gw_{pq}(\mu_{X},\nu_{Y})^{q}
\end{equation}
\item Let us suppose that the structure spaces $(\Xcal,d_{\Xcal})$,$(\Ycal,d_{\Ycal})$ are part of a single ground space $(\mathcal{Z},d_{\mathcal{Z}})$ (\textit{i.e.} $\Xcal,\Ycal \subset \mathcal{Z}$ and $d_{\Xcal}=d_{\Ycal}=d_{\mathcal{Z}}$). We consider the Wasserstein distance between $\mu$ and $\nu$ for the distance on $\mathcal{Z} \times \Omega$ : $\tilde{d}((x,a),(y,b))=(1-\alpha)d(a,b)+\alpha d_{\mathcal{Z}}(x,y)$. Then:
\begin{equation}
\label{samespace}
\fgwdistance_{\alpha,p,1}(\mu,\nu)(\mu,\nu) \leq 2\wass_{p}(\mu,\nu).
\end{equation}
\end{itemize}
\end{prop*}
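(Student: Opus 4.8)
The plan is to treat the two lower bounds and the upper bound separately, as they rest on opposite-direction estimates of the FGW integrand. For the lower bounds I would first use the pointwise inequalities $(1-\alpha)d(a,b)^q + \alpha L(x,y,x',y')^q \geq (1-\alpha)d(a,b)^q$ and $\geq \alpha L(x,y,x',y')^q$, both valid because each summand is non-negative. Since $p\geq 1$, raising to the power $p$ preserves these inequalities, so for every $\pi \in \couplingset(\mu,\nu)$ one gets $\lossfgw(\pi) \geq (1-\alpha)^p \int\int d(a,b)^{pq}\,\dr\pi\,\dr\pi$ and $\lossfgw(\pi) \geq \alpha^p \int\int L^{pq}\,\dr\pi\,\dr\pi$. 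The key observation is that the first double integral depends only on the feature pair $(a,b)$, so integrating out the second copy of $\pi$ (a probability measure) collapses it to $\int d(a,b)^{pq}\,\dr\pi$; since the feature marginal of $\pi$ is a coupling of $\mu_A,\nu_B$, this quantity is $\geq \wass_{pq}(\mu_A,\nu_B)^{pq}$. Symmetrically, the $L^{pq}$ double integral depends only on the structure coordinates $(x,y),(x',y')$, whose marginal is a coupling of $\mu_X,\nu_Y$, yielding $\geq \gw_{pq}(\mu_X,\nu_Y)^{pq}$. Taking the infimum over $\pi$ and then the $p$-th root gives \eqref{wassinequality} and \eqref{gromovinequality}.

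For the upper bound \eqref{samespace} I would start from an arbitrary coupling $\pi$ and bound the FGW integrand from above. Since $d_{\Xcal}=d_{\Ycal}=d_{\mathcal{Z}}$, the reverse triangle inequality in $(\mathcal{Z},d_{\mathcal{Z}})$ gives $|d_{\mathcal{Z}}(x,x')-d_{\mathcal{Z}}(y,y')| \leq d_{\mathcal{Z}}(x,y) + d_{\mathcal{Z}}(x',y')$. The crucial step is then to absorb the two resulting terms into the sum-distance $\tilde d$: writing $u = \tilde d((x,a),(y,b)) = (1-\alpha)d(a,b)+\alpha d_{\mathcal{Z}}(x,y)$ and $u' = \tilde d((x',a'),(y',b'))$, and using $\alpha d_{\mathcal{Z}}(x',y') \leq u'$, one obtains $(1-\alpha)d(a,b) + \alpha|d_{\mathcal{Z}}(x,x')-d_{\mathcal{Z}}(y,y')| \leq u + u'$. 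Hence, since $q=1$, $\lossfgw(\pi) \leq \int\int (u+u')^p\,\dr\pi\,\dr\pi$.

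The part I expect to be the main obstacle is converting $\int\int (u+u')^p\,\dr\pi\,\dr\pi$ into a multiple of $\int \tilde d^p\,\dr\pi$. I would view $U(\omega,\omega')=u(\omega)$ and $U'(\omega,\omega')=u'(\omega')$ as functions on the product space endowed with $\pi\otimes\pi$, so that $\int\int(u+u')^p\,\dr\pi\,\dr\pi = \|U+U'\|_{L^p(\pi\otimes\pi)}^p$. Minkowski's inequality then gives $\|U+U'\|_{L^p(\pi\otimes\pi)} \leq \|U\|_{L^p(\pi\otimes\pi)} + \|U'\|_{L^p(\pi\otimes\pi)}$, and each of these two norms equals $\left(\int \tilde d^p\,\dr\pi\right)^{1/p}$ after integrating out the free variable (again because $\pi$ is a probability measure and $u,u'$ share the same law). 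This produces $\lossfgw(\pi) \leq 2^p \int \tilde d^p\,\dr\pi$. Finally, choosing $\pi$ to be an optimal coupling for $\wass_p(\mu,\nu)$ with ground cost $\tilde d$ gives $\int \tilde d^p\,\dr\pi = \wass_p(\mu,\nu)^p$; since $\fgwdistance_{\alpha,p,1}(\mu,\nu)^p$ is by definition the infimum of $\lossfgw$, it is bounded above by $\lossfgw(\pi) \leq 2^p\wass_p(\mu,\nu)^p$, and taking $p$-th roots yields \eqref{samespace}.
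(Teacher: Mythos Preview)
Your proposal is correct and follows essentially the same route as the paper: drop one nonnegative term and project onto the feature (resp.\ structure) marginal for the lower bounds, and for the upper bound use the reverse triangle inequality $|d_{\mathcal Z}(x,x')-d_{\mathcal Z}(y,y')|\le d_{\mathcal Z}(x,y)+d_{\mathcal Z}(x',y')$, pad to $u+u'$, apply Minkowski on $L^p(\pi\otimes\pi)$, and specialize $\pi$ to an optimal Wasserstein coupling. The only cosmetic difference is that the paper adds the missing $(1-\alpha)d(a',b')$ explicitly before Minkowski, whereas you absorb $\alpha d_{\mathcal Z}(x',y')$ into $u'$ directly; the resulting bound $u+u'$ and the rest of the argument are identical.
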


\begin{proof}
For the two inequalities \eqref{wassinequality} and \eqref{gromovinequality} let $\pi$ be an optimal coupling for the Fused Gromov-Wasserstein distance between $\mu$ and $\nu$. Clearly: 

\begin{equation*}
\begin{split}
\fgwdistance_{\alpha,p,q}(\mu,\nu) &=\Big(\int\limits_{(\Xcal \times \Omega \times \Ycal \times \Omega)^{2}}\big((1-\alpha) d(a,b)^{q} +\alpha L(x,y,x',y')^{q}\big)^{p} \,\dr\pi((x,a),(y,b))\dr\pi((x',a'),(y',b'))\Big)^{\frac{1}{p}} \\
&\geq \Big(\int\limits_{\Xcal \times \Omega \times \Ycal \times \Omega} (1-\alpha)^{p} d(a,b)^{pq} \,\dr\pi((x,a),(y,b))\Big)^{\frac{1}{p}}= (1-\alpha) \Big( \int\limits_{\Omega \times \Omega}d(a,b)^{pq}\dr P_{2,4}\#\pi(a,b)\Big)^{\frac{1}{p}}
\end{split}
\end{equation*}
Since $\pi \in \couplingset(\mu,\nu)$ the coupling $P_{2,4}\#\pi$ is in $\couplingset(\mu_{A},\nu_{B})$. So by suboptimality:
$$\fgwdistance_{\alpha,p,q}(\mu,\nu){\geq} (1-\alpha) (\wass_{pq}(\mu_{A},\nu_{B}))^{q}$$
which proves equation (\ref{wassinequality}). Same reasoning is used for equation (\ref{gromovinequality}).

For the last inequality \eqref{samespace} let $\pi \in \couplingset(\mu,\nu)$ be any admissible coupling. By suboptimality:
{\small
\begin{equation*}
\begin{split}
&\fgwdistance_{\alpha,p,1}(\mu,\nu) \leq \Big( \int\limits_{(\Xcal \times \Omega \times \Ycal \times \Omega)^{2}}\big((1{-}\alpha) d(a,b) +\alpha |d_{\mathcal{Z}}(x,x')-d_{\mathcal{Z}}(y,y')|\big)^{p}\dr\pi((x,a),(y,b))\dr\pi((x',a'),(y',b'))\Big)^{\frac{1}{p}} \\
&\stackrel{(*)}{\leq}\Big(\int\limits_{(\Xcal \times \Omega \times \Ycal \times \Omega)^{2}}\big((1{-}\alpha) d(a,b) +\alpha d_{\mathcal{Z}}(x,y)+\alpha d_{\mathcal{Z}}(x',y')\big)^{p}\dr\pi((x,a),(y,b))\dr\pi((x',a'),(y',b'))\Big)^{\frac{1}{p}} \\
&\leq\Big(\int\limits_{(\Xcal \times \Omega \times \Ycal \times \Omega)^{2}}\big((1{-}\alpha) d(a,b) +\alpha d_{\mathcal{Z}}(x,y)+(1{-}\alpha) d(a',b')+\alpha d_{\mathcal{Z}}(x',y')\big)^{p}\dr\pi((x,a),(y,b))\dr\pi((x',a'),(y',b'))\Big)^{\frac{1}{p}} \\ 
&\stackrel{(**)}{\leq}2\Big( \int\limits_{\Xcal \times \Omega \times \Ycal \times \Omega}\big((1{-}\alpha) d(a,b) +\alpha d_{\mathcal{Z}}(x,y)\big)^{p}\dr\pi((x,a),(y,b))\Big)^{\frac{1}{p}} 
\end{split}
\end{equation*}
}

(*) is the triangle inequality of $d_{\mathcal{Z}}$ and (**) Minkowski inequality. Since this inequality is true for any admissible coupling $\pi$ we can apply it with the optimal coupling for the Wasserstein distance defined in the proposition and the claim follows.

\end{proof}

\subsection{Proof of Theorem \ref{metrictheo} -- Metric properties of FGW}
\label{proof:theo1}

We recall the theorem:
\begin{theo*}[\tv{Metric properties}]
Let $p,q\geq 1$, $\alpha \in ]0,1[$ and $(\mu,\nu) \in \mathbb{S}_{pq}(\Omega) \times \mathbb{S}_{pq}(\Omega)$. The functional $\pi \rightarrow \lossfgw(\pi)$ always achieves an infimum $\pi^{*}$ in $\Pi(\mu,\nu)$ \emph{s.t.} $\fgwdistance_{\alpha,p,q}(\mu,\nu)=\lossfgw(\pi^{*})<+\infty$. Moreover:
\begin{itemize}
\item[$\bullet$] $\fgwdistance_{\alpha,p,q}$ is symmetric and, for $q=1$, satisfies the triangle inequality. For $q\geq 2$, the triangular inequality is relaxed by a factor $2^{q-1}$.
\item[$\bullet$] For $\alpha \in ]0,1[$, $\fgwdistance_{\alpha,p,q}(\mu,\nu)=0$ if an only if there exists a bijective function $ \phi=(\phi_1,\phi_2): \tv{\supp(\mu)} \rightarrow \tv{\supp(\nu)}$ such that:
\begin{equation*}
\phi\#\mu=\nu
\end{equation*}
\begin{equation*}
\forall (x,a) \in \tv{\text{supp}(\mu)} \ , \phi_{2}(x,a)=a
\end{equation*}
\begin{equation*}
\forall (x,a),(x',a') \in \tv{\text{supp}(\mu)^{2}}, \ d_{\Xcal}(x,x')=d_{\Ycal}(\phi_{1}(x,a),\phi_{1}(x',a'))
\end{equation*}
\item[$\bullet$] If $(\mu,\nu)$ are generalized labeled graphs then $\fgwdistance_{\alpha,p,q}(\mu,\nu)=0$ if and only if $(\Xcal \times \Omega,d_\Xcal,\mu)$ and $(\Ycal \times \Omega,d_\Ycal,\nu)$ are (II)-strongly isomorphic.
\end{itemize}
\end{theo*}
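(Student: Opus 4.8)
The plan is to prove the three bullet points in order, as each one builds on the previous. The existence/finiteness part follows from the general machinery of Lemma~\ref{lsc_on_measure} (lower semi-continuity of functionals of the form $\pi \mapsto \int\int f\,\dr\pi\,\dr\pi$ for l.s.c.\ $f$) combined with Weierstrass' theorem (Memo~\ref{memo:weir}): since $d$ and $d_\Xcal,d_\Ycal$ are continuous, the integrand $\big((1-\alpha)d(a,b)^q + \alpha L(x,y,x',y')^q\big)^p$ is continuous hence l.s.c., so $\pi \mapsto \lossfgw(\pi)$ is l.s.c.\ for weak convergence; because $(\mu,\nu)\in \mathbb{S}_{pq}(\Omega)^2$ the coupling set $\Pi(\mu,\nu)$ is weakly compact and the value is finite thanks to the $pq$-integrability of the features and structure (the same H\"older-type bound as in the Remark after \eqref{gw} controls the cost). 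This gives an optimal $\pi^*$ with $\fgwdistance_{\alpha,p,q}(\mu,\nu)=\lossfgw(\pi^*)<+\infty$.

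For symmetry and the triangle inequality I would adapt the gluing lemma argument used for $\wass$ and $\gw$. Symmetry is immediate from the symmetry of the integrand in the two structured objects. For the triangle inequality when $q=1$, given structured objects $\mu,\rho,\nu$ with optimal couplings $\pi_{12}\in\Pi(\mu,\rho)$ and $\pi_{23}\in\Pi(\rho,\nu)$, I would glue them along $\rho$ to produce $\pi_{13}\in\Pi(\mu,\nu)$, then estimate $\big((1-\alpha)d(a,c)+\alpha|d_\Xcal(x,x')-d_\Zcal(z,z')|\big)$ using the triangle inequalities of $d$ and of the absolute value via the intermediate point, and conclude by Minkowski's inequality in $L^p$ of the glued measure. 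The factor $2^{q-1}$ for $q>1$ arises from the elementary inequality $(s+t)^q \le 2^{q-1}(s^q+t^q)$ recalled in Memo~\ref{memo:holder}, applied to split the mixed distortion term; this is routine once the gluing is set up.

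The genuinely hard part is the identity of indiscernibles (second bullet). The direction $\fgwdistance=0 \Rightarrow$ existence of $\phi$ is delicate because $\Pi(\mu,\nu)$ is infinite-dimensional and one must upgrade a ``zero-cost coupling'' into an honest bijection between supports. The strategy I would follow is the one hinted at in the excerpt: embed the problem into a Gromov-Wasserstein problem on the product spaces $\Xcal\times\Omega$ and $\Ycal\times\Omega$. Concretely, equip $\Xcal\times\Omega$ with a suitable combined cost built from $d_\Xcal$ and $d$ (and similarly for $\Ycal\times\Omega$), observe that $\fgwdistance_{\alpha,p,q}(\mu,\nu)=0$ forces both the feature Wasserstein term and the structure Gromov-Wasserstein term to vanish by Proposition~\ref{fgwcomparetogwandw} (inequalities \eqref{wassinequality_main} and \eqref{gromovinequality_main}), and then use the metric rigidity of $\gw$ from Theorem~\ref{metric_gw}(ii)--(iii) to obtain a measure-preserving isometry $\phi=(\phi_1,\phi_2)$ between $\supp(\mu)$ and $\supp(\nu)$. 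The key additional step, which is specific to $FGW$ and not covered by $\gw$ alone, is showing that the feature component satisfies $\phi_2(x,a)=a$: this must be extracted from the fact that the $(1-\alpha)d(a,b)^q$ contribution vanishes $\pi^*$-almost everywhere, forcing the coupling to match identical features, after which a support/continuity argument promotes this to the pointwise identity on $\supp(\mu)$. The converse direction is straightforward: given such a $\phi$, the coupling $\pi=(id\times\phi)\#\mu$ is feasible and yields zero cost since \eqref{id_main} kills the feature term and \eqref{endgame_main} kills the structure term.

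Finally, the third bullet specializes the second to generalized labeled graphs. Here I would invoke the definition of a generalized labeled graph ($\mu=(id\times \ell_f)\#\mu_X$ with $\ell_f$ surjective) to translate the abstract conditions \eqref{prese_main}--\eqref{endgame_main} into the structure-preserving map $I:\supp(\mu_X)\to\supp(\nu_Y)$ of Definition~\ref{eqstructobjects}. The point is that the surjectivity of $\ell_f$ guarantees that the bijection $\phi$ on $\supp(\mu)$ descends to a well-defined bijection $I$ on the structure supports satisfying properties \ref{I}, \ref{II}, \ref{III}, which is exactly (II)-strong isomorphism; conversely (II)-strong isomorphism produces a $\phi$ meeting \eqref{prese_main}--\eqref{endgame_main}. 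I expect the embedding-into-$\gw$ argument and the feature-identity extraction to be where essentially all the difficulty concentrates, the rest being bookkeeping.
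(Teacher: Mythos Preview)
Your proposal is correct and follows essentially the same approach as the paper: existence via lower semi-continuity plus Weierstrass, the triangle inequality via the gluing lemma and Minkowski, and the identity of indiscernibles by embedding into a Gromov--Wasserstein problem on the product spaces $\Xcal\times\Omega$ and $\Ycal\times\Omega$ (the paper makes your ``suitable combined cost'' explicit as $(1-\beta)d_\Xcal\oplus\beta d$ and proves the bound $J_n(d_{\Xcal\times\Omega},d_{\Ycal\times\Omega},\pi)\le(1-\beta)J_n(d_\Xcal,d_\Ycal,\pi)+\beta M_n(\pi)$ to pass from vanishing of the individual terms to vanishing of the combined $\gw$). The extraction of $\phi_2(x,a)=a$ is done exactly as you outline, by noting that the $\gw$-supplied isometry satisfies $\pi^*=(id\times\phi)\#\mu$ and plugging this back into the vanishing feature term $H_{qp}(\pi^*)=0$.
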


We propose to prove the theorem point by point: first the existence, then the the triangle inequality statement and finally the equality relation. 

\begin{prop}[Existence of an optimal coupling for the $FGW$ distance.]

For $p,q \geq 1$, $\pi \rightarrow \lossfgw(\pi)$ always achieves a infimum $\pi^{*}$ in $\couplingset(\mu,\nu)$ such that $\fgwdistance_{\alpha,p,q}(\mu,\nu)=\lossfgw(\pi^{*})<+\infty$.

\end{prop}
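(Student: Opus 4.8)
The plan is to establish existence by the same Weierstrass-type argument used for the Gromov–Wasserstein problem in Chapter~\ref{cha:ot_general}: I would write the loss as a quadratic functional of the coupling of the form $\pi\mapsto\int\int f\,d\pi\,d\pi$ with $f$ continuous and non-negative, prove that this functional is lower semi-continuous for the weak convergence of measures, recall that $\Pi(\mu,\nu)$ is weakly compact, and conclude with the Weierstrass theorem (Memo~\ref{memo:weir}). Finiteness of the minimal value is then obtained separately, using the moment conditions encoded in $\mathbb{S}_{pq}(\Omega)$.

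First I would recast $\lossfgw$ into the right shape. Identifying a coupling $\pi\in\Pi(\mu,\nu)$ with a probability measure on $\mathcal{W}=(\Xcal\times\Omega)\times(\Ycal\times\Omega)$ and writing $w=(x,a,y,b)$, $w'=(x',a',y',b')$, the loss reads $\lossfgw(\pi)=\int_{\mathcal{W}}\int_{\mathcal{W}} f(w,w')\,d\pi(w)\,d\pi(w')$ with
\[
f(w,w')=\big((1-\alpha)\,d(a,b)^{q}+\alpha\,|d_{\Xcal}(x,x')-d_{\Ycal}(y,y')|^{q}\big)^{p}.
\]
Since $d,d_{\Xcal},d_{\Ycal}$ are continuous and $t\mapsto t^{q}$, $t\mapsto t^{p}$, $|\cdot|$ are continuous, $f$ is continuous and non-negative on $\mathcal{W}\times\mathcal{W}$. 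I would then apply Lemma~\ref{lsc_on_measure} with its underlying Polish space taken to be $\mathcal{W}$ and its measure taken to be $\pi$; this gives directly that $\pi\mapsto\lossfgw(\pi)$ is l.s.c. for the weak convergence of measures, and its restriction to $\Pi(\mu,\nu)$ remains l.s.c.

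For compactness, since $\mu,\nu$ are probability measures on Polish spaces they are tight, hence $\Pi(\mu,\nu)$ is tight and closed, thus weakly compact by Prokhorov (the standard fact recalled earlier, \cite[Theorem 1.7]{San15a}). Combining lower semi-continuity with compactness, Weierstrass' theorem yields a minimizer $\pi^{*}$. It remains to check finiteness. Bounding $f$ with the corollary of Hölder's inequality (Memo~\ref{memo:holder}), namely $(u+v)^{p}\le 2^{p-1}(u^{p}+v^{p})$ together with $|d_{\Xcal}(x,x')-d_{\Ycal}(y,y')|^{pq}\le 2^{pq-1}(d_{\Xcal}(x,x')^{pq}+d_{\Ycal}(y,y')^{pq})$, one obtains
\[
f(w,w')\le C\big(d(a,b)^{pq}+d_{\Xcal}(x,x')^{pq}+d_{\Ycal}(y,y')^{pq}\big)
\]
for a constant $C=C(p,q,\alpha)$. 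Integrating against $\pi\otimes\pi$ and using that the marginals of $\pi$ are $\mu$ and $\nu$, each term reduces to a finite moment precisely because $(\mu,\nu)\in\mathbb{S}_{pq}(\Omega)^{2}$: the first is controlled by the $\mu_{A}$- and $\nu_{B}$-moments of order $pq$, while $\int\int d_{\Xcal}(x,x')^{pq}\,d\pi\,d\pi=\int\int d_{\Xcal}(x,x')^{pq}\,d\mu_{X}\,d\mu_{X}$ and symmetrically for $d_{\Ycal}$ and $\nu_{Y}$. Hence $\lossfgw(\pi)<+\infty$ for every $\pi\in\Pi(\mu,\nu)$, in particular at $\pi^{*}$.

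The main obstacle I anticipate is purely measure-theoretic: both Lemma~\ref{lsc_on_measure} and the compactness of $\Pi(\mu,\nu)$ require the underlying spaces to be Polish, so the only genuine care needed is to make explicit that $\Xcal\times\Omega$ and $\Ycal\times\Omega$ are Polish (which holds, for instance, when the structure and feature spaces are compact, as assumed in this continuous setting) and that the distances are continuous so that $f$ is continuous; the remaining steps are routine bookkeeping.
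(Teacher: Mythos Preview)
Your proposal is correct and follows essentially the same route as the paper: write the loss as $\pi\mapsto\int\int f\,d\pi\,d\pi$ with $f$ continuous and non-negative, invoke Lemma~\ref{lsc_on_measure} on $\mathcal{W}=\Xcal\times\Omega\times\Ycal\times\Omega$ for lower semi-continuity, combine with weak compactness of $\Pi(\mu,\nu)$ from \cite[Theorem~1.7]{San15a}, and conclude via Weierstrass; finiteness is handled by the same H\"older-type bound and the $\mathbb{S}_{pq}(\Omega)$ moment assumptions. One small correction to your closing remark: the paper does not assume compactness of the structure or feature spaces in this setting, it simply takes for granted that $\Xcal\times\Omega$ and $\Ycal\times\Omega$ are Polish, so your concern is well placed but the resolution is just ``Polish'' rather than ``compact''.
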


\begin{proof}
Since $\Xcal \times \Omega$ and $\Ycal \times \Omega$ are Polish spaces we known that $\Pi(\mu,\nu)\subset \mathcal{P}(\Xcal \times \Omega \times \Ycal \times \Omega)$ is compact (Theorem 1.7 in \cite{San15a}), so by applying Weierstrass theorem we can conclude that the infimum is attained at some $\pi^{*}\in \Pi(\mu,\nu)$ if $\pi \rightarrow \lossfgw(\pi)$ is l.s.c.

We will use Lemma \ref{lsc_on_measure} to prove that the functionnal is l.s.c. on $\Pi(\mu,\nu)$. If we consider $W=\Xcal \times \Omega \times \Ycal \times \Omega$ which is a a metric space endowed with the distance $d_\Xcal\otimes d \otimes d_\Ycal \otimes d$ and $f((w=(x,a,y,b),w'=(x',a',y',b'))=((1-\alpha)d(a,b)^{q}+\alpha L(x,y,x',y')^{q})^{p}$ then $f$ is l.s.c. by continuity of $d$, $d_\Xcal$ and $d_\Ycal$. With the previous reasoning we can conclude that the infimum is attained.

Finally finiteness come from: 
\begin{equation}
\begin{split}
&\int\limits_{(\tv{\Xcal \times \Omega} \times \tv{\Ycal \times \Omega})^{2}} \big((1-\alpha) d(a,b)^{q} +\alpha L(x,y,x',y')^{q} \big)^{p} \,d\pi((x,a),(y,b))\,d\pi((x',a'),(y',b')) \\
&\stackrel{*}{\leq} \int 2^{p-1} (1-\alpha)  d(a,b)^{qp} d\mu_A(a) d\nu(b)+ \int 2^{p-1} \alpha L(x,y,x',y')^{qp} d\mu_X(x)d\mu_X(x') d\nu_Y(y')d\nu_Y(y') \\
&\stackrel{**}{<} +\infty
\end{split}
\end{equation}
where in (*) we used equation \eqref{holder} in Memo \ref{memo:holder} and in (**) that $\mu,\nu$ are in $\mathbb{S}_{pq}(\Omega)$.
\end{proof}

\begin{prop}[Symmetry and triangle inequality.]
\label{triangleprop}
$\fgwdistance_{\alpha,p,q}$ is symmetric and for $q=1$ satisfies the triangle inequality. For $q\geq 2$ the triangular inequality is relaxed by a factor $2^{q-1}$ 
\end{prop}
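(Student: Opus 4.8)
The plan is to prove symmetry directly and to obtain the triangle inequality through the classical gluing construction combined with a pointwise triangle inequality for the integrand and Minkowski's inequality in $L^p$.

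Symmetry is immediate. The swap map $s\big((x,a),(y,b)\big)=\big((y,b),(x,a)\big)$ is a bijection between $\couplingset(\mu,\nu)$ and $\couplingset(\nu,\mu)$, and the integrand defining $\lossfgw$ is invariant under $s$, since $d(a,b)=d(b,a)$ and $|d_{\Xcal}(x,x')-d_{\Ycal}(y,y')|=|d_{\Ycal}(y,y')-d_{\Xcal}(x,x')|$. Hence the two infima coincide and $\fgwdistance_{\alpha,p,q}(\mu,\nu)=\fgwdistance_{\alpha,p,q}(\nu,\mu)$.

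For the triangle inequality, consider three structured objects $(\Xcal\times\Omega,d_{\Xcal},\mu)$, $(\Ycal\times\Omega,d_{\Ycal},\nu)$, $(\Zcal\times\Omega,d_{\Zcal},\gamma)$, and let $\pi_{12}\in\couplingset(\mu,\nu)$ and $\pi_{23}\in\couplingset(\nu,\gamma)$ be optimal couplings, which exist by the existence part proved above. First I would invoke the gluing lemma: disintegrating $\pi_{12}$ and $\pi_{23}$ with respect to their common marginal $\nu$ on $\Ycal\times\Omega$ (licit since all spaces are Polish) yields a measure $\pi$ on $(\Xcal\times\Omega)\times(\Ycal\times\Omega)\times(\Zcal\times\Omega)$ whose $(1,2)$- and $(2,3)$-marginals are $\pi_{12}$ and $\pi_{23}$. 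Its $(1,3)$-marginal $\pi_{13}$ belongs to $\couplingset(\mu,\gamma)$. Writing points of the triple as $w_1=(x,a)$, $w_2=(y,b)$, $w_3=(z,c)$ with primed copies, I introduce the local costs $D_{12}=(1-\alpha)d(a,b)^q+\alpha|d_{\Xcal}(x,x')-d_{\Ycal}(y,y')|^q$, and likewise $D_{23}$ and $D_{13}$. For $q=1$, the triangle inequality for $d$ together with $|d_{\Xcal}(x,x')-d_{\Zcal}(z,z')|\le|d_{\Xcal}(x,x')-d_{\Ycal}(y,y')|+|d_{\Ycal}(y,y')-d_{\Zcal}(z,z')|$ gives the pointwise bound $D_{13}\le D_{12}+D_{23}$. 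Since each $D_{ij}$ depends only on the coordinates $i,j$, integration against $\pi\otimes\pi$ coincides with integration against $\pi_{ij}\otimes\pi_{ij}$. By suboptimality of $\pi_{13}$, then the pointwise bound, then Minkowski's inequality in $L^p(\pi\otimes\pi)$, I obtain
\begin{equation*}
\fgwdistance_{\alpha,p,q}(\mu,\gamma)\le\Big(\int\!\!\int D_{13}^{\,p}\,\dr\pi\,\dr\pi\Big)^{1/p}\le\Big(\int\!\!\int D_{12}^{\,p}\,\dr\pi\,\dr\pi\Big)^{1/p}+\Big(\int\!\!\int D_{23}^{\,p}\,\dr\pi\,\dr\pi\Big)^{1/p},
\end{equation*}
and the two right-hand terms equal $\fgwdistance_{\alpha,p,q}(\mu,\nu)$ and $\fgwdistance_{\alpha,p,q}(\nu,\gamma)$ by optimality of $\pi_{12},\pi_{23}$, which is the desired inequality.

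For $q\ge 2$ the same scheme applies, the only modification being that after the elementary triangle inequalities I raise to the power $q$ using $(u+v)^q\le 2^{q-1}(u^q+v^q)$ from \eqref{holder} (Memo \ref{memo:holder}), applied both to $d(a,c)^q$ and to $|d_{\Xcal}(x,x')-d_{\Zcal}(z,z')|^q$. This yields $D_{13}\le 2^{q-1}(D_{12}+D_{23})$, and carrying this constant through Minkowski's inequality gives the relaxed triangle inequality with factor $2^{q-1}$. The main obstacle I anticipate is purely technical: justifying the gluing/disintegration rigorously (existence of the conditional kernels and measurability of the glued measure $\pi$) and verifying that each $D_{ij}$ genuinely depends only on its two coordinates so that the marginalization identities $\int\!\int D_{ij}^{\,p}\,\dr\pi\,\dr\pi=\lossfgw(\pi_{ij})$ hold; the algebraic estimates are then routine.
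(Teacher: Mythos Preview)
Your proposal is correct and follows essentially the same route as the paper: the paper also invokes the gluing lemma to build a triple coupling, uses the pointwise bounds $d(a,c)^q\le 2^{q-1}(d(a,b)^q+d(b,c)^q)$ and $L(x,z,x',z')^q\le 2^{q-1}(L(x,y,x',y')^q+L(y,z,y',z')^q)$ (stated as a separate lemma), then applies Minkowski and marginalizes back to the pairwise couplings. The only cosmetic difference is that the paper treats all $q\ge 1$ uniformly with the factor $2^{q-1}$ rather than splitting off the case $q=1$.
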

To prove this result we will use the following lemma:
\begin{lemma}
Let $(\Xcal \times \Omega,d_{\Xcal},\mu),(\Ycal \times \Omega,d_{\Ycal},\beta),(\mathcal{Z} \times \Omega,d_{\mathcal{Z}},\nu) \in \mathbb{S}(\Omega)^{3}$.
For $(x,a),(x',a') \in (\Xcal \times \Omega)^{2}$, $(y,b),(y',b') \in (\Ycal \times \Omega)^{2}$ and $(z,c),(z',c') \in (\mathcal{Z} \times \Omega)^{2}$ we have:
\begin{equation}
\label{ineqL}
L(x,z,x',z')^{q} \leq 2^{q-1}(L(x,y,x',y')^{q}+L(y,z,y',z')^{q})
\end{equation}
\begin{equation}
\label{ineqD}
d(a,c)^{q} \leq 2^{q-1}(d(a,b)^{q}+d(b,c)^{q})
\end{equation}

\end{lemma}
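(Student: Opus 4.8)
The two inequalities \eqref{ineqL} and \eqref{ineqD} are entirely elementary, so the plan is to reduce both to the corollary of Hölder's inequality already established in Memo \ref{memo:holder}, namely $(x+y)^{q}\leq 2^{q-1}(x^{q}+y^{q})$ for all $x,y\in\R_{+}$ and $q\geq 1$. The only work is to first establish the ordinary ($q=1$) triangle inequality for each of the two quantities $L$ and $d$, and then apply the power inequality to raise it to the $q$-th power.

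For \eqref{ineqD} this is immediate: $d$ is a distance on the feature space $\Omega$ by hypothesis (Definition \ref{def:struct_objects}), so the triangle inequality $d(a,c)\leq d(a,b)+d(b,c)$ holds, and applying the corollary \eqref{holder} with the nonnegative reals $d(a,b)$ and $d(b,c)$ gives
\begin{equation*}
d(a,c)^{q}\leq \big(d(a,b)+d(b,c)\big)^{q}\leq 2^{q-1}\big(d(a,b)^{q}+d(b,c)^{q}\big),
\end{equation*}
which is exactly \eqref{ineqD}. For \eqref{ineqL} I first recall that $L(x,z,x',z')=|d_{\Xcal}(x,x')-d_{\mathcal{Z}}(z,z')|$. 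The key step is to insert the intermediate term $d_{\Ycal}(y,y')$ and use the reverse triangle inequality for absolute values: writing
\begin{equation*}
|d_{\Xcal}(x,x')-d_{\mathcal{Z}}(z,z')|\leq |d_{\Xcal}(x,x')-d_{\Ycal}(y,y')|+|d_{\Ycal}(y,y')-d_{\mathcal{Z}}(z,z')|=L(x,y,x',y')+L(y,z,y',z'),
\end{equation*}
I then apply the corollary \eqref{holder} again with the two nonnegative reals $L(x,y,x',y')$ and $L(y,z,y',z')$ to obtain \eqref{ineqL}.

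There is no genuine obstacle at this level: both statements follow from a triangle inequality at the linear level followed by one invocation of the power-mean corollary. The only point requiring mild care is that the triangle inequality for $L$ rests on the triangle inequality for \emph{absolute values} (which holds unconditionally), not on any metric property of $d_{\Xcal},d_{\Ycal},d_{\mathcal{Z}}$ individually; in particular \eqref{ineqL} is valid even if the $c$-type similarities were not themselves distances, which is why this lemma will later feed cleanly into the proof of Proposition \ref{triangleprop}. The substantive difficulty is deferred to the subsequent gluing argument in Proposition \ref{triangleprop}, where these pointwise bounds must be integrated against a suitably constructed coupling (obtained by a gluing lemma on the three spaces) and combined via Minkowski's inequality in $L^{p}$ to yield the triangle inequality for $\fgwdistance_{\alpha,p,q}$ itself, with the factor $2^{q-1}$ propagating from \eqref{ineqL}–\eqref{ineqD} into the final relaxed triangle inequality for $q\geq 2$.
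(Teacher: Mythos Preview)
Your proof is correct and follows the same approach as the paper, which simply states that the lemma is a direct consequence of equation \eqref{holder} and the triangle inequalities of $d,d_{\Xcal},d_{\Ycal},d_{\mathcal{Z}}$. Your write-up is more detailed and in fact slightly sharper: you correctly point out that \eqref{ineqL} relies only on the triangle inequality for absolute values (inserting $d_{\Ycal}(y,y')$), not on any metric property of the structure distances themselves.
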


\begin{proof}
Direct consequence of equation \eqref{holder} in Memo \ref{memo:holder} and triangle inequalities of $d,d_{\Xcal},d_{\Ycal},d_{\mathcal{Z}}$.
\end{proof}

\begin{proof}[Proof of Proposition \ref{triangleprop}]
To prove the triangle inequality of $\fgwdistance_{\alpha,p,q}$ distance for arbitrary measures we will use the Gluing lemma which stresses the existence of couplings with a prescribed structure. Let $(\tv{\Xcal \times \Omega},d_{\Xcal},\mu),(\tv{\Ycal \times \Omega},d_{\Ycal},\beta),(\mathcal{Z} \times \Omega,d_{\mathcal{Z}},\nu) \in \mathbb{S}(\Omega)^{3}$.

Let $\pi_{1} \in \couplingset(\mu,\beta)$ and $\pi_{2} \in \couplingset(\beta,\nu)$ be optimal transportation plans for the Fused Gromov-Wasserstein distance between $\mu$, $\beta$ and $\beta$, $\nu$ respectively. By the Gluing Lemma (see \cite{Villani} and Lemma 5.3.2 in \cite{ambrosio2005gradient}) there exists a probability measure $\pi \in \Pcal\big((\tv{\Xcal \times \Omega}) \times (\tv{\Ycal \times \Omega}) \times (\mathcal{Z} \times \Omega)\big)$ with marginals $\pi_{1}$ on $(\tv{\Xcal \times \Omega}) \times (\tv{\Ycal \times \Omega})$ and $\pi_{2}$ on $(\tv{\Ycal \times \Omega}) \times (\mathcal{Z} \times \Omega)$. Let $\pi_{3}$ be the marginal of $\pi$ on $(\tv{\Xcal \times \Omega})\times (\mathcal{Z} \times \Omega)$. By construction $\pi_{3} \in \Pi(\mu,\nu)$. So by suboptimality of $\pi_{3}$:
{\small
\begin{equation*}
\begin{split}
&\fgwdistance_{\alpha,p,q}(d_{\Xcal},d_{\mathcal{Z}},\mu,\nu) \leq\Big(\int\limits_{(\tv{\Xcal \times \Omega} \times \mathcal{Z} \times \Omega)^{2}}\big((1-\alpha) d(a,c)^{q} +\alpha L(x,z,x',z')^{q} \big)^{p} d\pi_{3}((x,a),(z,c))d\pi_{3}((x',a'),(z',c'))\Big)^{\frac{1}{p}} \\ 
&=\Big(\int\limits_{(\tv{\Xcal \times \Omega} \times \tv{\Ycal \times \Omega} \times \mathcal{Z} \times \Omega)^{2}}\big((1-\alpha) d(a,c)^{q} +\alpha L(x,z,x',z')^{q} \big)^{p} d\pi((x,a),(y,b),(z,c))d\pi((x',a'),(y',b'),(z',c'))\Big)^{\frac{1}{p}} \\ 
&\stackrel{(*)}{\leq}2^{q-1} \Big(\int\limits_{(\tv{\Xcal \times \Omega} \times \tv{\Ycal \times \Omega} \times \mathcal{Z} \times \Omega)^{2}}\big((1-\alpha) d(a,b)^{q}+(1-\alpha)d(b,c)^{q} +\alpha L(x,y,x',y')^{q}+\alpha L(y,z,y',z')^{q} \big)^{p} \\
& d\pi((x,a),(y,b),(z,c))d\pi((x',a'),(y',b'),(z',c'))\Big)^{\frac{1}{p}} \\
&\stackrel{(**)}{\leq}2^{q-1} \bigg(\bigg(\int\limits_{(\tv{\Xcal \times \Omega} \times \tv{\Ycal \times \Omega} \times \mathcal{Z} \times \Omega)^{2}}\big((1-\alpha) d(a,b)^{q} +\alpha L(x,y,x',y')^{q}\big)^{p} d\pi((x,a),(y,b),(z,c))d\pi((x',a'),(y',b'),(z',c'))\Big)^{\frac{1}{p}} \\ 
&+ \Big(\int\limits_{(\tv{\Xcal \times \Omega} \times \tv{\Ycal \times \Omega} \times \mathcal{Z} \times \Omega)^{2}}\big((1-\alpha) d(b,c)^{q} +\alpha L(y,z,y',z')^{q}\big)^{p} d\pi((x,a),(y,b),(z,c))d\pi((x',a'),(y',b'),(z',c'))\Big)^{\frac{1}{p}} \Big) \\ 
&=2^{q-1} \bigg(\bigg(\int\limits_{(\tv{\Xcal \times \Omega} \times \tv{\Ycal \times \Omega} )^{2}}\big((1-\alpha) d(a,b)^{q} +\alpha L(x,y,x',y')^{q}\big)^{p} d\pi_{1}((x,a),(y,b)) d\pi_{1}((x',a'),(y',b'))\bigg)^{\frac{1}{p}} \\
&+ \bigg(\!\!\!\!\!\!\!\!\!\!\!\!\!\! \int\limits_{(\tv{\Ycal \times \Omega} \times \mathcal{Z} \times \Omega)^{2}}\big((1-\alpha) d(b,c)^{q} +\alpha L(y,z,y',z')^{q}\big)^{p} d\pi_{2}((y,b),(z,c))\,d\pi_{2}((y',b'),(z',c'))\bigg)^{\frac{1}{p}} \bigg) \\ 
&= 2^{q-1}(\fgwdistance_{\alpha,p,q}(\mu,\beta){+}\fgwdistance_{\alpha,p,q}(\beta,\nu))
\end{split}
\end{equation*}
}
with (*) comes from (\ref{ineqL}) and (\ref{ineqD}) and (**) is Minkowski inequality. So when $q=1$,  $\fgwdistance_{\alpha,p,q}$ satisfies the triangle inequality and when $q>1$,  $\fgwdistance_{\alpha,p,q}$ satisfies a relaxed triangle inequality so that it defines a semi-metric as described previously.
\end{proof}

\begin{prop}[\tv{Equality relation.}]
For $\alpha \in ]0,1[$, $\fgwdistance_{\alpha,p,q}(\mu,\nu)=0$ if an only if there exists a bijective function $ \phi=(\phi_1,\phi_2): \tv{\supp(\mu)} \rightarrow \tv{\supp(\nu)}$ such that:
\begin{equation}
\label{p1}
\phi\#\mu=\nu
\end{equation}
\begin{equation}
\label{p2}
\forall (x,a) \in \tv{\supp(\mu)} \ , \phi_{2}(x,a)=a
\end{equation}
\begin{equation}
\label{p3}
\forall (x,a),(x',a') \in \tv{\supp(\mu)^{2}}, \ d_{\Xcal}(x,x')=d_{\Ycal}(\phi_{1}(x,a),\phi_{1}(x',a'))
\end{equation}
Moreover if $(\mu,\nu)$ are generalized labeled graphs then $\fgwdistance_{\alpha,p,q}(\mu,\nu)=0$ if and only if $(\Xcal \times \Omega,d_\Xcal,\mu)$ and $(\Ycal \times \Omega,d_\Ycal,\nu)$ are (II)-strongly isomorphic.
\end{prop}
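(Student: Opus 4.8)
The plan is to prove the two equivalences separately, starting from the characterization \eqref{p1}--\eqref{p3} and then specializing to generalized labeled graphs. For the easy (sufficiency) direction, given a bijection $\phi=(\phi_1,\phi_2)$ satisfying \eqref{p1}--\eqref{p3}, I would exhibit a coupling of zero cost by transporting $\mu$ deterministically along $\phi$. Concretely, set $\pi=(\mathrm{id}\times\phi)\#\mu$, which lies in $\couplingset(\mu,\nu)$ precisely because $\phi\#\mu=\nu$ by \eqref{p1}. Under $\pi$ a point reads $((x,a),(\phi_1(x,a),\phi_2(x,a)))=((x,a),(\phi_1(x,a),a))$ using \eqref{p2}, so the feature term $d(a,b)$ vanishes, and \eqref{p3} forces $d_{\Ycal}(\phi_1(x,a),\phi_1(x',a'))=d_{\Xcal}(x,x')$, so the structure term $L=|d_\Xcal(x,x')-d_\Ycal(y,y')|$ vanishes as well. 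Hence the integrand of $\lossfgw$ is identically zero on $\supp(\pi)\times\supp(\pi)$, giving $\fgwdistance_{\alpha,p,q}(\mu,\nu)\le\lossfgw(\pi)^{1/p}=0$.

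For the necessity direction I would work with an optimal coupling $\pi^{*}\in\couplingset(\mu,\nu)$, whose existence is guaranteed by the first part of the theorem, and analyze its support. Since $\lossfgw(\pi^{*})=0$ and the integrand is nonnegative, it vanishes $(\pi^{*}\otimes\pi^{*})$-almost everywhere; because $\alpha\in ]0,1[$ and both summands are nonnegative, this splits into $d(a,b)=0$ and $|d_\Xcal(x,x')-d_\Ycal(y,y')|=0$ almost everywhere. The sets $\{a=b\}$ and $\{d_\Xcal(x,x')=d_\Ycal(y,y')\}$ are closed by continuity of $d,d_\Xcal,d_\Ycal$, so these identities upgrade from ``almost everywhere'' to ``everywhere on the support'': writing $S=\supp(\pi^{*})$ and using $\supp(\pi^{*}\otimes\pi^{*})=S\times S$, I get $a=b$ for every $((x,a),(y,b))\in S$ and $d_\Xcal(x,x')=d_\Ycal(y,y')$ for every pair of points of $S$. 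As a sanity check, these two facts recover the individual statements $\wass_{pq}(\mu_A,\nu_B)=0$ and $\gw_{pq}(\mu_X,\nu_Y)=0$ predicted by Proposition \ref{fgwcomparetogwandw}.

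The core of the argument is to turn $S$ into the graph of a map. Applying the structure identity to two points $((x,a),(y,b)),((x,a),(y',b'))\in S$ sharing the same first coordinate yields $0=d_\Xcal(x,x)=d_\Ycal(y,y')$, hence $y=y'$, while $a=b$ and $a=b'$ give $b=b'$; thus $\mathrm{proj}_{\Xcal\times\Omega}$ is injective on $S$, and symmetrically $\mathrm{proj}_{\Ycal\times\Omega}$ is too. Setting $\phi=\mathrm{proj}_{\Ycal\times\Omega}\circ(\mathrm{proj}_{\Xcal\times\Omega}|_S)^{-1}$ then defines a bijection between $\supp(\mu)$ and $\supp(\nu)$, with $\pi^{*}=(\mathrm{id}\times\phi)\#\mu$ and therefore $\phi\#\mu=\mathrm{proj}_{\Ycal\times\Omega}\#\pi^{*}=\nu$, i.e. \eqref{p1}; property \eqref{p2} is exactly $a=b$ on $S$, and \eqref{p3} is the structure identity on $S$. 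The main obstacle here is measure-theoretic rather than algebraic: one must ensure that $\phi$ is Borel and that $\pi^{*}$ is genuinely concentrated on its graph (the closure issues relating $\supp(\mu)$ to $\mathrm{proj}(\supp(\pi^{*}))$, and the measurability of the inverse of a continuous injection). This is precisely the delicate point, and the cleanest way to discharge it is to reinterpret the problem as a Gromov--Wasserstein identity of indiscernibles between the mm-spaces $(\Xcal\times\Omega,\,\cdot\,,\mu)$ and $(\Ycal\times\Omega,\,\cdot\,,\nu)$ and invoke Theorem \ref{metric_gw} to obtain the measure-preserving bijection directly.

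Finally, for the generalized labeled graph refinement I would feed the general characterization into the defining relation $\mu=(\mathrm{id}\times\ell_f)\#\mu_X$. On $\supp(\mu)$ the feature is a function of the structure, $a=\ell_f(x)$, so $\phi_1(x,a)=\phi_1(x,\ell_f(x))=:I(x)$ depends on $x$ alone and $\phi=(I\circ\mathrm{proj}_\Xcal,\mathrm{id})$ has the product form required by Definition \ref{eqstructobjects}. Pushing \eqref{p1} through the structure projection gives $I\#\mu_X=\nu_Y$, surjectivity of $I$ onto $\supp(\nu_Y)$ follows from bijectivity of $\phi$ together with surjectivity of the labelling map of $\nu$, and \eqref{p3} is the required isometry property of $I$; this yields \ref{I}--\ref{III}, i.e. $(\Xcal\times\Omega,d_\Xcal,\mu)$ and $(\Ycal\times\Omega,d_\Ycal,\nu)$ are (II)-strongly isomorphic. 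The converse is immediate: a (II)-strong isomorphism $\phi=(I,\mathrm{id})$ satisfies \eqref{p1}--\eqref{p3} with $\phi_1=I\circ\mathrm{proj}_\Xcal$ and $\phi_2=\mathrm{id}$, so the sufficiency direction already established gives $\fgwdistance_{\alpha,p,q}(\mu,\nu)=0$.
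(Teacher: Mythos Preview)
Your proposal is correct. The sufficiency direction and the generalized-labeled-graph reduction match the paper essentially verbatim. For necessity you take a more elementary route than the paper: your argument that $\supp(\pi^*)$ is the graph of a bijection (via $d_\Xcal(x,x)=d_\Ycal(y,y')\Rightarrow y=y'$, together with $a=b$) is sound and avoids any detour, the only remaining work being exactly the measure-theoretic bookkeeping you flag (extending $\phi$ from the analytic set $\mathrm{proj}_{\Xcal\times\Omega}(\supp(\pi^*))$ to the closed set $\supp(\mu)$, and Borel measurability of the inverse; both can be handled by Lusin--Souslin type arguments). The paper instead goes straight to the GW reduction you mention as a fallback, but is more explicit about how: it introduces an auxiliary parameter $\beta\in(0,1)$, equips the product spaces with $d_{\Xcal\times\Omega}=(1-\beta)d_\Xcal\oplus\beta d$ and $d_{\Ycal\times\Omega}=(1-\beta)d_\Ycal\oplus\beta d$, proves the inequality $J_n(d_{\Xcal\times\Omega},d_{\Ycal\times\Omega},\pi)\le(1-\beta)J_n(d_\Xcal,d_\Ycal,\pi)+\beta M_n(\pi)$ with $M_n$ controlled by the feature cost, shows both right-hand terms vanish at $\pi^*$ when $\fgwdistance=0$, and only then invokes Theorem~\ref{metric_gw} on these product mm-spaces to obtain the isometry $\phi$; properties \eqref{p2} and \eqref{p3} are then read off from $H_{qp}(\pi^*)=0$ and the product-isometry relation. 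Your direct approach is conceptually cleaner and shorter; the paper's buys the measure-theoretic regularity for free by packaging it into the already-established GW identity of indiscernibles.
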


\begin{proof}

For the first point, let us assume that there exists a function $\phi$ verifying \eqref{p1}, \eqref{p2} and \eqref{p3}. We consider the map $\pi=( I_{d} \times \phi ) \# \mu \in \couplingset(\mu,\nu)$. We note $\phi=(\phi_{1},\phi_{2})$. Then:
\begin{equation}
\label{eqeqeqeq}
\begin{split}
\lossfgw(\pi)&=\int\limits_{(\tv{\Xcal \times \Omega} \times \tv{\Ycal \times \Omega})^{2}}\Big((1-\alpha) d(a,b)^{q} +\alpha L((x,y,x',y')^{q} \Big)^{p}\dr\pi((x,a),(y,b))\dr\pi((x',a'),(y',b')) \\
&=\int\limits_{(\tv{\Xcal \times \Omega} )^{2}} \Big((1-\alpha) d(a,\phi_{2}(x,a))^{q} +\alpha L\big((x,\phi_{1}(x,a),x',\phi_{1}(x',a')\big)^{q} \Big)^{p}d\mu(x,a)d\mu(x',a') \\ 
&=\int\limits_{(\tv{\Xcal \times \Omega} )^{2}} \Big((1-\alpha) d(a,\phi_{2}(x,a))^{q} +\alpha |d_{\Xcal}(x,x')-d_{\Ycal}(\phi_{1}(x,a),\phi_{1}(x',a'))|^{q} \Big)^{p}d\mu(x,a)d\mu(x',a') \\
&=0
\end{split} 
\end{equation}

Conversely suppose that $\fgwdistance_{\alpha,p,q}(\mu,\nu) =0$. To prove the existence of a map $\phi: \tv{\supp(\mu)} \rightarrow \tv{\supp(\nu)}$ verifying \eqref{p1}, \eqref{p2} and \eqref{p3} we will use the Gromov-Wasserstein properties. We are looking for a vanishing Gromov-Wassersein distance between the spaces $\tv{\Xcal \times \Omega}$ and $\tv{\Ycal \times \Omega}$ equipped with our two measures $\mu$ and $\nu$.

More precisely, we define for $\big((x,a),(y,b),(x',a'),(y',b')\big) \in (\tv{\Xcal \times \Omega} \times \tv{\Ycal \times \Omega})^{2}$ and $\beta \in ]0,1[$:

$$ d_{\tv{\Xcal \times \Omega}} \big((x,a),(x',a')\big)=(1-\beta) d_{\Xcal}(x,x')+ \beta d(a,a')$$
and
$$ d_{\tv{\Ycal \times \Omega}} \big((y,b),(y',b')\big)=(1-\beta) d_{\Ycal}(y,y')+\beta d(b,b')$$
We will prove that $d_{GW,p}(d_{\tv{\Xcal \times \Omega}},d_{\tv{\Ycal \times \Omega}},\mu,\nu)=0$. To show that we will bound the Gromov cost with the metrics $d_{\tv{\Xcal \times \Omega}},d_{\tv{\Ycal \times \Omega}}$ by the Gromov cost with the metrics $d_{\Xcal},d_{\Ycal}$ and a Wasserstein cost.

Let $\pi \in \couplingset(\mu,\nu)$ be any admissible transportation plan. Then for $n\geq 1$: 
{\small
\begin{equation*}
\begin{split}
&J_{n}(d_{\tv{\Xcal \times \Omega}},d_{\tv{\Ycal \times \Omega}},\pi)\stackrel{def}{=}\int\limits_{(\tv{\Xcal \times \Omega} \times \tv{\Ycal \times \Omega})^{2}} L(x,y,x',y')^{n} d\pi((x,a),(y,b))d\pi((x',a'),(y',b')) \\
&=\int\limits_{(\tv{\Xcal \times \Omega} \times \tv{\Ycal \times \Omega})^{2}} |(1-\beta) (d_{\Xcal}(x,x')-d_{\Ycal}(y,y'))+ \beta (d(a,a')-d(b,b')) |^{n} d\pi((x,a),(y,b)) d\pi((x',a'),(y',b')) \\ 
&\leq \int\limits_{(\tv{\Xcal \times \Omega} \times \tv{\Ycal \times \Omega})^{2}}(1-\beta) | d_{\Xcal}(x,x')-d_{\Ycal}(y,y')|^{n} d\pi((x,a),(y,b))\,d\pi((x',a'),(y',b'))\\ 
&+\int\limits_{(\tv{\Xcal \times \Omega} \times \tv{\Ycal \times \Omega})^{2}}  \beta |d(a,a')-d(b,b') |^{n} d\pi((x,a),(y,b)) d\pi((x',a'),(y',b'))
\end{split}
\end{equation*}
}
using Jensen inequality with convexity of $t\rightarrow t^{n}$ and subadditivity of |.| . We note $(*)$ the first term above and $(**)$ the second term above. By the triangle inequality property of $d$ we have: \\ \\
(**)$\leq \beta \int\limits_{\big(\tv{\Xcal \times \Omega} \times \tv{\Ycal \times \Omega})^{2}}\!\!\!\!\!\!\!\!\!\!\!\!\!\! (d(a,b)+d(a',b') \big)^{n} d\pi((x,a),(y,b))\,d\pi((x',a'),(y',b')\!) \stackrel{def}{=} \beta M_{n}(\pi)$ such that we have shown:

\begin{equation}
\label{relationgwgw}
\forall \pi \in \couplingset(\mu,\nu), \forall n \geq 1, J_{n}(d_{\tv{\Xcal \times \Omega}},d_{\tv{\Ycal \times \Omega}},\pi) \leq (1-\beta) J_{n}(d_{\Xcal},d_{\Ycal},\pi) + \beta M_{n}(\pi)
\end{equation}

Now let $\pi_{*}$ be an optimal coupling for $\fgwdistance_{\alpha,p,q}$ between $\mu$ and $\nu$. By hypothesis $\fgwdistance_{\alpha,p,q}(\mu,\nu)=0$ so that:

\begin{equation}
J_{qp}(d_{\Xcal},d_{\Ycal},\pi{*})=0
\end{equation}

and:

\begin{equation}
\label{heq}
H_{qp}(\pi^{*})=0.
\end{equation}

Then $\int\limits_{(\tv{\Xcal \times \Omega}\times \tv{\Ycal \times \Omega})} d(a,b)^{qp} d\pi^{*}((x,a),(y,b)){=}0$ which implies that $d$ is zero $\pi^{*}$ \textit{a.e.} so that $\int\limits_{(\tv{\Xcal \times \Omega}\times \tv{\Ycal \times \Omega})} d(a,b)^{m} d\pi^{*}((x,a),(y,b))=0$ for any $m\in \mathbb{N}^{*}$. In this way: 
 
\begin{equation*}
\begin{split}
&M_{qp}(\pi^{*})= \beta \int\limits_{(\tv{\Xcal \times \Omega}\times \tv{\Ycal \times \Omega})^{2}} \sum_{h} {qp\choose h} d(a,b)^{h} d(a',b')^{qp-h} d\pi^{*}((x,a),(y,b))d\pi^{*}((x',a'),(y',b')) \\ 
&= \beta\sum_{h} {qp\choose h} \Big(\int\limits_{(\tv{\Xcal \times \Omega}\times \tv{\Ycal \times \Omega})} d(a,b)^{h} d\pi^{*}((x,a),(y,b))\Big) \Big(\int\limits_{(\tv{\Xcal \times \Omega}\times \tv{\Ycal \times \Omega})} d(a',b')^{qp-h} d\pi^{*}((x',a'),(y',b'))\Big) =0
\end{split}
\end{equation*}

Using equation (\ref{relationgwgw}) we have shown $$J_{qp}(d_{\tv{\Xcal \times \Omega}},d_{\tv{\Ycal \times \Omega}},\pi_{*})=0$$
which implies that $d_{GW,p}(d_{\tv{\Xcal \times \Omega}},d_{\tv{\Ycal \times \Omega}},\mu,\nu)=0$ for the coupling $\pi^{*}$.

Thanks to the Gromov-Wasserstein properties (see Chapter \ref{cha:ot_general}) this states the existence of an isometry between $\text{supp}(\mu)$ and $\text{supp}(\nu)$. So there exists a surjective function $\phi=(\phi_{1},\phi_{2}): \tv{\supp(\mu)} \rightarrow \tv{\supp(\nu)}$ which verifies \ref{I} and:

\begin{equation}
\forall ((x,a),(x',a')) \in (\tv{\supp(\mu)})^{2}, \ d_{\tv{\Xcal \times \Omega}} ((x,a),(x',a'))=d_{\tv{\Ycal \times \Omega}} (\phi(x,a),\phi(x',a'))
\end{equation}
or equivalently:
{\small
\begin{equation}
\label{hat}
\forall ((x,a),(x',a')) \in (\tv{\supp(\mu)})^{2}, \ (1-\beta) d_{\Xcal}(x,x')+ \beta d(a,a')=(1-\beta) d_{\Ycal}(\phi_{1}(x,a),\phi_{1}(x',a'))+ \beta d(\phi_{2}(x,a),\phi_{2}(x',a'))
\end{equation}
}
In particular $\pi^{*}$ is concentrated on $\{(x,y)=\phi(x,y)\}$ or equivalently $\pi^{*}=(I_{d} \times \phi)\#\mu$. Injecting $\pi^{*}$ in \eqref{heq} leads to:
\begin{equation}
\begin{split}
H_{qp}(\pi^{*})&=\int\limits_{(\tv{\Xcal \times \Omega}\times \tv{\Ycal \times \Omega})} d(a,b)^{qp} \dr\pi^{*}((x,a),(y,b)) =\int\limits_{\tv{\Xcal \times \Omega}}d(a,\phi_{2}(x,a))^{qp} \dr\mu(x,a)=0
\end{split}
\end{equation}
Which implies:
\begin{equation}
\label{p22}
\forall (x,a) \in \tv{\text{supp}(\mu)} \ , \phi_{2}(x,a)=a
\end{equation}
Moreover, using the equality (\ref{hat}) we can conclude that: 
\begin{equation}
\label{endgame}
\forall (x,a)(x',a') \in \tv{\text{supp}(\mu)^{2}}, \ d_{\Xcal}(x,x')=d_{\Ycal}(\phi_{1}(x,a),\phi_{1}(x',a'))
\end{equation}

In this way $f$ verifies all the properties \eqref{p1},\eqref{p2},\eqref{p3}. 

Moreover suppose that $\mu$ and $\nu$ are generalized labeled graphs. In this case there exists $\ell_f: \Xcal \rightarrow \Omega$ surjective such that $\mu=(id\times \ell_f)\# \mu_X$. Then \eqref{endgame} implies that:
\begin{equation}
\label{endgamemille}
\forall (x,x') \in \tv{\text{supp}(\mu_X)^{2}}, \ d_{\Xcal}(x,x')=d_{\Ycal}(\phi_{1}(x,\ell_f(x)),\phi_{1}(x',\ell_f(x')))
\end{equation} 
We define $I: \supp(\mu_X) \rightarrow \supp(\mu_Y)$ such that $I(x)=\phi_{1}(x,\ell_f(x))$. Then we have by \eqref{endgamemille} $d_{\Xcal}(x,x')=d_{\Ycal}(I(x),I(x'))$ for $(x,x') \in \supp(\mu_X)^{2}$. Overall we have $\phi(x,a)=(I(x),a)$ for all $(x,a) \in \supp(\mu)$. Also since $\phi\#\mu=\nu$ we have $I\#\mu_X=\nu_Y$.

Moreover $I$ is a surjective function. Indeed let $y\in \supp(\nu_Y)$. Let $b \in \supp(\nu_B)$ such that $(y,b) \in \supp(\nu)$. By surjectivity of $\phi$ there exists $(x,a) \in \supp(\mu)$ such that $(y,b)=\phi(x,a)=(I(x),a)$ so that $y=I(x)$. 

Overall $\phi$ satisfies all \ref{I}, \ref{II} and \ref{III} if $\mu$ and $\nu$ are generalized labeled graphs. The converse is also true using the reasoning in \eqref{eqeqeqeq}.
\end{proof}

\subsection{Proof of Theorem \ref{concentration} -- Convergence and concentration inequality.}
\label{proof:prop2}

We recall the theorem:

\begin{theo*}{Convergence of finite samples and a concentration inequality}
Let $p \geq 1$. We have:
\begin{equation*}
\lim\limits_{n \rightarrow \infty}\fgwdistance_{\alpha,p,1}(\mu_{n},\mu)=0
\end{equation*}
Moreover, suppose that $s > d_{p}^{*}(\mu)$. Then there exists a constant $C$ that does not depend on $n$ such that:
\begin{equation*}
\mathbb{E}[\fgwdistance_{\alpha,p,1}(\mu_{n},\mu)] \leq C n^{-\frac{1}{s}}.
\end{equation*}
The expectation is taken over the \textit{i.i.d} samples $(x_i,a_i)$. A particular case of this inequality is when $\alpha=1$ so that we can use the result above to derive a concentration result for the Gromov-Wasserstein distance. More precisely, if $\nu_{n}=\frac{1}{n} \sum_{i} \delta_{x_{i}}$ denotes the empirical measure of $\nu \in \Pcal(\Xcal)$ and if $s' > d_{p}^{*}(\nu)$ we have:
\begin{equation*}
\label{concentrationgromov}
\mathbb{E}[\gw_{p}(\nu_{n},\nu)] \leq C' n^{-\frac{1}{s'}}.
\end{equation*}
\end{theo*}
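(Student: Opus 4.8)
The plan is to reduce the convergence and concentration statements for $\fgwdistance_{\alpha,p,1}$ entirely to the corresponding known results for the Wasserstein distance. The crucial observation is that $\mu_n$ and $\mu$ live on the \emph{same} ground space $\Xcal \times \Omega$, so the structure spaces coincide. Hence I can invoke the comparison inequality \eqref{samespace_main} of Proposition \ref{fgwcomparetogwandw}: taking the common structure space $\mathcal{Z}=\Xcal$ with $d_{\mathcal{Z}}=d_{\Xcal}$, and equipping $\Xcal \times \Omega$ with the distance $\tilde{d}((x,a),(x',a'))=(1-\alpha)d(a,a')+\alpha d_{\Xcal}(x,x')$, the inequality gives
\begin{equation}
\fgwdistance_{\alpha,p,1}(\mu_n,\mu) \leq 2\wass_{p}(\mu_n,\mu),
\end{equation}
where the Wasserstein distance on the right is computed with respect to $\tilde{d}$ on $\Xcal \times \Omega$.

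First I would verify that the hypotheses of \eqref{samespace_main} are met: $\mu_n$ and $\mu$ are both probability measures on the same product space, $\tilde{d}$ is a genuine metric on $\Xcal \times \Omega$ (being a positive combination of the two distances, using $\alpha \in ]0,1[$), and the finiteness/$pq$-integrability conditions ensuring $\mu,\mu_n \in \mathbb{S}_{pq}(\Omega)$ hold by assumption. Once the bound is in place, the convergence $\lim_{n\to\infty}\fgwdistance_{\alpha,p,1}(\mu_n,\mu)=0$ follows immediately from the almost sure weak convergence of the empirical measure $\mu_n$ to $\mu$ together with the fact that $\wass_p$ metrizes weak convergence on spaces with finite $p$-moments (a classical result recalled in Chapter \ref{cha:ot_general}); concretely $\wass_p(\mu_n,\mu)\to 0$, and the factor $2$ is harmless.

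For the concentration inequality, I would apply the sample-complexity bound of \cite{weedbach2017} stated in terms of the upper Wasserstein dimension: for any $s > d_p^{*}(\mu)$ there is a constant $C_0$ independent of $n$ with $\mathbb{E}[\wass_p(\mu_n,\mu)] \leq C_0\, n^{-1/s}$. Combining with the bound above and setting $C=2C_0$ yields
\begin{equation}
\mathbb{E}[\fgwdistance_{\alpha,p,1}(\mu_n,\mu)] \leq C\, n^{-\frac{1}{s}},
\end{equation}
which is exactly \eqref{epstaudim_main}. The expectation is over the i.i.d.\ draws $(x_i,a_i)\sim \mu$, matching the randomness in $\mu_n$.

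Finally, the Gromov-Wasserstein concentration result \eqref{concentrationgromov_main} is obtained as the special case $\alpha=1$. When $\alpha \to 1$, the interpolation property (Proposition \ref{interpolationtheorem}) shows $\fgwdistance_{\alpha,p,1}$ recovers $\gw_{p}(\mu_X,\mu_X)$ between the structure marginals; applying the argument to a structured object whose feature part is trivial (so that $\mu_n$ reduces to the empirical measure $\nu_n=\frac{1}{n}\sum_i \delta_{x_i}$ of $\nu \in \Pcal(\Xcal)$), and using $d_p^{*}(\nu)$ in place of $d_p^{*}(\mu)$, gives $\mathbb{E}[\gw_p(\nu_n,\nu)] \leq C' n^{-1/s'}$ for any $s' > d_p^{*}(\nu)$. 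The main obstacle I anticipate is not any deep estimate—those are imported wholesale—but rather the bookkeeping of justifying the $\alpha=1$ limit cleanly: strictly, \eqref{samespace_main} and the metric structure require $\alpha \in ]0,1[$, so I would either pass to the limit carefully using the interpolation theorem or, more robustly, run the same comparison argument directly at $\alpha=1$ by noting that the feature term simply drops out and the Gromov term is itself dominated by a Wasserstein distance on the common structure space $\Xcal$.
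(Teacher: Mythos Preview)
Your proposal is correct and matches the paper's approach essentially line for line: the paper also reduces everything to the comparison inequality \eqref{samespace_main}, obtaining $\fgwdistance_{\alpha,p,1}(\mu_n,\mu)\leq 2\wass_p(\mu_n,\mu)$ since $\mu_n$ and $\mu$ share the same ground space, and then invokes Theorem~1 of \cite{weedbach2017} for the rate. The only cosmetic difference is that for the convergence statement the paper's formal proof cites weak convergence of the empirical measure together with the l.s.c.\ Lemma~\ref{lsc_on_measure}, whereas you deduce it directly from the same Wasserstein bound and $\wass_p(\mu_n,\mu)\to 0$; your route is arguably cleaner and the paper's in-text discussion in fact describes exactly your argument.
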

\begin{proof}
The proof of the convergence in $FGW$ derives directly from the weak convergence of the empirical measure and Lemma \ref{lsc_on_measure}. Moreover, since $\mu_{n}$ and $\mu$ are both in the same ground space, we have:
$$\fgwdistance_{\alpha,p,1}(\mu_{n},\mu) \leq 2\wass_{p}(\mu_{n},\mu) \implies \mathbb{E}[\fgwdistance_{\alpha,p,1}(\mu_{n},\mu)] \leq 2 \mathbb{E}[\wass_{p}(\mu_{n},\mu)].$$
We can directly apply theorem 1 in \cite{weedbach2017} to state the inequality.
\end{proof}

\subsection{Proof of proposition \ref{interpolationtheorem} -- Interpolation properties between GW and W}
\label{proof:theo2}
We recall the proposition:
\begin{prop*}[Interpolation properties.]
As $\alpha$ tends to zero, one recovers the Wasserstein distance between the features information and as $\alpha$ goes to one, one recovers the Gromov-Wasserstein distance between the structure information:

\begin{equation*}
\lim\limits_{\alpha \rightarrow 0}\fgwdistance_{\alpha,p,q}(\mu,\nu)=(\wass_{pq}(\mu_{A},\nu_{B}))^{q}
\end{equation*}

\begin{equation*}
\lim\limits_{\alpha \rightarrow 1}\fgwdistance_{\alpha,p,q}(\mu,\nu)=(\gw_{pq}(\mu_{X},\nu_{Y}))^{q}
\end{equation*}
\end{prop*}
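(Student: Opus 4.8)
The plan is to establish the two limit relations separately, in each case showing that the optimal value of the $FGW$ functional converges to the corresponding $W$ or $GW$ optimal value. The natural strategy is to prove both an upper bound and a lower bound on $\lim \fgwdistance_{\alpha,p,q}(\mu,\nu)$ as $\alpha \to 0$ (and symmetrically as $\alpha \to 1$), sandwiching the limit to the desired value.

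\textbf{The case $\alpha \to 0$.} First I would establish the lower bound. From Proposition \ref{fgwcomparetogwandw}, inequality \eqref{wassinequality_main} gives directly
\begin{equation*}
\fgwdistance_{\alpha,p,q}(\mu,\nu) \geq (1-\alpha)\wass_{pq}(\mu_{A},\nu_{B})^{q},
\end{equation*}
so that $\liminf_{\alpha\to 0}\fgwdistance_{\alpha,p,q}(\mu,\nu) \geq \wass_{pq}(\mu_{A},\nu_{B})^{q}$. For the upper bound, let $\pi_W$ be an optimal coupling for the Wasserstein distance between the feature marginals $\mu_A,\nu_B$. I would then exhibit an admissible coupling $\pi \in \couplingset(\mu,\nu)$ whose feature-marginal coincides with $\pi_W$ (for instance by gluing $\pi_W$ with the structure information, using the disintegration of $\mu,\nu$ along their feature marginals), and evaluate $\lossfgw(\pi)^{1/p}$ on it. By suboptimality one obtains an upper bound of the form $\fgwdistance_{\alpha,p,q}(\mu,\nu) \leq \big((1-\alpha)H + \alpha G\big)$-type expression where the structural contribution $G$ is bounded uniformly in $\alpha$ (finite because $\mu,\nu \in \mathbb{S}_{pq}(\Omega)$). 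Letting $\alpha \to 0$ then kills the structural term and recovers $\wass_{pq}(\mu_A,\nu_B)^q$ in the limsup, closing the sandwich.

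\textbf{The case $\alpha \to 1$.} This is entirely symmetric using inequality \eqref{gromovinequality_main}, which gives $\liminf_{\alpha\to 1}\fgwdistance_{\alpha,p,q}(\mu,\nu) \geq \gw_{pq}(\mu_X,\nu_Y)^q$, together with an upper bound obtained by plugging in a coupling built from an optimal $GW$ coupling $\pi_{GW}$ of the structure marginals $\mu_X,\nu_Y$ and noting that the feature contribution is multiplied by $(1-\alpha)\to 0$.

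\textbf{Main obstacle.} The delicate point is the upper bound, specifically the construction of an admissible coupling on the product space $(\Xcal\times\Omega)\times(\Ycal\times\Omega)$ whose relevant marginal agrees with the optimal $W$ (resp. $GW$) coupling on the marginal space. This requires a gluing/disintegration argument to lift a coupling of the marginals into a coupling of the full joint measures, and one must verify that the resulting cross term (the structural cost under $\pi_W$, or the feature cost under $\pi_{GW}$) stays finite and bounded independently of $\alpha$. The finiteness follows from the moment assumptions defining $\mathbb{S}_{pq}(\Omega)$, together with the elementary inequality \eqref{holder} from Memo \ref{memo:holder} used to split the $\big((1-\alpha)d^q + \alpha L^q\big)^p$ integrand; the uniformity in $\alpha$ is then immediate since the coefficients lie in $[0,1]$. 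Once the lifting is handled cleanly, both limits follow from the squeeze.
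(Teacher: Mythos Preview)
Your proposal is correct and follows essentially the same approach as the paper: the lower bound comes from Proposition~\ref{fgwcomparetogwandw}, the upper bound comes from lifting an optimal $W$ (resp.\ $GW$) coupling of the marginals to an admissible coupling of $\mu,\nu$ via the gluing lemma, and the sandwich closes as $\alpha\to 0$ (resp.\ $\alpha\to 1$); the paper carries this out with the same gluing construction and the symmetric argument for $\alpha\to 1$.

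One small caution: you mention using inequality~\eqref{holder} to split $\big((1-\alpha)d^q+\alpha L^q\big)^p$, but that bound introduces a spurious factor $2^{p-1}$ that would not vanish in the limit, so the sandwich would not close. The paper instead expands this power binomially (which is what produces the terms $\binom{p}{k}(1-\alpha)^k\alpha^{p-k}H_k$ with $k<p$, all carrying a positive power of $\alpha$); alternatively, Minkowski's inequality in $L^p(\rho\otimes\rho)$ gives directly the clean bound $(1-\alpha)\wass_{pq}^q + \alpha\,J_{pq}(\rho)^{1/p}$ that you describe as the ``$(1-\alpha)H+\alpha G$-type expression''. Either of these works; just avoid \eqref{holder} for this particular step.
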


\begin{proof}

Let $\pi_{OT} \in \couplingset(\mu_{A},\nu_{B})$ be an optimal coupling for the $pq$-Wasserstein distance between $\mu_{A}$ and $\nu_{B}$. We can use the same Gluing lemma (lemma 5.3.2 in \cite{ambrosio2005gradient}) to construct:
\begin{equation*}
  \rho \in \Pcal( 
      \mathrlap{\overbrace{\phantom{(c - 2)}}^{\text{$\mu$}}}
      \Xcal \times 
      \mathrlap{\underbrace{\phantom{2d) + (}}_{\text{$\pi_{OT}$}}}
      \Omega \times
      \mathrlap{\overbrace{\phantom{2d) + (}}^{\text{$\nu$}}}
    \Omega \times \Ycal
      )
\end{equation*}
such that $\rho \in \couplingset(\mu,\nu)$ and $P_{2,3} \# \rho =\pi_{OT}$.

Moreover we have:
\begin{equation}
\label{rhorho}
\underset{\Omega \times \Omega}{\int}d(a,b)^{pq} d\pi_{OT}(a,b)=\underset{\tv{\Xcal \times \Omega}\times \Omega \times \Ycal}{\int} d(a,b)^{pq} d\rho(x,a,b,y)
\end{equation}
Let $\alpha \geq 0$ and $\pi_{\alpha}$ optimal plan for the Fused Gromov-Wasserstein distance between $\mu$ and $\nu$. 

We can deduce that:
{\small
\begin{equation*}
\begin{split}
&\fgwdistance_{\alpha,p,q}(\mu,\nu)^{p}- (1-\alpha)^{p}\wass_{pq}(\mu_{A},\nu_{B})^{pq} \\
&= \int\limits_{(\tv{\Xcal \times \Omega} \times \tv{\Ycal \times \Omega})^{2}}\bigg((1-\alpha) d(a,b)^{q} +\alpha L(x,y,x',y')^{q}\bigg)^{p}d\pi_{\alpha}((x,a),(y,b))d\pi_{\alpha}((x',a'),(y',b'))-\underset{\Omega \times \Omega}{\int} (1-\alpha)^{p}d(a,b)^{pq}d\pi_{OT}(a,b) \\
&{\stackrel{(*)}{\leq}} \int\limits_{(\tv{\Xcal \times \Omega} \times \tv{\Ycal \times \Omega})^{2}}\bigg((1-\alpha) d(a,b)^{q} +\alpha L(x,y,x',y')^{q} \bigg)^{p} d\rho(x,a,b,y)d\rho(x',a',b',y')-\underset{\tv{\Xcal \times \Omega}\times \tv{\Ycal \times \Omega}}{\int}(1-\alpha)^{p} d(a,b)^{pq} d\rho(x,a,b,y) \\ 
&=(1-\alpha)^{p}\int\limits_{(\tv{\Xcal \times \Omega} \times \tv{\Ycal \times \Omega})^{2}}d(a,b)^{pq} d\rho(x,a,b,y)d\rho(x',a',b',y') - (1-\alpha)^{p} \underset{\tv{\Xcal \times \Omega}\times \tv{\Ycal \times \Omega}}{\int}d(a,b)^{pq} d\rho(x,a,b,y) \\ 
&+\sum_{k=0}^{p-1} {p\choose k}(1-\alpha)^{k}\alpha^{p-k} \int\limits_{(\tv{\Xcal \times \Omega} \times \tv{\Ycal \times \Omega})^{2}} d(a,b)^{qk}L(x,y,x',y')^{q(p-k)} d\rho(x,a,b,y)d\rho(x',a',b',y')\\ 
&= \sum_{k=0}^{p-1} {p\choose k}(1-\alpha)^{k}\alpha^{p-k} \int\limits_{(\tv{\Xcal \times \Omega} \times \tv{\Ycal \times \Omega})^{2}} d(a,b)^{qk}L(x,y,x',y')^{q(p-k)} d\rho(x,a,b,y)d\rho(x',a',b',y').
\end{split}
\end{equation*}
}

We note $H_{k}=\int\limits_{(\tv{\Xcal \times \Omega} \times \tv{\Ycal \times \Omega})^{2}} d(a,b)^{qk}L(x,y,x',y')^{q(p-k)} d\rho(x,a,b,y)d\rho(x',a',b',y')$.

Using (\ref{wassinequality})  we have shown that:

\begin{equation*}
(1-\alpha)(\wass_{pq}(\mu_{A},\nu_{B}))^{q} \leq \fgwdistance_{\alpha,p,q}(\mu,\nu) \leq  \bigg((1-\alpha)^{p} (\wass_{pq}(\mu_{A},\nu_{B}))^{pq}+\sum_{k=0}^{p-1} {p\choose k}(1-\alpha)^{k}\alpha^{p-k}H_{k}\bigg)^{\frac{1}{p}}
\end{equation*}

So $\lim\limits_{\alpha \rightarrow 0} \fgwdistance_{\alpha,p,q}(\mu,\nu)=(\wass_{pq}(\mu_{A},\nu_{B}))^{q}$.
 \\ \\
 
For the case $\alpha \rightarrow 1$ we rather consider $\pi_{GW} \in \couplingset(\mu_{X},\nu_{Y})$ an optimal coupling for the $pq$-Gromov-Wasserstein distance between $\mu_{X}$ and $\nu_{Y}$ and we construct

\begin{equation*}
  \gamma \in \Pcal( 
      \mathrlap{\overbrace{\phantom{(c - 2)}}^{\text{$\mu$}}}
      \Omega \times 
      \mathrlap{\underbrace{\phantom{2d) + (}}_{\text{$\pi_{GW}$}}}
      \Xcal \times
      \mathrlap{\overbrace{\phantom{2d) + (}}^{\text{$\nu$}}}
    \tv{\Ycal \times \Omega}
      )
\end{equation*}

such that $\gamma \in \couplingset(\mu,\nu)$ and $P_{2,3} \# \rho =\pi_{GW}$. In the same way as previous reasoning we can derive:
\begin{equation}
\alpha(\gw_{pq}(\mu_{X},\nu_{Y}))^{q} \leq \fgwdistance_{\alpha,p,q}(\mu,\nu) \leq  \big(\alpha^{p} (\gw_{pq}(\mu_{X},\nu_{Y}))^{pq}+\sum_{k=0}^{p-1} {p\choose k}(1-\alpha)^{p-k}\alpha^{k} J_{k})^{\frac{1}{p}}
\end{equation}
with $J_{k}= \int d(a,b)^{q(p-k)}L(x,y,x',y')^{qk} d\rho(x,a,b,y)d\rho(x',a',b',y')$. In this way $\lim\limits_{\alpha \rightarrow 1}\fgwdistance_{\alpha,p,q}(\mu,\nu)=(\gw_{pq}(\mu_{X},\nu_{Y}))^{q}$.
\end{proof}

\subsection{Proof of Theorem \ref{cstespeedtheo} -- Constant speed geodesic.}
\label{proof:theo3}
We recall the theorem:
\begin{theo*}[\tv{Constant speed geodesic.}]
Let $p\geq 1$ and $(\Xcal \times \Omega ,d_{\Xcal},\mu_{0})$ and $(\Ycal \times \Omega ,d_{\Ycal},\mu_{1})$ in $\mathbb{S}_p(\mathbb{R}^{d})$. Let $\pi^{*}$ be an optimal coupling for the Fused Gromov-Wasserstein distance between $\mu_{0},\mu_{1}$ and $t \in [0,1]$. We equip $\mathbb{R}^{d}$ with  $\ell_{m}$ norm for $m \geq 1$.

We define $\eta_{t} : \Xcal \times \Omega \times \Ycal \times \Omega \rightarrow \Xcal \times \Ycal \times \Omega$ such that:
\begin{equation*}
\forall (x,a),(y,b) \in \Xcal \times \Omega \times \Ycal \times \Omega, \ \eta_{t}(x,\a,y,\b)=(x,y,(1-t)\a+t\b)
\end{equation*}
Then:
\begin{equation*}
\left(\Xcal\times \Ycal \times \Omega,(1-t)d_{\Xcal} \oplus t d_{\Ycal},\mu_{t}=\eta_{t} \# \pi^{*} \right)_{t \in [0,1]}
\end{equation*}
is a constant speed geodesic connecting $(\Xcal \times \Omega ,d_{\Xcal},\mu_{0})$ and $(\Ycal \times \Omega ,d_{\Ycal},\mu_{1})$ in the metric space $\left(\mathbb{S}_p(\mathbb{R}^{d}), \fgwdistance_{\alpha,p,1}\right)$.

\end{theo*}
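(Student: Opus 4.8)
The plan is to establish the three defining ingredients of a constant speed geodesic in $\left(\mathbb{S}_p(\mathbb{R}^{d}),\fgwdistance_{\alpha,p,1}\right)$: that the curve has the prescribed endpoints in the isomorphism quotient, the upper bound $\fgwdistance_{\alpha,p,1}(\mu_s,\mu_t)\le |t-s|\,\fgwdistance_{\alpha,p,1}(\mu_0,\mu_1)$, and the matching lower bound, which together force the equality $\fgwdistance_{\alpha,p,1}(\mu_s,\mu_t)= |t-s|\,\fgwdistance_{\alpha,p,1}(\mu_0,\mu_1)$ for all $s,t\in[0,1]$. As a preliminary I would check that each $\mu_t$ lies in $\mathbb{S}_p(\mathbb{R}^{d})$: the feature moments are controlled via $\|(1-t)\a+t\b\|_m^{p}\le 2^{p-1}\left((1-t)^{p}\|\a\|_m^{p}+t^{p}\|\b\|_m^{p}\right)$ and the structure moments by the two endpoint structure moments, all finite because $\mu_0,\mu_1\in\mathbb{S}_p(\mathbb{R}^{d})$.

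For the upper bound, which I expect to be the computational core, I would produce an explicit competitor coupling built from $\pi^{*}$. Writing $d_t\stackrel{\text{def}}{=}(1-t)d_\Xcal\oplus t\,d_\Ycal$ for the interpolated structure metric, define $\Phi_{s,t}\big((x,\a),(y,\b)\big)=\big(\eta_s(x,\a,y,\b),\eta_t(x,\a,y,\b)\big)$ and set $\gamma_{s,t}=\Phi_{s,t}\#\pi^{*}$. Its marginals are exactly $\eta_s\#\pi^{*}=\mu_s$ and $\eta_t\#\pi^{*}=\mu_t$, so $\gamma_{s,t}\in\couplingset(\mu_s,\mu_t)$. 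Under this coupling each structure point $(x,y)$ at time $s$ is matched with the same $(x,y)$ at time $t$, so the structure distortion is
\[|d_s((x,y),(x',y'))-d_t((x,y),(x',y'))|=|t-s|\,|d_\Xcal(x,x')-d_\Ycal(y,y')|=|t-s|\,L(x,y,x',y'),\]
while the feature cost is $\|((1-s)\a+s\b)-((1-t)\a+t\b)\|_m=|t-s|\,\|\a-\b\|_m$. Factoring $|t-s|$ out of the convex combination of these two terms and raising to the power $p$, the $\lossfgw$ objective evaluated at $\gamma_{s,t}$ equals $|t-s|^{p}\,\lossfgw(\pi^{*})=|t-s|^{p}\,\fgwdistance_{\alpha,p,1}(\mu_0,\mu_1)^{p}$, using that $\pi^{*}$ is optimal between $\mu_0$ and $\mu_1$. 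Suboptimality of $\gamma_{s,t}$ then yields the upper bound.

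The lower bound follows from the triangle inequality, which holds here since $q=1$ makes $\fgwdistance_{\alpha,p,1}$ a genuine metric by Theorem \ref{metrictheo}. For $0\le s\le t\le 1$, I would combine $\fgwdistance_{\alpha,p,1}(\mu_0,\mu_1)\le\fgwdistance_{\alpha,p,1}(\mu_0,\mu_s)+\fgwdistance_{\alpha,p,1}(\mu_s,\mu_t)+\fgwdistance_{\alpha,p,1}(\mu_t,\mu_1)$ with the endpoint-to-interior estimates $\fgwdistance_{\alpha,p,1}(\mu_0,\mu_s)\le s\,\fgwdistance_{\alpha,p,1}(\mu_0,\mu_1)$ and $\fgwdistance_{\alpha,p,1}(\mu_t,\mu_1)\le(1-t)\,\fgwdistance_{\alpha,p,1}(\mu_0,\mu_1)$, both special cases of the upper bound, to obtain $\fgwdistance_{\alpha,p,1}(\mu_s,\mu_t)\ge(t-s)\,\fgwdistance_{\alpha,p,1}(\mu_0,\mu_1)$, matching the upper bound and giving the constant speed property.

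The main obstacle I anticipate is the endpoint identification. At $t=0$ the structure metric $d_0=d_\Xcal\oplus 0\cdot d_\Ycal$ degenerates into a pseudometric on $\Xcal\times\Ycal$ that ignores the $\Ycal$-coordinate, so $(\Xcal\times\Ycal\times\Omega,d_0,\eta_0\#\pi^{*})$ is not literally the structured object $(\Xcal\times\Omega,d_\Xcal,\mu_0)$. I would resolve this by exhibiting the projection $p:(x,y,\a)\mapsto(x,\a)$ and checking that $p\#(\eta_0\#\pi^{*})=P_{1,2}\#\pi^{*}=\mu_0$, that $p$ preserves the feature coordinate, and that it restricts to an isometry of structures onto $(\supp(\mu_{0,X}),d_\Xcal)$; by the characterization of $\fgwdistance_{\alpha,p,1}=0$ in Theorem \ref{metrictheo} this shows the $t=0$ point coincides with $\mu_0$ in $\mathbb{S}_p(\mathbb{R}^{d})$ quotiented by isomorphism, and symmetrically for $t=1$. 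The delicate point is arguing that $p$ descends to a bijection on supports modulo the equivalence relation despite the degeneracy of $d_0$, after which the three steps above conclude the proof.
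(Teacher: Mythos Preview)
Your core argument---constructing the coupling $\gamma_{s,t}=(\eta_s\times\eta_t)\#\pi^{*}$ and computing that both the structure distortion $|d_s-d_t|$ and the feature cost scale exactly by $|t-s|$---is precisely the paper's proof. The paper is shorter only because of the definition it adopts in the paragraph preceding the theorem: a constant speed geodesic is a curve satisfying the \emph{inequality} $d(w(s),w(t))\le|t-s|\,d(w(0),w(1))$, so the paper declares ``it suffices to prove'' the upper bound and stops after the same computation you give. Your lower bound via the triangle inequality and your endpoint-identification discussion are correct and more careful, but unnecessary under that convention; the paper leaves the identification of $\eta_0\#\pi^{*}$ with $\mu_0$ (and $\eta_1\#\pi^{*}$ with $\mu_1$) in the isomorphism quotient entirely implicit.
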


\begin{proof}
We note $\mathbb{S}_{t}=\left(\Xcal\times \Ycal \times \Omega,d_t,\mu_{t}=\eta_{t} \# \pi^{*} \right)_{t \in [0,1]}$ where $d_{t}=(1-t)d_{\Xcal} \oplus t d_{\Ycal}$. Let $\|.\|$ be any $\ell_m$ norm for $m \geq 1$. It suffices to prove:
\begin{equation}
\fgwdistance_{\alpha,p,1}(\mu_{t},\mu_{s}) \leq |t-s| \fgwdistance_{\alpha,p,1}(\mu_{0},\mu_{1})
\end{equation}

To do so we consider $\Delta^{t}_{s} \in \Pcal(\Xcal \times \Ycal \times \Omega \times \Xcal \times \Ycal \times \Omega)$ defined by  $\Delta^{t}_{s}=(\eta_{t} \times \eta_{s} )\# \pi^{*} \in \couplingset(\mu_{t},\mu_{s})$ and the following ``diagonal'' coupling:
\begin{equation}
\dr\gamma_{s}^{t}( (x,y), \a , (x'',y''),\b) = \dr \Delta^{t}_{s} ((x,y), \a , (x'',y''),\b) \dr\delta_{(x_{0},x_{1})}(x_{0}'',x_{1}'')
\end{equation}
Then $\gamma_{s}^{t} \in \Pcal(\Xcal \times \Ycal \times \Omega \times \Xcal \times \Ycal \times \Omega)$  and since $\Delta^{t}_{s} \in \couplingset(\mu_{t},\mu_{s})$ then $\gamma_{s}^{t} \in \couplingset(\mu_{t},\mu_{s})$ So by suboptimality:
{\small
\begin{equation*}
\begin{split}
&\fgwdistance_{\alpha,p,1}(\mu_{t},\mu_{s})^{p} \leq \int\limits_{(\Xcal \times \Ycal \times \Omega \times \Xcal \times \Ycal \times \Omega)^{2}} \bigg((1-\alpha)d(a,b)+\alpha |d_{t}[(x,y),(x',y')]{-}d_{s}[(x'',y''),(x''',y''')] |\bigg)^{p} \\
& \dr\gamma_{s}^{t}(x,y, \a ,x'',y'',\b)\dr\gamma_{s}^{t}(x',y', \a',x''',y''',\b') \\
&=\int\limits_{(\Xcal \times \Ycal \times \Omega \times \Xcal \times \Ycal \times \Omega)^{2}}\bigg((1-\alpha)d(\a,\b)+\alpha |d_{t}[(x,y),(x',y')]{-}d_{s}[(x,y),(x',y')] |\bigg)^{p} \\
& \dr\Delta_{s}^{t}(x,y, \a ,x,y,\b)d\Delta_{s}^{t}(x',y', \a',x',y',\b') \\ 
&=\int\limits_{(\Xcal \times\Omega \times \Ycal \times\Omega)^{2}}\big((1-\alpha) \|(1-t)\a+t\b{-}(1{-}s)\a{-}s\b \| +\alpha |(1-t)d_{\Xcal}(x,x'){+}td_{\Ycal}(y,y')-(1-s)d_{\Xcal}(x,x')+sd_{\Ycal}(y,y') |\big)^{p} \\
&\dr\pi^{*}(x,a,y,b)d\pi^{*}(x',\a',y',\b') \\ 
&= |t-s|^{p} \int\limits_{(\Xcal \times \Omega \times \Ycal \times \Omega)^{2}}\bigg( (1-\alpha) \|\a-\b\| + \alpha |d_{\Xcal}(x,x')-d_{\Ycal}(y,y')| \bigg)^{p} d\pi^{*}(x,\a,y,\b)\dr\pi^{*}(x',\a',y',\b')
\end{split}
\end{equation*}
}

So $\fgwdistance_{\alpha,p,1}(\mu_{t},\mu_{s}) \leq |t-s| \fgwdistance_{\alpha,p,1}(d_{0},d_{1},\mu_{0},\mu_{1})$. 
\end{proof}

\section{Proofs and additional results of Chapter \ref{cha:gw_euclidean}}
\label{sec:proofs_chap_gw}

This Chapter contains all the proofs of the Chapter \ref{cha:gw_euclidean}. We recall the following notations:
\begin{equation*}
\gwloss_2(c_{\Xcal},c_{\Ycal},\pi)\stackrel{def}{=} \int_{\Xcal \times \Xcal} \int_{\Ycal \times \Ycal} |c_{\Xcal}(\xbf,\xbf')-c_{\Ycal}(\ybf,\ybf')|^2 \dr \pi(\xbf,\ybf) \dr \pi(\xbf',\ybf')
\end{equation*}
\begin{equation*}
J(T)\stackrel{def}{=} \int \left(\|\xbf-\xbf'\|_{2}^{2}-\|T(\xbf)-T(\xbf')\|_{2}^{2}\right)^{2} \dr \mu(\xbf) \dr \mu(\xbf')
\end{equation*}
\begin{equation*}
L\gm^{2}_{2}(\mu,\nu)\stackrel{def}{=} \underset{\begin{smallmatrix} T\#\mu=\nu \\ \text{T is linear} \end{smallmatrix}}{\inf} J(T) = \underset{\begin{smallmatrix} T\#\mu=\nu \\ \text{T is linear} \end{smallmatrix}}{\inf} \int \left(\|\xbf-\xbf'\|_{2}^{2}-\|T(\xbf)-T(\xbf')\|_{2}^{2}\right)^{2} \dr \mu(\xbf) \dr \mu(\xbf')
\end{equation*}

\subsection{Proof of Theorem \ref{qap} \label{proof:maintheo} -- New special of the QAP}
We recall the theorem:

\begin{theo*}[A new special case for the Quadratic Assignment Problem]
For real numbers $x_{1} < \dots < x_{n}$ and $y_{1} < \dots < y_{n}$, 
\begin{equation}
\label{eq:qap}
\underset{\sigma \in \Sn}{\min} \sum_{i,j}  - (x_{i}-x_{j})^{2}(y_{\sigma(i)}-y_{\sigma(j)})^{2}
\end{equation}
is achieved either by the identity permutation $\sigma(i)=i$ ($Id$) or the anti-identity permutation $\sigma(i)=n+1-i$ ($anti-Id$). In other words:
\begin{equation}
\exists \sigma \in \{Id,anti-Id\}, \ \sigma \in \underset{\sigma \in \Sn}{\argmin} \sum_{i,j}  - (x_{i}-x_{j})^{2}(y_{\sigma(i)}-y_{\sigma(j)})^{2} 
\end{equation}
\end{theo*}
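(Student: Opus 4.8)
```latex
\textbf{Overview of the approach.}
The plan is to expand the objective as a function of the permutation and reduce it to a cross-term that must be maximized, then prove the optimum occurs at a monotone (increasing or decreasing) rearrangement. Writing $a_{ij} = (x_i - x_j)^2$ and $b_{ij} = (y_i - y_j)^2$, the objective in \eqref{eq:qap} is $-\sum_{i,j} a_{ij} b_{\sigma(i)\sigma(j)}$, so minimizing \eqref{eq:qap} is equivalent to \emph{maximizing} $\sum_{i,j} a_{ij} b_{\sigma(i)\sigma(j)}$. The first step I would take is to expand $a_{ij} = x_i^2 + x_j^2 - 2 x_i x_j$ and similarly for $b$. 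Upon summing over $i,j$, the ``square'' terms $\sum_{i,j} x_i^2 b_{\sigma(i)\sigma(j)}$ and $\sum_{i,j} x_j^2 b_{\sigma(i)\sigma(j)}$ rearrange (using that $\sigma$ is a bijection) into quantities that are \emph{invariant} under $\sigma$, because summing $b_{\sigma(i)\sigma(j)}$ over all $j$ (or all $i$) ranges over all indices regardless of $\sigma$. Hence the only $\sigma$-dependent contribution reduces to the bilinear cross-term $\sum_{i,j} x_i x_j \, y_{\sigma(i)} y_{\sigma(j)}$, which factors as $\big(\sum_i x_i y_{\sigma(i)}\big)^2$.

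\textbf{Reduction to a single scalar product.}
This factorization is the crux: maximizing $\sum_{i,j} a_{ij} b_{\sigma(i)\sigma(j)}$ becomes equivalent to maximizing $\big(\sum_i x_i y_{\sigma(i)}\big)^2$ over $\sigma \in \Sn$. Let $S(\sigma) = \sum_i x_i y_{\sigma(i)}$. I would then argue that $S(\sigma)^2$ is maximized either where $S(\sigma)$ is maximal or where it is minimal over all permutations. By the classical rearrangement inequality, since $x_1 < \dots < x_n$ and $y_1 < \dots < y_n$ are both sorted increasingly, $S(\sigma)$ attains its maximum at $\sigma = Id$ and its minimum at $\sigma = anti\text{-}Id$ (pairing largest $x$ with smallest $y$, etc.). Therefore $S(\sigma)^2 \le \max\{S(Id)^2, S(anti\text{-}Id)^2\}$ for every $\sigma$, and the maximum is achieved by one of these two permutations. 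Translating back through the sign flip, the minimum in \eqref{eq:qap} is achieved at $Id$ or $anti\text{-}Id$, which is exactly the claim.

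\textbf{The main obstacle and how I would handle it.}
I expect the delicate step to be verifying carefully that the square terms genuinely cancel their $\sigma$-dependence; this requires tracking the bijectivity of $\sigma$ through each of the four cross-products arising from the product $(x_i^2 + x_j^2 - 2x_i x_j)(y_{\sigma(i)}^2 + y_{\sigma(j)}^2 - 2 y_{\sigma(i)} y_{\sigma(j)})$, of which only the $(-2x_i x_j)(-2 y_{\sigma(i)} y_{\sigma(j)})$ term survives as non-invariant. A clean way to organize this is to note that any term containing $x_i^2$ (without $x_j$) contributes a factor that, after summing over $j$ with $\sigma$ a bijection, depends only on the multiset $\{y_k^2\}$ and on $\sum_k y_k$, both permutation-invariant; the same holds symmetrically. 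The second point requiring care is justifying the $S(\sigma)^2$ reduction: because squaring is not monotone, I must explicitly invoke that $\max_\sigma S(\sigma)^2 = \max\{(\max_\sigma S(\sigma))^2, (\min_\sigma S(\sigma))^2\}$, and then apply the rearrangement inequality in both the increasing and decreasing directions. Once these two structural facts are established, the theorem follows directly without further computation.
```
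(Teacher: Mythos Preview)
Your reduction contains a genuine error. The claim that ``the square terms genuinely cancel their $\sigma$-dependence'' is false. When you expand $\sum_{i,j}(x_i^2+x_j^2-2x_ix_j)(y_{\sigma(i)}^2+y_{\sigma(j)}^2-2y_{\sigma(i)}y_{\sigma(j)})$, the cross term $x_i^2\cdot y_{\sigma(i)}^2$ yields $\sum_{i,j} x_i^2 y_{\sigma(i)}^2 = n\sum_i x_i^2 y_{\sigma(i)}^2$, which \emph{does} depend on $\sigma$. Your heuristic ``summing $b_{\sigma(i)\sigma(j)}$ over all $j$ ranges over all indices'' gives $\sum_j b_{\sigma(i)\sigma(j)}=\sum_k (y_{\sigma(i)}-y_k)^2$, and this still depends on $\sigma(i)$ through the term $n y_{\sigma(i)}^2$. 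Carrying out the expansion correctly (and using translation invariance to center so that $\sum_i x_i=\sum_j y_j=0$), the $\sigma$-dependent part of the objective is
\[
g(\sigma)=n\sum_i x_i^2 y_{\sigma(i)}^2 \;+\; 2\Big(\sum_i x_i y_{\sigma(i)}\Big)^{2},
\]
not $S(\sigma)^2$ alone. The extra term $n\sum_i x_i^2 y_{\sigma(i)}^2$ is a linear assignment cost in the quantities $x_i^2$ and $y_j^2$, and since the $x_i$ (resp.\ $y_j$) may change sign, the sequences $(x_i^2)$ and $(y_j^2)$ need not be monotone, so the rearrangement inequality gives no control over which permutation maximizes this term. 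In particular, the two terms in $g$ can be maximized by different permutations, and your argument collapses.

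The paper's proof is accordingly much more involved. After the reduction to $g$ above, it relaxes the permutation constraint to doubly stochastic matrices, observes that the relaxed problem is the maximization of a \emph{convex} quadratic (so an extreme-point, i.e.\ permutation, optimum exists), and then uses a first-order necessary condition: any optimal permutation $\sigma^*$ must also solve the linear program obtained by freezing one copy of the coupling at $\sigma^*$. That LP has cost $C(\sigma^*)\sum_{k,l}x_k y_l\pi_{kl}$ with $C(\sigma^*)=\sum_i x_i y_{\sigma^*(i)}$, and when $C(\sigma^*)\neq 0$ the rearrangement inequality forces $\sigma^*$ to be $Id$ or $anti\text{-}Id$. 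The degenerate case $C(\sigma^*)=0$ is then handled by a perturbation/continuity argument. None of this machinery is needed if the objective really were $S(\sigma)^2$; the presence of the second term is precisely what makes the theorem nontrivial.
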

Let us note $\mathcal{I}=\{\xbf,\ybf \in \mathbb{R}^{n}\times \mathbb{R}^{n} | x_{1} < x_{2} < \dots < x_{n} \ , y_{1} < y_{2} < \dots < y_{n} \}$. We consider for $\xbf,\ybf \in \mathcal{I}$:
\begin{equation}
\label{qpproblem}
\underset{\sigma \in \Sn}{\text{max}} \ Z(\xbf,\ybf,\sigma)=\underset{\sigma \in \Sn}{\text{max}} \sum_{i,j} (x_{i}-x_{j})^{2} (y_{\sigma(i)}-y_{\sigma(j)})^{2}
\end{equation}
The original problem is equivalent to maximizing $Z(\xbf,\ybf,\sigma)$ over $\Sn$. Given $\xbf,\ybf \in \mathcal{I}$, we define $X \stackrel{\text{def}}{=}\sum_{i} x_{i}$ and $Y \stackrel{\text{def}}{=}\sum_{i} y_{i}$. Then:
\begin{equation*}
\begin{split}
&\underset{\sigma \in \Sn}{\text{max}} \ Z(\xbf,\ybf,\sigma) =\underset{\sigma \in \Sn}{\text{max}} \sum_{i,j} (x_{i}-x_{j})^{2} (y_{\sigma(i)}-y_{\sigma(j)})^{2} \\
&=\underset{\sigma \in \Sn}{\text{max}} \sum_{i,j} (x_{i}^{2}+x_{j}^{2})(y_{\sigma(i)}^{2}+y_{\sigma(j)}^{2}) -2 \sum_{i,j} x_{i} x_{j}(y_{\sigma(i)}^{2}+y_{\sigma(j)}^{2}) -2 \sum_{i,j} y_{\sigma(i)} y_{\sigma(j)}(x_{i}^{2}+x_{j}^{2}) \\ 
&+4 \sum_{i,j} x_{i} x_{j} y_{\sigma(i)} y_{\sigma(j)} \\
&= \underset{\sigma \in \Sn}{\text{max}} \ 2 n \sum_{i} x_{i}^{2}y_{\sigma(i)}^{2}-2 \sum_{i,j} x_{i} x_{j}(y_{\sigma(i)}^{2}+y_{\sigma(j)}^{2}) -2 \sum_{i,j} y_{\sigma(i)} y_{\sigma(j)}(x_{i}^{2}+x_{j}^{2}) \\ 
&+4 \sum_{i,j} x_{i} x_{j} y_{\sigma(i)} y_{\sigma(j)} + 2 (\sum_{i} x_{i}^{2})(\sum_{i} y_{i}^{2}) \\
&= \underset{\sigma \in \Sn}{\text{max}} \ 2 n \sum_{i} x_{i}^{2}y_{\sigma(i)}^{2}-4 X \sum_{i} x_{i} y_{\sigma(i)}^{2} -4 Y \sum_{i} x_{i}^{2} y_{\sigma(i)} +4 \sum_{i,j} x_{i} x_{j} y_{\sigma(i)} y_{\sigma(j)} + 2 (\sum_{i} x_{i}^{2})(\sum_{i} y_{i}^{2}) \\
&\stackrel{(*)}{=} Cte+2 \big(\underset{\sigma \in \Sn}{\text{max}} \  \sum_{i} n x_{i}^{2}y_{\sigma(i)}^{2}-2 \sum_{i} (X x_{i} y_{\sigma(i)}^{2} + Y x_{i}^{2} y_{\sigma(i)}) + 2(\sum_{i} x_{i} y_{\sigma(i)})^{2}\big) \\
\end{split}
\end{equation*}
where in (*) we defined $Cte\stackrel{\text{def}}{=}2 (\sum_{i} x_{i}^{2})(\sum_{i} y_{i}^{2})$ the term that does not depend on $\sigma$. Overall we have:
\begin{equation}
\label{equivone}
\forall \xbf,\ybf \in \mathcal{I}, \ \underset{\sigma \in \Sn}{\text{argmax}} \ Z(\xbf,\ybf,\sigma) = \underset{\sigma \in \Sn}{\text{argmax}} \sum_{i} n x_{i}^{2}y_{\sigma(i)}^{2}-2 \sum_{i} (X x_{i} y_{\sigma(i)}^{2} + Y x_{i}^{2} y_{\sigma(i)}) + 2(\sum_{i} x_{i} y_{\sigma(i)})^{2}
\end{equation}

Since $Z$ is invariant by translation of $\xbf,\ybf$ we can suppose without loss of generality that $X=Y=0$. We consider the set $\mathcal{D}=\{\xbf,\ybf \in \mathbb{R}^{n}\times \mathbb{R}^{n} | x_{1} < x_{2} < \dots < x_{n} \ , y_{1} < y_{2} < \dots < y_{n}, \ \sum_i x_i= \sum_j y_j=0 \}$. We want to find for $\xbf,\ybf \in \mathcal D$:
\begin{equation}
\tag{QAP}
\label{eq:qap_to_prove}
\max_{\sigma \in \Sn} n \sum_{i} x_i^{2} y_{\sigma(i)}^{2} +2 \left(\sum_i x_i y_{\sigma(i)}\right)^{2} \stackrel{def}{=} \max_{\sigma \in \Sn} g(\xbf,\ybf,\sigma)
\end{equation}

We have the following result:

\begin{lemma}
\label{lemma:equivalence_both_probs}
Let $\xbf,\ybf \in \mathcal{D}$ and consider the problem: 
\begin{equation}
\tag{QP}
\label{eq:qp_equiv}
\max_{\GG \in DS} \sum_{ijkl} (x_i^2 y_j^2+ 2 x_i y_j x_k y_l)\pi_{ij} \pi_{kl}\\
\end{equation}
where $DS$ is the set of doubly stochastic matrices. Then \eqref{eq:qp_equiv} and \eqref{eq:qap_to_prove} are equivalent. More precisely if $\sigma^{*}$ is an optimal solution of \eqref{eq:qap_to_prove} then $\GG_{\sigma^{*}}$ defined by $\pi_{\sigma^{*}}(i,j)=1$ if $j=\sigma^{*}(i)$ else $0$ for all $(i,j) \in \integ{n}^{2}$ is an optimal solution of \eqref{eq:qp_equiv} and if $\GG^{*}$ is an optimal solution of \eqref{eq:qp_equiv} then it is supported on a permutation $\sigma^{*}$ which is an optimal solution of \eqref{eq:qap_to_prove}.
\end{lemma}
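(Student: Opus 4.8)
The plan is to show that the objective of \eqref{eq:qp_equiv}, viewed as a function $F(\GG)=\sum_{ijkl}(x_i^2 y_j^2+2x_iy_jx_ky_l)\pi_{ij}\pi_{kl}$ of $\GG\in DS$, is \emph{convex}, so that the whole statement follows from the interplay between convex maximization and Birkhoff's theorem \cite{birkhoff:1946}. First I would record the value identity at a vertex: if $\GG_\sigma$ is the permutation matrix of $\sigma\in\Sn$ (i.e.\ $\pi_{ij}=1$ iff $j=\sigma(i)$), then since $\sum_{kl}\pi_{kl}=n$ one gets $\sum_{ijkl}x_i^2y_j^2\pi_{ij}\pi_{kl}=n\sum_i x_i^2 y_{\sigma(i)}^2$, while $2\sum_{ijkl}x_iy_jx_ky_l\pi_{ij}\pi_{kl}=2\big(\sum_i x_i y_{\sigma(i)}\big)^2$ by factorisation of the double sum. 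Adding these shows $F(\GG_\sigma)=g(\xbf,\ybf,\sigma)$, so \eqref{eq:qp_equiv} restricted to vertices is exactly \eqref{eq:qap_to_prove}.

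The crux is the convexity of $F$ on $DS$, and the key observation is that the doubly-stochastic constraint \emph{linearises} the first quadratic term. Indeed, for every $\GG\in DS$ we again have $\sum_{kl}\pi_{kl}=n$, hence $\sum_{ijkl}x_i^2y_j^2\pi_{ij}\pi_{kl}=n\sum_{ij}x_i^2y_j^2\pi_{ij}$, which is affine in $\GG$. The remaining term equals $2\,\ell(\GG)^2$ with $\ell(\GG)=\sum_{ij}x_iy_j\pi_{ij}$ a linear form, and the square of a linear form is convex. Thus $F$ is the sum of an affine function and a convex one, hence convex on $DS$. Invoking the principle that a convex function on a compact convex polytope attains its maximum at an extreme point (the fundamental theorem of linear programming flavour, see \cite{bertsimas-LPbook}) together with Birkhoff's theorem gives $\max_{\GG\in DS}F(\GG)=\max_{\sigma\in\Sn}F(\GG_\sigma)=\max_{\sigma\in\Sn}g(\xbf,\ybf,\sigma)$.

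The forward correspondence is then immediate: if $\sigma^*$ is optimal for \eqref{eq:qap_to_prove}, then $F(\GG_{\sigma^*})=g(\xbf,\ybf,\sigma^*)=\max_\sigma g=\max_{DS}F$, so $\GG_{\sigma^*}$ is optimal for \eqref{eq:qp_equiv}. For the reverse direction I would exploit convexity through a Jensen-type argument rather than a face argument. Let $\GG^*$ be optimal with value $V=\max_{DS}F$, and write its Birkhoff decomposition $\GG^*=\sum_k \lambda_k \GG_{\sigma_k}$ with $\lambda_k>0$, $\sum_k\lambda_k=1$. Convexity yields
\[
V=F(\GG^*)\le \sum_k \lambda_k F(\GG_{\sigma_k})\le \max_k F(\GG_{\sigma_k})\le V,
\]
so all inequalities are equalities and $F(\GG_{\sigma_k})=V$ for every $k$ in the support; equivalently each such $\sigma_k$ satisfies $g(\xbf,\ybf,\sigma_k)=V=\max_\sigma g$ and is therefore optimal for \eqref{eq:qap_to_prove}. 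This is exactly the sense in which $\GG^*$ is ``supported on'' an optimal permutation $\sigma^*$.

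The step I expect to be the main obstacle is not any of the calculations but getting the reverse implication stated cleanly: a convex function can be maximised at non-vertex points, so one must resist claiming that $\GG^*$ is itself a permutation matrix and instead route the argument through the Birkhoff decomposition and the above equality chain, which is what legitimises extracting an optimal $\sigma^*$ from an arbitrary optimal $\GG^*$. The only other point deserving care is making the convexity argument rigorous on the affine hull of $DS$ (so that the constant $\sum_{kl}\pi_{kl}=n$ may be used), which I would handle by restricting attention to $DS$ throughout rather than to all of $\R^{n\times n}$.
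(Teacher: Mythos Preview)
Your proof is correct and follows the same route as the paper: establish convexity of $F$ on $DS$ and invoke Birkhoff's theorem to reduce to permutations. Two points of execution differ and are worth noting. First, the paper justifies convexity by asserting that the Hessian is the positive semi-definite matrix $\xbf\xbf^T\otimes_K\ybf\ybf^T$, which really only accounts for the $2x_iy_jx_ky_l$ part of the cost; your argument---linearising the $x_i^2y_j^2$ term via the constraint $\sum_{kl}\pi_{kl}=n$ on $DS$ and recognising the remainder as $2\ell(\GG)^2$---is the cleaner way to see why the full objective is convex on the affine hull of $DS$. Second, for the reverse implication the paper writes that any optimal $\GG^*$ ``lies necessarily in the extremal points of $DS$'', which is stronger than what convex maximisation alone guarantees (the maximum can be attained on an entire face). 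Your Birkhoff-decomposition-plus-Jensen chain is the right repair: it extracts an optimal permutation from the support of any optimal $\GG^*$ without claiming that $\GG^*$ must itself be a vertex. Both arguments deliver exactly the equivalence needed for the downstream use of the lemma.
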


\begin{proof}
The problem \eqref{eq:qap_to_prove} can be rewritten as:
{\small
\begin{equation}
\label{eq:QAP_with_P}
\begin{split}
&\max_{\begin{smallmatrix}P_{ij}\in\{0,1\} \\ \forall j \sum_i P_{ij}=1 \\ \forall i \sum_j P_{ij}=1 \end{smallmatrix}} n \sum_{ij} x_i^{2} y_{j}^{2} P_{ij} +2 \left(\sum_{i,j} x_i y_{j}P_{ij}\right)^{2} =\max_{\begin{smallmatrix}P_{ij}\in\{0,1\} \\ \forall j \sum_i P_{ij}=1 \\ \forall i \sum_j P_{ij}=1 \end{smallmatrix}} n \sum_{ij} x_i^{2} y_{j}^{2} P_{ij} +2 \sum_{ijkl}x_i x_k y_j y_l P_{ij} P_{kl} \\
&\stackrel{*}{=}\max_{\begin{smallmatrix}P_{ij}\in\{0,1\} \\ \forall j \sum_i P_{ij}=1 \\ \forall i \sum_j P_{ij}=1 \end{smallmatrix}} \sum_{ijkl} x_i^2 y_j^2 P_{ij} P_{kl} + 2 \sum_{ijkl}x_i y_j x_k y_l P_{ij} P_{kl}= \max_{\begin{smallmatrix}P_{ij}\in\{0,1\} \\ \forall j \sum_i P_{ij}=1 \\ \forall i \sum_j P_{ij}=1 \end{smallmatrix}} \sum_{ijkl} (x_i^2 y_j^2+ 2 x_i y_j x_k y_l)P_{ij} P_{kl}\\
\end{split}
\end{equation}
}
In (*) we used $\sum_{k,l} P_{k,l}=n$. We consider the following relaxation of \eqref{eq:QAP_with_P} as:
\begin{equation}
\max_{\GG \in DS} \sum_{ijkl} (x_i^2 y_j^2+ 2 x_i y_j x_k y_l)\pi_{ij} \pi_{kl}\\
\end{equation}
which is a maximization of a convex function. More precislely it is quadratic programming problem which Hessian is positive semi-definite $\xbf \xbf^{T}\kron \ybf \ybf^{T}$. Since the problem is a maximization of a convex function an optimal solution $\GG^{*}$ of \eqref{eq:qp_equiv} lies necassarily in the extremal points of $DS$ \cite{rockafellar-1970a} such that both \eqref{eq:qp_equiv} and \eqref{eq:qap_to_prove} are equivalent: if $\GG^{*}$ is an optimal solution it is necessarily supported on a $\sigma^{*} \in \Sn$ such that $\sigma^{*}$ is an optimal solution of \eqref{eq:qap_to_prove} and if $\sigma^{*} \in \Sn$ is an optimal solution of \eqref{eq:qap_to_prove} then $\GG^{*}$ defined by $\pi^{*}_{ij}=1$ if $j=\sigma^{*}(i)$ else $0$ for all $(i,j) \in \integ{n}^{2}$ is an optimal solution of \eqref{eq:qp_equiv}.

\end{proof}

\begin{lemma}
\label{lemma:gg_two_cases}
Let $\xbf,\ybf \in \mathcal D$. For $\sigma \in \Sn$ we note $C(\xbf,\ybf,\sigma)=\sum_{i}x_i y_{\sigma(i)}$. Let $\GG^{*}$ an optimal solution of \eqref{eq:qp_equiv} with $\sigma^{*}$ the permutation associated to $\GG^{*}$.

If $C(\xbf,\ybf,\sigma^{*})>0$ then $\GG^{*}=\mathbf{I_n}$ is the identiy and if $C(\xbf,\ybf,\sigma^{*})<0$ then $\GG^{*}=\mathbf{J_n}$ is the anti-identity. 
\end{lemma}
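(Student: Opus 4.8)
The plan is to build on the reduction supplied by Lemma~\ref{lemma:equivalence_both_probs}, which lets me work with the convex quadratic program \eqref{eq:qp_equiv} instead of the permutation problem. Introduce the two \emph{linear} functionals $s(\GG) = \sum_{ij} x_i^2 y_j^2 \pi_{ij}$ and $t(\GG) = \sum_{ij} x_i y_j \pi_{ij}$; since $\sum_{kl}\pi_{kl} = n$ on $DS$, the objective of \eqref{eq:qp_equiv} restricted to $DS$ is the convex function $F(\GG) = n\,s(\GG) + 2\,t(\GG)^2$. Because $\GG^{*} = \GG_{\sigma^{*}}$ globally maximises the convex $F$ over the polytope $DS$, the first-order condition $\froeb{\nabla F(\GG^{*})}{\GG - \GG^{*}} \le 0$ holds for every $\GG \in DS$; as the admissible directions preserve the total mass, this reads $\froeb{M}{\GG - \GG^{*}} \le 0$ with $M_{ij} = n x_i^2 y_j^2 + 4 C^{*} x_i y_j$ and $C^{*} = t(\sigma^{*}) = C(\xbf,\ybf,\sigma^{*})$. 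In other words, $\sigma^{*}$ is optimal for the \emph{linear assignment problem} with cost matrix $M$, the identity $\mathrm{Id}$ (resp. the reversal) corresponding to the matrices $\mathbf{I_n}$ (resp. $\mathbf{J_n}$) of the statement.

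Next I would run an exchange argument on the genuine objective $F$. Suppose $C^{*} > 0$ and that $\sigma^{*}$ contains an inversion at some $i < k$, i.e. $\sigma^{*}(i) = b$ and $\sigma^{*}(k) = a$ with $y_a < y_b$. Un-inverting this pair changes $t$ by $\delta = (x_i - x_k)(y_a - y_b) > 0$ and $s$ by $(x_i^2 - x_k^2)(y_a^2 - y_b^2)$, so the resulting variation of $F$ is
\begin{equation*}
\Delta F = n(x_i^2 - x_k^2)(y_a^2 - y_b^2) + 4 C^{*}\delta + 2\delta^2,
\end{equation*}
which optimality of $\GG^{*}$ forces to be non-positive. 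Since $4C^{*}\delta + 2\delta^2 > 0$, every inversion must come with a strictly negative first-term contribution; the same information is encoded in the Monge-type second difference of $M$, which factorises as $(x_i - x_k)(y_\alpha - y_\beta)\big[\,n(x_i + x_k)(y_\alpha + y_\beta) + 4 C^{*}\,\big]$. The goal is to promote these local constraints into a global contradiction, thereby showing $\sigma^{*}$ has no inversion, i.e. $\sigma^{*} = \mathrm{Id}$ and $\GG^{*} = \mathbf{I_n}$.

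Once the case $C^{*} > 0$ is settled, the case $C^{*} < 0$ follows by the symmetry $\ybf \mapsto \tilde\ybf$ with $\tilde y_j = -y_{n+1-j}$: the vector $\tilde\ybf$ stays centred and strictly increasing, the objective satisfies $\tilde G(\sigma) = G(\rho\sigma)$ where $\rho$ is the reversal, and the optimal correlation flips sign, so a positive-correlation identity optimum for the transformed data corresponds to a negative-correlation anti-identity optimum $\GG^{*} = \mathbf{J_n}$ for the original data.

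The step I expect to be the main obstacle is controlling the bracket $n(x_i + x_k)(y_\alpha + y_\beta) + 4C^{*}$. Because the points are centred, the products $(x_i + x_k)(y_\alpha + y_\beta)$ have indeterminate sign, so $M$ is \emph{not} supermodular in general and the rearrangement inequality cannot be invoked directly to declare the identity optimal for the assignment; equivalently, in the exchange computation the term $n(x_i^2 - x_k^2)(y_a^2 - y_b^2)$ may be negative and partially absorb the correlation gain $4C^{*}\delta$. Resolving this is where I would use the self-consistency $C^{*} = t(\sigma^{*})$ together with the rearrangement bounds $C(\xbf,\ybf,\text{anti-}\mathrm{Id}) \le t(\sigma) \le t(\mathrm{Id}) = C(\xbf,\ybf,\mathrm{Id})$ to pin down the optimal value of $C^{*}$ relative to these products, and where the strict orderings $x_1 < \dots < x_n$ and $y_1 < \dots < y_n$ become essential.
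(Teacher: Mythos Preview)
Your setup is correct and your first-order condition is a valid necessary condition: $\GG^*$ must maximise the linear form with cost matrix $M_{ij} = n x_i^2 y_j^2 + 4 C^* x_i y_j$ over $DS$. You also correctly diagnose the obstruction --- $M$ is not a Monge matrix because the bracket $n(x_i+x_k)(y_\alpha+y_\beta)+4C^*$ changes sign --- and the exchange computation you write out does not close. This leaves a genuine gap: the last paragraph only announces a strategy (``self-consistency plus rearrangement bounds'') without carrying it out, and it is not clear how those bounds alone would force the $s$-contribution to be dominated by the $t$-contribution in every inversion.

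The paper's route avoids your obstacle altogether. Rather than linearising $F(\GG)=n\,s(\GG)+2\,t(\GG)^2$ via its gradient, the paper keeps the objective in its asymmetric bilinear form $\sum_{ijkl}(x_i^2y_j^2+2x_iy_jx_ky_l)\pi_{ij}\pi_{kl}$ and invokes a QP$\to$LP reduction (Theorem~1.12 in \cite{murty_linear_1988}) that freezes only the \emph{first} copy of $\GG$ at $\GG^*$. In that one-sided linearisation the $x_i^2y_j^2$ block contributes $\bigl(\sum_{ij}x_i^2y_j^2\pi^*_{ij}\bigr)\bigl(\sum_{kl}\pi_{kl}\bigr)=n\,s(\GG^*)$, which is \emph{constant} in $\GG$; only the cross-term survives, so the LP collapses to maximising $C^*\,t(\GG)$. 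The (strict) rearrangement inequality then immediately pins down $\GG^*=\mathbf{I_n}$ when $C^*>0$ and $\GG^*=\mathbf{J_n}$ when $C^*<0$. By first passing to the symmetric form $n\,s+2t^2$ and then taking the gradient, you re-introduce the $n x_k^2 y_l^2$ term --- exactly what blocks your argument; working with the one-sided linearisation is the missing idea.
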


To prove this result we will rely on the following theorem which gives necessary conditions for being an optimal solution of \eqref{eq:qp_equiv}:

\begin{theo*}[Theorem 1.12 in \cite{murty_linear_1988}]
\label{murty_theo}
Consider the following (QP):
\begin{equation}
\label{eq:qp_general}
\begin{array}{cl}{\min _{\xbf} f(\xbf)} & {=\mathbf{c} \xbf+\xbf^{T} \mathbf{Q} \xbf} \\ {\text {s.t.}} & {\mathbf{A} \xbf = \mathbf{b}},\;  {\xbf \geq 0}\end{array}
\end{equation}
Then if $\xbf_{*}$ is an optimal solution of \eqref{eq:qp_general} it is an optimal solution of the following (LP):
\begin{equation}
\label{eq:lp_general}
\begin{array}{cl}{\min _{\xbf} f(\xbf)} & {=(\mathbf{c} + \xbf_{*}^{T}\mathbf{Q} )  \xbf} \\ {\text {s.t.}} & {\mathbf{A} \xbf = \mathbf{b}},\;  {\xbf \geq 0}\end{array}
\end{equation}
\end{theo*}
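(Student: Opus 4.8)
The plan is to reduce the claim to the standard first-order necessary optimality condition for a differentiable objective over a convex feasible set, and then to exploit the fact that for a \emph{linear} objective this condition is not merely necessary but exactly characterizes optimality. Write $K = \{\xbf : \mathbf{A}\xbf = \mathbf{b}, \ \xbf \geq 0\}$ for the common feasible polyhedron of the two problems; it is convex, so whenever $\xbf_{*}, \ybf \in K$ the whole segment $\xbf_{*} + t(\ybf - \xbf_{*})$, $t \in [0,1]$, remains feasible. This convexity of $K$ is the only structural property that the argument will use.

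First I would derive the variational inequality satisfied by $\xbf_{*}$. Fix any $\ybf \in K$ and set $\mathbf{d} = \ybf - \xbf_{*}$. Expanding the quadratic objective along the feasible segment gives
\begin{equation*}
f(\xbf_{*} + t\mathbf{d}) = f(\xbf_{*}) + t\,\langle \nabla f(\xbf_{*}), \mathbf{d}\rangle + t^{2}\,\mathbf{d}^{T}\mathbf{Q}\,\mathbf{d},
\end{equation*}
where, with the symmetric $\mathbf{Q}$ and the normalization used here, the gradient transposes to $\nabla f(\xbf_{*})^{T} = \mathbf{c} + \xbf_{*}^{T}\mathbf{Q}$, which is precisely the linear coefficient appearing in the LP. Since $\xbf_{*}$ is a global minimizer of $f$ over $K$, we have $f(\xbf_{*} + t\mathbf{d}) \geq f(\xbf_{*})$ for all $t \in (0,1]$; dividing by $t > 0$ and letting $t \to 0^{+}$ kills the $t^{2}$ term and yields $\langle \nabla f(\xbf_{*}), \mathbf{d}\rangle \geq 0$, that is
\begin{equation*}
(\mathbf{c} + \xbf_{*}^{T}\mathbf{Q})\,\ybf \geq (\mathbf{c} + \xbf_{*}^{T}\mathbf{Q})\,\xbf_{*} \qquad \text{for every } \ybf \in K.
\end{equation*}

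This inequality says exactly that $\xbf_{*}$ minimizes the linear functional $\ybf \mapsto (\mathbf{c} + \xbf_{*}^{T}\mathbf{Q})\,\ybf$ over $K$, which is the objective and feasible set of the stated LP; hence $\xbf_{*}$ is an optimal solution of the LP, as claimed. I expect the main obstacle to be bookkeeping rather than conceptual: one must be careful to match $\nabla f(\xbf_{*})$ to the stated linear coefficient $\mathbf{c} + \xbf_{*}^{T}\mathbf{Q}$ — this is where the symmetry of $\mathbf{Q}$ and the $\tfrac{1}{2}$-normalization of the quadratic term enter, exactly as in the companion statement of Theorem \ref{equivalence_theo} — and to rely only on $\xbf_{*}$ being a global minimum over the convex set $K$, so that the limiting directional-derivative argument is valid simultaneously for all feasible $\ybf$ and no local/KKT analysis or constraint qualification is needed. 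Note finally that only the forward implication is asserted: a minimizer of the linearized LP need not solve the original QP.
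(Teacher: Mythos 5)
Your argument is correct, and it is the standard proof of this fact: note that the paper itself gives no proof of this statement (it is quoted as Theorem 1.12 of Murty's book), and the argument in that reference is precisely the directional-derivative one you give — global optimality of $\xbf_{*}$ over the convex polyhedron $K=\{\xbf : \mathbf{A}\xbf=\mathbf{b},\ \xbf\geq 0\}$ yields $\langle \nabla f(\xbf_{*}), \ybf-\xbf_{*}\rangle \geq 0$ for all $\ybf\in K$, and for a linear objective this variational inequality is exactly optimality over $K$, with no constraint qualification required. One point that you mention only parenthetically deserves to be made explicit: as typeset in the paper, the QP objective is $\mathbf{c}\xbf+\xbf^{T}\mathbf{Q}\xbf$ with no factor $\tfrac{1}{2}$, and under that normalization (with $\mathbf{Q}$ symmetric) the gradient is $\mathbf{c}^{T}+2\mathbf{Q}\xbf_{*}$, so the linearized coefficient would have to be $\mathbf{c}+2\xbf_{*}^{T}\mathbf{Q}$; with the coefficient $\mathbf{c}+\xbf_{*}^{T}\mathbf{Q}$ the statement is literally false without the $\tfrac{1}{2}$ — in dimension one, $f(x)=x^{2}-2x$ over $x\geq 0$ has $x_{*}=1$, while the linearized objective $(-2+1)x=-x$ is unbounded below on $x\geq 0$. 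You resolve this correctly by adopting the $\tfrac{1}{2}$-normalization $f(\xbf)=\mathbf{c}\xbf+\tfrac{1}{2}\xbf^{T}\mathbf{Q}\xbf$, which is Murty's original setting and matches the companion statement adapted from Konno; but then the quadratic remainder in your expansion should read $\tfrac{t^{2}}{2}\,\mathbf{d}^{T}\mathbf{Q}\,\mathbf{d}$ rather than $t^{2}\,\mathbf{d}^{T}\mathbf{Q}\,\mathbf{d}$ — a harmless slip, since that term is annihilated in the limit $t\to 0^{+}$ regardless of its coefficient. With that bookkeeping fixed, your proof is complete and is the intended one.
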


\begin{proof}{Of Lemma \ref{lemma:gg_two_cases}.}
Applying Theorem \ref{murty_theo} in our case gives that if $\GG^{*}$ is a solution of \eqref{eq:qp_equiv} it necessarily a solution of the following (LP):
\begin{equation}
\max_{\GG \in DS} \sum_{ijkl} (x_i^2 y_j^2+ 2 x_i y_j x_k y_l)\pi^{*}_{ij} \pi_{kl}=n \sum_{ij} x_i^{2} y_{j}^{2} \pi^{*}_{ij}+  \max_{\GG \in DS} 2(\sum_{ij} x_i y_j \pi^{*}_{ij}) (\sum_{kl} x_k y_l \pi_{kl})
\end{equation}

Since $\GG^{*}$ is supported on a permutation $\sigma^{*}$ this gives:
\begin{equation}
\tag{LP}
n \sum_{i} x_i^{2} y_{\sigma^{*}(i)}^{2}+ \max_{\GG \in DS} C(\xbf,\ybf,\sigma^{*})\sum_{kl} x_k y_l \pi_{kl}
\end{equation}
where $C(\xbf,\ybf,\sigma^{*})=2\left(\sum_{i}x_i y_{\sigma^{*}(i)}\right)$. 
\begin{itemize}
\item If $C(\xbf,\ybf,\sigma^{*})>0$ then this (LP) has a unique solution which is the identity $\GG^{*}=\mathbf{I_n}$. This is a consequence of the Rearrangement Inequality (see Memo \ref{memo:rearr}) which states that for all permutations $\sum_{i} x_{i} y_{\sigma(i)} < \sum_{i} x_{i} y_{i}$ (since $x_i$ and $y_j$ are distinct). Using the fact that an optimal solution of (LP) is supported on a permutation concludes.
\item If $C(\xbf,\ybf,\sigma^{*})<0$ then the anti-identity is the unique optimum with the same reasoning since $\sum_i x_i y_{n+1-i} < \sum_{i} x_{i} y_{\sigma(i)}$ for all permutation because of Rearrangement Inequality.
\end{itemize}
\end{proof}

\begin{figure*}[!b]
\begin{memo}[Rearrangement Inequality]
\label{memo:rearr}
Let $x_1 \leq \cdots \leq x_n$, $y_1 \leq \cdots \leq y_n$ then we have:
\begin{equation}
\forall \sigma \in \Sn, \ \sum_{i} x_{i}y_{n+1-i} \leq \sum_{i} x_{i}y_{\sigma(i)} \leq \sum_{i}x_{i}y_{i}
\end{equation}
If the numbers are different then the lower bound (resp upper bound) is attained only for the permutation which reverses the order (resp for the identiy permutation)
\end{memo}
\end{figure*}

Using both results we can prove the following proposition which is the main ingredient to prove Theorem \ref{qap}:

\begin{prop}
\label{prop:twocases}
Let $\xbf,\ybf \in \mathcal{D}$ and $\sigma^{*}$ a solution of \eqref{eq:qap_to_prove} \ie\ $\sigma^{*} \in \argmax_{\sigma \in \Sn} g(\xbf,\ybf,\sigma)$. For $\sigma \in \Sn$ we note $C(\xbf,\ybf,\sigma)=\sum_{i}x_i y_{\sigma(i)}$. 

If $C(\xbf,\ybf,\sigma^{*})>0$ then $\sigma^{*}$ is the identiy permutation $\sigma^{*}(i)=i$ and if $C(\xbf,\ybf,\sigma^{*})<0$ then $\sigma^{*}$ is the anti-identity permutation $\sigma^{*}(i)=n+1-i$ for all $i \in \integ{n}$.
\end{prop}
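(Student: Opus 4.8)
The plan is to obtain this proposition as a direct composition of the two preceding lemmas, so that no genuinely new computation is required. The starting point is a permutation $\sigma^{*}$ solving the quadratic assignment problem \eqref{eq:qap_to_prove}, and the goal is to pin down $\sigma^{*}$ from the sign of $C(\xbf,\ybf,\sigma^{*})$. The key observation is that Lemma \ref{lemma:equivalence_both_probs} lets me pass freely between \eqref{eq:qap_to_prove} and its doubly-stochastic relaxation \eqref{eq:qp_equiv}, while Lemma \ref{lemma:gg_two_cases} already classifies the optimizers of \eqref{eq:qp_equiv} according to the sign of $C$.

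Concretely, I would first invoke Lemma \ref{lemma:equivalence_both_probs}: since $\sigma^{*}$ is optimal for \eqref{eq:qap_to_prove}, the associated permutation matrix $\GG_{\sigma^{*}}$ (with $\pi_{\sigma^{*}}(i,j)=1$ iff $j=\sigma^{*}(i)$, and $0$ otherwise) is an optimal solution of \eqref{eq:qp_equiv}, and its associated permutation is exactly $\sigma^{*}$. I would then apply Lemma \ref{lemma:gg_two_cases} to $\GG^{*}:=\GG_{\sigma^{*}}$. When $C(\xbf,\ybf,\sigma^{*})>0$ the lemma forces $\GG_{\sigma^{*}}=\mathbf{I_n}$, hence $\sigma^{*}$ is the identity; when $C(\xbf,\ybf,\sigma^{*})<0$ it forces $\GG_{\sigma^{*}}=\mathbf{J_n}$, hence $\sigma^{*}$ is the anti-identity. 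This settles both cases of the statement.

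The substantive work is not in this proposition but was already discharged inside Lemma \ref{lemma:gg_two_cases}, and it is worth recalling where the difficulty lives, since that is the only place one needs to be careful. The crucial step there is a first-order optimality reduction (via the QP--LP correspondence of Murty): an optimizer of the convex maximization \eqref{eq:qp_equiv} is also optimal for the linearized problem obtained by freezing the quadratic coupling term at $\GG^{*}$, whose linear coefficient is governed by $C(\xbf,\ybf,\sigma^{*})$. The uniqueness of the identity / anti-identity as solutions of that linear program then comes from the \emph{strict} Rearrangement Inequality, valid precisely because $\xbf,\ybf \in \mathcal{D}$ have strictly ordered, hence pairwise distinct, coordinates. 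The only thing I would double-check is the trivial but essential compatibility point that the permutation extracted from $\GG_{\sigma^{*}}$ in Lemma \ref{lemma:gg_two_cases} coincides with $\sigma^{*}$, which is immediate from the construction of $\GG_{\sigma^{*}}$. Finally, I would note that the borderline case $C(\xbf,\ybf,\sigma^{*})=0$ is deliberately excluded: there the linear term vanishes and the optimizer is selected by the residual quadratic term $n\sum_i x_i^2 y_{\sigma(i)}^2$, whose analysis via rearrangement on the squared coordinates is what remains to be addressed when completing Theorem \ref{qap}.
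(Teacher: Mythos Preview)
Your proposal is correct and matches the paper's proof exactly: the paper also passes from $\sigma^{*}$ to the permutation matrix $\GG_{\sigma^{*}}$ via Lemma~\ref{lemma:equivalence_both_probs}, then invokes Lemma~\ref{lemma:gg_two_cases} to force $\GG_{\sigma^{*}}=\mathbf{I_n}$ or $\mathbf{J_n}$ according to the sign of $C(\xbf,\ybf,\sigma^{*})$. Your additional remarks on where the real work lies and on the excluded case $C=0$ are accurate and anticipate how the paper handles Theorem~\ref{qap} via the perturbation in Lemma~\ref{lemma:endlemma}.
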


\begin{proof}
Let $\sigma^{*}$ be an optimal solution of \eqref{eq:qap_to_prove} and $\GG^{*}$ defined by $\pi^{*}_{ij}=1$ if $j=\sigma^{*}(i)$ else $0$. By Lemma \ref{lemma:equivalence_both_probs} we know that $\GG^{*}$ is an optimal solution of \eqref{eq:qp_equiv}. Consider the case $C(\xbf,\ybf,\sigma^{*})>0$. Suppose that $\sigma^{*}$ is not the identity, then $\GG^{*} \neq \mathbf{I_n}$ which is not possible by Lemma \ref{lemma:gg_two_cases} since $\GG^{*}$ is an optimal solution of \eqref{eq:qp_equiv}. Same applies for $C(\xbf,\ybf,\sigma^{*})<0$ and the anti-identity.

\end{proof}

To state that \eqref{eq:qap_to_prove} admits the identity or the anti-identity as optimal permutations we will rely on the previous proposition and on the continuity of the loss $g$:

\begin{lemma}[Continuity of $g$]
\label{lemma:continuity_Z}
Let $\xbf,\ybf \in \mathcal{D}$ fixed. There exists $\epsilon_{x,y}>0$ such that for all $\|\mathbf{h}\|<\epsilon_{x,y}$ we have:
\begin{equation} 
\argmax_{\sigma \in \Sn} g(\xbf+\mathbf{h},\ybf,\sigma) \subset \argmax_{\sigma \in \Sn} g(\xbf,\ybf,\sigma)
\end{equation}
\end{lemma}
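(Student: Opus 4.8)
The plan is to exploit the fact that $\Sn$ is a \emph{finite} set together with the continuity of $\xbf \mapsto g(\xbf,\ybf,\sigma)$ for each fixed $\sigma$. Since for fixed $\ybf$ and $\sigma$ the map $g(\cdot,\ybf,\sigma)$ is a polynomial in the entries of $\xbf$, it is continuous, and because there are only finitely many permutations this continuity is automatically uniform over $\sigma \in \Sn$. The whole argument then reduces to a standard ``stability of the $\argmax$ under small perturbations'' reasoning: an optimal permutation cannot be dethroned by a strictly suboptimal one when the perturbation is small compared to the gap separating them.

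Concretely, first I would set $M = \max_{\sigma \in \Sn} g(\xbf,\ybf,\sigma)$ and $S^{*}=\argmax_{\sigma \in \Sn} g(\xbf,\ybf,\sigma)$. If $S^{*}=\Sn$ the inclusion is trivial, so I assume otherwise and put $m=\max_{\sigma \notin S^{*}} g(\xbf,\ybf,\sigma)$, so that $\delta \stackrel{def}{=} M-m>0$ by definition of $S^{*}$. Next, using continuity I would pick $\epsilon_{x,y}>0$ so that $|g(\xbf+\mathbf{h},\ybf,\sigma)-g(\xbf,\ybf,\sigma)|<\tfrac{\delta}{2}$ for every $\sigma \in \Sn$ and every $\|\mathbf{h}\|<\epsilon_{x,y}$ (taking the minimum of the finitely many local moduli of continuity). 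Then for any $\sigma^{*}\in S^{*}$ one has $g(\xbf+\mathbf{h},\ybf,\sigma^{*})>M-\tfrac{\delta}{2}$, while for any $\sigma \notin S^{*}$ one has $g(\xbf+\mathbf{h},\ybf,\sigma)<m+\tfrac{\delta}{2}=M-\tfrac{\delta}{2}$. Hence every suboptimal permutation stays strictly below every optimal one after perturbation, which forces $\argmax_{\sigma} g(\xbf+\mathbf{h},\ybf,\sigma)\subset S^{*}$, the desired conclusion.

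There is essentially no deep obstacle here; the only points requiring a little care are bookkeeping. One should note that the lemma makes no claim that $\xbf+\mathbf{h}$ remains in $\mathcal{D}$ --- and indeed it need not --- but this causes no problem, since $g$ and its $\argmax$ over $\Sn$ are well defined for arbitrary real vectors, so the gap/continuity argument is unaffected. The main thing to be explicit about is that the uniform threshold $\tfrac{\delta}{2}$ can be achieved simultaneously for all $\sigma$ precisely because $\Sn$ is finite, which is exactly what makes a single radius $\epsilon_{x,y}$ work.
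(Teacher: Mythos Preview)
Your proof is correct and follows essentially the same approach as the paper: both exploit the finiteness of $\Sn$ and the continuity of $g(\cdot,\ybf,\sigma)$ to run a gap argument showing that strictly suboptimal permutations remain strictly suboptimal after a small perturbation. Your version is slightly more streamlined in that you work with the single global gap $\delta=M-m$ from the outset, whereas the paper fixes a pair $(\sigma^*,\tau)$, defines a pair-specific gap $\eta$, and only at the end takes a minimum of the continuity radii over all such pairs; but this is a cosmetic difference, not a different route.
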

\begin{proof}
Let $\xbf,\ybf \in \mathcal D$, $\sigma^{*} \in \argmax_{\sigma \in \Sn} g(\xbf,\ybf,\sigma)$ and $\tau$ any permutation in $\Sn$ such that $\tau \notin \argmax_{\sigma \in \Sn} g(\xbf,\ybf,\sigma)$. Then we have $g(\xbf,\ybf,\sigma^{*})>g(\xbf,\ybf,\tau)$. Let $\eta=g(\xbf,\ybf,\sigma^{*})-g(\xbf,\ybf,\tau)>0$. For all permutation $\beta$ we have that $g(.,\ybf,\beta)$ is continuous. In this way:
\begin{equation}
\label{eq:continuityh}
\begin{split}
&\forall \beta \in \Sn, \exists \epsilon_{\xbf,\ybf}(\beta,\sigma^{*},\tau)>0, \ \forall \|\mathbf{h}\|<\epsilon_{\xbf,\ybf}(\beta,\sigma^{*},\tau), \ |g(\xbf+\mathbf{h},\ybf,\beta)-g(\xbf,\ybf,\beta)|< \frac{\eta}{4} \\
\end{split}
\end{equation}
Let $\mathbf{h}\in \R^{n}$ such that $\|\mathbf{h}\|<\underset{(\beta,\sigma,\tau') \in (\Sn)^{3}}{\min}\epsilon_{\xbf,\ybf}(\beta,\sigma,\tau')$. By \eqref{eq:continuityh} applied to $\sigma^{*}$ and $\tau$:
\begin{equation}
\begin{split}
g(\xbf+\mathbf{h},\ybf,\sigma^{*})-g(\xbf+\mathbf{h},\ybf,\tau)&=g(\xbf+\mathbf{h},\ybf,\sigma^{*})-g(\xbf,\ybf,\sigma^{*})\\
&+g(\xbf,\ybf,\sigma^{*})-g(\xbf,\ybf,\tau)+g(\xbf,\ybf,\tau)-g(\xbf+\mathbf{h},\ybf,\tau) \\
&>-\frac{\eta}{4}+\eta-\frac{\eta}{4} \\
&= \frac{\eta}{2}>0
\end{split}
\end{equation}
So that $g(\xbf+\mathbf{h},\ybf,\sigma^{*})>g(\xbf+\mathbf{h},\ybf,\tau)$ and in this way $\tau \notin \argmax_{\sigma \in \Sn} g(\xbf+\mathbf{h},\ybf,\sigma)$ because $\sigma^{*}$ leads to a striclty better cost. Overall we have proven that for any permutation $\tau$, if $\tau \notin \argmax_{\sigma \in \Sn} g(\xbf,\ybf,\sigma)$ and $\|\mathbf{h}\|<\underset{(\beta,\sigma,\tau') \in (\Sn)^{3}}{\min}\epsilon_{\xbf,\ybf}(\beta,\sigma,\tau')$ then $\tau \notin \argmax_{\sigma \in \Sn} g(\xbf+\mathbf{h},\ybf,\sigma)$ which proves that $\argmax_{\sigma \in \Sn} g(\xbf+\mathbf{h},\ybf,\sigma) \subset \argmax_{\sigma \in \Sn} g(\xbf,\ybf,\sigma)$.
\end{proof}

Using the previous lemma we can now prove the following result:

\begin{lemma}
\label{lemma:endlemma}
Let $\xbf,\ybf \in \mathcal{D}$ fixed. There exists $\epsilonb_{0} \in \R^{n}$ such that:
\begin{equation}
\begin{split}
&\argmax_{\sigma \in \Sn} g(\xbf+\epsilonb_{0},\ybf,\sigma) \subset \argmax_{\sigma \in \Sn} g(\xbf,\ybf,\sigma) \\
&\argmax_{\sigma \in \Sn} g(\xbf+\epsilonb_{0},\ybf,\sigma) \subset \{Id,anti-Id\}
\end{split}
\end{equation}
\end{lemma}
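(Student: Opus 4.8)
The plan is to bridge the two results just established — the upper semicontinuity of the argmax from Lemma \ref{lemma:continuity_Z} and the dichotomy of Proposition \ref{prop:twocases} — by means of a small generic perturbation that destroys the degenerate case $C(\xbf,\ybf,\sigma)=0$. First I would invoke Lemma \ref{lemma:continuity_Z} to obtain $\epsilon_{\xbf,\ybf}>0$ such that $\argmax_{\sigma}g(\xbf+\mathbf{h},\ybf,\sigma)\subset\argmax_{\sigma}g(\xbf,\ybf,\sigma)$ for every $\mathbf{h}$ with $\|\mathbf{h}\|<\epsilon_{\xbf,\ybf}$. This already secures the first of the two claimed inclusions as soon as $\|\epsilonb_0\|<\epsilon_{\xbf,\ybf}$, independently of the finer choice of $\epsilonb_0$.

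The heart of the argument is to pick $\epsilonb_0$ so that the perturbed data stays in $\mathcal{D}$ and so that no optimal permutation of the perturbed problem is degenerate. I would work in the linear subspace $V=\{\mathbf{v}\in\R^{n}\,:\,\sum_i v_i=0\}$, since keeping $\sum_i (x_i+v_i)=0$ (recall $\sum_i x_i=0$) is exactly the condition $\mathbf{v}\in V$, while the strict ordering $x_1+v_1<\dots<x_n+v_n$ is an open condition satisfied for all $\mathbf{v}$ of small enough norm. For each $\sigma\in\Sn$ one writes $C(\xbf+\mathbf{v},\ybf,\sigma)=C(\xbf,\ybf,\sigma)+\sum_i v_i\, y_{\sigma(i)}$. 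The key observation is that the vector $(y_{\sigma(i)})_i$ itself lies in $V$, because $\sum_i y_{\sigma(i)}=\sum_j y_j=0$ as $\ybf\in\mathcal{D}$, and it is nonzero since $\ybf\neq 0$; hence $H_\sigma=\{\mathbf{v}\in V:C(\xbf+\mathbf{v},\ybf,\sigma)=0\}$ is a genuine affine hyperplane of $V$. The union $\bigcup_{\sigma\in\Sn}H_\sigma$ is a finite union of hyperplanes, hence nowhere dense in $V$, so any ball around $0$ in $V$ contains a point $\epsilonb_0$ avoiding all of them.

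Fixing such an $\epsilonb_0$ with $\|\epsilonb_0\|$ small enough to preserve the ordering and to be below $\epsilon_{\xbf,\ybf}$, I would conclude as follows: $\xbf+\epsilonb_0\in\mathcal{D}$ by construction, the inclusion $\argmax_{\sigma}g(\xbf+\epsilonb_0,\ybf,\sigma)\subset\argmax_{\sigma}g(\xbf,\ybf,\sigma)$ holds by Lemma \ref{lemma:continuity_Z}, and for any $\sigma^{*}\in\argmax_{\sigma}g(\xbf+\epsilonb_0,\ybf,\sigma)$ one has $C(\xbf+\epsilonb_0,\ybf,\sigma^{*})\neq 0$ since $\epsilonb_0\notin H_{\sigma^{*}}$. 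Applying Proposition \ref{prop:twocases} to the pair $(\xbf+\epsilonb_0,\ybf)\in\mathcal{D}$ then forces $\sigma^{*}$ to be either the identity or the anti-identity, which is exactly the second inclusion.

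I expect the main obstacle to be the bookkeeping of the three simultaneous constraints on $\epsilonb_0$: remaining on the sum-zero hyperplane so that $\xbf+\epsilonb_0$ keeps its zero mean, keeping the norm small enough both to preserve the strict ordering (to stay in $\mathcal{D}$) and to fall within the radius $\epsilon_{\xbf,\ybf}$ of Lemma \ref{lemma:continuity_Z}, and avoiding the finitely many hyperplanes $H_\sigma$. The only genuinely non-formal point is recognizing that $(y_{\sigma(i)})_i\in V\setminus\{0\}$, which guarantees that each $H_\sigma$ is a proper hyperplane of $V$ rather than all of $V$ or empty; once this is in place, the nowhere-density of a finite union of hyperplanes makes the selection of $\epsilonb_0$ routine.
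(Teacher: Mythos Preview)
Your proof is correct and follows a genuinely different route from the paper's. Both arguments hinge on Lemma \ref{lemma:continuity_Z} for the first inclusion and on Proposition \ref{prop:twocases} for the second, but they differ in how they rule out the degenerate case $C=0$.

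The paper chooses a \emph{single explicit} perturbation $\epsilonb_0=(\zeta,-\zeta,0,\dots,0)$ and then performs a case split on the optimal $\sigma^{*}_{\epsilonb_0}$: if $C(\xbf,\ybf,\sigma^{*}_{\epsilonb_0})=0$, the perturbation contributes the nonzero term $\zeta\bigl(y_{\sigma^{*}(1)}-y_{\sigma^{*}(2)}\bigr)$, so $C$ is nonzero at the perturbed point and Proposition \ref{prop:twocases} applies there; if $C(\xbf,\ybf,\sigma^{*}_{\epsilonb_0})\neq 0$, one invokes Proposition \ref{prop:twocases} directly at the \emph{unperturbed} pair $(\xbf,\ybf)$, using that $\sigma^{*}_{\epsilonb_0}$ is also optimal there. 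Thus the paper never needs $C(\xbf+\epsilonb_0,\ybf,\sigma)\neq 0$ for \emph{all} $\sigma$ simultaneously.

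You instead take a \emph{generic} perturbation in the zero-mean hyperplane $V$ that avoids all $n!$ affine hyperplanes $H_\sigma$ at once; this makes $C(\xbf+\epsilonb_0,\ybf,\sigma)\neq 0$ for every permutation, eliminating the case analysis entirely and letting you apply Proposition \ref{prop:twocases} uniformly at the perturbed point. Your observation that $(y_{\sigma(i)})_i\in V\setminus\{0\}$ (hence each $H_\sigma$ is a proper hyperplane of $V$) is exactly the right ingredient. The trade-off: your argument is cleaner and avoids the dichotomy, while the paper's is fully constructive and writes down an explicit $\epsilonb_0$.
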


\begin{proof}
Let $\xbf,\ybf \in \mathcal{D}$. We consider $\epsilonb_{0}=(\zeta,-\zeta,0,\dots,0)$ with $\zeta >0$ small enough to ensure $\zeta<\frac{x_2-x_1}{2}$ and $\|\epsilonb_{0}\|< \epsilon_{x,y}$ defined in Lemma \ref{lemma:continuity_Z}. We have $\xbf+\epsilonb_0,\ybf \in \mathcal{D}$ since $\sum_{i} (x_i +\epsilon_0(i))=\sum_{i} x_i+ \zeta-\zeta = 0$ and $x_1+\epsilon_0(1)<\dots<x_N+\epsilon_0(N)$ since $x_1+\zeta < x_2 - \zeta$. 

Let $\sigma_{\epsilonb_{0}}^{*} \in \argmax_{\sigma \in \Sn} g(\xbf+\epsilonb_{0},\ybf,\sigma)$. By Lemma \ref{lemma:continuity_Z} we have $\sigma_{\epsilonb_{0}}^{*} \in \argmax_{\sigma \in \Sn} g(\xbf,\ybf,\sigma)$. 

Moreover we have $C(\xbf+\epsilonb_0,\ybf,\sigma_{\epsilonb_{0}}^{*})=\sum_{i}x_{i}y_{\sigma_{\epsilonb_{0}}^{*}(i)}+\zeta(y_{\sigma_{\epsilonb_{0}}^{*}(0)}-y_{\sigma_{\epsilonb_{0}}^{*}(1)})$.

\begin{itemize}
\item If $\sum_{i}x_{i}y_{\sigma_{\epsilonb_{0}}^{*}(i)}=0$ then $C(\xbf+\epsilonb_0,\ybf,\sigma_{\epsilonb_{0}}^{*})=\zeta(y_{\sigma_{\epsilonb_{0}}^{*}(0)}-y_{\sigma_{\epsilonb_{0}}^{*}(1)}) \neq 0$ since all $y_i$ are distinct. We can apply Proposition \ref{prop:twocases} with $\xbf+\epsilonb_0,\ybf \in \mathcal{D}$ to conclude that $\sigma_{\epsilonb_{0}}^{*}$ is wether the identity or the anti-identity.
\item If $\sum_{i}x_{i}y_{\sigma_{\epsilonb_{0}}^{*}(i)} \neq 0$ then $\sigma_{\epsilonb_{0}}^{*} \in \argmax_{\sigma \in \Sn} g(\xbf,\ybf,\sigma)$ and $C(\xbf,\ybf,\sigma_{\epsilonb_{0}}^{*}) \neq 0$ so by Proposition \ref{prop:twocases} with $\xbf,\ybf \in \mathcal{D}$ we can conclude that $\sigma_{\epsilonb_{0}}^{*}$ is wether the identity or the anti-identity.
\end{itemize}

\end{proof}

\begin{corr}[Theorem \ref{qap} is valid]
\label{prop:zerocase}
Let $\xbf,\ybf \in \mathcal{D}$. The identity or the anti-identity is an optimal solution of \eqref{eq:qap_to_prove}
\end{corr}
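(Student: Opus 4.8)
The plan is to read the corollary off directly from Lemma \ref{lemma:endlemma}, which already carries all the substance; the statement itself is a one-line combination of its two inclusions. First I would record the trivial but necessary fact that $\Sn$ is finite, so $\argmax_{\sigma \in \Sn} g(\xbf+\epsilonb_{0},\ybf,\sigma)$ is nonempty, where $\epsilonb_{0}$ is the specific perturbation produced by Lemma \ref{lemma:endlemma} for the fixed $(\xbf,\ybf)\in\mathcal{D}$. I then fix an arbitrary permutation $\sigma^{*}$ in this (nonempty) set.

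Next I would invoke the two conclusions of Lemma \ref{lemma:endlemma} in turn. By its first inclusion, $\sigma^{*}\in\argmax_{\sigma \in \Sn} g(\xbf,\ybf,\sigma)$, i.e. $\sigma^{*}$ is an optimal solution of \eqref{eq:qap_to_prove} at the original, unperturbed point $(\xbf,\ybf)$. By its second inclusion, $\sigma^{*}\in\{Id,anti\text{-}Id\}$. Combining the two, $\sigma^{*}$ is simultaneously an optimizer of \eqref{eq:qap_to_prove} and equal to either the identity or the anti-identity, which is precisely the assertion of Corollary \ref{prop:zerocase}. No computation beyond this juxtaposition is required.

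The genuine obstacle is entirely absorbed into the cited lemmas, so the main thing worth making explicit is \emph{why} Proposition \ref{prop:twocases} does not by itself close the argument: that proposition pins down an optimal $\sigma^{*}$ only when the correlation $C(\xbf,\ybf,\sigma^{*})=\sum_i x_i y_{\sigma^{*}(i)}$ is nonzero, and it leaves the degenerate tie $C(\xbf,\ybf,\sigma^{*})=0$ entirely undetermined. Lemma \ref{lemma:endlemma} is exactly the device that defeats this degeneracy: the infinitesimal shift $\epsilonb_{0}=(\zeta,-\zeta,0,\dots,0)$ with $\zeta>0$ small keeps $\xbf+\epsilonb_{0}$ inside $\mathcal{D}$ (the perturbation preserves both the strict ordering and the zero-sum constraint), does not enlarge the set of optimizers thanks to the continuity estimate of Lemma \ref{lemma:continuity_Z}, and perturbs the relevant correlation off zero so that Proposition \ref{prop:twocases} becomes applicable to $(\xbf+\epsilonb_{0},\ybf)$. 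Since that perturbed optimizer is also optimal for the original $(\xbf,\ybf)$, it transfers the $\{Id,anti\text{-}Id\}$ classification back to the unperturbed problem, completing the proof.
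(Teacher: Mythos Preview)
Your proof is correct and follows essentially the same approach as the paper: pick an optimizer at the perturbed point $(\xbf+\epsilonb_0,\ybf)$ furnished by Lemma~\ref{lemma:endlemma}, then use the two inclusions of that lemma to conclude it lies in $\{Id,\text{anti-}Id\}$ and is also optimal at the original $(\xbf,\ybf)$. Your additional commentary on why the perturbation is needed to bypass the degenerate case $C(\xbf,\ybf,\sigma^*)=0$ of Proposition~\ref{prop:twocases} is a helpful clarification but does not change the argument.
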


\begin{proof}
Let $\xbf,\ybf \in \mathcal{D}$. We consider $\epsilonb_{0}$ defined in Lemma \ref{lemma:endlemma} and $\sigma_{\epsilonb_{0}}^{*} \in \argmax_{\sigma \in \Sn} g(\xbf+\epsilonb_{0},\ybf,\sigma)$. Then by Lemma \ref{lemma:endlemma} $\sigma_{\epsilonb_{0}}^{*}$ is wether the identity or the anti-identity. Moreover by Lemma \ref{lemma:endlemma} $\sigma_{\epsilonb_{0}}^{*} \in \argmax_{\sigma \in \Sn} g(\xbf,\ybf,\sigma)$ so it is an optimal solution of \eqref{eq:qap_to_prove}. This concludes that the identity or the anti-identity is an optimal solution of \eqref{eq:qap_to_prove} which proves Theorem \ref{qap}.
\end{proof}

\subsection{Proof of Theorem \ref{sovable_gw} -- Equivalence between $GM$ and $GW$ in the discrete case }
\label{proof:eq_GM_GW}
This paragraph aims at proving the equivalence between $GM$ and $GW$. We recall the theorem: 

\begin{theo*}[Equivalence between $\gw$ and $\gm$ for discrete measures]
Let $\mu \in \P(\R^{p})$, $\nu \in \P(\R^{q})$ be discrete probability measures with same number of atoms and uniform weights, \ie\ $\mu=\frac{1}{n}\sum_{i=1}^{n}\delta_{\xbf_i},\nu=\frac{1}{n}\sum_{i=1}^{n}\delta_{\ybf_i}$ with $\xbf_i \in \R^{p},\ybf_i \in \R^{q}$.  For $\xbf \in \R^{p}$ we note $\|\xbf\|_{2,p}=\sqrt{\sum_{i=1}^{p} |x_{i}|^{2}}$ the $\ell_{2}$ norm on $\R^{p}$ (same for $\R^{q}$). Let $c_{\Xcal}(\xbf,\xbf')=\|\xbf-\xbf'\|_{2,p}^{2}$ , $c_{\Ycal}(\ybf,\ybf')=\|\ybf-\ybf'\|_{2,q}^{2}$. Then: 
\begin{equation}
\gw_{2}(c_{\Xcal},c_{\Ycal},\mu,\nu)=\gm_{2}(c_{\Xcal},c_{\Ycal},\mu,\nu)
\end{equation}

Moreover if $p=q=1$, \ie\ $c_{\Xcal}(x,x')=c_{\Ycal}(x,x')=|x-x'|^{2}$ for $x,x' \in \R$, and if $x_{1} < \dots < x_{n}$ and $y_{1} < \dots < y_{n}$ the optimal values are achieved by considering either the identity or the anti-identity permutation.
\end{theo*}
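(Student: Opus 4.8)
The plan is to prove the equivalence $\gw_2(c_{\Xcal},c_{\Ycal},\mu,\nu)=\gm_2(c_{\Xcal},c_{\Ycal},\mu,\nu)$ by reducing the discrete GW problem to the GM problem via the concavity of the underlying quadratic program. Recall that with uniform weights and the same number of atoms, the set $\couplingset(\tfrac1n\one_n,\tfrac1n\one_n)$ is exactly the convex hull of the scaled permutation matrices (Birkhoff's theorem). The GW problem is a quadratic program over this polytope, and the GM problem is the same objective restricted to the extremal (permutation) points. First I would write out the discrete GW loss with squared Euclidean costs and expand it, as in the computation preceding equation~\eqref{eq:graph_matching_gw} (or Lemma~\ref{calculation_gw}), to exhibit it in the standard QP form $\mathbf{c}^T\xbf(\GG)+\tfrac12\xbf(\GG)^T\mathbf{Q}\xbf(\GG)$ with $\mathbf{Q}=-4\,\C_1\kron\C_2$ where $\C_1,\C_2$ are the squared-distance matrices. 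The crucial observation is that when $c_\Xcal,c_\Ycal$ are squared Euclidean distances, the relevant quadratic form is \emph{concave} on the affine constraint set: this is precisely the content invoked in Proposition~\ref{concavity_gw_theo2} of Chapter~\ref{cha:coot}, where the Hessian reduces to $-4\,\X\X^T\kron\X'\X'^T$, a negative semidefinite matrix.

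Given concavity, the key step is to apply the fundamental theorem of linear programming / maximization of a convex function over a polytope: a maximum of a concave function (equivalently, a minimum of a convex function's negative, i.e.\ maximizing the convex part) over the Birkhoff polytope is attained at an extremal point, namely a permutation matrix. Concretely, the GW problem minimizes $\gwloss$ which, after dropping constant terms, amounts to maximizing a convex quadratic $\GG\mapsto \tr(\C_1\GG\C_2\GG^T)$ (up to sign and scaling). Since this is a maximization of a convex function over the compact convex polytope $\couplingset$, an optimizer exists at a vertex, and the vertices are exactly the permutation matrices $\GG_\sigma$. This shows that the GW minimum over all couplings equals the minimum over permutation-supported couplings, which is by definition $\gm_2$. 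I would cite the result from \cite{NIPS2018_7323} (used in the excerpt right before this theorem) that, in this concave setting, the soft-assignment GW problem and the hard-assignment GM problem coincide; alternatively one can invoke the reduction principle of \cite{Konno1976} stated in Theorem~\ref{equivalence_theo}, exactly as done in Chapter~\ref{cha:coot}.

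For the second claim, the case $p=q=1$, I would specialize to $c_\Xcal(x,x')=c_\Ycal(x,x')=|x-x'|^2$. By the equivalence just established, $\gw_2=\gm_2=\min_{\sigma\in\Sn}\tfrac1{n^2}\sum_{i,j}\big||x_i-x_j|^2-|y_{\sigma(i)}-y_{\sigma(j)}|^2\big|^2$, and by the algebraic expansion in equation~\eqref{eq:qapgw} only the cross term $-2\sum_{i,j}|x_i-x_j|^2|y_{\sigma(i)}-y_{\sigma(j)}|^2$ depends on $\sigma$. Hence solving the GM problem is equivalent to the QAP of Theorem~\ref{qap}, namely $\min_\sigma\sum_{i,j}-(x_i-x_j)^2(y_{\sigma(i)}-y_{\sigma(j)})^2$. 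Theorem~\ref{qap} then directly gives that, once $x_1<\dots<x_n$ and $y_1<\dots<y_n$ are sorted, an optimal $\sigma$ is either the identity or the anti-identity, which yields the stated conclusion.

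The main obstacle I anticipate is establishing rigorously that the problem is genuinely a \emph{concave} maximization (equivalently that the quadratic form $\tr(\C_1\GG\C_2\GG^T)$ is convex in $\GG$ for squared-Euclidean $\C_1,\C_2$), since for general similarity matrices the Hessian $-4\,\C_1\kron\C_2$ need not be sign-definite and the vertex-optimality argument would fail. The delicate point is that the squared-distance structure forces $\C_1,\C_2$ to be of the specific form $\xbf\one^T+\one\xbf^T-2\X\X^T$, and one must verify that, after the constant and linear (affine) terms are separated out, the residual genuinely quadratic part has a negative semidefinite Hessian $\X\X^T\kron\X'\X'^T$ up to sign. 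This is the step where I would lean on Proposition~\ref{concavity_gw_theo2} rather than recompute from scratch. Once concavity is secured, the vertex-attainment and the reduction to Theorem~\ref{qap} are essentially routine.
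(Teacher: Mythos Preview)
Your proposal is correct and follows essentially the same route as the paper: reduce to a concave QP over the Birkhoff polytope so that the optimum is attained at a permutation (the paper does this by directly invoking Theorem~1 of \cite{NIPS2018_7323} on conditionally negative definite matrices, which is exactly the concavity statement you extract from Proposition~\ref{concavity_gw_theo2}), and then apply Theorem~\ref{qap} for the 1D case. The only difference is cosmetic: the paper's proof is two lines because it cites the conditional-definiteness criterion of \cite{NIPS2018_7323} rather than recomputing the Hessian, whereas you spell out the concavity argument more explicitly.
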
 

\begin{proof}
The proof is essentially based on theoretical results from \cite{NIPS2018_7323} and on Theorem \ref{qap}. In \cite{NIPS2018_7323} authors consider the minimizing energy problem $\underset{\X \in \Pi_n}{\min} -\tr(\mathbf{B}\X^{T}\mathbf{A}\X)$ where $\Pi_n$ the set of permutation matrices. In fact, the $GM$ problem defined in this chapter is equivalent to $\underset{\X \in \Pi_n}{\min} -\tr(\mathbf{B}\X^{T}\mathbf{A}\X)$ by considering $\mathbf{A}=(\|\xbf_{i}-\xbf_{j}\|_{2,p}^{2})_{i,j}$ and $\mathbf{B}=(\|\ybf_{i}-\ybf_{j}\|_{2,q}^{2})_{i,j}$.

To tackle this problem authors propose to minimize $-\tr(\mathbf{B}\X^{T}\mathbf{A}\X)$ over the set of doubly stochastic matrices (which is the convex-hull of $\Pi_n$): 
\begin{equation*}
DS=\{\X\in \mathbb{R}^{n\times n} \ \text{s.t.} \ \X \one_n=\X^{T}\one_n=\one_n \ , \X \geq 0\}
\end{equation*}
Minimizing $-\tr(\mathbf{B}\X^{T}\mathbf{A}\X)$ over $DS$ is equivalent to solving the $GW$ distance when $a_{i}=b_{j}=\frac{1}{n}$. The paper proves that when both $\mathbf{A}$ and $\mathbf{B}$ are conditionally positive (or negative) definite of order 1 then the relaxation leads to the same optimum so that the minimum over $DS$ is the same as the minimum over $\Pi_n$ \cite[Theorem 1]{NIPS2018_7323}. Yet $\mathbf{A}$ and $\mathbf{B}$ defined previously satisfy this property (see examples under Definition 2 in \cite{NIPS2018_7323}) and so $GW$ and $GM$ coincide.

Moreover when $p=q=1$ and when the sample are sorted we can apply Theorem \ref{qap} to prove that an optimal permutation of the $GM$ problem is found whether at the identity or the anti-identity permutation which concludes the proof.

\end{proof}

\subsection{Computing $GW$ in the 1d case \label{proof:computing_gw}}

We recall the result:
\begin{lemma*}
The $\gm_2$ and $\gw_2$ costs in 1D with same numbers of atoms and uniform weights can be computed in $O(n)$.
\end{lemma*}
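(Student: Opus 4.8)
The plan is to exploit the algebraic structure of the squared-Euclidean cost together with the characterization of optimal permutations already established. By Theorem \ref{sovable_gw}, when $\mu$ and $\nu$ have the same number of atoms and uniform weights, $\gw_2$ and $\gm_2$ coincide, and an optimal permutation is either the identity or the anti-identity once the supports are sorted in increasing order. Hence, after an $O(n\log(n))$ sort, it suffices to evaluate the objective of \eqref{eq:qapgw} at these two candidate permutations and keep the smaller value. The crux is therefore to show that, for a \emph{fixed} permutation $\sigma$, the quantity
\begin{equation}
Z(\sigma) = \sum_{i,j} \big( (x_i - x_j)^2 - (y_{\sigma(i)} - y_{\sigma(j)})^2 \big)^2
\end{equation}
can be evaluated in $O(n)$ operations rather than by the naive $O(n^2)$ enumeration of all pairs.

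First I would set $a_{ij} = (x_i - x_j)^2$ and $b_{ij} = (y_{\sigma(i)} - y_{\sigma(j)})^2$ and write $Z(\sigma) = \sum_{i,j} a_{ij}^2 - 2 \sum_{i,j} a_{ij} b_{ij} + \sum_{i,j} b_{ij}^2$. The first and third terms decouple the two families of points. Expanding $(x_i - x_j)^4$ by the binomial formula yields a linear combination of monomials $x_i^{r} x_j^{4-r}$, so $\sum_{i,j} a_{ij}^2$ becomes a finite combination of products of the power sums $\sum_i x_i^{r}$, $r \in \{0,\dots,4\}$, each computed in $O(n)$. Because $\sigma$ is a bijection, $\sum_{i,j} b_{ij}^2 = \sum_{k,l}(y_k - y_l)^4$ is handled identically using the power sums of $y$, independently of $\sigma$.

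The main obstacle is the cross term $\sum_{i,j} a_{ij} b_{ij} = \sum_{i,j} (x_i - x_j)^2 (y_{\sigma(i)} - y_{\sigma(j)})^2$, which genuinely couples the two point sets through $\sigma$. Writing $u_i = x_i$ and $v_i = y_{\sigma(i)}$ reduces it to $\sum_{i,j}(u_i - u_j)^2 (v_i - v_j)^2$; expanding the two squares produces nine monomials of the form $u_i^{a} u_j^{c} v_i^{b} v_j^{d}$ with $a+c \le 2$ and $b+d \le 2$. Each such monomial factorizes over the independent indices $i$ and $j$ into a product of two sums of the shape $\sum_i u_i^{a} v_i^{b}$ (for instance $\sum_{i,j} u_i u_j v_i v_j = \big(\sum_i u_i v_i\big)^2$ and $\sum_{i,j} u_i^2 v_i v_j = \big(\sum_i u_i^2 v_i\big)\big(\sum_j v_j\big)$). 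Since within each factor $a \le 2$ and $b \le 2$, there are only finitely many distinct moments $\sum_i u_i^{a} v_i^{b}$ to precompute, each in $O(n)$; assembling them gives the cross term in $O(n)$.

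Putting the three contributions together shows that $Z(\sigma)$ is evaluable in $O(n)$ for each of the two candidate permutations, so the cost is computed in $O(n)$ once the points are sorted, and in $O(n\log(n))$ overall. The routine step I would skip is the explicit listing of the nine expanded monomials and their factorizations into moments, which is direct but tedious bookkeeping that does not affect the complexity count.
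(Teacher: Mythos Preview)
Your proposal is correct and follows essentially the same approach as the paper: expand $\big((x_i-x_j)^2-(y_{\sigma(i)}-y_{\sigma(j)})^2\big)^2$ by the binomial formula and observe that every resulting double sum factorizes into a product of single-index power sums (pure moments $\sum_i x_i^r$, $\sum_i y_i^r$ and mixed moments $\sum_i x_i^a y_{\sigma(i)}^b$), each computable in $O(n)$. The paper simply writes out the explicit expanded identity (its equation~\eqref{eq:linear:computation}) rather than arguing structurally as you do, but the underlying mechanism is identical.
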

\begin{proof}
As seen in Theorem \ref{sovable_gw} finding the optimal permutation $\sigma^{*}$ can be found in $O(n\log(n))$. Moreover the final costs can be written using binomial expansion:
\begin{equation}
\begin{split}
\sum_{i,j} \big((x_{i}-x_{j})^{2} - (y_{\sigma^{*}(i)}-y_{\sigma^{*}(j)})^{2}\big)^{2}&=2n\sum_{i} x_{i}^{4} - 8 \sum_{i} x_{i}^{3} \sum_{k} x_{k} + 6 (\sum_{i} x_{i}^{2})^{2} \\
&+2n\sum_{i} y_{i}^{4}- 8 \sum_{i} y_{i}^{3} \sum_{k} y_{k} + 6 (\sum_{i} y_{i}^{2})^{2}\\
&-4(\sum_{i}x_{i})^{2}(\sum_{k}y_{k})^{2} \\
&-4n\sum_{i}  x_{i}^{2}y_{\sigma^{*}(i)}^{2}+ 8 \sum_{i}((\sum_{k} x_{k}) x_{i} y_{\sigma^{*}(i)}^{2} + (\sum_{k} y_{k}) x_{i}^{2} y_{\sigma^{*}(i)}) \\
&- 8(\sum_{i} x_{i} y_{\sigma^{*}(i)})^{2}
\end{split}
\label{eq:linear:computation}
\end{equation}

which can be computed in $O(n)$ operations.
\end{proof}

\subsection{Proof of Theorem \ref{propertiessgw} -- Properties of \sgw \label{proof:prop_sgw}}
We recall the theorem:
\begin{theo*}[Properties of $\sgw$]
\begin{itemize}
\item For all $\D$, $\sgw_{\D}$ and $\risgw$ are translation invariant. $\risgw$ is also rotational invariant when $p=q$, more precisely if $\Qbf \in \mathcal{O}(p)$ is an orthogonal matrix, $\risgw(\Qbf\#\mu,\nu)=\risgw(\mu,\nu)$ (same for any $\Qbf' \in \mathcal{O}(q)$ applied on $\nu$).
\item $\sgw$ and $\risgw$ are pseudo-distances on $\Pm(\R^{p})$, \textit{i.e.} they are symmetric, satisfy the triangle inequality and $\sgw(\mu,\mu)=\risgw(\mu,\mu)=0$ .
\item Let $\mu,\nu \in \Pm(\R^{p})\times \Pm(\R^{p})$ be probability distributions with \emph{compact supports}. If $\sgw(\mu,\nu)=0$ then $\mu$ and $\nu$ are isomorphic for the distance induced by the $\ell_{1}$ norm on $\R^{p}$, \ie\ $d(x,x')=\sum_{i=1}^{p} |x_{i}-x_{i}'|$ for $(x,x') \in \R^{p} \times \R^{p}$. In particular this implies:
\begin{equation}
\sgw(\mu,\nu)=0 \implies \gw_{2}(d,d,\mu,\nu)=0
\end{equation} 
\end{itemize}
\end{theo*}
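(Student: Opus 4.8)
The plan is to prove the three properties of $\sgw$ and $\risgw$ in order, leaning on the closed-form result Theorem \ref{sovable_gw} (which relates projected 1D $\gw$ to the $\gm$ problem) and the invariance properties of $\gw$ established in Chapter \ref{cha:ot_general}. Throughout I will write $\gw^{2}_{2}(P_{\thetab}\#\mu_{\D},P_{\thetab}\#\nu)$ for the projected one-dimensional Gromov-Wasserstein cost, recalling that once points are projected it admits the explicit binomial-expansion form of equation \eqref{eq:linear:computation}.

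For the first point (translation and rotation invariance), I would first observe that $\sgw_{\D}$ depends on $\mu$ only through the pairwise squared distances of its projected support, since $P_{\thetab}(\xbf) = \langle \xbf, \thetab\rangle$ and $\gw$ in 1D depends only on $\big(\langle \xbf_i - \xbf_j, \thetab\rangle\big)^{2}$. Translating $\mu$ by $\xbf_0$ sends $\langle \xbf_i - \xbf_j, \thetab\rangle$ to itself, so every projected $\gw$ cost is unchanged and hence so are $\sgw_{\D}$ and (taking the infimum over $\D$) $\risgw$. For rotation invariance of $\risgw$ when $p=q$, I would use the change of variable $\thetab \mapsto \Qbf\thetab$ in the integral defining $\sgw_{\D}$: since $\lambda_{q-1}$ is the uniform measure on the sphere it is invariant under $\Qbf \in \mathcal{O}(p)$, and $\langle \Qbf\xbf, \thetab\rangle = \langle \xbf, \Qbf^{T}\thetab\rangle$, so $\sgw_{\D}(\Qbf\#\mu,\nu) = \sgw_{\D\Qbf}(\mu,\nu)$; composing with the reparametrization $\D \mapsto \D\Qbf$ of the Stiefel manifold (which is a bijection of $\Stief$ onto itself) leaves the infimum unchanged, giving $\risgw(\Qbf\#\mu,\nu) = \risgw(\mu,\nu)$. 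The same argument applied to a $\Qbf' \in \mathcal{O}(q)$ acting on $\nu$ uses the sphere-invariance directly.

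For the second point (pseudo-distance), symmetry and $\sgw(\mu,\mu)=0$ are immediate from the symmetry and vanishing-on-the-diagonal of the 1D $\gw$ distance, integrated over $\thetab$; the same passes to $\risgw$ by taking infima. The triangle inequality is the more delicate part: I would fix a direction $\thetab$ and use that $\sqrt{\gw^{2}_{2}(\cdot,\cdot)}$ on the line satisfies the triangle inequality by the metric property of $\gw_2$ from Chapter \ref{cha:ot_general}, then integrate; writing $\sgw(\mu,\nu)^{1/2}$ as an $L^2(\lambda_{q-1})$ norm of the function $\thetab \mapsto \gw_2(P_{\thetab}\#\mu, P_{\thetab}\#\nu)$ and applying Minkowski's inequality in $L^2(\lambda_{q-1})$ yields $\sgw(\mu,\zeta)^{1/2} \le \sgw(\mu,\nu)^{1/2} + \sgw(\nu,\zeta)^{1/2}$. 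For $\risgw$, the triangle inequality requires more care because of the inner infimum over $\D$; I expect this to be the main obstacle, and I would handle it by choosing a near-optimal $\D$ for the pair $(\mu,\zeta)$ and using it as a feasible (suboptimal) choice for the two sub-problems, combined with the $L^2$-Minkowski step above.

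For the third point, I would argue that $\sgw(\mu,\nu)=0$ forces $\gw^{2}_{2}(P_{\thetab}\#\mu, P_{\thetab}\#\nu)=0$ for $\lambda_{q-1}$-almost every $\thetab$, hence (by continuity of the projected cost in $\thetab$ and compactness of the supports) for \emph{every} $\thetab$. This means the two families of one-dimensional projected measures are isomorphic along every direction, so all projected pairwise distances match under some coupling; I would then show this recovers equality of the $\ell_1$-distance matrices. The key technical device here is that squared projected distances $\langle \xbf_i - \xbf_j, \thetab\rangle^{2}$, integrated over the sphere, determine $\|\xbf_i - \xbf_j\|_2^{2}$, but since $\sgw$ vanishing gives a \emph{direction-wise} isomorphism I would instead extract, from the identity/anti-identity structure of the optimal 1D couplings (Theorem \ref{sovable_gw}), a single coupling consistent across all directions; the compact-support hypothesis guarantees the relevant moment integrals are finite and that a weak limit of optimal couplings exists. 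Recovering the $\ell_1$ isometry specifically, rather than the $\ell_2$ one, comes from integrating $|\langle \xbf_i-\xbf_j,\thetab\rangle|$ against $\lambda_{q-1}$, which is proportional to $\sum_k |x_{i,k}-x_{j,k}| = d(\xbf_i,\xbf_j)$ up to a dimensional constant; matching these for all pairs yields that $\mu$ and $\nu$ are isomorphic for $d$, and the final implication $\sgw(\mu,\nu)=0 \implies \gw_2(d,d,\mu,\nu)=0$ follows since isomorphism is exactly the condition under which $\gw_2$ vanishes (Theorem \ref{metric_gw}). I expect the passage from ``almost every direction'' to a single global isomorphism to be the trickiest bookkeeping, and I would organize it around the finitely many candidate couplings (Id/anti-Id per direction) guaranteed by the closed form.
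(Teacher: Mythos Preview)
Your arguments for the first two points are essentially what the paper does (the paper is in fact much more laconic, simply appealing to the invariances of the projected costs and to the metric properties of $\gw$). Your worry about the triangle inequality for $\risgw$ is reasonable, and the paper does not spell this out either.

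The third point, however, has a genuine gap. First, your key integration claim is false: for $\thetab$ uniform on $\mathbb{S}^{p-1}$ one has $\int |\langle \mathbf{v},\thetab\rangle|\,\dr\lambda_{p-1}(\thetab)=c_p\|\mathbf{v}\|_2$, not $\|\mathbf{v}\|_1$, so spherical averaging of projected distances cannot produce the $\ell_1$ metric. Second, your whole mechanism (extracting a single coupling from the Id/anti-Id structure, ``finitely many candidate couplings'') presupposes discrete measures with equal numbers of atoms, whereas the statement is for arbitrary compactly supported $\mu,\nu\in\Pm(\R^p)$; Theorem \ref{sovable_gw} does not apply at that level of generality, and there is no finite list of couplings to choose from.

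The paper takes a completely different route: it proves a Cram\'er--Wold type result (Theorem \ref{cramer_gene}) saying that if $P_{\thetab}\#\mu$ and $P_{\thetab}\#\nu$ are isomorphic for $\lambda_{p-1}$-almost every $\thetab$, then $\mu$ and $\nu$ are isomorphic for the $\ell_1$ distance. The construction is to pick, for each $n$, an ``almost orthogonal'' basis $(\ebf_1(n),\dots,\ebf_p(n))$ of directions along which the projected isomorphisms $T_{\ebf_k(n)}$ exist (this uses only that the good set of $\thetab$ has full measure), and to define $f_n(\xbf)=\big(T_{\ebf_1(n)}(\langle\xbf,\ebf_1(n)\rangle),\dots,T_{\ebf_p(n)}(\langle\xbf,\ebf_p(n)\rangle)\big)$. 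Because each $T_{\ebf_k(n)}$ is a 1D isometry, $\|f_n(\xbf)-f_n(\xbf')\|_1=\sum_k|\langle\xbf-\xbf',\ebf_k(n)\rangle|$, which converges to $\|\xbf-\xbf'\|_1$ as the basis becomes orthonormal; this is where the $\ell_1$ norm actually arises. The sequence $(f_n)$ is shown to be uniformly bounded and equicontinuous on the compact support, and Arzel\`a--Ascoli yields a limit $f$ that is simultaneously an $\ell_1$-isometry and measure preserving (the latter via a Fourier-transform argument). Your approach misses both the coordinate-wise construction that produces $\ell_1$ and the compactness argument needed to pass from direction-wise isometries to a single global map.
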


The invariance by translation is clear since the costs are invariant by translation of the support of the measures. The pseudo-distances properties are straightforward thanks to the properties of $\gw$. For the invariance by rotation if  $p=q$ then $\mathbb{V}_{p}(\R^{p})$ is bijective with $\mathcal{O}(p)$ so for $Q \in \mathcal{O}(p)$:

\begin{equation}
\begin{split}
\risgw(Q\#\mu,\nu)&=\underset{\D \in \mathbb{V}_{p}(\R^{p})}{\min}\sgw_{\D}(Q\#\mu,\nu) \\
&=\underset{\D \in \mathcal{O}(p)}{\min}\sgw_{\D}(Q\#\mu,\nu) \\
&= \underset{\D \in \mathcal{O}(p)}{\min} \underset{\theta \sim \lambda_{q-1}}{\E}[\gw(d^{2},P_{\theta}\#(\D Q\#\mu),P_{\theta}\#\nu)] \\
&= \underset{\D' \in \mathcal{O}(p)}{\min} \underset{\theta \sim \lambda_{q-1}}{\E}[\gw(d^{2},P_{\theta}\#\D'\#\mu,P_{\theta}\#\nu)] \\
&= \risgw(\mu,\nu)
\end{split}
\end{equation}

On the other side for $\nu$ a change of formula on theta gives the result.

For the case $SGW=0 \implies GW=0$ it will be a consequence of the following theorem:

\begin{theo}[]
\label{cramer_gene}
Let $\mu,\nu \in \Pm(\R^{p})\times \Pm(\R^{p})$ be probability distributions such that $\mu,\nu$ have compact supports. If for almost all $\thetab \in \Sp^{p-1}$, $P_{\thetab}\#\mu$ and $P_{\thetab}\#\nu$ are isomoprhic then $\mu$ and $\nu$ are isomoprhic. In other words if for almost all $\thetab \in \Sp^{p-1}$ we have:
\begin{equation}
\begin{split}
&\exists T_{\thetab} : \supp(P_{\thetab}\#\mu) \subset \R \mapsto \supp(P_{\thetab}\#\nu) \subset \R,  \ \text{surjective} \ \text{s.t.} \ T_{\thetab} \# (P_{\thetab}\#\mu)= P_{\thetab}\#\nu \\
&\forall x,x' \in \supp(P_{\thetab}\#\mu), |T_{\thetab}(x)-T_{\thetab}(x')|=|x-x'|
\end{split}
\end{equation}
Then there exists a measure preserving isometry $f$ between $\supp(\mu)$ and $\supp(\nu)$. More precisely we have $f\#\mu=\nu$ and: 
\begin{equation}
\forall \xbf,\xbf' \in \text{supp}(\mu), \|f(\xbf)-f(\xbf')\|_1=\|\xbf-\xbf'\|_1
\end{equation}
\end{theo}

To prove this theorem we will exhibit the isometry. This result can be put in light of Cramer–Wold theorem \cite{cramer} which states that a probability measure is uniquely determined by the totality of its one-dimensional projections. Equivalently, if we consider two probability measures so that the one-dimensional measures resulting from the projections over all the hypersphere are equal then the measures are equal. The equality relation is replaced in our theorem by the isomoprhism relation.

The proof is divided into four parts. In the first one, we construct an "almost orthogonal" basis on which measures are isomorphic. Building upon this result we define a sequence of functions from $\text{supp}(\mu)$ to $\text{supp}(\nu)$ and show that it has a convergent subsequence. We conclude by proving that the limit of the subsequence is actually a good candidate for being the isometry we are looking for. In the following $\|.\|_{1}$ denotes the $\ell_{1}$ norm, $\|.\|_{2}$ denotes the $\ell_{2}$ norm and $p\geq 2$. We recall that $\mathcal{F}_{\mu}$ is the Fourier transform of $\mu$. 

\smallskip
We consider the following $\mathcal{Q}_{\thetab}$ property for $\thetab \in \Sp^{p-1}$:
\begin{equation}\tag{$\mathcal{Q}_{\thetab}$}
\label{conserv}
\begin{split}
&\exists T_{\thetab} : \supp(P_{\thetab}\#\mu) \subset \R \mapsto \supp(P_{\thetab}\#\nu) \subset \R,  \ \text{surjective} \ \text{s.t.} \ T_{\thetab} \# (P_{\thetab}\#\mu)= P_{\thetab}\#\nu \\
&\forall x,x' \in \text{supp}(P_{\thetab}\#\mu), |T_{\thetab}(x)-T_{\thetab}(x')|=|x-x'|
\end{split}
\end{equation}
Informally if we have the $\mathcal{Q}_{\thetab}$ property for $\thetab \in \Sp^{p-1}$ it implies that $\mu$ and $\nu$ are isomorphic on the 1D line given by the projection \textit{w.r.t.} $\thetab$. We have the following lemma:
\begin{lemma}
\label{allmost_orthogonal_basis}
Let $\mu,\nu \in \Pm(\R^{p})\times \Pm(\R^{p})$ and suppose that $\mathcal{Q}_{\thetab}$ holds for almost all $\thetab \in \Sp^{p-1}$. Let $n>p-1$. There exists a basis $(\ebf_{1}(n),...,\ebf_{p}(n))$ of $\R^{p}$ part of the following spaces:

\begin{equation}
\mathcal{B}^{n}_{p}\stackrel{def}{=}\{(\thetab_{1},...,\thetab_{p}) \in (\Sp^{p-1})^{p} \ \text{s.t.} \ |\langle \thetab_{i},\thetab_{j} \rangle | < \frac{1}{n}\}
\end{equation}
and 
\begin{equation}
Q\stackrel{def}{=}\{(\thetab_{1},...,\thetab_{p}) \in (\Sp^{p-1})^{p} \ \text{s.t.} \ \forall i \in \{1,...,p\}, \mathcal{Q}_{\thetab_{i}} \}
\end{equation}

\end{lemma}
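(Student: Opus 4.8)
The plan is to produce the basis by a soft measure-theoretic argument rather than by an explicit construction: the directions satisfying $\mathcal{Q}_{\thetab}$ form a full-measure subset of the sphere, while the near-orthogonality constraint defining $\mathcal{B}^n_p$ cuts out a nonempty \emph{open} set of $p$-tuples, and a full-measure set must intersect any positive-measure set. First I would introduce the exceptional set of directions $N\stackrel{def}{=}\{\thetab \in \Sp^{p-1} \ | \ \mathcal{Q}_{\thetab} \text{ fails}\}$, which by hypothesis satisfies $\lambda_{p-1}(N)=0$. I would then equip $(\Sp^{p-1})^{p}$ with the $p$-fold product $\lambda_{p-1}^{\otimes p}$ of the uniform surface measure. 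Since $Q=(\Sp^{p-1}\setminus N)^{p}$, its complement is contained in $\bigcup_{i=1}^{p}\{(\thetab_1,\dots,\thetab_p) \ | \ \thetab_i \in N\}$, and each of these $p$ sets has product measure zero by Fubini's theorem (one coordinate ranges over the null set $N$, the others over the whole sphere). Hence $\lambda_{p-1}^{\otimes p}(Q)=1$.

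Next I would argue that $\mathcal{B}^n_p$ is open in $(\Sp^{p-1})^{p}$, being defined by finitely many strict inequalities on the continuous maps $(\thetab_i,\thetab_j)\mapsto \langle \thetab_i,\thetab_j\rangle$, and nonempty, since any orthonormal $p$-frame lies in it (its off-diagonal inner products vanish and $0<\tfrac1n$). Because $\lambda_{p-1}$ has full support on $\Sp^{p-1}$, the product measure of any nonempty open subset of $(\Sp^{p-1})^{p}$ is strictly positive, so $\lambda_{p-1}^{\otimes p}(\mathcal{B}^n_p)>0$. Combining with the previous step, $\lambda_{p-1}^{\otimes p}(\mathcal{B}^n_p \cap Q) = \lambda_{p-1}^{\otimes p}(\mathcal{B}^n_p)>0$, and in particular $\mathcal{B}^n_p \cap Q \neq \emptyset$. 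I would pick any tuple $(\ebf_1(n),\dots,\ebf_p(n))$ in this intersection, which by construction consists of $p$ unit vectors that are pairwise near-orthogonal and each of which satisfies $\mathcal{Q}_{\ebf_i(n)}$.

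It then remains to verify that these $p$ vectors actually form a basis of $\R^{p}$, which is exactly where the hypothesis $n>p-1$ enters. Their Gram matrix $G$ has unit diagonal and off-diagonal entries bounded in absolute value by $\tfrac1n$; hence each row has off-diagonal absolute sum at most $\tfrac{p-1}{n}<1$, so $G$ is strictly diagonally dominant and therefore invertible (equivalently positive definite, by the Gershgorin circle theorem). The vectors $(\ebf_i(n))_{i=1}^{p}$ are thus linearly independent, and being $p$ vectors in $\R^{p}$ they form a basis lying simultaneously in $\mathcal{B}^n_p$ and $Q$, as required.

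The one genuinely delicate point, and the main obstacle, is the measure-theoretic compatibility of the two requirements: one must transfer the ``almost all $\thetab$'' hypothesis on the single sphere into a full-measure statement on the product $(\Sp^{p-1})^{p}$ (the Fubini step), and simultaneously know that $\mathcal{B}^n_p$ is not itself negligible. Everything else is routine; the quantitative threshold $n>p-1$ is only needed to upgrade near-orthogonality to honest linear independence via diagonal dominance, and for smaller $n$ the selected vectors could in principle be degenerate.
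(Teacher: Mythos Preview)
Your proposal is correct and follows essentially the same approach as the paper: show $\mathcal{B}^n_p$ is open and nonempty (hence of positive product measure), show $Q$ has full product measure, intersect, and then use strict diagonal dominance of the Gram matrix under the hypothesis $n>p-1$ to get linear independence. If anything, your write-up is slightly more careful than the paper's (you make the Fubini step and the nonemptiness of $\mathcal{B}^n_p$ explicit, and you correctly use that $Q$ has \emph{full} measure rather than merely positive measure to conclude the intersection is nonempty).
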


\begin{proof}
We want to construct a basis $(\ebf_{1},...,\ebf_{p})$ as orthogonal as possible such that for all $i$ we have $\mathcal{Q}_{\ebf_{i}}$.

We note $\lambda^{\otimes p}_{p-1}$ the product measure $\lambda_{p-1}\otimes ... \otimes \lambda_{p-1}$ where $\lambda_{p-1}$ is the uniform measure on the sphere. $\mathcal{B}^{n}_{p}$ is an open set as inverse image by a continuous function of an open set. Then $\lambda^{\otimes p}_{p-1}(\mathcal{B}^{n}_{p})>0$. Moreover, since for almost all 
$\thetab \in \Sp^{p-1}$ we have $\mathcal{Q}_{\thetab}$ then $\lambda^{\otimes p}_{p-1}(Q)>0$ and so $\lambda^{\otimes p}_{p-1}(\mathcal{B}^{n}_{p} \cap Q) >0$. 

In this way we can consider $(\ebf_{1}(n),...,\ebf_{p}(n)) \in \mathcal{B}^{n}_{p} \cap Q$. If $n>p-1$ the Gram matrix of $(\ebf_{1}(n),...,\ebf_{p}(n))$ is strictly diagonal dominant, thus invertible, such that $(\ebf_{1}(n),...,\ebf_{p}(n))$ is a basis. Note that we can not consider directly an orthogonal basis since the set of all orthogonal basis has measure zero. 
\end{proof}

We now express all the vectors and inner products in this new almost orthogonal basis as expressed in the following lemma:

\begin{lemma}
\label{write_in_new}
Let $n>p-1$ and a basis $(\ebf_{1}(n),...,\ebf_{p}(n))$ as defined in Lemma \ref{allmost_orthogonal_basis}. Then all $\xbf \in \R^{p}$ can be written as:
\begin{equation}
\xbf= \sum_{i=1}^{p} [\langle \xbf,\ebf_{i}(n) \rangle+ R(\xbf,\ebf_{i}(n))] \ebf_{i}(n)
\end{equation}
where $|R(\xbf,\ebf_{i}(n))|=o(\frac{1}{n})$. Moreover for all $(\xbf,\ybf) \in \R^{p} \times \R^{p}$:
\begin{equation}
\langle \xbf,\ybf\rangle=\sum_{i=1}^{p} \langle \xbf,\ebf_{i}(n)\rangle \langle \ybf,\ebf_{i}(n)\rangle+\tilde{R}(\xbf,\ybf)
\end{equation} 
where $|\tilde{R}(\xbf,\ybf)|=o(\frac{1}{n})$.
\end{lemma}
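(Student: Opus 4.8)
The plan is to reduce both identities to a perturbation analysis of the Gram matrix of the almost-orthogonal basis produced by Lemma \ref{allmost_orthogonal_basis}. First I would set $\mathbf{G}(n) = \left(\scalar{\ebf_i(n)}{\ebf_j(n)}{2}\right)_{i,j} \in \R^{p\times p}$. Since each $\ebf_i(n)$ lies on $\Sp^{p-1}$ the diagonal entries equal $1$, and by membership in $\mathcal{B}^{n}_p$ the off-diagonal entries are bounded by $1/n$ in absolute value. Hence $\mathbf{G}(n) = \mathbf{I} + \mathbf{E}(n)$ with $\mathbf{E}(n)$ having zero diagonal and $\|\mathbf{E}(n)\|_{\F} \leq \sqrt{p(p-1)}\,\tfrac{1}{n}$, so $\|\mathbf{E}(n)\|_{\mathrm{op}} \leq \|\mathbf{E}(n)\|_{\F} \to 0$; in particular $\mathbf{G}(n)$ is invertible for $n$ large and the Neumann series gives $\mathbf{G}(n)^{-1} = \mathbf{I} - \mathbf{E}(n) + O(\|\mathbf{E}(n)\|_{\F}^{2})$.

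Next I would read off the coordinates. Fix $\xbf \in \R^{p}$ and let $\mathbf{c} \in \R^{p}$ be its unique coordinate vector, $\xbf = \sum_i c_i \ebf_i(n)$. Taking inner products against each $\ebf_j(n)$ gives $\mathbf{b} = \mathbf{G}(n)\mathbf{c}$ with $b_j = \scalar{\xbf}{\ebf_j(n)}{2}$, whence $\mathbf{c} = \mathbf{G}(n)^{-1}\mathbf{b} = \mathbf{b} + (\mathbf{G}(n)^{-1}-\mathbf{I})\mathbf{b}$. Setting $R(\xbf,\ebf_i(n)) := \big((\mathbf{G}(n)^{-1}-\mathbf{I})\mathbf{b}\big)_i$ yields the first identity, and the estimate $|R(\xbf,\ebf_i(n))| \leq \|\mathbf{G}(n)^{-1}-\mathbf{I}\|_{\mathrm{op}}\,\|\mathbf{b}\|_2 = O(\tfrac{1}{n})\,\|\xbf\|_2$ follows from $\|\mathbf{b}\|_2 \leq \sqrt{p}\,\|\xbf\|_2$ and the Neumann bound.

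For the second identity I would avoid re-expanding both vectors, and instead exploit the exact relation $\scalar{\xbf}{\ybf}{2} = \mathbf{c}^{\top}\mathbf{G}(n)\mathbf{d} = \mathbf{c}^{\top}\mathbf{b}_{\ybf}$, where $\mathbf{d}$ is the coordinate vector of $\ybf$ and $(\mathbf{b}_{\ybf})_i = \scalar{\ebf_i(n)}{\ybf}{2}$, using $\mathbf{G}(n)\mathbf{d} = \mathbf{b}_{\ybf}$. Substituting $c_i = \scalar{\xbf}{\ebf_i(n)}{2} + R(\xbf,\ebf_i(n))$ produces $\scalar{\xbf}{\ybf}{2} = \sum_i \scalar{\xbf}{\ebf_i(n)}{2}\scalar{\ybf}{\ebf_i(n)}{2} + \tilde R(\xbf,\ybf)$ with $\tilde R(\xbf,\ybf) = \sum_i R(\xbf,\ebf_i(n))\,\scalar{\ybf}{\ebf_i(n)}{2}$; Cauchy--Schwarz ($|\scalar{\ybf}{\ebf_i(n)}{2}| \leq \|\ybf\|_2$) together with the first estimate bounds $|\tilde R(\xbf,\ybf)|$ by $O(\tfrac{1}{n})\,\|\xbf\|_2\|\ybf\|_2$, the factor $p$ being absorbed since $p$ is fixed.

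The argument is routine linear perturbation theory, so no step is genuinely hard; the only delicate point is the quantitative rate. The natural Neumann estimate delivers remainders of size $O(1/n)$, whereas the statement claims $o(1/n)$, the gap coming from the fact that Lemma \ref{allmost_orthogonal_basis} only guarantees off-diagonal Gram entries $< 1/n$. To recover the sharper rate I would strengthen that construction: the set $\mathcal{B}^{n^{2}}_p \cap Q$ still has positive $\lambda_{p-1}^{\otimes p}$-measure by the very same measure-positivity argument, so one may instead select $(\ebf_1(n),\dots,\ebf_p(n))$ with off-diagonal inner products $< 1/n^{2}$, forcing $\|\mathbf{E}(n)\|_{\F} = O(1/n^{2}) = o(1/n)$ and hence $R,\tilde R = o(1/n)$. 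In any case, the only feature used downstream in Theorem \ref{cramer_gene} is that these remainders vanish as $n \to \infty$.
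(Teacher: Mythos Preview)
Your approach is essentially the same as the paper's---both exploit that the Gram matrix of the almost-orthogonal basis is a small perturbation of the identity---though you package it more systematically via matrix notation and the Neumann series, while the paper works coordinate-by-coordinate. Your derivation of the second identity is in fact cleaner: the paper expands both $\xbf$ and $\ybf$ in the basis and collects four cross terms, whereas your observation $\langle\xbf,\ybf\rangle = \mathbf{c}^{\top}\mathbf{G}(n)\mathbf{d} = \mathbf{c}^{\top}\mathbf{b}_{\ybf}$ reduces $\tilde R$ to a single sum involving only the first remainder $R$. You are also right to flag the rate: the paper's own proof only establishes $|R(\xbf,\ebf_j(n))| \leq \tfrac{1}{n}\sum_{i\neq j}|x_i|$, which is $O(1/n)$ rather than the stated $o(1/n)$; your proposed fix (selecting the basis from $\mathcal{B}^{n^{2}}_p\cap Q$, which still has positive measure) is valid and closes this gap, and as you note the downstream argument in Theorem~\ref{cramer_gene} only requires that the remainders tend to zero.
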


\begin{proof}
In the following $x_i$ denotes the $i$-th coordinate of a vector $\xbf$ in the standard basis, \ie\ a vector writes $\xbf=(x_1,\dots,x_p)$. For $\xbf\in \R^{p}$, we can write in the new basis $\xbf= \sum_{i=1}^{p} [\langle \xbf,\ebf_{i}(n) \rangle+ R(\xbf,\ebf_{i}(n))] \ebf_{i}$ with $R(\xbf,\ebf_{i}(n))\stackrel{def}{=}x_{i}-\langle \xbf,\ebf_{i}(n) \rangle$. We have also $|R(\xbf,\ebf_{i}(n))|=o(\frac{1}{n})$. Indeed, 
\begin{equation*}
\begin{split}
\xbf=\sum_{i=1}^{p} x_i\ebf_{i} &\implies \forall j, \langle \xbf, \ebf_{j} \rangle= \sum_{i=1}^{p} x_{i} \langle \ebf_{i},\ebf_{j}\rangle \implies x_{j} - \langle \xbf, \ebf_{j} \rangle =\sum_{i \neq j} x_{i} \langle \ebf_{i},\ebf_{j}\rangle \\
&\implies |R(\xbf,\ebf_{j}(n))| = | \sum_{i \neq j} x_{i} \langle \ebf_{i},\ebf_{j}\rangle |  \implies |R(\xbf,\ebf_{j}(n))| \leq \frac{1}{n} \sum_{i \neq j} |x_{i}| 
\end{split}
\end{equation*}
Also in the same way for $\xbf,\ybf \in \R^{p} \times \R^{p}$ we can rewrite their inner product:
\begin{equation}
\label{doublescalr}
\langle \xbf,\ybf\rangle=\sum_{i=1}^{p} \langle \xbf,\ebf_{i}(n)\rangle \langle \ybf,\ebf_{i}(n)\rangle+\tilde{R}(\xbf,\ybf)
\end{equation} 
with:
\begin{equation*}
\begin{split}
\tilde{R}(\xbf,\ybf)&\stackrel{def}{=}\langle \xbf,\ybf\rangle - \sum_{i=1}^{p} \langle \xbf,\ebf_{i}(n)\rangle \langle \ybf,\ebf_{i}(n)\rangle \\
&= \sum_{i\neq j} \langle \xbf,\ebf_{i}(n)\rangle \langle \ybf,\ebf_{i}(n)\rangle \langle \ebf_{j}(n),\ebf_{i}(n)\rangle + \sum_{i,j}  \langle \xbf,\ebf_{i}(n)\rangle R(\ybf,\ebf_{j}(n)) \langle \ebf_{j}(n),\ebf_{i}(n)\rangle \\
&+  \sum_{i,j}  \langle \ybf,\ebf_{j}(n)\rangle R(\xbf,\ebf_{i}(n)) \langle \ebf_{j}(n),\ebf_{i}(n)\rangle + \sum_{i,j} R(\xbf,\ebf_{j}(n)) R(\ybf,\ebf_{i}(n))\langle \ebf_{j}(n),\ebf_{i}(n)\rangle 
\end{split}
\end{equation*}
and with the same calculus than for $R$ we have $|\tilde{R}(\xbf,\ybf)|=o(\frac{1}{n})$.
\end{proof}

\begin{prop}
\label{thebig_prop}
Let $\mu,\nu \in \Pm(\R^{p})\times \Pm(\R^{p})$ and suppose that $\mathcal{Q}_{\thetab}$ holds for almost all $\thetab \in \Sp^{p-1}$ and that $\nu$ has compact support. There exists a sequence $(f_{n})_{n\in \mathbb{N}}$ from $\supp(\mu)$ to $\supp(\nu)$ uniformly bounded which satisfies:

\begin{equation}
\label{close}
\forall n \in \mathbb{N}, \forall \xbf,\xbf' \in \supp(\mu)^{2}, \ \big| \|f_{n}(\xbf)-f_{n}(\xbf')\|_1-\|\xbf-\xbf'\|_1 \big| =  o(\frac{1}{n})
\end{equation}

\begin{equation}
\label{fourr}
\forall n \in \mathbb{N}, \forall \sbf \in \R^{p}, \ |\mathcal{F}_{f_{n}\#\mu}(\sbf)-\mathcal{F}_{\nu}(\sbf)|=o(\frac{1}{n})
\end{equation}

\end{prop}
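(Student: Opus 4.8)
The plan is to explicitly construct the sequence $(f_n)_{n \in \mathbb{N}}$ using the almost-orthogonal basis from Lemma \ref{allmost_orthogonal_basis} together with the 1D isometries $T_{\ebf_i(n)}$ provided by the property $\mathcal{Q}_{\ebf_i(n)}$. The natural candidate is, for each $n > p-1$, to pick a basis $(\ebf_1(n),\dots,\ebf_p(n)) \in \mathcal{B}^n_p \cap Q$ and to define $f_n$ coordinate-wise in that basis: for $\xbf \in \supp(\mu)$, set
\begin{equation*}
f_n(\xbf)=\sum_{i=1}^{p} T_{\ebf_i(n)}\big(\langle \xbf,\ebf_i(n)\rangle\big)\,\ebf_i(n).
\end{equation*}
The intuition is that each $T_{\ebf_i(n)}$ preserves 1D distances along its direction and pushes $P_{\ebf_i(n)}\#\mu$ onto $P_{\ebf_i(n)}\#\nu$, so glueing these together along a nearly-orthogonal frame should reconstruct an approximate isometry of the whole support. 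First I would check that $f_n$ indeed maps into $\supp(\nu)$ (or can be suitably adjusted so that it does), and record uniform boundedness, which follows from the compactness of $\supp(\nu)$ since each $T_{\ebf_i(n)}(\langle \xbf,\ebf_i(n)\rangle) \in \supp(P_{\ebf_i(n)}\#\nu)$ is bounded independently of $n$ and $\xbf$.

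Next I would establish the near-isometry estimate \eqref{close}. Using Lemma \ref{write_in_new} to expand the squared Euclidean norms of both $\xbf-\xbf'$ and $f_n(\xbf)-f_n(\xbf')$ in the basis $(\ebf_i(n))_i$, the cross terms carry remainders $\tilde R$ of size $o(\tfrac1n)$, while the diagonal terms read
\begin{equation*}
\sum_{i=1}^p \big|T_{\ebf_i(n)}(\langle \xbf,\ebf_i(n)\rangle)-T_{\ebf_i(n)}(\langle \xbf',\ebf_i(n)\rangle)\big|^2 = \sum_{i=1}^p |\langle \xbf-\xbf',\ebf_i(n)\rangle|^2,
\end{equation*}
where the equality is exactly the 1D distance preservation in $\mathcal{Q}_{\ebf_i(n)}$. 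Matching this against the analogous expansion of $\|\xbf-\xbf'\|_2^2$ shows the squared $\ell_2$ norms agree up to $o(\tfrac1n)$; on the compact support this transfers (via the equivalence of $\ell_1$ and $\ell_2$ on $\R^p$ and uniform bounds on the relevant quantities) to the $\ell_1$ statement \eqref{close}, possibly after absorbing constants into the $o(\tfrac1n)$. I expect this to be mostly bookkeeping once the expansions from Lemma \ref{write_in_new} are in place.

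The main obstacle will be \eqref{fourr}, the convergence of Fourier transforms of the pushforwards. The plan is to compute $\mathcal{F}_{f_n\#\mu}(\sbf)=\int e^{-2i\pi \langle \sbf, f_n(\xbf)\rangle}\dr\mu(\xbf)$ and to replace the inner product $\langle \sbf, f_n(\xbf)\rangle$ using Lemma \ref{write_in_new} by $\sum_i \langle \sbf,\ebf_i(n)\rangle\, T_{\ebf_i(n)}(\langle \xbf,\ebf_i(n)\rangle)$ up to an $o(\tfrac1n)$ error. The delicate point is that a naive factorization over the directions $\ebf_i(n)$ would require independence of the coordinate projections, which is false in general; instead one must exploit that each $T_{\ebf_i(n)}$ transports the \emph{one-dimensional} marginal $P_{\ebf_i(n)}\#\mu$ onto $P_{\ebf_i(n)}\#\nu$, so that the Fourier transform of $f_n\#\mu$ evaluated on $\sbf$ should be compared directionwise to that of $\nu$. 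I would aim to show that $\mathcal{F}_{f_n\#\mu}$ and $\mathcal{F}_\nu$ agree asymptotically by reducing, via the 1D pushforward property, each one-dimensional characteristic function $\int e^{-2i\pi t\, T_{\ebf_i(n)}(\langle \xbf,\ebf_i(n)\rangle)}\dr\mu = \int e^{-2i\pi t\, y}\dr(P_{\ebf_i(n)}\#\nu)(y)$, and then controlling the error incurred by the non-orthogonality remainder and by the coupling of directions through the uniform continuity of $\mathcal{F}_\nu$ on compacts together with the $o(\tfrac1n)$ perturbation bounds. This Fourier-matching step, and in particular handling the interaction between the several directions without true orthogonality, is where the real work lies; the subsequent extraction of a convergent subsequence and identification of its limit as a measure-preserving $\ell_1$-isometry (completing Theorem \ref{cramer_gene}, and hence the implication $\sgw=0 \Rightarrow \gw_2(d,d,\mu,\nu)=0$ in Theorem \ref{propertiessgw}) would then follow from compactness and Cramér–Wold-type uniqueness.
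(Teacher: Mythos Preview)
Your construction of $f_n$ differs from the paper's in a way that matters. The paper does \emph{not} expand in the basis $(\ebf_i(n))_i$; it sets the $k$-th \emph{standard} coordinate of $f_n(\xbf)$ equal to $T_{\ebf_k(n)}(\langle \xbf,\ebf_k(n)\rangle)$, i.e.\ $f_n(\xbf)=\big(T_{\ebf_1(n)}(\langle \xbf,\ebf_1(n)\rangle),\dots,T_{\ebf_p(n)}(\langle \xbf,\ebf_p(n)\rangle)\big)$. With this choice the $\ell_1$ computation is one line: by the 1D isometry property,
\[
\|f_n(\xbf)-f_n(\xbf')\|_1=\sum_{k=1}^{p}\big|\langle \xbf-\xbf',\ebf_k(n)\rangle\big|,
\]
and this is compared directly to $\|\xbf-\xbf'\|_1=\sum_k |x_k-x'_k|$ via the first-order remainder $R$ of Lemma~\ref{write_in_new} and the reverse triangle inequality. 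No passage through $\ell_2$ is needed.

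Your route through $\ell_2$ does not close: norm equivalence on $\R^p$ only gives multiplicative constants between $\|\cdot\|_1$ and $\|\cdot\|_2$, so $\|u\|_2\approx\|v\|_2$ says nothing about $\|u\|_1-\|v\|_1$ (take $u=(1,0)$ and $v=(\tfrac{1}{\sqrt2},\tfrac{1}{\sqrt2})$). Moreover, from $\big|\|u\|_2^2-\|v\|_2^2\big|=o(1/n)$ you only get $\big|\|u\|_2-\|v\|_2\big|=o(1/\sqrt{n})$ in general, not $o(1/n)$, since $|a-b|^2\le|a^2-b^2|$ is the only uniform bound when $a+b$ can be small. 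So ``absorbing constants into the $o(1/n)$'' is not possible here. With your definition $f_n(\xbf)=\sum_i T_{\ebf_i(n)}(\langle\xbf,\ebf_i(n)\rangle)\,\ebf_i(n)$, the $\ell_1$ norm of the difference is $\sum_j\big|\sum_i c_i(\ebf_i(n))_j\big|$, which is not $\sum_i|c_i|$ and does not reduce cleanly. The fix is simply to adopt the paper's standard-coordinate definition of $f_n$. For the Fourier step you are right that the coupling between directions is the delicate point; with the paper's $f_n$ one has exactly $\langle \sbf,f_n(\xbf)\rangle=\sum_k s_k\,T_{\ebf_k(n)}(\langle\xbf,\ebf_k(n)\rangle)$ (no remainder needed here), and the comparison with $\mathcal{F}_\nu(\sbf)$ is then carried out by expanding $\langle \sbf,\ybf\rangle$ via Lemma~\ref{write_in_new} and bounding the difference of the two complex exponentials by the sine of an $o(1/n)$ phase.
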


\begin{proof}
In the following $x_i$ denotes the $i$-th coordinate of a vector $\xbf$ in the standard basis, \ie\ a vector writes $\xbf=(x_1,\dots,x_p)$. We define:
\begin{equation}
\label{fn}
\forall n >p-1, \ \forall \xbf \in \text{supp}(\mu), \ f_{n}(\xbf)=(T_{\ebf_{1}(n)}(\langle \xbf,\ebf_{1}(n)\rangle),...,T_{\ebf_{p}(n)}(\langle \xbf,\ebf_{p}(n)\rangle))
\end{equation}
where $(\ebf_k(n))_{k\in \integ{p}}$ is the almost orthogonal basis define in Lemma \ref{allmost_orthogonal_basis}, and $T_{\ebf_{k}(n)}$ is defined from \eqref{conserv} since we have $\mathcal{Q}_{\ebf_{k}(n)}$ for all $k$. It is clear from the definition that $f_{n}(\xbf) \in \supp(\nu)$. Moreover for $\xbf,\xbf' \in \supp(\mu)$:
\begin{equation*}
\begin{split}
\|f_{n}(\xbf)-f_{n}(\xbf')\|_1&= \sum_{k=1}^{p} |T_{\ebf_{k}(n)}(\langle \xbf,\ebf_{k}(n)\rangle)-T_{\ebf_{k}(n)}(\langle \xbf',\ebf_{k}(n)\rangle) | \stackrel{(*)}{=} \sum_{k=1}^{p} |\langle \xbf,\ebf_{k}(n)\rangle-\langle \xbf',\ebf_{k}(n)\rangle| \\
&= \sum_{k=1}^{p} |\langle \xbf-\xbf',\ebf_{k}(n)\rangle| \\
\end{split}
\end{equation*}
where in (*) we used that $T_{\ebf_{k}(n)}$ is an isometry since we have $\mathcal{Q}_{\ebf_{k}(n)}$ and $\langle \xbf,\ebf_{k}(n)\rangle \in \supp(P_{\ebf_{k}(n)} \# \mu)$ (idem for $\xbf'$). 
In this way: 
\begin{equation*}
\begin{split}
\big| \|f_{n}(\xbf)-f_{n}(\xbf')\|_1-\|\xbf-\xbf'\|_1 \big|&= \big| \sum_{k=1}^{p} |\langle \xbf-\xbf',\ebf_{k}(n)\rangle| -|x_{k}-x_{k}'|  \big| \leq \sum_{k=1}^{p}\big| |\langle \xbf-\xbf',\ebf_{k}(n)\rangle| -|x_{k}-x_{k}'| \big| \\
&\stackrel{*}{\leq} \sum_{k=1}^{p}|\langle \xbf-\xbf',\ebf_{k}(n)\rangle -(x_{k}-x_{k}')| = \sum_{k=1}^{p}|R(\xbf-\xbf',\ebf_{k}(n))|=o(\frac{1}{n}) \\
\end{split}
\end{equation*}
where in (*) the second triangular inequality $| |x|- |y| |\leq |x-y|$. Hence: 
\begin{equation}
\big| \|f_{n}(\xbf)-f_{n}(\xbf')\|_1-\|\xbf-\xbf'\|_1 \big| =o(\frac{1}{n})
\end{equation}
Moreover we have by definition of the Fourier transform, for $s \in \R^{P}$, 
\begin{equation*}
\label{batmanbegin}
\begin{split}
\mathcal{F}_{f_{n}\#\mu}(\sbf)&= \int e^{-2i\pi\langle \sbf,f_{n}(\xbf)\rangle} d\mu(\xbf) = \int e^{-2i\pi \sum_{k=1}^{p} s_{k} T_{\ebf_{k}(n)}(\langle \xbf,\ebf_{k}(n)\rangle)} d\mu(\xbf) 
\end{split}
\end{equation*}
Moreover using \eqref{conserv} we have $\mathcal{F}_{T_{\ebf_{k}(n)}\#(P_{\ebf_{k}(n)}\#\mu)}(t)=\mathcal{F}_{P_{\ebf_{k}(n)}\#\nu}(t)$ for all $k\in \{1,...,p\}$, and any real $t\in \R$. This implies $\int e^{-2i\pi t.T_{\ebf_{k}(n)}(\langle \ebf_{k}(n) ,\xbf\rangle)} \dr\mu(\xbf) =  \int e^{-2i\pi t \langle \ebf_{k}(n) ,\ybf\rangle} \dr\nu(\ybf)$. So by applying this results for $t=s_{k}$ we have: 
\begin{equation}
\label{equivfourr2}
\int e^{-2i\pi  s_{k}  T_{\ebf_{k}(n)}(\langle \xbf,\ebf_{k}(n)\rangle)} d\mu(\xbf) =  \int e^{-2i\pi s_{k}  \langle \ebf_{k}(n),y\rangle} d\nu(\ybf)
\end{equation}
Combining both results: 
\begin{equation}
\label{fourr_}
\mathcal{F}_{f_{n}\#\mu}(\sbf)=  \int e^{ -2i\pi \sum_{k=1}^{p} s_{k}  \langle \ebf_{k}(n),\ybf\rangle} \dr\nu(\ybf)
\end{equation}
We can now bound $|\mathcal{F}_{f_{n}\#\mu}(\sbf)-\mathcal{F}_{\nu}(\sbf)|$ as:
\begin{equation*}
\begin{split}
|\mathcal{F}_{f_{n}\#\mu}(\sbf)-\mathcal{F}_{\nu}(\sbf)|&= | \mathcal{F}_{f_{n}\#\mu}(\sbf)- \int e^{-2i\pi\langle \sbf,\ybf\rangle} \dr\nu(\ybf) |\\
&\stackrel{*}{=} | \mathcal{F}_{f_{n}\#\mu}(\sbf) - \int e^{-2i\pi [\sum_{k=1}^{p} \langle \sbf,\ebf_{k}(n) \rangle \langle \ebf_{k}(n),\ybf\rangle +\tilde{R}(\sbf,\ybf)]} \dr\nu(\ybf) | \\
&\stackrel{**}{=} |  \int e^{-2i\pi \sum_{k=1}^{p} s_{k}  \langle \ebf_{k}(n),\ybf\rangle} \dr\nu(\ybf) - \int e^{-2i\pi\tilde{R}(\sbf,\ybf)}e^{-2i\pi \sum_{k=1}^{p} \langle \sbf,\ebf_{k}(n) \rangle \langle \ebf_{k}(n),\ybf\rangle } \dr\nu(\ybf) | \\
\end{split}
\end{equation*}
where in (*) we used the expression in the new base of the inner product $\langle \sbf,\ybf\rangle$ seen in Lemma \ref{write_in_new}, in (**) we used \eqref{fourr_}. By injecting the expression of $s_{k}$ \textit{w.r.t.} the new base we have:
\begin{equation}
\begin{split}
|\mathcal{F}_{f_{n}\#\mu}(\sbf)-\mathcal{F}_{\nu}(\sbf)|&\leq |  \int e^{-2i\pi \sum_{k=1}^{p} (\langle \sbf,\ebf_{k}(n) \rangle +R(\sbf,\ebf_{k}(n)))  \langle \ebf_{k}(n),\ybf\rangle} \dr\nu(\ybf) - \int e^{-2i\pi\tilde{R}(\sbf,\ybf)}e^{-2i\pi \sum_{k=1}^{p} \langle s,\ebf_{k}(n) \rangle \langle \ebf_{k}(n),y\rangle } \dr\nu(\ybf) | \\
&=\big| \int e^{-2i\pi \sum_{k=1}^{p} \langle \sbf,\ebf_{k}(n) \rangle \langle \ebf_{k}(n),\ybf\rangle} (e^{-2i\pi \sum_{k=1}^{p} R(\sbf,\ebf_{k}(n)) \langle \ebf_{k}(n),\ybf\rangle}-e^{-2i\pi\tilde{R}(\sbf,\ybf)}) \dr\nu(\ybf) \big| \\
&\leq \int |e^{-2i\pi \sum_{k=1}^{p} R(\sbf,\ebf_{k}(n)) \langle \ebf_{k}(n),\ybf\rangle}-e^{-2i\pi\tilde{R}(\sbf,\ybf)}| \dr\nu(\ybf) \\
&= \int |e^{-2i\pi\tilde{R}(\sbf,\ybf)} (e^{-2i\pi (\sum_{k=1}^{p} R(\sbf,\ebf_{k}(n)) \langle \ebf_{k}(n),y\rangle-\tilde{R}(\sbf,\ybf))}-1)|\dr\nu(\ybf) \\
&\leq \int |e^{-2i\pi (\sum_{k=1}^{p} R(\sbf,\ebf_{k}(n)) \langle \ebf_{k}(n),\ybf\rangle-\tilde{R}(\sbf,\ybf))}-1| \dr\nu(\ybf) \\
&= \int | 2ie^{-i\pi (\sum_{k=1}^{p} R(\sbf,\ebf_{k}(n)) \langle \ebf_{k}(n),\ybf\rangle-\tilde{R}(\sbf,\ybf))} \sin(\pi (\sum_{k=1}^{p} R(\sbf,\ebf_{k}(n)) \langle \ebf_{k}(n),\ybf\rangle-\tilde{R}(\sbf,\ybf)) | \dr\nu(\ybf) \\
&\leq  \int | \sin(\pi (\sum_{k=1}^{p} R(\sbf,\ebf_{k}(n)) \langle \ebf_{k}(n),\ybf\rangle-\tilde{R}(\sbf,\ybf)) | \dr\nu(\ybf) \\
&\leq \pi \int ( \sum_{k=1}^{p} |R(\sbf,\ebf_{k}(n)) \langle \ebf_{k}(n),\ybf\rangle|+|\tilde{R}(\sbf,\ybf)| )\dr\nu(\ybf) \\
&\stackrel{*}{=}o(\frac{1}{n})
\end{split}
\end{equation}
in (*) the fact that each term is $o(\frac{1}{n})$. In this way: 
\begin{equation}
|\mathcal{F}_{f_{n}\#\mu}(s)-\mathcal{F}_{\nu}(s)| =o(\frac{1}{n})
\end{equation}

Moreover $(f_{n})_{n > p-1}$ is also uniformly bounded. To see that we consider $\xbf \in \text{supp}(\mu)$. We have that for all $k \in \integ{p}$ $T_{\ebf_{k}(n)}(\langle \xbf,e_{k}(n)\rangle) \in \supp(P_{\ebf_{k}(n)}\#\nu)$ by definition of $T_{\ebf_{k}(n)}$. So there exists a $\ybf_{0}(\xbf,n,k) \in \text{supp}(\nu)$ such that $T_{\ebf_{k}(n)}(\langle \xbf,\ebf_{k}(n)\rangle)=\langle \ybf_{0}(x,n,k),\ebf_{k}(n)\rangle$. In this way $|T_{\ebf_{k}(n)}(\langle \xbf,\ebf_{k}(n)\rangle)|=|\langle \ybf_{0}(x,n,k),\ebf_{k}(n)\rangle| \leq \|\ybf_{0}(x,n,k)\|_{2} \|\ebf_{k}(n)\|_2$ by Cauchy-Swartz. 

Moreover $\|\ebf_{k}(n)\|_2 < \sqrt{\frac{1}{n}} \leq \sqrt{\frac{1}{p-1}} \leq 1$ and since $\nu$ has compact support then there is a constant $M_{\nu}$ we have $\|\ybf_{0}(x,n,k)\|_{2}\leq M_{\nu}$

So we have for $n\in \mathbb{N}$, $\xbf \in \text{supp}(\mu)$, 
\begin{equation*}
 \begin{split}
\|f_{n}(\xbf)\|_{2}^{2}&=\sum_{k=1}^{p} |T_{\ebf_{k}(n)}(\langle \xbf,\ebf_{k}(n)\rangle)|^{2} \leq p M_{\nu}
\end{split}
\end{equation*}
Since on $\mathbb{R}^{p}$ all norms are equivalent this suffices to state the existence of a constant $C$ such that $\forall \xbf \in \mathbb{R}^{p}, n \in \mathbb{N}, \|f_{n}(\xbf)\|_1\leq C$ so that $(f_n)_{n\in \mathbb{N}}$ is uniformly bounded. Reindexing $(f_n)_{n> p-1}$ gives the desired result.

\end{proof}

We can now prove Theorem \ref{cramer_gene}.

\begin{proof}[Proof of Theorem \ref{cramer_gene}]
We consider the sequence $(f_{n})_{n \in \mathbb{N}}$ defined in Proposition \ref{thebig_prop}. We will show that $(f_{n})_{n \in \mathbb{N}}$ is equicontinuous. Let $\epsilon >0$, using \eqref{close} there exists a $N\in \mathbb{N}$ such that  we have for all $\xbf,\xbf' \in \text{supp}(\mu)$:
\begin{equation} 
\|f_{n}(\xbf)-f_{n}(\xbf')\|_1 \leq  \epsilon +\|\xbf-\xbf'\|_1 \ \text{ for all } n \geq N
\end{equation}

Now let $\delta < \epsilon$. Suppose that $\|\xbf-\xbf'\|_1 < \delta$ then 

\begin{equation} 
\|f_{n}(x)-f_{n}(x')\|_1 <  \epsilon +\delta < 2 \epsilon \ \text{ for all } n \geq N
\end{equation}

Without loss of generality we can reindex $(f_{n})_{n \in \mathbb{N}}$ for $n$ large enough ($n\geq N$) so that $(f_{n})_{n \in \mathbb{N}}$ is equicontinuous with the previous argument.

Since $(f_{n})_{n \in \mathbb{N}}$ is a uniformly bounded and equicontinuous sequence from the support of $\mu$ which is compact to $\R^{p}$ we can apply Arzela-Ascoli theorem (see Memo \ref{memo:Arzela}) which states that $(f_{n})_{n \in \mathbb{N}}$ has a uniformly convergent subsequence. We denote by $(f_{\phi(n)})_n$ this sequence. We have $f_{\phi(n)} \underset{n \to \infty}{\underset{\rightarrow}{u}} f$ this sequence.

Moreover equation \eqref{fourr} states that for all $\sbf \in \R^{p}$, $\mathcal{F}_{f_{n}\#\mu}(\sbf) \underset{n \to \infty}{\rightarrow} \mathcal{F}_{\nu}(\sbf)$. In this way $(\mathcal{F}_{f_{n}\#\mu}(\sbf))_{n \in \mathbb{N}}$ is a convergent real valued sequence, so every adherence value goes to the same limit, hence $\mathcal{F}_{f_{\phi(n)}\#\mu}(\sbf) \underset{n \to \infty}{\rightarrow} \mathcal{F}_{\nu}(\sbf)$.

Moreover the function $f$ is a measure preserving isometry from $\supp(\mu)$ to $\supp(\nu)$. Indeed let $\epsilon_{1} >0, \sbf\in \R^{p}$, there exists from previous statements $N_{0},N_{1} \in \mathbb{N}$ such that for $n\geq N_{0}$, $|\mathcal{F}_{f_{\phi(n)}\#\mu}(\sbf)-\mathcal{F}_{\nu}(\sbf)| < \epsilon_{1}$ and $n\geq N_{1}$, $|\mathcal{F}_{f_{\phi(n)}\#\mu}(\sbf)-\mathcal{F}_{f\#\mu}(\sbf)| < \epsilon_{1}$. Let $n \geq \text{max}(N_{0},N_{1})$
\begin{equation*}
\begin{split}
|\mathcal{F}_{f\#\mu}(\sbf)-\mathcal{F}_{\nu}(\sbf)| &\leq |\mathcal{F}_{f_{\phi(n)}\#\mu}(\sbf)-\mathcal{F}_{\nu}(\sbf)| + |\mathcal{F}_{f_{\phi(n)}\#\mu}(s)-\mathcal{F}_{f\#\mu}(\sbf)|\\
&< 2 \epsilon_{1}
\end{split}
\end{equation*}
As this result holds for any $\epsilon_{1} >0$ we have $\mathcal{F}_{f\#\mu}(\sbf)=\mathcal{F}_{\nu}(\sbf)$ and by injectivity of the Fourrier transform $f\#\mu=\nu$ such that $f$ is measure preserving. 

In the same way for any $\xbf,\xbf' \in \text{supp}(\mu), \epsilon_{2}>0$ and $n$ large enough:  
\begin{equation*}
\begin{split}
\big| \|f(\xbf)-f(\xbf')\|_1-\|\xbf-\xbf'\|_1 \big| &\leq \big| \|f_{\phi(n)}(\xbf)-f_{\phi(n)}(\xbf')\|_1-\|f(\xbf)-f(\xbf')\|_1 \big| \\
&+ \big|\|f_{\phi(n)}(\xbf)-f_{\phi(n)}(\xbf')\|_1-\|\xbf-\xbf'\|_1 \big|< 2 \epsilon_{2}
\end{split}
\end{equation*}
using $f_{\phi(n)} \underset{n \to \infty}{\underset{\rightarrow}{u}} f$ and \eqref{close}. As this result holds true for any $\epsilon_{2} >0$ we have $\|f(\xbf)-f(\xbf')\|=\|\xbf-\xbf'\|$ for any $\xbf,\xbf' \in \text{supp}(\mu)$ which concludes.

\end{proof}

\begin{figure*}[!b]
\begin{memo}
\label{memo:Arzela}
Let $(\Xcal,d)$ be a compact metric space and $\|.\|$ a norm on $\mathbb{R}^{p}$. We say taht:
\begin{itemize}
\item A family $\mathcal{F} \subset C(\Xcal,\mathbb{R}^{p})$  is \emph{bounded} means if there exists a positive constant $M<\infty$ such that $\|f(x)\| \leq M$ for all $x \in \Xcal$ and  $f \in \mathcal{F}$
\item A family $\mathcal{F} \subset C(X,\mathbb{R}^{p})$  is \emph{equicontinuous} means if for every  $\epsilon>0$ there exists $\delta>0$ (which depends only on  $\epsilon$) such that for  $x, y \in \Xcal$: 
\begin{equation}
d(x, y)<\delta \Rightarrow\|f(x)-f(y)\|<\epsilon \quad \forall f \in \mathcal{F}
\end{equation}
\end{itemize}
If $(f_{n})_{n\in \mathbb{N}}$ is a sequence in $C(X,\mathbb{R}^{p})$ that is bounded and equicontinuous then the Arzela-Ascoli states that it has a uniformly convergent subsequence (see Theorem 7.25 \cite{rudin_principles_1976}) .
\end{memo}
\end{figure*}

\begin{corr}
Let $\mu,\nu \in \Pm(\R^{p})\times \Pm(\R^{p})$ with compact support. If $\sgw(\mu,\nu)=0$ then $\mu$ and $\nu$ are isomorphic for the distance induced by the $\ell_{1}$ norm on $\R^{p}$, \ie\ $d(\xbf,\xbf')=\sum_{i=1}^{p} |x_{i}-x_{i}'|$ for $(\xbf,\xbf') \in \R^{p} \times \R^{p}$. In particular this implies:
\begin{equation}
\sgw(\mu,\nu)=0 \implies \gw_{2}(d,d,\mu,\nu)=0
\end{equation} 

\end{corr}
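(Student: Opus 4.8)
The plan is to reduce the statement to Theorem \ref{cramer_gene} by showing that $\sgw(\mu,\nu)=0$ forces the one-dimensional projections of $\mu$ and $\nu$ to be isomorphic along almost every direction, and then to read off the $\gw_2$ consequence from the metric properties of $\gw$.

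First I would use that $\sgw(\mu,\nu)$ is the integral over $\Sp^{p-1}$, against the uniform measure $\lambda_{p-1}$, of the nonnegative integrand $\thetab\mapsto \gw_2^{2}(d^{2},P_{\thetab}\#\mu,P_{\thetab}\#\nu)$. A nonnegative function with vanishing integral is zero almost everywhere, hence $\gw_2(d^{2},P_{\thetab}\#\mu,P_{\thetab}\#\nu)=0$ for $\lambda_{p-1}$-almost every $\thetab$. Here the ground cost of the inner problem is $d^{2}=|\cdot|^{2}$, i.e. the squared Euclidean distance on the line, so this is exactly the setting of Theorem \ref{metric_gw}(iv) with $d_\Xcal=d_\Ycal=|\cdot|$ and $q=2$.

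Next I would invoke Theorem \ref{metric_gw}(iv) to translate $\gw_2(d^{2},P_{\thetab}\#\mu,P_{\thetab}\#\nu)=0$ into the statement that $(\supp(P_{\thetab}\#\mu),|\cdot|,P_{\thetab}\#\mu)$ and $(\supp(P_{\thetab}\#\nu),|\cdot|,P_{\thetab}\#\nu)$ are strongly isomorphic; the supports are compact since $\mu,\nu$ are, so $\gw$ is finite and the equivalence applies. A strong isomorphism provides a bijective, measure-preserving isometry $T_{\thetab}$ of the one-dimensional supports, which is precisely the property $\mathcal{Q}_{\thetab}$ used in Theorem \ref{cramer_gene}. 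Thus $\mathcal{Q}_{\thetab}$ holds for almost every $\thetab$, and since $\nu$ has compact support I can apply Theorem \ref{cramer_gene} to produce a measure-preserving isometry $f$ between $\supp(\mu)$ and $\supp(\nu)$ with $f\#\mu=\nu$ and $\|f(\xbf)-f(\xbf')\|_1=\|\xbf-\xbf'\|_1$. This is exactly the assertion that $\mu$ and $\nu$ are isomorphic for the distance induced by the $\ell_1$ norm.

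Finally, for the ``in particular'' claim I would plug the suboptimal coupling $\gamma=(id\times f)\#\mu\in\couplingset(\mu,\nu)$ into the $\gw_2(d,d,\mu,\nu)$ objective: since $\|f(\xbf)-f(\xbf')\|_1=\|\xbf-\xbf'\|_1$, the integrand $|d(\xbf,\xbf')-d(\ybf,\ybf')|^{2}$ vanishes $\gamma\otimes\gamma$-almost everywhere, whence $\gw_2(d,d,\mu,\nu)=0$ (alternatively one invokes Theorem \ref{metric_gw}(ii) directly, noting that $f$ furnishes a strong isomorphism for $d$). The main obstacle is the bookkeeping of the middle step: verifying that ``$\gw_2=0$ with squared-distance cost'' matches exactly the strong-isomorphism hypothesis $\mathcal{Q}_{\thetab}$ demanded by Theorem \ref{cramer_gene} — in particular that a strong isomorphism yields a \emph{surjective} isometry $T_{\thetab}$ pushing $P_{\thetab}\#\mu$ onto $P_{\thetab}\#\nu$ — together with the measurability of $\thetab\mapsto\gw_2(d^2,P_{\thetab}\#\mu,P_{\thetab}\#\nu)$ needed to pass from a vanishing integral to an almost-everywhere vanishing integrand.
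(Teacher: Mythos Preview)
Your proposal is correct and follows essentially the same route as the paper: pass from $\sgw(\mu,\nu)=0$ to almost-everywhere vanishing of the 1D Gromov--Wasserstein distances, use the metric properties of $\gw$ to obtain the strong isomorphism $\mathcal{Q}_{\thetab}$ of the projections, apply Theorem~\ref{cramer_gene} to produce the $\ell_1$-isometry $f$, and then plug the coupling $(id\times f)\#\mu$ into the $\gw_2(d,d,\mu,\nu)$ objective. The paper's proof is slightly terser (it does not explicitly cite Theorem~\ref{metric_gw}(iv) or flag the measurability point you raise), but the logical skeleton is identical.
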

\begin{proof}
If $\sgw(\mu,\nu)=0$ then using the Gromov-Wasserstein properties it implies that for almost all $\thetab \in \Sp^{p-1}$ the projected measures are isomorphic. Moreover since $\mu,\nu$ have compact support, it is bounded and we can directly apply Theorem \ref{cramer_gene} to state the existence of a measure preserving application $f$ as defined in Theorem \ref{cramer_gene}. We consider the coupling $\pi=(id\times f)\#\mu \in \couplingset(\mu,\nu)$ since $f\#\mu=\nu$. Then we have:
\begin{equation*}
\begin{split}
\int \int |d(\xbf,\xbf')-d(\ybf,\ybf')|^{2} \dr \pi(\xbf,\ybf) \dr \pi(\xbf',\ybf') &= \int \int |d(\xbf,\xbf')-d(f(\xbf),f(\xbf'))|^{2} \dr \mu(\xbf)\dr\mu(\xbf') \\
&=\int \int |\|\xbf-\xbf'\|_1-\|f(\xbf)-f(\xbf')\|_1|^{2} \dr \mu(\xbf)\dr\mu(\xbf') =0
\end{split}
\end{equation*}
Since $f$ is an isometry. This directly implies that $\gw_{2}(d,d,\mu,\nu)=0$.
\end{proof}

\subsection{Additional results -- $SW_{\D}$ and $RISW$ \label{proof:robust_sw}}

\begin{figure}[t]
  \centering
  \includegraphics[width=.7\linewidth]{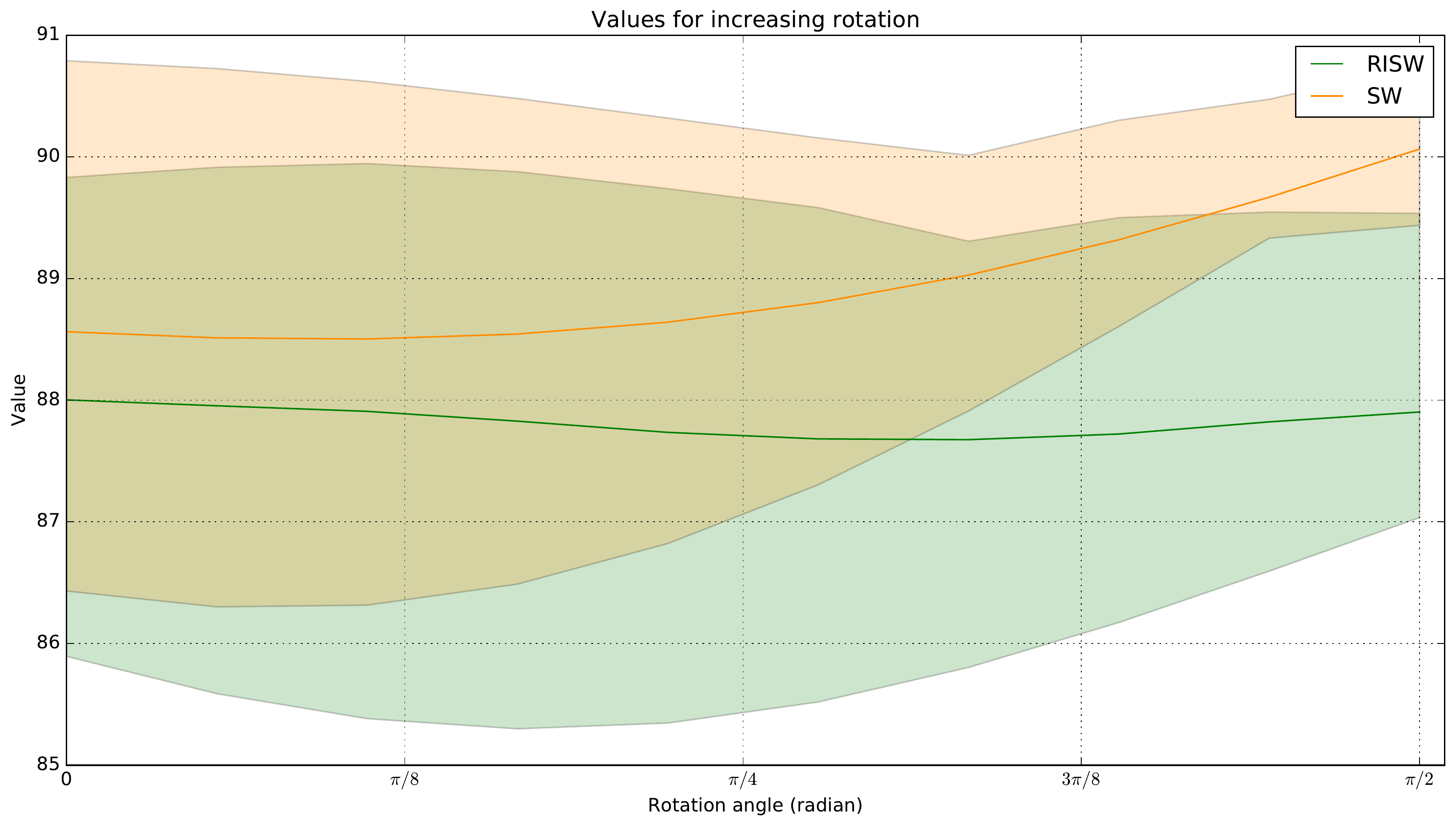}
  \caption{Illustration of $SW$, $RISW$ on spiral datasets for varying rotations on discrete 2D spiral datasets. (left) Examples of spiral distributions for source and target with different rotations. (right) Average value of $SW$ and $RISW$ with $L=20$ as a function of rotation angle of the target. Colored areas correspond to the 20\% and 80\% percentiles. }
  \label{fig:spiral_example_sw}
\end{figure}

Analogously to $SGW$ we can define for the Sliced-Wasserstein distance $SW_{\D}(\mu,\nu)$ for $\mu,\nu \in \Pm(\R^{p})\times \Pm(\R^{q})$ with $p\neq q$ and its rotational invariant counterpart as:

\begin{equation}{}
\label{sw_here}
\begin{split}
&SW_{\D}(\mu,\nu)= \int_{\mathbf{S}^{q-1}}SW(P_{\theta}\#\mu_{\D},P_{\theta}\#\nu) \dr\lambda_{q-1}(\theta) \\
&RISW(\mu,\nu)= \underset{\D \in \mathbb{V}_{q}(\R^{p})}{\min}SW_{\D}(\mu,\nu)
\end{split}
\end{equation}

where $SW$ is the Sliced-Wasserstein distance. The complexity for computing $SW_{\D}$ between two discrete probability measures with $n$ atoms and uniform weights is $O(Ln(p+q+\log(n)))$ which is exactly the same complexity as $SGW_{\D}$. With these formulations, we can perform the same experiment as for RISGW on the spiral dataset. The optimisation on the Stiefel manifold is performed using Pymanopt as for $SGW$. Results are reported in Figure \ref{fig:spiral_example_sw}. As one can see, $RISW$ is rotational invariant on average whereas $SW$ is not. One can also note that, due to the sampling process of the spiral dataset, the variance is quite large. This can be explained by the fact that, unlike $SGW$, the Sliced-Wasserstein may realign the distributions without taking the rotation into account.

\subsection{Proof of Lemma \ref{calculation_gw} -- Reductions of the GW costs for inner products and squared Euclidean distance matrices}
\label{sec:proof_reduc}
We recall the lemma:
\begin{lemma*}
Suppose that there exist scalars $a,b,c$ such that $c_{\Xcal}(\xbf,\xbf')=a\|\xbf\|_{2}^{2}+b\|\xbf'\|_{2}^{2}+c\scalar{\xbf}{\xbf'}{p}$ and $c_{\Ycal}(\ybf,\ybf')=a\|\ybf\|_{2}^{2}+b\|\ybf'\|_{2}^{2}+c\scalar{\ybf}{\ybf'}{q}$. Then:
\begin{equation}
\gwloss_2(c_{\Xcal},c_{\Ycal},\pi) = C_{\mu,\nu} -2 Z(\pi)
\end{equation}
where $C_{\mu,\nu}=\int c_{\Xcal}^{2} \dr \mu \dr \mu + \int c_{\Ycal}^{2} \dr \nu \dr \nu-4ab \int \|\xbf\|_{2}^{2} \|\ybf\|_{2}^{2} d\mu(\xbf) d\nu(\ybf)$ and:
\begin{equation}
\begin{split}
Z(\pi) &= (a^{2}+b^{2}) \int \|\xbf\|_{2}^{2} \|\ybf\|_{2}^{2} \dr\pi(\xbf,\ybf)+ c^{2} \|\int \ybf \xbf^{T} \dr\pi(\xbf,\ybf)\|_{\F}^{2} \\
&+(a+b)c\int \big[\|\xbf\|_{2}^{2} \scalar{\E_{Y\sim\nu}[Y]}{\ybf}{q} + \|\ybf\|_{2}^{2}\scalar{\E_{X\sim \mu}[X]}{\xbf}{p} \dr\pi(\xbf,\ybf)\big] \\
\end{split}
\end{equation}
\end{lemma*}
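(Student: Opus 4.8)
The plan is to expand the GW loss directly and collect terms according to their dependence on the coupling $\pi$. Starting from
\begin{equation*}
\gwloss_2(c_{\Xcal},c_{\Ycal},\pi) = \int \int |c_{\Xcal}(\xbf,\xbf')-c_{\Ycal}(\ybf,\ybf')|^2 \dr \pi(\xbf,\ybf) \dr \pi(\xbf',\ybf'),
\end{equation*}
I would expand the square as $c_{\Xcal}^2 - 2 c_{\Xcal} c_{\Ycal} + c_{\Ycal}^2$. The two ``pure'' terms $\int\int c_{\Xcal}(\xbf,\xbf')^2 \dr\pi\dr\pi$ and $\int\int c_{\Ycal}(\ybf,\ybf')^2 \dr\pi\dr\pi$ only see the marginals $\mu,\nu$ of $\pi$, since $c_{\Xcal}$ depends on $\xbf,\xbf'$ alone and $c_{\Ycal}$ on $\ybf,\ybf'$ alone; hence they reduce to $\int c_{\Xcal}^2 \dr\mu\dr\mu$ and $\int c_{\Ycal}^2 \dr\nu\dr\nu$ and contribute the first two pieces of $C_{\mu,\nu}$. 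The whole coupling-dependence therefore sits in the cross term $-2\int\int c_{\Xcal}(\xbf,\xbf') c_{\Ycal}(\ybf,\ybf') \dr\pi(\xbf,\ybf)\dr\pi(\xbf',\ybf')$, which is the object to analyze.

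The key step is to substitute $c_{\Xcal}(\xbf,\xbf')=a\|\xbf\|_2^2+b\|\xbf'\|_2^2+c\scalar{\xbf}{\xbf'}{p}$ and the analogous expression for $c_{\Ycal}$ into this cross term, multiply out the nine resulting products, and integrate each against $\dr\pi(\xbf,\ybf)\dr\pi(\xbf',\ybf')$. Here I would exploit systematically the fact that $\pi \in \couplingset(\mu,\nu)$ has marginals $\mu,\nu$, so that whenever an integrand factorizes into a part depending on $(\xbf,\ybf)$ and a part depending on $(\xbf',\ybf')$, the double integral splits as a product of single integrals; and whenever a factor depends only on $\xbf$ (resp.\ $\ybf$), its $\pi$-integral reduces to a $\mu$- (resp.\ $\nu$-) integral. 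For instance the term $ab\,\|\xbf\|_2^2\|\ybf'\|_2^2$ integrates to $ab \int\|\xbf\|_2^2\dr\mu(\xbf)\int\|\ybf'\|_2^2\dr\nu(\ybf')$, but after symmetrization (swapping the roles of the primed and unprimed pairs) these boundary-type pieces assemble into the constant $-4ab\int\|\xbf\|_2^2\|\ybf\|_2^2\dr\mu\dr\nu$ appearing in $C_{\mu,\nu}$. The genuinely $\pi$-dependent survivors are: the diagonal products $a^2\|\xbf\|_2^2\|\ybf\|_2^2$ and $b^2\|\xbf'\|_2^2\|\ybf'\|_2^2$ giving $(a^2+b^2)\int\|\xbf\|_2^2\|\ybf\|_2^2\dr\pi$; the cross products of the norm terms with the inner-product term, of shape $(a+b)c\,\|\xbf\|_2^2\scalar{\ybf}{\ybf'}{q}$ and symmetric, which after integrating out the primed variable against its marginal yield $(a+b)c\int[\|\xbf\|_2^2\scalar{\E_{Y\sim\nu}[Y]}{\ybf}{q}+\|\ybf\|_2^2\scalar{\E_{X\sim\mu}[X]}{\xbf}{p}]\dr\pi$; and the pure inner-product square $c^2\scalar{\xbf}{\xbf'}{p}\scalar{\ybf}{\ybf'}{q}$.

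The one step requiring a small computation is recognizing that $c^2\int\int \scalar{\xbf}{\xbf'}{p}\scalar{\ybf}{\ybf'}{q}\dr\pi(\xbf,\ybf)\dr\pi(\xbf',\ybf')$ equals $c^2\|\int \ybf\xbf^T\dr\pi(\xbf,\ybf)\|_{\F}^2$. I would write $\scalar{\xbf}{\xbf'}{p}\scalar{\ybf}{\ybf'}{q}=\sum_{k,l}x_k x'_k y_l y'_l$, interchange sum and integral, and use the factorization over the two independent copies of $\pi$ to get $\sum_{k,l}\big(\int x_k y_l \dr\pi\big)^2$, which is precisely the squared Frobenius norm of the matrix $\mathbf{V}_\pi=\int \ybf\xbf^T\dr\pi$. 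Collecting the three surviving families gives exactly $Z(\pi)$, and since the cross term carries a factor $-2$, the identity $\gwloss_2 = C_{\mu,\nu}-2Z(\pi)$ follows. The main obstacle is purely bookkeeping: tracking all nine products, correctly separating boundary (marginal-only, hence constant) contributions from the $\pi$-dependent ones, and handling the symmetrization so that the $ab$ cross terms land in $C_{\mu,\nu}$ rather than in $Z(\pi)$; there is no analytic difficulty, only the need for care with the signs and the factor-$2$ symmetrizations.
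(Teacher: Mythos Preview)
Your proposal is correct and follows essentially the same approach as the paper: expand the square, observe that the pure $c_{\Xcal}^2$ and $c_{\Ycal}^2$ terms depend only on the marginals, then expand the cross term $\int c_{\Xcal}c_{\Ycal}\,\dr\pi\dr\pi$ into its nine products and sort them into constant versus $\pi$-dependent pieces. The only cosmetic difference is in the Frobenius-norm identity for the $c^2$ term: the paper writes $\scalar{\xbf}{\xbf'}{p}\scalar{\ybf}{\ybf'}{q}$ as $\tr(\ybf'\xbf'^{T}\xbf\ybf^{T})$ and uses linearity and cyclicity of the trace, whereas you expand in coordinates as $\sum_{k,l}x_k x'_k y_l y'_l$ and factor the double integral; both reach $\|\int \ybf\xbf^T\dr\pi\|_{\F}^2$ equally directly.
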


\begin{proof}
Let $\pi \in \Pi(\mu,\nu)$. We have $J_{2}(c_\Xcal,c_\Ycal,\pi)=\int c_{\Xcal}^{2}\dr \mu \dr \mu+ \int c_{\Ycal}^{2}\dr \nu \dr \nu -2\int c_{\Xcal} c_\Ycal \dr \pi \dr \pi$. In this way:
\begin{equation*}
\begin{split}
&\int c_{\Xcal} c_{\Ycal} \dr\pi\dr\pi=\int (a\|\xbf\|_{2}^{2}+b\|\xbf'\|_{2}^{2}+c\scalar{\xbf}{\xbf'}{p})(a\|\ybf\|_{2}^{2}+b\|\ybf'\|_{2}^{2}+c\scalar{\ybf}{\ybf'}{q})\dr\pi(\xbf,\ybf)\dr\pi(\xbf',\ybf')\\
&=\int \big[(a^{2}\|\xbf\|_{2}^{2}\|\ybf\|_{2}^{2}+ab\|\xbf\|_{2}^{2}\|\ybf'\|_{2}^{2}+ac\|\xbf\|_{2}^{2}\scalar{\ybf}{\ybf'}{q}) + (ab\|\xbf'\|_{2}^{2}\|\ybf\|_{2}^{2}+b^{2}\|\xbf'\|_{2}^{2}\|\ybf'\|_{2}^{2}+bc\|\xbf'\|_{2}^{2}\scalar{\ybf}{\ybf'}{q}) \\
&+ (ca\scalar{\xbf}{\xbf'}{p} \|\ybf\|_{2}^{2} +cb\scalar{\xbf}{\xbf'}{p} \|\ybf'\|_{2}^{2}+c^{2} \scalar{\xbf}{\xbf'}{p} \scalar{\ybf}{\ybf'}{q}) \big]\dr\pi(\xbf,\ybf)\dr\pi(\xbf',\ybf') \\
&= (a^{2}+b^{2}) \int \|\xbf\|_{2}^{2}\|\ybf\|_{2}^{2} \dr\pi(\xbf,\ybf) + 2ab \int \|\xbf\|_{2}^{2} \|\ybf\|_{2}^{2} d\mu(\xbf) d\nu(\ybf) +c^{2} \int \scalar{\xbf}{\xbf'}{p} \scalar{\ybf}{\ybf'}{q} \dr\pi(\xbf,\ybf)\dr\pi(\xbf',\ybf')\\
&+(a+b)c \int \|\xbf\|_{2}^{2}\scalar{\int \ybf \dr\pi(\xbf,\ybf)}{\ybf}{q} \dr\pi(\xbf,\ybf)+(a+b)c \int \|\ybf\|_{2}^{2}\scalar{\int \xbf \dr\pi(\xbf,\ybf)}{\xbf}{p} \dr\pi(\xbf,\ybf)
\end{split}
\end{equation*}
Moreover:
\begin{equation*}
\begin{split}
&\int \scalar{\xbf}{\xbf'}{p} \scalar{\ybf}{\ybf'}{q} \dr\pi(\xbf,\ybf)\dr\pi(\xbf',\ybf')=\int \xbf^{T}\xbf' \ybf^{T}\ybf' \dr\pi(\xbf,\ybf)\dr\pi(\xbf',\ybf') = \int \tr(\xbf^{T}\xbf' \ybf^{T}\ybf') \dr\pi(\xbf,\ybf)\dr\pi(\xbf',\ybf')\\
&=\int \tr(\ybf' \xbf^{T}\xbf' \ybf^{T}) \dr\pi(\xbf,\ybf)\dr\pi(\xbf',\ybf')\stackrel{*}{=} \int \tr(\ybf'\xbf'^{T}\xbf\ybf^{T}) \dr\pi(\xbf,\ybf)\dr\pi(\xbf',\ybf')\\
&=\tr((\int \ybf'\xbf'^{T}\dr\pi(\xbf',\ybf')(\int \xbf\ybf^{T} \dr\pi(\xbf,\ybf)))) =\|\int \xbf\ybf^{T} \dr\pi(\xbf,\ybf)\|_{\F}^{2}=\|\int \ybf\xbf^{T} \dr\pi(\xbf,\ybf)\|_{\F}^{2}
\end{split}
\end{equation*}
where in (*) we used that $\xbf'^{T}\xbf \in \R$ so it is equal to its transpose. 
\end{proof}

\subsection{Proof of the existence and finiteness \eqref{GWprob} and \eqref{InvOT} }
\label{sec:finitesness}
In this section we prove that \eqref{GWprob} and \eqref{InvOT} are finite and that \eqref{InvOT} always admits a maximizer. More precisely:
\begin{lemma}
\label{lemma:existence_compacity}
Let $\mu \in \P(\R^{p}),\nu \in \P(\R^{q})$ with $\int \|\xbf\|_{2}^{4}\dr\mu(\xbf)<+\infty,\int \|\ybf\|_{2}^{4}\dr\nu(\ybf)<+\infty$. Both \eqref{GWprob} and \eqref{InvOT} are finite.

Moreover the set $F_{p,q}$ is a compact subset of $\R^{q\times p}$. The functional $\pi \rightarrow \underset{\Pbf \in F_{p,q}}{\sup} \int \scalar{\Pbf \xbf}{\ybf}{q} \dr\pi(\xbf,\ybf)$ is continuous for the weak convergence of measure. In particular problem \eqref{InvOT} admits an optimal solution $\pi^{*} \in \Pi(\mu,\nu)$
\end{lemma}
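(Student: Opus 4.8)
The plan is to treat the four assertions in the order they are stated, isolating the weak continuity of the functional as the only genuinely delicate point; everything else reduces to the two algebraic lemmas already proved and to standard facts about couplings.

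First I would establish finiteness of both problems and the compactness of $F_{p,q}$. For \eqref{GWprob}, setting $a=b=0$, $c=1$ in Lemma \ref{calculation_gw} gives $\gwloss_2(c_{\Xcal},c_{\Ycal},\pi)=C_{\mu,\nu}-2\|\mathbf{V_{\pi}}\|_{\F}^{2}$ with $C_{\mu,\nu}=\int c_{\Xcal}^2\dr\mu\dr\mu+\int c_{\Ycal}^2\dr\nu\dr\nu$. By Cauchy--Schwarz, $\scalar{\xbf}{\xbf'}{p}^2\le\|\xbf\|_2^2\|\xbf'\|_2^2$, so $\int c_{\Xcal}^2\dr\mu\dr\mu\le(\int\|\xbf\|_2^2\dr\mu)^2$, which is finite since the fourth moment (hence, $\mu$ being a probability measure, the second moment) is finite; the same holds for $\nu$. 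For \eqref{InvOT}, Lemma \ref{froebnormduality} rewrites the inner supremum as $\sqrt{p}\,\|\mathbf{V_{\pi}}\|_{\F}$, and $\|\mathbf{V_{\pi}}\|_{\F}\le\int\|\ybf\xbf^{T}\|_{\F}\dr\pi=\int\|\xbf\|_2\|\ybf\|_2\dr\pi\le(\int\|\xbf\|_2^2\dr\mu)^{1/2}(\int\|\ybf\|_2^2\dr\nu)^{1/2}<+\infty$, so both values are finite. Compactness of $F_{p,q}$ is immediate: it is the Frobenius sphere of radius $\sqrt{p}$ in the finite-dimensional space $\R^{q\times p}$, hence closed and bounded, hence compact by Heine--Borel.

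The core of the argument is the continuity of $\pi\mapsto\sup_{\Pbf\in F_{p,q}}\int\scalar{\Pbf\xbf}{\ybf}{q}\dr\pi(\xbf,\ybf)$ for the weak convergence. By Lemma \ref{froebnormduality} this functional equals $\sqrt{p}\,\|\mathbf{V_{\pi}}\|_{\F}$, so by continuity of $\|\cdot\|_{\F}$ it suffices to show that $\pi\mapsto\mathbf{V_{\pi}}=\int\ybf\xbf^{T}\dr\pi$ is weakly continuous, entrywise. Each entry is $\int y_j x_i\dr\pi$, and the difficulty is that $(\xbf,\ybf)\mapsto y_j x_i$ is continuous but \emph{unbounded}, so weak convergence alone does not allow passing to the limit. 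The key observation --- and the reason the fourth-moment hypothesis is needed --- is that along any weakly convergent sequence $\pi_n\to\pi$ in $\couplingset(\mu,\nu)$ the marginals are fixed, so by Cauchy--Schwarz $\int\|\xbf\|_2^2\|\ybf\|_2^2\dr\pi_n\le(\int\|\xbf\|_2^4\dr\mu)^{1/2}(\int\|\ybf\|_2^4\dr\nu)^{1/2}$, a bound uniform in $n$. Since $|y_j x_i|\le\|\xbf\|_2\|\ybf\|_2$, this yields a uniform $L^2(\pi_n)$-bound on the test functions $y_j x_i$, hence their uniform integrability; a truncation argument --- approximating $y_j x_i$ by bounded continuous functions and controlling the tails with the uniform bound --- then lets one pass to the limit and conclude $\mathbf{V_{\pi_n}}\to\mathbf{V_{\pi}}$.

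Finally, existence follows by the direct method: the set $\couplingset(\mu,\nu)$ is weakly compact by \cite[Theorem 1.7]{San15a}, so a maximizing sequence for \eqref{InvOT} admits a weakly convergent subsequence $\pi_{n_k}\to\pi^{*}$, and by the continuity just established the supremum is attained at $\pi^{*}$. I expect the uniform-integrability step to be the main obstacle, since it is precisely where the fourth-moment assumption enters and where weak convergence has to be upgraded to convergence of the (unbounded) cross-moment $\mathbf{V_{\pi}}$; the remaining points are routine.
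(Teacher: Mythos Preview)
Your proof is correct and, on the continuity step, takes a different and somewhat cleaner route than the paper. The paper keeps $\Pbf$ as a free parameter, defines $f(\pi,\Pbf)=-\int\scalar{\Pbf\xbf}{\ybf}{q}\dr\pi$, verifies continuity of $f(\pi,\cdot)$ in $\Pbf$, and then invokes the general fact (Memo~\ref{continuity_inf}) that the infimum over a compact parameter set of a jointly continuous function is continuous; however it does not explicitly check continuity in $\pi$, which faces exactly the unbounded-test-function obstacle you identify. You instead eliminate $\Pbf$ up front via Lemma~\ref{froebnormduality}, reducing the question to weak continuity of $\pi\mapsto\mathbf{V_{\pi}}$ on $\couplingset(\mu,\nu)$, and then handle the unboundedness of $(\xbf,\ybf)\mapsto y_jx_i$ by a uniform-integrability argument exploiting the fixed marginals and the fourth-moment hypothesis. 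Your approach buys a cleaner logical structure (no joint-continuity lemma needed) and makes explicit where the moment assumption actually enters; the paper's approach, once completed, would end up needing the same uniform-integrability ingredient anyway. The finiteness, compactness, and existence parts are essentially the same in both.
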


\begin{figure*}[!b]
\begin{memo}
\label{continuity_inf}
Let $\Xcal,\Ycal$ be topological spaces with $\Ycal$ compact. If $f: \Xcal \times \Ycal \rightarrow \R$ is continuous then $g: x\rightarrow \inf_{y} f(x,y)$ is well defined and continuous. 
\begin{proof}
Note first that $g(x)>-\infty$, since for every $x \in \Xcal$, $f(x,.): \Ycal\rightarrow \R$ is continuous on a compact space so it is bounded. To prove the continuity it suffices to show that  $g^{-1}(]-\infty,a[)$ and $g^{-1}(]b,+\infty[)$ are open. For the former, if we note $\pi_\Xcal: \Xcal \times \Ycal \rightarrow \Xcal$ the canonical projection then $g^{-1}(]-\infty,a[)=\pi_\Xcal \circ f^{-1}(]-\infty,a[)$. By continuity of $f$ we can conclude that $g^{-1}(]-\infty,a[)$ is open.
For the latter first observe that $g(x)>b \implies \forall y, \ f(x,y)>b$ which means that $g(x)>b \implies \forall y, (x,y) \in f^{-1}(]b,+\infty[)$. In particular for $x \in g^{-1}(]b,+\infty[)$ and $y \in \Ycal$ there exists a neighborhood $U_{(x,y)} \times V_{(x,y)}$ contained in $f^{-1}(]b,+\infty[)$. Since $\Ycal$ is compact there exists a finite subset $\{(x,y_i)\}$ of all the neighborhoods $U_{(x,y)} \times V_{(x,y)}$ which cover all of $\{x\}\times \Ycal$. Overall:
\begin{equation}
\{x\}\times \Ycal \subset (\cap_{i=1}^{k} U_{(x,y_i)}) \times \Ycal \subset f^{-1}(]b,+\infty[)
\end{equation}
Hence $g^{-1}(]b,+\infty[)=\cup_{x\in g^{-1}(]b,+\infty[)} \cap_{i=1}^{k} U_{(x,y_i)})$ which is open so $g$ is continuous.
\end{proof}
\end{memo}
\end{figure*}

\begin{proof}

In this case $\int \|\xbf\|_{2}^{2}\dr\mu(\xbf)<+\infty,\int \|\ybf\|_{2}^{2}\dr\nu(\ybf)<+\infty$ by Hölder's inequality and for all $(\xbf,\xbf',\ybf,\ybf') \in \Xcal^{2} \times \Ycal^{2}$ $\big( \scalar{\xbf}{\xbf'}{p} - \scalar{\ybf}{\ybf'}{q} \big)^{2}\leq 2(\scalar{\xbf}{\xbf'}{p}^{2}+\scalar{\ybf}{\ybf'}{q}^{2}) \leq 2(\|\xbf\|_{2}^{2}\|\xbf'\|_{2}^{2}+\|\ybf\|_{2}^{2}\|\ybf'\|_{2}^{2})$ by Cauchy-Swartz. In particular this implies: 
\begin{equation*}
\begin{split}
&\underset{\pi \in \Pi(\mu,\nu)}{\inf} \int \int \big(\scalar{\xbf}{\xbf'}{p} - \scalar{\ybf}{\ybf'}{q}\big)^{2} \dr\pi(\xbf,\ybf)\dr\pi(\xbf',\ybf') \leq \int \int \big(\scalar{\xbf}{\xbf'}{p} - \scalar{\ybf}{\ybf'}{q}\big)^{2} \dr\mu(\xbf) \dr\mu(\xbf') \dr\nu(\ybf) \dr\nu(\ybf') \\
&\leq 2 (\int \|\xbf\|_{2}^{2}\dr\mu(\xbf) \int \|\xbf'\|_{2}^{2}\dr\mu(\xbf')+\int \|\ybf\|_{2}^{2}\dr\nu(\xbf) \int \|\ybf'\|_{2}^{2}\dr\nu(\ybf')) <+\infty
\end{split}
\end{equation*}
Moreover for any $\pi \in \couplingset(\mu,\nu), \Pbf \in F_{p,q}$: 
\begin{equation*}
\begin{split}
&\int \scalar{\Pbf \xbf}{\ybf}{q} \ \dr\pi(\xbf,\ybf) = \int \scalar{\Pbf }{\ybf \xbf^{T}}{\F} \ \dr\pi(\xbf,\ybf)\leq \int \|\Pbf\|_{\F}^{2}\|\xbf\ybf^{T}\|^{2}_{\F}\dr\pi(\xbf,\ybf)\\
&\leq p \int \|\xbf\ybf^{T}\|^{2}_{\F} \dr \pi(\xbf,\ybf)= p \int \tr(\ybf \xbf^{T} \xbf\ybf^{T}) \dr \pi(\xbf,\ybf)= p \int \tr(\xbf^{T} \xbf\ybf^{T}\ybf ) \dr \pi(\xbf,\ybf) \\
&= p \int \tr(\|\xbf\|_2^{2}\|\ybf\|_{2}^{2} ) \dr \pi(\xbf,\ybf)=p \int \|\xbf\|^{2}_{2}\ybf\|_2^{2} \dr \pi(\xbf,\ybf) \stackrel{*}{\leq} \frac{p}{2} (\int \|\xbf\|_{2}^{4} \dr \mu(\xbf) + \int \|\ybf\|_{2}^{4} \dr \nu(\ybf)) \\
&<+\infty
\end{split}
\end{equation*}
where in (*) we used Young's inequality (recalled in Memo \ref{memo:young}).

For the compacity, using Borel Lebesgue theorem it suffices to show that $F_{p,q}$ is closed and bounded. It is clearly bounded by $\sqrt{p}$ and closed as the pre-image of the closed set $\{0\}$ by the continuous application $\Pbf \rightarrow \sqrt{p}-\|\Pbf\|_{\F}$. 

We note $f: \couplingset(\mu,\nu) \times F_{p,q} \rightarrow \R$ the function $f(\pi,\Pbf)=-\int \scalar{\Pbf \xbf}{\ybf}{q}\dr\pi(\xbf,\ybf)$. For any $\pi \in \couplingset(\mu,\nu)$, $f(\pi,.)$ is continuous. Indeed suppose that $\Pbf_m \underset{\F}{\rightarrow} \Pbf$ where $\underset{\F}{\rightarrow}$ denotes the convergence in Frobenius norm, then: 
\begin{equation*}
\begin{split}
&|\int \scalar{\Pbf_{m} \xbf}{ \ybf}{q} \dr \pi(\xbf, \ybf)-\int \scalar{\Pbf \xbf}{\ybf}{q} \dr \pi(x, y)| \leq \int |\scalar{(\Pbf_{m}-\Pbf)\xbf}{\ybf}{q}|\dr \pi(\xbf,\ybf) \\
&\leq \|\Pbf_{m}-\Pbf\|_{\F}^{2}(\int \|\xbf\|_{2}^{2} \|\ybf\|_{2}^{2} \dr\pi(\xbf,\ybf)) \leq \frac{1}{2}\|\Pbf_{m}-\Pbf\|_{\F}^{2}(\int \|x\|_{2}^{4} \dr \mu(x) + \int \|y\|_{2}^{4} \dr \nu(\ybf)) \underset{m\rightarrow +\infty}{\rightarrow} 0
\end{split}
\end{equation*}
In this way, since $F_{p,q}$ is compact, $g: \pi \rightarrow \inf_{\Pbf \in F} f(\pi,\Pbf)$ is well defined and continuous for the weak convergence of measure (see Memo \ref{continuity_inf}). Since it is continuous and $\couplingset(\mu,\nu)$ is compact we can applied the Weirstrass' theorem to state that $\inf_{\pi \in \couplingset(\mu,\nu)}g(\pi)$ exists which concludes the proof. 

\end{proof}

\subsection{Proof of Lemma \ref{lemma:reduce} --  Reduction of the GW cost}
\label{sec:proof_reduce}
We recall the lemma:
\begin{lemma*}
Let $\Xcal$ and $\Ycal$ be compact subset of respectively $\R^{p}$ and $\R^{q}$. Let $\mu \in \P(\Xcal),\nu \in \P(\Ycal)$. We can assume without loss of generality that $\E_{X\sim \mu}[X]=0$ and $\E_{Y\sim\nu}[Y]=0$. In this case \eqref{eq:sqGW} is equivalent to:
\begin{equation*}
\sup_{\pi \in \Pi(\mu,\nu)} \int \|\xbf\|_{2}^{2} \|\ybf\|_{2}^{2} \dr\pi(\xbf,\ybf)+2 \|\int \ybf\xbf^{T}\dr\pi(\xbf,\ybf)\|^{2}_{\F}
\end{equation*}
\end{lemma*}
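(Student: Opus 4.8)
The plan is to apply Lemma \ref{calculation_gw} with the choice of scalars corresponding to the squared Euclidean distance case, namely $a=b=1$ and $c=-2$, so that $c_{\Xcal}(\xbf,\xbf')=\|\xbf\|_2^2+\|\xbf'\|_2^2-2\scalar{\xbf}{\xbf'}{p}=\|\xbf-\xbf'\|_2^2$ and similarly for $c_{\Ycal}$. With these values, Lemma \ref{calculation_gw} gives $\gwloss_2(c_{\Xcal},c_{\Ycal},\pi)=C_{\mu,\nu}-2Z(\pi)$ where the constant $C_{\mu,\nu}$ does not depend on $\pi$, so that minimizing $\gwloss_2$ over $\couplingset(\mu,\nu)$ is equivalent to maximizing $Z(\pi)$. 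First I would record the explicit form of $Z(\pi)$ under $a=b=1,c=-2$: the coefficient $(a^2+b^2)=2$ multiplies $\int \|\xbf\|_2^2\|\ybf\|_2^2\dr\pi$, the coefficient $c^2=4$ multiplies $\|\int \ybf\xbf^T\dr\pi\|_{\F}^2$, and the cross term carries $(a+b)c=-4$.

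The second step is the translation reduction, which is where the centering hypothesis enters. I would justify that $\gwloss_2$ is invariant under translating the support of either measure: replacing $\mu$ by the pushforward under $\xbf\mapsto \xbf+\mathbf{t}$ leaves $c_{\Xcal}(\xbf,\xbf')=\|\xbf-\xbf'\|_2^2$ unchanged, and likewise for $\nu$, so the whole cost $\gwloss_2$ is unaffected. This legitimizes assuming $\E_{X\sim\mu}[X]=0$ and $\E_{Y\sim\nu}[Y]=0$ without loss of generality. Under this centering, the cross term in $Z(\pi)$ vanishes: the term $\int\big[\|\xbf\|_2^2\scalar{\E_{Y\sim\nu}[Y]}{\ybf}{q}+\|\ybf\|_2^2\scalar{\E_{X\sim\mu}[X]}{\xbf}{p}\big]\dr\pi(\xbf,\ybf)$ is identically zero because both expectations are the zero vector. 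Hence $Z(\pi)$ collapses to $2\int\|\xbf\|_2^2\|\ybf\|_2^2\dr\pi(\xbf,\ybf)+4\|\int\ybf\xbf^T\dr\pi(\xbf,\ybf)\|_{\F}^2$.

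The final step is to divide through by the harmless positive factor $2$ and identify the result with $F(\pi)$. Maximizing $Z(\pi)$ is equivalent to maximizing $\tfrac{1}{2}Z(\pi)=\int\|\xbf\|_2^2\|\ybf\|_2^2\dr\pi(\xbf,\ybf)+2\|\int\ybf\xbf^T\dr\pi(\xbf,\ybf)\|_{\F}^2$, which is exactly the functional $F(\pi)$ appearing in the statement and in the subsequent $\eqref{dual_problem}$. Chaining these equivalences, $\min_{\pi}\gwloss_2=\min_{\pi}\big(C_{\mu,\nu}-2Z(\pi)\big)$ is attained at the same couplings as $\max_{\pi}Z(\pi)$, hence at the same couplings as $\max_{\pi}F(\pi)$, establishing \eqref{eq:sqGW} is equivalent to \eqref{dual_problem}.

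I do not anticipate a serious obstacle here, since the statement is essentially a bookkeeping consequence of Lemma \ref{calculation_gw}: the two genuinely substantive ingredients are both routine. The only point requiring a little care is the translation-invariance argument used to justify the centering assumption — I would want to make sure the invariance is stated for the full loss $\gwloss_2$ (not merely for $Z$, which does change under translation) and that I invoke compactness of $\Xcal,\Ycal$ only to guarantee finiteness of all the integrals involved so that the rearrangement of terms is legitimate. Everything else is algebraic substitution into the already-proven formula for $\gwloss_2$.
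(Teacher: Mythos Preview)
Your proposal is correct and follows essentially the same approach as the paper: apply Lemma~\ref{calculation_gw} with $a=b=1$, $c=-2$, invoke translation invariance of the squared-distance cost to justify centering, observe that the cross term vanishes, and rescale. The paper formalizes the translation-invariance step slightly more carefully via two auxiliary lemmas (one characterizing $\Pi(f\#\mu,f'\#\nu)$ as pushforwards of couplings in $\Pi(\mu,\nu)$, and one deducing invariance of the optimal value), which is precisely the point you flagged as requiring care.
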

Indeed equation \eqref{eq:sqGW} corresponds to the case $a=1,b=1,c=-2$ of Lemma \ref{calculation_gw} and so is equivalent to:
\begin{equation}
\begin{split}
&\sup_{\pi \in \Pi(\mu,\nu)} 2 \int \|\xbf\|_{2}^{2} \|\ybf\|_{2}^{2} \dr\pi(x,y)+4 \|\int \xbf\ybf^{T}\dr \pi(\xbf,\ybf)\|^{2}_{\F}\\
&-4\int \big[\|\xbf\|_{2}^{2} \scalar{\E_{Y\sim\nu}[Y]}{y}{d'} + \|\ybf\|_{2}^{2}\scalar{\E_{X\sim \mu}[X]}{x}{d} \dr \pi(\xbf,\ybf)\big]\\
\end{split}
\end{equation}
We can reduce this problem using the translation invariance of the $\gw$ cost since $\|\xbf+\mathbf{t}-(\xbf'+\mathbf{t})\|_{2}^{2}=\|\xbf-\xbf'\|_{2}^{2}$ for all $\xbf,\xbf'$ (same for $\ybf$). More precisely we can rely on the following lemmas:
\begin{lemma}
\label{couplinglemma}
Let $f: \mathbb{R}^{p} \rightarrow \mathbb{R}^{p}$ and $f': \mathbb{R}^{q} \rightarrow \mathbb{R}^{q}$ Borel and $\mu \in \P(\R^{p}),\nu \in \P(\R^{q})$
Then $$\Pi\left(f\# \mu, f'\# \nu\right)=\left\{(f \times f')\# \pi | \pi \in \Pi(\mu, \nu)\right\}$$
\end{lemma}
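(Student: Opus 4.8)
The statement to prove is the set equality
\begin{equation*}
\Pi\left(f\# \mu, f'\# \nu\right)=\left\{(f \times f')\# \pi \mid \pi \in \Pi(\mu, \nu)\right\},
\end{equation*}
which is a purely measure-theoretic fact about couplings and push-forwards. The plan is to prove the two inclusions separately, the forward one being routine and the reverse one being the genuine content of the lemma.

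First I would treat the inclusion $\supseteq$. Take any $\pi \in \Pi(\mu,\nu)$ and set $\gamma = (f \times f')\# \pi$. To check $\gamma \in \Pi(f\#\mu, f'\#\nu)$ it suffices to verify the two marginal conditions. For a measurable set $A \subset \R^{p}$, using the definition of push-forward and the fact that $(f\times f')^{-1}(A \times \R^{q}) = f^{-1}(A) \times \R^{q}$, one computes $\gamma(A \times \R^{q}) = \pi\big((f\times f')^{-1}(A\times\R^{q})\big) = \pi\big(f^{-1}(A) \times \R^{q}\big) = \mu(f^{-1}(A)) = (f\#\mu)(A)$, where the third equality uses that $\pi \in \Pi(\mu,\nu)$ has first marginal $\mu$. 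The symmetric computation on the second marginal gives $\gamma(\R^{p}\times B) = (f'\#\nu)(B)$, so $\gamma$ lies in $\Pi(f\#\mu, f'\#\nu)$. This direction only uses elementary properties of push-forwards and composition of preimages.

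The harder inclusion is $\subseteq$: given an arbitrary coupling $\gamma \in \Pi(f\#\mu, f'\#\nu)$, I must exhibit a coupling $\pi \in \Pi(\mu,\nu)$ such that $\gamma = (f\times f')\#\pi$. This is where the main obstacle lies, since there is no canonical ``inverse push-forward'' and one cannot simply invert $f,f'$. The natural strategy is to invoke a disintegration / gluing argument: one builds $\pi$ by gluing the source measures $\mu,\nu$ to $\gamma$ along the maps $f,f'$. Concretely, I would disintegrate $\mu$ with respect to $f$ (writing $\mu = \int \mu_{x'}\, \dr (f\#\mu)(x')$ where $\mu_{x'}$ is supported on $f^{-1}(x')$), similarly disintegrate $\nu$ with respect to $f'$, and then define $\pi$ on $\R^{p}\times\R^{q}$ by integrating the product $\mu_{x'} \otimes \nu_{y'}$ against $\gamma(\dr x', \dr y')$. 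One then checks that this $\pi$ has marginals $\mu$ and $\nu$ (which follows because the disintegration reconstructs $\mu$ and $\nu$ when integrated against the correct marginals of $\gamma$), and that $(f\times f')\#\pi = \gamma$ (which follows because $\mu_{x'}$ is carried by $f^{-1}(x')$, so $f$ pushes $\mu_{x'}$ to $\delta_{x'}$, and symmetrically for $f'$). The delicate points requiring care are the measurability of the disintegration kernels $x' \mapsto \mu_{x'}$ and $y' \mapsto \nu_{y'}$, which is guaranteed since we work on Polish (indeed Euclidean) spaces where disintegration theorems apply, and the verification that the constructed $\pi$ is genuinely a probability measure with the claimed marginals.

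Combining the two inclusions yields the set equality, completing the proof. I would note that the essential structural input is that $\R^{p},\R^{q}$ are Polish spaces, so the existence of measurable disintegrations is automatic; the Euclidean setting of the chapter makes this immediate and no additional regularity on $f,f'$ beyond Borel measurability is needed.
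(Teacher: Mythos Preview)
Your argument is correct: the easy inclusion is handled cleanly, and for the hard inclusion your disintegration construction $\pi = \int \mu_{x'} \otimes \nu_{y'}\,\dr\gamma(x',y')$ is the right idea, with the key fact that disintegration kernels are concentrated on fibers so that $f$ pushes $\mu_{x'}$ to $\delta_{x'}$. The paper does not actually give a proof here; it simply cites Lemma~6 of Paty--Cuturi (\emph{Subspace Robust Wasserstein Distances}) and calls the result a straightforward extension, so your self-contained disintegration argument supplies more than what the paper records.
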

\begin{proof}
This is a straightforward extension of the Lemma 6 in \cite{subspace_robust_wass_patty_2019}.
\end{proof}
Based on previous Lemma \ref{couplinglemma} we have:
\begin{lemma}[Translation Invariance]
\label{translation_invariance}
Let $\mu \in \P(\R^{p}),\nu \in \P(\R^{q})$ and $c_\Xcal(\xbf,\xbf')=\|\xbf-\xbf'\|_{2}^{2},c_\Ycal(\ybf,\ybf')=\|\ybf-\ybf'\|_{2}^{2}$. Let $\mathbf{t},\mathbf{t}' \in \R^{p} \times \R^{q}$ and $f_{\mathbf{t}}(\xbf)=\xbf+\mathbf{t}, f'_{\mathbf{t}'}=\ybf+\mathbf{t}'$ be translations. Then:
\begin{equation}
\underset{\pi \in \Pi(\mu,\nu)}{\sup} J_2(c_\Xcal,c_\Ycal,\pi)= \underset{\pi \in \Pi(f_{\mathbf{t}}\#\mu,f'_{\mathbf{t}'}\#\nu)}{\sup} J_2(c_\Xcal,c_\Ycal,\pi)
\end{equation}
\end{lemma}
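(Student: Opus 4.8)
The plan is to prove the translation invariance of the squared-Euclidean Gromov-Wasserstein cost by exploiting the structure of the cost $c_\Xcal(\xbf,\xbf')=\|\xbf-\xbf'\|_2^2$ and the coupling characterization of Lemma \ref{couplinglemma}. First I would observe the elementary but crucial fact that the squared Euclidean cost is itself translation invariant: for any translation vectors $\mathbf{t} \in \R^p$, $\mathbf{t}' \in \R^q$ and any $\xbf,\xbf' \in \R^p$, $\ybf,\ybf' \in \R^q$ we have
\begin{equation*}
\|f_{\mathbf{t}}(\xbf)-f_{\mathbf{t}}(\xbf')\|_2^2=\|\xbf+\mathbf{t}-\xbf'-\mathbf{t}\|_2^2=\|\xbf-\xbf'\|_2^2=c_\Xcal(\xbf,\xbf'),
\end{equation*}
and symmetrically $c_\Ycal(f'_{\mathbf{t}'}(\ybf),f'_{\mathbf{t}'}(\ybf'))=c_\Ycal(\ybf,\ybf')$. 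This means that replacing $(\xbf,\ybf)$ by their translated versions in the integrand $L(\xbf,\ybf,\xbf',\ybf')=|c_\Xcal(\xbf,\xbf')-c_\Ycal(\ybf,\ybf')|$ leaves it unchanged, which is the heart of the argument.

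The second key ingredient is the bijection between coupling sets provided by Lemma \ref{couplinglemma}. Applying it with $f=f_{\mathbf{t}}$ and $f'=f'_{\mathbf{t}'}$ gives $\couplingset(f_{\mathbf{t}}\#\mu,f'_{\mathbf{t}'}\#\nu)=\{(f_{\mathbf{t}}\times f'_{\mathbf{t}'})\#\pi \mid \pi \in \couplingset(\mu,\nu)\}$. Thus every coupling $\tilde\pi$ of the translated marginals is exactly of the form $\tilde\pi=(f_{\mathbf{t}}\times f'_{\mathbf{t}'})\#\pi$ for a unique underlying $\pi \in \couplingset(\mu,\nu)$, and conversely. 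I would then compute, for such a $\tilde\pi$, the value $\gwloss_2(c_\Xcal,c_\Ycal,\tilde\pi)$ using the change-of-variables (pushforward) formula: since the integrand depends on the points only through $c_\Xcal$ and $c_\Ycal$, pushing forward by the translations and then using the invariance of the costs recovers the integrand evaluated at the original points. Concretely,
\begin{equation*}
\gwloss_2(c_\Xcal,c_\Ycal,(f_{\mathbf{t}}\times f'_{\mathbf{t}'})\#\pi)=\int\int \big|c_\Xcal(f_{\mathbf{t}}\xbf,f_{\mathbf{t}}\xbf')-c_\Ycal(f'_{\mathbf{t}'}\ybf,f'_{\mathbf{t}'}\ybf')\big|^2\dr\pi(\xbf,\ybf)\dr\pi(\xbf',\ybf')=\gwloss_2(c_\Xcal,c_\Ycal,\pi).
\end{equation*}

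Finally I would assemble these two facts to conclude the equality of the suprema. Since the map $\pi \mapsto (f_{\mathbf{t}}\times f'_{\mathbf{t}'})\#\pi$ is a bijection from $\couplingset(\mu,\nu)$ onto $\couplingset(f_{\mathbf{t}}\#\mu,f'_{\mathbf{t}'}\#\nu)$ that preserves the value of the functional $\gwloss_2$, taking the supremum over one set equals the supremum over the other:
\begin{equation*}
\underset{\tilde\pi \in \couplingset(f_{\mathbf{t}}\#\mu,f'_{\mathbf{t}'}\#\nu)}{\sup}\gwloss_2(c_\Xcal,c_\Ycal,\tilde\pi)=\underset{\pi \in \couplingset(\mu,\nu)}{\sup}\gwloss_2(c_\Xcal,c_\Ycal,\pi).
\end{equation*}
I do not expect any serious obstacle here; the statement is essentially a bookkeeping exercise combining the invariance of the cost under translation with the pushforward characterization of couplings. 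The only point requiring a little care is to make sure the pushforward/change-of-variables formula is applied to the double integral correctly, i.e.\ that the product coupling $\tilde\pi\otimes\tilde\pi$ is the pushforward of $\pi\otimes\pi$ under $(f_{\mathbf{t}}\times f'_{\mathbf{t}'})\times(f_{\mathbf{t}}\times f'_{\mathbf{t}'})$, which follows directly from the definition of the product measure and Lemma \ref{couplinglemma}. This invariance is what legitimizes the centering assumption $\E_{X\sim\mu}[X]=0$, $\E_{Y\sim\nu}[Y]=0$ used to drop the linear cross terms in Lemma \ref{lemma:reduce}.
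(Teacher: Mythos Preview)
Your proposal is correct and follows essentially the same approach as the paper's proof: use Lemma \ref{couplinglemma} to parametrize $\couplingset(f_{\mathbf{t}}\#\mu,f'_{\mathbf{t}'}\#\nu)$ as pushforwards of couplings in $\couplingset(\mu,\nu)$, then invoke translation invariance of the squared-Euclidean costs to conclude that $\gwloss_2$ is unchanged under this pushforward, hence the suprema agree. The paper's proof is just a two-line compressed version of what you wrote out in detail.
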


\begin{proof}
Using Lemma \ref{couplinglemma} we have: 
\begin{equation*}
\begin{split}
\underset{\pi \in \Pi(f_{\mathbf{t}}\#\mu,f'_{\mathbf{t}'}\#\nu)}{\sup} J_2(c_\Xcal,c_\Ycal,\pi)&=\underset{\pi \in \Pi(\mu,\nu)}{\sup} J_2(c_\Xcal,c_\Ycal,(f_{\mathbf{t}}\times f'_{\mathbf{t}'})\#\pi) \stackrel{*}{=}\underset{\pi \in \Pi(\mu,\nu)}{\sup} J_2(c_\Xcal,c_\Ycal,\pi)
\end{split}
\end{equation*}
where in (*) we used the invariance of the cost $J_2(c_\Xcal,c_\Ycal,\pi)$ with respect to translations.
\end{proof}
Using this property we can assume without loss of generality $\mu$ and $\nu$ centered, \textit{i.e}, $\E_{X\sim \mu}[X]=0$ and $\E_{Y\sim\nu}[Y]=0$. By plugging this condition into $Z(\pi)$ of Lemma \ref{calculation_gw} we have the desired result.

\subsection{Proof of Lemma \ref{hcanbewritten}}
\label{sec:lemma_calculus}
We recall the lemma:
\begin{lemma*} 
If $\pi^{*}$ is a solution of the primal problem $\sup_{\pi \in \couplingset(\mu,\nu)} F(\pi)$ then there exists $\Pbf \in \R^{q\times p}$ and $h^{*}\in C(\Xcal \times \Ycal)$ of the form $h(\xbf,\ybf)=\scalar{\Pbf \xbf}{\ybf}{q}+\|\xbf\|_{2}^{2}\|\ybf\|_{2}^{2}$ such that $(\pi^{*},h^{*})$ is a solution of the dual problem \eqref{dual_problem}. 
Moreover when $h$ is in such form we have $F^{*}(h)=\frac{1}{8}\|\Pbf\|^{2}_{\F}$.
\end{lemma*}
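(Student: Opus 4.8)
The plan is to prove the two claims in sequence: first, that $h^{*}$ can be chosen of the prescribed bilinear-plus-product form, and second, that the conjugate of such a function has the explicit value $\frac{1}{8}\|\Pbf\|_{\F}^{2}$. I would handle the \emph{computation} of the conjugate first, since it is self-contained and its result is needed to identify the right $\Pbf$ in the existence part. Given $h(\xbf,\ybf)=\scalar{\Pbf\xbf}{\ybf}{q}+\|\xbf\|_{2}^{2}\|\ybf\|_{2}^{2}$, I would write out
\begin{equation*}
F^{*}(h)=\sup_{\pi \in \mathcal{M}(\Xcal\times \Ycal)} \int h \dr\pi - F(\pi),
\end{equation*}
recall that $F(\pi)=\int \|\xbf\|_{2}^{2}\|\ybf\|_{2}^{2}\dr\pi + 2\|\mathbf{V_\pi}\|_{\F}^{2}$ with $\mathbf{V_\pi}=\int \ybf\xbf^{T}\dr\pi$, and observe that the $\|\xbf\|_{2}^{2}\|\ybf\|_{2}^{2}$ terms cancel exactly, leaving
\begin{equation*}
F^{*}(h)=\sup_{\pi} \Big(\int \scalar{\Pbf\xbf}{\ybf}{q}\dr\pi - 2\|\mathbf{V_\pi}\|_{\F}^{2}\Big)=\sup_{\pi}\Big(\froeb{\Pbf}{\mathbf{V_\pi}} - 2\|\mathbf{V_\pi}\|_{\F}^{2}\Big).
\end{equation*}
The key reduction is that the objective depends on $\pi$ only through $\mathbf{V_\pi}$, and since $\pi$ ranges over all of $\mathcal{M}(\Xcal\times\Ycal)$ the matrix $\mathbf{V_\pi}$ can realize any value in $\R^{q\times p}$; thus the supremum becomes $\sup_{\mathbf{V}\in\R^{q\times p}}(\froeb{\Pbf}{\mathbf{V}}-2\|\mathbf{V}\|_{\F}^{2})$, a finite-dimensional strongly concave quadratic whose maximizer is $\mathbf{V}=\tfrac{1}{4}\Pbf$, yielding $F^{*}(h)=\tfrac{1}{8}\|\Pbf\|_{\F}^{2}$ after substitution.

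For the existence part, I would invoke the Fréchet differentiability characterization together with the Fenchel--Moreau framework already set up in the excerpt. Since $F$ is proper, convex and l.s.c.\ (established before Lemma~\ref{hcanbewritten}), the supremum in $F^{**}(\pi^{*})=\sup_{h}\int h\dr\pi^{*}-F^{*}(h)$ is attained at any $h^{*}\in\partial F(\pi^{*})$, equivalently at $h^{*}=\nabla F(\pi^{*})$ when $F$ is differentiable at $\pi^{*}$. The plan is therefore to compute the Fréchet derivative of $F$ explicitly: expanding $F(\pi^{*}+t\epsilon)$ to first order in $t$, the linear term $\int \nabla F(\pi^{*})\dr\epsilon$ identifies
\begin{equation*}
\nabla F(\pi^{*})(\xbf,\ybf)=\|\xbf\|_{2}^{2}\|\ybf\|_{2}^{2}+4\scalar{\mathbf{V_{\pi^{*}}}}{\ybf\xbf^{T}}{\F}=\|\xbf\|_{2}^{2}\|\ybf\|_{2}^{2}+\scalar{(4\mathbf{V_{\pi^{*}}})\xbf}{\ybf}{q},
\end{equation*}
which is exactly of the form $\scalar{\Pbf\xbf}{\ybf}{q}+\|\xbf\|_{2}^{2}\|\ybf\|_{2}^{2}$ with $\Pbf=4\mathbf{V_{\pi^{*}}}=4\int\ybf\xbf^{T}\dr\pi^{*}$. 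I would then verify this is a valid element of $C(\Xcal\times\Ycal)$ using compactness of $\Xcal,\Ycal$ and continuity of the norms, and check that the pair $(\pi^{*},h^{*})$ achieves equality in the Fenchel--Young inequality, so that $h^{*}$ is indeed a dual optimizer. Note the internal consistency: with $\Pbf=4\mathbf{V_{\pi^{*}}}$ the maximizer $\mathbf{V}=\tfrac14\Pbf$ of the conjugate computation returns precisely $\mathbf{V_{\pi^{*}}}$, confirming that $\pi^{*}$ attains $F^{*}(h^{*})$.

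The main obstacle I anticipate is the rigorous justification of the Fréchet differentiability of $F$ at $\pi^{*}$ as a functional on the infinite-dimensional space $\mathcal{M}(\Xcal\times\Ycal)$, and the careful control of the remainder term $o(t)$ uniformly over the test perturbations $\epsilon$; the quadratic term $2\|\mathbf{V_{\pi^{*}+t\epsilon}}\|_{\F}^{2}$ produces a $t^{2}\|\mathbf{V_\epsilon}\|_{\F}^{2}$ contribution that must be absorbed, which requires boundedness of $\mathbf{V_\epsilon}$ on the relevant class of perturbations — this is where compactness of the supports is essential. A secondary subtlety is that if $F$ is merely subdifferentiable rather than differentiable at $\pi^{*}$, one must argue via $h^{*}\in\partial F(\pi^{*})$ and show the subgradient can still be taken in the bilinear-plus-product form; I would address this by noting that the explicit candidate $h^{*}$ constructed above always lies in $\partial F(\pi^{*})$ by the Fenchel--Young equality case, sidestepping the need for full differentiability. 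Everything else — the cancellation of the product terms, the finite-dimensional quadratic maximization, and the substitution giving $\tfrac{1}{8}\|\Pbf\|_{\F}^{2}$ — is routine algebra.
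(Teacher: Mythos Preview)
Your proposal is correct and follows essentially the same line as the paper: both arguments identify $h^{*}=\nabla F(\pi^{*})$, compute this Fr\'echet derivative explicitly (the paper isolates this as a separate lemma) to obtain $\Pbf=4\mathbf{V_{\pi^{*}}}$, and then evaluate the conjugate. The one genuine difference is in the computation of $F^{*}(h)$: the paper constructs an explicit critical-point measure $\gamma_{\Pbf}=\tfrac{1}{4}\sum_i\lambda_i\delta_{(\ubf_i,\vbf_i)}$ from the SVD $\Pbf=\sum_i\lambda_i\vbf_i\ubf_i^{T}$, verifies $\mathbf{V_{\gamma_{\Pbf}}}=\tfrac{1}{4}\Pbf$ satisfies the first-order condition $\nabla G(\gamma_{\Pbf})=0$, and evaluates there; you instead collapse the problem directly to the finite-dimensional quadratic $\sup_{\mathbf{V}}(\froeb{\Pbf}{\mathbf{V}}-2\|\mathbf{V}\|_{\F}^{2})$ after noting the objective depends on $\pi$ only through $\mathbf{V_\pi}$. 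Your reduction is arguably cleaner and avoids the SVD machinery; the paper's construction is more explicit about which measure realizes the supremum. Both routes implicitly require that the range of $\pi\mapsto\mathbf{V_\pi}$ over $\mathcal{M}(\Xcal\times\Ycal)$ is all of $\R^{q\times p}$ (equivalently that $\Xcal$ and $\Ycal$ span their ambient spaces), and neither proof spells this out --- so your version is no less rigorous than the original on this point.
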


To prove this result we will need the following calculus:
\begin{lemma}
\label{frechetF}
With previous notations, the Fréchet derivative of $F$ reads:
\begin{equation}
\nabla F(\pi)= (\xbf,\ybf)\rightarrow 4\scalar{\mathbf{V_{\pi}}\xbf}{\ybf}{q}+\|\xbf\|_{2}^{2}\|\ybf\|_{2}^{2}
\end{equation}
\end{lemma}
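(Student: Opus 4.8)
The plan is to compute the increment of $F$ directly from the definition of the Fréchet derivative. Recall $F(\pi)=\int \|\xbf\|_{2}^{2}\|\ybf\|_{2}^{2}\dr\pi(\xbf,\ybf)+2\|\mathbf{V_{\pi}}\|_{\F}^{2}$ with $\mathbf{V_{\pi}}=\int \ybf\xbf^{T}\dr\pi(\xbf,\ybf)$, and that I seek $\nabla F(\pi)\in C(\Xcal\times\Ycal)$ satisfying $F(\pi+t\epsilon)=F(\pi)+t\int \nabla F(\pi)\dr\epsilon+o(t)$ for every $\epsilon\in\mathcal{M}(\Xcal\times\Ycal)$ as $t\to 0$. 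First I would fix such an $\epsilon$ and observe that the map $\pi\mapsto\mathbf{V_{\pi}}$ extends linearly to finite signed measures, so that $\mathbf{V}_{\pi+t\epsilon}=\mathbf{V_{\pi}}+t\mathbf{V_{\epsilon}}$ where $\mathbf{V_{\epsilon}}=\int \ybf\xbf^{T}\dr\epsilon(\xbf,\ybf)$. Since $\Xcal,\Ycal$ are compact, $(\xbf,\ybf)\mapsto\ybf\xbf^{T}$ is bounded, so $\mathbf{V_{\epsilon}}$ is well defined and $\|\mathbf{V_{\epsilon}}\|_{\F}<+\infty$.

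The first term of $F$ is linear in the measure, and hence contributes exactly $t\int \|\xbf\|_{2}^{2}\|\ybf\|_{2}^{2}\dr\epsilon(\xbf,\ybf)$ to the increment. For the quadratic term I would expand the squared Frobenius norm by bilinearity:
\begin{equation*}
2\|\mathbf{V}_{\pi+t\epsilon}\|_{\F}^{2}=2\|\mathbf{V_{\pi}}\|_{\F}^{2}+4t\froeb{\mathbf{V_{\pi}}}{\mathbf{V_{\epsilon}}}+2t^{2}\|\mathbf{V_{\epsilon}}\|_{\F}^{2}.
\end{equation*}
The key manipulation is then to rewrite the first-order coefficient $\froeb{\mathbf{V_{\pi}}}{\mathbf{V_{\epsilon}}}$ as an integral against $\epsilon$. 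Pulling the Frobenius inner product under the integral by linearity and using the cyclicity of the trace, one finds $\froeb{\mathbf{V_{\pi}}}{\ybf\xbf^{T}}=\tr(\xbf\ybf^{T}\mathbf{V_{\pi}})=\ybf^{T}\mathbf{V_{\pi}}\xbf=\scalar{\mathbf{V_{\pi}}\xbf}{\ybf}{q}$, whence $\froeb{\mathbf{V_{\pi}}}{\mathbf{V_{\epsilon}}}=\int \scalar{\mathbf{V_{\pi}}\xbf}{\ybf}{q}\dr\epsilon(\xbf,\ybf)$.

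Collecting both contributions gives
\begin{equation*}
F(\pi+t\epsilon)=F(\pi)+t\int\big(4\scalar{\mathbf{V_{\pi}}\xbf}{\ybf}{q}+\|\xbf\|_{2}^{2}\|\ybf\|_{2}^{2}\big)\dr\epsilon(\xbf,\ybf)+2t^{2}\|\mathbf{V_{\epsilon}}\|_{\F}^{2}.
\end{equation*}
Since $\|\mathbf{V_{\epsilon}}\|_{\F}$ is a finite constant depending on $\epsilon$ but not on $t$, the remainder $2t^{2}\|\mathbf{V_{\epsilon}}\|_{\F}^{2}$ is $o(t)$, which identifies $\nabla F(\pi)$ as the asserted function. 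It remains to check that $(\xbf,\ybf)\mapsto 4\scalar{\mathbf{V_{\pi}}\xbf}{\ybf}{q}+\|\xbf\|_{2}^{2}\|\ybf\|_{2}^{2}$ is a polynomial in the coordinates of $\xbf$ and $\ybf$, hence belongs to $C(\Xcal\times\Ycal)$, so that this is a bona fide Fréchet derivative. I do not expect any genuine obstacle here: the only points deserving slight care are the linear extension of $\pi\mapsto\mathbf{V_{\pi}}$ to signed measures and the boundedness of $\mathbf{V_{\epsilon}}$ ensuring the remainder is $o(t)$, both of which follow at once from the compactness of $\Xcal$ and $\Ycal$.
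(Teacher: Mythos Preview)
Your proof is correct and follows essentially the same approach as the paper: both expand $F(\pi+t\epsilon)$ directly using the linearity $\mathbf{V}_{\pi+t\epsilon}=\mathbf{V_{\pi}}+t\mathbf{V_{\epsilon}}$, expand the squared Frobenius norm, and read off the first-order coefficient. Your version is in fact more careful than the paper's terse computation, since you explicitly justify the finiteness of $\mathbf{V_{\epsilon}}$ via compactness, spell out the trace identity $\froeb{\mathbf{V_{\pi}}}{\ybf\xbf^{T}}=\scalar{\mathbf{V_{\pi}}\xbf}{\ybf}{q}$, and verify that the candidate derivative lies in $C(\Xcal\times\Ycal)$.
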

\begin{proof}
\begin{equation}
\begin{split}
F(\pi +t \epsilon)&=2\|\mathbf{V_{\pi}}\|^{2}_{\F}+\int \|\xbf\|_{2}^{2}\|\ybf\|_{2}^{2}\dr\pi(x,y)+ \|t \mathbf{V_{\epsilon}}\|_{\F}^{2}+t \int (4\scalar{\mathbf{V_{\pi}}}{\ybf\xbf^{T}}{\F}+\|\xbf\|_{2}^{2}\|\ybf\|_{2}^{2})d\epsilon(\xbf,\ybf)\\
&=F(\pi)+t\int (4\scalar{\mathbf{V_{\pi}}}{\ybf\xbf^{T}}{\F}+\|\xbf\|_{2}^{2}\|\ybf\|_{2}^{2})d\epsilon(x,y)+o_{t\rightarrow 0}(t)
\end{split}
\end{equation}
for $\epsilon \in \mathcal{M}(\Xcal\times \Ycal)$. Hence $\nabla F(\pi): (x,y)\rightarrow 4\scalar{\mathbf{V_{\pi}}}{\ybf\xbf^{T}}{\F}+\|\xbf\|_{2}^{2}\|\ybf\|_{2}^{2}=4\scalar{\mathbf{V_{\pi}} \xbf}{\ybf}{q}+\|\xbf\|_{2}^{2}\|\ybf\|_{2}^{2}$
\end{proof}

\begin{proof}[Proof of Lemma \ref{hcanbewritten} --  Parametrization of the dual problem]
If  $\pi^{*}$ is a maximizer of the primal problem then we know that $h^{*}=\nabla F(\pi^{*})$ is a maximizer of $\sup_{h \in \mathcal{C}(\Xcal \times \Ycal)} \sup_{\pi \in \couplingset(\mu,\nu)} \int h(\xbf,\ybf)\dr\pi(\xbf,\ybf) -F^{*}(h)$. The calulus in Lemma \ref{frechetF}) implies that $\nabla F(\pi^{*})=(\xbf,\ybf)\rightarrow 4\scalar{\mathbf{V_{\pi^{*}}} \xbf}{\ybf}{q}+\|\xbf\|_{2}^{2}\|\ybf\|_{2}^{2}$. Setting $\Pbf=4\mathbf{V_{\pi^{*}}} \in \R^{q\times p}$ concludes for the first point.

For the second point we have by definition $F^{*}(h)=\sup_{\pi \in \mathcal{M}(\Xcal\times \Ycal)} \int h(\xbf,\ybf)\dr\pi(\xbf,\ybf) -F(\pi)$. We note $G(\pi)=\int h(\xbf,\ybf)\dr\pi(\xbf,\ybf) -F(\pi)$. Then $\nabla G(\pi)= h-\nabla F(\pi)= (\xbf,\ybf) \rightarrow h(\xbf,\ybf)-4\scalar{\mathbf{V_{\pi}} \xbf}{\ybf}{q}-\|\xbf\|_{2}^{2}\|\ybf\|_{2}^{2}$ by Lemma \ref{frechetF}. Using the fact that $h$ is parametrized by a linear application we have $\nabla G(\pi)=(\xbf,\ybf) \rightarrow \scalar{(\Pbf-4 \mathbf{V_{\pi}})\xbf}{\ybf}{q}$. Then for $(\xbf,\ybf)\in \Xcal \times \Ycal$:
\begin{equation}
\begin{split}
&\nabla G(\pi)(\xbf,\ybf)=0 \iff \scalar{(\Pbf-4 \mathbf{V_{\pi}})\xbf}{\ybf}{q}=0 \\
\end{split}
\end{equation}
We write $\Pbf=\sum_{i} \lambda_i \vbf_i \ubf_i^{T}$ the SVD of $\Pbf$. We note $\gamma_{\Pbf}\in \mathcal{M}(\Xcal \times \Ycal)$ the measure $\gamma_{\Pbf}=\frac{1}{4} \sum_i \lambda_i \delta_{(\ubf_i,\vbf_i)}$ (note that we do not need that $\gamma_{\Pbf}$ be a probability measure).
Then $\int \ybf\xbf^{T}\dr \gamma_{\Pbf}(\xbf,\ybf)=\frac{1}{4} \sum_{i} \lambda_i \vbf_i \ubf_i^{T}=\frac{1}{4}\Pbf$. By previous calculus $\nabla G(\gamma_{\Pbf})=0$ so that $\gamma_{\Pbf}$ satisfies the first order condition and is a solution to $\sup_{\pi \in \mathcal{M}(\Xcal\times \Ycal)} \int h(\xbf,\ybf)\dr\pi(\xbf,\ybf) -F(\pi)$. Overall:
\begin{equation}
\begin{split}
&F^{*}(h)=\int h(\xbf,\ybf)\dr\gamma_{\Pbf}(\xbf,\ybf)-F(\gamma_{\Pbf}) \\
&=\int \scalar{\Pbf \xbf}{\ybf}{q}+\|\xbf\|_{2}^{2} \|\ybf\|_{2}^{2} \dr \gamma_{\Pbf}(\xbf,\ybf)-\int \|\xbf\|_{2}^{2} \|\ybf\|_{2}^{2} \dr\gamma_{\Pbf}(\xbf,\ybf)-2 \|\int \ybf \xbf^{T} \dr \gamma_{\Pbf}(\xbf,\ybf)\|^{2}_{\F} \\
&=\froeb{\Pbf}{\int \ybf \xbf^{T} \dr \gamma_{\Pbf}(x,y)} -2 \|\int \ybf \xbf^{T} \dr \gamma_{\Pbf}(\xbf,\ybf)\|^{2}_{\F} =\frac{1}{4} \|\Pbf\|^{2}_{\F}-\frac{1}{8}\|\Pbf\|^{2}_{\F}=\frac{1}{8}\|\Pbf\|^{2}_{\F}
\end{split}
\end{equation}
\end{proof}

\subsection{Characterization of $\mathcal{H}(\mu,\nu)$ for linear push-forward}
We have the following result:
\begin{lemma}
\label{lemma:push_lin}
Let $\mu,\nu \in \P(\R^{p}) \times \P(\R^{p})$. If $T$ is a linear symmetric push-forward of $\mu$ to $\nu$ then: 
\begin{equation}
T\in \mathcal{H}(\mu,\nu) \iff T=\lambda\Obf
\end{equation} 
where $\Obf \in \mathcal{O}(p)$ is an orthogonal matrix and $\lambda>0$.
\end{lemma}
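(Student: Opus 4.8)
The plan is to prove the two implications separately, the reverse one being elementary and the direct one carrying the algebraic content. For the reverse implication, suppose $T=\lambda\Obf$ with $\Obf\in\mathcal{O}(p)$ and $\lambda>0$. Then for every $\xbf$ we have $\|T\xbf\|_2^2=\lambda^2\|\Obf\xbf\|_2^2=\lambda^2\|\xbf\|_2^2$ since $\Obf$ preserves the Euclidean norm. Choosing $f(t)=\tfrac{\lambda^2}{2}t^2$, which belongs to $\Gamma_0(\R,\R)$ and satisfies $f'(t)=\lambda^2 t$, gives $\|T\xbf\|_2^2=f'(\|\xbf\|_2^2)$ identically, hence $\mu$-a.e., so that $T\in\mathcal{H}(\mu,\nu)$ by definition.

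For the direct implication, I would begin from the spectral decomposition of the symmetric matrix $T=\sum_{i=1}^p s_i\,\vbf_i\vbf_i^\top$, where $(\vbf_i)_i$ is an orthonormal eigenbasis and $s_i\in\R$ are the eigenvalues, and first upgrade the defining equation from ``$\mu$-a.e.'' to ``on $\supp(\mu)$''. Indeed, the map $\xbf\mapsto\|T\xbf\|_2^2$ is a polynomial and $f'$ is continuous (the derivative of a convex differentiable function on $\R$ is continuous), so the set $A=\{\xbf:\|T\xbf\|_2^2=f'(\|\xbf\|_2^2)\}$ is closed; since $\mu(A)=1$, minimality of the support yields $\supp(\mu)\subseteq A$, i.e. the identity $\|T\xbf\|_2^2=f'(\|\xbf\|_2^2)$ holds for every $\xbf\in\supp(\mu)$. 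The key observation is then that the left-hand side is a quadratic form while the right-hand side is radial: on each sphere $\{\|\xbf\|_2^2=r\}$ the right-hand side is the constant $f'(r)$, so the quadratic form $\xbf\mapsto\|T\xbf\|_2^2$ must be constant along the sphere. Concretely, if $\supp(\mu)$ has nonempty interior, then it contains an open patch of a sphere of some radius $\rho>0$; a quadratic form $\xbf^\top(T^\top T)\xbf$ that is constant on such a patch forces $T^\top T=c\,\mathbf{I}$ for some $c\geq 0$ (a non-scalar $T^\top T$ would, restricted to the plane of two eigenvectors with distinct eigenvalues, give a strictly non-constant value $a\cos^2\theta+b\sin^2\theta$ on the circle). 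Finally, $c=0$ would give $T=0$ and $\nu=T\#\mu=\delta_0$, the degenerate case excluded by $\lambda>0$; otherwise, setting $\lambda=\sqrt{c}>0$ and $\Obf=\tfrac{1}{\lambda}T$ gives $\Obf^\top\Obf=T^\top T/c=\mathbf{I}$, so $\Obf\in\mathcal{O}(p)$ and $T=\lambda\Obf$, as claimed.

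The main obstacle is the passage from the almost-everywhere identity to a statement rigid enough to constrain all eigenvalues, which is entirely a question of how rich $\supp(\mu)$ is. The continuity-plus-minimality step above removes the measure-zero ambiguity and reduces matters to the geometry of $\supp(\mu)$, but the rigidity argument genuinely needs $\supp(\mu)$ to meet, for a fixed radius, a spread of directions (for instance an open patch of a sphere). This is precisely the setting in which the lemma is invoked, namely Proposition \ref{sufficientcond2}, where $\mu$ is assumed regular with respect to the Lebesgue measure; I would therefore carry out the final sphere argument on an open subset of $\supp(\mu)$ rather than pointwise, and flag the full- (or open-) support condition as the hypothesis that makes the eigenvalue equalization legitimate. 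The remaining steps—the reverse implication, the spectral reduction, and the identification of $\Obf$—are routine once this point is settled.
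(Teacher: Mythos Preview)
Your proposal is correct and the overall strategy matches the paper's: reduce the forward implication to showing $T^\top T=c\,\mathbf{I}_p$ via the spectral decomposition, and handle the converse with $f(t)=\tfrac{\lambda^2}{2}t^2$.

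The routes diverge at the rigidity step. The paper, after writing $\Abf^\top\Abf=\Pbf\mathbf{D}\Pbf^\top$ and changing variables $\vbf=\Pbf^\top\xbf$, simply evaluates the identity $\sum_i\beta_i^2 v_i^2=g(\|\vbf\|_2^2)$ at each unit eigenvector $\ebf_i$, yielding $\lambda_i=g(1)$ for every $i$ in one line; it also tests $\vbf=(\sqrt t,0,\dots,0)$ to rule out a zero eigenvalue. You instead upgrade the $\mu$-a.e.\ identity to the support by continuity and then run a sphere-patch argument (a quadratic form constant on an open patch of a sphere must be scalar). Your route is slightly longer but more scrupulous about the ``$\mu$-a.e.'' in the definition of $\mathcal{H}(\mu,\nu)$: the paper's direct evaluation at eigenvectors tacitly assumes those specific points lie where the identity holds, which is exactly the full-support issue you flag and correctly tie to the regularity hypothesis of Proposition~\ref{sufficientcond2}. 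One small refinement: your ``restricted to the plane of two eigenvectors'' justification only works verbatim if the patch intersects that plane; it is cleaner to note that a real-analytic function on the connected sphere vanishing on an open patch vanishes everywhere, after which you may evaluate at eigenvectors exactly as the paper does.
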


\begin{proof}
We note $T(\xbf)=\Abf \xbf$. Then we have $\|\Abf \xbf\|_{2}^{2}=\xbf^{T}\Abf^{T}\Abf \xbf$, we can write $\Abf^{T}\Abf=\Pbf \mathbf{D} \Pbf^{T}$ where $\mathbf{D}$ diagonal and $\Pbf$ orthogonal, then it is equivalent to find a $g=f'$ such that: 

\begin{equation}
\|\mathbf{D}^{1/2}\v\|_{2}^2=\sum_{i} |\beta_{i} v_{i}|^{2} = g(\|\v\|_{2}^2)
\end{equation}
(with $\beta_{i}\geq 0$ and we note $\v=\Pbf^{T}\xbf$). If there is $i$ such that $\beta_{i}=0$ (the first one without loss of generality) then for all $t\geq 0$ we can take $\v=(\sqrt{t},0,..,0)$ then $g(\|\v\|_{2}^2)=g(t)=0$ which is not possible unless $\Abf=0$. In this way $\beta_{i}>0$ for all $i$. Let $\e_{i}$ be one eigenvector of $\Abf^{T}\Abf$ then $\|\Abf \e_{i}\|_{2}^{2}=\e_{i}^{T}\Abf^{T}\Abf \e_{i}=\lambda_{i}=g(\|\e_{i}\|_{2}^{2})=g(1)$. Overall $\beta_{i}=g(1)=\lambda$ for all $i$. So $\Abf^T\Abf=\lambda_{0}\mathbf{I_p}$. Overall it implies that $\Abf$ can be written as $\Abf=\lambda\Obf$ with $\Obf$ an orthogonal matrix. 
Conversely if $T=\lambda\Obf$ then $\|T(\xbf)\|_{2}^{2}=c^{2} \|\xbf\|_{2}^{2}$. We can take the function $f(t)=\frac{1}{2}c^{2}t^{2}$ which is convex and continuous from $\R$ to $\R$.  
\end{proof}

\subsection{Proof of Theorem \ref{maintheo3} -- Closed form expression of the linear Gromov-Monge problem between Gaussian measures}
\label{sec:prof_maintheo3}
We recall the theorem:

\begin{theo*}
Let $\mu=\mathcal{N}(0,\Sigmab_{\nu}) \in \P(\R^{p}),\nu=\mathcal{N}(0,\Sigmab_{\mu})\in \P(\R^{q})$ centered without loss of generality. Let $\Sigmab_{\mu}=\V_{\mu} \Dbf_{\mu} \V_{\mu}^\top,\Sigmab_{\nu}=\V_{\nu} \Dbf_{\nu} \V_{\nu}^\top$ be the diagonalizations of the covariance matrices such that eigenvalues of $\Dbf_{\mu}$ and $\Dbf_{\nu}$ are ordered nondecreasing.
When $p\neq q$ we have:
\begin{equation*}
\begin{split}
L\gm^{2}_2(\mu,\nu)= 4(\tr(\Sigmab_\mu)-\tr(\Sigmab_\nu))^{2}+8(\tr(\Sigmab_\mu \Sigmab_\mu)+\tr(\Sigmab_\nu \Sigmab_\nu)) +16\min_{\Bbf \in \Stief}-\tr(\Dbf_\mu\Bbf^\top  \Dbf_\nu  \Bbf)
\end{split}
\end{equation*}

When $p=q$, an optimal linear Monge map is given by $T(\xbf)=\Abf \xbf$ where:
\begin{equation*}
\Abf=\V_\nu  \Dbf^{1/2}_\nu \Dbf^{-1/2}_\mu \V_\mu^\top=\Sigmab_{\nu}^{1/2}\V_{\nu}\V_{\mu}^{\top}\Sigmab_{\mu}^{-1/2}
\end{equation*}
so that:
\begin{equation*}
L\gm^{2}_2(\mu,\nu)= 4(\tr(\Sigmab_\mu)-\tr(\Sigmab_\nu))^{2}+8(\tr(\Sigmab_\mu \Sigmab_\mu)+\tr(\Sigmab_\nu \Sigmab_\nu))-16\tr(\Dbf_{\mu}\Dbf_{\nu})
\end{equation*}
\end{theo*}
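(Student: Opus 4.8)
The plan is to reduce the linear Gromov-Monge problem to a tractable optimization over the Stiefel manifold by exploiting the Gaussian structure. First I would restrict the push-forward $T$ to be linear, $T(\xbf)=\Abf \xbf$ with $\Abf \in \R^{q\times p}$, and write out the constraint $T\#\mu=\nu$ explicitly: since $\mu=\mathcal{N}(0,\Sigmab_\mu)$ is Gaussian, $\Abf\#\mu=\mathcal{N}(0,\Abf \Sigmab_\mu \Abf^\top)$, so the constraint $T\#\mu=\nu$ is equivalent to the matrix equation $\Abf \Sigmab_\mu \Abf^\top = \Sigmab_\nu$. This is the analogue of the Monge--Ampère condition \eqref{monge_ampere0} in the linear Gaussian world, and it is the key algebraic object to parametrize. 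The goal is then to minimize $J(T)=\int (\|\xbf-\xbf'\|_2^2-\|\Abf\xbf-\Abf\xbf'\|_2^2)^2 \dr\mu(\xbf)\dr\mu(\xbf')$ over all such $\Abf$.

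The main computational engine is to expand $J(T)$ and observe that it only involves moments of $\mu$ up to order four. Here I would invoke Isserlis' (Wick's) theorem to express the fourth-order moments of a centered Gaussian in terms of its covariance $\Sigmab_\mu$. Expanding $\|\xbf-\xbf'\|_2^2 = \|\xbf\|_2^2+\|\xbf'\|_2^2-2\scalar{\xbf}{\xbf'}{p}$ and similarly for the image, and using $\int \xbf\xbf^\top\dr\mu=\Sigmab_\mu$ together with the Isserlis reduction of $\int\|\xbf\|_2^2\|\xbf'\|_2^2$-type integrals, I expect the objective to collapse into traces of $\Sigmab_\mu,\Sigmab_\nu$ and their products. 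After using the constraint $\Abf\Sigmab_\mu\Abf^\top=\Sigmab_\nu$ (which fixes $\tr(\Abf\Sigmab_\mu\Abf^\top)=\tr(\Sigmab_\nu)$ and $\tr((\Abf\Sigmab_\mu\Abf^\top)^2)=\tr(\Sigmab_\nu^2)$), the only $\Abf$-dependent term surviving should be a cross term of the form $-\tr(\Abf\Sigmab_\mu\Abf^\top\Sigmab_\nu)$ or, after a suitable change of variables, $-\tr(\Dbf_\mu\Bbf^\top\Dbf_\nu\Bbf)$ for a constrained matrix $\Bbf$. This yields the \eqref{eq:qp_oc} form and hence \eqref{eq:eqiv2}.

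The change of variables is where I would be careful: writing $\Sigmab_\mu=\V_\mu\Dbf_\mu\V_\mu^\top$ and $\Sigmab_\nu=\V_\nu\Dbf_\nu\V_\nu^\top$, I would substitute $\Bbf=\Dbf_\nu^{1/2}\V_\nu^\top\Abf\V_\mu\Dbf_\mu^{-1/2}$ (valid provided $\Dbf_\mu$ is invertible) and check that the constraint $\Abf\Sigmab_\mu\Abf^\top=\Sigmab_\nu$ translates precisely into $\Bbf\Bbf^\top=\mathbf{I}$, i.e. $\Bbf\in\Stief$ when $p\neq q$. The optimization then becomes $\min_{\Bbf\in\Stief}-\tr(\Dbf_\mu\Bbf^\top\Dbf_\nu\Bbf)$. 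When $p=q$ this is a minimization over orthogonal matrices of $-\tr(\Dbf_\mu\Bbf^\top\Dbf_\nu\Bbf)$, which is a classical trace-maximization problem solvable in closed form: by a Hardy--Littlewood--Pólya rearrangement-type argument (or von Neumann's trace inequality), since both $\Dbf_\mu$ and $\Dbf_\nu$ are diagonal with eigenvalues sorted in the same (nondecreasing) order, the optimum is attained at $\Bbf=\mathbf{I}$, giving the value $\tr(\Dbf_\mu\Dbf_\nu)$ and hence \eqref{optimal_cost}. Back-substituting $\Bbf=\mathbf{I}$ into the definition of $\Bbf$ recovers $\Abf=\V_\nu\Dbf_\nu^{1/2}\Dbf_\mu^{-1/2}\V_\mu^\top=\Sigmab_\nu^{1/2}\V_\nu\V_\mu^\top\Sigmab_\mu^{-1/2}$, and one verifies directly that this $\Abf$ satisfies $\Abf\Sigmab_\mu\Abf^\top=\Sigmab_\nu$.

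The hard part will be twofold. First, the Isserlis bookkeeping: carefully expanding the squared difference of squared norms produces a sum of many fourth-moment integrals, and I expect the main obstacle to be verifying that all purely $\Abf$-dependent terms except the single cross-trace cancel using the constraint. Second, the rearrangement argument establishing that $\Bbf=\mathbf{I}$ is optimal for $\max_{\Bbf\in\mathcal{O}(p)}\tr(\Dbf_\mu\Bbf^\top\Dbf_\nu\Bbf)$ requires invoking the correct ordering convention on the eigenvalues and handling the case of repeated eigenvalues (where the optimizer is non-unique but the value is still $\tr(\Dbf_\mu\Dbf_\nu)$), consistent with the non-uniqueness already noted in Example \ref{example_non_unicity}. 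The $p\neq q$ case stops at the reduced \eqref{eq:qp_oc} problem since no closed form survives, which is exactly what the statement claims.
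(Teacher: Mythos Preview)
Your overall strategy matches the paper's proof exactly: linearize $T$, write the push-forward constraint as $\Abf\Sigmab_\mu\Abf^\top=\Sigmab_\nu$, expand $J(T)$ using Isserlis' theorem, change variables to land on a Stiefel-constrained trace problem, and then invoke a rearrangement/trace-inequality argument when $p=q$. However, there is one genuine slip in your intermediate reasoning that would cause trouble if you followed it literally.

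You write that ``the only $\Abf$-dependent term surviving should be a cross term of the form $-\tr(\Abf\Sigmab_\mu\Abf^\top\Sigmab_\nu)$''. But under the constraint $\Abf\Sigmab_\mu\Abf^\top=\Sigmab_\nu$ this quantity equals $-\tr(\Sigmab_\nu^2)$, which is constant in $\Abf$, so it cannot be the surviving variable term. The paper's computation shows that the $\Abf$-dependence enters instead through the Frobenius pairing $\froeb{\Mbf}{\Abf^\top\Abf}$, where $\Mbf$ is built from moments of $\mu$ \emph{alone} (specifically $\Mbf=-4\tr(\Sigmab_\mu)\Sigmab_\mu-8\Sigmab_\mu^2$). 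It is only after the change of variables that $\Dbf_\nu$ reappears, because $\Abf^\top\Abf$ contains $\Dbf_\nu$ through the substitution. Relatedly, your explicit formula $\Bbf=\Dbf_\nu^{1/2}\V_\nu^\top\Abf\V_\mu\Dbf_\mu^{-1/2}$ has the square-root powers inverted: with your choice the constraint becomes $\Bbf\Dbf_\mu^2\Bbf^\top=\Dbf_\nu^2$ rather than $\Bbf\Bbf^\top=\mathbf{I}_q$. The correct substitution is $\Abf=\V_\nu\Dbf_\nu^{1/2}\Bbf\Dbf_\mu^{-1/2}\V_\mu^\top$, equivalently $\Bbf=\Dbf_\nu^{-1/2}\V_\nu^\top\Abf\V_\mu\Dbf_\mu^{1/2}$. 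With these two fixes your proposal goes through exactly as the paper does, and your closing argument via von Neumann's trace inequality is equivalent to the paper's appeal to Anstreicher--Wolkowicz.
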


In order to prove Theorem \ref{maintheo3} we will rely on the the following result:  

\begin{prop}[Proposition 3.1 in \cite{qap_anstreicher_1998}]
\label{prop:anstreicher}
Consider $\Sigmab_1,\Sigmab_2 \in \R^{p\times p} \times \R^{p\times p}$ two symmetric matrices and the following (QQP):
\begin{equation}
\label{qqp}
\min_{\begin{smallmatrix} \Bbf\Bbf^{T}=\mathbf{I} \end{smallmatrix}} \ \tr(\Sigmab_1 \Bbf \Sigmab_2 \Bbf^{T})
\end{equation}
Let $\Sigmab_1=\V_{1} \Dbf_{1} \V_{1}^{T}$ and $\Sigmab_2=\V_{2} \Dbf_{2} \V_{2}^{T}$ be the orthogonal diagonalizations of $\Sigmab_1,\Sigmab_2$ where the eigenvalues of $\Dbf_{1}$ are ordered nonincreasing the eigenvalues in $\Dbf_{2}$ are ordered nondecreasing.
An optimal solution of \eqref{qqp} is found at $\Bbf^{*}=\V_{1} \V_{2}^{T}$ and the optimal value of \eqref{qqp} is $\tr(\Dbf_{1} \Dbf_{2})$.
\end{prop}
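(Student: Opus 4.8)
The plan is to prove Proposition~\ref{prop:anstreicher} by reducing the orthogonally constrained trace minimization to a linear program over the Birkhoff polytope of doubly stochastic matrices, and then solving that linear program explicitly via the rearrangement inequality. First I would perform the change of variable $\mathbf{Q} = \V_{1}^{\top} \Bbf \V_{2}$. Since $\V_{1}, \V_{2}$ are orthogonal, $\mathbf{Q}$ ranges over all of $\mathcal{O}(p)$ precisely when $\Bbf$ does, and substituting the diagonalizations $\Sigmab_1 = \V_{1} \Dbf_{1} \V_{1}^{\top}$, $\Sigmab_2 = \V_{2} \Dbf_{2} \V_{2}^{\top}$ together with the cyclic invariance of the trace yields
\begin{equation}
\tr(\Sigmab_1 \Bbf \Sigmab_2 \Bbf^{\top}) = \tr(\Dbf_{1} \mathbf{Q} \Dbf_{2} \mathbf{Q}^{\top}).
\end{equation}
Thus the problem is equivalent to minimizing $\tr(\Dbf_{1} \mathbf{Q} \Dbf_{2} \mathbf{Q}^{\top})$ over $\mathbf{Q} \in \mathcal{O}(p)$, an expression depending on $\Sigmab_1, \Sigmab_2$ only through their eigenvalues.

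Second, denoting by $\alpha_{1} \geq \dots \geq \alpha_{p}$ the diagonal of $\Dbf_{1}$ and by $\beta_{1} \leq \dots \leq \beta_{p}$ that of $\Dbf_{2}$, I would expand the trace entrywise as
\begin{equation}
\tr(\Dbf_{1} \mathbf{Q} \Dbf_{2} \mathbf{Q}^{\top}) = \sum_{i,j} \alpha_{i} \beta_{j} Q_{ij}^{2}.
\end{equation}
The key observation is that the matrix $\mathbf{S} = (Q_{ij}^{2})_{i,j}$ is doubly stochastic, since the rows and columns of an orthogonal matrix have unit Euclidean norm, so $\mathbf{S} \in DS$. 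Relaxing the feasible set therefore gives the lower bound $\min_{\mathbf{Q} \in \mathcal{O}(p)} \sum_{i,j}\alpha_i\beta_j Q_{ij}^2 \geq \min_{\mathbf{S} \in DS} \sum_{i,j} \alpha_{i} \beta_{j} S_{ij}$. The right-hand side is a linear program over the Birkhoff polytope, whose minimum is attained at an extreme point, i.e.\ at a permutation matrix, by Birkhoff's theorem~\cite{birkhoff:1946} combined with the fundamental theorem of linear programming~\cite{bertsimas-LPbook}. It thus reduces to $\min_{\sigma \in \Sn} \sum_{i} \alpha_{i} \beta_{\sigma(i)}$.

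Third, I would invoke the rearrangement inequality (Memo~\ref{memo:rearr}): since $\alpha$ is nonincreasing and $\beta$ is nondecreasing, pairing each $\alpha_{i}$ with $\beta_{i}$ pairs largest with smallest, so the identity permutation attains the minimum, giving the value $\sum_{i} \alpha_{i} \beta_{i} = \tr(\Dbf_{1} \Dbf_{2})$. It then remains to verify that this lower bound is achieved inside the original feasible set: taking $\mathbf{Q} = \mathbf{I}$ gives $\mathbf{S} = \mathbf{I}$, which attains exactly $\tr(\Dbf_{1}\Dbf_{2})$, so the relaxation is tight; unwinding the change of variable with $\mathbf{Q} = \mathbf{I}$ yields the optimal orthogonal matrix $\Bbf^{*} = \V_{1} \V_{2}^{\top}$.

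The main obstacle is the tightness of the doubly stochastic relaxation. The set $\{(Q_{ij}^{2}) : \mathbf{Q} \in \mathcal{O}(p)\}$ of orthostochastic matrices is a strict subset of $DS$ for $p \geq 3$, so one cannot argue directly that the linear-program optimum is feasible for the original problem; the argument must instead exhibit an explicit orthogonal matrix (here the identity $\mathbf{Q} = \mathbf{I}$, a permutation matrix that is trivially orthostochastic) meeting the lower bound, thereby closing the gap. A secondary point requiring care is the bookkeeping of the rearrangement inequality under the opposite orderings of $\Dbf_{1}$ and $\Dbf_{2}$, which one resolves by reindexing one sequence so that both are sorted in the same direction before applying Memo~\ref{memo:rearr}.
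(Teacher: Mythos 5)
Your proof is correct. One thing worth noting up front: the thesis does not actually prove this proposition — it imports it verbatim as Proposition 3.1 of \cite{qap_anstreicher_1998}, whose own derivation goes through Lagrangian/semidefinite duality for orthogonally constrained quadratic programs. Your route is instead the classical elementary one: reduce to diagonal matrices via the change of variable $\Qbf=\V_1^{T}\Bbf\V_2$, expand $\tr(\Dbf_1\Qbf\Dbf_2\Qbf^{T})=\sum_{i,j}\alpha_i\beta_j Q_{ij}^{2}$ with $(Q_{ij}^{2})_{i,j}$ doubly stochastic, relax to the Birkhoff polytope, solve the resulting LP at a permutation matrix, and settle the permutation problem by the rearrangement inequality. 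You were right to single out tightness as the crux: the orthostochastic matrices form a strict subset of the doubly stochastic ones for $p\geq 3$, so the relaxation by itself only yields a lower bound, and exhibiting the feasible point $\Qbf=\mathbf{I}$ (equivalently $\Bbf^{*}=\V_1\V_2^{T}$) is exactly what closes the gap; your handling of the opposite orderings of $\Dbf_1$ and $\Dbf_2$ in Memo \ref{memo:rearr} is also correct, since pairing $\alpha_i$ with $\beta_i$ realizes the oppositely-sorted (minimizing) coupling. A pleasant feature of your argument is that it uses precisely the toolkit the thesis already deploys elsewhere — Birkhoff's theorem \cite{birkhoff:1946} and the fundamental theorem of linear programming \cite{bertsimas-LPbook} in the \COOT\ chapter, and the rearrangement inequality in the proof of Theorem \ref{qap} — so it makes the manuscript self-contained at this point, whereas the duality proof of \cite{qap_anstreicher_1998} additionally establishes that the Lagrangian relaxation of \eqref{qqp} has zero gap, a stronger structural fact that is not needed for the application to Theorem \ref{maintheo3}.
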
 

We will prove that when considering only linear push-forward we can recast the Gromov-Monge problem in the form of Proposition \ref{prop:anstreicher}. We have the following result:

\begin{lemma}
Let $\mu=\mathcal{N}(0,\Sigmab_{\nu}) \in \P(\R^{p}),\nu=\mathcal{N}(0,\Sigmab_{\mu})\in \P(\R^{q})$ centered without loss of generality. For a linear push-forward $T\#\mu=\nu$ we note $T(\xbf)=\Abf \xbf$. The Gromov-Monge problem is equivalent to:
\begin{equation}
\label{eq:equiv1}
\min_{\begin{smallmatrix}\Abf \in \R^{q \times p} \\ \Abf\Sigmab_{\mu}\Abf^\top=\Sigmab_{\nu}\end{smallmatrix}}\ \froeb{\Mbf}{\Abf^\top \Abf}
\end{equation}
where:
\begin{equation}
    \Mbf=\underset{\xbf,\xbf' \sim \mu}{\E}[-2 (\xbf^\top \xbf) . \xbf \xbf^\top -2(\xbf'^\top \xbf'). \xbf \xbf^\top -4(\xbf^\top \xbf'). \xbf' \xbf^\top]
\end{equation}

\end{lemma}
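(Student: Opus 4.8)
The plan is to substitute the linear ansatz $T(\xbf)=\Abf\xbf$ into the Gromov-Monge functional $J(T)$, expand the square, and then isolate the only piece that genuinely depends on $\Abf$ once the push-forward constraint is imposed. First I would record that, for centered Gaussians, the feasibility condition $T\#\mu=\nu$ is equivalent to the covariance identity appearing in the right-hand minimization, namely $\Abf\Sigmab_{\mu}\Abf^{\top}=\Sigmab_{\nu}$: indeed $\Abf\#\mu$ is the centered Gaussian with covariance $\Abf\Sigmab_{\mu}\Abf^{\top}$, and a centered Gaussian is determined by its covariance. This rewrites the constraint set exactly as the one in the statement.

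Next I would expand the integrand, writing $\zbf=\xbf-\xbf'$ and using $\|\Abf\zbf\|_{2}^{2}=\froeb{\Abf^{\top}\Abf}{\zbf\zbf^{\top}}$:
\begin{equation*}
\left(\|\xbf-\xbf'\|_{2}^{2}-\|\Abf(\xbf-\xbf')\|_{2}^{2}\right)^{2}=\|\zbf\|_{2}^{4}-2\,\|\zbf\|_{2}^{2}\,\froeb{\Abf^{\top}\Abf}{\zbf\zbf^{\top}}+\|\Abf\zbf\|_{2}^{4}.
\end{equation*}
The key observation is that both the first and the third terms integrate to quantities that are \emph{constant} over the feasible set. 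The first does not involve $\Abf$ at all. For the third, the map $(\xbf,\xbf')\mapsto(\Abf\xbf,\Abf\xbf')$ pushes $\mu\otimes\mu$ forward to $\nu\otimes\nu$ whenever $\Abf\#\mu=\nu$, so $\int\|\Abf(\xbf-\xbf')\|_{2}^{4}\dr\mu(\xbf)\dr\mu(\xbf')=\int\|\ybf-\ybf'\|_{2}^{4}\dr\nu(\ybf)\dr\nu(\ybf')$ depends only on $\nu$. Hence, up to an additive constant, $J(T)$ reduces to the middle term, which is linear in $\Abf^{\top}\Abf$: setting $\mathbf{N}=\E_{\xbf,\xbf'\sim\mu}[\|\xbf-\xbf'\|_{2}^{2}(\xbf-\xbf')(\xbf-\xbf')^{\top}]$, the $\Abf$-dependent part of $J(T)$ is $\froeb{\Abf^{\top}\Abf}{-2\mathbf{N}}$.

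Finally I would identify $\mathbf{N}$ with the stated matrix. Expanding $(\xbf-\xbf')(\xbf-\xbf')^{\top}$ together with $\|\xbf-\xbf'\|_{2}^{2}$ produces twelve monomials in $(\xbf,\xbf')$; since $\mu$ is centered and $\xbf,\xbf'$ are independent, every monomial whose degree in $\xbf$ or in $\xbf'$ is odd has vanishing expectation and is discarded. The six surviving terms are then folded using the invariance of $\mu\otimes\mu$ under the swap $\xbf\leftrightarrow\xbf'$, which collapses them onto the compact expression for $\Mbf$ given in the statement. Since the reduction alters $J$ only by an additive constant and a positive multiplicative factor, minimizing $\froeb{\Mbf}{\Abf^{\top}\Abf}$ subject to $\Abf\Sigmab_{\mu}\Abf^{\top}=\Sigmab_{\nu}$ has the same minimizers as the linear Gromov-Monge problem, which is precisely the claimed equivalence.

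I expect the main difficulty to be organizational rather than conceptual: the fourth-order bookkeeping — tracking which of the twelve monomials survive the centering, correctly pairing the terms related by $\xbf\leftrightarrow\xbf'$, and matching coefficients with the stated form of $\Mbf$ (the overall scalar being immaterial for the minimizer). The one genuinely substantive step is recognizing that the push-forward constraint makes the quartic term constant; this is what collapses the a priori degree-four dependence on $\Abf$ into something linear in $\Abf^{\top}\Abf$, and hence amenable to the subsequent reduction to a quadratic program with orthogonality constraints handled through Proposition \ref{prop:anstreicher}.
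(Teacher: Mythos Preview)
Your proposal is correct and follows essentially the same route as the paper: split $J(T)$ into three pieces, recognize via the push-forward that the quartic term $\E\|\Abf(\xbf-\xbf')\|_2^4$ is constant on the feasible set, then reduce the cross term to a Frobenius inner product with $\Abf^\top\Abf$ and simplify the resulting fourth-order matrix using centering and the swap symmetry $\xbf\leftrightarrow\xbf'$. Your only organizational difference is that you linearize in $\Abf^\top\Abf$ once upfront via $\|\Abf\zbf\|_2^2=\froeb{\Abf^\top\Abf}{\zbf\zbf^\top}$, whereas the paper expands both squared norms first and applies the trace identity term by term; this yields $-2\mathbf{N}=2\Mbf$, so the factor-of-two discrepancy you anticipate is exactly what appears.
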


\begin{proof}
We have:
\begin{equation*}
\begin{split}
\underset{\begin{smallmatrix}T\#\mu=\nu \\ \text{T linear} \end{smallmatrix}}{\min} \ J(T) &= \underset{\begin{smallmatrix}T\#\mu=\nu \\ \text{T linear} \end{smallmatrix}}{\min} \ \underset{\xbf,\xbf' \sim \mu}{\E}[\big(\|\xbf-\xbf'\|_{2}^{2}-\|T(\xbf)-T(\xbf')\|_{2}^{2}\big)^{2}] \\
&=\underset{\begin{smallmatrix}T\#\mu=\nu \\ \text{T linear} \end{smallmatrix}}{\min} \ \underset{\xbf,\xbf' \sim \mu}{\E}[\|\xbf-\xbf'\|^{4}]+\underset{\xbf,\xbf' \sim \mu}{\E}[\|T(\xbf)-T(\xbf')\|_{2}^{4}]-2\underset{\xbf,\xbf' \sim \mu}{\E}[\|\xbf-\xbf'\|_{2}^{2}\|T(\xbf)-T(\xbf')\|_{2}^{2}] \\
&= \underset{\xbf,\xbf' \sim \mu}{\E}[\|\xbf-\xbf'\|_{2}^{4}] + \underset{\ybf,\ybf' \sim \nu}{\E}[\|\ybf-\ybf'\|_{2}^{4}] + 2\underset{\begin{smallmatrix}T\#\mu=\nu \\ \text{T linear} \end{smallmatrix}}{\min} \ J_{1}(T) \\
\end{split}
\end{equation*}
With $J_{1}(T)\stackrel{def}{=}-\underset{\xbf,\xbf' \sim \mu}{\E}[\|\xbf-\xbf'\|_{2}^{2}\|T(\xbf)-T(\xbf')\|_{2}^{2}]$. In this way the problem is equivalent to minimizing $J_1(T)$. Since $T\#\mu=\nu$:
\begin{equation*}
\begin{split}
J_{1}(T) &= -\underset{\xbf,\xbf' \sim \mu}{\E}[\|\xbf-\xbf'\|_{2}^{2}\|T(\xbf)-T(\xbf')\|_{2}^{2}] \\
&= -\underset{\xbf,\xbf' \sim \mu}{\E}[(\|\xbf\|_{2}^{2}+\|\xbf'\|_{2}^{2}-2\scalar{\xbf}{\xbf'}{p})(\|T(\xbf)\|_{2}^{2}+\|T(\xbf')\|^{2}-2\scalar{T(\xbf)}{T(\xbf')}{q} \rangle)] \\
&= \red{-\underset{\xbf,\xbf' \sim \mu}{\E}[\|\xbf\|_{2}^{2}\|T(\xbf)\|_{2}^{2}+\|\xbf'\|_{2}^{2}\|T(\xbf')\|_{2}^{2}]} \blue{-\underset{\xbf,\xbf' \sim \mu}{\E}[\|\xbf'\|_{2}^{2}\|T(\xbf)\|_{2}^{2}+\|\xbf\|_{2}^{2}\|T(\xbf')\|_{2}^{2}]} \\
&\orange{+2\underset{\xbf,\xbf' \sim \mu}{\E}[\langle \xbf,\xbf'\rangle \|T(\xbf)\|_{2}^{2}] +2\underset{\xbf,\xbf' \sim \mu}{\E}[\scalar{\xbf}{\xbf'}{p}\|T(\xbf')\|_{2}^{2}]}-4\underset{\xbf,\xbf' \sim \mu}{\E}[\scalar{\xbf}{\xbf'}{p} \scalar{T(\xbf)}{T(\xbf')}{q}]\\
&\purple{+2\underset{\xbf,\xbf' \sim \mu}{\E}[\scalar{T(\xbf)}{T(\xbf')}{q} \|\xbf\|_{2}^{2}] +2\underset{\xbf,\xbf' \sim \mu}{\E}[\scalar{T(\xbf)}{T(\xbf')}{q}\|\xbf'\|_{2}^{2}]}\\
&=\red{-2\underset{\xbf \sim \mu}{\E}[\|\xbf\|_{2}^{2}\|T(\xbf)\|_{2}^{2}]}\blue{-2\underset{\xbf,\xbf' \sim \mu}{\E}[\|\xbf'\|_{2}^{2}\|T(\xbf)\|_{2}^{2}]}\orange{+4\underset{\xbf,\xbf' \sim \mu}{\E}[\scalar{\xbf}{\xbf'}{p} \|T(\xbf)\|_{2}^{2}]}\\
&-4\underset{\xbf,\xbf' \sim \mu}{\E}[\scalar{\xbf}{\xbf'}{p} \scalar{T(\xbf)}{T(\xbf')}{q}]\purple{+4\underset{\xbf,\xbf' \sim \mu}{\E}[\scalar{T(\xbf)}{T(\xbf')}{q}\|\xbf\|_{2}^{2}]}\\
\end{split}
\end{equation*}

Since $T$ is linear it can be written in the form $T(\xbf)=\Abf \xbf$. The push-forward constraint in this case reads \cite{flamary2019concentration}:
\begin{equation}
    \Abf\Sigmab_{\mu}\Abf^\top=\Sigmab_{\nu}
\end{equation}
When $A$ is symmetric positive definite this equation admits a unique solution which is the optimal linear map for the Wasserstein problem. However here we have no result about the regularity of $\Abf$. Plugging $T(\xbf)=\Abf \xbf$ into $J_1(T)$ and writing the push-forward condition gives the equivalent problem:
\begin{equation}
    \min_{\Abf \in \R^{q \times p}, \Abf\Sigmab_{\mu}\Abf^\top=\Sigmab_{\nu}}\ J_2(\Abf)
\end{equation}
where 
\begin{align}
    J_2(\Abf)&=\red{-2\underset{\xbf \sim \mu}{\E}[\xbf^\top \xbf \xbf^\top \Abf^\top \Abf \xbf]}\blue{-2\underset{\xbf,\xbf' \sim \mu}{\E}[\xbf'^\top \xbf' \xbf^\top \Abf^\top \Abf \xbf]}\orange{+4\underset{\xbf,\xbf' \sim \mu}{\E}[\xbf^\top \xbf' \xbf^\top \Abf^\top \Abf \xbf]}\\
&-4\underset{\xbf,\xbf' \sim \mu}{\E}[\xbf^\top \xbf' \xbf^\top \Abf^\top \Abf \xbf']\purple{+4\underset{\xbf,\xbf' \sim \mu}{\E}[\xbf^\top \Abf^\top \Abf \xbf' \xbf^\top \xbf]}
\end{align}
Using the property of the trace of a matrix and linearity of the inner product, one can reformulate $J_2$ as 
\begin{align}
    J_2(\Abf)&=\froeb{\Mbf}{\Abf^\top \Abf}
\end{align}
where 
\begin{align}
    \Mbf&=\underset{\xbf,\xbf' \sim \mu}{\E}[\red{-2 (\xbf^\top\xbf) . \xbf \xbf^\top} \blue{-2(\xbf'^\top \xbf'). \xbf \xbf^\top} \orange{+4(\xbf^\top \xbf'). \xbf \xbf^\top }-4(\xbf^\top \xbf'). \xbf' \xbf^\top \purple{+4( \xbf^\top \xbf). \xbf' \xbf^\top}]
\end{align}
Indeed for all terms we used the following reasoning:
\begin{equation}
    \begin{split}
 \red{\underset{\xbf \sim \mu}{\E}[\xbf^\top \xbf \xbf^\top \Abf^\top \Abf \xbf]}&\stackrel{*}{=}\red{\underset{\xbf \sim \mu}{\E}[\tr(\xbf^\top \xbf \xbf^\top \Abf^\top \Abf \xbf)]} \stackrel{**}{=}\red{\underset{\xbf \sim \mu}{\E}[\xbf^\top \xbf\tr( \xbf^\top \Abf^\top \Abf \xbf)]} \stackrel{***}{=}\red{\underset{\xbf \sim \mu}{\E}[\xbf^\top \xbf\tr( \xbf \xbf^\top \Abf^\top \Abf )]} \\
&=\red{\underset{\xbf \sim \mu}{\E}[\tr( \xbf^\top \xbf \xbf \xbf^\top \Abf^\top \Abf )]} =\red{\tr(\underset{\xbf \sim \mu}{\E}[ \xbf^\top \xbf \xbf \xbf^\top \Abf^\top \Abf ])} =\red{\tr(\underset{\xbf \sim \mu}{\E}[ \xbf^\top \xbf \xbf \xbf^\top ]\Abf^\top \Abf )}\\
&=\red{\froeb{\underset{\xbf \sim \mu}{\E}[ \xbf^\top \xbf \xbf \xbf^\top ]}{\Abf^\top \Abf}} \\
    \end{split}
\end{equation}
where in (*) we used that $\xbf^\top \xbf \xbf^\top \Abf^\top \Abf \xbf \in \R$, in (**) that $\xbf^\top \xbf \in \R$ and in (***) the cyclical permutation invariance of $\tr$. 
Moreover $\purple{\underset{\xbf,\xbf' \sim \mu}{\E}[(\xbf^\top \xbf). \xbf' \xbf^\top]}, \orange{\underset{\xbf,\xbf' \sim \mu}{\E}(\xbf^\top \xbf'). \xbf \xbf^\top]}=0$ since Gaussians are centered. Hence:
\begin{equation}
    \Mbf=\underset{\xbf,\xbf' \sim \mu}{\E}[\red{-2 (\xbf^\top \xbf) . \xbf \xbf^\top} \blue{-2(\xbf'^\top \xbf'). \xbf \xbf^\top} -4(\xbf^\top \xbf'). \xbf' \xbf^\top]
\end{equation}
\end{proof}

We can go further on this calculus by using Isserlis' theorem recalled here:
\begin{theo*}[Isserlis \cite{isserlis}]
If $(X_1,X_2,X_3,X_4)$ is a zero-mean multivariate normal random vector then:
\begin{equation}
\E[X_1 X_2 X_3 X_4]=\E[X_1 X_2] \E[X_3 X_4]+\E[X_1 X_3] \E[X_2 X_4]+\E[X_1 X_4] \E[X_2 X_3]
\end{equation}
\end{theo*}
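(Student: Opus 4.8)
The plan is to obtain the fourth moment by differentiating the joint characteristic function of the Gaussian vector, so that the pairing structure of Isserlis' identity emerges directly from the quadratic nature of the exponent. Write $X=(X_1,X_2,X_3,X_4)$ (viewed as the relevant coordinates of the ambient normal vector), let $\Sigmab=(\E[X_jX_k])_{j,k}$ be its covariance, and recall that its characteristic function is $\phi(\mathbf{t})=\E[e^{i\langle \mathbf{t},X\rangle}]=\exp(Q(\mathbf{t}))$ with $Q(\mathbf{t})=-\frac12\,\mathbf{t}^\top \Sigmab\, \mathbf{t}$. Since $X$ has finite moments of all orders, I would first justify differentiating under the expectation (dominated convergence, with integrable dominating functions $|X_{j_1}\cdots X_{j_k}|$), which legitimises the moment formula $\E[X_{j_1}\cdots X_{j_k}]=(-i)^k\,\partial_{t_{j_1}}\cdots\partial_{t_{j_k}}\phi(\mathbf{0})$. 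The two elementary facts that drive the whole computation are that $\partial_{t_j}Q(\mathbf{0})=0$ and that $\partial_{t_j}\partial_{t_k}Q=-\Sigmab_{jk}$ is constant, while every derivative of $Q$ of order three or more vanishes because $Q$ is quadratic.

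Applying these facts at second order fixes the normalisation: $\E[X_jX_k]=(-i)^2\,\partial_{t_j}\partial_{t_k}\phi(\mathbf{0})=(-1)\cdot(-\Sigmab_{jk})=\Sigmab_{jk}$, consistent with $\Sigmab$ being the covariance. The heart of the argument is the fourth-order expansion. Differentiating $\phi=e^{Q}$ four times by the product rule produces a sum of terms, each a product of derivatives of $Q$ times $e^{Q}$; evaluation at $\mathbf{0}$ replaces $e^{Q(\mathbf{0})}$ by $1$ and, crucially, annihilates every term containing a first derivative $\partial_{t_j}Q(\mathbf{0})=0$ or a derivative of $Q$ of order $\ge 3$. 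The only surviving terms are those in which the four differentiation indices are distributed as two disjoint pairs, each landing on a single factor of $Q$ as a second derivative; there are exactly three such pairings, giving
\begin{equation}
\partial_{t_1}\partial_{t_2}\partial_{t_3}\partial_{t_4}\phi(\mathbf{0})=(\partial_{t_1}\partial_{t_2}Q)(\partial_{t_3}\partial_{t_4}Q)+(\partial_{t_1}\partial_{t_3}Q)(\partial_{t_2}\partial_{t_4}Q)+(\partial_{t_1}\partial_{t_4}Q)(\partial_{t_2}\partial_{t_3}Q).
\end{equation}

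Finally I would substitute $\partial_{t_j}\partial_{t_k}Q=-\Sigmab_{jk}$, so each pair contributes $(-\Sigmab)(-\Sigmab)=\Sigmab\,\Sigmab$, and use $(-i)^4=1$ with $\E[X_jX_k]=\Sigmab_{jk}$ to conclude
\begin{equation}
\E[X_1X_2X_3X_4]=\Sigmab_{12}\Sigmab_{34}+\Sigmab_{13}\Sigmab_{24}+\Sigmab_{14}\Sigmab_{23}=\E[X_1X_2]\E[X_3X_4]+\E[X_1X_3]\E[X_2X_4]+\E[X_1X_4]\E[X_2X_3],
\end{equation}
which is the claimed identity. The step I expect to be the main obstacle is the combinatorial bookkeeping of the fourth derivative of $e^{Q}$, namely verifying cleanly that only the three fully paired terms survive at $\mathbf{0}$; this is exactly the fourth-order case of Wick's theorem, and although for $k=4$ it reduces to enumerating the partitions of four indices (only the $2{+}2$ splits avoid an order-one or order-$\ge 3$ factor), stating it without a term-by-term slog requires some care. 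As a cross-check I would mention the alternative route via Gaussian integration by parts (Stein's identity $\E[X_a g(X)]=\sum_j \Sigmab_{aj}\E[\partial_{X_j}g(X)]$): applying it with $g(X)=X_bX_cX_d$ yields the three pairings in a single line. Its only subtlety is that Stein's identity is most easily proved from the density when $\Sigmab$ is invertible, the degenerate case following by continuity in $\Sigmab$; the characteristic-function route avoids this entirely, since $\phi=e^{Q}$ holds for every Gaussian, degenerate or not.
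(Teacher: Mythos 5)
Your proof is correct, but note that the paper itself offers no proof of this statement: Isserlis' theorem is quoted verbatim as a classical result with a citation to the original 1918 paper, and is then applied entrywise (in the computation of the matrix $\Mbf$ in the proof of Theorem \ref{maintheo3}) to reduce fourth-order Gaussian moments to second-order ones. So you are supplying a self-contained derivation where the paper relies on the literature. Your characteristic-function argument is the standard one and is complete as sketched: the normalisation $\E[X_jX_k]=(-i)^2\partial_{t_j}\partial_{t_k}e^{Q}\big|_{\mathbf{0}}=\Sigmab_{jk}$ is checked correctly, and the fourth-order bookkeeping is exactly the set-partition (Fa\`a di Bruno) expansion of $\partial_{t_1}\partial_{t_2}\partial_{t_3}\partial_{t_4}e^{Q}$: of the $15$ partitions of $\{1,2,3,4\}$, singleton blocks die because $\nabla Q(\mathbf{0})=0$ and blocks of size $\geq 3$ die because $Q$ is quadratic, leaving precisely the three pair-partitions, each contributing $(-\Sigmab_{jk})(-\Sigmab_{lm})=\Sigmab_{jk}\Sigmab_{lm}$, and $(-i)^4=1$ gives the claim. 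Two points in your proposal are worth keeping: the justification of differentiation under the expectation is genuinely needed (and easy, since $|X_{j}e^{i\langle\mathbf{t},X\rangle}|\leq|X_j|$ is integrable), and your observation that the identity $\phi=e^{Q}$ holds for degenerate Gaussians is relevant to the paper's setting, where the covariances $\Sigmab_\mu,\Sigmab_\nu$ are not assumed invertible — the alternative Stein/integration-by-parts route you mention would require the continuity-in-$\Sigmab$ patch you describe, whereas your chosen route does not.
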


Applying this theorem to $\Mbf$ in \eqref{eq:equiv1} gives the following lemma:
\begin{lemma}
Let $\mu=\mathcal{N}(0,\Sigmab_{\nu}) \in \P(\R^{p}),\nu=\mathcal{N}(0,\Sigmab_{\mu})\in \P(\R^{q})$ centered without loss of generality. Then:
\begin{equation}
\min_{\begin{smallmatrix}\Abf \in \R^{q \times p} \\ \Abf\Sigmab_{\mu}\Abf^\top=\Sigmab_{\nu}\end{smallmatrix}}\ \froeb{\Mbf}{\Abf^\top \Abf}=\min_{\begin{smallmatrix}\Abf \in \R^{q \times p} \\ \Abf\Sigmab_{\mu}\Abf^\top=\Sigmab_{\nu}\end{smallmatrix}}\ \froeb{-4\tr(\Sigmab_{\mu}).\Sigmab_{\mu}-8\Sigmab_{\mu}^\top\Sigmab_{\mu}}{\Abf^\top \Abf}
\end{equation}

\end{lemma}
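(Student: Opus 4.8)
The plan is to establish the stronger matrix identity $\Mbf = -4\tr(\Sigmab_{\mu})\Sigmab_{\mu}-8\Sigmab_{\mu}^\top\Sigmab_{\mu}$, from which the claimed equality of minima follows immediately: the two optimization problems share the same feasible set $\{\Abf \in \R^{q\times p} : \Abf\Sigmab_{\mu}\Abf^\top=\Sigmab_{\nu}\}$, so it suffices to check that the matrices paired against $\Abf^\top\Abf$ in the Frobenius inner product coincide. Since $\Abf^\top\Abf$ is symmetric, only the symmetric part of $\Mbf$ contributes, and the computation below shows $\Mbf$ to be symmetric, so no issue arises there. First I would split $\Mbf$ into its three summands and evaluate each as a matrix, working entrywise, writing $\Sigmab_{\mu}$ for the covariance of $\mu$ so that $\E[\xbf\xbf^\top]=\Sigmab_{\mu}$ and $\xbf,\xbf'\sim\mu$ are independent and centred.

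The summand $\E_{\xbf\sim\mu}[(\xbf^\top\xbf)\,\xbf\xbf^\top]$ is the only genuine fourth-order moment of a single centred Gaussian vector, and is where Isserlis' theorem enters. Its $(i,j)$ entry is $\sum_k \E[x_k x_k x_i x_j]$, and Wick's formula gives, with $\Sigma:=\Sigmab_\mu$, the identity $\E[x_k x_k x_i x_j]=\Sigma_{kk}\Sigma_{ij}+2\Sigma_{ki}\Sigma_{kj}$. Summing over $k$ yields $\tr(\Sigmab_{\mu})\,\Sigmab_{\mu}+2\Sigmab_{\mu}^2$, so after the factor $-2$ this summand contributes $-2\tr(\Sigmab_{\mu})\Sigmab_{\mu}-4\Sigmab_{\mu}^2$.

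The remaining two summands involve the independent copy $\xbf'$ and factor through independence, so no fourth-order expansion is needed. For the second, $\E[(\xbf'^\top\xbf')\,\xbf\xbf^\top]=\E[\xbf'^\top\xbf']\,\E[\xbf\xbf^\top]=\tr(\Sigmab_{\mu})\Sigmab_{\mu}$, contributing $-2\tr(\Sigmab_{\mu})\Sigmab_{\mu}$. For the third, the $(i,j)$ entry of $\E[(\xbf^\top\xbf')\,\xbf'\xbf^\top]$ is $\sum_k \E[x_k x_j]\,\E[x'_k x'_i]=\sum_k \Sigma_{kj}\Sigma_{ki}=(\Sigmab_{\mu}^2)_{ij}$, so it equals $\Sigmab_{\mu}^\top\Sigmab_{\mu}$ and contributes $-4\Sigmab_{\mu}^\top\Sigmab_{\mu}$. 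Adding the three contributions, and using $\Sigmab_{\mu}^2=\Sigmab_{\mu}^\top\Sigmab_{\mu}$ by symmetry, gives exactly $-4\tr(\Sigmab_{\mu})\Sigmab_{\mu}-8\Sigmab_{\mu}^\top\Sigmab_{\mu}$.

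This is essentially bookkeeping rather than a conceptual argument, so there is no serious obstacle; the only point requiring a little care is the first summand, where one must correctly enumerate the three Wick pairings and observe that two of them coincide (producing the factor $2$), while keeping track of which moments survive after invoking that $\xbf,\xbf'$ are independent and centred — several cross terms, already discarded in passing from the general $\Mbf$ to its displayed three-term form, vanish for precisely this reason. Finally, since both displayed matrices are symmetric and equal, pairing either against the symmetric matrix $\Abf^\top\Abf$ gives the same value for every feasible $\Abf$, and taking the infimum over the common feasible set yields the stated equality of minima.
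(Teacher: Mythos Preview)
Your proposal is correct and follows essentially the same approach as the paper: both arguments establish the matrix identity $\Mbf=-4\tr(\Sigmab_{\mu})\Sigmab_{\mu}-8\Sigmab_{\mu}^\top\Sigmab_{\mu}$ by evaluating the three summands of $\Mbf$ entrywise, applying Isserlis' theorem to the single-vector fourth moment $\E[(\xbf^\top\xbf)\xbf\xbf^\top]$ and using independence and centredness for the two mixed terms, then summing. Your added remark that equality of the (symmetric) matrices immediately gives equality of the two objective functions over the common feasible set is a welcome clarification, but the core computation is the same.
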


\begin{proof}
We will use Isserlis' theorem to calculate $\Mbf$ in \eqref{eq:equiv1}. We have for $i,j$:
\begin{equation}
    \begin{split}
(\underset{\xbf\sim \mu}{\E}[(\xbf^\top \xbf) . \xbf \xbf^\top])_{i,j}&=\underset{\xbf}{\E}[\sum_{k} x_{k} x_{k}x_{i} x_{j}] =\sum_{k} \underset{x}{\E}[x_{k} x_{k}x_{i} x_{j}] \\
&\stackrel{*}{=}\sum_{k} \underset{x}{\E}[x_{k}x_{k}]\underset{\xbf}{\E}[x_{i}x_{j}]+\underset{\xbf}{\E}[x_{k}x_{i}]\underset{\xbf}{\E}[x_{k}x_{j}]+\underset{\xbf}{\E}[x_{k}x_{j}]\underset{\xbf}{\E}[x_{k}x_{i}] \\
&=\sum_{k} \Sigmab_{\mu}^{k,k}\Sigmab_{\mu}^{i,j}+\Sigmab_{\mu}^{k,i}\Sigmab_{\mu}^{k,j}+\Sigmab_{\mu}^{k,j}\Sigmab_{\mu}^{k,i} \\
&=\sum_{k} \Sigmab_{\mu}^{k,k}\Sigmab_{\mu}^{i,j}+2\Sigmab_{\mu}^{k,i}\Sigmab_{\mu}^{k,j}\\
&= \Sigmab_{\mu}^{i,j} \tr(\Sigmab_{\mu})+2(\Sigmab_{\mu}^\top \Sigmab_{\mu})_{i,j}
    \end{split}
\end{equation}
In (*) we used Isserlis' theorem. Hence $\underset{\xbf\sim \mu}{\E}[(\xbf^\top \xbf) . \xbf \xbf^\top]=\tr(\Sigmab_{\mu}).\Sigmab_{\mu} +2\Sigmab_{\mu}^\top \Sigmab_{\mu}$. In the same way: 
\begin{equation}
    \begin{split}
(\underset{\xbf,\xbf'}{\E}[(\xbf'^\top \xbf') . \xbf \xbf^\top])_{i,j}&=\underset{\xbf,\xbf'}{\E}[\sum_{k} x'_{k} x'_{k}x_{i} x_{j}] =\sum_{k} \underset{\xbf,\xbf'}{\E}[x'_{k} x'_{k}x_{i} x_{j}] \\
&=\sum_{k} \underset{\xbf,\xbf'}{\E}[x'_{k}x'_{k}]\underset{\xbf,\xbf'}{\E}[x_{i}x_{j}]+\underset{\xbf,\xbf'}{\E}[x'_{k}x_{i}]\underset{\xbf,\xbf'}{\E}[x'_{k}x_{j}]+\underset{\xbf,\xbf'}{\E}[x'_{k}x_{j}]\underset{\xbf,\xbf'}{\E}[x'_{k}x_{i}] \\
&\stackrel{*}{=}\sum_{k} \Sigmab_{\mu}^{k,k}\Sigmab_{\mu}^{i,j}= \tr(\Sigmab_{\mu})\Sigmab_{\mu}^{i,j}
    \end{split}
\end{equation}
In (*) we used the independence of $\xbf$ and $\xbf'$ and the fact that Gaussians are centered. Finally,
\begin{equation}
    \begin{split}
(\underset{\xbf,\xbf'}{\E}[(\xbf^\top \xbf') . \xbf' \xbf^\top])_{i,j}&=\underset{\xbf,\xbf'}{\E}[\sum_{k} x_{k} x'_{k}x'_{i} x_{j}] =\sum_{k} \underset{x,x'}{\E}[x'_{k} x_{k}x'_{i} x_{j}] \\
&=\sum_{k} \underset{\xbf,\xbf'}{\E}[x'_{k}x_{k}]\underset{\xbf,\xbf'}{\E}[x'_{i}x_{j}]+\underset{\xbf,\xbf'}{\E}[x'_{k}x'_{i}]\underset{\xbf,\xbf'}{\E}[x_{k}x_{j}]+\underset{\xbf,\xbf'}{\E}[x'_{k}x_{j}]\underset{x}{\E}[x_{k}x'_{i}] \\
&=\sum_{k} \Sigmab_{\mu}^{k,i}\Sigmab_{\mu}^{k,j}=(\Sigmab_{\mu}^\top\Sigmab_{\mu})_{i,j}
    \end{split}
\end{equation}
Overall we want to solve the following optimization problem:
\begin{equation}
 \min_{\Abf, \Abf\Sigmab_{\mu}\Abf^\top=\Sigmab_{\nu}}\ \froeb{\Mbf}{\Abf^\top \Abf}
\end{equation}
where:
\begin{equation}
\Mbf=-4\tr(\Sigmab_{\mu}).\Sigmab_{\mu}-8\Sigmab_{\mu}^\top\Sigmab_{\mu}
\end{equation}
\end{proof}

We can now prove the main result:
\begin{proof}[Of Theorem \ref{maintheo3}]
With previous notations and calculus, we consider the following change of variable:
\begin{equation}
    \Abf= \V_\nu \Dbf^{1/2}_\nu \Bbf  \Dbf^{-1/2}_\mu \V_\mu^\top
\end{equation}
then the pushforward equality becomes:
\begin{equation}
    \Bbf \Bbf^\top= \mathbf{I}_{q} \text{ \ie\ } \Bbf \in \Stief
\end{equation}
Indeed: 
\begin{equation}
\begin{split}
\Abf \Sigmab_{\mu}\Abf^\top&= ( \V_\nu  \Dbf^{1/2}_\nu \Bbf  \Dbf^{-1/2}_\mu \V_\mu^\top) (\V_\mu \Dbf_\mu \V_\mu^\top) \Abf^\top \\
&=  \V_\nu \Dbf^{1/2}_\nu \Bbf \Dbf^{1/2}_\mu \V_\mu^\top \Abf^\top \\
&=  \V_\nu \Dbf^{1/2}_\nu \Bbf \Dbf^{1/2}_\mu \V_\mu^\top (\V_\mu \Dbf^{-1/2}_\mu \Bbf^\top \Dbf^{1/2}_\nu \V_\nu^\top ) \\
&=  \V_\nu \Dbf^{1/2}_\nu \Bbf \Bbf^\top \Dbf^{1/2}_\nu \V_\nu^\top  \\
&= \Sigmab_\nu \iff \V_\nu \Dbf^{1/2}_\nu \Bbf \Bbf^\top \Dbf^{1/2}_\nu \V_\nu^\top =  \V_\nu  \Dbf_\nu \V_\nu^\top \\
& \iff \Bbf\Bbf^\top= \mathbf{I}_{q} \iff \Bbf \in \Stief
\end{split}
\end{equation}
With this change of variable the criterion becomes:
\begin{equation}
\begin{split}    
J_3(\Bbf)&=\froeb{\Mbf}{(\V_\mu \Dbf^{-1/2}_\mu  \Bbf^\top \Dbf^{1/2}_\nu \V_\nu^\top ) ( \V_\nu \Dbf^{1/2}_\nu \Bbf  \Dbf^{-1/2}_\mu \V_\mu^\top)}\\
    &= \froeb{\Mbf}{\V_\mu \Dbf^{-1/2}_\mu  \Bbf^\top \Dbf_\nu \Bbf  \Dbf^{-1/2}_\mu \V_\mu^\top}\\
&= \froeb{\Dbf^{-1/2}_\mu \V_\mu^\top \Mbf  \V_\mu \Dbf^{-1/2}_\mu}{\Bbf^\top  \Dbf_\nu  \Bbf}\\
         &= \froeb{\tilde{\Mbf}}{\Bbf^\top   \Dbf_\nu   \Bbf}\end{split}
\end{equation}
with 
\begin{equation}
\begin{split}
   \tilde{\Mbf}&= \Dbf^{-1/2}_\mu \V_\mu^\top \Mbf  \V_\mu \Dbf^{-1/2}_\mu\\
   &=\Dbf^{-1/2}_\mu \V_\mu^\top \left(-4\tr(\Sigmab_{\mu}).\Sigmab_{\mu}-8\Sigmab_{\mu}^\top\Sigmab_{\mu}\right)  \V_\mu \Dbf^{-1/2}_\mu \\
   &=\Dbf^{-1/2}_\mu\left(-4\tr(\Sigmab_{\mu})\Dbf_\mu\right)\Dbf_\mu^{-1/2}+\Dbf^{-1/2}_\mu(-8\Dbf^{2}_\mu)\Dbf^{-1/2}_\mu\\
   &=-4\tr(\Sigmab_{\mu}).\mathbf{I}_{p}-8\Dbf_\mu
\end{split}
\end{equation}
Overall we have the following optimization problem:
\begin{equation}
\begin{split}
    \min_{\Bbf,\Bbf \Bbf^\top=\mathbf{I}_{q}} \ \froeb{\tilde{\Mbf}}{\Bbf^\top\Dbf_\nu\Bbf}&= \min_{\Bbf,\Bbf \Bbf^\top=\mathbf{I}} \tr\big((-4\tr(\Sigmab_{\mu}).\mathbf{I}_{p}-8\Dbf_\mu)\Bbf^\top  \Dbf_\nu  \Bbf\big) \\
    &=\min_{\Bbf,\Bbf \Bbf^\top=\mathbf{I}_{q}} -4\tr(\Sigmab_{\mu}).\tr\big(\Bbf^\top  \Dbf_\nu  \Bbf\big)-8\tr(\Dbf_\mu\Bbf^\top  \Dbf_\nu  \Bbf) \\
    &=-4\tr(\Sigmab_{\mu}).\tr\big(\Sigmab_{\nu})+8\min_{\Bbf,\Bbf \Bbf^\top=\mathbf{I}_{q}}-\tr(\Dbf_\mu\Bbf^\top  \Dbf_\nu  \Bbf)
\end{split}
\end{equation}

Note that for the final cost we need also to compute:
\begin{equation}
    \begin{split}
        \underset{\xbf,\xbf' \sim \mu}{\E}[\|\xbf-\xbf'\|_{2}^{4}]&=\underset{\xbf,\xbf' \sim \mu}{\E}[(\|\xbf\|_{2}^{2}-2\scalar{\xbf}{\xbf'}{p}+\|\xbf'\|_{2}^{2})(\|\xbf\|_{2}^{2}-2\scalar{\xbf}{\xbf'}{p}+\|\xbf'\|_{2}^{2})] \\
        &=\tr(\Sigmab_\mu)^{2}+2\tr(\Sigmab_\mu^\top\Sigmab_\mu)+\tr(\Sigmab_\mu)^{2}+4\tr(\Sigmab_\mu^\top\Sigmab_\mu)+\tr(\Sigmab_\mu)^{2}+\tr(\Sigmab_\mu)^{2}+2\tr(\Sigmab_\mu^\top\Sigmab_\mu)\\
        &=4\tr(\Sigmab_\mu)^{2}+8\tr(\Sigmab_\mu^\top\Sigmab_\mu) 
    \end{split}
\end{equation}
Overall we have:
\begin{equation*}
    \begin{split}
\underset{\begin{smallmatrix}T\#\mu=\nu \\ \text{T linear} \end{smallmatrix}}{\min} \ J(T)&=4\tr(\Sigmab_\mu)^{2}+8\tr(\Sigmab_\mu^\top\Sigmab_\mu)+4\tr(\Sigmab_\nu)^{2}+8\tr(\Sigmab_\nu^\top\Sigmab_\nu) \\
&+2\left( -4\tr(\Sigmab_{\mu}).\tr\big(\Sigmab_{\nu})+8\min_{\Bbf,\Bbf \Bbf^\top=\mathbf{I}_{q}}-\tr(\Dbf_\mu\Bbf^\top  \Dbf_\nu  \Bbf)  \right)\\
&=4\tr(\Sigmab_\mu)^{2}+8\tr(\Sigmab_\mu^\top\Sigmab_\mu)+4\tr(\Sigmab_\nu)^{2}+8\tr(\Sigmab_\nu^\top\Sigmab_\nu)-8\tr(\Sigmab_{\mu}).\tr\big(\Sigmab_{\nu}) \\
&+16\min_{\Bbf,\Bbf \Bbf^\top=\mathbf{I}}-\tr(\Dbf_\mu\Bbf^\top  \Dbf_\nu  \Bbf) \\
&= 4(\tr(\Sigmab_\mu)-\tr(\Sigmab_\nu))^{2}+8(\tr(\Sigmab_\mu^{T} \Sigmab_\mu)+\tr(\Sigmab_\nu^{T} \Sigmab_\nu)) +16\min_{\Bbf,\Bbf \Bbf^\top=\mathbf{I}_{q}}-\tr(\Dbf_\mu\Bbf^\top  \Dbf_\nu  \Bbf)
    \end{split}
\end{equation*}
Since the covariances are symmetric it gives \eqref{eq:eqiv2}.

We can use Proposition \ref{prop:anstreicher} to solve $\min_{\Bbf,\Bbf \Bbf^\top=\mathbf{I}_{q}}-\tr(\Dbf_\mu\Bbf^\top  \Dbf_\nu  \Bbf)$. Indeed $-\Dbf_\mu$ is already diagonal which values are nonincreasing and $\Dbf_\nu$ is already diagonal which values are nondecreasing. In this way when $p=q$ $\min_{\Bbf,\Bbf \Bbf^\top=\mathbf{I}_{q}}-\tr(\Dbf_\mu\Bbf^\top  \Dbf_\nu  \Bbf)$ can be solved in close form with $\Bbf=\mathbf{I}_{q}$ which corresponds to $\Abf=\V_\nu  \Dbf^{1/2}_\nu \Dbf^{-1/2}_\mu \V_\mu^\top=\Sigmab_{\nu}^{1/2}\V_{\nu}\V_{\mu}^{\top}\Sigmab_{\mu}^{-1/2}$.
\end{proof}

\section{Proofs and additional results of Chapter \ref{cha:coot}}
\label{sec:properties}

This section contains all the proofs of the claims and additional results of the Chapter \ref{cha:coot}. We recall the notations of the Chapter. Two datasets are represented by matrices $\X=[\x_1,\dots,\x_n]^T\in\mathbb{R}^{n\times d}$ and $\X'=[\xbf'_1,\dots,\xbf'_{n'}]^T\in\mathbb{R}^{n'\times d'}$. The rows of the datasets are denoted as \emph{samples} and their columns as features. Let $\mu=\sum_{i=1}^{n} w_i \delta_{\xbf_i}$ and $\mu'=\sum_{i=1}^{n'} w_i' \delta_{\xbf_i'}$ be two empirical distributions related to the samples, where $\xbf_i\in\mathbb{R}^d$ and
$\xbf_i'\in\mathbb{R}^{d'}$. We refer in the following to $\w=[w_1,\dots,w_n]^\top$ and $\w'=[w_1',\dots,w_{n'}']^\top$ as to sample weights
vectors that both lie in the simplex ($\w\in\Delta_n$ and $\w'\in\Delta_{n'}$). In addition to them, we also introduce weights for the features that
are stored on vectors $\v\in\Delta_d$ and $\v'\in\Delta_{d'}$. Finally, we let $\vec$ denote the column-stacking operator.

\subsection{Proof of Proposition \ref{sec:metric_properties} -- COOT is a distance}
We recall the proposition:

\begin{prop*}[\COOT\ is a distance]
Suppose $L=|\cdot|^{p}, p \geq 1$, $n=n',d=d'$ and that the weights $\w,\w',\v,\v'$ are uniform. Then $\COOT(\X,\X')=0$ \textit{iff} there exists a permutation of the samples $\sigma_{1} \in S_{n}$ and of the features $\sigma_{2} \in S_d$, \textit{s.t}, $\forall i,k \ \X_{i,k}=\X'_{\sigma_{1}(i),\sigma_{2}(k)}$. Moreover, it is symmetric and satisfies the triangular inequality as long as $L$ satisfies the triangle inequality, \ie, $\COOT(\X,\X'')\leq \COOT(\X,\X')+\COOT(\X',\X'').$
\end{prop*}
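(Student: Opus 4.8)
The plan is to prove the three claims (positivity/identity of indiscernibles, symmetry, triangle inequality) separately, in increasing order of difficulty. Symmetry is immediate: since $L=|\cdot|^{p}$ is symmetric in its arguments, swapping the roles of $\X$ and $\X'$ in \eqref{eq:co-optimal-transport} amounts to replacing the pair $(\GGs,\GGv)$ by their transposes, which is a bijection between $\couplingset(\w,\w')\times\couplingset(\v,\v')$ and $\couplingset(\w',\w)\times\couplingset(\v',\v)$ leaving the objective value unchanged. This gives $\COOT(\X,\X')=\COOT(\X',\X)$ with no assumption on the sizes or weights.

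For the identity of indiscernibles, I would first observe that $\COOT(\X,\X')\geq 0$ always, since $L\geq 0$ and the couplings have nonnegative entries. For the ``if'' direction, given permutations $\sigma_1\in S_n,\sigma_2\in S_d$ with $X_{i,k}=X'_{\sigma_1(i),\sigma_2(k)}$, I would plug in the permutation coupling $\GGs=\tfrac1n\Pbf_{\sigma_1}$ on the samples and $\GGv=\tfrac1d\Pbf_{\sigma_2}$ on the features (these lie in $\couplingset(\w,\w'),\couplingset(\v,\v')$ since weights are uniform), and check that the objective $\sum_{i,j,k,l}|X_{i,k}-X'_{j,l}|^{p}\GGs_{i,j}\GGv_{k,l}$ vanishes term by term. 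For the ``only if'' direction, I would invoke the bilinear-program structure recalled in Section \ref{sec:prop_of_coot}: since weights are uniform and $n=n',d=d'$, there exists an optimal solution $(\GGs_*,\GGv_*)$ supported on permutation matrices $\tfrac1n\Pbf_{\sigma_1},\tfrac1d\Pbf_{\sigma_2}$. A zero optimal value then forces $|X_{i,\sigma_2(k)^{-1}}-X'_{\sigma_1(i),k}|=0$ for the relevant indices, i.e. $X_{i,k}=X'_{\sigma_1(i),\sigma_2(k)}$ after relabeling. I expect this ``only if'' direction to be the main obstacle, precisely because it relies on the extremal-point argument to reduce continuous couplings to permutations; care is needed to confirm that a doubly-stochastic optimizer with zero cost can indeed be taken to be a permutation pair whose support encodes a consistent matching.

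The triangle inequality is the most technical part. The plan is to use a gluing construction analogous to the one in the $FGW$ proof (Section \ref{proof:theo1}). Given optimal couplings $(\GGs_1,\GGv_1)$ for $\COOT(\X,\X')$ and $(\GGs_2,\GGv_2)$ for $\COOT(\X',\X'')$, I would build candidate couplings between $\X$ and $\X''$ by composing through the intermediate space: for the samples, set $\GGs=\GGs_1\,\diag(1/\w')\,\GGs_2$ (and similarly $\GGv=\GGv_1\,\diag(1/\v')\,\GGv_2$ for the features), which are valid transport plans in $\couplingset(\w,\w'')$ and $\couplingset(\v,\v'')$ respectively. Then I would bound $\COOT(\X,\X'')$ by suboptimality against this composed pair, expand $|X_{i,m}-X''_{p,r}|^{p}\leq\big(|X_{i,m}-X'_{j,l}|+|X'_{j,l}-X''_{p,r}|\big)^{p}$ using the triangle inequality for $L$, and apply Minkowski's inequality in the appropriate $L^{p}$ space over the glued measure to split the sum into the two separate $\COOT$ costs. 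The key subtlety here is that COOT involves a \emph{product} of couplings $\GGs_{i,j}\GGv_{k,l}$ rather than a single one, so the gluing must be performed simultaneously and consistently on both the sample and feature indices; verifying that the composed product coupling correctly marginalizes and that the Minkowski step goes through with the factorized structure is where the argument will demand the most attention.
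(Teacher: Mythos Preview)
Your plan is essentially the same as the paper's proof: symmetry is immediate, the identity of indiscernibles uses the extremal-point/Birkhoff argument to reduce to permutation couplings, and the triangle inequality is obtained by composing couplings through the intermediate space via $\GGs=\GGs_1\,\diag(1/\w')\,\GGs_2$ and $\GGv=\GGv_1\,\diag(1/\v')\,\GGv_2$, then marginalizing.

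There is one unnecessary complication in your triangle-inequality sketch. You write $|X_{i,m}-X''_{p,r}|^{p}\leq\big(|X_{i,m}-X'_{j,l}|+|X'_{j,l}-X''_{p,r}|\big)^{p}$ and then propose Minkowski to split. But the hypothesis in the statement is that $L$ \emph{itself} satisfies the triangle inequality, i.e.\ $L(a,c)\leq L(a,b)+L(b,c)$; you are instead using the triangle inequality for $|\cdot|$ and then struggling with the $p$-th power. The paper uses the hypothesis on $L$ directly: inside the glued sum over $(i,j,k,l,e,o)$ one bounds $L(X_{i,k},X''_{j,l})\leq L(X_{i,k},X'_{e,o})+L(X'_{e,o},X''_{j,l})$, which is \emph{linear} in the two pieces, so the sum splits additively with no Minkowski step at all. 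Then the marginal identities $\sum_j \GGs_2{}_{e,j}/w'_e=1$, $\sum_l \GGv_2{}_{o,l}/v'_o=1$ (and their analogues on the other side) collapse each piece to exactly $\COOT(\X,\X')$ and $\COOT(\X',\X'')$. So the ``factorized product coupling + Minkowski'' difficulty you anticipate does not arise; the argument is purely linear once you invoke the correct hypothesis. Note also that for $L=|\cdot|^{p}$ with $p>1$ the hypothesis fails, so the triangle inequality for $\COOT$ (as opposed to $\COOT^{1/p}$) is only claimed for $p=1$; your Minkowski route would instead prove a triangle inequality for $\COOT^{1/p}$, which is a different statement.
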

\begin{proof}
The symmetry follows from the definition of \COOT. To prove the triangle inequality of \COOT\ for arbitrary measures, we will use the gluing lemma (see \cite{Villani}) which states the existence of couplings with a prescribed structure. 
Let $\X \in\mathbb{R}^{n\times d},\X' \in\mathbb{R}^{n'\times d'},\X'' \in\mathbb{R}^{n''\times d''}$ associated with $\w \in \Sigma_n,\v \in \Sigma_d,\w' \in \Sigma_n',\v' \in \Sigma_d',\w'' \in \Sigma_n'',\v'' \in \Sigma_d''$. Without loss of generality, we can suppose in the proof that all weights are different from zeros (otherwise we can consider $\tilde{w}_{i}=w_{i}$ if $w_{i}>0$ and $\tilde{w}_{i}=1$ if $w_{i}=0$ see proof of Proposition 2.2 in \cite{cot_peyre_cutu})

Let $(\GGs_{1},\GGv_{1})$ and $(\GGs_{2},\GGv_{2})$ be two couples of optimal solutions for the $\COOT$ problems associated with $\COOT(\X,\X',\w,\w',\v,\v')$ and $\COOT(\X',\X'',\w',\w'',\v',\v'')$ respectively.

We define:
\begin{equation*}
S_{1}=\GGs_{1}\text{diag}\left(\frac{1}{\w'}\right)\GGs_{2}, \quad
S_{2}=\GGv_{1}\text{diag}\left(\frac{1}{\v'}\right)\GGv_{2}
\end{equation*}

Then, it is easy to check that $S_{1} \in \Pi(\w,\w'')$ and $S_{2} \in \Pi(\v,\v'')$ (see \textit{e.g} Proposition 2.2 in \cite{cot_peyre_cutu}). We now show the following:
\begin{equation*}
\begin{split}
&\COOT(\X,\X'',\w,\w'',\v,\v'') \stackrel{*}{\leq} \langle \mathbf{L}(\X,\X'')\otimes S_{1}, S_{2} \rangle = \langle \mathbf{L}(\X,\X'')\otimes [\GGs_{1}\text{diag}(\frac{1}{\w'})\GGs_{2}], [\GGv_{1}\text{diag}(\frac{1}{\v'})\GGv_{2}] \rangle \\
&\stackrel{**}{\leq} \langle [\mathbf{L}(\X,\X')+\mathbf{L}(\X',\X'')]\otimes [\GGs_{1}\text{diag}(\frac{1}{\w'})\GGs_{2}], [\GGv_{1}\text{diag}(\frac{1}{\v'})\GGv_{2}] \rangle \\
&=\langle \mathbf{L}(\X,\X')\otimes [\GGs_{1}\text{diag}(\frac{1}{\w'})\GGs_{2}], [\GGv_{1}\text{diag}(\frac{1}{\v'})\GGv_{2}] \rangle +\langle \mathbf{L}(\X',\X'')\otimes [\GGs_{1}\text{diag}(\frac{1}{\w'})\GGs_{2}], [\GGv_{1}\text{diag}(\frac{1}{\v'})\GGv_{2}] \rangle,
\end{split}
\end{equation*}
\noindent where in (*) we used the suboptimality of $S_{1},S_{2}$ and in (**) the fact that $L$ satisfies the triangle inequality.

Now note that:
\begin{equation*}
\begin{split}
&\langle \mathbf{L}(\X,\X')\otimes [\GGs_{1}\text{diag}(\frac{1}{\w'})\GGs_{2}], [\GGv_{1}\text{diag}(\frac{1}{\v'})\GGv_{2}] \rangle +\langle \mathbf{L}(\X',\X'')\otimes [\GGs_{1}\text{diag}(\frac{1}{\w'})\GGs_{2}], [\GGv_{1}\text{diag}(\frac{1}{\v'})\GGv_{2}] \rangle\\
&= \sum_{i,j,k,l,e,o} L(X_{i,k},X'_{e,o}) \frac{\GGs_{1}{_{i,e}} \GGs_{2}{_{e,j}}}{w'_{e}} \frac{\GGv_{1}{_{k,o}} \GGv_{2}{_{o,l}}}{v'_{o}} +\sum_{i,j,k,l,e,o} L(X'_{e,o},X''_{j,l}) \frac{\GGs_{1}{_{i,e}} \GGs_{2}{_{e,j}}}{w'_{e}} \frac{\GGv_{1}{_{k,o}} \GGv_{2}{_{o,l}}}{v'_{o}} \\
&\stackrel{*}{=} \sum_{i,k,e,o} L(X_{i,k},X'_{e,o})\GGs_{1}{_{i,e}} \GGv_{1}{_{k,o}} +\sum_{l,j,e,o} L(X'_{e,o},X''_{j,l}) \GGs_{2}{_{e,j}} \GGv_{2}{_{o,l}}
\end{split}
\end{equation*}
where in (*) we used:
\begin{equation*}
\sum_{j} \frac{\GGs_{2}{_{e,j}}}{w'_{e}}=1, \ \sum_{l} \frac{\GGv_{2}{_{o,l}}}{v'_{o}}=1, \ \sum_{i} \frac{\GGs_{1}{_{i,e}}}{w'_{e}}=1, \ \sum_{k} \frac{\GGv_{1}{_{k,o}}}{v'_{o}}=1
\end{equation*}
Overall, from the definition of $\GGs_{1},\GGv_{1}$ and $\GGs_{2},\GGv_{2}$ we have:
\begin{equation*}
\begin{split}
&\COOT(\X,\X'',\w,\w'',\v,\v'')\leq  \COOT(\X,\X',\w,\w',\v,\v')+\COOT(\X',\X'',\w',\w'',\v',\v'').
\end{split}
\end{equation*}

For the identity of indiscernibles, suppose that $n=n',d=d'$ and that the weights $\w,\w',\v,\v'$ are uniform. Suppose that there exists a permutation of the samples $\sigma_{1} \in S_{n}$ and of the features $\sigma_{2} \in S_{d}$, \textit{s.t} $\forall i,k \in [\![n]\!]\times[\![d]\!], \ \X_{i,k}=\X'_{\sigma_{1}(i),\sigma_{2}(k)}$. We define the couplings $\pi^{s},\pi^{v}$ supported on the graphs of the permutations $\sigma_1,\sigma_2$ respectively, \textit{i.e} $\pi^{s}=(id \times \sigma_{1})$ and $\pi^{v}=(id \times \sigma_{2})$. These couplings have the prescribed marginals and lead to a zero cost hence are optimal.

Conversely, as described in the chapter, there always exists an optimal solution of \eqref{eq:co-optimal-transport} which lies on extremal points of the polytopes $\Pi(\w,\w')$ and $\Pi(\v,\v')$. When $n=n',d=d'$ and uniform weights are used, Birkhoff’s theorem \cite{birkhoff:1946} states that the set of extremal points of $\Pi(\frac{\one_{n}}{n},\frac{\one_{n}}{n})$ and $\Pi(\frac{\one_{d}}{d},\frac{\one_{d}}{d})$ are the set of permutation matrices so there exists an optimal solution $(\GGs_{*},\GGv_{*})$ supported on $\sigma_{*}^{s},\sigma_{*}^{v}$ respectively with $\sigma_{*}^{s},\sigma_{*}^{v} \in \mathbb{S}_{n} \times \mathbb{S}_{d}$. 
Then, if $\COOT(\X,\X')=0$, it implies that $\sum_{i,k} L(X_{i,k},X'_{\sigma_{*}^{s}(i),\sigma_{*}^{v}(k)})=0$. If $L=|\cdot|^{p}$ then $X_{i,k}=X'_{\sigma_{*}^{s}(i),\sigma_{*}^{v}(k)}$ which gives the desired result. If $n\neq n',d\neq d'$ the \COOT\ cost is always strictly positive as there exists a strictly positive element outside the diagonal.
\end{proof}

\subsection{Complexity of computing the value of \COOT}
\label{sec:complexity}
We recall the result:
\begin{lemma*}
The overall computational complexity of computing the value of $\COOT$ when $L=|.|^{2}$ is $O(\min\{(n+n')dd'+n'^{2}n;(d+d')nn'+d'^{2}d\})$.
\end{lemma*}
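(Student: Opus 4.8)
The statement to establish is a complexity bound for computing the value of the \COOT\ objective
$$
\froeb{\L(\X,\X')\otimes \GGs}{\GGv}=\sum_{i,j,k,l} L(X_{i,k},X'_{j,l})\GGs_{i,j}\GGv_{k,l}
$$
given fixed couplings $\GGs \in \R^{n\times n'}$ and $\GGv \in \R^{d\times d'}$, in the squared loss case $L=|\cdot|^{2}$. The key point is that the naive evaluation of the full tensor $\L(\X,\X')$ costs $O(ndn'd')$, and my plan is to show how the separability trick of Proposition 1 in \cite{peyre2016gromov} avoids ever forming this tensor, and then to carefully count operations in the two symmetric orders of contraction, taking the minimum of the two counts.

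\textbf{First steps.} I would begin by expanding $L(X_{i,k},X'_{j,l})=|X_{i,k}-X'_{j,l}|^{2}=X_{i,k}^{2}-2X_{i,k}X'_{j,l}+X_{j,l}'^{2}$, so that the tensor decomposes as $L_{ijkl}=f(X_{i,k})+g(X'_{j,l})-h(X_{i,k})\otimes \tilde h(X'_{j,l})$ with $f=|\cdot|^2$, $g=|\cdot|^2$, $h=\tilde h=\mathrm{id}$, which is exactly the separable structure that Proposition 1 in \cite{peyre2016gromov} exploits to compute $\L(\X,\X')\otimes \GGs$ efficiently. Concretely, the term $\froeb{\L(\X,\X')\otimes \GGs}{\GGv}$ splits into three sums. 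The constant-marginal parts $\sum_{i,j,k,l}X_{i,k}^{2}\GGs_{i,j}\GGv_{k,l}$ and $\sum_{i,j,k,l}X_{j,l}'^{2}\GGs_{i,j}\GGv_{k,l}$ factor through the marginals $\w,\w',\v,\v'$ of the couplings and cost $O(nd)$ and $O(n'd')$ respectively; the genuinely cross term is $-2\sum_{i,j,k,l}X_{i,k}X'_{j,l}\GGs_{i,j}\GGv_{k,l}=-2\,\froeb{\GGs^{\top}\X\,\GGv}{\X'}$ or, equivalently, $-2\,\tr(\X'^{\top}\GGs^{\top}\X\GGv)$.

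\textbf{Operation count and the minimum.} The crux is to count the arithmetic in the cross term under the two possible association orders of the matrix products $\GGs^{\top}\X\GGv$ (a $n'\times d'$ result contracted with $\X'$). Forming $\GGs^{\top}\X\in\R^{n'\times d}$ costs $O(n'nd)$ and then $(\GGs^{\top}\X)\GGv\in\R^{n'\times d'}$ costs $O(n'dd')$, giving $O(n'nd+n'dd')$; adding the final contraction with $\X'$ at cost $O(n'd')$ and the marginal terms, the dominant cost is $O((n+n')dd'+n'^{2}n)$ after grouping --- here I would have to check which grouping of $(\GGs,\X,\GGv,\X')$ actually realizes the bound as stated, tracking the quadratic term $n'^2 n$ that appears from one particular contraction order. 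By the symmetry of the problem under swapping the roles of samples and features (equivalently transposing everything), the other association order yields $O((d+d')nn'+d'^{2}d)$. Since one is always free to choose whichever order is cheaper, the total is the minimum of the two, which is the claimed $O(\min\{(n+n')dd'+n'^{2}n;\,(d+d')nn'+d'^{2}d\})$.

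\textbf{Main obstacle.} I expect the genuine difficulty to be purely bookkeeping: matching my operation counts to the precise exponents in the stated bound, in particular justifying the appearance of the $n'^{2}n$ (respectively $d'^{2}d$) term rather than a pure product of the four dimensions, and confirming that the two displayed expressions inside the $\min$ correspond exactly to the two contraction orders. I would resolve this by writing out the associativity choices for $\froeb{\L(\X,\X')\otimes\GGs}{\GGv}$ explicitly as a sequence of dense matrix multiplications and verifying that each partial product's cost is subsumed by the stated asymptotics; no conceptual subtlety arises beyond the separability observation, which is already granted by \cite{peyre2016gromov}.
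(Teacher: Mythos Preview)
Your approach is essentially the same as the paper's: invoke the separability decomposition of Proposition~1 in \cite{peyre2016gromov} to avoid forming the full $d\times d'\times n\times n'$ tensor, count the matrix-product operations, and then use the sample/feature symmetry of the objective to obtain the second term inside the $\min$. The paper states this just as tersely as you do---it writes $\L(\X,\X')\otimes\GGs=\mathbf{C}_{\X,\X'}-h_1(\X)\GGs h_2(\X')^T$, asserts the cost of this step and of the final contraction with $\GGv$, and then appeals to symmetry---so your identification of the exponent-matching bookkeeping as the only real content is accurate.
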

\begin{proof}
As mentionned in \cite{peyre2016gromov}, if $L$ can be written as $L(a,b)=f(a)+f(b)-h_{1}(a)h_{2}(b)$ then we have that
$$\mathbf{L}(\X,\X')\otimes \GGs=\mathbf{C}_{\X,\X'}-h_{1}(\X) \GGs h_{2}(\X')^{T},$$
where $\mathbf{C}_{\X,\X'}=\X \w \mathbbm{1}_{n'}^{T}+\mathbbm{1}_{n} \w'^{T}\X'^{T}$ so that the latter can be computed in $O(ndd'+n'dd')=O((n+n')dd')$. To compute the final cost, we must also calculate the inner product with $\GGv$ that can be done in $O(n'^{2}n)$ making the complexity of $\langle \mathbf{L}(\X,\X')\otimes \GGs, \GGv \rangle$ equal to $O((n+n')dd'+n'^{2}n)$. 

Finally, as the cost is symmetric \textit{w.r.t } $\GGs,\GGv$, we obtain the overall complexity of $O(\min\{(n+n')dd'+n'^{2}n;(d+d')nn'+d'^{2}d\})$.
\end{proof}

\subsection{Proofs of theorem \ref{equivalence_theo} -- Equivalence between QAP and BAP}

As pointed in \cite{Konno1976}, we can relate the solutions of a QAP and a BAP. We will prove the equivalent following result (maximization version of theorem \ref{equivalence_theo}):
\begin{theo*}
\label{equivalence_theo_2}
If $\mathbf{Q}$ is a positive semi-definite matrix, then problems: 
\begin{equation}
\label{qap2}
\begin{array}{cl}{\max _{\xbf} f(\xbf)} & {=\mathbf{c}^{T} \xbf+\frac{1}{2} \xbf^{T} \mathbf{Q} \xbf} \\ {\text {s.t.}} & {\mathbf{A} \xbf = \mathbf{b}},\;  {\xbf \geq 0}\end{array}
\end{equation}
\begin{equation}
\label{bilinearqap2}
\begin{array}{cl}{\max _{\xbf, \ybf} g(\xbf, \ybf)} & {=\frac{1}{2}\mathbf{c}^{T} \xbf+\frac{1}{2} \mathbf{c}^{T}\ybf+\frac{1}{2} \xbf^{T} \mathbf{Q} \ybf} \\ {\text {s.t.}} & {\mathbf{A} \xbf = \mathbf{b}, \mathbf{A} \ybf =\mathbf{b}},\;   {\xbf, \ybf \geq 0}\end{array}
\end{equation}
are equivalent. More precisely, if $\xbf^{*}$ is an optimal solution for \eqref{qap2}, then $(\xbf^{*},\xbf^{*})$ is a solution for \eqref{bilinearqap2} and if $(\xbf^{*},\ybf^{*})$ is optimal for \eqref{bilinearqap2}, then both $\xbf^{*}$ and $\ybf^{*}$ are optimal for \eqref{qap2}.
\end{theo*}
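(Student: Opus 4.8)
The plan is to prove the equivalence by exploiting convexity of the objective $f$, exactly as in the statement of Theorem \ref{equivalence_theo} (the minimization version with $\mathbf{Q}$ negative definite), here transposed to the maximization setting with $\mathbf{Q}$ positive semi-definite. The two directions to establish are: (i) if $\xbf^{*}$ solves \eqref{qap2} then $(\xbf^{*},\xbf^{*})$ solves \eqref{bilinearqap2}; and (ii) if $(\xbf^{*},\ybf^{*})$ solves \eqref{bilinearqap2} then both $\xbf^{*}$ and $\ybf^{*}$ solve \eqref{qap2}. The key structural observation is that $g(\xbf,\xbf)=\mathbf{c}^{T}\xbf+\tfrac{1}{2}\xbf^{T}\mathbf{Q}\xbf=f(\xbf)$, so the bilinear objective restricted to the diagonal recovers the quadratic objective; the feasible sets are also compatible since a feasible $\xbf$ for \eqref{qap2} yields a feasible pair $(\xbf,\xbf)$ for \eqref{bilinearqap2}.

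First I would record the elementary inequality that drives everything. Since $\mathbf{Q}$ is positive semi-definite, for any $\xbf,\ybf$ we have $(\xbf-\ybf)^{T}\mathbf{Q}(\xbf-\ybf)\geq 0$, which expands to $\tfrac{1}{2}\xbf^{T}\mathbf{Q}\ybf \leq \tfrac{1}{2}\xbf^{T}\mathbf{Q}\xbf + \tfrac{1}{2}\ybf^{T}\mathbf{Q}\ybf - \tfrac{1}{2}\xbf^{T}\mathbf{Q}\ybf$, equivalently $\xbf^{T}\mathbf{Q}\ybf\leq \tfrac{1}{2}(\xbf^{T}\mathbf{Q}\xbf+\ybf^{T}\mathbf{Q}\ybf)$. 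Adding the linear terms gives the pointwise bound
\begin{equation}
g(\xbf,\ybf)\leq \tfrac{1}{2}f(\xbf)+\tfrac{1}{2}f(\ybf).
\end{equation}
This is the maximization analogue of the convexity argument: $g$ on the product polytope is dominated by the average of $f$ evaluated at the two arguments, and equality holds when $\mathbf{Q}(\xbf-\ybf)=0$, in particular on the diagonal $\xbf=\ybf$.

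Next I would chain the two optimization values. Let $V_{f}$ and $V_{g}$ denote the optimal values of \eqref{qap2} and \eqref{bilinearqap2}. For direction (i), any feasible $\xbf$ gives the feasible pair $(\xbf,\xbf)$ with $g(\xbf,\xbf)=f(\xbf)$, so $V_{g}\geq V_{f}$. Conversely, for any feasible pair $(\xbf,\ybf)$ the bound above and the feasibility of $\xbf,\ybf$ for \eqref{qap2} give $g(\xbf,\ybf)\leq\tfrac{1}{2}f(\xbf)+\tfrac{1}{2}f(\ybf)\leq V_{f}$, whence $V_{g}\leq V_{f}$. Thus $V_{f}=V_{g}$. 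An optimal $\xbf^{*}$ for \eqref{qap2} then achieves $g(\xbf^{*},\xbf^{*})=f(\xbf^{*})=V_{f}=V_{g}$, so $(\xbf^{*},\xbf^{*})$ is optimal for \eqref{bilinearqap2}. For direction (ii), if $(\xbf^{*},\ybf^{*})$ is optimal then $V_{g}=g(\xbf^{*},\ybf^{*})\leq\tfrac{1}{2}f(\xbf^{*})+\tfrac{1}{2}f(\ybf^{*})\leq\tfrac{1}{2}V_{f}+\tfrac{1}{2}V_{f}=V_{f}=V_{g}$; since the outer terms coincide, both inequalities are equalities, forcing $f(\xbf^{*})=f(\ybf^{*})=V_{f}$, so each of $\xbf^{*},\ybf^{*}$ is optimal for \eqref{qap2}.

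The main obstacle I anticipate is not the chain of inequalities, which is routine, but ensuring the argument genuinely uses only positive semi-definiteness rather than strict definiteness; with only semi-definiteness one loses uniqueness (the equality $\mathbf{Q}(\xbf^{*}-\ybf^{*})=0$ need not force $\xbf^{*}=\ybf^{*}$), but the equivalence \emph{of optimal values and of optimizers} still holds because we only need $f(\xbf^{*})=f(\ybf^{*})=V_{f}$, which the equality case delivers regardless. A secondary point worth checking is that \eqref{bilinearqap2} actually attains its optimum so that the value chain is meaningful; this follows because the feasible polytope $\{\mathbf{A}\xbf=\mathbf{b},\,\xbf\geq 0\}$ is bounded (the marginal constraints in our transport application make $\Pi(\w,\w')$ compact) and $g$ is continuous, so Weierstrass applies. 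Finally I would remark that this result specializes to the \COOT/GW setting via Proposition \ref{concavity_gw_theo2}, which is precisely how Proposition \ref{prop:cot_equal_gw} invokes the equivalence.
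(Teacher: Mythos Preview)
Your proof is correct and rests on the same core device as the paper's: the positive semi-definiteness of $\mathbf{Q}$, via $(\xbf-\ybf)^{T}\mathbf{Q}(\xbf-\ybf)\geq 0$. The paper (following Konno) organizes this slightly differently: it fixes an optimal pair $(\xbf^{*},\ybf^{*})$, uses optimality to obtain $g(\xbf^{*},\ybf^{*})\geq g(\xbf^{*},\xbf^{*})$ and $g(\xbf^{*},\ybf^{*})\geq g(\ybf^{*},\ybf^{*})$, adds the resulting inequalities to force $(\xbf^{*}-\ybf^{*})^{T}\mathbf{Q}(\xbf^{*}-\ybf^{*})\leq 0$, hence $\mathbf{Q}(\xbf^{*}-\ybf^{*})=0$, and then backs out $\mathbf{c}^{T}(\xbf^{*}-\ybf^{*})=0$ to conclude $g(\xbf^{*},\ybf^{*})=f(\xbf^{*})=f(\ybf^{*})$. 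Your route instead establishes the pointwise bound $g(\xbf,\ybf)\leq\tfrac{1}{2}f(\xbf)+\tfrac{1}{2}f(\ybf)$ up front and then runs a clean sandwich on the optimal values; this is arguably more transparent and immediately gives $V_f=V_g$ without touching any specific optimizer. Both arguments are equivalent in substance, and your observation that only semi-definiteness (not strict definiteness) is needed is exactly right and matches the paper's hypothesis.
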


\begin{proof}
This proof follows the proof of Theorem 2.2 in \cite{Konno1976}. Let $\zbf^{*}$ be optimal for \eqref{qap2} and $(\xbf^{*},\ybf^{*})$ be optimal for \eqref{bilinearqap2}. Then, by definition, for all $\xbf$ satisfying the constraints of \eqref{qap2}, $f(\zbf^{*})\geq f(\xbf)$. In particular, $f(\zbf^{*})\geq f(\xbf^{*})=g(\xbf^{*},\xbf^{*})$ and $f(\zbf^{*})\geq f(\ybf^{*})=g(\ybf^{*},\ybf^{*})$. Also, $g(\xbf^{*},\ybf^{*})\geq \max_{\xbf,\xbf \ \text{s.t} \ \mathbf{A} \xbf = \mathbf{b}, \xbf\geq 0} g(\xbf,\xbf)=f(\zbf^{*})$.

To prove the theorem, it suffices to prove that 
\begin{equation}
\label{toprove}
f(\ybf^{*})=f(\xbf^{*})=g(\xbf^{*},\ybf^{*}) 
\end{equation}
since, in this case, $g(\xbf^{*},\ybf^{*})=f(\xbf^{*})\geq f(\zbf^{*})$ and $g(\xbf^{*},\ybf^{*})=f(\ybf^{*})\geq f(\zbf^{*})$.

Let us prove \eqref{toprove}. Since $(\xbf^{*},\ybf^{*})$ is optimal, we have:
\begin{equation*}
\begin{split}
 0\leq g(\xbf^{*},\ybf^{*})-g(\xbf^{*},\xbf^{*})&= \frac{1}{2} \mathbf{c}^{T}(\ybf^{*}-\xbf^{*}) + \frac{1}{2} {\xbf^{*}}^{T} \mathbf{Q} (\ybf^{*}-\xbf^{*})\\
0\leq g(\xbf^{*},\ybf^{*})-g(\ybf^{*},\ybf^{*})&= \frac{1}{2} \mathbf{c}^{T}(\xbf^{*}-\ybf^{*}) + \frac{1}{2} {\ybf^{*}}^{T} \mathbf{Q} (\xbf^{*}-\ybf^{*}).
\end{split}
\end{equation*}

By adding these inequalities we obtain:
\begin{equation*}
(\xbf^{*}-\ybf^{*})^{T} \mathbf{Q} (\xbf^{*}-\ybf^{*})\leq 0.
\end{equation*}

Since $\mathbf{Q}$ is positive semi-definite, this implies that $\mathbf{Q} (\xbf^{*}-\ybf^{*})=0$. So, using previous inequalities, we have $\mathbf{c}^{T}(\xbf^{*}-\ybf^{*})=0$, hence $g(\xbf^{*},\ybf^{*})=g(\xbf^{*},\xbf^{*})=g(\ybf^{*},\ybf^{*})$ as required. 

Note also that this result holds when we add a constant term to the cost function. 
\end{proof}

\subsection{Proofs of Proposition \ref{concavity_gw_theo2} -- Concavity of GW}
We recall the proposition:
\begin{prop*}
Let $L=|\cdot|^{2}$ and suppose that $\mathbf{C} \in \R^{n\times n},\mathbf{C}' \in \R^{n'\times n'}$ are squared Euclidean distance matrices such that $\mathbf{C}=\xbf \mathbf{1}_{n}^{T}+\mathbf{1}_{n}\xbf^{T}-2\X\X^{T}, \mathbf{C}'=\xbf' \mathbf{1}_{n'}^{T}+\mathbf{1}_{n'}\xbf'^{T}-2\X'\X'^{T}$ with $\xbf=\text{diag}(\X\X^T),\xbf'=\text{diag}(\X'\X'^T)$. Then, the GW problem can be written as {a concave quadratic program (QP) which Hessian reads} $\mathbf{Q}=-4*\X\X^T \otimes_{K} \X'\X'^T$.

If $\mathbf{C} \in \R^{n\times n},\mathbf{C}' \in \R^{n'\times n'}$ are inner products similarities, \ie\ such that $\mathbf{C}=\X\X^{T},\mathbf{C}'=\X'\X'^{T}$ then the GW is also a concave quadratic program (QP) which Hessian reads $\mathbf{Q}=-2*\X\X^T \otimes_{K} \X'\X'^T$.
\end{prop*}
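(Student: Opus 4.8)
The plan is to reduce the Gromov-Wasserstein objective to an explicitly quadratic form in the coupling and then read off the Hessian, checking its sign definiteness via the Kronecker-product eigenvalue structure. Recall from the discretization in Chapter~\ref{cha:ot_general} (equation~\eqref{eq:graph_matching_gw}) that with $L=|\cdot|^2$ the GW loss can be written as
\begin{equation*}
\gw(\mathbf{C},\mathbf{C}',\w,\w')=\min_{\GGs\in\Pi(\w,\w')}\ \mathbf{c}^\top\vec(\GGs)+\tfrac{1}{2}\vec(\GGs)^\top\mathbf{Q}\vec(\GGs),
\end{equation*}
so the whole point is to compute $\mathbf{Q}$ explicitly when the similarity matrices are either squared Euclidean distances or inner products. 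The first thing I would do is expand $\froeb{\L(\mathbf{C},\mathbf{C}')^2\otimes\GGs}{\GGs}=\sum_{i,j,k,l}|C_{i,k}-C'_{j,l}|^2\pi_{i,j}\pi_{k,l}$ and separate it into the terms that are linear in $\GGs$ (coming from $C_{i,k}^2$ and $C_{j,l}'^2$, which collapse onto the fixed marginals $\w,\w'$) and the genuinely quadratic cross-term $-2\sum C_{i,k}C'_{j,l}\pi_{i,j}\pi_{k,l}$.

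For the inner product case $\mathbf{C}=\X\X^\top$, $\mathbf{C}'=\X'\X'^\top$ this is immediate: the quadratic part is $-2\sum_{i,j,k,l}(\X\X^\top)_{i,k}(\X'\X'^\top)_{j,l}\pi_{i,j}\pi_{k,l}$, and using the identity $\tr(\Abf^\top\Bbf)=\vec(\Abf)^\top\vec(\Bbf)$ together with $\vec(\Abf\GGs\Bbf)=(\Bbf^\top\kron\Abf)\vec(\GGs)$ (Memo~\ref{memo:kron}) this rewrites as $\vec(\GGs)^\top\big(-2\,\X\X^\top\kron\X'\X'^\top\big)\vec(\GGs)$, giving $\mathbf{Q}=-2\,\X\X^\top\kron\X'\X'^\top$. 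The squared Euclidean case follows the same route but with $\mathbf{C}=\xbf\mathbf{1}_n^\top+\mathbf{1}_n\xbf^\top-2\X\X^\top$: here the additive rank-one pieces $\xbf\mathbf{1}_n^\top$ and $\mathbf{1}_n\xbf^\top$ (and their $\X'$ analogues) contribute, after multiplication by $\GGs$ and contraction against the fixed row/column sums $\w,\w'$, only terms that are linear in $\GGs$, so they are absorbed into $\mathbf{c}$; the only surviving quadratic contribution is $(-2)\cdot(-2)\cdot(-2)=\dots$ — more carefully, the cross-term $-2C_{i,k}C'_{j,l}$ expands and the sole bilinear-in-$\GGs$ remainder is the $(-2\X\X^\top)\kron(-2\X'\X'^\top)$ piece, yielding $\mathbf{Q}=-4\,\X\X^\top\kron\X'\X'^\top$.

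Finally I would establish concavity by analysing the spectrum of $\mathbf{Q}$. Since $\X\X^\top\succeq 0$ and $\X'\X'^\top\succeq 0$, the eigenvalues of the Kronecker product $\X\X^\top\kron\X'\X'^\top$ are exactly the products $\lambda_i\lambda_j'$ of the (nonnegative) eigenvalues of the two Gram matrices, hence all nonnegative; multiplying by the negative scalar ($-2$ or $-4$) makes $\mathbf{Q}$ negative semi-definite, so $\vec(\GGs)\mapsto\mathbf{c}^\top\vec(\GGs)+\tfrac12\vec(\GGs)^\top\mathbf{Q}\vec(\GGs)$ is concave and the GW problem is a concave QP. The main obstacle, and the step I would be most careful about, is the bookkeeping in the squared Euclidean case: one must verify rigorously that every term generated by the rank-one parts of $\mathbf{C},\mathbf{C}'$ reduces to a \emph{linear} functional of $\GGs$ once the marginal constraints $\GGs\mathbf{1}_{n'}=\w$ and $\GGs^\top\mathbf{1}_n=\w'$ are used, so that none of them perturbs the Hessian $\mathbf{Q}$; this is exactly the separability computation of \cite[Proposition~1]{peyre2016gromov} and is where a sign or index error would most easily creep in.
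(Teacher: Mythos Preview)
Your approach is correct and matches the paper's proof essentially step for step: the paper also starts from the separability formula $\mathbf{L}(\mathbf{C},\mathbf{C}')\otimes\GGs=c_{\mathbf{C},\mathbf{C}'}-2\mathbf{C}\GGs\mathbf{C}'$ of \cite{peyre2016gromov}, expands $\langle\mathbf{C}\GGs\mathbf{C}',\GGs\rangle$ with the rank-one-plus-Gram decomposition of $\mathbf{C},\mathbf{C}'$, uses trace identities and the marginal constraints $\GGs\mathbf{1}_{n'}=\w$, $\GGs^\top\mathbf{1}_n=\w'$ to reduce every term involving the rank-one pieces to constants or linear functionals of $\GGs$, and then reads off the Hessian and its negative semi-definiteness from the Kronecker-product eigenvalue structure. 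The paper simply carries out the trace bookkeeping you flag as the delicate step in full detail (including identities like $\tr(\GGs^\top\X\X^\top\GGs\,\xbf'\mathbf{1}_{n'}^\top)=\tr(\GGs^\top\X\X^\top\w\,\xbf'^\top)$), but there is no conceptual difference.
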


This result is a consequence of the following lemma.
\begin{lemma}
\label{concavity_gw_theo_sup}
When $\C,\C'$ are squared Euclidean distance matrices as defined previously, the GW problem can be formulated as:
\begin{align*}
GW(\mathbf{C},\mathbf{C}',\w,\w')&=\min_{\GGs \in\Pi(\w,\w')} -4\vec(\mathbf{M})^{T}\vec(\GGs) -8\vec(\GGs)^{T}\mathbf{Q}\vec(\GGs) +Cte  
\end{align*}
with
\begin{equation*}
\begin{split}
&\mathbf{M}=\xbf\xbf'^T-2\xbf\w'^{T}\X'\X'^T-2\X\X^T\w \xbf'^{T} \text{ and } \mathbf{Q}=\X\X^T \otimes_{K} \X'\X'^T,\\
&Cte=\sum_{i}\|\xbf_i-\xbf_j\|_{2}^{4} \w_i \w_j + \sum_{i}\|\xbf'_i-\xbf'_j\|_{2}^{4} \w'_i \w'_j -4 \w^{T}\xbf\w'^{T}\xbf'
\end{split}
\end{equation*}
When $\C,\C'$ are inner product similarities as defined in Proposition \ref{concavity_gw_theo2}, the GW problem can be formulated as:
\begin{align*}
GW(\mathbf{C},\mathbf{C}',\w,\w')&=\min_{\GGs \in\Pi(\w,\w')} -4\vec(\GGs)^{T}\mathbf{Q}\vec(\GGs) +Cte'  
\end{align*}
\end{lemma}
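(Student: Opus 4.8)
The plan is to reduce the quartic objective $\froeb{\L(\C,\C')\otimes\GGs}{\GGs}=\sum_{i,j,k,l}|C_{ik}-C'_{jl}|^{2}\pi^{s}_{ij}\pi^{s}_{kl}$ to an explicit quadratic program in $\vec(\GGs)$ by expanding the square $|C_{ik}-C'_{jl}|^{2}=C_{ik}^{2}-2C_{ik}C'_{jl}+C'^{2}_{jl}$ and sorting the three resulting sums according to their dependence on $\GGs$. First I would dispatch the two ``pure'' sums: using the marginal constraints $\GGs\one=\w$ and $\GGs^{T}\one=\w'$, summing $C_{ik}^{2}\pi^{s}_{ij}\pi^{s}_{kl}$ over $j,l$ collapses to $\sum_{i,k}C_{ik}^{2}w_{i}w_{k}$, and symmetrically for the $C'^{2}_{jl}$ term; both are independent of $\GGs$ and are absorbed into the constant ($Cte$, resp. $Cte'$). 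The only $\GGs$-dependent contribution is the cross term $-2\sum_{i,j,k,l}C_{ik}C'_{jl}\pi^{s}_{ij}\pi^{s}_{kl}=-2\tr(\C\GGs\C'\GGs^{T})$, which through the identity $\tr(\C\GGs\C'\GGs^{T})=\vec(\GGs)^{T}(\C'\kron\C)\vec(\GGs)$ (Memo \ref{memo:kron}, using symmetry of $\C'$) is already a quadratic form in $\vec(\GGs)$.

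In the inner product case $\C=\X\X^{T},\C'=\X'\X'^{T}$ this finishes the argument at once: the cross term is a negative multiple of $\vec(\GGs)^{T}(\X\X^{T}\kron\X'\X'^{T})\vec(\GGs)$, so $GW$ equals this quadratic form plus the constant $Cte'=\sum_{i,k}\langle\xbf_{i},\xbf_{k}\rangle^{2}w_{i}w_{k}+\sum_{j,l}\langle\xbf'_{j},\xbf'_{l}\rangle^{2}w'_{j}w'_{l}$, with quadratic-form matrix $\mathbf{Q}=\X\X^{T}\kron\X'\X'^{T}$ (up to the Kronecker ordering fixed by the $\vec$ convention).

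The squared Euclidean case is where the real work lies, and is the step I expect to be the main obstacle. Here I would substitute $\C=\xbf\one^{T}+\one\xbf^{T}-2\X\X^{T}$ (and the analogue for $\C'$) into $\tr(\C\GGs\C'\GGs^{T})$ and expand the product into the nine terms $\tr(A_{a}\GGs B_{b}\GGs^{T})$ with $A_{a}\in\{\xbf\one^{T},\one\xbf^{T},-2\X\X^{T}\}$. The decisive bookkeeping observation is that every factor $\xbf\one^{T}$ or $\one\xbf^{T}$, once contracted with $\GGs$, produces a marginal ($\one^{T}\GGs=\w'^{T}$, $\GGs\one=\w$, and so on) and thereby ``uses up'' one copy of $\GGs$: the four terms with two rank-one factors are pure constants, the four mixed terms are linear in $\GGs$, and only the term $4\tr(\X\X^{T}\GGs\X'\X'^{T}\GGs^{T})$ remains quadratic. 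Collecting the linear terms yields exactly $-4\froeb{\Mbf}{\GGs}$ with $\Mbf=\xbf\xbf'^{T}-2\xbf\w'^{T}\X'\X'^{T}-2\X\X^{T}\w\xbf'^{T}$, the quadratic term yields $-8\vec(\GGs)^{T}(\X\X^{T}\kron\X'\X'^{T})\vec(\GGs)$, and the residual constants combine with the two pure sums into the claimed $Cte=\sum_{i,j}\|\xbf_{i}-\xbf_{j}\|_{2}^{4}w_{i}w_{j}+\sum_{i,j}\|\xbf'_{i}-\xbf'_{j}\|_{2}^{4}w'_{i}w'_{j}-4\w^{T}\xbf\,\w'^{T}\xbf'$. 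Getting the signs and the three linear coefficients of $\Mbf$ right from the expansion is the only genuinely delicate part.

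Finally, I would record concavity, which is the payoff used downstream: since $\X\X^{T}$ and $\X'\X'^{T}$ are Gram matrices they are positive semi-definite, and the Kronecker product of two PSD matrices is PSD, so $\X\X^{T}\kron\X'\X'^{T}\succeq0$. In both cases the quadratic form therefore enters with a strictly negative coefficient, making the objective concave over $\Pi(\w,\w')$, which is exactly the hypothesis required to apply Theorem \ref{equivalence_theo} in the proof of Proposition \ref{prop:cot_equal_gw} and which justifies the statement of Proposition \ref{concavity_gw_theo2}.
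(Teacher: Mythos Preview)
Your proposal is correct and follows essentially the same route as the paper: expand $|C_{ik}-C'_{jl}|^{2}$, use the marginals to collapse the two pure sums into constants, and then expand $\tr(\C\GGs\C'\GGs^{T})$ into the nine terms coming from $\C=\xbf\one^{T}+\one\xbf^{T}-2\X\X^{T}$ (and the analogue for $\C'$), classifying them by degree in $\GGs$ via the marginal constraints. The paper packages your first three steps by quoting the identity $\L(\C,\C')\otimes\GGs=c_{\C,\C'}-2\C\GGs\C'$ from \cite{peyre2016gromov}, but the substance of the nine-term bookkeeping and the final concavity argument (PSD of the Kronecker product of Gram matrices) is identical.
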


\begin{proof}
Using the results in \cite{peyre2016gromov} for $L=|\cdot|^{2}$, we have $\mathbf{L}(\mathbf{C},\mathbf{C}') \otimes \GGs=c_{\mathbf{C},\mathbf{C}'}-2\mathbf{C}\GGs \mathbf{C}'$ with $c_{\mathbf{C},\mathbf{C}'}=(\mathbf{C})^{2}\w\one_{n'}^{T}+\one_{n}\w'^{T}(\mathbf{C}')^{2}$, where $(\mathbf{C})^{2}=(\mathbf{C}_{i,j}^{2})$ is applied element-wise.

We now have that:
\begin{equation*}
\begin{split}
 &\langle \mathbf{C}\GGs \mathbf{C}', \GGs \rangle =\tr\big[{\GGs}^{T}(\xbf \one_{n}^{T}+\one_{n}\xbf^{T}-2\X\X^T) \GGs (\xbf' \one_{n'}^{T}+\one_{n'}\xbf'^{T}-2\X'\X'^T) \big] \\
 &=\tr\big[({\GGs}^{T}\xbf\one_{n}^{T}+\w'\xbf^{T}-2{\GGs}^{T}\X\X^T)(\GGs \xbf'\one_{n'}^{T}+\w \xbf'^{T}-2\GGs \mathbf{X}'\mathbf{X}'^T) \big]\\
 &=\tr\big[{\GGs}^{T}\xbf\w'^{T}\xbf'\one_{n'}^{T}+{\GGs}^{T}\xbf\xbf'^{T}-2{\GGs}^{T}\xbf\w'^{T}\mathbf{X}'\mathbf{X}'^T +\w'\xbf^{T}\GGs \xbf' \one_{n'}^{T}+\w'\xbf^{T}\w \xbf'^{T} - 2 \w'\xbf^{T}\GGs \mathbf{X}'\mathbf{X}'^T \\
 &-2 {\GGs}^{T} \X\X^T\GGs \xbf' \one_{n'}^{T} -2 {\GGs}^{T}\X\X^T\w \xbf'^{T} +4 {\GGs}^{T} \X\X^T\GGs \mathbf{X}'\mathbf{X}'^T\big] \\
 &\stackrel{*}{=}\tr\big[{\GGs}^{T}\xbf\w'^{T}(\xbf'\one_{n'}^{T}+\one_{n'}\xbf'^{T})+{\GGs}^{T}\xbf\xbf'^{T}+\w'\xbf^{T}\w \xbf'^{T}-2{\GGs}^{T}\xbf\w'^{T}\mathbf{X}'\mathbf{X}'^T-2\w'\xbf^{T}\GGs \mathbf{X}'\mathbf{X}'^T\\
 &-2 {\GGs}^{T} \X\X^T\GGs \xbf' \one_{n'}^{T} -2 {\GGs}^{T}\X\X^T\w \xbf'^{T} +4 {\GGs}^{T} \X\X^T\GGs \mathbf{X}'\mathbf{X}'^T\big],
 \end{split}
\end{equation*}
where in (*) we used:
\begin{equation*}
\begin{split}
\tr(\w'\xbf^{T}\GGs \xbf' \one_{n'}^{T})=\tr(\xbf'\one_{n'}^{T}\w'\xbf^{T}\GGs)=\tr({\GGs}^{T}\xbf\w'^{T}\one_{n'}\xbf'^{T}).
 \end{split}
\end{equation*}

Moreover, since:
\begin{equation*}
\begin{split}
&\tr({\GGs}^{T}\X\X^T\GGs \xbf' \one_{n'}^{T})=\tr(\one_{n'}^{T}{\GGs}^{T}\X\X^T\GGs \xbf')=\tr(\w^{T}\X\X^T\GGs \xbf')=\tr({\GGs}^{T}\X\X^T\w \xbf'^{T})
 \end{split}
\end{equation*}
and $\tr(\w'\xbf^{T}\GGs \mathbf{X}'\mathbf{X}'^T)=\tr({\GGs}^{T}\xbf\w'^{T}\mathbf{X}'\mathbf{X}'^T)$, we can simplify the last expression to obtain:
\begin{equation*}
\begin{split}
 &\langle \mathbf{C}\GGs \mathbf{C}', \GGs \rangle =\tr\big[{\GGs}^{T}\xbf\w'^{T}(\xbf'\one_{n'}^{T}+\one_{n'}\xbf'^{T})+{\GGs}^{T}\xbf\xbf'^{T}+\w'\xbf^{T}\w \xbf'^{T} \\
 &-4{\GGs}^{T}\xbf\w'^{T}\mathbf{X}'\mathbf{X}'^T-4{\GGs}^{T}\X\X^T\w \xbf'^{T}+4 {\GGs}^{T} \X\X^T\GGs \mathbf{X}'\mathbf{X}'^T \big].
 \end{split}
\end{equation*}

Finally, we have that
\begin{equation*}
\begin{split}
 &\langle \mathbf{C}\GGs \mathbf{C}', \GGs \rangle =\tr\big[{\GGs}^{T}\xbf\w'^{T}\xbf'\one_{n'}^{T}+{\GGs}^{T}\xbf\w'^{T}\one_{n'}\xbf'^{T}+{\GGs}^{T}\xbf\xbf'^{T}\\
 &+\w'\xbf^{T}\w \xbf'^{T} -4{\GGs}^{T}\xbf\w'^{T}\mathbf{X}'\mathbf{X}'^T-4{\GGs}^{T}\X\X^T\w \xbf'^{T}+4 {\GGs}^{T} \X\X^T\GGs \mathbf{X}'\mathbf{X}'^T \big] \\
 &=\tr\big[ 2 \w' \xbf^{T} \w \xbf'^{T}+2{\GGs}^{T}\xbf\xbf'^{T} -4{\GGs}^{T}\xbf\w'^{T}\mathbf{X}'\mathbf{X}'^T-4{\GGs}^{T}\X\X^T\w \xbf'^{T}+4 {\GGs}^{T} \X\X^T\GGs \mathbf{X}'\mathbf{X}'^T \big] \\
 &=2\w^{T}\xbf \w'^{T}\xbf'+ 2 \langle \xbf\xbf'^T-2\xbf\w^{T}\mathbf{X}'\mathbf{X}'^T-2\X\X^T\w\xbf'^{T},\GGs\rangle +4\tr({\GGs}^{T} \X\X^T\GGs \mathbf{X}'\mathbf{X}'^T).
 \end{split}
\end{equation*}

The term $2\w^{T}\xbf \w'^{T}\xbf'$ is constant since it does not depend on the coupling. Also, we can verify that $c_{\mathbf{C},\mathbf{C}'}$ does not depend on $\GGs$ as follows:
\begin{equation*}
\begin{split}
\langle c_{\mathbf{C},\mathbf{C}'}, \GGs\rangle&=\sum_{i}\|\xbf_i-\xbf_j\|_{2}^{4} \w_i \w_j + \sum_{i}\|\xbf'_i-\xbf'_j\|_{2}^{4} \w'_i \w'_j
 \end{split}
\end{equation*}
implying that:
\begin{equation*}
\begin{split}
 &\langle c_{\mathbf{C},\mathbf{C}'}-2\mathbf{C}\GGs \mathbf{C}', \GGs \rangle = Cte-4 \langle \xbf\xbf'^T-2\xbf\w^{T}\mathbf{X}'\mathbf{X}'^T-2\X^{T}\X\w\xbf'^{T}, \GGs \rangle -8 \tr({\GGs}^{T} \X\X^T\GGs \mathbf{X}'\mathbf{X}'^T).
 \end{split}
\end{equation*}
We can rewrite this equation as stated in the proposition using the $\vec$ operator. 

Using a standard QP form $\mathbf{c}^T \xbf +\frac{1}{2}\xbf \mathbf{Q}' \xbf^{T}$ with $\mathbf{c}=-4\vec(\mathbf{M})$ and $\mathbf{Q}'=-4\X\X^T \otimes_{K} \X'\X'^T$ we see that the Hessian is negative semi-definite as the opposite of a Kronecker product of positive semi-definite matrices $\X\X^T$ and $\X'\X'^T$.

The inner product case is the same calculus, only the constant term changes and $\Mbf=0$. Note that both calculus were also made in Chapter \ref{cha:gw_euclidean}, and precisely in lemma \ref{calculation_gw}.
\end{proof}

\subsection{Proof of Proposition \ref{prop:cot_equal_gw} -- Equality between COOT and GW in the concave regime}
We recall the proposition:
\begin{prop*}
Let $\mathbf{C} \in \R^{n\times n},\mathbf{C}' \in \R^{n'\times n'}$ be any symmetric matrices, then: 
$$\COOT(\mathbf{C},\mathbf{C}',\w,\w',\w,\w')\leq GW(\mathbf{C},\mathbf{C}',\w,\w').$$
The converse is also true {under the hypothesis of Proposition \ref{concavity_gw_theo2}}. In this case, if $(\GGs_{*},\GGv_{*})$ is an optimal solution of $\COOT$, then both $\GGs_{*},\GGv_{*}$ are solutions of $\gw$. Conversely, if $\GGs_{*}$ is an optimal solution of $\gw$, then $(\GGs_{*},\GGs_{*})$ is an optimal solution for $\COOT$.
\end{prop*}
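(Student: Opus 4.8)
The plan is to prove the two inequalities separately and then use the equivalence theorem from Konno to obtain equality in the special case. The general inequality $\COOT(\C,\C',\w,\w',\w,\w') \leq \gw(\C,\C',\w,\w')$ should follow immediately from suboptimality: any optimal coupling $\GGs_{*}$ of the $\gw$ problem \eqref{eq:gromov} is a feasible point for the $\COOT$ problem when we take $\GGs = \GGv = \GGs_{*}$, and in that case the $\COOT$ objective $\froeb{\L(\C,\C') \otimes \GGs}{\GGv}$ coincides exactly with the $\gw$ objective $\froeb{\L(\C,\C') \otimes \GGs_{*}}{\GGs_{*}}$. Since $\COOT$ minimizes over a larger feasible set (two independent couplings rather than one shared coupling), its optimal value can only be smaller or equal.

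For the converse under the hypotheses of Proposition \ref{concavity_gw_theo2}, the key is that in both the squared Euclidean and inner product cases the $\gw$ problem is a \emph{concave} quadratic program, as established in Lemma \ref{concavity_gw_theo_sup} (the Hessian $\mathbf{Q}=-4\,\X\X^T \kron \X'\X'^T$ or $-2\,\X\X^T \kron \X'\X'^T$ is negative semi-definite). First I would write the $\gw$ problem in the standard QP form of \eqref{qap_konno}, namely $\min_{\xbf} \mathbf{c}^T \xbf + \frac{1}{2}\xbf^T \mathbf{Q}\xbf$ with $\xbf = \vec(\GGs)$, the linear constraints encoding $\GGs \in \couplingset(\w,\w')$, and $\mathbf{Q}$ negative definite (equivalently, maximizing a convex function in the sign convention of Theorem \ref{equivalence_theo_2}). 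The associated bilinear program \eqref{bilinearqap} is then exactly the $\COOT$ problem with variables $\xbf = \vec(\GGs)$ and $\ybf = \vec(\GGv)$, since the bilinear cross-term $\frac{1}{2}\xbf^T\mathbf{Q}\ybf$ matches $\froeb{\L(\C,\C')\otimes\GGs}{\GGv}$ up to the constant and linear terms identified in Lemma \ref{concavity_gw_theo_sup}, and the constraint sets $\couplingset(\w,\w')$ for both couplings are identical because $\C,\C'$ are square similarity matrices with matching sample/feature marginals.

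I would then invoke Theorem \ref{equivalence_theo} (or its maximization form Theorem \ref{equivalence_theo_2}) directly: it guarantees that problems \eqref{qap_konno} and \eqref{bilinearqap} share the same optimal value, and it gives the precise correspondence of solutions. Concretely, if $(\GGs_{*},\GGv_{*})$ is optimal for $\COOT$, then the theorem asserts both $\GGs_{*}$ and $\GGv_{*}$ are optimal for the $\gw$ QP; conversely if $\GGs_{*}$ is optimal for $\gw$, then $(\GGs_{*},\GGs_{*})$ is optimal for $\COOT$. Combined with the first inequality, this forces the two optimal values to coincide, yielding equality.

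The main obstacle I anticipate is the bookkeeping needed to check that the constant and linear terms appearing in the QP reformulation of $\gw$ (the $Cte$ and $\mathbf{M}$ terms of Lemma \ref{concavity_gw_theo_sup}) are handled correctly when passing to the bilinear form, since Theorem \ref{equivalence_theo_2} is stated for a cost with a specific split of linear and quadratic parts and its proof relies on the positive semi-definiteness of $\mathbf{Q}$ to conclude $\mathbf{Q}(\xbf^*-\ybf^*)=0$. I would need to verify carefully that adding the constant term does not affect the argument (the remark at the end of the proof of Theorem \ref{equivalence_theo_2} confirms this) and that the linear term $\mathbf{c}$ is genuinely shared symmetrically between the two bilinear variables, which holds precisely because $L=|\cdot|^2$ and $\C,\C'$ are symmetric. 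The inner product case is cleaner since $\mathbf{M}=0$, so I would present it as the cleaner illustration and treat the squared Euclidean case by the same template with the extra linear term tracked explicitly.
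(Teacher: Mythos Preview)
Your proposal is correct and matches the paper's proof essentially line for line: the inequality via suboptimality of the diagonal pair $(\GGs_*,\GGs_*)$, then the identification of $\gw$ as a concave QP and $\COOT$ as the associated bilinear program via Lemma \ref{concavity_gw_theo_sup}, followed by a direct application of Konno's Theorem \ref{equivalence_theo}. The bookkeeping you anticipate about the linear term $\mathbf{M}$ splitting symmetrically is exactly what the paper verifies, writing $\COOT(\C,\C',\w,\w',\w,\w') = \min -2\vec(\mathbf{M})^T\vec(\GGs) - 2\vec(\mathbf{M})^T\vec(\GGv) - 8\vec(\GGs)^T\mathbf{Q}\vec(\GGv) + Cte$ before invoking the theorem.
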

\begin{proof}
The inequality follows from the fact that any optimal solution of the GW problem is an admissible solution for the \COOT\ problem, hence the inequality is true by suboptimality of this optimal solution.

For the equality part, by following the same calculus as in the proof of lemma \ref{concavity_gw_theo_sup}, we can verify that:
\begin{equation*}
\begin{split}
\COOT(\mathbf{C},\mathbf{C}',\w,\w',\w,\w')&=\min_{\GGs \in\Pi(\w,\w')} -2\vec(\mathbf{M})^{T}\vec(\GGs)\\
&-2\vec(\mathbf{M})^{T}\vec(\GGv) -8\vec(\GGs)^{T}\mathbf{Q}\vec(\GGv) +Cte,
\end{split}
\end{equation*}
with $\mathbf{M},\mathbf{Q}$ as defined in lemma \ref{concavity_gw_theo_sup}. Since $\mathbf{Q}$ is negative semi-definite, we can apply Theorem \ref{equivalence_theo} to prove that both problems are equivalent and lead to the same cost and that every optimal solution of GW is an optimal solution of \COOT\ and vice versa. Same applies for the inner product case.
\end{proof}

\subsection{Equivalence of DC algorithm and Frank-Wolfe algorithm for GW}
\label{sec:equivalence}
We recall the result:
\begin{prop*}
When $\mathbf{X}=\mathbf{C}$, $\mathbf{X}'=\mathbf{C}'$ are squared Euclidean distance matrices or inner product similarities the iterations of Algorithm \ref{alg:bcd2} are the same as the iteration of the FW procedure defined in Chapter \ref{cha:fgw} for solving $\gw$ (provided that the initialization is the same). 
\end{prop*}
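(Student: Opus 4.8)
The plan is to show that both algorithms produce identical iterates by demonstrating that a single step of the Difference-of-Convex (DC) scheme in Algorithm \ref{alg:bcd2} coincides exactly with a single step of the Frank-Wolfe procedure, under the hypothesis that $\mathbf{X}=\mathbf{C}, \mathbf{X}'=\mathbf{C}'$ are squared Euclidean distance matrices or inner product similarities. Since both algorithms start from the same initialization $\GG_{(0)}=\w\w'^{T}$, it suffices to prove that the update map $\GG_{(k-1)}\mapsto \GG_{(k)}$ is the same for both, and then conclude by induction.

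First I would recall the iterations of each method explicitly. By Proposition \ref{concavity_gw_theo2}, under the stated hypothesis the $\gw$ problem is a \emph{concave} quadratic program, so Algorithm \ref{alg:bcd2} linearizes the concave objective at the current point $\GG_{(k-1)}$ and minimizes the resulting linear OT problem with ground cost $\L(\C,\C')\otimes \GG_{(k-1)}$. On the Frank-Wolfe side (Chapter \ref{cha:fgw}), each iteration computes the gradient $\Gbf(\GG_{(k-1)})$ of the $\gw$ objective, solves a linear OT problem with this gradient as ground cost to obtain a descent direction $\tilde\GG^{(k)}$, and then performs a line-search step $\tau^{(k)}\in(0,1)$ to update $\GG^{(k)}\leftarrow (1-\tau^{(k)})\GG^{(k-1)}+\tau^{(k)}\tilde\GG^{(k)}$. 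The key observations are: (i) the linearized objective in the DC step and the gradient used in the FW step produce the \emph{same} linear OT subproblem, since the gradient of a concave quadratic is precisely its linearization; and (ii) in the concave setting, the line-search of FW can be solved in closed-form and yields $\tau^{(k)}=1$, as established in \cite{NIPS2018_7323}.

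The core of the argument is therefore the claim that $\tau^{(k)}=1$ for FW applied to a concave objective. I would justify this using the line-search of Algorithm \ref{alg:line}: the step size minimizes a second-degree polynomial in $\tau$ over $[0,1]$, and the quadratic coefficient $a$ is directly tied to the curvature of the objective. In the concave case the restricted one-dimensional function is concave in $\tau$, so $a\geq 0$ in the minimization convention, meaning the minimum over $[0,1]$ is attained at an endpoint; checking the sign of $a+b$ forces the selection of $\tau^{(k)}=1$ exactly when the FW direction strictly improves the objective. With $\tau^{(k)}=1$, the FW update collapses to $\GG^{(k)}\leftarrow \tilde\GG^{(k)}$, which is precisely the output of solving the linear OT problem with ground cost equal to the gradient at $\GG^{(k-1)}$. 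This matches the DC update $\GG_{(k)}\leftarrow \argmin_{\GG}\froeb{\L(\C,\C')\otimes \GG_{(k-1)}}{\GG}$ up to the identification of the linearization with the gradient, which holds because the $\gw$ objective is exactly quadratic here.

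The main obstacle I anticipate is carefully verifying the concavity-to-stepsize implication and reconciling the possible constant factors between the gradient expression $\Gbf(\GG)=2\L(\C,\C')^{q}\otimes\GG$ used in FW and the linearization used in the DC step. These prefactors do not affect the $\argmin$ of the linear OT subproblem, so the resulting descent direction is unchanged, but I would need to state this invariance explicitly to avoid a spurious mismatch. Once $\tau^{(k)}=1$ is established and the subproblems are identified, an immediate induction starting from the shared initialization $\GG_{(0)}=\w\w'^{T}$ completes the proof that the two sequences of iterates coincide.
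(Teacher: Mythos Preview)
Your proposal is correct and follows essentially the same approach as the paper: establish concavity of the GW objective via Proposition \ref{concavity_gw_theo2}, invoke \cite{NIPS2018_7323} to conclude that the Frank--Wolfe line-search yields $\tau^{(k)}=1$, and then observe that the FW direction step and the DC linearization step solve the same linear OT subproblem (the constant prefactor $2$ in the gradient being irrelevant for the $\argmin$). One small slip: when you analyze Algorithm \ref{alg:line} directly, concavity of the restricted one-dimensional function means $a\leq 0$, not $a\geq 0$; this sends you to the else branch, where $a+b = f(1)-f(0) < 0$ (by concavity and the FW descent property) forces $\tau^{(k)}=1$, so your conclusion is unaffected.
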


\begin{proof}
Using Proposition \ref{prop:cot_equal_gw}, we know that when $\mathbf{X}=\mathbf{C}$, $\mathbf{X}'=\mathbf{C}'$ are squared Euclidean distance matrices or inner product similarities, then there is an optimal solution of the form $(\GG^{*},\GG^{*})$. In this case, we can set $\GGs_{(k)}=\GGv_{(k)}$ during the iterations of Algorithm \ref{alg:bcd} to obtain an optimal solution for both \COOT\ and GW. This reduces to Algorithm \ref{alg:bcd2} that corresponds to a DC algorithm where the quadratic form is replaced by its linear upper bound. 

Below, we prove that this DC algorithm for solving GW problems is equivalent to the Frank-Wolfe (FW) based algorithm presented in Chapter \ref{cha:fgw} and recalled in Algorithm \ref{alg:cg_gw} when $L=|\cdot|^2$ and for squared Euclidean distance matrices $\mathbf{C}', \mathbf{C}''$. 

\begin{algorithm}[H]
    \caption{\label{alg:cg_gw}
     FW Algorithm for GW (see Chapter \ref{cha:fgw})}
    \begin{algorithmic}[1]
    \State \textbf{Input:} maxIt, thd
        \State $\pi^{(0)}\leftarrow \w\w'^\top$
        \While {$k < $ maxIt {\bf and} $err >$ thd} 
        \State $\mathbf{G}\leftarrow$ Gradient from equation \eqref{eq:gromov} \emph{w.r.t.} $\GGs_{(k-1)}$
        \State $\tilde\GGs_{(k)}\leftarrow OT(\w,\w', \mathbf{G})$
        \State $\mathbf{z}_{k}(\tau) \leftarrow \GGs_{(k-1)}+\tau(\tilde\GGs_{(k)}-\GGs_{(k-1)})$ for $\tau\in(0,1)$
        \State $\tau^{(k)}\leftarrow \underset{\tau\in(0,1)}{\text{argmin}} \langle \L(\mathbf{C},\mathbf{C}') \otimes \mathbf{z}_{k}(\tau), \mathbf{z}_{k}(\tau) \rangle$
        \State $\GGs_{(k)}\leftarrow (1-\tau^{(k)})\GGs_{(k-1)}+\tau^{(k)}\tilde\GGs_{(k)} $
         \State $err \leftarrow ||\GGs_{(k-1)} - \GGs_{(k)}||_F$
          \State $k\leftarrow k+1$  
          \EndWhile

    \end{algorithmic}
\end{algorithm}

The cases when $L=|\cdot|^{2}$ and $\mathbf{C},\mathbf{C}'$ are squared Euclidean distance matrices or inner product similarities have interesting implications in practice, since in this case the resulting GW problem is a concave QP (as explained in this Chapter and shown in Lemma \ref{concavity_gw_theo_sup}). In \cite{NIPS2018_7323}, the authors investigated the solution to QP with \emph{conditionally concave energies} using a FW algorithm and showed that in this case the line-search step of the FW is always $1$. Moreover, as shown in Proposition \ref{concavity_gw_theo_sup}, the GW problem can be written as a concave QP with concave energy and is minimizing \textit{a fortiori} a conditionally concave energy. Consequently, the line-search step of the FW algorithm proposed in Chapter \ref{cha:fgw} and described in Algorithm.\ref{alg:cg_gw} always leads to an optimal line-search step of $1$. In this case, the Algorithm.\ref{alg:cg_gw} is equivalent to Algorithm.\ref{alg:cg_gw_squared} goven below, since $\tau^{(k)}=1$ for all $k$.
\begin{algorithm}[H]
    \caption{\label{alg:cg_gw_squared}
     FW Algorithm for GW with squared Euclidean distance matrices or inner product similarities}
    \begin{algorithmic}[1]
    \State \textbf{Input:} maxIt, thd
        \State $\pi^{(0)}\leftarrow \w\w'^\top$
        \While {$k < $ maxIt {\bf and} $err >$ thd} 
        \State $\mathbf{G}\leftarrow$ Gradient from equation \eqref{eq:gromov} \emph{w.r.t.} $\GGs_{(k-1)}$
        \State $\GGs_{(k)}\leftarrow OT(\w,\w', \mathbf{G})$
         \State $err \leftarrow ||\GGs_{(k-1)} - \GGs_{(k)}||_F$
          \State $k\leftarrow k+1$  
          \EndWhile 
    \end{algorithmic}
\end{algorithm}

Finally, by noticing that in the step 3 of Algorithm \ref{alg:cg_gw_squared} the gradient of \eqref{eq:gromov} \textit{w.r.t } $\GGs_{(k-1)}$ is $2L(\mathbf{C},\mathbf{C}')\otimes \GGs_{(k-1)}$, which gives the same OT solution as for the OT problem in step 3 of Algorithm \ref{alg:bcd2}, we can conclude that the iterations of both algorithms are equivalent.
\end{proof}

\subsection{Additional results -- Relation with election isomorphism problem}
\label{sec:election_iso}
This section shows that \COOT\ approach can be used to solve the election isomorphism problem defined in \cite{faliszewski19} as follows: let $E=(C,V)$ and $E' = (C',V')$ be two elections, where $C = \{c_1,\dots,c_m\}$ (resp. $C'$) denotes a set of candidates and $V = (v_1, \dots,v_n)$ (resp. $V'$) denotes a set of voters, where each voter $v_i$ has a preference order, also denoted by $v_i$. The two elections $E=(C,V)$ and $E' = (C',V')$, where $\abs{C} = \abs{C'}$, $V = (v_1,\dots,v_n)$, and $V' = (v_1',\dots,v_n')$, are said to be isomorphic if there exists a bijection $\sigma: C \rightarrow C'$ and a permutation $\nu \in S_n$ such that $\sigma(v_i) = v'_{\nu(i)}$ for all $i \in [n]$. The authors further propose a distance underlying this problem defined as follows:
\begin{align*}
        \text{d-ID}(E,E') = \min_{\nu \in S_n} \min_{\sigma \in \Pi(C,C')} \sum_{i=1}^n d\left(\sigma(v_i),v'_{\nu(i)}\right),
\end{align*}
where $S_n$ denotes the set of all permutations over $\{1, \dots, n\}$,
$\Pi(C,C')$ is a set of bijections and $d$ is an arbitrary distance between
preference orders. The authors of \cite{faliszewski19} compute
$\text{d-ID}(E,E')$ in practice by expressing it as the following Integer Linear
Programming problem over the tensor $\mathbf{P}_{ijkl} = M_{ij}N_{kl}$ where
$\mathbf{M} \in \mathbb{R}^{m\times m}$, $\mathbf{N} \in \mathbb{R}^{n\times n}$
\begin{align}
\min_{\mathbf{P}, \mathbf{N}, \mathbf{M}} &\sum_{i,j,k,l} P_{k,l,i,j} \vert \text{pos}_{v_i}(c_k) - \text{pos}_{v'_j}(c'_l) \vert \notag\\
\text{s.t.} \quad & (\mathbf{N}\bm{1}_n)_k = 1,\ \forall k, (\mathbf{N}^\top\bm{1}_n)_l = 1,\ \forall l \label{ref:margelect}\\
							 & (\mathbf{M}\bm{1}_m)_i = 1,\ \forall i, (\mathbf{M}^\top\bm{1}_m)_j = 1,\ \forall j\notag\\
							 & P_{kl} \leq N_{k,l}, P_{i,j,k,l} \leq M_{i,j}, \ \forall i,j,k,l\notag\\
							 & \sum_{i,k} P_{i,j,k,l} = 1, \ \forall j,l
\label{ref:electionot}
\end{align}
where $\text{pos}_{v_i}(c_k)$ denotes the position of candidate $c_k$ in the
preference order of voter $v_i$. Let us now define two matrices $\X$ and $\X'$
such that $\X_{i,k} = \text{pos}_{v_i}(c_k)$ and $\X'_{j,l} =
\text{pos}_{v'_j}(c'_l)$ and denote by $\GGs_{*},\GGv_{*}$ a minimizer of
$\COOT(\X,\X', \bm{1}_n/n, \bm{1}_n/n, \bm{1}_m/m, \bm{1}_m/m)$ with $L=|\cdot|$ and by
$\mathbf{N}^*, \mathbf{M}^*$ the minimizers of problem \eqref{ref:margelect},
respectively. 

As shown in the chapter, there exists an optimal solution for $\COOT(\X,\X')$ given by permutation matrices as solutions of the Monge-Kantorovich problems for uniform distributions supported on the same number of elements. Then, one may show that the solution of the two problems coincide modulo a multiplicative factor, \ie, $\GGs_{*} = \frac{1}{n} \mathbf{N}^*$ and  $\GGv_{*} = \frac{1}{m} \mathbf{M}^*$ are optimal since $\abs{C} = \abs{C'}$ and $\abs{V} = \abs{V'}$. For $\GGs_{*}$ (the same reasoning holds for $\GGv_{*}$ as well), we have that
\[
  (\GGs_{*})_{ij} = \left\{\begin{array}{l}
    \frac{1}{n}, \quad j = \nu^*_i \\
    0, \quad \text{otherwise}.
  \end{array}\right.
\]
where $\nu^*_i$ is a permutation of voters in the two sets. The only difference
between the two solutions $\GGs_{*}$ and $\mathbf{N}^*$ thus stems from marginal
constraints \eqref{ref:margelect}. To conclude, we note that \COOT\ is a more
general approach as it is applicable for general loss functions $L$, contrary to
the Spearman distance used in \cite{faliszewski19}, and generalizes to the cases
where $n\neq n'$ and $m\neq m'$.

\subsection{Additional results -- Complementary results for the HDA experiment}
\label{sec:hda_supp}
Here, we present the results for the heterogeneous domain adaptation experiment not included in section \ref{sec:hda}. Table~\ref{tab:D_2_G} follows the same experimental protocol as in the chapter but 
shows the two cases where $n_t=1$ and $n_t=5$. Table~\ref{tab:G_2_D_sup} and Table~\ref{tab:G_2_D_unsup} contain the results for the adaptation from GoogleNet to Decaf features,
in a semi-supervised and unsupervised scenarios, respectively Overall, the results are coherent with those from the chapter: in both settings, when $n_t=5$, one can see that the  performance differences between {SGW} and {COOT} is rather significant.
\begin{table}[t]
	\begin{center}	
	\resizebox{\columnwidth}{!}{
	\begin{tabular}{ccccccc}
		\toprule
			\multicolumn{7}{c}{Decaf  $\rightarrow$ GoogleNet }\\
			\midrule
			{Domains} & {Baseline} & {CCA} & {KCCA} & {EGW} & {SGW} & {COOT}\\
						\midrule
			\multicolumn{7}{c}{$n_t=1$}\\
			\midrule
			C$\rightarrow$W & $30.47$$\pm 6.90$ & $13.37$$\pm 7.23$ & $29.21$$\pm 13.14$ & $10.21$$\pm 1.31$ & $\underline{ 66.95}$$\pm 7.61$ & $\bf 77.74$$\pm 4.80$\\
			W$\rightarrow$C & $26.53$$\pm 7.75$ & $16.26$$\pm 5.18$ & $40.68$$\pm 12.02$ & $10.11$$\pm 0.84$ & $\underline{ 80.16}$$\pm 4.78$ & $\bf 87.89$$\pm 2.65$\\
			W$\rightarrow$W & $30.63$$\pm 7.78$ & $13.42$$\pm 1.38$ & $36.74$$\pm 8.38$ & $8.68$$\pm 2.36$ & $\underline{ 78.32}$$\pm 5.86$ & $\bf 89.11$$\pm 2.78$\\
			W$\rightarrow$A & $30.21$$\pm 7.51$ & $12.47$$\pm 2.99$ & $39.11$$\pm 6.85$ & $9.42$$\pm 2.90$ & $\underline{ 80.00}$$\pm 3.24$ & $\bf 89.05$$\pm 2.84$\\
			A$\rightarrow$C & $41.89$$\pm 6.59$ & $12.79$$\pm 2.95$ & $28.84$$\pm 6.24$ & $9.89$$\pm 1.17$ & $\underline{ 72.00}$$\pm 8.91$ & $\bf 84.21$$\pm 3.92$\\
			A$\rightarrow$W & $39.84$$\pm 4.27$ & $19.95$$\pm 23.40$ & $38.16$$\pm 19.30$ & $12.32$$\pm 1.56$ & $\underline{ 75.84}$$\pm 7.37$ & $\bf 89.42$$\pm 4.24$\\
			A$\rightarrow$A & $42.68$$\pm 8.36$ & $15.21$$\pm 7.36$ & $38.26$$\pm 16.99$ & $13.63$$\pm 2.93$ & $\underline{ 75.53}$$\pm 6.25$ & $\bf 91.84$$\pm 2.48$\\
			C$\rightarrow$C & $28.58$$\pm 7.40$ & $18.37$$\pm 17.81$ & $35.11$$\pm 17.96$ & $11.05$$\pm 1.63$ & $\underline{ 61.21}$$\pm 8.43$ & $\bf 78.11$$\pm 5.77$\\
			C$\rightarrow$A & $31.63$$\pm 4.25$ & $15.11$$\pm 5.10$ & $33.84$$\pm 9.10$ & $11.84$$\pm 1.67$ & $\underline{ 66.26}$$\pm 7.95$ & $\bf 82.11$$\pm 2.58$\\
						\midrule
			\bf Mean & $33.61$$\pm 5.77$ & $15.22$$\pm 2.44$ & $35.55$$\pm 3.98$ & $10.80$$\pm 1.47$ & $\underline{ 72.92}$$\pm 6.37$ & $\bf 85.50$$\pm 4.89$\\
			\midrule
			\multicolumn{7}{c}{$n_t=5$}\\
			\midrule
			C$\rightarrow$W & $74.27$$\pm 5.53$ & $14.53$$\pm 7.37$ & $73.27$$\pm 4.99$ & $11.40$$\pm 1.13$ & $\underline{ 84.00}$$\pm 3.99$ & $\bf 85.53$$\pm 2.67$\\
			W$\rightarrow$C & $90.27$$\pm 2.67$ & $21.13$$\pm 6.85$ & $85.00$$\pm 3.44$ & $10.60$$\pm 1.05$ & $\bf 95.20$$\pm 2.84$ & $\underline{ 94.53}$$\pm 1.83$\\
			W$\rightarrow$W & $90.93$$\pm 2.50$ & $15.80$$\pm 3.27$ & $90.67$$\pm 2.95$ & $9.80$$\pm 2.60$ & $\bf 95.40$$\pm 2.47$ & $\underline{ 94.93}$$\pm 2.70$\\
			W$\rightarrow$A & $90.47$$\pm 2.92$ & $16.67$$\pm 4.85$ & $87.93$$\pm 2.47$ & $9.80$$\pm 2.68$ & $\underline{ 95.40}$$\pm 1.53$ & $\bf 95.80$$\pm 2.15$\\
			A$\rightarrow$C & $\underline{ 88.33}$$\pm 2.33$ & $15.73$$\pm 4.64$ & $83.13$$\pm 2.84$ & $10.40$$\pm 1.89$ & $84.47$$\pm 5.81$ & $\bf 91.47$$\pm 1.45$\\
			A$\rightarrow$W & $\underline{ 88.40}$$\pm 3.17$ & $13.60$$\pm 6.25$ & $87.27$$\pm 2.82$ & $11.87$$\pm 2.40$ & $87.87$$\pm 4.66$ & $\bf 93.00$$\pm 1.96$\\
			A$\rightarrow$A & $86.20$$\pm 3.08$ & $14.07$$\pm 2.93$ & $87.00$$\pm 3.48$ & $14.07$$\pm 1.65$ & $\underline{ 89.80}$$\pm 2.58$ & $\bf 92.20$$\pm 1.69$\\
			C$\rightarrow$C & $75.93$$\pm 4.83$ & $13.13$$\pm 2.98$ & $70.47$$\pm 3.45$ & $11.13$$\pm 1.52$ & $\bf 85.73$$\pm 3.54$ & $\underline{ 84.60}$$\pm 2.32$\\
			C$\rightarrow$A & $73.47$$\pm 3.62$ & $15.47$$\pm 6.50$ & $74.13$$\pm 5.42$ & $11.20$$\pm 2.47$ & $\underline{ 85.07}$$\pm 3.26$ & $\bf 87.20$$\pm 1.78$\\
			\midrule
			\bf Mean & $84.25$$\pm 7.01$ & $15.57$$\pm 2.25$ & $82.10$$\pm 7.03$ & $11.14$$\pm 1.23$ & $\underline{ 89.21}$$\pm 4.64$ & $\bf 91.03$$\pm 3.97$\\
			\bottomrule
			
		\end{tabular}
		}
	\end{center}
	\caption{\label{tab:D_2_G}{\bf Semi-supervised Heterogeneous Domain Adaptation} results for adaptation from Decaf  to GoogleNet  representations with different values of $n_t$.}
\end{table}

\begin{table}[t]
	\begin{center}
	\resizebox{.9\columnwidth}{!}{
	\begin{tabular}{ccccccc}
		\toprule
			\multicolumn{7}{c}{GoogleNet  $\rightarrow$ Decaf }\\
			\midrule
			{Domains} & {Baseline} & {CCA} & {KCCA} & {EGW} & {SGW} & {COOT}\\
						\midrule
			\multicolumn{7}{c}{$n_t=1$}\\
			\midrule
			C$\rightarrow$A & $31.16$$\pm 6.87$ & $12.16$$\pm 2.78$ & $33.32$$\pm 2.47$ & $7.00$$\pm 2.11$ & $\underline{ 77.16}$$\pm 8.00$ & $\bf 83.26$$\pm 5.00$\\
			C$\rightarrow$C & $30.42$$\pm 3.73$ & $13.74$$\pm 5.29$ & $32.58$$\pm 9.98$ & $12.47$$\pm 2.81$ & $\underline{ 76.63}$$\pm 8.31$ & $\bf 86.21$$\pm 3.26$\\
			W$\rightarrow$A & $37.68$$\pm 4.04$ & $15.79$$\pm 3.71$ & $34.58$$\pm 5.71$ & $14.32$$\pm 1.77$ & $\underline{ 86.68}$$\pm 1.90$ & $\bf 89.95$$\pm 3.43$\\
			A$\rightarrow$C & $35.95$$\pm 3.89$ & $15.32$$\pm 8.18$ & $40.16$$\pm 17.54$ & $13.21$$\pm 3.49$ & $\underline{ 87.89}$$\pm 4.03$ & $\bf 90.68$$\pm 7.54$\\
			A$\rightarrow$A & $36.89$$\pm 4.73$ & $13.84$$\pm 2.47$ & $34.84$$\pm 10.44$ & $13.16$$\pm 1.56$ & $\underline{ 89.79}$$\pm 3.93$ & $\bf 94.68$$\pm 2.21$\\
			W$\rightarrow$W & $32.05$$\pm 4.63$ & $19.89$$\pm 11.82$ & $36.26$$\pm 21.98$ & $10.00$$\pm 2.59$ & $\underline{ 84.21}$$\pm 4.55$ & $\bf 90.42$$\pm 2.66$\\
			W$\rightarrow$C & $32.68$$\pm 5.56$ & $21.53$$\pm 21.01$ & $33.79$$\pm 22.72$ & $11.47$$\pm 3.03$ & $\underline{ 86.26}$$\pm 3.41$ & $\bf 89.53$$\pm 1.92$\\
			A$\rightarrow$W & $33.84$$\pm 4.75$ & $16.00$$\pm 7.74$ & $39.32$$\pm 18.94$ & $11.00$$\pm 4.01$ & $\underline{ 87.21}$$\pm 3.67$ & $\bf 91.53$$\pm 5.85$\\
			C$\rightarrow$W & $32.32$$\pm 7.76$ & $15.58$$\pm 7.72$ & $34.05$$\pm 15.96$ & $12.89$$\pm 2.52$ & $\underline{ 81.84}$$\pm 3.51$ & $\bf 84.84$$\pm 5.71$\\
			\midrule
			\bf Mean & $33.67$$\pm 2.45$ & $15.98$$\pm 2.81$ & $35.43$$\pm 2.50$ & $11.73$$\pm 2.08$ & $\underline{ 84.19}$$\pm 4.43$ & $\bf 89.01$$\pm 3.38$\\
			\midrule
			\multicolumn{7}{c}{$n_t=3$}\\
			\midrule
			C$\rightarrow$A & $76.35$$\pm 4.15$ & $17.47$$\pm 3.45$ & $73.94$$\pm 4.53$ & $7.41$$\pm 2.27$ & $\underline{ 88.24}$$\pm 2.23$ & $\bf 89.88$$\pm 0.94$\\
			C$\rightarrow$C & $78.94$$\pm 3.61$ & $18.18$$\pm 3.44$ & $69.94$$\pm 3.51$ & $14.18$$\pm 3.16$ & $\underline{ 89.71}$$\pm 2.25$ & $\bf 91.06$$\pm 1.91$\\
			W$\rightarrow$A & $85.41$$\pm 3.25$ & $19.29$$\pm 3.10$ & $80.59$$\pm 3.82$ & $14.24$$\pm 2.72$ & $\underline{ 94.76}$$\pm 1.45$ & $\bf 95.29$$\pm 2.35$\\
			A$\rightarrow$C & $89.53$$\pm 4.05$ & $23.18$$\pm 7.17$ & $80.59$$\pm 6.30$ & $13.88$$\pm 2.69$ & $\underline{ 93.76}$$\pm 2.72$ & $\bf 94.76$$\pm 1.83$\\
			A$\rightarrow$A & $89.76$$\pm 1.92$ & $17.00$$\pm 3.11$ & $83.71$$\pm 3.30$ & $14.41$$\pm 2.28$ & $\underline{ 93.29}$$\pm 2.09$ & $\bf 95.53$$\pm 1.45$\\
			W$\rightarrow$W & $86.65$$\pm 5.07$ & $21.88$$\pm 4.78$ & $84.65$$\pm 3.67$ & $9.94$$\pm 2.37$ & $\bf 94.88$$\pm 1.79$ & $\underline{ 94.53}$$\pm 1.66$\\
			W$\rightarrow$C & $88.94$$\pm 5.02$ & $22.59$$\pm 9.23$ & $80.06$$\pm 5.65$ & $13.65$$\pm 3.15$ & $\bf 96.18$$\pm 1.15$ & $\underline{ 95.29}$$\pm 2.91$\\
			A$\rightarrow$W & $90.29$$\pm 1.35$ & $22.35$$\pm 7.00$ & $87.88$$\pm 2.53$ & $13.88$$\pm 3.60$ & $\underline{ 94.53}$$\pm 1.54$ & $\bf 95.35$$\pm 1.59$\\
			C$\rightarrow$W & $78.59$$\pm 3.44$ & $22.53$$\pm 13.42$ & $80.12$$\pm 2.95$ & $11.59$$\pm 3.25$ & $\underline{ 89.29}$$\pm 1.86$ & $\bf 89.59$$\pm 2.22$\\
			\midrule
			\bf Mean & $84.94$$\pm 5.19$ & $20.50$$\pm 2.34$ & $80.16$$\pm 5.12$ & $12.58$$\pm 2.31$ & $\underline{ 92.74}$$\pm 2.72$ & $\bf 93.48$$\pm 2.38$\\
			\midrule
			\multicolumn{7}{c}{$n_t=5$}\\
			\midrule
			C$\rightarrow$A & $84.20$$\pm 2.65$ & $18.60$$\pm 3.75$ & $84.33$$\pm 2.33$ & $6.40$$\pm 1.27$ & $\bf 92.13$$\pm 2.61$ & $\underline{ 91.93}$$\pm 2.05$\\
			C$\rightarrow$C & $85.33$$\pm 2.76$ & $21.80$$\pm 5.91$ & $78.60$$\pm 2.74$ & $13.47$$\pm 2.00$ & $\underline{ 91.33}$$\pm 2.48$ & $\bf 92.27$$\pm 2.67$\\
			W$\rightarrow$A & $95.13$$\pm 2.29$ & $31.00$$\pm 9.67$ & $91.93$$\pm 2.82$ & $14.67$$\pm 1.40$ & $\underline{ 96.13}$$\pm 2.04$ & $\bf 96.40$$\pm 1.84$\\
			A$\rightarrow$C & $91.67$$\pm 2.60$ & $21.80$$\pm 4.35$ & $85.33$$\pm 3.27$ & $13.40$$\pm 3.63$ & $\bf 95.47$$\pm 1.51$ & $\underline{ 94.87}$$\pm 1.27$\\
			A$\rightarrow$A & $93.20$$\pm 1.57$ & $23.33$$\pm 4.66$ & $89.67$$\pm 1.98$ & $13.27$$\pm 2.10$ & $\bf 95.33$$\pm 1.07$ & $\underline{ 95.00}$$\pm 1.37$\\
			W$\rightarrow$W & $95.00$$\pm 2.33$ & $23.80$$\pm 5.48$ & $92.13$$\pm 1.78$ & $11.20$$\pm 2.58$ & $\underline{ 96.47}$$\pm 1.93$ & $\bf 96.67$$\pm 1.37$\\
			W$\rightarrow$C & $95.67$$\pm 1.50$ & $28.27$$\pm 9.71$ & $87.67$$\pm 3.79$ & $14.27$$\pm 3.19$ & $\bf 97.67$$\pm 1.31$ & $\underline{ 96.93}$$\pm 2.25$\\
			A$\rightarrow$W & $92.13$$\pm 2.36$ & $22.67$$\pm 3.94$ & $89.20$$\pm 3.14$ & $11.67$$\pm 2.50$ & $\underline{ 93.60}$$\pm 1.40$ & $\bf 94.27$$\pm 2.11$\\
			C$\rightarrow$W & $84.00$$\pm 3.45$ & $20.40$$\pm 4.31$ & $82.53$$\pm 3.56$ & $11.07$$\pm 3.70$ & $\underline{ 90.20}$$\pm 2.23$ & $\bf 92.40$$\pm 1.69$\\
			\midrule
			\bf Mean & $90.70$$\pm 4.57$ & $23.52$$\pm 3.64$ & $86.82$$\pm 4.26$ & $12.16$$\pm 2.37$ & $\underline{ 94.26}$$\pm 2.42$ & $\bf 94.53$$\pm 1.85$\\
			\bottomrule
				
		\end{tabular}
		}
	\end{center}
	\caption{\label{tab:G_2_D_sup}{\bf Semi-supervised Heterogeneous Domain Adaptation} results for adaptation  from GoogleNet  to Decaf  representations with different values of  $n_t$.}
\end{table}

\begin{table}[t]
	\begin{center}
	\resizebox{0.65\columnwidth}{!}{
		\begin{tabular}{ccccc}
		\toprule
			\multicolumn{5}{c}{GoogleNet  $\rightarrow$ Decaf }\\
			\midrule
			{Domains} & {CCA} & {KCCA} & {EGW} & {COOT}\\
			\midrule
			C$\rightarrow$A & $11.30$$\pm 4.04$ & $\underline{ 14.60}$$\pm 8.12$ & $8.20$$\pm 2.69$ & $\bf 25.10$$\pm 11.52$\\
			C$\rightarrow$C & $13.35$$\pm 4.32$ & $\underline{ 17.75}$$\pm 10.16$ & $11.90$$\pm 2.99$ & $\bf 37.20$$\pm 14.07$\\
			W$\rightarrow$A & $14.55$$\pm 10.68$ & $\underline{ 25.05}$$\pm 24.73$ & $14.55$$\pm 2.05$ & $\bf 39.75$$\pm 17.29$\\
			A$\rightarrow$C & $13.80$$\pm 6.51$ & $\underline{ 20.70}$$\pm 17.94$ & $16.00$$\pm 2.44$ & $\bf 30.25$$\pm 18.71$\\
			A$\rightarrow$A & $16.90$$\pm 10.45$ & $\underline{ 28.95}$$\pm 30.62$ & $12.70$$\pm 1.79$ & $\bf 41.65$$\pm 16.66$\\
			W$\rightarrow$W & $14.50$$\pm 6.72$ & $\underline{ 24.05}$$\pm 19.35$ & $9.55$$\pm 1.77$ & $\bf 36.85$$\pm 9.20$\\
			W$\rightarrow$C & $13.15$$\pm 4.98$ & $\underline{ 14.80}$$\pm 8.79$ & $11.40$$\pm 2.65$ & $\bf 30.95$$\pm 17.18$\\
			A$\rightarrow$W & $10.85$$\pm 4.62$ & $\underline{ 14.40}$$\pm 12.36$ & $12.70$$\pm 2.99$ & $\bf 40.85$$\pm 16.21$\\
			C$\rightarrow$W & $18.25$$\pm 14.02$ & $\underline{ 25.90}$$\pm 25.40$ & $11.30$$\pm 3.87$ & $\bf 34.05$$\pm 13.82$\\
			\bf Mean & $14.07$$\pm 2.25$ & $\underline{ 20.69}$$\pm 5.22$ & $12.03$$\pm 2.23$ & $\bf 35.18$$\pm 5.24$\\
		\bottomrule
		\end{tabular}
		}
	\end{center}
		\caption{	\label{tab:G_2_D_unsup} {\bf Unsupervised Heterogeneous Domain Adaptation} results for adaptation from GoogleNet to Decaf representations.}
\end{table}}{}

\chapter{Conclusion}
\epigraph{Le soleil est noyé. - C'est le soir - dans le port \\
Le navire bercé sur ses câbles, s'endort}{-- Tristan Corbière, \textit{Les Amours jaunes}}
\minitoc
\label{cha:conclusion}

\section{Overview of the contributions}

This thesis presents a set of optimal transport tools for dealing with probability distributions on \emph{incomparable spaces}, or equivalently probability distributions whose supports do not lie in a common metric space. We explained how this problem occurs \textit{e.g.} when one needs to consider some structural knowledge about the data or when the data come from heterogeneous sources. As a first instance of probability distributions on incomparable spaces we studied the setting of \emph{structured data} such as labeled graphs, times series or any ``relational'' data whose structure can be modelled through a notion of \emph{cost} or \emph{similarity}. We showed how to describe them as a probability distributions and how to compare them using the so-called Fused Gromov-Wasserstein distance which builds upon Wasserstein and the Gromov-Wasserstein distances. This new optimal transport distance was successfully applied in a graph context where it finds applications for the classification, clustering or summarization of labeled graphs. 

As the main building block of $FGW$, the Gromov-Wasserstein distance is a central notion of this thesis. We attempted to bridge the gap between the understanding of the Wasserstein distance and $\gw$ where we consider the special case of Euclidean spaces. This setting allows us to derive a \emph{sliced} approach, based on the first closed-form expression for $\gw$ between 1D probability distributions. We called it Sliced Gromov-Wasserstein, \textit{akin} to the Sliced Wasserstein distance which has recently found many applications in machine learning. The Euclidean setting reveals to be also a good starting point for analysing the regularity of $\gw$ optimal transport plans and, as such, we partially answered the following question: can we find guarantees on the probability measures so that an optimal transport plan for $\gw$ can be expressed through a deterministic function? This question, central in linear optimal transport theory, can be tackled with the celebrated Brenier's theorem in the context of the Wasserstein distance but was still quite under-addressed when dealing with $\gw$. 

Although $\gw$ is a powerful tool for comparing probability distributions on incomparable spaces it is limited in its ability to find correspondences between the \emph{features} of the samples of these distributions. This limitation originates from the method \emph{itself} which discards the feature information by focusing only on pair-to-pair distance matrices. To circumvent this constraint we proposed a novel optimal transport distance which both finds the correspondences between the samples \emph{and} the features of the distributions. This work is based on the CO-Optimal transport framework which computes two optimal transport plans directly on the raw data unlike $\gw$ which requires pre-computed pair-to-pair distance or similarity matrices. We showed that it is particularly suited for problems such as Heterogeneous Domain Adaptation and Co-clustering. In the light of the previous results we drew interesting connections between COOT and the $\gw$ distance the latter being a special case of the former in the \emph{concave regime} and with data described by distance or similarity matrices. 

\paragraph{Preprints \& papers published during this thesis}

\begin{itemize}
\item \cite{Vayer_2020} \textbf{T. Vayer}, L. Chapel, R. Flamary, R. Tavenard, and N. Courty. \emph{Fused Gromov-Wasserstein distance for structured objects}. Journal. In: Algorithms. (2020)
\item \cite{vay_struc} \textbf{T. Vayer}, L. Chapel, R. Flamary, R. Tavenard, and N. Courty. \emph{Optimal Transport for structured
data with application on graphs}. International conference. In: International Conference on Machine Learning (ICML). (2019)
\item \cite{vay_sliced_gromov_2019} \textbf{T. Vayer}, R. Flamary, R. Tavenard, L. Chapel and N. Courty. \emph{Sliced Gromov-Wasserstein}. International conference. In: Advances in Neural Information Processing Systems (NeurIPS). (2019)
\item \textbf{T. Vayer}, L. Chapel, R. Flamary, R. Tavenard, and N. Courty. \emph{Fused Gromov Wasserstein distance}. National conference. In: Conférence sur l'Apprentissage automatique (CAp). (2018) 
\item \textbf{T. Vayer}, L. Chapel, R. Flamary, R. Tavenard, and N. Courty. \emph{Transport Optimal pour les Signaux sur Graphes}. National conference. In: GRETSI. (2019)
\item \cite{redko2020cooptimal} I. Redko, \textbf{T. Vayer}, R. Flamary and N. Courty. \emph{CO-Optimal Transport}. In: arXiv:2002.03731. Submitted to NeurIPS (2020)
\item \cite{vayer2020time} \textbf{T. Vayer}, L. Chapel, N. Courty, R. Flamary, Y. Soullard and R. Tavenard. \emph{Time Series Alignment with Global Invariances
}. In: arXiv:2002.03848. (2020) 
\end{itemize}

\section{Perspective for further works}

There are many possible extensions and improvements of the works developed here. To conclude we discuss potential limitations and further works of the methods proposed in this thesis from the OT perspective to the machine learning point of view.

\paragraph{What are the improvements and limitations of the different results of this thesis from the OT perspective?} The work about $\gw$ on Euclidean spaces suggests, in our opinion, interesting further works. The slicing approach for $\gw$ finds \textit{e.g.} natural extensions inspired by the Wasserstein distance. An immediate one is the max-sliced approach \cite{Deshpande_2019_CVPR} where only the projection which maximizes the loss is drawn. This setting is directly applicable in our case without changing the theoretical properties of $SGW$ stated in Theorem \ref{propertiessgw}. Moreover, one limitation of $SGW$ in the formulation that we proposed is the map $\Delta$ which has to be chosen or optimized when one wants to compute $SGW$ when dimensions differ or in order to retrieve the invariants of $GW$. The max-sliced approach could be a interesting remedy for this situation. We can for example draw one line in each space without requiring any map $\Delta$ but with the price of losing the divergence property (\ie\ $max-SGW(\mu,\mu)$ could be different from zero). Another potential direction would be to consider many lines in each space and then find their correspondences by using \textit{e.g.} Wasserstein which could define a proper metric \textit{w.r.t.} isomorphisms but which will come with a cubic complexity. Interestingly enough this approach would exactly be the \COOT\ distance but with replacing the matching of the feature by a sliced Wasserstein. In any case there are room for improvements in the way of defining $SGW$ when dimensions of the support of the probability measures differ. 

Regarding the theoretical aspects of $SGW$ another interesting line of research, in our opinion, would be to draw connections with the statistical properties of the Sliced Wasserstein distance which is  known to behave well in terms of convergence of finite samples. We believe that similar type of studies for the Sliced Gromov-Wasserstein could be promising. 

From a computational OT point of view the CO-Optimal Transport framework seems also to be quite suited for large scale datasets. Indeed since the BCD procedure used for solving COOT is based on alternating linear optimal transport problems another line of works could be to rely on the dual or semi dual formulations of linear OT, used for large scale OT, in order to compute COOT for large scale datasets whose samples are in incomparable spaces. Finally from a theoretical point of view a continuous formulation of COOT could also be of interest for studying the properties of $\gw$, and especially the regularity of its optimal plans. Indeed we have proven that an optimal solution for COOT can be found \textit{via} permutation matrices, whose continuous counterparts are deterministic push-forwards. As such, can we find a natural extension for COOT which preserves this property in the continuous setting? Also if the problem is concave, such as when $\gw$ is considered with squared Euclidean distances, can we preserve the result stating that COOT and $\gw$ are equivalent so that an optimal solution of the first is optimal solution for the second? In this way can we conclude that there exists an optimal map for $\gw$ that is supported on a Monge map when the problem is concave? We believe, enthusiastically, that these questions worth investigating.

\paragraph{How other fields can contribute to the frameworks developed in this thesis?} One of the most challenging improvement of the work developed in this thesis is maybe to lighten the computational complexity of calculating $FGW$ which is driven by the complexity of the GW distance. In this form, the computational complexity $FGW$ renders this framework limited to small to medium scale \textit{scenarii} and, in particular, is inapplicable for very large graphs. Moreover one can not hope to directly rely on the works about $\gw$ on Euclidean spaces in order to derive tractable formulations since the structures of labeled graphs are usually highly non-Euclidean. However it worth pointing out that approximating $\C_1,\C_2$ with Euclidean matrices can be done in several ways \cite{BorgGroenen2005,glunt,alfakih_solving_1999,liberti}. We believe that a kind of \emph{no free lunch theorem} applies here: if we faithfully model the structure of the data it is likely to result in a more precise notion of distance between structured data but also in a more challenging optimization problem than if we approximate it for better scalability. Needless to say, since $GW$ is at the backbone of our method, any further improvements in computational efficiency for non convex Quadratic programs, and by way of consequence for $GW$, would directly benefit to $FGW$. In this way progress in graph matching could definitely contribute to derive efficient algorithms for solving $FGW$ as both are intrinsically related. Another interesting study would be regarding the design of the matrices $\C_1,\C_2$ representative of the structures for $FGW$. We considered in this thesis that these matrices are given, such as shortest-path matrices, yet, one could argue that this is a strong prior on the structures of the graphs and that no \emph{perfect} choice arises in practice. The problem of finding ``good'' structure matrices is very related with the metric learning field \cite{bellet2013survey} and further appealing works for $FGW$ could be to go into this direction and to add $\C_1,\C_2$ into the learning process. Related to this problem we believe that very insightful connections between $FGW$ and the field of signal graph processing \cite{gsp} can be drawn and that both can benefit from each other. As one example: can we build upon the spectral tools from signal graph processing in order to find adequate measures of structure $\C_1,\C_2$ of the graphs?

\paragraph{How do the above-mentioned optimal transport tools can, or can not, be used \emph{for} machine learning? }As mentioned throughout this thesis, having a adequate measure for comparing data lying on incomparable spaces can be useful for a wide range of machine learning tasks. For applications involving structured data we proposed the $FGW$ distance based on optimal transport. Note, however, that many ways of comparing structured data have been proposed in the literature: from the design of dedicated kernels (see \cite{kriege_survey_2020} for a comprehensive survey on this topic) to other choices of distances between graphs \cite{bento_family_2019,metric_guide} including the popular graph-edit distance \cite{Willett98chemicalsimilarity,raymond_edit}. More recently, end-to-end approaches \cite{graph_similarity_learning,bai_gnn,riba_2018,gnn_graph_search} attempt to \emph{learn} a similarity measure function between graphs based on graph neural networks, \ie\ to learn a neural network based function that takes two graphs as input and outputs the desired similarity. The question of finding a ``good'' similarity measure for structured data is far from being closed and is naturally dependent of the application. The choice of $FGW$ can be motivated by its metric properties which allow detecting whether two graphs are isomorphic and, as such, could be used as an alternative to the graph-edit distance. However, despite its appealing properties, one could question the optimal transport framework on which this method is based. As described in this thesis the coupling matrices are used to find a probabilistic matching between \emph{all} the nodes of two graphs. Conversely, and depending on the application, one might want to match only a \emph{small} portion of the nodes of two graphs which is not possible using the $FGW$ framework which considers \emph{global} matchings. In this way, in a scenario where only the \emph{local} structure is important, the $FGW$ machinery appears to be quite disproportionate, and some kernel methods which are built on local structures may be more suited. The use of the $FGW$ distance for structured data such as time series is also questionable. Indeed the set of couplings \emph{itself} is constrained so that a matching ``back in time'' is made possible. On the contrary, it may be more interesting to force that the points which are matched to the target sequence at time $t$ can only depend on the source sequence up to time $t$ as done \textit{e.g.} using Dynamic Time Warping distances \cite{pmlr-v70-cuturi17a,sakoe1978dynamic}. To remedy to this problem the recent casual optimal transport framework \cite{lassale_casual,veraguas2016causal,acciaio2020cournotnash,ZalashkoAnastasiia2017Cot} may be more suited for these type of problems. Apart from these limitations the $FGW$ framework could be useful for other applications than these mentioned in this thesis. As a first instance it could be used as a way for learning graph neural networks (GNN). Given a set of graphs we could for example minimize a triplet loss \cite{triplet_loss} or a graph similarity score \cite{simgnn} which impose that the distances in the embedding of the GNN are close to the $FGW$ distances so as to force the GNN to produce similar embeddings for similar graphs. Moreover the properties of the Fréchet barycenter with $FGW$ could be elaborated, especially the projection on a smaller graph using $FGW$. A perspective on this topic would be to draw connections with graph signal processing and more standard coarsening procedures (see \cite{loukas_coarsening} and references therein) by questioning the ability of the $FGW$ projection to reduce the graph without altering too much its spectral properties. 

Finally the results of this thesis suggests that COOT may be suited for Heterogeneous Domain Adaptation. An interesting further work on this topic would be to see if we can confirm these empirical results with theoretical guarantees such as bounds for Heterogeneous Domain Adaptation by relying on the COOT framework. For example can we prove that the adaptation is easier when the datasets are close from each other \textit{w.r.t.} the COOT distance than if they are far from each other? To the best of our knowledge the question of finding bounds for HDA problems is still quite under-addressed \cite{boundHDA} and COOT may lead to promising further works on this topic.
\par
\bigskip
All these considered we hope that the works proposed in this thesis will pave the path for positive and interesting studies on various and interdisciplinary topics in machine learning and that it will also contribute to the richness of the optimal transport theory, which is far from being extinguished.


 \bibliographystyle{StyleThese}
 \addcontentsline{toc}{chapter}{Bibliography}

\AtEndDocument{\includepdf[pages=2,scale=1]{./main_style.pdf}}

\end{document}